\DeclareSIUnit\px{px}
\newcolumntype{L}[1]{>{\raggedright\arraybackslash}p{#1}}
\newcolumntype{C}[1]{>{\centering\arraybackslash}p{#1}}
\definecolor{mydarkblue}{rgb}{0.02, 0.40, 0.62}
\newcounter{algorithmicH}% New algorithmic-like hyperref counter
\let\oldalgorithmic\algorithmic
\renewcommand{\algorithmic}{%
  \stepcounter{algorithmicH}% Step counter
  \oldalgorithmic}% Do what was always done with algorithmic environment
\renewcommand{\theHALG@line}{ALG@line.\thealgorithmicH.\arabic{ALG@line}}
\DeclareFontFamily{T1}{calligra}{}
\DeclareFontShape{T1}{calligra}{m}{n}{<->s*[1.44]callig15}{}
\DeclareMathAlphabet\mathcalligra   {T1} {calligra} {m}  {n}
\DeclareMathAlphabet\mathzapf       {T1} {pzc}      {mb} {it}
\DeclareMathAlphabet\mathchorus     {T1} {qzc}      {m}  {n}
\DeclareMathAlphabet\mathrsfso      {U}  {rsfso}    {m}  {n}
\newlength\tocrulewidth
    \rule{\textwidth}{\tocrulewidth}
  \noindent\rule{\linewidth}{\tocrulewidth}
\newcommand{\localtoc}{
  \begingroup
    \localtableofcontents
    % \vspace{1cm}
  \endgroup
}
\let\abx@macro@citeOrig\abx@macro@cite
  \let\bibhyperref\relax\relax%
\global\boolfalse{cbx:parens}}
\renewcommand{\cite}[1]{\citep{#1}}
\renewcommand{\finalnamedelim}{ \& }
\renewcommand{\textcitedelim}{
  \iflastcitekey{
    \unspace\textcolor{black}{\text{\ and}}\unspace
  }{
  \addcomma}\space}
  \renewcommand*{\finalnamedelim}{%
    \ifthenelse{\value{listcount} > 2}{%
      \addcomma
      \addspace
      \bibstring{and}%
    }{%
      \addspace
      \bibstring{and}%
    }
  }
\declaretheoremstyle[
  spaceabove=6pt, spacebelow=6pt,
  bodyfont=\itshape,
  postheadspace=1em,
  qed=,
]{style}
\declaretheoremstyle[
  spaceabove=6pt, spacebelow=6pt,
  bodyfont=\itshape,
  postheadspace=1em,
  qed=,
  shaded={
    rulecolor=mydarkblue,
    rulewidth=2pt,
    bgcolor=white,
    padding=0.02\textwidth,
    textwidth=0.96\textwidth},
]{maintheoremstyle}
\declaretheorem[
  numberwithin=chapter,
  name=Theorem,
  style=style,
  refname={theorem,theorems},
  Refname={Theorem,Theorems}]{theorem}
\declaretheorem[
  numberwithin=chapter,
  name=Corollary,
  style=style,
  refname={corollary,corollaries},
  Refname={Corollary,Corollaries}]{corollary}
\declaretheorem[
  numberwithin=chapter,
  name=Definition,
  style=style,
  refname={definition,definitions},
  Refname={Definition,Definitions}]{definition}
\declaretheorem[
  numberwithin=chapter,
  name=Lemma,
  style=style,
  refname={lemma,lemmas},
  Refname={Lemma,Lemmas}]{lemma}
\declaretheorem[
  numberwithin=chapter,
  name=Proposition,
  style=style,
  refname={proposition,propositions},
  Refname={Proposition,Propositions}]{proposition}
\declaretheorem[
  numberwithin=chapter,
  numbered=no,
  name=Remark,
  style=style,
  refname={remark,remarks},
  Refname={Remark,Remarks}]{remark}
\declaretheorem[
  numberwithin=chapter,
  name=Theorem,
  style=maintheoremstyle,
  sibling=theorem,
  refname={theorem,theorems},
  Refname={Theorem,Theorems}]{maintheorem}
\declaretheorem[
  numberwithin=chapter,
  name=Corollary,
  style=maintheoremstyle,
  sibling=corollary,
  refname={corollary,corollaries},
  Refname={Corollary,Corollaries}]{maincorollary}
\declaretheorem[
  numberwithin=chapter,
  numbered=no,
  name=Properties,
  style=style,
  refname={property,properties},
  Refname={Property,Properties}]{properties}
\declaretheoremstyle[
  notebraces={}{},
  headfont=\bfseries\itshape,
  notefont=\bfseries\itshape,
  postheadspace=1em,
  postfoothook=\vspace{2em},
  qed=$\blacksquare$,
  mdframed={
    topline=false,
    bottomline=false,
    rightline=false,
    linecolor=lightgray,
    linewidth=3pt,
    backgroundcolor=white,
    innerleftmargin=0.02\textwidth,
    innerrightmargin=0pt,
  }
]{proofstyle}
\declaretheorem[
  numbered=no,
  name=Proof of,
  style=proofstyle,
  refname={proof,proofs},
  Refname={Proof,Proofs}
]{proof}
\newcommand{\yt}    {\textit{\mbox{YouTube-8M}}\xspace}
\newcommand{\eg}    {\textit{e.g.}\xspace}
\newcommand{\ie}    {\textit{i.e.}\xspace}
\newcommand{\cf}    {\textit{cf.}\xspace}
\newcommand{\aka}   {\textit{a.k.a.}\xspace}
\newcommand{\pdf}   {p.d.f.\xspace}
\newcommand{\st}    { s.t.\xspace}
\newcommand{\vs}    {\textit{vs.}\xspace}
\newcommand{\wrt}   {w.r.t.\xspace}
\DeclareMathOperator*{\argmax}       {arg\,max}
\DeclareMathOperator*{\argmin}       {arg\,min}
\DeclareMathOperator*{\Ebb}          {\mathbb{E}}
\DeclareMathOperator*{\Pbb}          {\mathbb{P}}
\DeclareMathOperator*{\Risk}         {Risk}
\DeclareMathOperator*{\advRisk}      {Risk_{\alpha}}
\DeclareMathOperator*{\PCadvRisk}    {PC-Risk_{\alpha}}
\DeclareMathOperator*{\B}            {B(\alpha)}
\DeclareMathOperator*{\probmap}      {M}
\DeclareMathOperator*{\EoT}          {EoT}
\DeclareMathOperator*{\Vol}          {Vol}
\newcommand{\relu}     {\ensuremath{\mathrm{ReLU}}\xspace}
\newcommand{\lipp}[2]  {\ensuremath{\mathrm{Lip}_{#1} \left(#2\right)}}
\newcommand{\lip}[1]   {\ensuremath{\mathrm{Lip}\left(#1\right)}}
\newcommand{\lipbound} {\ensuremath{\mathrm{LipBound}}}
\newcommand{\diag}     {\ensuremath{\mathrm{diag}}}
\newcommand{\bdiag}    {\ensuremath{\mathrm{bdiag}}}
\renewcommand{\mod}[1] {\ \mathrm{mod}\ #1}
\newcommand{\circulant}   {\ensuremath \mathrm{circ}}
\newcommand{\diagonal}    {\ensuremath \mathrm{diag}}
\newcommand{\krylov}      {\ensuremath \mathrm{krylov}}
\newcommand{\Cov}         {\ensuremath \mathrm{Cov}}
\newcommand{\Var}         {\ensuremath \mathrm{Var}}
\newcommand{\cin}         {\ensuremath c_{in}}
\newcommand{\cout}        {\ensuremath c_{out}}
\newcommand{\margin}      {\ensuremath M}
\newcommand{\scalefigure}{0.80}
\newcommand{\removespace}{\vspace{-\baselineskip}}
\newcommand{\ci}{\ensuremath \mathbf{i}}
\newcommand{\lzero} {\ensuremath \ell_0 \xspace}
\newcommand{\lone}  {\ensuremath \ell_1 \xspace}
\newcommand{\ltwo}  {\ensuremath \ell_2 \xspace}
\newcommand{\linf}  {\ensuremath \ell_\infty \xspace}
\newcommand{\lp}    {\ensuremath \ell_p \xspace}
\newcommand{\fro}   {\ensuremath \mathrm{F}}
\newcommand{\onevec}[1]{\ensuremath \mathbf{1}_{#1}}
\newcommand{\zerovec}[1]{\ensuremath \mathbf{0}_{#1}}
\newcommand{\nn}{N}
\newcommand{\act}{\rho}
\newcommand{\layer}{\phi}
\newcommand{\weights}{\Omega}
\newcommand{\depth}{p}
\newcommand{\dimw}{w}
\newcommand{\dimb}{b}
\newcommand{\adv}{\boldsymbol{\tau}}
\newcommand*\diff{\mathop{}\!\mathrm{d}}
\newcommand{\triangleopup}{\boldsymbol{\bigtriangleup}}
\newcommand{\triangleopdown}{\rotatebox[x=0pt,y=4pt]{180}{$\triangleopup$}}
\newcommand{\leftmatrix}       {\begin{pmatrix}}
\newcommand{\rightmatrix}      {\end{pmatrix}}
\newcommand{\leftmatrixsmall}  {\begin{psmallmatrix}}
\newcommand{\rightmatrixsmall} {\end{psmallmatrix}}
\newcommand{\leftmat}          {\left(}
\newcommand{\rightmat}         {\right)}
\def\ddefloop#1{\ifx\ddefloop#1\else\ddef{#1}\expandafter\ddefloop\fi}
\def\ddef#1{\expandafter\def\csname #1bb\endcsname{\ensuremath{\mathbb{#1}}}}
\def\ddef#1{\expandafter\def\csname #1set\endcsname{\ensuremath{\mathcal{#1}}}}
\def\ddef#1{\expandafter\def\csname #1mat\endcsname{\ensuremath{\mathbf{#1}}}}
\def\ddef#1{\expandafter\def\csname #1matsf\endcsname{\ensuremath{\mathsf{#1}}}}
\def\ddef#1{\expandafter\def\csname #1vec\endcsname{\ensuremath{\mathbf{#1}}}}
\newcommand{\bigO}{\ensuremath \mathcal{O}}
\let\todoorg\todo
\renewcommand{\todo}[1]{\todoorg[inline]{#1}}
\newacronym{aka}{\aka}{\textbf{A}lso \textbf{K}own \textbf{a}s}
\newacronym{eg}{\eg}{\textbf{E}xempli \textbf{G}ratia}
\newacronym{ie}{\ie}{\textbf{I}d \textbf{E}st}
\newacronym{cf}{\cf}{\textbf{C}on\textbf{f}er}
\newacronym{st}{\st}{\textbf{S}uch \textbf{T}hat}
\newacronym{iid}{i.i.d.}{\textbf{I}dentically and \textbf{I}ndependently \textbf{D}istributed}
\newacronym{pdf}{p.d.f.}{\textbf{P}robability \textbf{D}ensity \textbf{F}unction}
\newacronym{ERM}{ERM}{\textbf{E}mpirical \textbf{R}isk \textbf{M}inimization}
\newacronym{CNN}{CNN}{\textbf{C}onvolutional \textbf{N}eural \textbf{N}etwork}
\newacronym{DCNN}{DCNN}{\textbf{D}iagonal \textbf{C}irculant \textbf{N}eural \textbf{N}etwork}
\newacronym{PGD}{PGD}{\textbf{P}rojected \textbf{G}radient \textbf{D}escent}
\newacronym{CW}{C\&W}{\textbf{C}arlini and \textbf{W}agner}
\newacronym{DFT}{DFT}{\textbf{D}iscrete \textbf{F}ourier \textbf{T}ransform}
\newacronym{FFT}{FFT}{\textbf{F}ast \textbf{F}ourier \textbf{T}ransform}
\newacronym{IFFT}{IFFT}{\textbf{I}nverse \textbf{F}ast \textbf{F}ourier \textbf{T}ransform}
\newacronym{SVD}{SVD}{\textbf{S}ingular \textbf{V}alue \textbf{D}ecomposition}
\newglossaryentry{Natural numbers}{%
  name        = {$\Nbb$},
  description = {The set of natural numbers},
}
\newglossaryentry{Real numbers}{%
  name        = {$\Rbb, \Rbb^n, \Rbb^{n \times m}$},
  description = {The set of real numbers, vectors and matrices},
}
\newglossaryentry{Complex numbers}{%
  name        = {$\Cbb, \Cbb^n, \Cbb^{n \times m}$},
  description = {The set of complex numbers, vectors and matrices},
}
\newglossaryentry{Positive real numbers}{%
  name        = {$\Rbb_+$, $\Rbb_-$},
  description = {The set of positive and negative real numbers respectivly},
}
\newglossaryentry{integer set}{%
  name        = {$[n]$},
  description = {The set $\left\{ x \in \Nbb \mid 1 \leq x \leq n \right\}$},
}
\newglossaryentry{real numbers interval}{%
  name        = {$[a, b]$},
  description = {The set $\left\{ x \in \Rbb \mid a \leq x \leq b \right\}$},
}
\newglossaryentry{set I}{%
  name        = {$\Iset_n$, $\Iset^+_n$},
  description = {The sets $\left\{ -n+1, \dots, n-1 \right\}$ and $\left\{ 0, \dots, n-1 \right\}$ respectively},
}
\newglossaryentry{imaginary number}{%
  name        = {$\ci = \sqrt{-1}$},
  description = {Imaginary number},
}
\newglossaryentry{real part, imaginary part}{%
  name        = {$\mathfrak{R}(z), \mathfrak{I}(z)$},
  description = {Real part and imaginary part of complex number z},
}
\newglossaryentry{complex modulus}{%
  name        = {$|a + \ci b|$},
  description = {Modulus of complex number $a + \ci b$, \ie, $|a + \ci b| = \sqrt{a^2 + b^2}$},
}
\newglossaryentry{indicators function}{%
  name        = {$\mathds{1}_{[\text{Boolean expres.}]}$},
  description = {Indicator function (equals 1 if expression is true 0 otherwise)}
}
\newglossaryentry{Matrix}{%
  name        = {$\Mmat = \left(a_{ij}\right)$},
  description = {Matrix $\Mmat$ with $(i,j)$-entry $a_{ij}$},
}
\newglossaryentry{Vector}{%
  name        = {$\xvec = \left[ \xvec_{1} \dots \xvec_{n} \right]$},
  description = {Vector $\xvec$ of size $n$ with $\xvec_{i}$ entries},
}
\newglossaryentry{Transpose, Conjugate}{%
  name        = {$\Mmat^\top, \Mmat^*$},
  description = {Transpose and conjugate transpose of matrix $\Mmat$},
}
\newglossaryentry{vector of ones, zero}{%
  name        = {$\onevec{n}, \zerovec{n}$},
  description = {\emph{n-}vector of ones and \emph{n-}vector of zeros},
}
\newglossaryentry{unit vector}{%
  name        = {$\evec^{(i)}$},
  description = {$i$-th unit vector in $\Rbb^n$}
}
\newglossaryentry{identity, reflection}{%
  name        = {$\Imat_n, \Jmat_n$},
  description = {Identity and reflection matrix of size $n \times n$, \ie, $\Jmat^2 = \Imat$},
}
\newglossaryentry{discrete fourier transform}{%
  name        = {$\Umat_n$},
  description = {DFT matrix of size $n \times n$, \ie, 
  $\Umat_n = \leftmatrix e^{-(2 \pi \ci jk)/n} \rightmatrix_{j,k = 0}^{n-1}$},
}
\newglossaryentry{Vector p-norm}{%
  name        = {$\norm{\xvec}_p$},
  description = {Norm $p$ of vector $\xvec$, \ie,
  $\norm{\xvec}_p = \left( \sum_{i=0}^n \xvec_i^p \right)^\frac{1}{p} $},
}
\newglossaryentry{Vector inf-norm}{%
  name        = {$\norm{\xvec}_\infty$},
  description = {Infinity Norm of vector $\xvec$, \ie,
  $\norm{\xvec}_\infty = \max_i |\xvec_i|$},
}
\newglossaryentry{Matrix p-norm}{%
  name        = {$\norm{\Mmat}_p$},
  description = {Norm $p$ of matrix $\Mmat$, \ie, 
 $\norm{\Mmat}_p = \sup_{\norm{\xvec}_p \neq 0} \frac{\norm{\Mmat \xvec}_p}{\norm{\xvec}_p}$},
}
\newglossaryentry{Frobenius Norm}{%
  name        = {$\norm{\Mmat}_\mathrm{F}$},
  description = {Frobenius Norm of matrix $\Mmat$},
}
\newglossaryentry{largest singular value of matrix}{%
  name        = {$\sigma_1(\Mmat)$},
  description = {Largest singular value of matrix $\Mmat$, \ie $\sigma_1(\Mmat) = \norm{\Mmat}_2$},
}
\newglossaryentry{largest eigenvalue of Hermitian matrix}{%
  name        = {$\lambda_1(\Mmat)$},
  description = {Largest eigenvalue of matrix $\Mmat$},
}
\newglossaryentry{semi positive definite matrix}{%
  name        = {$\Mmat \geq 0$},
  description = {$\Mmat$ is positive semidefinite, \ie $\xvec^*\Mmat\xvec \geq 0 \text{ for all } \xvec \in \Cbb^n$},
}
\newglossaryentry{positive definite}{%
  name        = {$\Mmat > 0$},
  description = {$\Mmat$ is positive definite, \ie $\xvec^*\Mmat\xvec > 0 \text{ for all } \xvec \in \Cbb^n$},
}
\newglossaryentry{probability, expectation}{%
  name        = {$\Pbb, \Ebb$},
  description = {Probability and expectation of a random variable},
}
\newglossaryentry{normal distribution}{%
  name        = {$\mathcal{N}$},
  description = {Gaussian distribution},
}
\title{Building Compact and Robust Deep Neural Networks with Toeplitz Matrices}
\author{Alexandre Araujo}
\institute{Université Paris-Dauphine -- PSL Research University}
\date{1$^{\text{er}}$ Juin 2021}
\begin{document}

  \frontmatter
  \maketitle{}
  \input{sources/title}
  \clearpage
\begin{center}
  \thispagestyle{empty}
  \vspace*{\fill}
  \emph{Dédié à la mémoire de mon grand-père maternel} \\
  \textbf{Jean Marchelie} \\
  $1936$ -- $2021$
  \vspace*{\fill}
\end{center}
\clearpage

\newpage
\null
\thispagestyle{empty}
\newpage

  \addcontentsline{toc}{chapter}{Remerciements}
  \newpage
\begin{center}
  {\Huge \textsc{Remerciements}}
\end{center}
\noindent

Pour commencer, je souhaiterais remercier Teddy Furon et Alain Rakotomamonjy d'avoir accepté d'être examinateurs de cette thèse ainsi qu'Élisa Fromont, Rémi Gribonval et Krzysztof Choromanski de s'être intéressés à mes travaux de recherche et d'avoir accepté d'être membres du jury.
Nos échanges pendant la relecture ainsi que la soutenance ont été très enrichissants.

J'éprouve une profonde gratitude envers mes trois encadrants de thèse, Jamal, Yann et Benjamin durant ces quatre dernières années où j'ai pu découvrir le monde de la recherche et me former à devenir un bon chercheur.
En effet, le métier de chercheur est bien le plus beau métier du monde ! 
Merci d'avoir cru en moi et de m'avoir offert cette chance, merci pour tous ces échanges, désaccords, réunions, sessions de travail et enfin, merci de m'avoir toléré quand j'étais pénible.
Vous m'avez vraiment offert un encadrement exceptionnel: peu de doctorants peuvent se targuer de pouvoir échanger avec leurs encadrants tous les jours !

Cette expérience a été d'autant plus riche grâce aux autres doctorants du laboratoire.
Ainsi, je souhaite remercier Rafael et Laurent pour les collaborations réalisées ensemble, et également Geovani, Raphaël, Alexandre V., Éric, Virginie et Céline pour avoir participé à l'émulation et la bonne ambiance du labo.
Merci également à Florian, Clément et Alexandre A. pour les échanges que l'on a pu avoir.

Cette thèse n'aurait pas été possible sans les financements de Wavestone.
Ainsi, je souhaite remercier Cyril pour m'avoir donné cette opportunité ainsi que Nicolas pour nos longs échanges sur mes travaux de recherche.
Également, je souhaite remercier David, Hugo et Cédric pour m'avoir aidé au cours de cette expérience.

Je souhaite également remercier ma famille, belle-famille et mes amis pour m'avoir toléré, accompagné, conseillé, encouragé et pour avoir essayé de comprendre ce que je faisais ces quatre dernières années.
Désolé d'avoir autant travaillé pendant les week-ends, j'espère ne pas avoir été trop désagréable ou trop ``dans mon monde'' pendant cette période.
Un grand merci à Othmane qui, à travers nos échanges, m'a donné l'idée de faire une thèse et de me lancer dans la recherche.

Pour finir, je souhaite exprimer ma profonde gratitude envers Chuthima, ma partenaire du quotidien.
Merci pour tous les compromis faits pour me laisser réaliser cette thèse, pour le soutien quotidien, les encouragements et les relectures.
La qualité de cette thèse est énormément due à ta présence à mes côtés.

\newpage
\null
\thispagestyle{empty}
\newpage

  % \addcontentsline{toc}{chapter}{Remerciements}
  % \input{sources/remerciements}

  \addcontentsline{toc}{chapter}{Abstract}
  \newpage
\begin{center}
  {\Huge \textsc{Abstract}}
\end{center}
\noindent
Deep neural networks are state-of-the-art in a wide variety of tasks, however, they exhibit important limitations which hinder their use and deployment in real-world applications.
When developing and training neural networks, the accuracy should not be the only concern, neural networks must also be cost-effective and reliable.
Although accurate, large neural networks often lack these properties.
% \textbf{In this thesis, we leverage the properties of structured matrices from the Toeplitz family to build compact and secure neural networks.}
In this thesis, we leverage the properties of structured matrices from the Toeplitz family to build compact and secure neural networks.
Our contributions are twofold.

% A first contribution tackles the problem of training neural networks which are not only accurate but also compact and easy to train while second contribtion propose an approach to build reliable and robust to adversarial examples.
% This thesis focuses on the problem of training neural networks which are not only accurate but also compact, easy to train, reliable and robust to adversarial examples.
% To tackle these problems, we leverage the properties of structured matrices from the Toeplitz family to build compact and secure neural networks. 

First, we propose a new neural network architecture that is not only accurate but also compact and easy to train.
The purpose of this contribution is to study deep diagonal-circulant neural networks, which are deep neural networks in which weight matrices are the product of diagonal and circulant ones.
% % In this contribution, we study deep diagonal-circulant neural networks, which are deep neural networks in which weight matrices are the product of diagonal and circulant ones.
% Besides making a theoretical analysis of their expressivity, we introduce principled techniques for training these models: we devise an initialization scheme and propose a smart use of non-linearity functions in order to train deep diagonal-circulant networks.
% This architecture consists in replacing the weight matrices of neural networks by the product of diagonal and circulant matrices. 
We perform a theoretical analysis of their expressivity and propose an initialization procedure and an intelligent use of nonlinearity functions to facilitate training. 
Furthermore, we show that these networks outperform recently introduced deep networks with other types of structured layers.
We conduct a thorough experimental study to compare the performance of deep diagonal-circulant networks with state-of-the-art models based on structured matrices and with dense models.
We show that our models achieve better accuracy than other structured approaches while requiring 2x fewer weights than the next best approach.
Finally, we train compact and accurate deep diagonal-circulant networks on a real-world video classification dataset with over 3.8 million training examples.

% In addition to being compact and cost-effective, neural networks also need to be secure.
Secondly, we propose an approach to build robust neural networks to adversarial examples.
In this contribution, we introduce a new Lipschitz regularization for Convolutional Neural Networks that improves the robustness of neural networks.
% To improve the robustness of neural networks, we propose a new Lipschitz regularization for Convolutional Neural Networks.
Lipschitz regularity is now established as a key property of modern deep learning with implications in training stability, generalization, robustness against adversarial examples, etc.
However, computing the exact value of the Lipschitz constant of a neural network is known to be NP-hard.
Recent attempts from the literature introduce upper bounds to approximate this constant that are either efficient but loose or accurate but computationally expensive.
In this work, by leveraging the properties of doubly-block Toeplitz matrices, we introduce a new upper bound of the singular values of convolution layers that is both tight and easy to compute.
Based on this result we devise an algorithm to train Lipschitz-regularized Convolutional Neural Networks.

\newpage
\null
\thispagestyle{empty}
\newpage

  \addcontentsline{toc}{chapter}{Résumé}
  \begin{center}
  {\Huge \textsc{Résumé}}
\end{center}
\noindent
Les réseaux de neurones profonds sont considérés comme étant état de l'art dans une grande variété de tâches, mais ils présentent des limites importantes qui entravent leur utilisation et leur déploiement.
Lors du développement et l'entraînement de réseaux de neurones, la précision ne devrait pas être la seule préoccupation, ils se doivent aussi d'être efficaces et sécurisés.
Bien que précis, les réseaux de neurones dotés de nombreux paramètres n'ont souvent pas ces propriétés.
Dans cette thèse, nous exploitons les propriétés des matrices structurées de la famille de Toeplitz pour construire des réseaux de neurones compacts et sécurisés.
Nous réalisons deux contributions sur ces thématiques.

% Cette thèse se concentre sur le problème de l'entraînement de réseaux de neurones qui ne sont pas seulement précis, mais aussi compacts, faciles à entraîner, fiables et robustes aux exemples contradictoires.
% Pour cela, nous exploitons les propriétés des matrices structurées de la famille de Toeplitz pour construire des réseaux de neurones compacts et sécurisés.

Premièrement, nous proposons une nouvelle architecture de réseau de neurones précise, mais également compacte et facile à entraîner. 
L'objectif de cette contribution est d'étudier les réseaux de neurones diagonaux-circulants, qui sont des réseaux de neurones profonds pour lesquels les matrices de poids sont le produit des matrices diagonales et circulantes.
Nous effectuons une analyse théorique de leur expressivité et proposons une procédure d'initialisation et une utilisation intelligente des fonctions de non-linéarité qui facilitent leur entraînement.
% En outre, nous montrons que ces réseaux sont plus performants que les réseaux profonds récemment introduits avec d'autres types de couches structurées.
% Nous menons une étude expérimentale approfondie pour comparer les performances des réseaux profonds à circulation diagonale avec des modèles de pointe basés sur des matrices structurées et avec des modèles denses.
Nous montrons que nos modèles atteignent une meilleure précision que les autres approches structurées tout en nécessitant deux fois moins de paramètres.
Enfin, nous entraînons des réseaux de neurones diagonaux-circulants sur un ensemble de données de classification vidéo qui contient plus de 3,8 millions d'exemples.

% Une première contribution propose une nouvelle architecture de réseau de neurones précise mais également compacte et facile à entraîner.
% Cette architecture consiste à remplacer les matrices de poids des réseaux de neurones par le produit des matrices diagonales et circulantes. 
% Nous réalisons une analyse théorique de l'expressivité de cette architecture et proposons une procédure d'initialisation et une utilisation intelligente des fonctions de non-linéarité afin de faciliter l'entraînement. 
%
% introduisons de nouvelles techniques pour faciliter l'entraînement.
%
% Outre une analyse théorique de leur expressivité, nous introduisons de nouvelles techniques pour l'entraînement de ces modèles : nous 
%
%
% Nous montrons que ces modèles sont plus précis que les autres approches structurées tout en nécessitant deux fois moins de poids que les meilleures approches.
%

Deuxièmement, en plus d'être compacts et précis, les réseaux de neurones se doivent d'être sécurisés.
Pour améliorer leur robustesse, nous proposons une nouvelle régularisation pour les réseaux convolutifs basée sur la constante de Lipschitz.
La régularisation Lipschitz est maintenant établie comme une propriété clé de l'apprentissage profond avec des implications en stabilité, généralisation et robustesse contre les attaques adversariales, etc.
Cependant, le calcul de la constante de Lipschitz d'un réseau de neurones est connu pour être un problème NP-complet.
De récentes tentatives introduisent des bornes supérieures pour approximer cette constante qui sont soit efficaces, mais peu précises, soit précises mais coûteuses.
Dans cette thèse, en exploitant les propriétés des matrices de Toeplitz à bloc de Toeplitz, nous introduisons une nouvelle borne supérieure de cette constante pour les couches convolutionnelles qui est à la fois précise et facile à calculer.
Sur la base de ce résultat, nous concevons un algorithme pour entraîner des réseaux de neurones convolutifs avec une régularisation Lipschitz.

\newpage
\null
\thispagestyle{empty}
\newpage

  % This ensures that the subsequent sections are being included as root
  % items in the bookmark structure of your PDF reader.
  \bookmarksetup{startatroot}
  \backmatter
    \begingroup
      \let\clearpage\relax
      \glsaddall
      \etocsettocdepth{2}
      \addcontentsline{toc}{chapter}{Table of Contents}
      \begingroup
        \etocsettocstyle{\addchap*{Table of Contents}}{}
        \tableofcontents
      \endgroup
      \newpage
      \addcontentsline{toc}{chapter}{\listfigurename}
      \listoffigures
      \newpage
      \addcontentsline{toc}{chapter}{\listtablename}
      \listoftables
      \newpage
      \printglossary[type=\acronymtype]
      \newpage
      \printglossary[title={List of Symbols}]
    \endgroup
    \printindex

  % main
  % \linenumbers
  \mainmatter
  %%%%%%%%%%%%%%%%%%%%%%%%%%%%%%%%%%%%%%%%%%%%%%%%%%%%%%%%%%%%%%%%%%%%%%%%
\chapter{Introduction}
\label{chapter:ch1-introduction}
%%%%%%%%%%%%%%%%%%%%%%%%%%%%%%%%%%%%%%%%%%%%%%%%%%%%%%%%%%%%%%%%%%%%%%%%
\localtoc

%%%%%%%%%%%%%%%%%%%%%%%%%%%%%%%%%%%%%%%%%%%%%%%%%%%%%%%%%%%%%%%%%%%%%%%%%%%%%%%
\section{Context and Motivation}
\label{section:ch1-context_and_motivation}
%%%%%%%%%%%%%%%%%%%%%%%%%%%%%%%%%%%%%%%%%%%%%%%%%%%%%%%%%%%%%%%%%%%%%%%%%%%%%%%

\begin{figure}[t]
  \centering
  \includegraphics[scale=0.2]{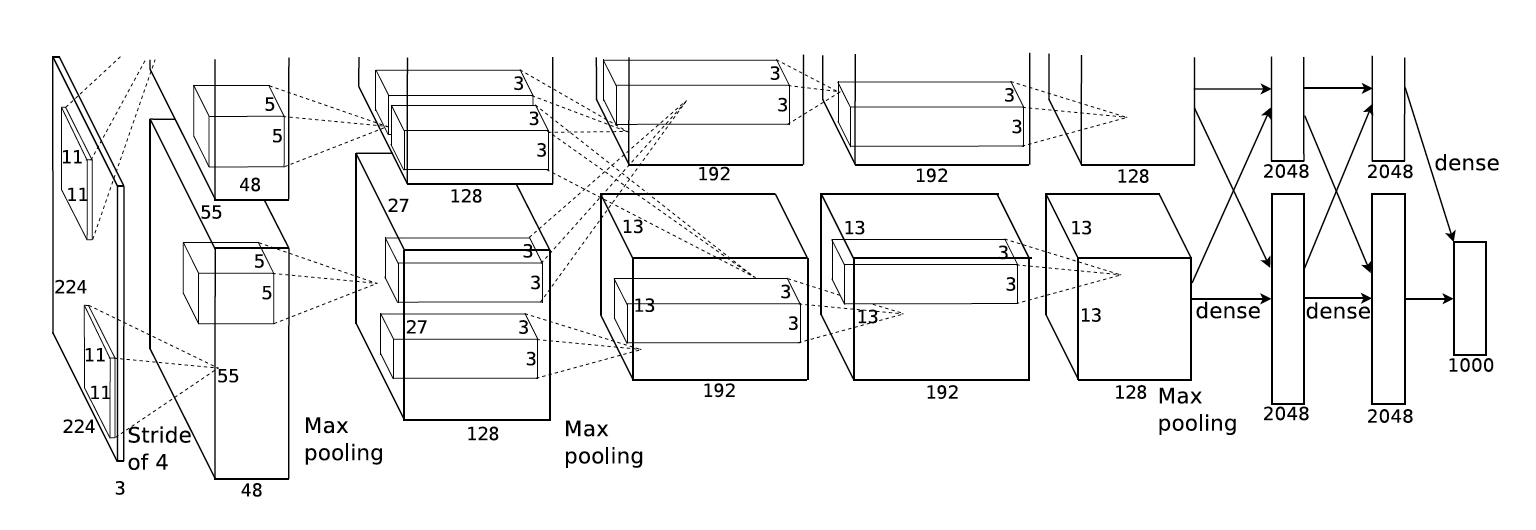}
  \caption{The neural network architecture (AlexNet) proposed by~\citet{krizhevsky2012imagenet} which won the ImageNet Large-Scale Visual Recognition Challenge in 2012.}
  \label{figure:ch1-alexnet_network}
\end{figure}

Since the dawn of computer science, researchers have been trying to emulate intelligence through computers.
Alan Turing was the first, in a paper called \emph{Computing Machinery and Intelligence} \cite{turing1950computing}, to lay the foundation for what we now call \emph{Artificial Intelligence}.
In the last 20 years, with the surge in data collection and computing resources, the interest and use cases for Machine Learning have grown exponentially.
More specifically, Deep Learning, a subfield of Machine Learning, consisting of training Deep Neural Networks on high-level data (images, sounds, texts) have shown great achievements, even outperforming humans on certain tasks.

One of the most remarkable breakthroughs of Deep Learning happened in 2012 during the ImageNet Large-Scale Visual Recognition Challenge~\cite{russakovsky2015imagenet}.
The challenge aims at evaluating different algorithms for object detection and image classification.
In 2012, \citeauthor{krizhevsky2012imagenet} obtained \nth{1} place and beat every other participant by a 10.8\% margin with a neural network architecture called \textbf{AlexNet}.
The main reasons for this success are twofold.
First, they used a convolutional neural network (CNN) with more than 60 million parameters which was one of the largest models of the time.
Secondly, they designed a specific architecture to exploit dual programmable graphics processing units (GPUs) to speed up the arithmetic operations, which enabled them to significantly reduce training time.
\Cref{figure:ch1-alexnet_network} shows the AlexNet architecture which consists of five convolution layers with two fully connected layers at the end.
% The figure also shows the distribution of the workload between the two GPUs.

\begin{table}[t]
  \centering
  \sisetup{
    table-number-alignment = center,
    table-space-text-pre = \ \ \ ,
  }
  \begin{subfigure}[b]{\textwidth}
    \centering
    \begin{tabular}{
      L{5cm}
      L{3.5cm}
      S[table-format=3.0, table-text-alignment=left]@{\,}
      s[table-unit-alignment=left]
      c
    }
      \toprule
      \textbf{Authors} & \textbf{Models} & \multicolumn{2}{c}{\textbf{\#Params}} & \textbf{TOP-5 Acc.} \\
      \midrule
      \citet{krizhevsky2012imagenet} & AlexNet             &  61 & \si{M} & 84.7\% \\
      \citet{simonyan2014very}       & VGG                 & 144 & \si{M} & 92.0\% \\
      \citet{he2016deep}             & ResNet-152          &  60 & \si{M} & 93.8\% \\
      \citet{szegedy2017inception}   & Inception-ResNet-v2 &  56 & \si{M} & 95.1\% \\
      \citet{xie2017aggregated}      & ResNeXt-101         &  84 & \si{M} & 95.6\% \\
      \citet{hu2018squeeze}          & SENet               & 146 & \si{M} & 96.2\% \\
      \citet{real2019regularized}    & AmoebaNet-A         & 469 & \si{M} & 96.7\% \\
      \citet{huang2019gpipe}         & AmoebaNet-B         & 556 & \si{M} & 97.0\% \\
      \bottomrule
    \end{tabular}
    \caption{Computer Vision Models}
    \label{table:ch1-networks_parameters_cv}
  \end{subfigure}
  \par\bigskip
  \begin{subfigure}[b]{\textwidth}
    \centering
    \begin{tabular}{
      L{4.5cm}
      L{4.5cm}
      S[table-format=3.0, table-text-alignment=left]@{\,}
      s[table-unit-alignment=left]
    }
      \toprule
      \textbf{Authors} & \textbf{Models} & \multicolumn{2}{c}{\textbf{\#Params}} \\
      \midrule
      \citet{peters2018deep}         & ELMo            &  94 & \si{M} \\
      \citet{radford2018improving}   & GPT             & 110 & \si{M} \\
      \citet{devlin2019bert}         & BERT            & 340 & \si{M} \\
      \citet{yang2019xlnet}          & XLNet (Large)   & 340 & \si{M} \\
      \citet{liu2019roberta}         & RoBERTa (Large) & 355 & \si{M} \\
      \citet{radford2019language}    & GPT-2           &   1 & \si{B} \\
      \citet{shoeybi2019megatron}    & MegatronLM      &   8 & \si{B} \\
      \citet{raffel2020exploring}    & T5-11B          &  11 & \si{B} \\
      \citet{rosset2020turingnlg}    & T-NLG           &  17 & \si{B} \\
      \citet{brown2020language}      & GPT-3           & 175 & \si{B} \\
      \citet{fedus2021switch}        & Switch Transformers & 1 & \si{T} \\
      \bottomrule
    \end{tabular}
    \caption{Natural Language Processing Models}
    \label{table:ch1-networks_parameters_nlp}
  \end{subfigure}
  \par\bigskip
  \caption{Evolution of the number of parameters for Computer Vision and Natural Language Processing models developed in the years after AlexNet.}
  %   List of Computer Vision and Natural Language Processing models with their number of param
  %   Tables showing the different network architectures developed in the years after AlexNet.
  %   \Cref{table:ch1-networks_parameters_cv} shows networks developed for computer vision tasks and \Cref{table:ch1-networks_parameters_nlp} shows networks developed for natural language processing.
  % }
  \label{table:ch1-networks_parameters}
\end{table}

Following this result, many architectures with an increasing number of parameters have been developed.
This growth in the number of parameters has led to an substantial gains in accuracy, exceeding even human performance, on the ImageNet dataset~\cite{he2015delving}.
\Cref{table:ch1-networks_parameters} shows a list of the different state-of-the-art architectures along with their size and accuracy.
As we can see, the accuracy of the models generally improves at the cost of the model size.
For computer vision models, \citet{tan2019efficientnet} have empirically shown that the relationship between model size and accuracy seems to obey a power law.
% This relationship has also been observed for Natural Language Processing (NLP) neural networks \cite{rosenfeld2020a,kaplan2020scaling} and given the availability of large-scale datasets (Common Crawl dataset~\cite{raffel2020exploring} constitutes nearly a trillion words), researchers have been able to scale their models.
This relationship has also been observed for neural networks designed for Natural Language Processing (NLP) \cite{rosenfeld2020a,kaplan2020scaling} aided by the availability of large-scale datasets such as the Common Crawl dataset~\cite{raffel2020exploring} which constitutes nearly a trillion words.

As a result of their size and improved accuracy, deep neural networks now achieve state-of-the-art performances in a variety of domains such as image recognition~\cite{lecun1998gradient,krizhevsky2012imagenet,he2016deep,tan2019efficientnet}, object detection~\cite{redmon2016you,liu2016ssd,redmon2017yolo9000}, natural language processing~\cite{merity2016pointer,vaswani2017attention,radford2019language,brown2020language}, speech recognition~\cite{hinton2012deep,abdel2014convolutional,yu2016automatic}, games \cite{silver2017mastering}, etc.
Specifically, computer vision and natural language processing models have achieved sufficient performance for being used in real-world applications such as autonomous vehicles~\cite{sadat2019jointly}, translation~\cite{bahdanau2015neural}, vocal assistants~\cite{li2017acoustic}, etc.

However, accuracy is not the only concern, when implemented in a critical decision process, neural networks need to be compact, cost-effective and secure.
Although accurate, large neural networks often lack these properties.
Indeed, training state-of-the-art models on computer vision or natural language processing tasks requires gigabytes of memory and can take several months on a single GPU~\cite{krizhevsky2012imagenet,brown2020language}.
For example, the GPT-3 model proposed by~\citet{brown2020language}, culminates at 175 billion parameters and requires 355 years of training on a single GPU and \$\numprint{4600000} to train on a cloud-computing platform \cite{li2020overview}.
It has also been estimated by \citet{strubell2019energy} that the training and development costs of the large Transformer model proposed by~\citet{vaswani2017attention} with neural architecture search emits an estimated \numprint{284019} kg of $\mathrm{CO}_2$ whereas a human life will consume an average of \numprint{5000} kg of $\mathrm{CO}_2$ for one year. 
Furthermore, with the rise of smartphones and ``Internet of things'' devices (IoT) with limited computational and memory resources, neural networks also need to be efficient during the inference phase.
In addition, with the growing concern over data privacy, methods such as \emph{federated learning} are gaining ground.
Federated learning involves training a model across multiple decentralized devices (\eg, smartphones) with local data samples.
This avoids the step of centralizing all users' data into one server, thus addressing, even modestly, the issue of data privacy.
Thus, building compact and cost-effective neural networks have been an important goal in order to reduce training time, reduce cost and allow for faster research and development.

In addition to being compact and cost-effective, neural networks also need to be secure.
Due to their high complexity and expressivity, large neural networks exhibit instability to small perturbations of their inputs.
Unstable neural networks tend to be vulnerable to \emph{adversarial examples}, \ie, imperceptible variations of natural examples, crafted to deliberately mislead the models~\cite{globerson2006nightmare,biggio2013evasion,szegedy2013intriguing}.
\Cref{figure:ch1-adversarial_image_example} gives an example of an adversarial attack on an image.
The small perturbation (center) is added to the original image (left) leading to an adversarial image (right).
This behavior can cause serious security problems when neural networks are used for critical decision-making (\eg, self-driving cars, predictive justice, etc.).

This thesis focuses on the problem of training neural networks which are not only accurate but also compact, easy to train, reliable and robust to adversarial examples.

% The study of security properties of learning algorithms is a research field known as \emph{adversarial machine learning} which dates back to 2004 \cite{dalvi2004adversarial}.
% More recently, the work of \citet{szegedy2013intriguing} has brought considerable attention to adversarial examples in the context of Deep Learning.
% Since then, a number of work has been published in designing attacks and defenses \cite{szegedy2013intriguing,goodfellow2014explaining,papernot2016limitations,madry2018towards,carlini2017towards,pinot2019theoretical}.
% \emph{Adversarial Training} (AT), one of the first effective methods to protect against adversarial examples, was introduced by \citet{goodfellow2014explaining} and later improved by \citet{madry2018towards}.
% It consists of augmenting training batches with adversarial examples generated during the training procedure.
% More recently, a new method \cite{farnia2018generalizable} based on the regularization of the Lipschitz constant of the network has been proposed which linked the generalization capabilities of neural networks to their Lipschitz constant; therefore, a reduced Lipschitz constant could lead to a better generalization.
% Although these methods improve the robustness of neural networks, the accuracy obtained ``under attack'' is far from state-of-the-art and therefore security risks are still present.
%

%%%%%%%%%%%%%%%%%%%%%%%%%%%%%%%%%%%%%%%%%%%%%%%%%%%%%%%%%%%%%%%%%%%%%%%%%%%%%%%
\section{Problem Statement and Contributions}
\label{section:ch1-problem_statement_and_contributions}
%%%%%%%%%%%%%%%%%%%%%%%%%%%%%%%%%%%%%%%%%%%%%%%%%%%%%%%%%%%%%%%%%%%%%%%%%%%%%%%

\begin{figure}[t]
   \centering
   \begin{subfigure}[t]{0.24\textwidth}
       \centering
       \begin{equation*}
	  \leftmatrix
	    a &   &   &   \\
	      & b &   &   \\
	      &   & c &   \\
	      &   &   & d
	  \rightmatrix
       \end{equation*}
       \caption*{diagonal}
   \end{subfigure}
   \hfill
   \begin{subfigure}[t]{0.24\textwidth}
       \centering
       \begin{equation*}
	  \leftmatrix
	    a & b & c & d \\
	    e & a & b & c \\
	    f & e & a & b \\
	    g & f & e & a
	  \rightmatrix
       \end{equation*}
       \caption*{Toeplitz}
   \end{subfigure}
   \hfill
   \begin{subfigure}[t]{0.24\textwidth}
       \centering
       \begin{equation*}
	  \leftmatrix
	    ae & af & ag & ah \\
	    be & bf & bg & bh \\
	    ce & cf & cg & ch \\
	    de & df & dg & dh
	  \rightmatrix
       \end{equation*}
       \caption*{Low Rank}
   \end{subfigure}
   \hfill
   \begin{subfigure}[t]{0.24\textwidth}
       \centering
       \begin{equation*}
	  \leftmatrix
	    a & a^2 & a^3 & a^4 \\
	    b & b^2 & b^3 & b^4 \\
	    c & c^2 & c^3 & c^4 \\
	    d & d^2 & d^3 & d^4
	  \rightmatrix
       \end{equation*}
       \caption*{Vandermonde}
   \end{subfigure}
  \caption{Examples of structured matrices.}
  \label{figure:ch1-example_structure_matrices}
\end{figure}

Neural networks, which find their roots in the work of \citet{mcculloch1943logical,rosenblatt1958perceptron}, can be analytically described as a composition of multi-dimensional linear functions interlaced with nonlinear functions (also called activation functions).
More formally, a neural network is a function $N_{\Omega} : \Rbb^n \rightarrow \Rbb^m$ parameterized by a set of weights $\Omega$ of the form:
\begin{equation} \label{equation:ch1-neural_network}
  N_{\Omega}(\xvec) = \psi^{(\depth)} \circ \rho \circ \psi^{(\depth-1)} \cdots \circ \psi^{(2)} \circ \rho \circ \psi^{(1)} (\xvec) \enspace.
\end{equation}
Here, $\depth$ corresponds to the \emph{depth} of the network (\ie, the number of layers) and $\rho$ is a nonlinear function.
Each $\psi^{(i)}$ is a multi-dimensional linear function $\psi^{(i)}: \xvec \mapsto \Wmat^{(i)} \xvec + \bvec^{(i)}$ parameterized by a weight matrix $\Wmat^{(i)}$ and a bias vector $\bvec^{(i)}$ and $\Omega$ is the union of the parameters of all the layers.
% (we present a more formal definition of neural networks in~\Cref{chapter:ch2-background}). 

Classical neural networks typically have a large number of parameters to train.
% For example, a two-layer \emph{fully connected neural networks}, networks in which all the neurons from one activation are connected to all the neurons of the following activation -- \ie, meaning the weight matrix of each layer is \emph{dense} -- will have a minimum of $n \times m + m^2$ parameters where $n$ is the size of the input and $m$ is the size of the output (number of classes).
% The dimension of the input is usually large, for example, with the ImageNet dataset $n = 224^2 \times 3$ and $m = 1000$ leading to a two-layer fully connected neural network with more than 150 million parameters.
If they have no restrictions on the weight matrices $\Wmat^{(i)}$, the layers are said to be \emph{fully connected}.
Typically, fully connected neural networks have a large number of parameters.
For example, a fully connected neural network with $\depth$ layers and $n$ neurons on each layer ($\Wmat^{(i)} \in \Rbb^{n \times n}$) will have $pn (n + 1)$ parameters.
Since the input and output dimensions are generally large (\eg, ImageNet has an input dimension of $224^2 \times 3$ and an output of 1000), simple fully connected neural networks with few layers accumulate over hundreds of millions of parameters.
Generally, this type of neural network has been shown to perform poorly due to a large search space or due to the important expressivity of the model which leads to overfitting\footnote{For Machine Learning models, overfitting is a well understood phenomenon. However, it has been discovered that large deep neural networks exhibit a ``double descent'' phenomenon (see \citet{spigler2019jamming}), where the performance first gets worse (overfits) then gets  better with longer training.}.
Moreover, they are computationally expensive, which makes them impractical for a number of use cases (smartphones, IoT devices, etc.).
To reduce the number of parameters on each layer, researchers have devised specific linear operations that reduce the number of parameters and have better properties for the problem at hand.

\begin{figure}[t]
  \centering
  \includegraphics[width=\textwidth]{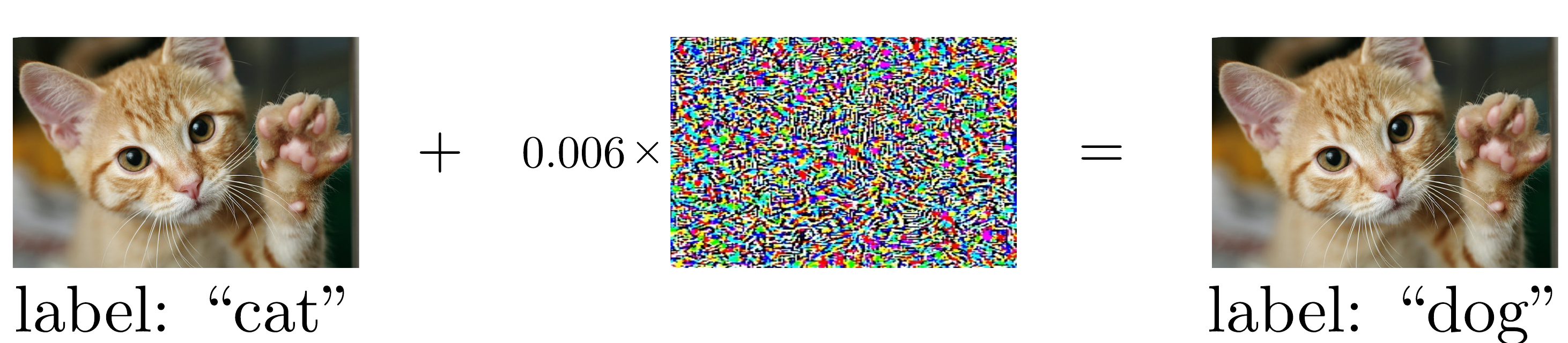}
  \caption{Example of Adversarial Attack on an image.}
  \label{figure:ch1-adversarial_image_example}
\end{figure}

An example of widely used neural networks with specialized and more compact linear operations are \emph{Convolutional Neural Networks} (CNN)~\cite{lecun1998gradient,krizhevsky2012imagenet,he2016deep,tan2019efficientnet} which achieve state-of-the-art results for computer vision tasks.
Convolutional neural networks, which find their roots in the work of~\citet{fukushima1982neocognitron}, use specific weight matrices which encode the translation invariant property often desirable to process images.
% They use convolution layers which are specific to image processing and use very few parameters.
Whereas a classical linear layer with a dense matrix will have $n \times n$ parameters, a convolution layer only has $k \times k$ parameters where $k \ll n$ is the kernel size and is usually small (\eg, 3 or 5 for classical convolution layers).
A convolutional neural network is the most common type of \emph{structured} neural networks.
Indeed, the convolution operation can be represented by a structured matrix \ie, a matrix that can be represented with less than $n^2$ parameters.

In addition to offering a more compact representation, the structure of certain matrices can be exploited to obtain better algorithms for the matrix-vector product, thus optimizing memory and computing operations.
Based on the success of convolutional neural networks, researchers have studied and proposed other types of neural networks based on weight matrices with different structures (\eg, ~\citet{moczulski2016acdc,sindhwani2015structured}).
\Cref{figure:ch1-example_structure_matrices} shows different types of structured matrices that have been used for deep learning.
Although convolutional neural networks have been state-of-the-art for computer vision tasks, it remains unclear whether other types of structured networks can be beneficial to other types of applications and which type of structure can provide both accuracy and efficient computation.

The contributions of this thesis lie at the intersection of linear algebra, Fourier analysis and deep learning.
As a result, we build compact and secure neural networks by leveraging the properties of structured matrices from the Toeplitz family.
Hereafter, we summarize our contributions.

%%%%%%%%%%%%%%%%%%%%%%%%%%%%%%%%%%%%%%%%%%%%%%%%%%%%%%%%%%%%%%%%%%%%%%%%%%%%%%%%
\subsection{Training Compact Neural Networks}
\label{subsection:ch1-training_compact_neural_networks}
%%%%%%%%%%%%%%%%%%%%%%%%%%%%%%%%%%%%%%%%%%%%%%%%%%%%%%%%%%%%%%%%%%%%%%%%%%%%%%%%

% In addition, neural networks with a smaller number of parameters generalize better.
% This phenomenon has been theoretically justified by \citet{vapnik1982estimation}.
% Indeed, \citeauthor{vapnik1982estimation} linked the generalization capability of neural networks to their VC-dimension which is a measure of expressivity of the class of functions.
% This complexity measure is based on the number of parameters, therefore, reducing the number of parameters leads to a smaller VC-Dimension, which then leads to better generalization.

As a first contribution, we use circulant matrices, which are a particular type of matrix from the Toeplitz family, to devise a new compact architecture replacing fully connected neural networks.
More precisely, we study deep diagonal-circulant neural networks, which are deep neural networks in which weight matrices are replaced by the product of diagonal and circulant ones.
Besides making a theoretical analysis of their expressivity, we introduce principled techniques for training these models: we devise an initialization scheme and propose a smart use of nonlinearity functions in order to train deep diagonal-circulant networks.
Furthermore, we show that these networks outperform recently introduced deep networks with other types of structured layers.
We conduct a thorough experimental study to compare the performance of these networks with state-of-the-art models. 
%based on structured matrices and with dense models.
We show that our models achieve better accuracy than other structured approaches while requiring 2x fewer weights than the next best approach.
Finally, we train accurate deep diagonal-circulant networks on a real-world video classification dataset with over 3.8 million training examples.
% Finally, we train deep diagonal-circulant networks on a real-world video classification dataset with over 3.8 million training examples.

\begin{mdframed}[
    topline=true,
    bottomline=true,
    rightline=true,
    linecolor=mydarkblue,
    linewidth=3pt,
    backgroundcolor=white,
    innerleftmargin=0.02\textwidth,
    innerrightmargin=0.02\textwidth,
    skipabove=0.5cm,
    skipbelow=0.5cm
  ]
  This contribution has been the subject of the following publications:
  \begin{itemize}[topsep=0pt,leftmargin=12pt]
    \setlength\itemsep{-0.3em}
    \item
      \emph{Training Compact Deep Learning Models for Video Classification using Circulant Matrices}
      in the \textbf{European Conference on Computer Vision Workshops on Video Classification}
    \item
      \emph{Understanding and Training Deep Diagonal Circulant Neural Networks} in the 
      \textbf{24th European Conference on Artificial Intelligence}.
  \end{itemize}
\end{mdframed}

% The training procedure we have developed to train large diagonal-circulant neural networks was first published in the \textbf{\color{mydarkblue} European Conference on Computer Vision Workshops on Video Classification}, as part of the \yt challenge.
% Then, the theoretical analysis of the expressivity of diagonal-circulant neural networks has been published in a second paper in the \textbf{\color{mydarkblue} 24th European Conference on Artificial Intelligence}.

%%%%%%%%%%%%%%%%%%%%%%%%%%%%%%%%%%%%%%%%%%%%%%%%%%%%%%%%%%%%%%%%%%%%%%%%%%%%%%%%
\subsection{Training Robust Neural Networks}
\label{subsection:ch1-training_robust_neural_networks}
%%%%%%%%%%%%%%%%%%%%%%%%%%%%%%%%%%%%%%%%%%%%%%%%%%%%%%%%%%%%%%%%%%%%%%%%%%%%%%%%

As a second contribution, we build robust neural networks by studying the properties of the structure of convolutions.
We devise a new upper bound on the largest singular value of convolution layers that is both tight and easy to compute.
Our work is based on the result of~\citet{gray2006toeplitz} which states that an upper bound on the singular value of Toeplitz matrices can be computed from the inverse Fourier transform of the characteristic sequence of these matrices.
From our analysis immediately follows an algorithm for bounding the Lipschitz constant of a convolution layer, and by extension the Lipschitz constant of the whole network.
Finally, we illustrate our approach to adversarial robustness.
Recent work has shown that empirical methods such as adversarial training offer poor generalization~\cite{schmidt2018adversarially,rice2020overfitting} and can be improved by applying Lipschitz regularization~\cite{farnia2018generalizable}.
To illustrate the benefit of our new method, we train neural networks with Lipschitz regularization and show that it offers a significant improvement over adversarial training alone.

Additional joint contributions have also been made on the topic of robust neural networks.
A first work studied the effectiveness of noise injection at training and inference time in neural networks to protect against adversarial attacks.
In this work, we have shown that noise drawn from the Exponential family offers a provable protection against adversarial attacks. 
A follow-up work conducts a geometrical analysis of defense mechanisms designed to protect neural networks against several types of attacks.
This work shows that neural networks designed to be robust against one type of adversarial example offer poor protection against other types of attacks.

\begin{mdframed}[
    topline=true,
    bottomline=true,
    rightline=true,
    linecolor=mydarkblue,
    linewidth=3pt,
    backgroundcolor=white,
    innerleftmargin=0.02\textwidth,
    innerrightmargin=0.02\textwidth,
    skipabove=0.5cm,
    skipbelow=0.5cm
  ]
  The contribution on adversarial robustness has been the subject of the following publications:
  \begin{itemize}[topsep=0pt,leftmargin=12pt]
    \setlength\itemsep{-0.3em}
    \item
      \emph{On Lipschitz Regularization of Convolutional Layers using Toeplitz Matrix Theory}
      in the \textbf{\nth{35} AAAI Conference on Artificial Intelligence}
    \item
      \emph{Theoretical evidence for adversarial robustness through randomization} in the 
      \textbf{Advances in Neural Information Processing Systems}.
    \item
      \emph{Advocating for Multiple Defense Strategies against Adversarial Examples} in the
      \textbf{European Conference on Machine Learning Workshop for CyberSecurity}
  \end{itemize}
\end{mdframed}

\section*{Outline of the Thesis}
\label{section:ch1-outline_of_the_thesis}
%%%%%%%%%%%%%%%%%%%%%%%%%%%%%%%%%%%%%%%%%%%%%%%%%%%%%%%%%%%%%%%%%%%%%%%%%%%%%%%

This thesis is organized in six chapters.
First, \Cref{chapter:ch2-background} gives an introduction to the theory of Toeplitz matrices and on supervised learning and neural networks.
This chapter presents the necessary technical tools we will need for presenting the related work and for our contributions.
\Cref{chapter:ch3-related_work} is dedicated to discussing the state-of-the-art approaches related to our contributions.
The chapter is divided into two parts.
First, we review some techniques to build compact neural networks with an important focus on techniques that use structured matrices.
The second part focuses on presenting regularization methods for improving the robustness of neural networks.
\Cref{chapter:ch4-diagonal_circulant_neural_network} and \Cref{chapter:ch5-lipschitz_bound} constitute our main contributions.
\Cref{chapter:ch4-diagonal_circulant_neural_network} presents results on compact neural networks built from diagonal and circulant matrices.
\Cref{chapter:ch5-lipschitz_bound} presents our new regularization scheme to improve the robustness of neural networks based on the properties of doubly-block Toeplitz matrices.
\Cref{chapter:ch6-conclusion} proposes a discussion and some perspectives on our contributions.
Appendix~\ref{appendix:ap2-diagonal_circulant_neural_networks_for_video_classification} constitutes some complements to~\Cref{chapter:ch4-diagonal_circulant_neural_network}.
It provides additional experiments on video classification with compact neural networks.
Finally, Appendices~\ref{appendix:ap3-theoretical_evidence_for_adversarial_robustness_through_randomization} and ~\ref{appendix:ap4-advocating_multiple_defense_strategies_against_adversarial_examples} provide further work on the robustness of neural networks done during this Ph.D. thesis.
% Finally, Appendix~\ref{appendix:ap6-publications} enumerates the publications made during this Ph.D. thesis.

  %%%%%%%%%%%%%%%%%%%%%%%%%%%%%%%%%%%%%%%%%%%%%%%%%%%%%%%%%%%%%%%%%%%%%%%%%%%%%%%
\chapter{Background}
\label{chapter:ch2-background}
%%%%%%%%%%%%%%%%%%%%%%%%%%%%%%%%%%%%%%%%%%%%%%%%%%%%%%%%%%%%%%%%%%%%%%%%%%%%%%%
\localtoc

\section*{}

% This chapter gives a short introduction on the theory of Toeplitz matrices~\cite{gray2006toeplitz} and on supervised learning and neural networks~\cite{shalev2014understanding}.
This chapter gives an overview on the theory of Toeplitz matrices and on supervised learning with neural networks.
The first section describes the mathematical properties of Toeplitz matrices and some known theorems that we use in this thesis.
A Toeplitz matrix, named after Otto Toeplitz, is a matrix in which each descending diagonal, from left to right, is constant.
This simple property has led to interesting theoretical results and numerous applications.
We will use a number of these results in the context of neural networks.
% Neural networks are parametric functions trained to map an input to an output.
% For example, an image (input) to the content of the image (output).
The second section of this chapter is divided into four parts.
First, we review notions of supervised learning which refer to the problem of optimizing the parameters of a function in order to map an input to an output based on a series of input-output pairs.
Then, we formally define neural networks and recall some of their properties.
We pursue by introducing the concept of adversarial examples which we will use in \Cref{chapter:ch5-lipschitz_bound}.
Finally, we present some recent theoretical results on neural networks that allow a better understanding of the contributions of this thesis.

% In the statistical learning framework, supervised learning refers to the problem of optimizing the parameters of a function in order to map an input to an output based on a series of input-output pairs.
% For example, an image (input) mapped to a class (output) describing its content.
% The second section describes a formal model that aims to describe such learning tasks.

%%%%%%%%%%%%%%%%%%%%%%%%%%%%%%%%%%%%%%%%%%%%%%%%%%%%%%%%%%%%%%%%%%%%%%%%%%%%%%%
\section{A Primer on Circulant and Toeplitz Matrices}
\label{section:ch2-a_primer_on_circulant_and_toeplitz_matrices}
%%%%%%%%%%%%%%%%%%%%%%%%%%%%%%%%%%%%%%%%%%%%%%%%%%%%%%%%%%%%%%%%%%%%%%%%%%%%%%%

% In this thesis, we make contributions at the intersection between neural networks and structured matrices.
% We build neural networks with structured matrices and develop new methods for training neural networks whose efficiency relies on the properties of structured matrices. 
% on certain properties of structured matrices.
% Hereafter, we present the preliminary knowledge on the theory of structured matrices that is required for the contribution of this thesis. 

%%%%%%%%%%%%%%%%%%%%%%%%%%%%%%%%%%%%%%%%%%%%%%%%%%%%%%%%%%%%%%%%%%%%%%%%%%%%%%%
\subsection{Properties of Circulant Matrices}
\label{subsection:ch2-properties_of_circulant_matrices}
%%%%%%%%%%%%%%%%%%%%%%%%%%%%%%%%%%%%%%%%%%%%%%%%%%%%%%%%%%%%%%%%%%%%%%%%%%%%%%%

A circulant matrix is a matrix in which each descending diagonal, from left to right, is constant and each row of the matrix is a cyclic right shift of the previous one:
\begin{equation}
  \Cmat =
  \leftmatrix
    c_0 & c_{n-1} & c_{n-2} & \cdots & \cdots & c_{1} \\
    c_{1} & c_0 & c_{n-1} & \ddots & & \vdots \\
    c_{2} & c_{1} & \ddots & \ddots & \ddots & \vdots \\
    \vdots & \ddots & \ddots & \ddots & c_{n-1} & c_{n-2} \\
    \vdots & & \ddots & c_{1} & c_{0} & c_{n-1} \\
    c_{n-1} & \cdots & \cdots & c_{2} & c_{1} & c_0
  \rightmatrix \enspace.
\end{equation}
\noindent
The $n \times n$ circulant matrix $\Cmat$ is fully determined by the sequence of scalars $\{c_h\}_{h \in \Iset^+_n}$ where $\Iset^+_n= \{0, \ldots, n-1\}$.
Furthermore, the $(j,k)$ entry of $\Cmat$ is given by
\begin{equation}
  \leftmat \Cmat \rightmat_{j,k} = c_{\left(k-j\right) \mod n} \enspace.
\end{equation}

% For the contributions of this thesis, we will use the properties of circulant matrices, specifically their compact representation and their ability to accelerate the matrix-vector product operation.
% These properties are due to their special eigenvectors and a closed formula on their eigenvalues as we will see in the following.

\begin{algorithm}[htb]
  \begin{algorithmic}[1]
    \Procedure{CIRCMUL}{$\cvec, \xvec$} \Comment{first column of the circulant matrix $\Cmat$, vector $\xvec$}
      \State $\tilde{\xvec} \gets \textbf{FFT}(\xvec)$
      \State $\tilde{\cvec} \gets \textbf{FFT}(\cvec)$
      \State $\yvec \gets \textbf{IFFT}(\tilde{\xvec} \odot \tilde{\cvec})$ \Comment{element-wise vector-vector product}
      \State \textbf{return} $\yvec$ \Comment{return the result of the product $\Cmat \xvec$}
    \EndProcedure
  \end{algorithmic}
  \caption{Matrix-vector product with a circulant matrix}
  \label{algorithm:ch2-matrix_vector_product_circulant_matrix}
\end{algorithm}

In linear algebra, circulant matrices are important due to their numerous properties.
Indeed, circulant matrices can be compactly represented in memory using only $n$ values instead of $n^2$ values required for arbitrary matrices.
In addition, algorithms exist to speed-up the matrix-vector product operation from $\bigO(n^2)$ to $\bigO(n \log n)$. 
Finally, circulant matrices commute and are closed under the sum and products.
All these properties can be demonstrated with the special diagonalization of circulant matrices with the matrix expansion of the Discrete Fourier Transform (DFT), \ie, Fourier matrix, and an explicit formula of their eigenvalues.
The Fourier matrix is of the form:
\begin{definition}[Fourier Matrix]
  The Fourier matrix of order $n$ is defined as follows:
  \begin{equation} \label{definition:ch2-fourier_matrix}
    \Umat_n = 
    \leftmatrix
      1      & 1         & 1            & \cdots & 1                \\
      1      & z_n       & z_n^2        & \cdots & z_n^{n-1}        \\
      1      & z_n^2     & z_n^4        & \cdots & z_n^{2(n-1)}     \\
      \vdots & \vdots    & \vdots       &        & \vdots           \\
      1      & z_n^{n-1} & z_n^{2(n-1)} & \cdots & z_n^{(n-1)(n-1)}
    \rightmatrix \enspace,
  \end{equation}
  where $z_n = e^{-\frac{2\pi\ci}{n}}$ is an $n^{\text{th}}$ root of unity.
\end{definition}
\noindent
The diagonalization of circulant matrices is given by the following theorem:
\begin{theorem}[\citet{davis1979circulant}] \label{theorem:ch2-diagonalization_circulant_matrix}
  The eigenvalues $\lambda_k$ and the eigenvectors $\yvec^{(k)}$ of a circulant matrix $\Cmat = \circulant(\cvec)$ with $\cvec \in \Rbb^n$ are as follows:
  \begin{equation}
    \lambda_k = \sum_{j \in \Iset^+_n} c_j e^{-\frac{2 \pi \ci}{n} jk} \quad \Leftrightarrow \quad \lambda_k = \left( \Umat_n \cvec \right)_k \enspace,
  \end{equation}
  and
  \begin{equation}
    \yvec^{(k)} = \frac{1}{\sqrt{n}} \leftmatrix 1, e^{-\frac{2 \pi \ci k}{n}}, \dots, e^{-\frac{2 \pi \ci k(n-1)}{n}} \rightmatrix^\top \enspace.
  \end{equation}
  Furthermore, the circulant matrix $\Cmat$ can be expressed in the form 
  \begin{equation} \label{equation:ch2-diagonalization_circulant_matrix}
    \Cmat = \frac{1}{n} \Umat_n^* \diag(\Umat_n \cvec) \Umat_n \enspace.
  \end{equation}
\end{theorem}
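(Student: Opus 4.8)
The plan is to exploit the fact that the commutative algebra of $n\times n$ circulant matrices is generated by one element, the basic cyclic shift $\Pmat$, namely the circulant matrix whose defining sequence has the single nonzero entry $c_1=1$. Reading off the entry formula for circulants, $\Pmat^{h}$ is exactly the circulant whose unit band sits $h$ places off the main diagonal, so an arbitrary circulant decomposes as $\Cmat=\circulant(\cvec)=\sum_{h\in\Iset^+_n}c_h\,\Pmat^{h}$. It therefore suffices to diagonalize $\Pmat$ once and transport the result through this polynomial identity: the eigenvalues of $\Cmat$ are obtained by evaluating the polynomial $t\mapsto\sum_{h\in\Iset^+_n}c_h t^{h}$ at the eigenvalues of $\Pmat$, and that evaluation is precisely a discrete Fourier transform of $\cvec$.

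First I would diagonalize $\Pmat$. Its characteristic polynomial is $t^{n}-1$, whose $n$ roots are the powers $z_n^{0},\dots,z_n^{n-1}$ of $z_n=e^{-2\pi\ci/n}$; these being distinct, $\Pmat$ has a full eigenbasis. A one-line computation shows that the vector $\yvec^{(k)}=\tfrac{1}{\sqrt n}(1,z_n^{k},\dots,z_n^{k(n-1)})^{\top}$ is an eigenvector of $\Pmat$: applying the shift cyclically permutes the coordinates and pulls out one power of $z_n^{k}$, the wrap-around term being reconciled by $z_n^{kn}=1$. Since $\Cmat$ is a polynomial in $\Pmat$, each $\yvec^{(k)}$ is simultaneously an eigenvector of $\Cmat$, and the associated eigenvalue is that polynomial evaluated at the matching root, i.e. (up to the indexing convention) $\lambda_k=\sum_{j\in\Iset^+_n}c_j\,e^{-2\pi\ci jk/n}=(\Umat_n\cvec)_k$; this is exactly the displayed eigenpair. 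Note we never need the $\lambda_k$ to be distinct — the eigenbasis is provided by $\Pmat$, whose eigenvalues are distinct, and is inherited by every polynomial in $\Pmat$.

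For the factorization \eqref{equation:ch2-diagonalization_circulant_matrix}, I would gather the relations $\Cmat\yvec^{(k)}=\lambda_k\yvec^{(k)}$ into one matrix identity $\Cmat\Ymat=\Ymat\diag(\Umat_n\cvec)$, where $\Ymat$ is the matrix whose columns are the $\yvec^{(k)}$. Inspecting entries, $\Ymat$ is $\tfrac{1}{\sqrt n}$ times (a conjugate of) the Fourier matrix $\Umat_n$. The remaining ingredient is that this matrix is unitary, which is nothing but the orthogonality of the $n$-th roots of unity: the geometric sum $\sum_{l\in\Iset^+_n}z_n^{\,l(j-k)}$ equals $n$ when $n\mid(j-k)$ and $0$ otherwise, hence $\Umat_n\Umat_n^{*}=\Umat_n^{*}\Umat_n=n\,\Imat_n$ and $\Ymat^{-1}=\tfrac{1}{\sqrt n}\Umat_n^{*}$ (up to conjugation). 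Substituting $\Ymat$ and $\Ymat^{-1}$ into $\Cmat=\Ymat\diag(\Umat_n\cvec)\Ymat^{-1}$ and folding the two factors $\tfrac{1}{\sqrt n}$ into the single constant $\tfrac1n$ yields the stated form.

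The difficulty here is bookkeeping rather than conceptual, and that is where I expect the only real friction. One has to prove the root-of-unity orthogonality relations that make $\tfrac{1}{\sqrt n}\Umat_n$ unitary, and then be meticulous with complex conjugates and with which side $\Umat_n$ versus $\Umat_n^{*}$ lands on, so that the constant comes out as exactly $\tfrac1n$; the sign convention $z_n=e^{-2\pi\ci/n}$ fixed for the Fourier matrix must be threaded consistently through the shift computation and the final assembly.
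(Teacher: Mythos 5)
The paper itself does not give a proof of this theorem---it is quoted from \citet{davis1979circulant}---so there is no in-paper argument to compare against; your proposal is the standard argument and is essentially the one in Davis's book. The key steps are all in place and correct: every circulant is a polynomial in the single cyclic shift $\Pmat$, $\Pmat$ has $n$ distinct eigenvalues (the $n$-th roots of unity) with explicit eigenvectors $\yvec^{(k)}$, these are inherited by any polynomial in $\Pmat$, and the matrix assembly then goes through via the unitarity of $\tfrac{1}{\sqrt n}\Umat_n$ (which reduces to the orthogonality of roots of unity).

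The bookkeeping friction you anticipate is real, and it is worth naming precisely because it is not a defect of your argument. Starting from the paper's entry formula $(\Cmat)_{j,k}=c_{(k-j)\bmod n}$, one finds $\Pmat\yvec^{(k)}=z_n^{k}\yvec^{(k)}$, hence $\lambda_k=\sum_h c_h z_n^{kh}=(\Umat_n\cvec)_k$ exactly as stated; but then $\Ymat=\tfrac{1}{\sqrt n}\Umat_n$ (no conjugation, since $\Umat_n$ is symmetric and $\Ymat_{jk}=\tfrac{1}{\sqrt n}z_n^{kj}$), and the similarity $\Cmat=\Ymat\diag(\Umat_n\cvec)\Ymat^{-1}$ lands you on
\begin{equation*}
  \Cmat=\frac{1}{n}\,\Umat_n\,\diag(\Umat_n\cvec)\,\Umat_n^{*}\enspace,
\end{equation*}
\emph{i.e.}\ with $\Umat_n$ and $\Umat_n^{*}$ on the opposite sides from \Cref{equation:ch2-diagonalization_circulant_matrix}. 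You can check directly that the form displayed in the paper reproduces $(\Cmat)_{j,k}=c_{(j-k)\bmod n}$ and assigns eigenvalue $(\Umat_n\cvec)_{(n-k)\bmod n}$ to $\yvec^{(k)}$: the paper's entry formula, its displayed matrix picture, and its factorization \eqref{equation:ch2-diagonalization_circulant_matrix} do not all use the same orientation convention. So when you thread the sign convention through carefully, as you say you must, expect to land on the conjugate-transposed form above and do not read the mismatch with \eqref{equation:ch2-diagonalization_circulant_matrix} as a hole in your proof.
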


\begingroup
\allowdisplaybreaks

\vspace{-2mm}

\noindent
Based on this decomposition, we can state several % important
properties of circulant matrices:
\begin{itemize}[leftmargin=13pt]
  \item \textbf{Matrix-vector product}: Let $\xvec \in \Rbb^n$ an arbitrary vector then the product $\Cmat \xvec$ can be expanded as follows:
  \begin{align}
    \Cmat \xvec &= \frac{1}{n} \Umat_n^* \diag(\Umat_n \cvec) \Umat_n \xvec  \\
    &= \frac{1}{n} \Umat_n^* \left( \big(\Umat_n \cvec \big) \odot \big( \Umat_n \xvec \big) \right)
  \end{align}
  where $\odot$ is the element-wise vector multiplication.
  Thus, the matrix-vector product $\Cmat \xvec$ can be reduced to an element-wise multiplication between the characteristic vector $\cvec$ and the vector $\xvec$ in the Fourier domain.
  Furthermore, the multiplication between the Fourier matrix $\Umat_n$ and a vector can be efficiently computed with the \emph{Fast Fourier Transform} (FFT) algorithms~\cite{cooley1965algorithm}.
  \Cref{algorithm:ch2-matrix_vector_product_circulant_matrix} details the steps required to perform the $\bigO(n \log n)$ multiplication between  a circulant matrix and a vector.
\item \textbf{Closeness under sum}: Let $\xvec, \yvec \in \Rbb^n$, $\Xmat= \circulant(\xvec)$ and $\Ymat = \circulant(\yvec)$ then, $\Zmat  = \Xmat + \Ymat$ is also a circulant matrix with $\Zmat = \circulant(\xvec + \yvec)$:
    \begin{align}
      \Xmat + \Ymat &= \left( \frac{1}{n} \Umat_n^* \diag(\Umat_n \xvec) \Umat_n \right) + \left( \frac{1}{n} \Umat_n^* \diag(\Umat_n \yvec) \Umat_n \right) \\
      &= \frac{1}{n}  \Umat_n^* \left( \diag(\Umat_n \xvec) \Umat_n  + \diag(\Umat_n \yvec) \Umat_n \right) \\
      &= \frac{1}{n}  \Umat_n^* \left( \diag(\Umat_n \xvec) + \diag(\Umat_n \yvec) \right) \Umat_n  \\
      &= \frac{1}{n}  \Umat_n^* \left( \diag(\Umat_n (\xvec + \yvec)) \right) \Umat_n  \\
      &= \circulant(\xvec + \yvec)
    \end{align}
  \item \textbf{Closeness under product}: Let $\xvec, \yvec \in \Rbb^n$, $\Xmat= \circulant(\xvec)$ and $\Ymat = \circulant(\yvec)$ then, $\Zmat  = \Xmat \Ymat$ is also a circulant matrix with $\Zmat = \circulant(\xvec \odot \yvec)$:
    \begin{align}
      \Xmat \Ymat &= \left( \frac{1}{n} \Umat_n^* \diag(\Umat_n \xvec) \Umat_n \right) \left( \frac{1}{n} \Umat_n^* \diag(\Umat_n \yvec) \Umat_n \right) \\
      &= \frac{1}{n^2}  \Umat_n^* \diag(\Umat_n \xvec) \Umat_n \Umat_n^* \diag(\Umat_n \yvec) \Umat_n  \\
      &= \frac{1}{n^2}  \Umat_n^* \diag(\Umat_n \xvec) (n \Imat) \diag(\Umat_n \yvec) \Umat_n  \\
      &= \frac{1}{n}  \Umat_n^* \diag(\Umat_n \xvec) \diag(\Umat_n \yvec) \Umat_n  \\
      &= \frac{1}{n}  \Umat_n^* \diag(\Umat_n (\xvec \odot \yvec)) \Umat_n  \\
      &= \circulant(\xvec \odot \yvec)
    \end{align}
\end{itemize}

\endgroup

In this thesis, we also make use of specific type of circulant matrices called \emph{$f$-circulant matrices} which are one of the building blocks of \emph{low displacement rank operators} presented in \Cref{subsection:ch2-general_frameworks_for_structured_matrices} and also enjoy compact representation and fast matrix-vector product.
An $f$-unit-circulant matrix is defined as follows:

\begin{definition}[$f$-circulant matrix] \label{definition:ch2-f_circulant_matrix}
  Given a vector $\xvec$ and a scalar $f$, the $f$-circulant matrix, $\Zmat_f(\xvec)$, is defined as follows:
  \begin{equation}
    \Zmat_f(\xvec) \triangleq
    \leftmatrix
      \xvec_0 & $f$ \xvec_{n-1} & $f$ \xvec_{n-2} & \cdots & \cdots & $f$ \xvec_{1} \\
      \xvec_{1} & \xvec_0 & $f$ \xvec_{n-1} & \ddots & & \vdots \\
      \xvec_{2} & \xvec_{1} & \ddots & \ddots & \ddots & \vdots \\ 
      \vdots & \ddots & \ddots & \ddots & $f$ \xvec_{n-1} & $f$ \xvec_{n-2} \\
      \vdots & & \ddots & \xvec_{1} & \xvec_{0} & $f$ \xvec_{n-1} \\
      \xvec_{n-1} & \cdots & \cdots & \xvec_{2} & \xvec_{1} & \xvec_0
    \rightmatrix \enspace.
  \end{equation}
\end{definition}

\noindent
We denote $\Zmat_f$ the $f$-unit-circulant, defined by the vector $\left(0, 1, \dots, 0 \right)^\top$, a matrix of the form:
% \begin{equation}
%   \Zmat_f = \circulant \big(e^{(1)} \big) + (f - 1) \evec^{(0)} \evec^{(n-1)\top}
% \end{equation}
% \noindent
\begin{equation}
  \Zmat_f = 
    \leftmatrix
      0      & 0      & 0      & \cdots & \cdots & f      \\
      1      & 0      & 0      & \ddots &        & \vdots \\
      0      & 1      & \ddots & \ddots & \ddots & \vdots \\ 
      \vdots & \ddots & \ddots & \ddots & 0      & 0      \\
      \vdots &        & \ddots & 1      & 0      & 0      \\
      0      & \cdots & \cdots & 0      & 1      & 0
    \rightmatrix \enspace.
\end{equation}

\noindent
The matrix-vector product $\Zmat_f \xvec$ scales the last element by $f$ and makes a circular shift on the components of the vector $\xvec$ by one resulting in $\Zmat_f \xvec = \leftmat f \xvec_{n-1}, \xvec_0, \dots, \xvec_{n-2} \rightmat^\top$.

%%%%%%%%%%%%%%%%%%%%%%%%%%%%%%%%%%%%%%%%%%%%%%%%%%%%%%%%%%%%%%%%%%%%%%%%%%%%%%%%
\subsection{A Fourier Representation of Toeplitz Matrices}
\label{subsection:ch2-a_fourier_representation_of_toeplitz_matrices}
%%%%%%%%%%%%%%%%%%%%%%%%%%%%%%%%%%%%%%%%%%%%%%%%%%%%%%%%%%%%%%%%%%%%%%%%%%%%%%%%

Toeplitz matrices generalize circulant matrices by relaxing the cyclic right shift on the rows.
Therefore, a Toeplitz matrix is a matrix in which each descending diagonal, from left to right, is constant, \ie, a matrix of the form:

\begin{equation}
  \Amat =
  \leftmatrix
    a_{0}   & a_{-1} & a_{2}  & \cdots & \cdots & a_{-n+1} \\
    a_{1}   & a_{0}  & a_{1}  & \ddots &        & \vdots   \\
    a_{2}   & a_{1}  & \ddots & \ddots & \ddots & \vdots   \\
    \vdots  & \ddots & \ddots & \ddots & a_{-1} & a_{-2}   \\
    \vdots  &        & \ddots & a_{1}  & a_{0}  & a_{-1}   \\
    a_{n-1} & \cdots & \cdots & a_{2}  & a_{1}  & a_{0}
  \rightmatrix \enspace.
\end{equation}

\noindent
The $n \times n$ Toeplitz matrix $\Amat$ is fully determined by a two-sided sequence of scalars $\{a_h\}_{h \in \Iset_n}$ where $\Iset_n = \{-n+1, \dots, n-1\}$ and the $(j,k)$ entry of $\Amat$ is given by
\begin{equation}
  \leftmat \Amat \rightmat_{j,k} = a_{k-j} \enspace. \vspace{-2mm}
\end{equation}
\noindent
Similarly to their circulant counterpart, Toeplitz matrices can be represented compactly in memory using only $2n-1$ values instead of $n^2$ values required for arbitrary ones.
Toeplitz matrices have been extensively studied in the context of operator and spectral theory \cite{grenander1958toeplitz,widom1965toeplitz,bottcher2012introduction}.
One important result regarding Toeplitz matrices is Szeg\"{o}'s theorem \cite{szego1915grenzwertsatz} which describes the asymptotic behavior of the determinant of large Toeplitz matrices.
Because Toeplitz matrices do not have a closed-form expression for their eigenvalues, studying their spectrum is not as straightforward as their circulant counterpart.
In order to devise results on Toeplitz matrices, \citet{grenander1958toeplitz} introduced a representation based on the Fourier transform.
Indeed, Toeplitz matrices can be generated from a 2$\pi$-periodic function where the values of the Toeplitz matrix are the Fourier coefficients of this \emph{generating function}.
The spectrum of Toeplitz matrices can be described precisely from the properties of their generating functions.
% This representation of Toeplitz matrices has been extensively used in the literature \cite{serra1997extension,parter1961extreme,avram1988bilinear,widom1965toeplitz,tilli1997asymptotic,tyrtyshnikov1998spectra,tilli1998singular,tilli1997asymptotic} in context such as signal processing, trigonometric moment problems, integral equations and elliptic partial differential equations with boundary conditions etc.
This representation of Toeplitz matrices has been widely studied in contexts such as signal processing, trigonometric moment problems, integral equations and elliptic partial differential equations with boundary conditions, etc. \cite{serra1997extension,parter1961extreme,avram1988bilinear,widom1965toeplitz,tilli1997asymptotic,tyrtyshnikov1998spectra,tilli1998singular,tilli1997asymptotic}

% Because Toeplitz matrices don't have an explicit formula for the eigenvalues, we need to use another representation in order to devise results.
%
% Toeplitz matrices arise in many applications, for example, if $\xvec$ is an arbitrary vector, then the product $\Amat \xvec$ is the matrix and vector formulation of a discrete-time convolution of a discrete-time input.
%
% We also use the properties of Toeplitz matrices which are not as straightforward.
% For example, we don't have an explicit formula for the eigenvalues of Toeplitz matrices.
% Therefore, we use another useful representation of Toeplitz matrices with Fourier analysis which allows us to derive properties on their eigenvalues. 
%
%
% Toeplitz matrices do not have a closed-form expression for their eigenvalues as circulant matrices.
% In order to devised results on their eigenvalues, we need to 

The Fourier representation of Toeplitz matrices can be described as follows.
Let $\{a_h\}_{h \in \Iset_n}$ be the characteristic sequence of the Toeplitz matrix $\Amat \in \Rbb^{n\times n}$.
Then, the trigonometric polynomial $f: \Rbb \rightarrow \Cbb$ of the form
\begin{equation}
  f(\omega) = \sum_{h \in \Iset_n} a_h e^{\ci h \omega} \vspace{-2mm}
\end{equation}
is the \emph{inverse Fourier transform} of the sequence $\{a_h\}_{h \in \Iset_n}$.
From this function, one can recover the sequence $\{a_h\}_{h \in \Iset_n}$ using the standard Fourier transform:
\begin{equation}
  a_h = \frac{1}{2\pi} \int_0^{2\pi} e^{-\ci h \omega} f(\omega) \,\diff \omega \enspace. \vspace{-2mm}
\end{equation}
\noindent
We can, now, define an operator $\Tmat$ mapping integrable functions to Toeplitz matrices:
\begin{equation} \label{equation:ch2-toeplitz_operator}
  \Tmat_n(f) \triangleq \leftmat\frac{1}{2\pi} \int_{0}^{2\pi} e^{-\ci(i-j)\omega}f(\omega) \,\diff \omega \rightmat_{i,j \in \Iset^+_n} \enspace. \vspace{-2mm}
\end{equation}
In the following, when it is clear from context, we will write $\Tmat(f)$ instead of $\Tmat_n(f)$.
% We can show that if $f$ is the inverse Fourier transform of $\{a_h\}_{h \in \Iset_n}$, then $\Tmat_n(f)$ is equal to $\Amat$.
% This representation has led to important results on the asymptotic distribution of the eigenvalues of Toeplitz matrices (\eg, Szeg\"{o}'s theorem \cite{szego1915grenzwertsatz}).
% In this thesis, we extend the representation to doubly-block Toeplitz matrices and use this representation to upper-bound the largest singular value of doubly-block Toeplitz matrices with respect to the generating function.

%%%%%%%%%%%%%%%%%%%%%%%%%%%%%%%%%%%%%%%%%%%%%%%%%%%%%%%%%%%%%%%%%%%%%%%%%%%%%%%
\subsection{Block Circulant, Block Toeplitz and the Convolution Operator}
\label{subsection:ch2-block_toeplitz_and_block_circulant_matrices}
%%%%%%%%%%%%%%%%%%%%%%%%%%%%%%%%%%%%%%%%%%%%%%%%%%%%%%%%%%%%%%%%%%%%%%%%%%%%%%%

%%%%%%%%%%%%%%%%%%%%%%%%%%%%%%%%%%%%%%%%%%%%%%%%%%%%%%%%%%%%%%%%%%%%%%%%%%%%%%%
\subsubsection{Block Toeplitz and Block Circulant Matrices}
\label{subsubsection:ch2-block_circulant_and_block_toeplitz_matrices}
%%%%%%%%%%%%%%%%%%%%%%%%%%%%%%%%%%%%%%%%%%%%%%%%%%%%%%%%%%%%%%%%%%%%%%%%%%%%%%%

We can adapt the structure of circulant matrices and their properties to block matrices.
A block circulant matrix is a matrix where each block is repeated identically along diagonals and each row of blocks is a cyclic right shift of the previous one.
Therefore, an $nm \times nm$ block circulant matrix $\Amat$ is fully determined by a sequence of blocks $\{\Amatsf^{(h)}\}_{h \in \Iset^+_n}$ and where each block $\Amatsf^{(h)}$ is an $m \times m$ matrix.
The block circulant matrix $\Amat = \leftmat \Amatsf^{((k-j) \mod n)} \rightmat_{j,k \in \Iset^+_n} $ is given by
\begin{equation}
  \Amat = 
  \leftmatrix
    \Amatsf^{(0)}   & \Amatsf^{(n-1)} & \Amatsf^{(n-2)} & \cdots        & \cdots          & \Amatsf^{(1)}   \\
    \Amatsf^{(1)}   & \Amatsf^{(0)}   & \Amatsf^{(n-1)} & \ddots        &                 & \vdots        \\
    \Amatsf^{(2)}   & \Amatsf^{(1)}   & \ddots          & \ddots        & \ddots          & \vdots        \\ 
    \vdots          & \ddots          & \ddots          & \ddots        & \Amatsf^{(n-1)} & \Amatsf^{(n-2)} \\
    \vdots          &                 & \ddots          & \Amatsf^{(1)} & \Amatsf^{(0)}   & \Amatsf^{(n-1)} \\
    \Amatsf^{(n-1)} & \cdots          & \cdots          & \Amatsf^{(2)} & \Amatsf^{(1)}   & \Amatsf^{(0)}
  \rightmatrix \enspace.
\end{equation}

\noindent
The diagonalization of circulant matrices can be extended to block circulant matrices where the diagonalization is done by blocks and the unit matrix is the Kronecker product of the Fourier matrix with the identity.
The following theorem describes this block diagonalization:
\begin{theorem}[\citet{gutierrez2012block}]
  Let $\Amat$ be an $n^2 \times n^2$ block circulant matrix defined by the sequence of blocks $\{\Amatsf^{(h)}\}_{h \in \Iset^+_n}$, then:
  \begin{equation} \label{equation:ch2-block_diagonalization_block_circulant}
    \Amat = \frac{1}{n} (\Umat_n \otimes \Imat_n)^* \bdiag(\mathsf{\Psi}^{(0)}, \cdots, \mathsf{\Psi}^{(n-1)}) (\Umat_n \otimes \Imat_n) \enspace,
  \end{equation}
  where $\otimes$ is the Kronecker product, $\bdiag$ is the block diagonal operator, $\Umat_n$ is the Fourier matrix of size $n \times n$ and $\mathsf{\Psi}^{(0)}, \dots, \mathsf{\Psi}^{(n-1)}$ are blocks determined as follows:
  \begin{equation}
    \leftmatrix
      \mathsf{\Psi}^{(0)} \\
      \mathsf{\Psi}^{(1)} \\
      \vdots \\
      \mathsf{\Psi}^{(n-1)} \\
    \rightmatrix = 
    (\Umat_n \otimes \Imat_n)
    \leftmatrix
      \Amatsf^{(0)} \\
      \Amatsf{(1)} \\
      \vdots \\
      \Amatsf^{(n-1)} \\
    \rightmatrix \enspace.
  \end{equation}
  \removespace
\end{theorem}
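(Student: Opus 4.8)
The plan is to lift the scalar diagonalization of circulant matrices (\Cref{theorem:ch2-diagonalization_circulant_matrix}) to the block setting by means of the Kronecker product and its mixed-product property. The first step is to write the block circulant matrix as a Kronecker sum
\begin{equation*}
  \Amat = \sum_{h \in \Iset^+_n} \Pmat^h \otimes \Amatsf^{(h)},
\end{equation*}
where $\Pmat$ is the $n \times n$ cyclic shift permutation; this is immediate by comparing the $(j,k)$ block of both sides, since the $(j,k)$ entry of $\Pmat^h$ is $1$ exactly when $h$ and $k-j$ agree modulo $n$, so that the $(j,k)$ block of the sum is $\Amatsf^{((k-j)\bmod n)}$ as required.

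The second step is to diagonalize each term. The matrix $\Pmat$ is itself circulant, with characteristic vector a unit vector, so \Cref{theorem:ch2-diagonalization_circulant_matrix} gives $\Pmat = \frac{1}{n}\Umat_n^* \diag(\Umat_n \cvec)\Umat_n$ with eigenvalues the $n$-th roots of unity. Using $\Umat_n \Umat_n^* = n\Imat_n$, the products telescope when taking powers, so $\Pmat^h = \frac{1}{n}\Umat_n^* \Dmat_h \Umat_n$ where $\Dmat_h$ is the diagonal matrix whose $k$-th entry is the $h$-th power of the $k$-th eigenvalue. Substituting this into the Kronecker sum and writing $\Amatsf^{(h)} = \Imat_n \Amatsf^{(h)} \Imat_n$, two applications of the mixed-product property $(\Amat_1\Amat_2)\otimes(\Bmat_1\Bmat_2) = (\Amat_1\otimes\Bmat_1)(\Amat_2\otimes\Bmat_2)$ give
\begin{equation*}
  \Pmat^h \otimes \Amatsf^{(h)} = \tfrac{1}{n}\,(\Umat_n^*\otimes\Imat_n)\,\big(\Dmat_h \otimes \Amatsf^{(h)}\big)\,(\Umat_n\otimes\Imat_n).
\end{equation*}
Since the flanking factors do not depend on $h$, summing over $h$ and using $(\Umat_n\otimes\Imat_n)^* = \Umat_n^*\otimes\Imat_n$ yields $\Amat = \frac{1}{n}(\Umat_n\otimes\Imat_n)^*\big(\sum_h \Dmat_h\otimes\Amatsf^{(h)}\big)(\Umat_n\otimes\Imat_n)$.

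The third step identifies the middle factor: because $\Dmat_h$ is diagonal, $\Dmat_h \otimes \Amatsf^{(h)}$ is block diagonal with $k$-th block $(\Dmat_h)_{kk}\,\Amatsf^{(h)}$, so the sum over $h$ is $\bdiag(\mathsf{\Psi}^{(0)},\dots,\mathsf{\Psi}^{(n-1)})$ with $\mathsf{\Psi}^{(k)} = \sum_{h\in\Iset^+_n}(\Dmat_h)_{kk}\,\Amatsf^{(h)}$; reading off that $(\Dmat_h)_{kk}$ is the $(k,h)$ entry of $\Umat_n$ shows this is exactly the $k$-th block of $(\Umat_n\otimes\Imat_n)$ applied to the stacked block-column $(\Amatsf^{(0)};\dots;\Amatsf^{(n-1)})$, which is the formula in the statement. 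The main obstacle is bookkeeping of conventions rather than anything deep: one must fix consistently the orientation of the cyclic shift in the definition of block circulant, the sign in $z_n = e^{-2\pi\ci/n}$, and hence whether $\Umat_n$ or $\Umat_n^*$ conjugates which factor and whether the blocks $\mathsf{\Psi}^{(k)}$ come out with $z_n^{kh}$ or $z_n^{-kh}$, while also tracking the single factor $1/n$ through $\Umat_n\Umat_n^* = n\Imat_n$ so that it lands in front of $\bdiag(\dots)$ exactly once.
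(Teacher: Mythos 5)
The paper does not prove this statement — it is quoted from \citet{gutierrez2012block} and used as a background fact, so there is no ``paper's own proof'' to compare against. Your proof is the standard argument and it is correct: writing the block circulant as the Kronecker sum $\sum_h \Pmat^h \otimes \Amatsf^{(h)}$, using the scalar diagonalization of $\Pmat$ from \Cref{theorem:ch2-diagonalization_circulant_matrix}, pushing the diagonalization through each Kronecker factor via the mixed-product rule, and reading off the block-diagonal middle factor as $(\Umat_n\otimes\Imat_n)$ applied to the stacked blocks. This is exactly how the result is proved in the source.

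One remark on the convention wrinkle you flagged at the end, because it is not purely cosmetic here. You took $(\Pmat^h)_{j,k}=1$ when $k-j\equiv h$, following the paper's stated index formula $\leftmat\Amat\rightmat_{j,k}=\Amatsf^{((k-j)\bmod n)}$. But the paper's displayed matrix (with $\Amatsf^{(n-1)}$ in position $(0,1)$) actually encodes $\Amatsf^{((j-k)\bmod n)}$ — the formula and the picture disagree, and the same inconsistency is already present in the scalar circulant in \Cref{subsection:ch2-properties_of_circulant_matrices}. Under your stated orientation, the characteristic vector of $\Pmat$ is $\evec^{(n-1)}$, so $(\Dmat_h)_{kk}=z_n^{-kh}$, which is the $(k,h)$ entry of $\Umat_n^*$, not $\Umat_n$; your final identity would then have $\Umat_n$ and $\Umat_n^*$ swapped relative to the statement. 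To land exactly on the statement as written, use the orientation implicit in the displayed matrix, i.e.\ $(\Pmat^h)_{j,k}=1$ when $j-k\equiv h$, so that the characteristic vector is $\evec^{(1)}$, $(\Dmat_h)_{kk}=z_n^{kh}=(\Umat_n)_{k,h}$, and $\mathsf{\Psi}^{(k)}=\sum_h (\Umat_n)_{k,h}\Amatsf^{(h)}$ as required. Since you already identified precisely this ambiguity, this is a fix-up of bookkeeping, not a gap in the argument.
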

\noindent
One can remark that when the blocks are of size $1 \times 1$, \ie, scalars, this theorem coincides with \Cref{equation:ch2-diagonalization_circulant_matrix} of \Cref{theorem:ch2-diagonalization_circulant_matrix}.
Although interesting, this representation does not provide a closed form expression of the eigenvalues of the block circulant matrix.
However, in the special case where the blocks are also circulant matrices -- the matrix is called a doubly-block circulant matrix -- then we can extend the diagonalization and get a closed form of the eigenvalues of doubly-block circulant matrices.
First, we can remark that if the blocks $\Amatsf^{(0)}, \dots, \Amatsf^{(n-1)}$ are circulant matrices then the blocks $\mathsf{\Psi}^{(0)}, \dots, \mathsf{\Psi}^{(n-1)}$ are also circulant matrices because they are linear combinations of circulant matrices which are closed under the sum and products.
Therefore, using \Cref{theorem:ch2-diagonalization_circulant_matrix} for each block of the block diagonal independently, we have:
\begin{equation} \label{equation:ch2-diagonalization_circulant_with_bdiag}
  \bdiag\leftmat \mathsf{\Psi}^{(0)}, \cdots, \mathsf{\Psi}^{(n-1)} \rightmat = \frac{1}{n} (\Imat \otimes \Umat_n)^* \boldsymbol{\Lambda} (\Imat \otimes \Umat_n) \enspace,
\end{equation}
where $\boldsymbol{\Lambda} = \diag\left((\Umat_n \mathsf{\psi}^{(0)}, \dots, \Umat_n \mathsf{\psi}^{(n-1)})\right)$ and the vectors $\mathsf{\psi}^{(0)}, \dots, \mathsf{\psi}^{(n-1)}$ are the characteristic vectors of the circulant matrices $\mathsf{\Psi}^{(0)}, \dots, \mathsf{\Psi}^{(n-1)}$ respectively.
By combining \Cref{equation:ch2-block_diagonalization_block_circulant} and \Cref{equation:ch2-diagonalization_circulant_with_bdiag}, we obtain the eigenvalues decomposition of a doubly-block circulant matrix.
Given a doubly-block circulant matrix $\Amat$, we have:
\begin{equation} \label{equation:ch2-diagonalization_doubly_block_circulant_matrix}
  \Amat = \frac{1}{n^2} (\Umat_n \otimes \Umat_n)^* \boldsymbol{\Lambda} (\Umat_n \otimes \Umat_n)  \enspace.
\end{equation}
This decomposition makes it possible to express the eigenvalues of a doubly-block circulant matrix with the characteristic vectors of the circulant matrices composing it.
Furthermore, one can note that the eigenvectors are independent of the values of the matrix and can be expressed with the Fourier matrix.

Akin to circulant and block circulant matrices, we can extend the block structure to Toeplitz matrices.
An $nm \times nm$ block Toeplitz matrix $\Bmat$ is fully determined by a two-sided sequence of blocks $\{\Bmatsf^{(j)}\}_{h \in \Iset_n}$ and where each block $\Bmatsf^{(h)}$ is an $m \times m$ matrix.
The block Toeplitz matrix $\Bmat = \leftmat\Bmatsf^{(k-j)} \rightmat_{j,k \in \Iset^+_n}$ is given by
\begin{equation} \label{equation:ch2-block_toeplitz_matrices}
  \Bmat = 
  \leftmatrix
    \Bmatsf^{(0)}   & \Bmatsf^{(-1)} & \Bmatsf^{(-2)} & \cdots         & \cdots         & \Bmatsf^{(-n+1)} \\
    \Bmatsf^{(1)}   & \Bmatsf^{(0)}  & \Bmatsf^{(-1)} & \ddots         &                & \vdots           \\
    \Bmatsf^{(2)}   & \Bmatsf^{(1)}  & \ddots         & \ddots         & \ddots         & \vdots           \\ 
    \vdots          & \ddots         & \ddots         & \ddots         & \Bmatsf^{(-1)} & \Bmatsf^{(-2)}   \\
    \vdots          &                & \ddots         & \Bmatsf^{(1)} & \Bmatsf^{(0)}   & \Bmatsf^{(-1)}   \\
    \Bmatsf^{(n-1)} & \cdots         & \cdots         & \Bmatsf^{(2)} & \Bmatsf^{(1)}   & \Bmatsf^{(0)}
  \rightmatrix \enspace.
\end{equation}

\noindent
Block Toeplitz and doubly-block Toeplitz matrices (block Toeplitz matrix where the blocks are also Toeplitz) do not have a block diagonalization nor a closed-form expression for their eigenvalues.
However, the Toeplitz operator defined in~\Cref{equation:ch2-toeplitz_operator} can be extended to block Toeplitz and doubly-block Toeplitz matrices. 
For block Toeplitz matrices, the trigonometric polynomial that \emph{generates} the block Toeplitz matrix $\Bmat$ can be defined as follows:
\begin{equation}
  F_{\Bmat}(\omega) \triangleq \sum_{h \in \Iset_n} \Bmatsf^{(h)} e^{\ci h \omega} \enspace.
\end{equation}
The function $F_{\Bmat}$ is said to be the \emph{generating function} of the block matrix $\Bmat$.
To recover the block Toeplitz matrix from its generating function, we use the Toeplitz operator defined in \Cref{equation:ch2-toeplitz_operator}; therefore by construction, we have $\Tmat_n(F_\Bmat) = \Bmat$.

% We can extend the logic of Toeplitz and circulant matrices to block Toeplitz and block circulant matrices.
% A block Toeplitz matrix is a matrix in which each block is repeated identically along diagonals.
% Equivalently, a block circulant matrix is a matrix in which each block is repeated identically along diagonals (like block Toeplitz matrices) and each ``row of blocks'' is a cyclic right shift of the previous one.
% We show that block Toeplitz and block circulant matrices have similar properties than Toeplitz and circulant ones.

% \vspace{1.50cm}
% \pagebreak

%%%%%%%%%%%%%%%%%%%%%%%%%%%%%%%%%%%%%%%%%%%%%%%%%%%%%%%%%%%%%%%%%%%%%%%%%%%%%%%
\subsubsection{Relation with the Convolution Operator}
\label{subsubsection:ch2-relation_with_the_convolution_operator}
%%%%%%%%%%%%%%%%%%%%%%%%%%%%%%%%%%%%%%%%%%%%%%%%%%%%%%%%%%%%%%%%%%%%%%%%%%%%%%%

\begin{figure}[ht]
  \centering
  \includegraphics[width=0.23\textwidth]{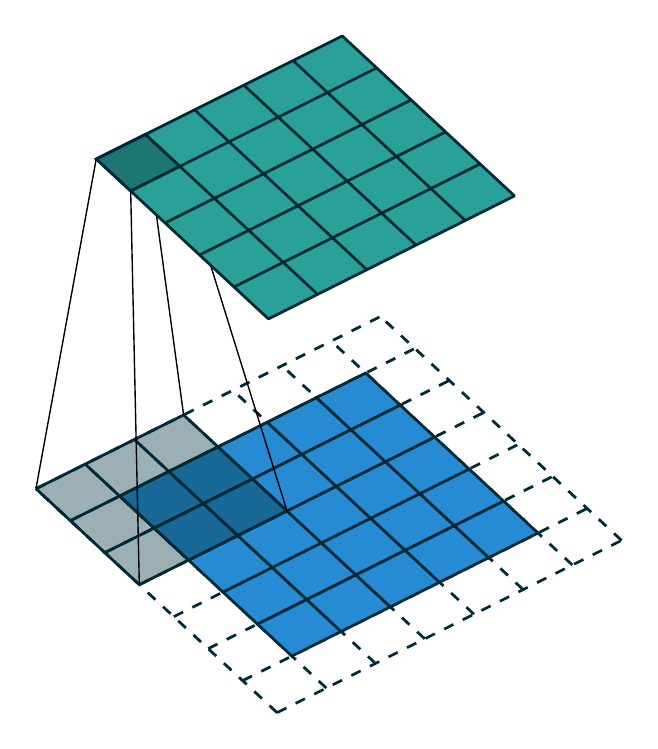}
  \hfill
  \includegraphics[width=0.23\textwidth]{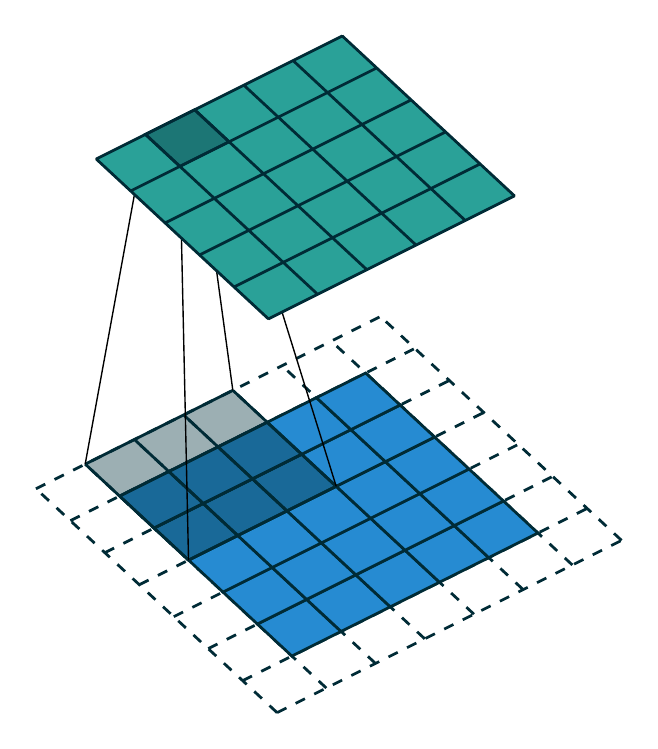}
  \hfill
  \includegraphics[width=0.23\textwidth]{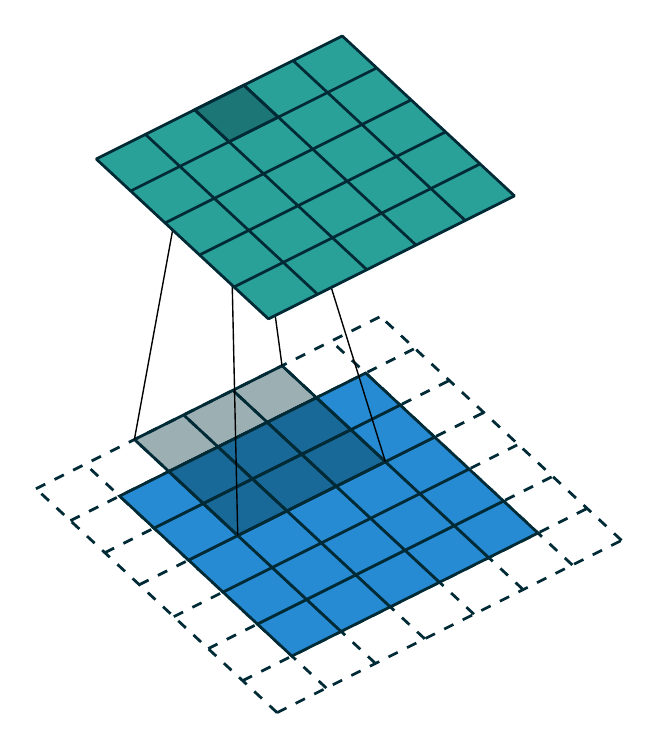}
  \hfill
  \includegraphics[width=0.23\textwidth]{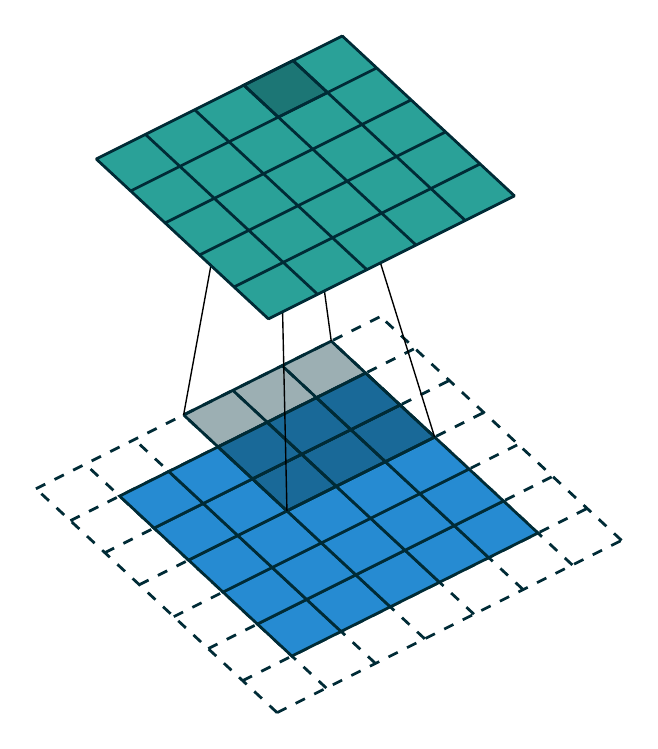}
  \caption{A convolution: a kernel sliding over an image and acting as a filter. \\Illustration taken from~\citet{dumoulin2016guide}.}
  \label{figure:illustration_convolution}
\end{figure}

% the convolution operation is a linear transform `performed' by a doubly-block Toeplitz matrix (in the 2d case), \ie, a block Toeplitz matrix where the blocks are also Toeplitz.
%
% Considering convolutions as a simple linear transform with a structured matrix has allowed multiple results to be derived~\citet{appuswamy2016structured,wang2020orthogonal,sedghi2018singular,singla2019bounding}, including one of our main contributions presented in \Cref{chapter:ch5-lipschitz_bound}.
%
% % This observation has been made by~\citet{jain1989fundamentals} and many works have been derived from it~\citet{appuswamy2016structured,wang2020orthogonal,sedghi2018singular,singla2019bounding}, including one of our main contributions presented in \Cref{chapter:ch5-lipschitz_bound}.
%
% A discrete convolution operation with a 2-dimensional kernel applied on a 2-dimensional signal is equivalent to a matrix multiplication with a doubly-block Toeplitz matrix.
% Let $\Kmat$ be a 2-dimensional kernel defined as follows:

Doubly-block circulant and doubly-block Toeplitz matrices are interesting structure due to their relation to 2-dimensional convolutions.
We recall that a discrete convolution can be seen as a kernel sliding over the image and acting as a filter.
\Cref{figure:illustration_convolution} illustrates the convolution operation with the image (blue), the kernel (gray) and the resulting operation (green).
It has been shown by~\citet{jain1989fundamentals} that the result of a discrete 2d-convolution can be obtained by applying a doubly-block Toeplitz matrix to the input signal.
A doubly-block Toeplitz matrix is a block Toeplitz matrix where the blocks are also Toeplitz.
To illustrate, we consider a discrete convolution between a 2-dimensional signal $\xvec$ and a kernel $\Kmat$ where the kernel is defined as follows:
\begin{equation*}
  \Kmat = \leftmatrix
    k_0 & k_1 & k_2 \\
    k_3 & k_4 & k_5 \\
    k_6 & k_7 & k_8 
  \rightmatrix
\end{equation*}
then, the doubly-block Toeplitz matrix $\Mmat$ that performs the convolution can be represented as:
\begin{equation*}
  \Mmat = \leftmatrix
    \Tmatsf^{(0)} & \Tmatsf^{(1)} &             &  0          \\
    \Tmatsf^{(2)} & \Tmatsf^{(0)} & \ddots      &             \\
                  & \ddots        & \Tmatsf^{(0)} & \Tmatsf^{(1)} \\
    0             &               & \Tmatsf^{(2)} & \Tmatsf^{(0)}
  \rightmatrix \enspace.
\end{equation*}
where $\Tmatsf^{(j)}$ are Toeplitz matrices and the values of the kernel $\Kmat$ are distributed in the Toeplitz blocks as follows:

\begin{equation*}
  \Tmatsf^{(0)} = \leftmatrix
    k_4 & k_3 &         &         & 0       \\
    k_5 & k_4 & k_3 &         &         \\
            & k_5 & \ddots  & \ddots  &         \\
            &         & \ddots  & k_4 & k_3 \\
    0       &         &         & k_5 & k_4
  \rightmatrix \quad \quad
  \hfill
  \Tmatsf^{(1)} = \leftmatrix
    k_7 & k_6 &         &         & 0       \\
    k_8 & k_7 & k_6 &         &         \\
            & k_8 & \ddots  & \ddots  &         \\
            &         & \ddots  & k_7 & k_6 \\
    0       &         &         & k_8 & k_7 \\
    \rightmatrix \vspace{5mm}
\end{equation*}

\begin{equation*}
  \Tmatsf^{(2)} = \leftmatrix
    k_1 & k_0 &         &         & 0       \\
    k_2 & k_1 & k_0 &         &         \\
            & k_2 & \ddots  & \ddots  &         \\
            &         & \ddots  & k_1 & k_0 \\
    0       &         &         & k_2 & k_1 \\
  \rightmatrix \vspace{5mm}
\end{equation*}

In practice, the signal often can have multiple channels (\eg, images have 3 channels corresponding to the colors red, green and blue).
Let us denote $\cin$ and $\cout$ the number of the input and output channels respectively.
Then the input signal has a dimension of $\cin \times n \times n$, performed by a kernel of size $\cout \times \cin \times k \times k$, and outputs a signal of size $\cout \times m \times m$ with $m = n - k + 2p + 1$ where $p$ corresponds to the padding.
The matrix for the multi-channel convolution is the concatenation of $\cout \cdot \cin$ doubly-block Toeplitz matrices.

%%%%%%%%%%%%%%%%%%%%%%%%%%%%%%%%%%%%%%%%%%%%%%%%%%%%%%%%%%%%%%%%%%%%%%%%%%%%%%%
\subsection{LDR: General Framework for Structured Matrices}
\label{subsection:ch2-general_frameworks_for_structured_matrices}
%%%%%%%%%%%%%%%%%%%%%%%%%%%%%%%%%%%%%%%%%%%%%%%%%%%%%%%%%%%%%%%%%%%%%%%%%%%%%%%

In this thesis, we mainly focus on structured transforms from the Toeplitz family.
The properties of matrices from the Toeplitz family presented in the previous section make them good candidates for applications in the context of signal processing and deep neural network.
However, other structured matrices with other properties have been considered.
In this section, we briefly present these families of structured matrices and introduce LDR, a more general framework to capture all structured matrices.
% Other structured matrices can benefit from reduced memory footprint and fast matrix-vector product.
Below a description of some known structured matrices: 
\begin{itemize}
  \item \textbf{Hankel matrix}: A Hankel matrix has constant values along each of its anti-diagonals.
  \item \textbf{Vandermonde matrix}: A Vandermonde matrix is a matrix where each term follows a geometric progression.
    A very important special case is the complex matrix associated with the Discrete Fourier transform (DFT) presented in \Cref{definition:ch2-fourier_matrix} which has a Vandermonde structure.
  \item \textbf{Cauchy matrix}: A Cauchy matrix is an $m \times n$ matrix with elements $a_{ij}$ such that $a_{ij} = (\uvec_i - \vvec_j)^{-1}$ with $\uvec_i - \vvec_j \neq 0$, $i \in \{0,\dots,m-1\}$ and $j \in \{0,\dots,n-1\}$.
\end{itemize}
\Cref{figure:ch2-example_structure_matrices} shows the representation of the parameters sharing of Hankel, Vandermonde and Cauchy matrices.
The study of these matrices with those from the Toeplitz family can be unified thought to the concept of \emph{Low Displacement Rank} (LDR) initially proposed by~\citet{kailath1979displacement}.
Although these matrices appear to have very different kinds of structure, they can be all associated with a specific displacement operator $\triangleopdown_{\Amat, \Bmat}: \Rbb^{m \times n} \rightarrow \Rbb^{m \times n}$ which takes a matrix, $\Mmat$, and outputs a low rank matrix $\triangleopdown_{\Amat, \Bmat}(\Mmat)$ such that $\rank(\triangleopdown_{\Amat, \Bmat}(\Mmat)) \ll \min(m,n)$.

\begin{figure}[t]
   \centering
   \begin{subfigure}[b]{0.32\textwidth}
       \centering
       \begin{equation*}
	  \leftmatrix
	     \hvec_{n-1} & \cdots     & \hvec_1 & \hvec_0      \\
	     \vdots      & \ddots     & \hvec_0 & \hvec_{-1}   \\
	     \hvec_1     & \ddots     & \ddots  & \vdots       \\
	     \hvec_0     & \hvec_{-1} & \cdots  & \hvec_{-n+1} \\
	  \rightmatrix
       \end{equation*}
       \caption*{Hankel}
   \end{subfigure}
   \hfill
   \begin{subfigure}[b]{0.32\textwidth}
       \centering
       \begin{equation*}
	  \leftmatrix
	    1 & \vvec_0     & \cdots & \vvec_0^{n-1} \\
	    1 & \vvec_1     & \cdots & \vvec_1^{n-1} \\
	    1 & \vdots      &        & \vdots        \\
	    1 & \vvec_{n-1} & \cdots & \vvec_{n-1}^{n-1}
	  \rightmatrix
       \end{equation*}
       \caption*{Vandermonde}
   \end{subfigure}
   \hfill
   \begin{subfigure}[b]{0.32\textwidth}
       \centering
       \begin{equation*}
	  \leftmatrix
	  \frac{1}{\uvec_0 - \vvec_{0}}     & \cdots & \frac{1}{\uvec_0 - \vvec_{n-1}} \\
	  \frac{1}{\uvec_1 - \vvec_{0}}     & \cdots & \frac{1}{\uvec_1 - \vvec_{n-1}} \\
	  \vdots                            & \cdots & \vdots                          \\
	  \frac{1}{\uvec_{n-1} - \vvec_{0}} & \cdots & \frac{1}{\uvec_{n-1} - \vvec_{n-1}}
	  \rightmatrix
       \end{equation*}
       \caption*{Cauchy}
   \end{subfigure}
   \caption{Representation of Hankel, Vandermonde and Cauchy matrices}
  \label{figure:ch2-example_structure_matrices}
\end{figure}

More formally, two displacement operators can be defined as follows (for simplification, we consider $m = n$):
\begin{definition}[\emph{Sylvester} \& \emph{Stein} displacement operators]
  Let $\Amat, \Bmat \in \Rbb^{n \times n}$, the \emph{Sylvester} displacement operator $\triangleopdown_{\Amat, \Bmat}: \Rbb^{n \times n} \rightarrow \Rbb^{n \times n}$ is defined as follows:
  \begin{equation}
    \triangleopdown_{\Amat, \Bmat} (\Mmat) \triangleq \Amat \Mmat - \Mmat \Bmat
  \end{equation}
  The \emph{Stein} displacement operator $\triangleopup_{\Amat, \Bmat}: \Rbb^{n \times n} \rightarrow \Rbb^{n \times n}$ is defined as follows:
  \begin{equation}
    \triangleopup_{\Amat, \Bmat} (\Mmat) \triangleq \Mmat - \Amat \Mmat \Bmat
  \end{equation}
  where $\triangleopdown_{\Amat, \Bmat} (\Mmat) = \Amat \triangleopup_{\Amat^{-1}, \Bmat}(\Mmat)$ if the operator matrix $\Amat$ is non-singular, and $\triangleopdown_{\Amat, \Bmat}(\Mmat) = -\triangleopup_{\Amat, \Bmat^{-1}}(\Mmat) \Bmat$ if the operator matrix $\Bmat$ is non-singular.
\end{definition}

\begin{table}[t]
  \centering
  \begin{tabular}{c|c|c|c}
    \toprule
    \multicolumn{2}{c|}{\textbf{Operator Matrices}} & \textbf{Class of structured} & \textbf{Rank of } \\
    \textbf{A} & \textbf{B} & \textbf{matrices M} & $\triangleopdown_{\Amat, \Bmat}(\Mmat)$ \\
    \midrule
    $\Zmat_1$                & $\Zmat_{-1}$             & Toeplitz               & $\leq 2$ \\
    $\Zmat_1$                & $\Zmat_0^\top$           & Hankel                 & $\leq 2$ \\
    $\Zmat_0 + \Zmat_0^\top$ & $\Zmat_0 + \Zmat_0^\top$ & Toeplitz + Hankel      & $\leq 4$ \\
    $\diag(\vvec)$           & $\Zmat_0$                & Vandermonde            & $\leq 1$ \\
    $\Zmat_0$                & $\diag(\vvec)$           & Inverse of Vandermonde & $\leq 1$ \\
    $\diag(\uvec)$           & $\diag(\vvec)$           & Cauchy                 & $\leq 1$ \\
    $\diag(\vvec)$           & $\diag(\uvec)$           & Inverse of Cauchy      & $\leq 1$ \\
    \bottomrule
  \end{tabular}
  \caption{Displacing Matrices Associated with Families of Structured Matrices.}
  \label{table:ch2-displacing_matrices}
\end{table}

\noindent
Based on this definition, if $\Mmat$ is a structured matrix, there exist operator matrices $\Amat$ and $\Bmat$ such that $\triangleopdown_{\Amat, \Bmat} (\Mmat)$ is low rank.
In particular, $\Amat$ and $\Bmat$ can be chosen to be diagonal or $f$-unit-circulant matrices (see \Cref{definition:ch2-f_circulant_matrix}) for several classes of structured matrices.
\Cref{table:ch2-displacing_matrices} shows some specific choices of operators for the four classes of structured matrices presented above as well as other types of structured matrices from the same family.
We now define the matrices that can be considered structured with respect to the Sylvester or Stein operator.
\begin{definition}[L-like matrices \citet{pan2001structured}]
  For an $n \times n$ matrix $\Mmat$ and an associated operator $\triangleopdown_{\Amat, \Bmat}$ (or $\triangleopup_{\Amat, \Bmat}$), the value $r = \rank(\triangleopdown_{\Amat, \Bmat}(\Mmat))$ (or $r = \rank(\triangleopup_{\Amat, \Bmat}(\Mmat))$) is called the \emph{displacement rank}.
    If the value of $r$ is small relative to $n$ as $n$ grows large, then we call the matrix $\Mmat$ \emph{L-like} having a structure of type $L$.
    For example, in the case where the operator $\triangleopdown_{\Amat, \Bmat}$ is associated with Toeplitz matrices (\ie, $\Amat = \Zmat_1$ and $\Bmat = \Zmat_{-1}$, see~\Cref{table:ch2-displacing_matrices}), we call the matrix $\Mmat$, \emph{Toeplitz-Like}.
  \label{definition:ch2-l_like_matrices}
\end{definition}

An important result allows us to express structured matrices with low-displacement rank directly as a function of their low displacement generators.
This result can then be used to decompose structured matrices and define efficient algorithms for matrix-vector products.

\begin{theorem}[Krylov Decomposition~\citet{pan2003inversion,sindhwani2015structured}] \label{theorem:ch2-krylov_decomposition}
  If an $n \times n$ matrix $\Mmat$ is such that $\triangleopup_{\Amat, \Bmat}(\Mmat) = \Gmat \Hmat^\top$ where 
  $\Gmat = (\gvec^{(1)} \ldots \gvec^{(r)}), \Hmat = (\hvec^{(1)} \ldots \hvec^{(r)}) \in \Rbb^{n \times r}$ 
  and the operator matrices satisfy: $\Amat^n = a \Imat$, $\Bmat^n = b \Imat$ for some scalars $a, b$, then $\Mmat$ can be expressed as: 
  \begin{equation} \label{equation:ch2-krylov_decomposition}
    \Mmat = \frac{1}{1 - ab} \sum_{j=1}^{r} ~\krylov(\Amat, \gvec^{(j)}) ~\krylov(\Bmat^\top, \hvec^{(j)})^\top
  \end{equation}
  where $\krylov(\Amat, \vvec)$ is defined by:
  \begin{equation}
    \krylov(\Amat, \vvec) = [\vvec~~\Amat\vvec~~\Amat^2 \vvec~~\ldots~~\Amat^{n-1} \vvec]
  \end{equation}
  \removespace
\end{theorem}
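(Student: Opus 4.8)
The plan is to unroll the Stein displacement equation into a finite telescoping series and then recognize the resulting expression as a sum of Krylov products. First I would rewrite the hypothesis $\triangleopup_{\Amat,\Bmat}(\Mmat) = \Gmat\Hmat^\top$ in fixed-point form,
\begin{equation*}
  \Mmat = \Amat \Mmat \Bmat + \Gmat \Hmat^\top \enspace,
\end{equation*}
and substitute this expression for $\Mmat$ back into its own right-hand side repeatedly. A straightforward induction on $k$ then gives
\begin{equation*}
  \Mmat = \Amat^k \Mmat \Bmat^k + \sum_{i=0}^{k-1} \Amat^i \Gmat \Hmat^\top \Bmat^i \enspace,
\end{equation*}
the base case $k=1$ being the fixed-point identity itself and the inductive step amounting to expanding the leading term $\Amat^k \Mmat \Bmat^k$ once more.

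Next I would set $k = n$ and invoke the standing hypotheses $\Amat^n = a\Imat$ and $\Bmat^n = b\Imat$, so that $\Amat^n \Mmat \Bmat^n = ab\,\Mmat$. The identity above becomes $(1 - ab)\Mmat = \sum_{i=0}^{n-1} \Amat^i \Gmat \Hmat^\top \Bmat^i$, and dividing by $1 - ab$ (assumed nonzero) yields
\begin{equation*}
  \Mmat = \frac{1}{1 - ab} \sum_{i=0}^{n-1} \Amat^i \Gmat \Hmat^\top \Bmat^i \enspace.
\end{equation*}
Finally I would expand $\Gmat\Hmat^\top = \sum_{j=1}^{r} \gvec^{(j)} (\hvec^{(j)})^\top$, interchange the two finite sums, and use the elementary fact that for matrices $\Pmat,\Qmat$ with columns $\pvec^{(i)},\qvec^{(i)}$ one has $\Pmat\Qmat^\top = \sum_i \pvec^{(i)} (\qvec^{(i)})^\top$. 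Since the columns of $\krylov(\Amat,\gvec^{(j)})$ are precisely $\Amat^i \gvec^{(j)}$ and those of $\krylov(\Bmat^\top,\hvec^{(j)})$ are $(\Bmat^\top)^i \hvec^{(j)}$ for $i=0,\dots,n-1$, and since $(\hvec^{(j)})^\top\Bmat^i = ((\Bmat^\top)^i \hvec^{(j)})^\top$, the double sum $\sum_{j}\sum_{i} (\Amat^i\gvec^{(j)})\big((\Bmat^\top)^i\hvec^{(j)}\big)^\top$ collapses exactly to $\sum_{j=1}^r \krylov(\Amat,\gvec^{(j)})\,\krylov(\Bmat^\top,\hvec^{(j)})^\top$, which is the claimed formula.

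The delicate point — the step I would be most careful about — is the final identification: matching each term $\Amat^i \gvec^{(j)} (\hvec^{(j)})^\top \Bmat^i$ with the appropriate column-times-row contribution to the Krylov product, and checking that summing $i$ from $0$ to $n-1$ reconstructs the full matrix product rather than a truncated one (this is where the hypotheses $\Amat^n = a\Imat$, $\Bmat^n = b\Imat$ do their essential work, since they are exactly what makes $n$ terms suffice). The rest is routine bookkeeping, modulo the implicit nondegeneracy assumption $ab \neq 1$.
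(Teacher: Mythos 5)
Your proof is correct. The paper cites this theorem from \citet{pan2003inversion,sindhwani2015structured} without reproducing an argument, so there is no in-paper proof to compare against; your telescoping derivation — unrolling the Stein fixed-point identity $\Mmat = \Amat\Mmat\Bmat + \Gmat\Hmat^\top$ to depth $n$, invoking $\Amat^n = a\Imat$ and $\Bmat^n = b\Imat$ to close the recursion, dividing by $1-ab$, and regrouping $\sum_{i=0}^{n-1}\Amat^i\gvec^{(j)}(\hvec^{(j)})^\top\Bmat^i$ as the Krylov outer product $\krylov(\Amat,\gvec^{(j)})\,\krylov(\Bmat^\top,\hvec^{(j)})^\top$ — is exactly the standard argument from those references, and you correctly flag the implicit hypothesis $ab \neq 1$ that the statement leaves unstated.
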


\noindent
In the case of \emph{Toeplitz-like matrices}, the above theorem can be simplified as follows:
\begin{theorem}[Toeplitz-like matrix decomposition \citet{pan2001structured}] \label{theorem:ch2-toeplitz_like}
  If an $n \times n$ matrix $\Mmat$ satisfies $\triangleopdown_{\Zmat_1, \Zmat_{-1}}(\Mmat) = \Gmat \Hmat^\top$ $(\Mmat \text{ is Toeplitz-like})$ where $\Gmat = (\gvec ^{(1)} \ldots \gvec^{(r)}), \Hmat = (\hvec^{(1)} \ldots \hvec^{(r)}) \in \Rbb^{n \times r}$, then $\Mmat$ can be written as: 
  \begin{equation} \label{equation:ch2-toeplitz_like_matrix_decomposition}
    \Mmat = \frac{1}{2} \sum_{j=1}^{r} \Zmat_1(\gvec^{(j)}) \Zmat_{-1}(\Jmat_n \hvec^{(j)})
  \end{equation}
  where $\Jmat_n$ is the reflection of the $n \times n$ identity matrix and $\Zmat_1$ and $\Zmat_{-1}$ are $f$-unit-circulant matrices (see \Cref{definition:ch2-f_circulant_matrix}).
\end{theorem}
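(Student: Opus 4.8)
The plan is to derive this statement directly from the relation $\triangleopdown_{\Zmat_1,\Zmat_{-1}}(\Mmat) = \Zmat_1\Mmat - \Mmat\Zmat_{-1} = \Gmat\Hmat^\top$ by a telescoping argument; this is exactly the specialization of the Krylov decomposition (\Cref{theorem:ch2-krylov_decomposition}) to the operator pair $(\Zmat_1,\Zmat_{-1})$, which by \Cref{table:ch2-displacing_matrices} is the one attached to Toeplitz structure. The first ingredient is the elementary computation, performed on the $f$-unit-circulant of \Cref{definition:ch2-f_circulant_matrix}, that $\Zmat_1^n = \Imat_n$ and $\Zmat_{-1}^n = -\Imat_n$: each multiplication by $\Zmat_f$ cyclically shifts the coordinates by one and scales the wrapped coordinate by $f$, so after $n$ shifts one recovers $f^n \Imat_n$. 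Throughout, write $\gvec^{(1)},\dots,\gvec^{(r)}$ and $\hvec^{(1)},\dots,\hvec^{(r)} \in \Rbb^{n}$ for the columns of $\Gmat$ and $\Hmat$.

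Next I would establish, by induction on $k$, the telescoping identity $\Zmat_1^{k}\Mmat - \Mmat\Zmat_{-1}^{k} = \sum_{i=0}^{k-1} \Zmat_1^{i}\,(\Zmat_1\Mmat - \Mmat\Zmat_{-1})\,\Zmat_{-1}^{k-1-i}$, whose inductive step is just $\Zmat_1^{k+1}\Mmat - \Mmat\Zmat_{-1}^{k+1} = \Zmat_1(\Zmat_1^{k}\Mmat - \Mmat\Zmat_{-1}^{k}) + (\Zmat_1\Mmat - \Mmat\Zmat_{-1})\Zmat_{-1}^{k}$. Evaluating at $k = n$ and using $\Zmat_1^{n} = \Imat_n$, $\Zmat_{-1}^{n} = -\Imat_n$ turns the left-hand side into $\Mmat - \Mmat(-\Imat_n) = 2\Mmat$, hence $2\Mmat = \sum_{i=0}^{n-1}\Zmat_1^{i}\,\Gmat\Hmat^\top\,\Zmat_{-1}^{n-1-i}$. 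Expanding $\Gmat\Hmat^\top = \sum_{j=1}^{r}\gvec^{(j)}(\hvec^{(j)})^\top$ and moving the matrix factors into the vectors yields $2\Mmat = \sum_{j=1}^{r}\sum_{i=0}^{n-1}(\Zmat_1^{i}\gvec^{(j)})\bigl((\Zmat_{-1}^\top)^{n-1-i}\hvec^{(j)}\bigr)^\top$.

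It remains to recognise each inner sum as $\Zmat_1(\gvec^{(j)})\,\Zmat_{-1}(\Jmat_n\hvec^{(j)})$. The key observation, immediate from \Cref{definition:ch2-f_circulant_matrix}, is that the successive columns of the $f$-circulant $\Zmat_f(\vvec)$ are $\vvec,\Zmat_f\vvec,\dots,\Zmat_f^{n-1}\vvec$; in particular $\Zmat_1(\gvec^{(j)}) = \sum_{i=0}^{n-1}(\Zmat_1^{i}\gvec^{(j)})(\evec^{(i)})^\top$. Plugging this in, the claimed identity for the inner sum reduces to showing that the $i$-th row of $\Zmat_{-1}(\Jmat_n\hvec^{(j)})$ equals $(\hvec^{(j)})^\top\Zmat_{-1}^{n-1-i}$. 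One checks this by expanding both sides in the standard basis: using $\Zmat_{-1}(\wvec) = \sum_{m=0}^{n-1}\wvec_m\Zmat_{-1}^{m}$ together with the action $\Zmat_{-1}^\top\evec^{(l)} = \evec^{(l-1)}$ for $l\geq 1$ and $\Zmat_{-1}^\top\evec^{(0)} = -\evec^{(n-1)}$, both expressions become the same signed combination of basis covectors once the substitution $m \leftrightarrow n-1-m$ induced by $\Jmat_n$ is made. Summing over $j$ gives $\Mmat = \frac{1}{2}\sum_{j=1}^{r}\Zmat_1(\gvec^{(j)})\,\Zmat_{-1}(\Jmat_n\hvec^{(j)})$, which is the claim.

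The main obstacle is precisely this last bookkeeping. Because $\Zmat_{-1}^{n} = -\Imat_n$ rather than $\Imat_n$, every power of $\Zmat_{-1}$ (and of its transpose) carries a $-1$ whenever the cyclic shift wraps around, and one must verify that these signs, the order-reversal encoded by $\Jmat_n$, and the offset $n-1-i$ coming from the telescoping all conspire so that the second $f$-circulant has generator exactly $\Jmat_n\hvec^{(j)}$ and the overall normalisation is $\frac{1}{2}$ (equivalently, that $\frac{1}{1-ab} = \frac{1}{2}$ with $a = 1$, $b = -1$ in the language of \Cref{theorem:ch2-krylov_decomposition}). Everything else — the induction, the rank-one expansion, and the identification of Krylov matrices with $f$-circulants — is routine.
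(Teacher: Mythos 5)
The paper itself does not prove this theorem: it is quoted in the Background chapter as a result of \citet{pan2001structured} and \citet{sindhwani2015structured}, with no proof reproduced. Your self-contained derivation is correct. The telescoping identity, the evaluation at $k=n$ using $\Zmat_1^{n}=\Imat_n$ and $\Zmat_{-1}^{n}=-\Imat_n$, the observation that the columns of $\Zmat_f(\vvec)$ are the Krylov vectors $\vvec,\Zmat_f\vvec,\dots,\Zmat_f^{n-1}\vvec$, and the crux you correctly isolate — that the $i$-th row of $\Zmat_{-1}(\Jmat_n\hvec)$ equals $\hvec^{\top}\Zmat_{-1}^{\,n-1-i}$ — all check out. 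The last identity can be verified cleanly by writing $\bigl(\Zmat_f(\wvec)\bigr)_{i,k}=f^{\left[i<k\right]}\,\wvec_{(i-k)\bmod n}$ (Iverson bracket) and noting that, after substituting $\wvec=\Jmat_n\hvec$, the order reversal and the wrap-around sign of $\Zmat_{-1}^{\,n-1-i}$ coincide exactly in both branches $k\le i$ and $k>i$.

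One small imprecision worth flagging: you describe the result as ``exactly the specialization of \Cref{theorem:ch2-krylov_decomposition} to the pair $(\Zmat_1,\Zmat_{-1})$'' and close by saying the normalisation is ``equivalently $\tfrac{1}{1-ab}$ with $a=1$, $b=-1$''. But \Cref{theorem:ch2-krylov_decomposition} is stated for the \emph{Stein} displacement $\triangleopup_{\Amat,\Bmat}(\Mmat)=\Mmat-\Amat\Mmat\Bmat$, whereas \Cref{theorem:ch2-toeplitz_like} and your telescoping argument use the \emph{Sylvester} displacement $\triangleopdown_{\Zmat_1,\Zmat_{-1}}(\Mmat)=\Zmat_1\Mmat-\Mmat\Zmat_{-1}$. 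Your telescoping naturally produces the Sylvester coefficient $\tfrac{1}{a-b}=\tfrac12$; it is a numerical coincidence that the Stein coefficient $\tfrac{1}{1-ab}$ also evaluates to $\tfrac12$ for $a=1$, $b=-1$. Passing between the two forms in general requires the relation $\triangleopdown_{\Amat,\Bmat}=\Amat\,\triangleopup_{\Amat^{-1},\Bmat}$ (stated below the paper's definition of the displacement operators) and a change of operator matrix, so the claimed identification with the Krylov theorem as written is not literal. This does not affect the validity of your proof, which stands on its own as a direct Sylvester-form argument.
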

\noindent
In the next chapter, we will see how this general framework have been used in the context of compact neural networks.

%%%%%%%%%%%%%%%%%%%%%%%%%%%%%%%%%%%%%%%%%%%%%%%%%%%%%%%%%%%%%%%%%%%%%%%%%%%%%%%
\section{Supervised Learning and Neural Networks}
\label{section:ch2-supervised_learning_neural_networks}
%%%%%%%%%%%%%%%%%%%%%%%%%%%%%%%%%%%%%%%%%%%%%%%%%%%%%%%%%%%%%%%%%%%%%%%%%%%%%%%
%%%%%%%%%%%%%%%%%%%%%%%%%%%%%%%%%%%%%%%%%%%%%%%%%%%%%%%%%%%%%%%%%%%%%%%%%%%%%%%
\subsection{Introduction to Supervised Learning}
\label{subsection:ch2-introduction_on_supervised_learning}
%%%%%%%%%%%%%%%%%%%%%%%%%%%%%%%%%%%%%%%%%%%%%%%%%%%%%%%%%%%%%%%%%%%%%%%%%%%%%%%

Supervised learning consists in learning a function that maps an input to an output based on input-output pairs.
For example, one could learn to ``predict'' if a fruit will be tasty based on its features (\eg, size, weight, color, consistency, etc.).
These features are used as inputs to the function and the function outputs a value characterizing the taste of the fruit.

In the following, we will formalize the learning problem described above with the \emph{statistical learning framework}.
First, let us define the domain space $\Xset$ which corresponds to the set of inputs that we wish to label.
Let us denote the label space $\Yset$ and a finite sequence of pairs $\Sset = \left\{ \left(\xvec^{(1)}, y^{(1)} \right) \dots \left( \xvec^{(m)}, y^{(m)} \right) \right\}$ in $\Xset \times \Yset$.
Such pairs \ie, labeled examples, are called \emph{training examples} and the set $\Sset$ is called the \emph{training set}.
We denote $\Dset$ the \emph{joint distribution} over $\Xset \times \Yset$.
The main objective of the task at hand is to output a function $h: \Xset \rightarrow \Yset$ that maps the input $\xvec \in \Xset$ to the output $y \in \Yset$.
This function is called the \emph{hypothesis} or the \emph{classifier}.
Given the probability distribution $\Dset$, we aim to measure how \emph{likely} the hypothesis $h$ makes an error when labeled points are randomly drawn from the distribution $\Dset$.
Let us define the true error or \emph{risk} of the hypothesis $h$ that we wish to minimize:
\begin{equation} \label{equation:ch2-risk}
  R_{\Dset}(h) \triangleq \Ebb_{(\xvec, y) \sim \Dset} \left[ L\big( h(\xvec), y \big) \right] \enspace.
\end{equation}
where $L: \Yset \times \Yset \rightarrow \Rbb_{+}$ is a \emph{loss function} which measures the correctness of the hypothesis.
For example, for classification problems, we can use the 0-1 loss defined as: %$L(h(\xvec), y) = \mathds{1}_\big[ h(\xvec) \neq y \big]$.
\begin{equation}
  L(h(\xvec), y) \triangleq \mathds{1}_{\big[ h(\xvec) \neq y \big]}
\end{equation}

However, in practice, the joint probability distribution $\Dset$ is unknown; therefore, the true error is not directly available to the learner.
The learner only has access to the training data, $\Sset$, and can calculate the \emph{empirical error}, \ie, the error over the training samples.
We define the \emph{empirical risk} as follows:
\begin{equation} \label{equation:ch2-empirical_risk}
  R_{\Sset}(h) \triangleq \frac{1}{|\Sset|} \sum_{(\xvec, y) \in \Sset} L\big( h(\xvec), y \big) \enspace.
\end{equation}
The learning paradigm which consists in minimizing this value is called \emph{Empirical Risk Minimization} denoted ERM.

We use the ERM paradigm as a surrogate to find a hypothesis $h$ that minimizes the true risk $R_\Dset$.
However, all hypotheses that minimize the empirical error do not necessarily minimize the true risk.
For example, consider the following function:
\begin{equation} \label{equation:ch2-perfect_function}
  h_c(\xvec) =
  \begin{cases}
    y^{(i)} &\quad \text{if }\exists i \in [m] \text{ s.t. } \xvec^{(i)} = \xvec \\
    c &\quad \text{otherwise}
  \end{cases}
\end{equation}
Clearly, this function, for any training, $\Sset$ and a binary target, will have $R_\Sset(h_1) = R_\Sset(h_2) = 0$, whereas one the two functions will have a true risk $\geq \frac{1}{2}$ (under the reasonable assumption that $\Sset$ is a negligible set with respect to $\Dset$).
The phenomenon, called \emph{overfitting}, happens when the classifier fits the training data ``too well'' but will likely have a high error on unseen data.
One possible solution to this phenomenon is to apply ERM with a restricted search space to prevent the learning algorithm to output a function such as $h_c$ in \Cref{equation:ch2-perfect_function}.
We call this set the \emph{hypothesis class} and is denoted $\Hset$.
Each $h \in \Hset$ is a function mapping from $\Xset$ to $\Yset$.
We call $\mathrm{ERM}_{\Hset}$, the set of learned hypotheses that uses the $\mathrm{ERM}$ paradigm over the hypothesis class $\Hset$ and a training data $\Sset$.
Formally,
\begin{equation}
  \mathrm{ERM}_{\Hset}(\Sset) = \argmin_{h \in \Hset} R_{\Sset}(h) \enspace.
\end{equation}

\begin{figure}[t]
  \centering
  \begin{subfigure}[b]{0.32\textwidth}
    \includegraphics[width=0.98\textwidth]{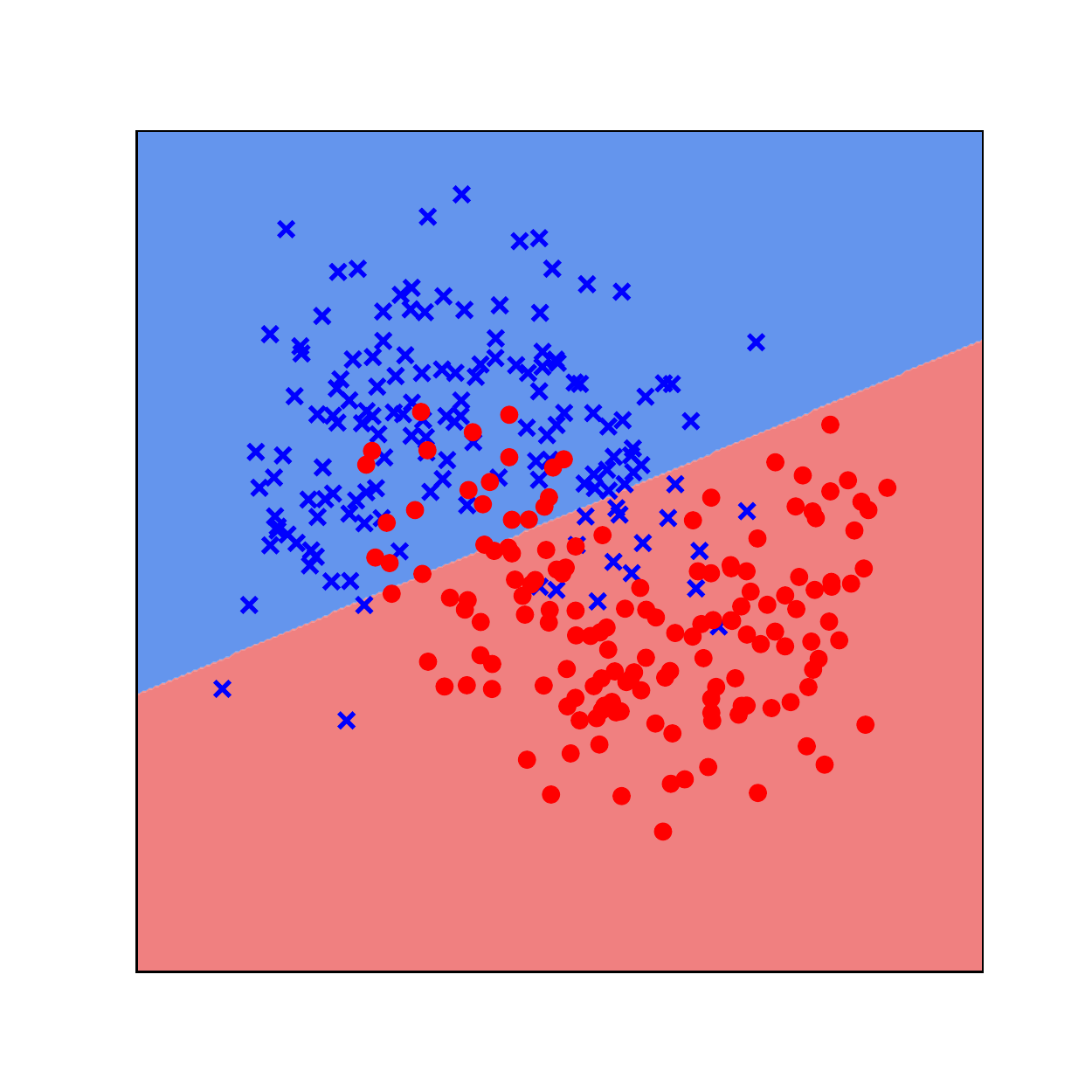}
    \caption{Underfitting}
    \label{figure:ch2-fitting_points_a}
  \end{subfigure}
  \hfill
  \begin{subfigure}[b]{0.32\textwidth}
    \includegraphics[width=0.98\textwidth]{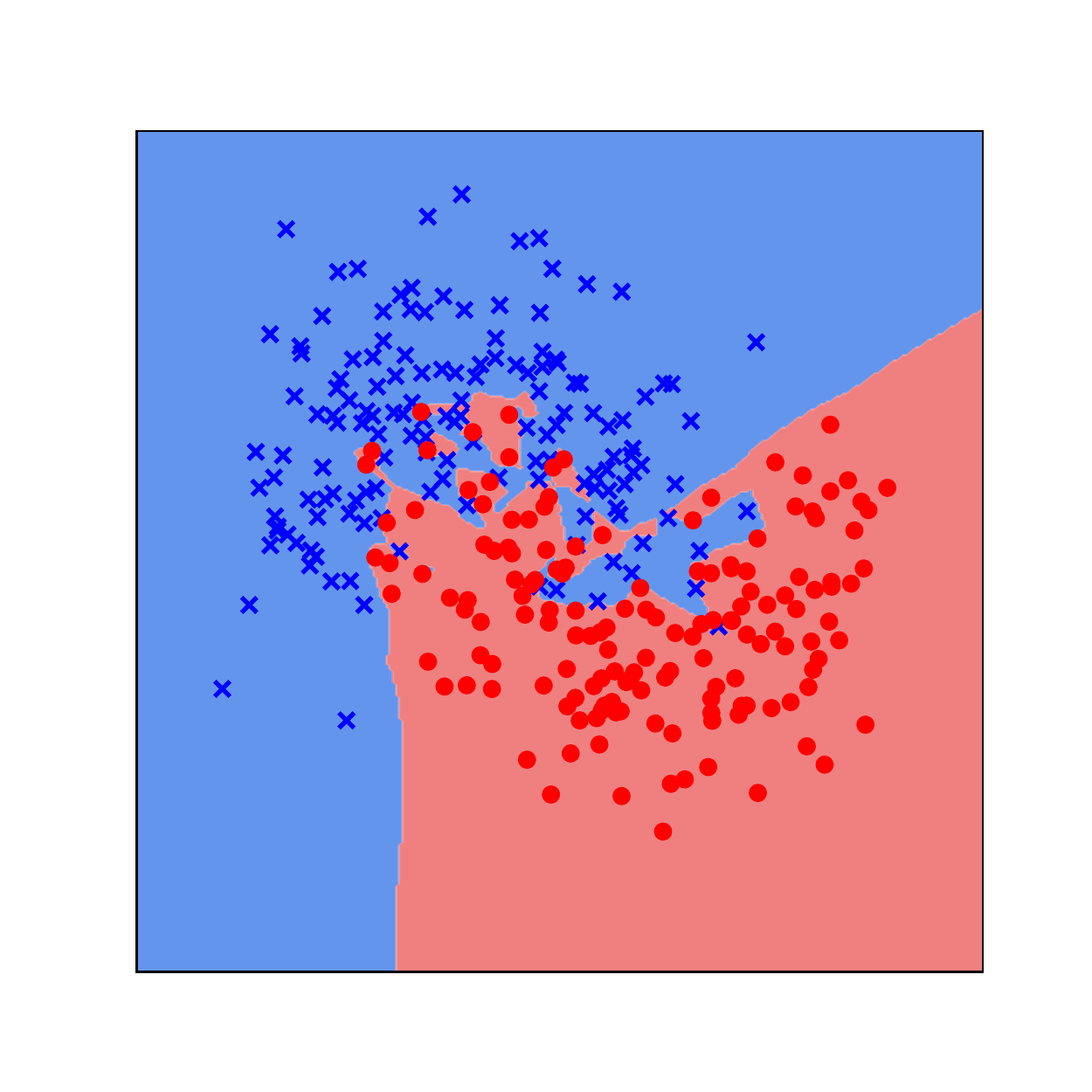}
    \caption{Overfitting}
    \label{figure:ch2-fitting_points_b}
  \end{subfigure}
  \hfill
  \begin{subfigure}[b]{0.32\textwidth}
    \includegraphics[width=0.98\textwidth]{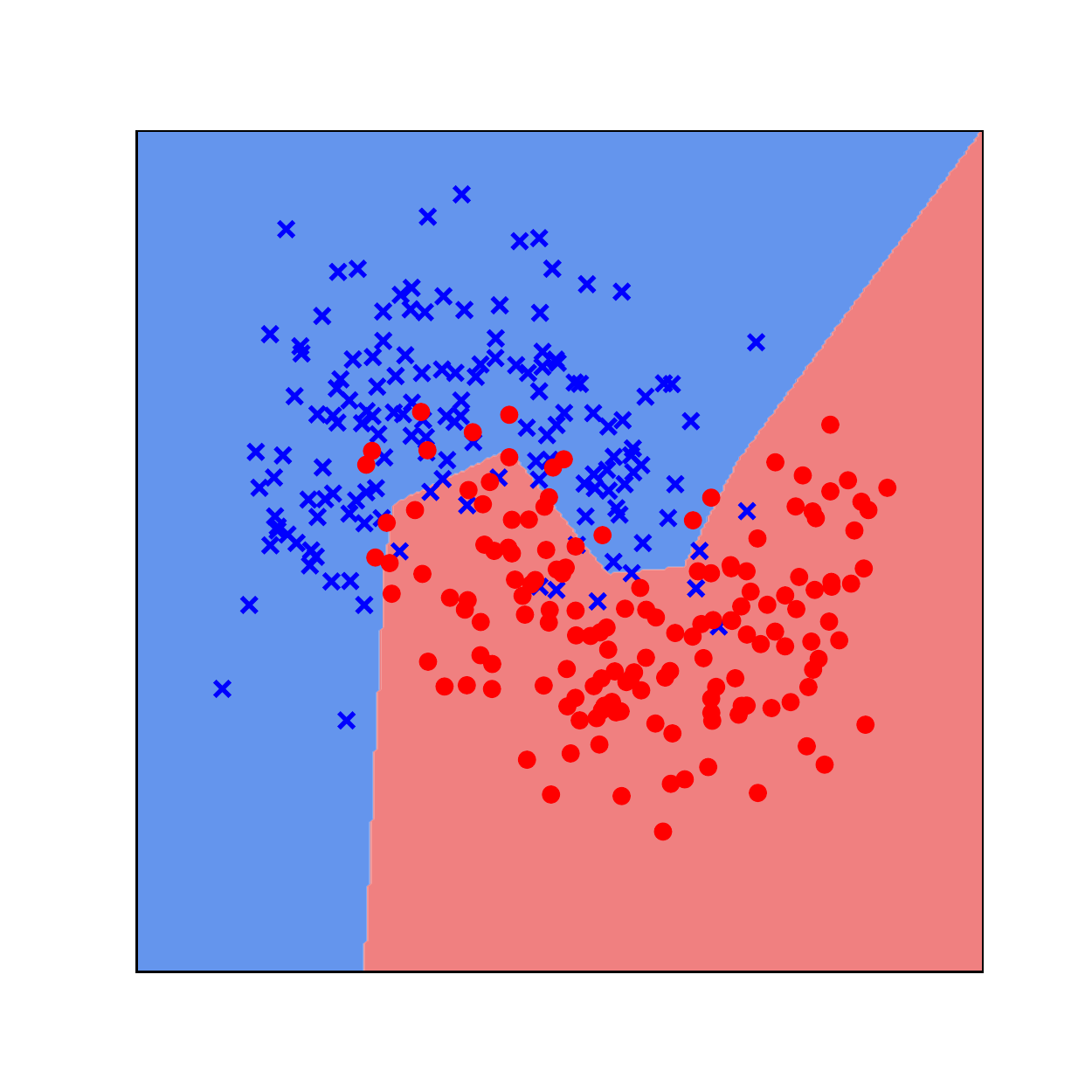}
    \caption{Good fit}
    \label{figure:ch2-fitting_points_c}
  \end{subfigure}
  \caption{
    Decision boundary of three classifiers with different complexity for the same set of samples.
  }
  \label{figure:ch2-fitting_points}
\end{figure}

For a training sample $\Sset$, we denote $h_\Sset \in \mathrm{ERM}_{\Hset}(\Sset)$, one solution of applying $\mathrm{ERM}_\Hset$ on the set $\Sset$, if there exist multiple hypotheses with minimal error on the training sample, then the minimization problem returns an arbitrary one.
% In practice, the hypothesis class is chosen based on a hypothesis on the relation between the data and its label.
In practice, the hypothesis class is chosen on the basis of an assumption about the relationship between the data and its label.
For example, if the relation between the data and its label is supposedly linear, then the hypothesis class can be the set of all linear functions.
This kind of restriction is called the \emph{inductive bias} because the learner is \emph{biased} towards a particular set of predictors.

The ERM paradigm assumes that a hypothesis $h_\Sset$ that minimizes the risk $R_\Sset$ will also minimize the true risk $R_\Dset$.
% This assumption mean that as the \emph{size} of the training set increases the minimization becomes more ``accurate'' or equivalently the empirical risk $R_\Sset(h)$ \emph{converge uniformly} to the actual risk $R_\Dset(h)$ over $h \in \Hset$.
% where the \emph{uniform convergence} is defined as follows:
% \begin{equation}
% 	\Pbb \left[ \sup_{h \in \Hset} \left| R_\Dset(h) - R_\Sset(h) \right| < \epsilon \right] \rightarrow 0 \quad \text{ when } \quad |\Sset| \rightarrow \infty, \quad \forall \epsilon > 0
% \end{equation}
To verify that this assumption is correct, we need to ensure that all hypotheses in the hypothesis class $\Hset$ are good approximators of their true risk.
We say that a hypothesis class has the \emph{uniform convergence property} if there exists a function $m_\Hset:(0, 1)^2 \rightarrow \Nbb$ such that for every $\epsilon, \delta \in (0, 1)$, if $\Sset$ is a sample of size $m \geq m_\Hset(\epsilon, \delta)$ examples drawn \emph{independently and identically} according to $\Dset$, then, with probability of at least $1 - \delta$:
\begin{equation} \label{equation:ch2-eps_representative}
  \forall h \in \Hset, \quad |R_\Dset(h) - R_\Sset(h)| \leq \epsilon \enspace,
\end{equation}
where the function $m_\Hset$, called \emph{the sample complexity}, measures the minimal number of examples needed to ensure that with probability of at least $1 - \delta$, \Cref{equation:ch2-eps_representative} holds.
The i.i.d. assumption is common in statistical learning theory.
It is easy to see that the error between $R_\Sset(h)$ and $R_\Dset(h)$ is dependent on the representativeness of the sample $\Sset$ with respect to the underlying distribution $\Dset$.
Therefore, the parameter $\delta$ characterizes the probability of having a nonrepresentative sample.
The quantity $1 - \delta$ is the \emph{confidence parameter} of the prediction.

% Therefore, we need to make the assumption that the training set $\Sset$ is a representative sample of the distribution $\Dset$.

The following result gives the sample complexity measure the maximum value for which the hypothesis class has the \emph{uniform convergence property}.
\begin{theorem}[\citet{shalev2014understanding}] \label{theorem:ch2-sample_complexity_bound}
  Let $\Hset$ be a \emph{finite} hypothesis class, then $\Hset$ enjoys the uniform convergence property with sample complexity:
  \begin{equation}
    m_\Hset(\epsilon, \delta) \leq \left\lceil \frac{\log(2 |\Hset|	/ \delta)}{2 \epsilon^2} \right\rceil \enspace,
  \end{equation}
  \removespace
\end{theorem}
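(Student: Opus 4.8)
The plan is to reduce the uniform-convergence claim to a pointwise concentration inequality for a single hypothesis, combined with a union bound over the finite class $\Hset$. The key observation is that for a fixed $h \in \Hset$, the empirical risk $R_\Sset(h) = \frac{1}{m} \sum_{i=1}^{m} L\big(h(\xvec^{(i)}), y^{(i)}\big)$ is an average of $m$ independent random variables (by the i.i.d.\ sampling assumption), each with expectation $R_\Dset(h)$ by the definition of the risk in \Cref{equation:ch2-risk}, and each taking values in $[0,1]$ since we work with a loss bounded in $[0,1]$ (\eg, the $0$--$1$ loss introduced above).

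First I would invoke Hoeffding's inequality: for independent $Z_1, \dots, Z_m$ with $Z_i \in [0,1]$ and $\bar{Z} = \frac{1}{m}\sum_{i} Z_i$, one has $\Pbb\big[\,|\bar{Z} - \Ebb[\bar{Z}]| > \epsilon\,\big] \leq 2\exp(-2m\epsilon^2)$. Applying this with $Z_i = L\big(h(\xvec^{(i)}), y^{(i)}\big)$ yields, for each fixed $h \in \Hset$,
\begin{equation}
  \Pbb\big[\, |R_\Sset(h) - R_\Dset(h)| > \epsilon \,\big] \leq 2\exp(-2m\epsilon^2) \enspace.
\end{equation}
Next I would take a union bound over $\Hset$: the probability that some $h \in \Hset$ violates $|R_\Sset(h) - R_\Dset(h)| \leq \epsilon$ is at most $\sum_{h \in \Hset} \Pbb\big[\,|R_\Sset(h) - R_\Dset(h)| > \epsilon\,\big] \leq 2|\Hset|\exp(-2m\epsilon^2)$. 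It then remains to force this quantity to be at most $\delta$: solving $2|\Hset|\exp(-2m\epsilon^2) \leq \delta$ for $m$ gives $m \geq \frac{\log(2|\Hset|/\delta)}{2\epsilon^2}$, and since the sample size is an integer, it suffices to take $m \geq \big\lceil \log(2|\Hset|/\delta)/(2\epsilon^2) \big\rceil$, which is exactly the claimed bound on $m_\Hset(\epsilon, \delta)$ and establishes \Cref{equation:ch2-eps_representative} with probability at least $1-\delta$.

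The only real subtlety worth stating carefully is the boundedness hypothesis underlying the use of Hoeffding's inequality: the theorem as phrased is implicitly about a loss valued in $[0,1]$; for a general bounded loss valued in $[a,b]$ the same argument carries through verbatim with an extra factor $(b-a)^2$ in the exponent. Everything else — linearity of expectation to identify $\Ebb[R_\Sset(h)] = R_\Dset(h)$, independence of the summands, the union bound, and the elementary algebra isolating $m$ — is routine, so I do not expect any genuine obstacle beyond making the statement of Hoeffding's inequality and the $[0,1]$ convention explicit.
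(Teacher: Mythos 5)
Your proof is correct and is essentially the canonical argument for this result: the paper itself does not reprove the theorem but cites it from \citet{shalev2014understanding}, where precisely this chain — Hoeffding's inequality for a fixed $h$ (noting $L\big(h(\xvec^{(i)}), y^{(i)}\big) \in [0,1]$ are i.i.d.\ with mean $R_\Dset(h)$), a union bound over the finite $\Hset$, and solving $2|\Hset|\exp(-2m\epsilon^2) \le \delta$ for $m$ — is used. Your remark about the implicit $[0,1]$ loss-boundedness assumption is also the right caveat to flag; the paper's surrounding context (the $0$--$1$ loss defined just above, \Cref{equation:ch2-eps_representative}) makes that convention implicit, and for a general loss valued in $[a,b]$ the bound acquires the $(b-a)^2$ factor exactly as you note.
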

\noindent
where $|\Hset|$ is the cardinal of the set $\Hset$.
Note that we can also write the bound from~\Cref{theorem:ch2-sample_complexity_bound} as follows: for all $h \in \Hset$, we have
\begin{equation}
  \underbrace{\vphantom{\sqrt{\frac{\log(|\Hset|/\delta)}{2 m_\Hset(\epsilon, \delta)}}} R_\Dset(h)}_{\text{Error}} \leq \underbrace{\vphantom{\sqrt{\frac{\log(|\Hset|/\delta)}{2 m_\Hset(\epsilon, \delta)}}} R_\Sset(h)}_{\text{Estimation Error}} + \underbrace{\sqrt{\frac{\log(|\Hset|/\delta)}{2 m_\Hset(\epsilon, \delta)}}}_{\text{Complexity penality}} \enspace,
\end{equation}
with probability $1 - \delta$.
This bound is called a \emph{generalization bound} and consists in bounding the true error by the empirical error and a complexity penalty.
In the above theorem, the condition on the finiteness of the hypothesis class might by too strong.
For example, if we consider the set of linear functions parameterized by a set of real-valued parameters the hypothesis class is infinite and the theorem above does not apply.
To characterize the learnability of infinite hypothesis classes, several complexity measures have been proposed.
One of the first, discovered by~\citet{vapnik2015uniform}, relies on a combinatorial notion called the Vapnik-Chervonenkis dimension (VC-dimension). They showed that having a finite VC-dimension is a necessary and sufficient condition for the uniform convergence property.
In the same vein, the Rademacher complexity~\cite{koltchinskii2000rademacher}, measures the richness of the class of real-valued functions with respect to a probability distribution.
In \Cref{subsection:ch2-recent_results_on_the_theory_of_neural_networks}, we will study recent generalization bounds specific to neural networks where the complexity penalty is dependent on the Lipschitz constant of the weights matrices.

A fundamental question of the ERM paradigm remains: \emph{how to choose the correct hypothesis class for which $\text{ERM}_\Hset$ will not lead to overfitting?} 
We answer this question by decomposing the true risk into two different components as follows: 
\begin{equation} \label{equation:ch2-bias_complexity_tradeoff}
  R_\Dset (h_\Sset) = 
  \underbrace{\left[ \min_{h \in \Hset} R_\Dset(h) \right]}_{\text{\scriptsize Approximation Error}} + \quad 
  \underbrace{\left[ R_\Dset(h_\Sset) - \min_{h \in \Hset} R_\Dset(h) \right]}_{\text{\scriptsize Estimation Error}} 
\end{equation}
\begin{itemize}
  \item \textbf{Approximation Error}: The approximation error corresponds to the minimum risk achievable by a classifier in the given hypothesis class.
  Intuitively, this error measures the quality of the hypothesis class and therefore the quality of the prior knowledge.
  Enlarging the hypothesis class, \ie, allowing more complex functions, can decrease the approximation error.
  \item \textbf{Estimation Error}: The estimation error is the difference between the approximation error and the error made by the ERM predictor.
  Recall that the empirical risk is only an estimate of the true risk.
  This error is dependent on the sample size and the complexity of the hypothesis class.
\end{itemize}
Recall that the main goal is to minimize the true risk $R_\Dset (h_\Sset)$, however, \Cref{equation:ch2-bias_complexity_tradeoff} shows a trade-off called the \emph{bias-complexity trade-off}.
The trade-off is as follows: if we choose a large and complex hypothesis space, we reduce the approximation error but at the same time we can increase the estimation error because a complex hypothesis space might lead to overfitting.
Conversely, choosing a small hypothesis space might reduce the estimation error but increase the approximation error leading to \emph{underfitting}.
We can illustrate the \emph{overfitting} and \emph{underfitting} phenomenons with \Cref{figure:ch2-fitting_points} which shows the decision boundary of 3 classifiers for the same set of samples.
\Cref{figure:ch2-fitting_points_a} shows a classifier which \emph{underfits} the data, meaning the decision boundary is not complex enough to separate the data correctly.
\Cref{figure:ch2-fitting_points_b} shows a classifier that almost perfectly fits the training data but is likely to have a higher error rate on the unseen data.
Finally, \Cref{figure:ch2-fitting_points_c} shows a classifier that seems to have a good compromise between the two.

As seen above, defining a small hypothesis class might lead to underfitting and a large hypothesis class might lead to overfitting.
A good way to balance the trade-off would be to minimize the empirical risk while also minimizing the complexity of the hypothesis class.
Let us define a \emph{regularization} function $r: \Hset \rightarrow \Rbb$ which takes a hypothesis as input and measures the ``complexity'' of the hypothesis.
We could now update the learning rule as follows:
\begin{equation}
  \argmin_{h \in \Hset} \left[ R_\Sset(h) + r(h) \right]
\end{equation}
This learning rule minimizes the empirical risk $R_\Sset(h)$ and a well chosen regularization function $r$. If $r(\ \cdot\ )$ is carefully chosen, this prevent overfitting and improve generalization on unseen data.
This learning rule is closely related to \emph{Structural Minimization Paradigm} (SRM) \cite{shalev2014understanding}.
In the next section, we will present a classical regularization function for neural networks and we will introduce a new regularization scheme in \Cref{chapter:ch5-lipschitz_bound}.

\subsection{Preliminaries on Neural Networks}
\label{subsection:ch2-preliminaries_on_neural_networks}
%%%%%%%%%%%%%%%%%%%%%%%%%%%%%%%%%%%%%%%%%%%%%%%%%%%%%%%%%%%%%%%%%%%%%%%%%%%%%%%

% \begin{figure}[ht]
%   \centering
%   \includegraphics[width=0.88\textwidth]{figures/main/ch2-background/mnist-dataset.png}
%   \caption{Images with handwritten digits in the MNIST database \cite{lecun1998gradient}}
%   \label{figure:ch2-mnist-database}
% \end{figure}
%
% In \citeyear{lecun1998gradient}, \citeauthor{lecun1998gradient} had successfully learned a function capable of recognizing handwritten digits in images.
% They used the MNIST dataset \cite{lecun1998gradient} consisting of black and white images of size $28 \times 28$ pixels (\Cref{figure:ch2-mnist-database} presents a sample of images from the MNIST database).
% Their goal was to develop an algorithm that takes a vector as input and produces one digit from 0 to 9 as the output.
% Although simple for a human, this task is a non-trivial problem for computers due to the uniqueness of each image.
% To solve this problem, \citet{lecun1998gradient} trained a neural network based on the images of the MNIST dataset and their labels.

% This is a non-trivial problem because each image is unique and while digits can be differentiated based on their shapes and strokes, these features give poor results for an automated system.

% In the previous section, we said that we restrict the learner towards a specific set of predictors.
% In this thesis, we focus on neural networks.
Neural networks, which find their roots in the work of \citet{mcculloch1943logical,rosenblatt1958perceptron}, can be analytically described as a composition of linear functions interlaced with nonlinear functions (also called activation functions).
A feedforward neural network can be defined as follows:

\begin{definition}[Neural Network] \label{definition:ch2-neural_networks}
  Given a depth $\depth \in \Nbb$, 
  let $\dimw = \{ \dimw^{(i)} \}_{i \in [\depth+1]}$ and $\dimb = \{ \dimb^{(i)} \}_{i \in [\depth]}$ be sequences of integers, $\weights = \left\{ \left( \Wmat^{(i)}, \bvec^{(i)} \right) \right\}_{i \in [\depth]}$ a set of weights matrices and bias vectors 
  such that $\Wmat^{(i)} \in \Rbb^{\dimw^{(i)} \times \dimw^{(i+1)}}$ and $\bvec^{(i)} \in \Rbb^{\dimb^{(i)}}$ and a sequence of activation functions $\act = \{\act_i \}_{i \in [\depth]}$.
  Let $\Xset \subset \Rbb^{\dimw^{(1)}}$ and $\Yset \subset \Rbb^{\dimw^{(\depth+1)}}$ be the input and output spaces respectively.
	$\dimw^{(1)}$ and $\dimw^{(\depth)}$ refer to the input and output dimension respectively.
  % $\dimw^{(1)}$ refers to the input dimension and $\dimw^{(\depth)}$ refers to the output dimension.
  A neural network is a function $\nn^\act_\weights : \Xset \rightarrow \Yset$ such that
  \begin{equation}
    \nn^\act_{\weights}(\xvec) \triangleq \layer^{\act_\depth}_{\Wmat^{(\depth)}, \bvec^{(\depth)}} \circ \cdots \circ \layer^{\act_1}_{\Wmat^{(1)}, \bvec^{(1)}}(\xvec)
  \end{equation}
  where $\layer^{\act_i}_{\Wmat^{(i)},\bvec^{(i)}}: \Rbb^{w^{(i)}} \rightarrow \Rbb^{w^{(i+1)}}$ (also called layer) is a function parameterized by the weight matrix $\Wmat^{(i)}$, the bias vector $\bvec^{(i)}$ and the activation function $\act_i$.
  $\layer^{\act_i}_{\Wmat^{(i)},\bvec^{(i)}}:$  is defined as follows: 
  \begin{equation}
    \layer^{\act_i}_{\Wmat^{(i)},\bvec^{(i)}} (\xvec) \triangleq \act_i \left(\Wmat^{(i)}\xvec + \bvec^{(i)}\right) \enspace,
  \end{equation}
  and $\rho_\depth$ is identity function.
\end{definition}

\noindent
Based on this definition, for a given training set $\Sset \subset \Xset \times \Yset$, a set of activation functions $\act$, a set of weights and biases $\weights$ and a loss function $L: \Yset \times [k] \rightarrow \Rbb_+$, the ERM learning paradigm for neural networks is given by
\begin{equation} \label{equation:ch2-erm_neural_network}
  \argmin_{\weights} \frac{1}{|\Sset|} \sum_{(\xvec, y) \in \Sset} L(N^\act_\weights(\xvec), y) 
\end{equation}
For classification problems, the zero-one loss is non-convex, and finding a near optimal solution is an NP-hard problem~\cite{feldman2012agnostic,bendavid2003difficulty}.
Instead, a common approach is to use a surrogate loss such as the logistic loss multiclass function and estimate the parameters by maximizing the \emph{likelihood} over the data.
This loss $L:\Yset \times [k]$, is defined as follows:
\begin{equation}
  L(N^\rho_\Omega(\xvec), y) = -\log
    \left(
      \frac
        {e^{\left(N^\rho_\Omega(\xvec)\right)_y}}
	{\sum_{j\in[k]} e^{\left(N^\rho_\Omega(\xvec)\right)_j}}
    \right)
\end{equation}
The generic approach for minimizing the empirical risk in \Cref{equation:ch2-erm_neural_network} is by \emph{gradient descent} with the \emph{backpropagation} algorithm ~\cite{rumelhart1986learning} which consists in computing the gradient with the chain rule.

% Instead, a common approach is to used the softmax activation function as the last nonlinear activation $\act_d$ with the cross-entropy loss function and estimate the parameters with \emph{maximum likelihood}~\cite{hastie2009elements}.
% The softmax activation function, $\act_d: \Rbb^k \rightarrow [0, 1]^k$, is defined as follows:
% \begin{equation}
%   \leftmat \act_d(\xvec) \rightmat_i = \frac{e^{\xvec_i}}{\sum_{j=0}^{k-1} e^{\xvec_j}}, \quad \forall i
% \end{equation}
% The generic approach for minimizing the empirical risk in \Cref{equation:ch2-erm_neural_network} is by \emph{gradient descent} with the \emph{backpropgation} algorithm ~\cite{rumelhart1986learning} which consists of computing the gradient with the help of the chain-rule.

As seen in the previous section, the SRM paradigm minimizes two terms, the empirical risk and a weight function measuring the ``complexity'' of the hypothesis.
% It has been shown that the number of free parameters can be used as a measure of complexity and a number of work have proposed techniques to reduce the number parameters~\cite{lecun1990optimal,thodberg1991improving,weigend1991generalization}.
% However, a different way of constraining the complexity is to limit the growth of weights~\cite{hinton1987learning}.
% This \emph{regularization} \cite{tikhonov1977solutions,krogh1992simple}, also called \emph{weight decay}, prevents weights from growing too large unless it is necessary.
It has been shown that the $\ell_2$ norm of the weights of a network can be used as a measure of complexity of the network \cite{hinton1987learning}.
% However, a different way of constraining the complexity is to limit the growth of weights~\cite{hinton1987learning}.
This \emph{regularization}, also called \emph{weight decay}, prevents weights from growing too large.
The SRM learning algorithm can then be expressed as follows:
\begin{equation}
  \argmin_{\weights} \frac{1}{|\Sset|} \sum_{(\xvec, y) \in \Sset} L(N^\act_\weights(\xvec), y) + \lambda \sum_{(\Wmat, \bvec) \in \weights} \left( \norm{\Wmat}_\mathrm{F} + \norm{\bvec}_\mathrm{2} \right)
\end{equation}
where $\lambda > 0$ is the regularization parameter.

\begin{figure}[ht]
  \centering
  \begin{subfigure}[b]{0.32\textwidth}
    \includegraphics[width=0.98\textwidth]{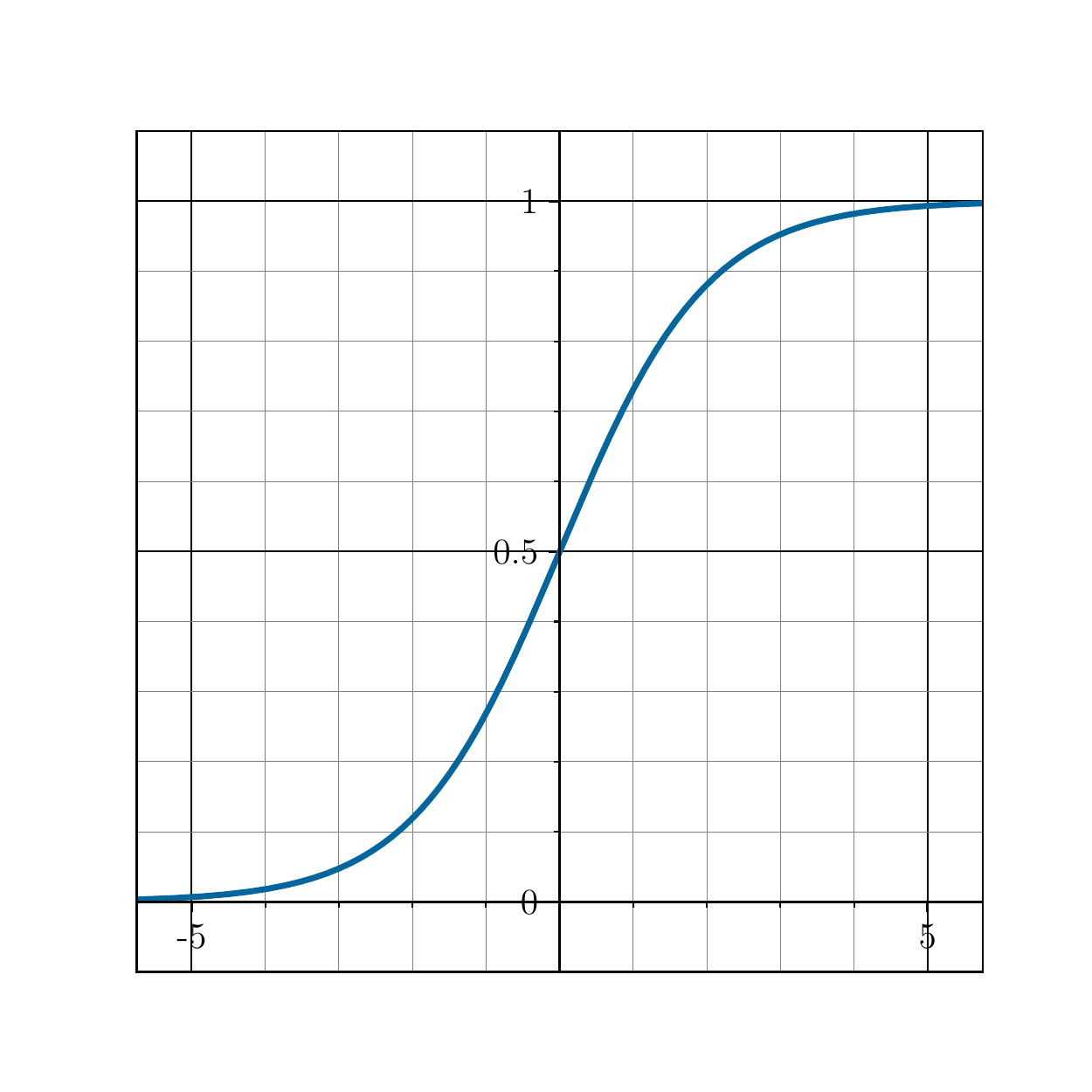}
    \caption{Sigmoid Activation}
  \end{subfigure}
  \hfill
  \begin{subfigure}[b]{0.32\textwidth}
    \includegraphics[width=0.98\textwidth]{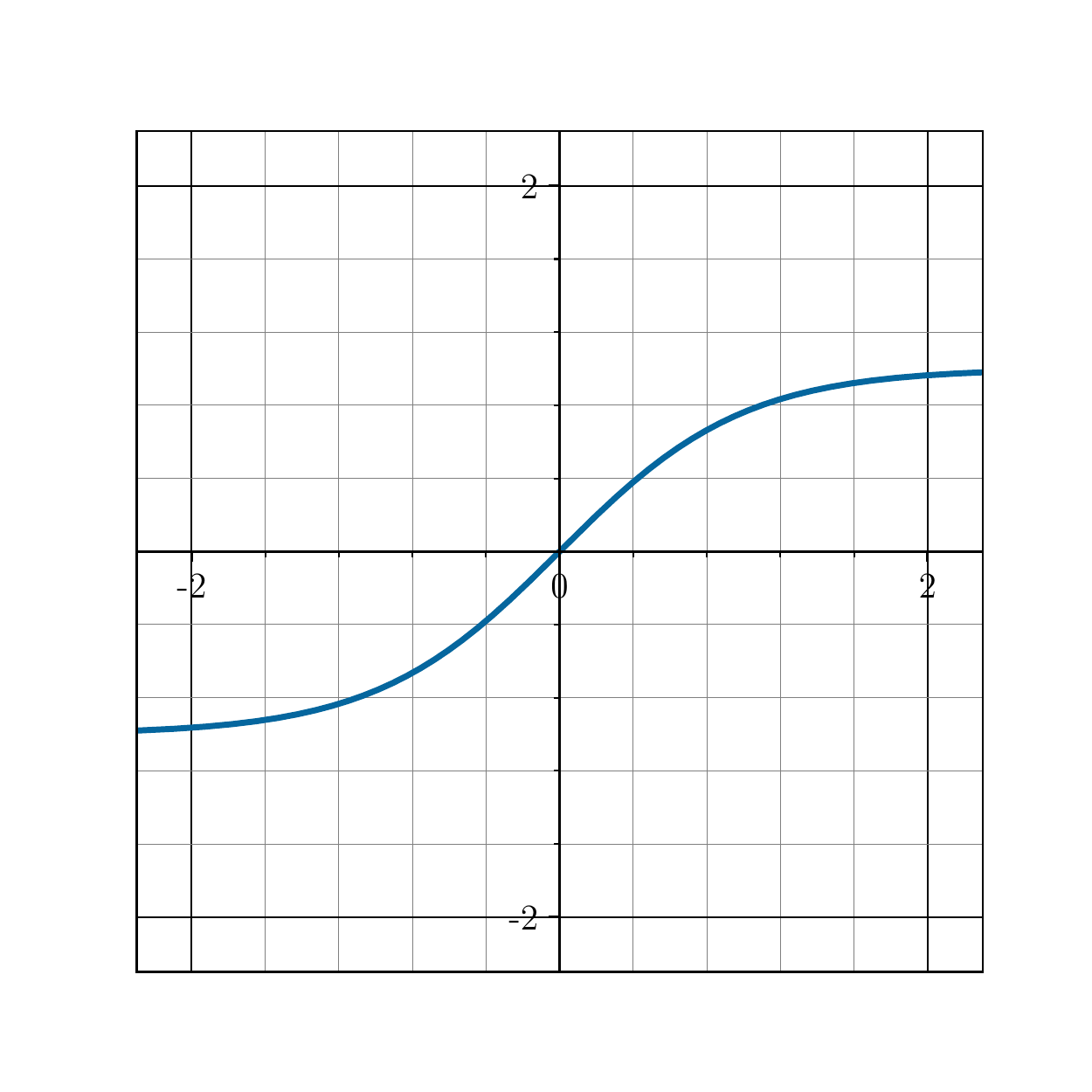}
    \caption{Tanh Activation}
  \end{subfigure}
  \hfill
  \begin{subfigure}[b]{0.32\textwidth}
    \includegraphics[width=0.98\textwidth]{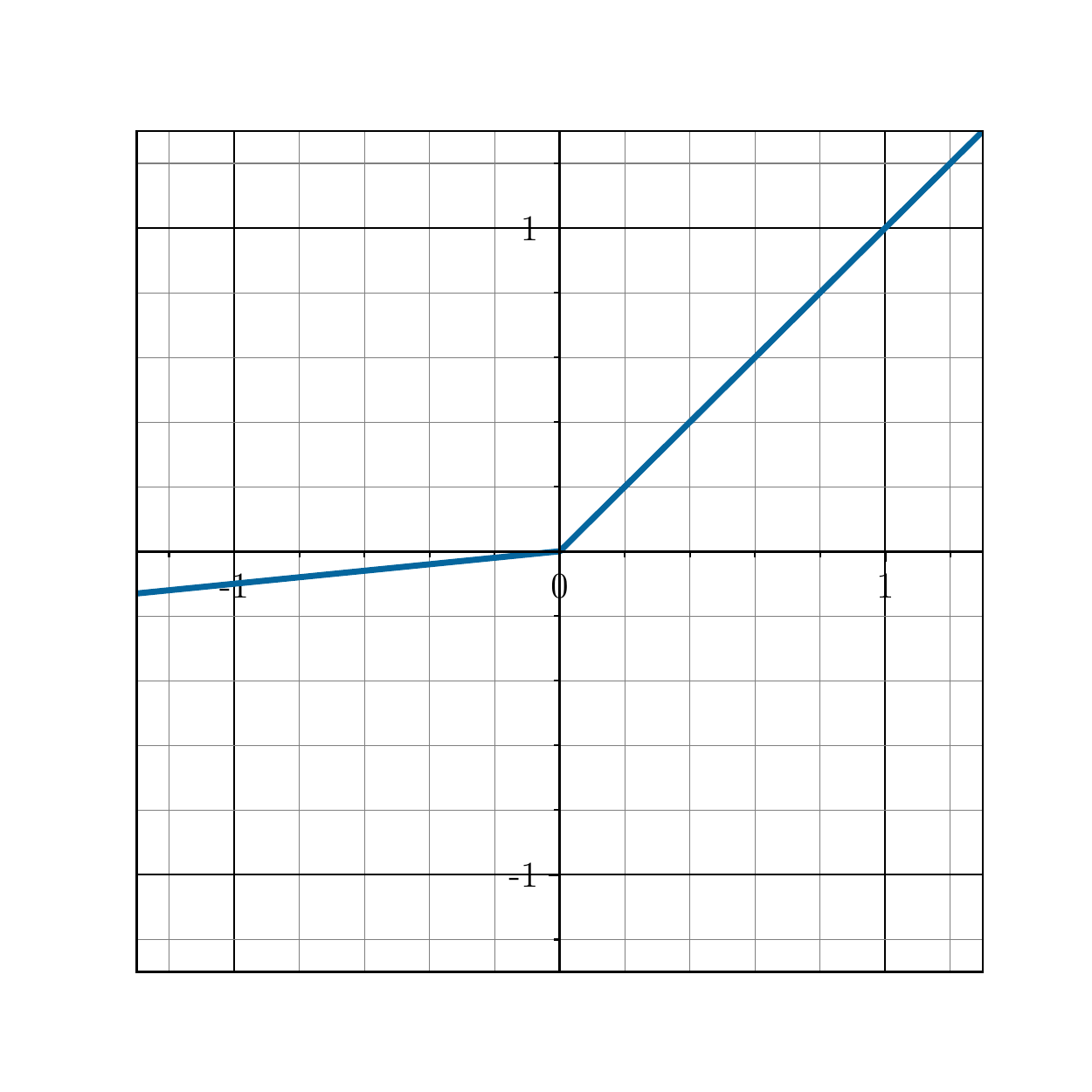}
    \caption{Leaky-ReLU Activation}
  \end{subfigure}
  \caption{Graphical representation of three common activation functions}
  \label{figure:ch2-activation_functions}
\end{figure}

Choosing the right activation function has been an active area of research.
Hereafter, we present three common activation functions used by practitioners.
\begin{itemize}
  \item \textbf{Sigmoid activation}
    \begin{equation*}
      \act(x) = \frac{1}{1+e^{-x}} 
    \end{equation*}
    The sigmoid activation function is one of the first continuous nonlinear functions to be used in the context of neural networks.
		It takes a real value as input and outputs another value between 0 and 1.
  \item \textbf{Hyperbolic Tangent activation}
    \begin{equation*}
      \act(x) = \frac{e^x - e^{-x}}{e^x + e^{-x}}
    \end{equation*}
    The hyperbolic tangent activation function is similar to the sigmoid activation function but instead of returning between 0 and 1, the function returns values between -1 and 1.
  \item \textbf{Rectified Linear activation (ReLU)} \cite{nair2010rectified}
    \begin{equation*}
      \act(x) = \max(0, x)
    \end{equation*}
    The ReLU activation was proposed to avoid the \emph{vanishing gradient problem}.
    The vanishing gradient problem, discovered by \citet{bengio1994learning}, occurs with hyperbolic tangent or sigmoid activation when the magnitude of the input values are almost saturated at $-1$ or $1$, in this case the gradient is close to $0$ and difficulties of optimization and convergence occur.
    The ReLU activation addresses this problem due to the simple values of its gradients which are either 0 or 1 on $\Rbb_-$ or $\Rbb_+$ respectively.
    Furthermore, it has the advantage to be less computationally expensive than tanh and sigmoid functions because it involves simpler mathematical operations.
  \item \textbf{Leaky Rectified Linear activation (Leaky-ReLU)} \cite{maas2013rectifier}
    \begin{equation*}
      \act(x) = \max(\alpha, x)
    \end{equation*}
    More recently, the Leaky-ReLU ($\alpha > 0$) activation function was proposed.
    It introduces the parameter $\alpha$ which characterizes the slope on $\Rbb_-$.
    The advantage of Leaky-ReLU over the ReLU nonlinear activation is that it prevents sparse gradient, which facilitates convergence of neural networks.
\end{itemize}

\noindent
\Cref{figure:ch2-activation_functions} presents the graphical representation of the activation functions presented above.
In this thesis, we will use the Leaky-ReLU function with different $\alpha$ when we train deep neural networks.
We simplify the notation $\nn^\act_\weights$ with $\nn_\weights$.

% From $\mathcal{N}$, we can easily build a deep ReLU network $\mathcal{N'}$ of width exactly $n+3$, such that $\forall x \in [0,1]^{n+3}$, $\left|f(\xvec_{1} \ldots \xvec_{n}) - \left(\mathcal{N}'\left(\xvec\right)\right)_{1}\right| < \epsilon$.
% Thanks to \Cref{lemma:dcnn_approx_neural_network}, this last network can be approximated arbitrarily well by a DCNN of width $n+3$.
%
% \begin{theorem}
%   Let $\Xset \subset \Rbb^{w_{(1)}}$.
%   For any continuous function $f: \Xset \rightarrow \Rbb^{w^{(\depth)}}$, then there exists a  
%
%   Let $\Xset \subset \Rbb^{w_{(1)}}$ and let $f: \Rbb^{w^{(1)}} \rightarrow \Rbb^{w^{(\depth)}}$ be a continuous function.
%   Then, there exists a neural networks parameterized by $\weights$ with an input dimension $w^{(1)}$, an arbitrary depth $\depth$, and $\relu$ activation such that:
%   \begin{equation}
%     \norm{f(\xvec) - \nn(\xvec)} \leq \epsilon
%   \end{equation}
%   \label{theorem:ch2-universal_approximation_theorem}
% \end{theorem}
%

% \citet{cybenko1989approximation} have shown that shallow neural networks with sigmoid activation can \emph{theoretically} approximate any decision boundary.
%
% The arbitrary depth case was also studied by number of authors, such as Zhou Lu et al in 2017,[11] 
% \cite{lu2017expressive}
%
% Boris Hanin and Mark Sellke in 2018,[12] 
% \cite{hanin2017universal}

%%%%%%%%%%%%%%%%%%%%%%%%%%%%%%%%%%%%%%%%%%%%%%%%%%%%%%%%%%%%%%%%%%%%%%%%%%%%%%%
\subsection{Adversarial Attacks \& Robustness of Neural Networks}
\label{subsection:ch2-adversarial_attacks_robustness_of_neural_networks}
%%%%%%%%%%%%%%%%%%%%%%%%%%%%%%%%%%%%%%%%%%%%%%%%%%%%%%%%%%%%%%%%%%%%%%%%%%%%%%%

As seen in the introduction (\Cref{chapter:ch1-introduction}), deep neural networks achieve state-of-the-art performances in a variety of domains such as natural language processing~\cite{radford2019language}, image recognition~\cite{he2016deep} and speech recognition~\cite{hinton2012deep}.
However, it has been shown that such neural networks are vulnerable to \emph{adversarial examples}, \ie, imperceptible variations of the natural examples, crafted to deliberately mislead the models~\cite{globerson2006nightmare,biggio2013evasion,szegedy2013intriguing}.
Because it is difficult to characterize the space of visually imperceptible variations of a natural image, existing adversarial attacks use $\ell_p$ norms as surrogate measures.
We can formally define an adversarial example as follows:
\begin{definition}[Adversarial Pertubation]
  Given an example $\xvec$ and its predicted label $y$, $k$ number of classes, a trained neural network $\nn_\weights$ with $\argmax_{i \in [k]} \big( \nn_\weights(\xvec) \big)_i = y$ and a radius $\epsilon \in \Rbb$, an adversarial perturbation is a vector $\adv \in \Xset$ such that:
  % Given a dataset $\Sset \subset \Xset \times \Yset$, a pair $(\xvec, y) \in \Sset$ and a trained neural network $\nn_\weights$ on $\Sset$ such that $\argmax_{i} \{ \nn_\weights(\xvec)_i \} = y$, let $\adv \in \Xset$ be an adversarial perturbation such that:
  \begin{align}
    &\argmax_{i \in [k]} \big( \nn_\weights (\xvec + \adv) \big)_i \neq y \enspace, \\
    &\st\ \norm{\adv}_p \leq \epsilon \notag
  \end{align}
  where $\epsilon$ is a small value defined by the attacker.
\end{definition}

Note that this definition assumes that the attacker (the person crafting the attack) has access to the parameters of the model.
Typically, an attack method is either \emph{white-box} (complete knowledge of the model and its parameters) or \emph{black-box} (no knowledge of the model).
It is possible to consider that the white-box setting admits too strong assumptions because a model and its parameters could very well be hidden from the public.
In general, it is safer to assume that the adversary has complete knowledge of the model and its defense.
This principle is known in the field of security as the Shannon’s maxim~\cite{shannon1949communication}.
% Kerckhoffs’ principle~\cite{kerckhoffs1883cryptographie}.
Therefore, in this thesis, we only consider defenses against white-box attacks. %which imply robustness against black-box adversaries.

%%%%%%%%%%%%%%%%%%%%%%%%%%%%%%%%%%%%%%%%%%%%%%%%%%%%%%%%%%%%%%%%%%%%%%%%%%%%%%%
\subsubsection{Implementing Adversarial Attacks}
\label{subsubsection:ch2-adversarial_attacks}
%%%%%%%%%%%%%%%%%%%%%%%%%%%%%%%%%%%%%%%%%%%%%%%%%%%%%%%%%%%%%%%%%%%%%%%%%%%%%%%

Since the discovery of adversarial perturbations, a variety of procedures, \aka \emph{adversarial attacks}, have been developed to generate adversarial examples.
% for example FGSM \cite{goodfellow2014explaining}, PGD \cite{madry2018towards} and C\&W \cite{carlini2017towards}, to mention the most popular ones.
FGSM \cite{goodfellow2014explaining}, PGD \cite{madry2018towards} and \cite{carlini2017towards} to name a few, are the most popular ones.
To find the best perturbation $\adv$, existing attacks can adopt one of the two following strategies:
% \begin{itemize}
%   \item \textbf{Loss maximization}: maximizing the loss $L(\nn_\weights(\xvec + \adv), y)$ under some constraint on $\norm{\adv}_p$ with $p \in \{0, \dots, \infty\}$.;
%   \item \textbf{Perturbation minimization}: minimizing $\norm{\adv}_p$ under some constraint on the loss $L(\nn_\weights(\xvec + \adv), y)$.
% \end{itemize}

\paragraph{Loss maximization.}
In this scenario, the procedure maximizes the loss objective function $L(\nn_\weights(\xvec + \adv), y)$, under the constraint that the $\lp$ norm of the perturbation remains bounded by some value $\epsilon$, as follows:
\begin{equation} \label{equation:ch2-lossmax}
  \argmax_{\adv:\norm{\adv}_p \leq \epsilon} L(\nn_\weights(\xvec + \adv), y) \enspace.
\end{equation}
The typical value of $\epsilon$ depends on the norm $\norm{\ \cdot\ }_p$ considered in the problem setting.
% In order to compare $\linf$ and $\ltwo$ attacks of similar strength, we choose values of $\epsilon_\infty$ and $\epsilon_2$ (for $\linf$ and $\ltwo$ norms respectively) which result in $\linf$ and $\ltwo$ balls of equivalent volumes.
% For the particular case of CIFAR-10, this would lead us to choose $\epsilon_\infty = 0.03$ and $\epsilon_2 = 0.8$ which correspond to the maximum values chosen empirically to avoid the generation of visually detectable perturbations.
The current state-of-the-art method to solve \Cref{equation:ch2-lossmax} is based on a projected gradient descent (PGD)~\cite{madry2018towards} of radius~$\epsilon$.
Given a budget $\epsilon$, it recursively computes
\begin{equation} \label{equation:ch2-projectionPGD}
  \xvec^{(t+1)} = \prod_{\mathcal{B}_p(\xvec,\epsilon)}\left(\xvec^{(t)}
    + \alpha \argmax_{\adv: \norm{\adv}_p \leq 1} \adv^\top \nabla_{\xvec^{(t)}} L\left( \nn_\weights \big(\xvec^{(t)} \big), y \right)
\right)
\end{equation}
where $\Bset_p(\xvec,\epsilon) = \{ \xvec + \adv:\norm{\adv}_p \leq \epsilon\}$ is the ball of norm $p$ with radius $\epsilon$, centered at $\xvec$, $\alpha$ is a gradient step size, and $\prod_\Bset$ is the projection operator on the ball $\Bset$.
The PGD attack is currently used in the literature with $p=2$ and $p=\infty$.
The attack with the norm $p=\infty$ is state-of-the-art for the loss maximization problem.

\paragraph{Perturbation minimization.}
This type of procedure searches for the perturbation with the minimal $\lp$ norm, under the constraint that $L(\nn_\weights(\xvec + \adv), y)$ is bigger than a given bound $c$:
\begin{align}
  &\argmin_{\adv} \norm{\adv}_p \label{equation:ch2-normmin} \enspace, \\
  &\st\ L(\nn_\weights(\xvec + \adv), y) \geq c \notag
\end{align}
The value of $c$ is typically chosen depending on the loss function $L$.
For example, if $L$ is the 0-1 loss, any $c > 0$ is acceptable.
\Cref{equation:ch2-normmin} has been tackled by~\citet{carlini2017towards}, leading to the following method, denoted C\&W attack in the rest of the chapter.
It aims at solving the following Lagrangian relaxation of \Cref{equation:ch2-normmin}:
\begin{equation}
  \argmin_{\adv} \norm{\adv}_p + \lambda g(\xvec+\adv) \enspace,
\end{equation}
where $g(\xvec + \adv)<0$ if and only if $L(\nn_\weights(\xvec + \adv),y) \geq c$.
% The authors use a change of variable $\adv = \tanh(\wvec) - \xvec$ to ensure that $\xvec + \adv \in \Xset$, 
The authors use a binary search to optimize the constant $c$, and gradient descent to compute an approximated solution.
The C\&W attack is currently used in the literature with $p \in \{1, 2, \infty \}$ and is state-of-the-art with $p=2$ for the perturbation minimization problem if we consider that the attacker has unlimited computing power.
% This attack with the norm $p=2$ is state-of-the-art for the perturbation minimization problem.
% The C\&W attack is well defined both for $p=2$, and $p=\infty$, but there is a clear empirical gap of efficiency in favor of the $\ltwo$ attack.

% For example, \citet{goodfellow2014explaining} use the $\linf$ norm to measure the distance between the original image and the adversarial image whereas \citet{carlini2017towards} use the $\ltwo$ norm.
% When the input dimension is low, the choice of the norm is of little importance because the $\linf$ and $\ltwo$ balls overlap by a large margin, and the adversarial examples lie in the same space.
% For typical image datasets with large dimensionality, the two balls are mostly disjoint.
% As a consequence, the $\linf$ and the $\ltwo$ adversarial examples lie in different areas of the space, and it explains why $\linf$ defense mechanisms perform poorly against $\ltwo$ attacks and vice versa.

%%%%%%%%%%%%%%%%%%%%%%%%%%%%%%%%%%%%%%%%%%%%%%%%%%%%%%%%%%%%%%%%%%%%%%%%%%%%%%%
\subsubsection{Defending against Adversarial Attacks}
\label{subsubsection:ch2-defending_against_adversarial_attacks}
%%%%%%%%%%%%%%%%%%%%%%%%%%%%%%%%%%%%%%%%%%%%%%%%%%%%%%%%%%%%%%%%%%%%%%%%%%%%%%%

Given the security risks that adversarial attacks pose, it is important to design defenses to protect neural networks against these kinds of attacks.
Adversarial Training was introduced by~\citet{goodfellow2014explaining} and later improved by~\citet{madry2018towards} as a first defense mechanism to train robust neural networks.
It consists in augmenting training batches with adversarial examples generated during the training procedure.
The structural risk minimization paradigm is thus replaced by the following $\min$ $\max$ problem, where the classifier tries to minimize the expected loss under the maximum perturbation of its input:
\begin{equation}
  \min_\weights \max_{\adv: \norm{\adv} \leq \epsilon} \frac{1}{|\Sset|} \sum_{(\xvec, y) \in \Sset} L\left( \nn_\weights \left(\xvec + \adv \right), y \right) + \lambda \sum_{(\Wmat, \bvec) \in \weights} \left( \norm{\Wmat}_\mathrm{F} + \norm{\bvec}_\mathrm{2} \right) \enspace.
\end{equation}
% In the case where $p = \infty$, this technique offers good robustness against $\linf$ attacks \cite{athalye2018obfuscated}.
Although adversarial training lacks formal guarantees, it is one of the few techniques that proves to be empirically very effective.

% Despite some recent work providing great insights \cite{sinha2017certifying,zhang2019theoretically}, there is no worst case lower bound yet on the accuracy under attack of this method.

% \%\%\%
%
% \cite{goodfellow2014explaining} have proposed \textbf{Adversarial Training} which follows \textbf{ERM} training over adversarially-perturbed samples
%
%
% \%\%\%

% Another important technique to defend against adversarial examples is to use \emph{noise injection} techniques.
% In contrast with adversarial Training, noise injection mechanisms are usually deployed after training.

% In a nutshell, it works as follows.
% At inference time, given a unlabeled sample $x$, the network outputs
% \begin{equation}
%   \tilde{f}_\theta(\xvec) \triangleq f_\theta(\xvec + \eta) \ \ \ (\text{instead of  } f_\theta(\xvec)) 
% \end{equation}
% where $\eta$ is a random variable on $\Rbb^d$.
% Even though, Noise Injection is often less efficient than Adversarial Training in practice (see \eg, \Cref{table:764774}), it benefits from strong theoretical background.
% In particular, recent works \cite{lecuyer2018certified,li2019certified}, followed by~\citet{cohen2019certified,pinot2019theoretical} demonstrated that noise injection from a Gaussian distribution can give provable defense against $\ltwo$ adversarial attacks.
% In this work, besides the classical Gaussian noises already investigated in previous works, we evaluate the efficiency of Uniform distributions to defend against $\ltwo$ adversarial examples.

%%%%%%%%%%%%%%%%%%%%%%%%%%%%%%%%%%%%%%%%%%%%%%%%%%%%%%%%%%%%%%%%%%%%%%%%%%%%%%%
\subsection{Recent Results on the Theory of Neural Networks}
\label{subsection:ch2-recent_results_on_the_theory_of_neural_networks}
%%%%%%%%%%%%%%%%%%%%%%%%%%%%%%%%%%%%%%%%%%%%%%%%%%%%%%%%%%%%%%%%%%%%%%%%%%%%%%%

In this section, we give recent generalization bounds for neural networks.
Neural networks have the astonishing property of providing a low error rate on unseen data although they have more parameters than the number of training samples and therefore have the capabilities to fit random labels \cite{zhang2016understanding}.
In this context, traditional approaches of statistical learning  fail to explain why large neural networks generalize well in practice.

\citet{harvey2017nearly} have introduced a generalization bound of neural networks with the VC-dimension as a complexity measure of the hypothesis class.
They improved over previous bounds~\cite{bartlett1998almost,anthony1999neural} by showing that the VC-dimension of a $\depth$-layer feedforward neural network is equal to the depth times the number of parameters.
Unfortunately, this kind of bound with such a complexity measure is of little help to better understand the generalization capabilities of neural networks.

More recently, \citet{bartlett2017spectrally} have proposed to use a \emph{scale-sensitive} complexity measure instead of combinatorial ones (\ie, VC-dimension) which can work with real-valued function classes and are sensitive to their magnitudes.
They proposed to use the product of the spectral norms of the weight matrices (\ie, the Lipschitz constant of the weight matrices) of the network to this scale-sensitive complexity measure.
In addition, they investigated the margins and show that normalizing these Lipschitz constants by the margin allows to better control their excess risk (the test error minus the training error) across
training epochs.
The margin has been previously studied in relationship to generalization by \citet{langford2002pac} and more recently by~\citet{neyshabur2018pacbayesian}.

In what follows, we present generalization bounds for neural networks which are independent of the number of parameters of the network and use as a complexity penalty the Lipschitz constant of the weight matrices.
In addition, following the \Cref{subsection:ch2-adversarial_attacks_robustness_of_neural_networks} on Adversarial Attacks, we present the work of~\citet{farnia2018generalizable} which introduce \emph{adversarial risk} and \emph{empirical adversarial risk} and present an \emph{adversarial generalization bound} similar to the one proposed by \citet{bartlett2017spectrally}.

First, let us formally define the Lipschitz constant of a function as well as the spectral norm of a matrix.
We will use these notions in the following and later in the thesis.
Formally, the Lipschitz constant of a function is defined as follows: 
\begin{definition}[Lipschitz Constant] \label{definition:ch2-lipschitz_constant}
  The Lipschitz constant with respect to the $\ell_p$-norm of a Lipschitz continuous function $f: \Rbb^n \rightarrow \Rbb^m$ is defined as follows:
  \begin{equation}
    \lipp{p}{f} \triangleq \sup_{\substack{\xvec, \yvec \in \Rbb^n \\ \xvec \neq \yvec}} \frac{\norm{f(\xvec) - f(\yvec)}_p}{\norm{\xvec - \yvec}_p} \enspace.
  \end{equation}
  \removespace
\end{definition}
\noindent
In the following of this thesis, we denote $\lipp{2}{f}$ by $\lip{f}$ for simplicity and if $\lipp{p}{f}=k$, we denote the function $f$ as $k$-Lipschitz.
The spectral norm of a matrix $\Wmat$, which is equivalent to the Lipschitz constant of the function $\xvec \mapsto \Wmat \xvec$, is defined as follows:
\begin{definition}[Spectral norm] \label{defintion:ch2-spectral_norm}
Given a matrix $\Wmat$, the spectral norm of $\Wmat$ denoted $\norm{\Wmat}_2$ is defined as:
  \begin{equation}
    \norm{\Wmat}_2 \triangleq \sup_{\substack{\xvec \in \Rbb^n \\ \xvec \neq \zerovec{n}}} \frac{\norm{\Wmat \xvec}_2}{\norm{\xvec}_2} \enspace.
  \end{equation}
  \removespace
\end{definition}
\noindent
Note that the spectral norm also corresponds to the largest singular value of the matrix denoted $\sigma_1(\Wmat)$.

Before presenting the bound from \citet{bartlett2017spectrally}, let us introduce and recall some notations.
Let $N_\Omega$ be a neural network parameterized by $\Omega$ as in the Definition~\ref{definition:ch2-neural_networks}.
Let us recall the risk with respect to the neural network $N_\Omega$ and a distribution $\Dset$ as in~\Cref{equation:ch2-risk}:
\begin{equation}
  R_\Dset(N_\Omega) = \Pbb_{(\xvec, y) \sim \Dset} \left[ \argmax_{i \in [k]} \left( N_\Omega(\xvec) \right)_i \neq y \right] \enspace.
\end{equation}
% where $\Sset$ is an i.i.d sample drawn from the distribution $\Dset$.
\citet{bartlett2017spectrally} extended the notion of risk with a \emph{margin operator} $\margin: \Rbb^k \times [k] \rightarrow \Rbb$ defined as $\margin(\vvec, j) \triangleq \vvec_j - \max_{i \neq j} \vvec_i$ and an extension to the 0-1 loss called the \emph{ramp loss} $L_\gamma: \Rbb \rightarrow \Rbb_+$ as:
\begin{equation}
  L_\gamma(r) \triangleq 
  \begin{cases}
    0 &r< -\gamma, \\
    1 + r/\gamma &r \in [-\gamma,0], \\
    1 & r > 0,
  \end{cases}
\end{equation}
Now, we can define the \emph{margin risk} as 
\begin{equation} \label{equation:ch2-margin_risk}
  R_{\gamma,\Dset} (N_\Omega) \triangleq \Ebb_{(\xvec, y) \sim \Dset} \left[ L_\gamma \big(-\margin(N_\Omega(\xvec), y) \big) \right] \enspace,
\end{equation}
and the \emph{empirical margin risk} as 
\begin{equation} \label{equation:ch2-emprical_margin_risk}
  R_{\gamma,\Sset} (N_\Omega) \triangleq \frac{1}{|\Sset|} \sum_{(\xvec, y) \in \Sset} L_\gamma \big(-\margin(N_\Omega(\xvec), y) \big) \enspace.
\end{equation}
Note that the margin risk and the empirical margin risk upper bound the risk and empirical risk.
The generalization bound proposed by~\citet{bartlett2017spectrally} for neural networks is stated as follows:
\begin{theorem}[\citet{bartlett2017spectrally}]
  Let $(\rho^{(1)}, \dots, \rho^{(\depth)})$ be nonlinearities where $\forall i \in [\depth], \lip{\rho^{(i)}} < \infty$ and $\rho^{(i)}(0) = 0$.
  % Let $\left( \Amat^{(1)}, \dots, \Amat^{(\depth)} \right)$ be reference matrices of same dimensions as the matrices $\left( \Wmat^{(1)}, \dots, \Wmat^{(\depth)} \right)$.
  Let $\dimw^{(1)}, \dots, \dimw^{(p+1)}$ be integers such that $\Wmat^{(i)} \in \Rbb^{\dimw^{(i)} \times \dimw^{(i+1)}}$ and let $W = \max_i w^{(i)}$.
  Let $\Xmat$ a matrix where the rows of $\Xmat$ are the input data $\xvec^{(1)}, \dots, \xvec^{(m)} \in \Sset$.
  Let $N_\Omega: \Rbb^{w^{(1)}} \rightarrow \Rbb^{w^{(\depth+1)}}$ be a neural network parameterized by $\Omega$ as in the Definition~\ref{definition:ch2-neural_networks} where $\left( \Wmat^{(1)}, \dots, \Wmat^{(\depth)} \right)$ are the weights matrices.
  Then for every margin $\gamma > 0$, the following bound applies:
  \begin{equation}
    R_\Dset(N_\Omega) \leq R_{\gamma,\Sset} (N_\Omega) + \widetilde{\bigO}\left( \frac{\norm{\Xmat}_\fro \Rset_{\Omega}}{\gamma |\Sset|} \ln(W) + \sqrt{\frac{\ln(1/\delta)}{|\Sset|}} \right)
  \end{equation}
  with probability at least $1 - \delta$, where the \emph{spectral complexity} $\Rset_{\Omega}$ is defined as 
  \begin{equation}
    % \Rset_{\Omega} = \left(\prod_{i = 1}^{\depth} \lip{\rho^{(i)}} \norm{\Wmat^{(i)}}_2 \right) \left( \sum_{i=1}^{\depth} \frac{\norm{\Wmat^{(i)\top} - \Amat^{(i)\top}}_{2,1}^{2/3}}{\norm{\Wmat^{(i)}}_2^{2/3}} \right)^{3/2}
    \Rset_{\Omega} = \left(\prod_{i = 1}^{\depth} \lip{\rho^{(i)}} \norm{\Wmat^{(i)}}_2 \right) \left( \sum_{i=1}^{\depth} \frac{\norm{\Wmat^{(i)\top}}_{2,1}^{2/3}}{\norm{\Wmat^{(i)}}_2^{2/3}} \right)^{3/2}
  \end{equation}
  % where the reference matrices $\Amat^{(i)}$ can be chosen according to the architecture of the network, 
  where $\widetilde{\bigO}(\ \cdot\ )$ ignores logarithmic factors and the norm $\norm{\ \cdot\ }_{p,q}$ is defined by $\norm{\Wmat}_{p,q} \triangleq \left( \sum _{j=1}^{n} \left(\sum _{i=1}^{m} |a_{ij}|^{p} \right)^{\frac {q}{p}}\right)^{\frac {1}{q}}$.
\end{theorem}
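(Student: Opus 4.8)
The plan is to bound the empirical Rademacher complexity of the loss class $\mathcal{F}_\gamma \triangleq \big\{ (\xvec,y) \mapsto L_\gamma\big(-\margin(N_\Omega(\xvec),y)\big) \big\}$, where $\Omega$ ranges over all weight tuples whose matrices obey the spectral-norm and $\norm{\cdot}_{2,1}$ budgets implicit in $\Rset_{\Omega}$, and then feed this into the standard Rademacher generalization bound. Since $L_\gamma$ is $(1/\gamma)$-Lipschitz and dominates the $0$-$1$ loss, and the margin operator is Lipschitz, I would invoke Talagrand's contraction inequality to reduce the task to controlling the empirical Rademacher complexity of the raw network class $\{N_\Omega(\cdot)\}$ evaluated on the data matrix $\Xmat$, up to the multiplicative factor $1/\gamma$; the additive term $\sqrt{\ln(1/\delta)/|\Sset|}$ then comes from the usual bounded-differences (McDiarmid) step, using that the margin risk upper-bounds the risk.

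The core will be a covering-number estimate for the network class in the data-dependent metric $\norm{N_\Omega(\Xmat) - N_{\widetilde\Omega}(\Xmat)}_\fro$, which I would establish in two steps. \emph{Step 1 (layerwise perturbation).} Using $\rho^{(i)}(0)=0$ and $\lip{\rho^{(i)}}<\infty$ to propagate errors through the composition, together with submultiplicativity of $\norm{\cdot}_2$, one shows that replacing each $\Wmat^{(i)}$ by $\widetilde\Wmat^{(i)}$ changes the output on $\Xmat$ by at most
\[
  \norm{\Xmat}_\fro \Big(\prod_{i=1}^{\depth} \lip{\rho^{(i)}}\norm{\Wmat^{(i)}}_2\Big) \sum_{i=1}^{\depth} \frac{\norm{\Wmat^{(i)} - \widetilde\Wmat^{(i)}}_2}{\norm{\Wmat^{(i)}}_2}.
\]
\emph{Step 2 (per-layer cover via Maurey).} For a single layer one needs a cover of the admissible matrices in the \emph{spectral} norm (restricted to their action on $\Xmat$) whose log-cardinality is independent of the ambient width $W$ up to logarithmic factors. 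Expressing the matrix action as an average over its columns and applying the Maurey sparsification lemma — sampling columns with probability proportional to their squared $\ell_2$ mass — produces an approximant with few nonzero columns and a cover of size $\exp\!\big(\widetilde{\bigO}(\norm{\Wmat^{(i)\top}}_{2,1}^2\,\norm{\Xmat}_\fro^2/\epsilon_i^2)\big)$.

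Composing the per-layer covers with radii $\epsilon_i$ allotted so as to minimize $\sum_i \epsilon_i$ for a fixed total $\epsilon$ — the optimal choice $\epsilon_i \propto \norm{\Wmat^{(i)}}_2^{2/3}$ is exactly what produces the factor $\big(\sum_i \norm{\Wmat^{(i)\top}}_{2,1}^{2/3}/\norm{\Wmat^{(i)}}_2^{2/3}\big)^{3/2}$ appearing in $\Rset_{\Omega}$ — yields $\ln \mathcal{N}(\epsilon) \le \widetilde{\bigO}\big(\norm{\Xmat}_\fro^2\,\Rset_{\Omega}^2/\epsilon^2\big)$ for the whole class. Plugging this into a refined Dudley entropy integral — which for a $c/\epsilon^2$-type metric entropy gives, after optimizing the truncation radius, a bound of order $\sqrt{c}/|\Sset|$ up to logs — controls the empirical Rademacher complexity of the raw network class on $\Xmat$ by $\widetilde{\bigO}\big(\norm{\Xmat}_\fro\,\Rset_{\Omega}\,\ln(W)/|\Sset|\big)$; dividing by $\gamma$ for the ramp-loss composition and adding the deviation term gives the stated inequality with probability at least $1-\delta$.

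The main obstacle will be Step 2: one must run Maurey's empirical method so that the approximation is controlled in the spectral (operator) norm on the data, not merely in Frobenius norm, and carefully track how the $\norm{\cdot}_{2,1}$ budget couples with $\norm{\Xmat}_\fro$ and the chosen radius $\epsilon_i$. The layerwise perturbation bound, the allocation of radii across layers, the Dudley integral and the Rademacher-to-generalization conversion are comparatively routine, though keeping the $2/3$ exponents and the logarithmic factors absorbed into $\widetilde{\bigO}$ straight will demand care.
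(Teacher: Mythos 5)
The paper does not prove this theorem; it is stated in the background chapter and attributed directly to \citet{bartlett2017spectrally}, so there is no in-paper proof to compare against. Your sketch nonetheless reproduces the architecture of the Bartlett--Foster--Telgarsky argument: reduction to Rademacher complexity and Talagrand contraction, a layerwise error-propagation lemma, per-layer Maurey sparsification, optimal allocation of the covering radii (which produces the $2/3$ exponents and the $3/2$ outer power in $\Rset_{\Omega}$), the Dudley entropy integral, and a separate concentration step for the $\sqrt{\ln(1/\delta)/|\Sset|}$ term.

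One correction to your ``main obstacle'' framing. You will not need, and Maurey will not give you, a spectral-norm cover; the difficulty is created by your own Step~1, which measures the layer-$i$ perturbation by $\norm{\Wmat^{(i)} - \widetilde\Wmat^{(i)}}_2$, and that form does force operator-norm covers. The decomposition actually used (Lemma~A.7 of Bartlett et al.) propagates the error through the composition in the data-dependent Frobenius seminorm $\norm{(\Wmat^{(i)} - \widetilde\Wmat^{(i)})\,\Xmat^{(i-1)}}_\fro$, where $\Xmat^{(i-1)}$ denotes the forward image of $\Xmat$ after the first $i-1$ (already-covered) layers, which is bounded in Frobenius norm by $\norm{\Xmat}_\fro\prod_{j<i}\lip{\rho^{(j)}}\norm{\Wmat^{(j)}}_2$. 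Maurey's sparsification (their Lemma~3.2) naturally produces covers in exactly this seminorm, so the entire argument remains in Frobenius-on-data metrics from start to finish. Once you rewrite Step~1 in that form, your Step~2 is no longer an obstacle and the rest of your plan goes through as described.
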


As we stated in the introduction, the generalization of neural networks is important but should not be the only metric to consider.
Indeed, a neural network that performs well on natural data could be vulnerable to adversarial attacks.
We saw in the previous section that \emph{adversarial training} is a technique that successfully improves the robustness of neural networks by learning on adversarial examples instead of natural ones.
In the following, we present recent results devised by~\citet{farnia2018generalizable} on the generalization capabilities of neural networks trained with adversarial training.
First, let us define the \emph{adversarial margin risk} as:
\begin{equation}
  R^{\text{adv}}_{\gamma,\Dset} (N_\Omega) \triangleq \Ebb_{(\xvec, y) \sim \Dset} \ L_\gamma \big(-\margin(N_\Omega(\xvec + \adv^{\text{adv}}_\Omega(\xvec)), y) \big) \enspace,
\end{equation}
% \begin{equation}
%   R_{\gamma, \Dset}^{\text{adv}}(N_\Omega) \triangleq \Ebb_{(\xvec, y) \sim \Dset} \left[ -\margin\left( N_\Omega(\xvec +\adv^{\text{adv}    \right) \right] 
% \end{equation}
where $\adv^{\text{adv}}_\Omega(\xvec)$ is an adversarial perturbation following the loss maximization strategy presented above.
The \emph{adversarial empirical margin risk} $R^{\text{adv}}_{\gamma,\Sset}$ is defined similarly as in~\Cref{equation:ch2-emprical_margin_risk}.
The adversarial generalization bound is stated as follows:
\begin{theorem}[\citet{farnia2018generalizable}]
  Let $(\rho^{(1)}, \dots, \rho^{(\depth)})$ be nonlinearities where $\forall i \in [\depth], \lip{\rho^{(i)}} = 1$ and $\rho^{(i)}(0) = 0$.
  Let $b = \max_{\xvec \in \Sset} \norm{\xvec}_2$.
  Let $\Xset \subset \Rbb^{\dimw^{(1)}}$ and $\Yset \subset \Rbb^{\dimw^{(\depth+1)}}$ be the input and output spaces respectively.
  Let $N_\Omega: \Xset \rightarrow \Yset$ be a neural network parameterized by $\Omega$ of depth $\depth$ and of largest width $W = \max_i \dimw^{(i)}$ following the~\Cref{definition:ch2-neural_networks}.
  Assume that for a constant $c_1 \geq 1$ the weights matrices satisfy:
  \begin{equation}
    \forall i, \quad \frac{1}{c_1} \leq \frac{\norm{\Wmat^{(i)}}_2}{\prod_{j=1}^\depth \left(\norm{\Wmat^{(j)}}_2\right)^{1/\depth}} \leq c_1
  \end{equation}
  and that $c_2 \leq \norm{\nabla_\xvec L(N_\Omega(\xvec), y)}_2$ holds for a constant $c_2 > 0$, any $y \in \Yset$ and any $\xvec \in \Bset_2(\xvec, \epsilon)$.
  Let us consider an attack such that $\norm{\adv^{\text{\emph{adv}}}_\Omega(\xvec)}_2 \leq \epsilon$ with $r$ iterations, and a stepsize $\alpha$.
  Then for every margin $\gamma > 0$, the following bound applies:
  \begin{equation}
    R_{0, \Dset}^{\text{\emph{adv}}}(N_\Omega) \leq R_{\gamma, \Sset}^{\text{\emph{adv}}}(N_\Omega) + \bigO \left( \sqrt{\frac{(b + \epsilon)^2 \depth^2 W \log(\depth W) \Rset^{\text{\emph{adv}}}_{N_\Omega} + \depth \log(\frac{r \depth |\Sset| \log(M)}{\delta})}{\gamma^2 |\Sset|}} \right)
  \end{equation}
  with probability at least $1 - \delta$, where the \emph{adversarial spectral complexity} $\Rset^{\text{\emph{adv}}}_{\Omega}$ is defined as
  \begin{equation}
    \Rset^{\text{\emph{adv}}}_{N_\Omega} \triangleq \left[ \prod_{i=1}^\depth \norm{\Wmat^{(i)}}_2 \left( 1 + (\alpha / c_2) \frac{1 - (2\alpha/c_2)^r \Phi_\Omega^r }{1 - (2\alpha/c_2) \Phi_\Omega } \Phi_\Omega  \right) \right]^2 \sum_{i = 1}^\depth \frac{\norm{\Wmat^{(i)}}_\fro^2}{\norm{\Wmat^{(i)}}_2^2}
  \end{equation}
  and where $\Phi_\Omega$ is defined as: $\Phi_\Omega \triangleq \left(\prod_{i=1}^\depth \norm{\Wmat^{(i)}}_2 \right) \sum_{i=1}^\depth \prod_{j=1}^{i} \norm{\Wmat^{(j)}}_2$
\end{theorem}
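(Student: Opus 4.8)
The plan is to follow the Rademacher-complexity / covering-number strategy of \citet{bartlett2017spectrally}, but applied to the class of \emph{perturbed} networks $\mathcal{F}^{\text{adv}} = \{ \xvec \mapsto N_\Omega(\xvec + \adv^{\text{adv}}_\Omega(\xvec)) : \Omega \text{ admissible} \}$ rather than to $\{ \xvec \mapsto N_\Omega(\xvec) \}$. First I would recall the standard reduction: since the ramp loss $L_\gamma$ is $1/\gamma$-Lipschitz, symmetrization bounds the gap $R^{\text{adv}}_{0,\Dset}(N_\Omega) - R^{\text{adv}}_{\gamma,\Sset}(N_\Omega)$, up to the $\sqrt{\log(1/\delta)/|\Sset|}$ deviation term, by $1/\gamma$ times the empirical Rademacher complexity of $\margin \circ \mathcal{F}^{\text{adv}}$ on $\Sset$, which by the Dudley entropy integral is in turn controlled by the covering numbers $\mathcal{N}(\mathcal{F}^{\text{adv}}, \epsilon, \norm{\ \cdot\ }_{L^2(\Sset)})$.

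The crux is to bound these covering numbers, and the new difficulty relative to the non-adversarial case is that $\adv^{\text{adv}}_\Omega(\xvec)$ depends on $\Omega$ through $r$ iterations of projected gradient ascent (\Cref{equation:ch2-projectionPGD}), each of which backpropagates a gradient through all $\depth$ layers. Concretely I would (i) show that $\xvec \mapsto \nabla_\xvec L(N_\Omega(\xvec),y)$ is Lipschitz in $\xvec$ with constant at most a multiple of $\Phi_\Omega = \big(\prod_i \norm{\Wmat^{(i)}}_2\big)\sum_i \prod_{j \leq i}\norm{\Wmat^{(j)}}_2$, which is exactly the product/sum-of-products one obtains by differentiating a composition of $1$-Lipschitz layers and bounding the second-order terms layer by layer; (ii) unroll the PGD recursion, using the hypothesis $c_2 \leq \norm{\nabla_\xvec L}_2$ to keep the normalized step $\argmax_{\norm{\adv}_2 \leq 1}\adv^\top\nabla_\xvec L = \nabla_\xvec L / \norm{\nabla_\xvec L}_2$ well-conditioned, so that the sensitivity of $\adv^{\text{adv}}_\Omega(\xvec)$ to a perturbation of $\Omega$ accumulates geometrically and is controlled by the factor $1 + (\alpha/c_2)\,\Phi_\Omega\,\frac{1-(2\alpha/c_2)^r\Phi_\Omega^r}{1-(2\alpha/c_2)\Phi_\Omega}$ appearing inside $\Rset^{\text{adv}}_{N_\Omega}$; and (iii) conclude that feeding an input shifted by $\adv^{\text{adv}}_\Omega$ into the network dilates the per-layer operator-norm bounds by that same factor, so an $\epsilon$-cover of the weight matrices induces an $\epsilon'$-cover of $\mathcal{F}^{\text{adv}}$ with $\epsilon'$ inflated by it.

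With this in hand I would invoke the matrix-covering lemma of \citet{bartlett2017spectrally} — an $\epsilon$-cover in $\norm{\ \cdot\ }_2$ of a $\norm{\ \cdot\ }_\fro$-bounded matrix ball has log-cardinality $\widetilde{\bigO}(\norm{\Wmat}_\fro^2/\epsilon^2)$ — and assemble the $\depth$ per-layer covers, so that $\log\mathcal{N}(\mathcal{F}^{\text{adv}},\epsilon,\norm{\ \cdot\ }_{L^2(\Sset)})$ is proportional to $(b+\epsilon)^2\big(\text{layer-product factor}\big)^2\sum_i \norm{\Wmat^{(i)}}_\fro^2/\norm{\Wmat^{(i)}}_2^2 \,/\,\epsilon^2 = (b+\epsilon)^2 \Rset^{\text{adv}}_{N_\Omega}/\epsilon^2$; the additional $\depth^2 W \log(\depth W)$ and $\depth\log(r\depth|\Sset|\log(M)/\delta)$ come respectively from summing $\depth$ layer covers of width at most $W$ and from a union bound over the $r$ PGD iterates and the discretization. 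The hypothesis with constant $c_1$ enters here, letting one replace each ratio $\norm{\Wmat^{(i)}}_2/\big(\prod_j\norm{\Wmat^{(j)}}_2\big)^{1/\depth}$ by $\bigO(1)$ so the layer-product factor pulls out cleanly. Plugging $\log\mathcal{N}$ into the Dudley integral and optimizing the truncation radius gives the stated $\bigO\big(\sqrt{\,\cdot\,/(\gamma^2|\Sset|)}\big)$ bound.

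I expect the main obstacle to be step (ii): tracking, simultaneously through the $r$ PGD iterations and the $\depth$ layers of backpropagation, how a perturbation of $\Omega$ propagates to $\adv^{\text{adv}}_\Omega(\xvec)$ and then to $N_\Omega(\xvec+\adv^{\text{adv}}_\Omega(\xvec))$ without the constants blowing up — this is precisely where the lower bound $c_2$ on the gradient norm and the near-balanced layer norms ($c_1$) are genuinely needed, and where a careless Lipschitz estimate would produce a factor exponential in $r$ rather than the controlled geometric series that defines $\Rset^{\text{adv}}_{N_\Omega}$.
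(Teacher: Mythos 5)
The thesis does not actually prove this statement; it is reproduced from \citet{farnia2018generalizable} as background material, so there is no in-paper proof to compare against. Your sketch nonetheless captures the route taken in the source: a covering-number argument in the style of \citet{bartlett2017spectrally}, applied to the class of networks precomposed with the PGD attack map, where the new ingredient is exactly the one you isolate --- Lipschitz control of $\Omega \mapsto \adv^{\text{adv}}_\Omega(\xvec)$ through the $r$ normalized-gradient iterations, using the lower bound $c_2$ to keep $\vvec \mapsto \vvec / \norm{\vvec}_2$ well-behaved, the factor $\Phi_\Omega$ to bound the Lipschitz constant of $\xvec \mapsto \nabla_\xvec L$, and the balance condition $c_1$ to collapse the per-layer ratios, the whole thing accumulating into the geometric series that multiplies $\prod_i \norm{\Wmat^{(i)}}_2$ inside $\Rset^{\text{adv}}_{N_\Omega}$. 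Your own flag on step (ii) is the right place to worry: a naive composition of Lipschitz bounds through the unrolled PGD would be exponential in $r$, and the whole content of the theorem is that $c_2$ and the normalization keep it to the stated geometric series.
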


These generalization bounds for neural networks give us a theoretical justification for \emph{Lipschitz regularization}.
However, computing the spectral norm of the weights matrices is a difficult task.
In the next chapter, we will review some known techniques and we propose in \Cref{chapter:ch5-lipschitz_bound} new efficient method for regularizing the Lipschitz constant of neural networks.

%%%%%%%%%%%%%%%%%%%%%%%%%%%%%%%%%%%%%%%%%%%%%%%%%%%%%%%%%%%%%%%%%%%%%%%%%%%%%%%%
\section{Summary of the Chapter}
\label{section:ch2-summary_of_the_background}
%%%%%%%%%%%%%%%%%%%%%%%%%%%%%%%%%%%%%%%%%%%%%%%%%%%%%%%%%%%%%%%%%%%%%%%%%%%%%%%%

As explained in the Introduction (\Cref{chapter:ch1-introduction}), our contributions lie at the intersection between neural networks and structured matrices.
In this chapter, we have reviewed the necessary concepts to present our contributions and some related works.

First, \Cref{section:ch2-a_primer_on_circulant_and_toeplitz_matrices} introduced circulant and Toeplitz matrices which are the main mathematical objects used in this thesis.
Circulant and Toeplitz matrices are structured matrices in which each descending diagonal, from left to right, is constant.
These structured matrices are the building blocks of our contribution on compact neural networks (\Cref{chapter:ch4-diagonal_circulant_neural_network}) and enable fast approximation of the Lipschitz constant of convolution layers leading to a new regularization scheme (\Cref{chapter:ch5-lipschitz_bound}).

Finally, in \Cref{subsection:ch2-introduction_on_supervised_learning}, we gave a quick overview of the concept of supervised learning, which presents the mathematical tools for optimizing a parameterized function in order to map an input to an output based on a series of input-output pairs.
Although the statistical learning framework considers generic hypothesis space, in this work we use a class of functions called neural networks presented in \Cref{subsection:ch2-preliminaries_on_neural_networks}.
We also presented, in \Cref{subsection:ch2-adversarial_attacks_robustness_of_neural_networks}, the concept of adversarial attacks and robustness of neural networks.
We showed how a neural network can be sensitive to small perturbations to its input and thus vulnerable to adversarial examples.
Reducing the sensitivity and therefore increasing the robustness of neural networks is the central theme of our second contribution presented in \Cref{chapter:ch5-lipschitz_bound}.
Finally, in \Cref{subsection:ch2-recent_results_on_the_theory_of_neural_networks}, we have presented some recent results on the theory of neural networks.
These results give us insights on how neural networks generalize and a theoretical justification of the regularization scheme that we propose in \Cref{chapter:ch5-lipschitz_bound}.

  %%%%%%%%%%%%%%%%%%%%%%%%%%%%%%%%%%%%%%%%%%%%%%%%%%%%%%%%%%%%%%%%%%%%%%%%%%%%%%%%
\chapter{Related Work}
\label{chapter:ch3-related_work}
%%%%%%%%%%%%%%%%%%%%%%%%%%%%%%%%%%%%%%%%%%%%%%%%%%%%%%%%%%%%%%%%%%%%%%%%%%%%%%%%
\localtoc

\section*{}

% This thesis makes contributions on building compact and robust neural networks with help from Toeplitz matrix theory.
This chapter, divided into two parts, is intended to provide an overview of the state of the art related to our contributions.
First, we present current methods for building compact neural networks.
Since the scope of application of these techniques is broad, we have chosen to focus mainly on work that uses linear algebra tools and more particularly structured matrices. 
We present in the first subsection an overview of general techniques for building compact neural networks.
In the next subsection, we present in more detail the current methods for building compact neural networks with structured matrices.
Finally, we discuss these techniques with respect to our contribution to compact neural networks.

The second part of this chapter presents current methods for regularizing the Lipschitz constant of neural networks with the aim of improving their robustness.
This section is divided into four parts.
First, we present techniques that focus on the computation of the Lipschitz constant of neural networks.
Although theoretically and empirically interesting, we will see how these techniques do not scale and therefore cannot be applied to current neural network architectures.
The following subsection presents the approach of Lipschitz regularization via the Lipschitz constant of individual layers of the networks.
We describe the advantages and disadvantages of this approach.
Moreover, in the third subsection, we focus our presentation on current techniques that compute the singular values of convolution layers.
Finally, we discuss these techniques with respect to our contribution on Lipschitz regularization of convolutional neural networks.

% Our second contribution focuses on building robust neural networks by regularizing the Lipschitz constant of neural networks.
% Hence, we present in a second part recent works on regularizing the Lipschitz constant of neural networks.

% The second part of this chapter presents current methods for regularizing this constant that aim at improving the robustness of neural networks.
% We omit methods that are orthogonal to our approach for clarity and conciseness.

%%%%%%%%%%%%%%%%%%%%%%%%%%%%%%%%%%%%%%%%%%%%%%%%%%%%%%%%%%%%%%%%%%%%%%%%%%%%%%%%
\section{Related Work on Compact Neural Networks}
\label{section:ch3-related_work_on_compact_neural_networks}
\subsection{General Techniques to Build Compact Neural Networks}
%%%%%%%%%%%%%%%%%%%%%%%%%%%%%%%%%%%%%%%%%%%%%%%%%%%%%%%%%%%%%%%%%%%%%%%%%%%%%%%%

As seen in the Introduction (\Cref{chapter:ch1-introduction}), scaling up networks can lead to better accuracy~\cite{tan2019efficientnet,brown2020language}.
However, large neural networks lead to difficult and expensive training and after observing that a lot of parameters in large neural networks were redundant~\cite{dai2018compressing,frankle2018lottery}, an important question arises: \emph{do neural networks need to be over-parameterized? And if not, how to build accurate and compact neural networks?} 

Numerous other directions have been investigated to build compact and cost-effective neural networks without impacting the accuracy.
For example \citet{gupta2015deep,micikevicius2018mixed} have proposed to represent weights with limited numerical precision to reduce training time and memory requirements.
They used half-precision floating-point format instead of single-precision floating-point format which uses 32 bits of computer memory.
In the same direction, \citet{courbariaux2015binaryconnect} have proposed a method to train neural networks with binary weights without an important loss in the accuracy.

An important idea in model compression, proposed by~\citet{bucilua2006model}, is based on the observation that the model used for training is not required to be the same as the one used for inference.
Indeed, models compressed after training can be deployed on smartphones or IoT devices.
Based on this idea, multiple post-processing techniques have been developed: a quantization procedure which consists in converting the weights into a binary or integer formats \emph{after} the training phase~\cite{mellempudi2017ternary,rastegariECCV16}, pruning techniques~\cite{dai2018compressing,han2015deep,lin2017runtime} or sparsity regularizers~\cite{collins2014memory,dai2018compressing,liu2015sparse} which consists in removing redundant weights after training and taking advantage of the sparse structure of the weight matrices.

Sparse neural networks have also been extensively studied since the seminal work of \citet{frankle2018lottery} in which they propose the \emph{Lottery Ticket Hypothesis}. 
This hypothesis states that there exists a sparse subnetwork of a dense neural network that when trained in isolation can match the test accuracy of the original dense network after training for at most the same number of iterations. 
This hypothesis led to a series of works on sparse neural networks \cite{zhou2019deconstructing,malach2019proving,evci2020rigging}.

Moreover, \citet{ba2014deep} have empirically demonstrated that shallow neural networks can learn the complex functions previously learned by other deep neural networks.
This result led \citet{hinton2015distilling} to propose a technique called \emph{model distillation} which consists in training a large complex model using all the available data and resources to be as accurate as possible, then a smaller and more compact model is trained to approximate the first model.
Although interesting for deployment purposes, this approach still requires to train one large network and one shallow, which entails a significant training cost.

\begin{figure}[t]
  \centering
  \includegraphics[width=0.80\textwidth]{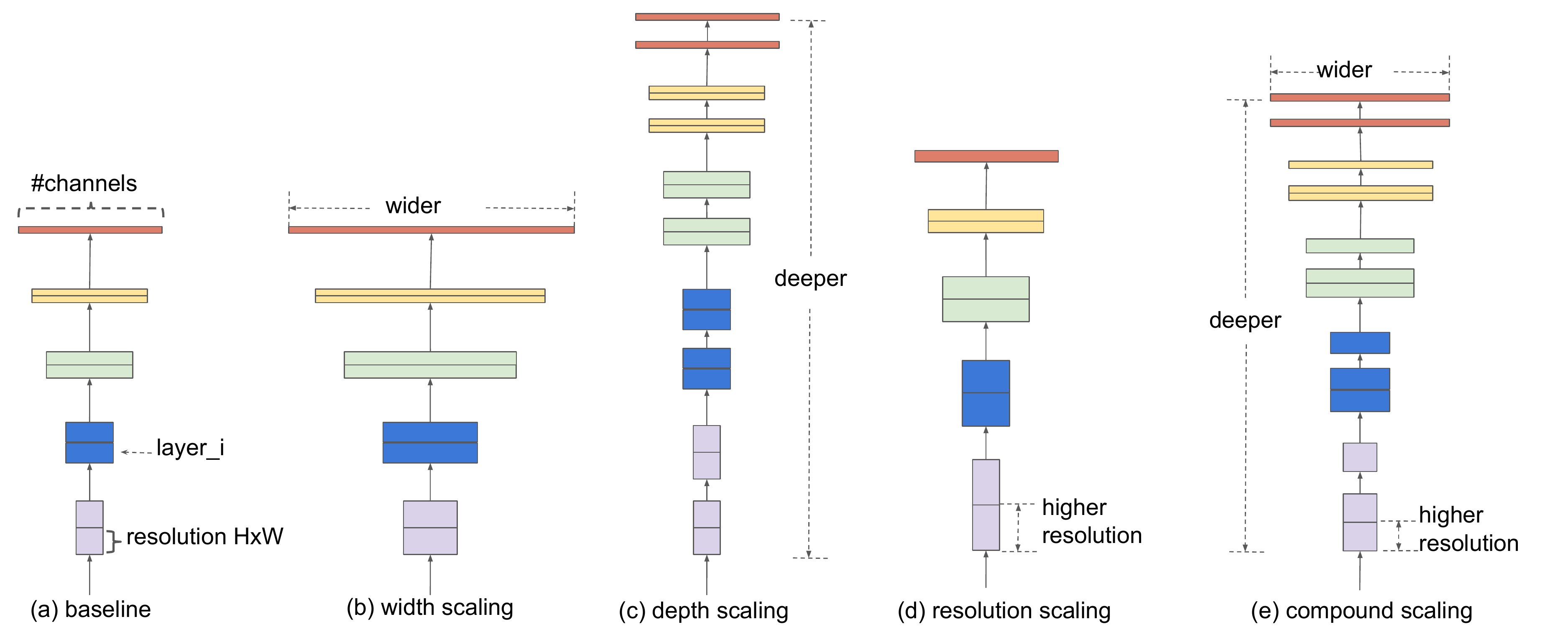}
  \caption{Illustration of the scaling of the EfficientNet architecture.}
  \label{figure:p1-ch3-illustration_efficientnet}
\end{figure}

More recently, \citet{zoph2018learning,real2019regularized} have designed algorithms that automatically tune the width and depth of neural network architectures to obtain the best trade-off between compactness and accuracy.
With this approach, \citet{tan2019efficientnet} found a new compound scaling method that uniformly scales network width and depth leading to efficient and compact architecture.
\Cref{figure:p1-ch3-illustration_efficientnet} illustrates the different scaling proposed by~\citet{tan2019efficientnet}.

\subsection{Building Compact Neural Networks with Structured Matrices}
\label{subsection:ch3-building_compact_neural_networks_with_structured_matrices}
%%%%%%%%%%%%%%%%%%%%%%%%%%%%%%%%%%%%%%%%%%%%%%%%%%%%%%%%%%%%%%%%%%%%%%%%%%%%%%%%

% An effective method to build compact neural networks is to constrain the hypothesis space in which the learning algorithm ``chooses'' the predictor.
% As seen in \Cref{chapter:ch2-background}, this constraint is called \emph{inductive bias}. 

% Another way of constraining the weight representation and reduce the memory requirement of neural networks is to impose a \emph{structure} on weight matrices. 

% The idea of building compact neural networks with structured matrices consists of replacing the weight matrices $\Wmat^{(i)}$ with \emph{structured matrices}.

% A structured matrix is a $n \times n$ matrix whose entries have a formulaic relationship, allowing the matrix to be represented with fewer than $n^2$ parameters.
% The formulaic relationship between entries is an important feature to consider, for example, a sparse matrix has fewer than $n^2$ parameters but does not have a clear relationship between its entries.

% by using \emph{structured neural networks}. 
% The idea of structured neural networks consists of replacing the dense weight matrices with \emph{structured} ones.

An effective method to build compact neural networks is to constrain the hypothesis space by imposing a \emph{structure} on the weight matrices which constitute the different layers of the network.
% A structured matrix is a $n \times n$ matrix which can be represented with fewer than $n^2$ parameters and where the entries are distributed along specific rules.

%%%%%%%%%%%%%%%%%%%%%%%%%%%%%%%%%%%%%%%%%%%%%%%%%%%%%%%%%%%%%%%%%%%%%%%%%%%%%%%%
\paragraph{Structured Neural Networks with Low Rank Approximation} ~\\
%%%%%%%%%%%%%%%%%%%%%%%%%%%%%%%%%%%%%%%%%%%%%%%%%%%%%%%%%%%%%%%%%%%%%%%%%%%%%%%%

\noindent
For example, \citet{sainath2013lowrank} were among the first to use low-rank matrices in deep learning contexts followed by the work of~\citet{jaderberg2014speeding,yu2017compressing}.
Their work consists in replacing the weight matrices of size $n \times m$ by the product of two rectangular matrices of size $n \times r$ and $r \times m$, where $r$ corresponds to the rank of the new matrix. 
In order to reduce the number of parameters, the rank $r$ is chosen to be small such that $r \ll \min(m, n)$.
By representing the weight matrices with a low-rank decomposition, one can reduce the storage from $mn$ parameters to $(mr + nr)$ and accelerate the matrix-vector product from $\bigO(mn)$ to $\bigO(mr + rn)$.
To enforce the low-rank constraint, reduced storage and computation time during training, the authors trained the coefficients of the two rectangular matrices directly. 
Formally, let $\Wmat \in \Rbb^{n \times m}$ be a weight matrix and let $\widetilde{\Wmat}$ be the low-rank approximation of rank $r$ of the matrix $\Wmat$.
Then, the low-matrix $\widetilde{\Wmat}$ can be decomposed by the product of two rectangular matrices $\Umat \in \Rbb^{n \times r}$ and $\Vmat \in \Rbb^{r \times m}$ such that $\widetilde{\Wmat} = \Umat \Vmat$.
Therefore, a neural network layer with low-rank approximation can be expressed as follows:
\begin{equation}
  \layer_{\Umat, \Vmat, \bvec} (\xvec) = \act\left( \Umat \Vmat \xvec + \bvec \right) .
\end{equation}
The scalar $r$ defining the size of the two rectangular matrices becomes an hyper-parameter and controls the trade-off between the expressivity and compactness of the layer. 

In the same vein, \citet{oseledets2011tensor} have proposed the \emph{Tensor Train decomposition} (TT-decomposition), which is based on the tensor rank decomposition (Tucker decomposition) proposed by~\citet{hitchcock1927expression} and named after \citet{tucker1966some}.
The TT-decomposition is defined as follows.
Let $\boldsymbol{\Aset} \in \Rbb^{n_1 \times n_2 \times \dots \times n_{d-1} \times n_d}$ be a $d$-dimensional tensor.
The Tensor-Train Decomposition factorizes $\boldsymbol{\Aset}$ in a product of third-order tensors and it is given by: 
\begin{equation}
  (\boldsymbol{\Aset})_{(i_1,\dots,i_d)} = (\Gmat^{(1)})_{(i_1, :)} (\boldsymbol{\Gset}^{(2)})_{(:, i_2, :)} (\boldsymbol{\Gset}^{(3)})_{(:, i_3, :)} \dots (\Gmat^{(d)})_{(:, i_d)}
\end{equation}
where $\Gmat^{(i)}$ are matrices and $\boldsymbol{\Gset}^{(i)}$ are third-order tensors of size $r_{i} \times r_{i+1}$ called \emph{TT-cores}.
The sequence $\{r_k\}_{k=0}^d$ is referred to as the ranks of the TT-representation.
The above equation can be equivalently rewritten as a sum of elements of the TT-cores:
\begin{equation}
  (\boldsymbol{\Aset})_{(i_1,\dots,i_d)} = \sum_{\alpha_1, \dots, \alpha_{d-1}} (\Gmat^{(1)})_{(i_1, \alpha_1)} (\boldsymbol{\Gset}^{(2)})_{(\alpha_1, i_2, \alpha_2)} \dots (\Gmat^{(d)})_{(\alpha_{d-1}, i_d)}
\end{equation}
\citet{oseledets2011tensor} have shown that for an arbitrary tensor $\boldsymbol{\Aset}$, several TT-representations exist with different ranks.
The TT-decomposition can be very efficient in terms of memory requirement if the ranks are small.
Indeed, the tensor $\boldsymbol{\Aset}$ has $\prod_{k=1}^{d} n_k$ values compared with $\sum_{k=1}^d n_k r_{k-1} r_k$ values.

The TT-decomposition has been extensively used in the context of deep learning.
\citet{novikov2015tensorizing} was one of the first to use this technique to reduce the number of parameters of neural networks by using the decomposition to replace the fully connected layer of the VGG architecture~\cite{simonyan2014very}.
They reported a compression factor of the dense weight matrix up to 200000 times leading to the compression factor of the whole network up to 7 times with only 0.3 point drop of TOP-5 accuracy on ImageNet~\cite{deng2009imagenet}.
With this work, \citet{novikov2015tensorizing} have demonstrated that the TT-decomposition allows an important reduction of the number of parameters while preserving the expressive power of the layers.
Later, the TT-decomposition was used in other types of architectures.
\citet{garipov16ttconv} used it to compress convolution layers as well as fully connected layers.
\citet{yang2017tensor} used it in the context of video classification, \citet{tjandra2017compressing} compressed the layers of recurrent neural networks and finally, \citet{xindian2019tensorized} developed a compact architecture based on TT-decomposition for Language Modeling.

However, the Tensor-Train decomposition has some limitations.
Although it can reduce the number of parameters when the ranks are low, finding the best alignment of the tensor dimensions in order to find the best optimized TT-cores remains a challenging problem, as stated by~\citet{pan2019compressing}.

% In the same vein, \citet{novikov2015tensorizing} have proposed a generalization of low-rank decomposition.
% Instead of searching for low-rank approximation of the weight matrices, they consider multi-dimensional tensors and apply the \emph{Tensor Train decomposition} algorithm \cite{oseledets2011tensor}.
% The Tensor Train decomposition allows an important reduction of the number of parameters while preserving the expressive power of the layers.

%%%%%%%%%%%%%%%%%%%%%%%%%%%%%%%%%%%%%%%%%%%%%%%%%%%%%%%%%%%%%%%%%%%%%%%%%%%%%%%%
\paragraph{Neural Networks with Diagonal and Circulant Matrices} ~\\
%%%%%%%%%%%%%%%%%%%%%%%%%%%%%%%%%%%%%%%%%%%%%%%%%%%%%%%%%%%%%%%%%%%%%%%%%%%%%%%%

\noindent

\citet{cheng2015exploration} proposed to replace the weight matrix of a fully connected layer by the product of a circulant and a diagonal matrix leading to following structured layer:
\begin{equation}
  \layer_{\Dmat, \Cmat, \bvec} (\xvec) = \act \left( \Dmat \Cmat \xvec + \bvec \right) \enspace,
\end{equation}
where the circulant matrix is learned by a gradient-based optimization algorithm and the diagonal matrix entries are sampled at random in $\{-1, 1\}$. 
The idea of replacing dense matrices with circulant ones comes from their use in dimensionality reduction with the \emph{fast Johnson-Lindenstrauss transform}~\cite{hinrichs2011johnson,vybiral2011variant}, binary embedding~\cite{yu2014circulant}, and kernel approximation~\cite{yu2015compact}, etc.
Circulant matrices exhibit several interesting properties from the perspective of numerical computations.
Recall from \Cref{theorem:ch2-diagonalization_circulant_matrix} that circulant matrices can be diagonalized with the Fourier Transform as follows:
\begin{equation}
  \Cmat = \frac{1}{n} \Umat_n^* \diag(\Umat_n \cvec) \Umat_n \enspace.
\end{equation}
where the vector $\cvec$ corresponds to the first columns of the matrix $\Cmat$.
This decomposition allows a compact representation in memory ($n$ values instead of $n^2$) and efficient matrix-vector product with the FFT algorithm (see \Cref{algorithm:ch2-matrix_vector_product_circulant_matrix}).
% Thanks to this decomposition, circulant matrices exhibit several interesting properties from the perspective of numerical computations. Most importantly, any $n \times n$ circulant matrix $\Cmat$ can be represented using only $n$ coefficients instead of the $n^2$ coefficients required to represent classical unstructured. In addition, the matrix-vector product is simplified from O(n 2) to O(n log(n)) using the convolution theorem.
% Thanks to this decomposition, circulant matrices offer a compact representation in memory (they can be expressed with only $n$ values instead of $n^2$) and an efficient matrix-vector product with the FFT algorithm (see \Cref{algorithm:ch2-matrix_vector_product_circulant_matrix}).
Despite the reduction of expressivity, \citet{cheng2015exploration} demonstrated good empirical results using only a fraction of the original weights (90\% reduction).

\citet{moczulski2016acdc} built upon the work of~\citet{cheng2015exploration} and \citet{huhtanen2015factoring} and introduced two \emph{Structured Efficient Linear Layers} (SELL) based on the Fourier and cosine transforms.
First, by observing that the DC transform cannot express an arbitrary linear operator they proposed to apply the result of \citet{huhtanen2015factoring} which states that almost all matrices can be decomposed as a product of DC transforms.
\begin{theorem}[Reformulation from \citet{huhtanen2015factoring}] \label{theorem:ch3-huhtanen}
  For every matrix $\Mmat \in \Cbb^{n \times n}$, for any $\epsilon > 0$, there exists a sequence of matrices $\{ \Amat^{(i)} \}_{i \in [2n-1]}$ where $\Amat^{(i)}$ is a circulant matrix if $i$ is odd, and a diagonal matrix otherwise, such that $\norm{\Amat^{(1)} \ldots \Amat^{(2n-1)} - \Mmat} < \epsilon$.
\end{theorem}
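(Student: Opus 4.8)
The plan is to prove the stronger statement that the set
\[
  \mathcal{F} = \left\{\, A^{(1)} \cdots A^{(2n-1)} \;:\; A^{(i)} = \circulant(c^{(i)}) \text{ for } i \text{ odd},\ A^{(i)} = \diagonal(d^{(i)}) \text{ for } i \text{ even} \,\right\}
\]
is \emph{dense} in $\mathbb{C}^{n\times n}$; the claimed $\epsilon$-approximation, with its alternating pattern of $n$ circulant and $n-1$ diagonal factors (indices $1,\dots,2n-1$), then follows at once. The first observation is that $\mathcal{F}$ is exactly the image of the polynomial map
\[
  \Phi : \left(\mathbb{C}^n\right)^{2n-1} \to \mathbb{C}^{n\times n}, \qquad \left(c^{(1)}, d^{(1)}, \dots, c^{(n)}\right) \longmapsto \circulant(c^{(1)})\,\diagonal(d^{(1)})\cdots\circulant(c^{(n)}).
\]
Since the domain is irreducible, the Zariski closure $\overline{\mathcal{F}}$ is an irreducible affine variety, and $\mathcal{F}$ is constructible (Chevalley); hence $\overline{\mathcal{F}} = \mathbb{C}^{n\times n}$ would already give density of $\mathcal{F}$. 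As $\dim \overline{\mathcal{F}}$ equals the generic rank of the differential of $\Phi$, it is therefore enough to exhibit a single point $p$ at which $d\Phi_p$ is surjective onto $\mathbb{C}^{n\times n}$.

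To carry this out I would write the differential explicitly. Recall from \Cref{theorem:ch2-diagonalization_circulant_matrix} that circulant matrices form a commutative algebra simultaneously diagonalized by the Fourier matrix $\Umat_n$; in particular $\circulant(c) = \sum_{k=0}^{n-1} c_k P^k$ with $P$ the cyclic shift. Writing $L$ (resp.\ $R$) for the product of the factors of $\Phi(p)$ lying to the left (resp.\ right) of the slot being differentiated, one gets
\[
  \frac{\partial \Phi}{\partial c^{(i)}_k}\Big|_p = L\, P^k\, R, \qquad \frac{\partial \Phi}{\partial d^{(i)}_k}\Big|_p = L\, e_k e_k^{*}\, R ,
\]
so $\mathrm{im}\, d\Phi_p$ is the span of the $2n^2-n$ matrices of these two forms. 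I would then choose $p$ with all factors invertible and generic (circulant factors with pairwise-distinct Fourier symbols, diagonal factors with pairwise-distinct, suitably incommensurable entries). The last circulant slot contributes the $n$-dimensional space $L_n\cdot\{\text{circulants}\}$, the first contributes $\{\text{circulants}\}\cdot R_1$, and each of the $n-1$ diagonal slots contributes an $n$-dimensional space of the form $L\cdot\{\text{diagonals}\}\cdot R$ for fixed invertible $L,R$; the role of genericity is that the base changes between the circulant and diagonal algebras are Vandermonde matrices at roots of unity, which mix coordinate directions enough that these $2n-1$ subspaces, in general position, together span all of $\mathbb{C}^{n\times n}$.

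The main obstacle is precisely this last verification: that for a well-chosen $p$ the $2n^2-n$ generators really do span the $n^2$-dimensional target. It is elementary in spirit but genuinely delicate, and it is here that the arithmetic of $n$ intervenes — for $n$ prime one can invoke Chebotarev's theorem (every minor of the DFT matrix is nonzero) to run a clean argument, whereas for composite $n$ some Fourier minors vanish and the choice of $p$ must be made with more care, which is one reason the theorem is naturally phrased, as here, with an $\epsilon$ rather than as exact surjectivity of $\Phi$. Since only \emph{one} good $p$ is needed, a short perturbation argument produces it. An alternative, more hands-on route — essentially that of Huhtanen and Perämäki — is an induction on $n$ in which one peels off the last circulant--diagonal pair and reduces reconstruction of the $n\times n$ target to a lower-dimensional sub-problem; this avoids algebraic geometry but requires setting up the correct inductive invariant. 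In either case, once density of $\mathcal{F}$ is established, given $M$ and $\epsilon>0$ we simply pick an element of $\mathcal{F}$ within distance $\epsilon$ of $M$, which by construction has the required alternating form.
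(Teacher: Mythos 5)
The paper does not prove this statement; it is cited as an external result of Huhtanen and Perämäki (2015), restated without proof both here (\Cref{theorem:ch3-huhtanen}) and again as \Cref{theorem:ch4-huhtanen} where it is used as a black box for the rank-based refinement \Cref{theorem:ch4-rank-decomposition}. So there is no in-paper proof to compare against, and the right question is whether your sketch, if completed, would constitute one.

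Your algebraic-geometry framing is a genuinely different route from Huhtanen and Perämäki's construction, which as you say is an induction on $n$ that peels off a circulant--diagonal pair at each step; your version trades that hands-on recursion for the abstract machinery (image of a polynomial map, Chevalley constructibility, $\dim \overline{\mathcal F}$ equals the generic rank of $d\Phi$). The reduction you perform is correct as far as it goes: if $d\Phi$ is surjective at one point, then $\overline{\mathcal F}^{\mathrm{Zar}} = \Cbb^{n\times n}$, and a constructible set whose Zariski closure is all of $\Cbb^{n\times n}$ contains a dense Zariski-open set, whose complement is a proper subvariety of measure zero, so $\mathcal F$ is Euclidean-dense. (That last step — Zariski density of a constructible set implies Euclidean density — is implicit in your use of ``constructible'' but deserves one explicit line, since Zariski density alone would not suffice for a set that is not constructible.) The differential formula $\partial \Phi / \partial c^{(i)}_k = L\, P^k R$, $\partial \Phi / \partial d^{(i)}_k = L\, \evec_k \evec_k^{*} R$ is also right, and your observation that at the identity the rank collapses to $2n-1$ (all slots giving the same circulant or diagonal algebra) correctly identifies why a generic $p$ is needed.

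However, what you have written is a \emph{reduction}, not a proof, and the step you defer — exhibiting one $p$ at which the $2n^2 - n$ generators $L P^k R$ and $L\, \evec_k\evec_k^* R$ span $\Cbb^{n\times n}$ — is not a technicality but the entire mathematical content of the theorem. Showing $d\Phi_p$ is surjective at some point is \emph{equivalent} to the density statement you are trying to establish (surjectivity at one point gives generic rank $n^2$, hence full Zariski closure, hence density by constructibility; conversely density forces the generic rank to be $n^2$), so the algebraic-geometry scaffolding has moved the difficulty, not removed it. You are candid about this, and your pointers (Chebotarev for prime $n$, the need for a more careful choice of $p$ when $n$ is composite, the inductive alternative) are the right things to think about; but until one of them is carried through, the argument is incomplete. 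Since the result is stated here explicitly as a citation from Huhtanen and Perämäki, the cleanest way to close the gap in this thesis is simply to keep the citation; if you want a self-contained proof along your lines, the Chebotarev/span argument for prime $n$ together with a degeneration or tensor-product argument for composite $n$ is the piece you still owe.
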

\noindent
Based on this result, they proposed to parameterize the layers of a neural network with $k$ products of diagonal and circulant matrices as follows:
\begin{equation} \label{equation:acdc_layer}
  \layer_{\Dmat, \Cmat, \bvec} (\xvec) = \act \left( \left(\prod_{i = 1}^{k} \Dmat^{(i)} \Cmat^{(i)} \right) \xvec + \bvec \right)
\end{equation}
where $\Dmat$ and $\Cmat$ are sequences of $k$ diagonal and circulant matrices respectively.
This structured layer is therefore parameterized by $n(2k+1)$ values and the value $k$ becomes a hyper-parameter controlling the trade-off between compactness and expressivity. 
By diagonalizing the circulant matrix, the layer in \Cref{equation:acdc_layer} can be expressed as a product of diagonal matrices and the Fourier transform as follows:
\begin{equation} \label{equation:ch3-laye_acdc}
  \layer^\act_{\dvec, \cvec, \bvec} (\xvec) = \act \left(\frac{1}{n^k} \left(\prod_{i = 1}^{k} \diag\left(\dvec^{(i)}\right) \Umat_n^* \diag\left(\Umat_n \cvec^{(i)}\right) \Umat_n \right) \xvec + \bvec \right)
\end{equation}
\noindent
Although interesting and demonstrating good empirical results, the work of~\citet{moczulski2016acdc} suffers from multiple limitations. 
First, the result from~\citet{huhtanen2015factoring} is expressed with respect to $n$, the size of the matrices $\Amat$.
Therefore, the theorem does not provide any insights regarding the expressive power of $k$ factors when $k$ is much lower than $2n-1$ as it is the case in most practical scenarios they consider.
Finally, in order to stay in the real domain, they replaced the Fourier transform in~\Cref{equation:ch3-laye_acdc} with the cosine transform thus learning a different kind of linear transform (see the work of~\citet{sanchez1995diagonalizing} which characterizes the matrices diagonalizable by the cosine transform).
Furthermore, because the cosine transform does not diagonalize circulant matrices, \Cref{theorem:ch3-huhtanen} no longer applies.

% \vspace{0.3cm}

%%%%%%%%%%%%%%%%%%%%%%%%%%%%%%%%%%%%%%%%%%%%%%%%%%%%%%%%%%%%%%%%%%%%%%%%%%%%%%%%
\paragraph{General Representation of Structured Linear Maps: LDR and K-Matrices} ~\\
%%%%%%%%%%%%%%%%%%%%%%%%%%%%%%%%%%%%%%%%%%%%%%%%%%%%%%%%%%%%%%%%%%%%%%%%%%%%%%%%

\vspace{-0.5cm}

General frameworks for structured matrices that reduce the memory footprint but also accelerate matrix-vector product operations have been used to build compact neural networks.
\citet{sindhwani2015structured} have used the notion of low displacement rank presented in \Cref{subsection:ch2-general_frameworks_for_structured_matrices} to learn a broad family of structured matrices.
Recall from~\Cref{theorem:ch2-toeplitz_like} that all matrices expressed as the following sum of products are called \emph{Toeplitz-like} matrices:
\begin{equation}
    \Mmat = \frac{1}{2} \sum_{j=1}^{r} \Zmat_1(\gvec^{(j)}) \Zmat_{-1}(\Jmat_n \hvec^{(j)})
\end{equation}
where $\Zmat_f$ is an $f$-circulant matrix defined in \Cref{definition:ch2-f_circulant_matrix}, $\Gmat = \leftmat \gvec^{(1)} \ldots \gvec^{(r)} \rightmat, \Hmat = \leftmat \hvec^{(1)} \ldots \hvec^{(r)} \rightmat \in \Rbb^{n \times r}$ with $r \ll n$ and $\Jmat_n$ is the reflection of the $n \times n$ identity matrix.
More precisely, \citet{sindhwani2015structured} proposed to learn Toeplitz-like matrices by learning the factors $\Gmat$ and $\Hmat$. 
Therefore, they proposed the following parameterized layer:
\begin{equation}
  \layer_{\Gmat, \Hmat, \bvec}(\xvec) = \act \left( \left(\sum_{j=1}^{r} \Zmat_1\left(\gvec^{(j)}\right) \Zmat_{-1}\left(\Jmat_n \hvec^{(j)}\right) \right) \xvec + \bvec \right)
\end{equation}
where the rank $r$ is a hyper-parameter and controls the number of parameters of the layer.
In addition to offer fast matrix-vector product, they have showed that this class of layers is very rich from a modeling perspective.
More precisely, they characterize the expressivity of the layer as follows: 
\begin{theorem}[LDR expressivity \citet{pan2001structured,sindhwani2015structured}] ~\\
  The set of all $n \times n$ matrices that can be written as, $\frac{1}{2} \sum_{i=1}^{r} \Zmat_1(\gvec^{(i)}) \Zmat_{-1}(\Jmat_n \hvec^{(i)})$
  for some $\Gmat = \leftmat \gvec^{(1)} \ldots \gvec^{(r)} \rightmat,
  \Hmat = \leftmat \hvec^{(1)} \ldots \hvec^{(r)} \rightmat \in \Rbb^{n \times r}$ contains:
  \begin{compactitem}
    \item All $n \times n$ Circulant and Skew-Circulant matrices for $r \geq 1$.
    \item All $n \times n$ Toeplitz matrices for $r \geq 2$.
    \item Inverses of Toeplitz matrices for $r \geq 2$.
    \item All products of the form $\Amat^{(1)} \ldots \Amat^{(t)}$ for any $r \geq 2t$.
    \item All linear combinations of the form $\sum_{i=1}^p \beta_i \Amat^{(1, i)} \ldots \Amat^{(t, i)}$ for any $r \geq 2pt$.
    \item All $n\times n$ matrices for $r=n$.
  \end{compactitem}
  where each $\Amat^{(i)}$ above is a Toeplitz matrix or the inverse of a Toeplitz matrix. 
\end{theorem}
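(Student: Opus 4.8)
The plan is to reduce every item to a displacement-rank bound. By \Cref{theorem:ch2-toeplitz_like}, if the Sylvester displacement $\triangleopdown_{\Zmat_1,\Zmat_{-1}}(\Mmat)=\Zmat_1\Mmat-\Mmat\Zmat_{-1}$ of an $n\times n$ matrix $\Mmat$ has rank at most $r$, then $\Mmat=\frac{1}{2}\sum_{j=1}^{r}\Zmat_1(\gvec^{(j)})\Zmat_{-1}(\Jmat_n\hvec^{(j)})$: factor the displacement as $\Gmat\Hmat^\top$ with $\Gmat,\Hmat\in\Rbb^{n\times r}$, padding with zero columns if its rank is strictly below $r$. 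Hence it suffices, for each family in the statement, to bound $\rank\,\triangleopdown_{\Zmat_1,\Zmat_{-1}}$ of its elements by the announced value of $r$. The elementary fact underlying all the bounds is that $\Zmat_1-\Zmat_{-1}=2\,\evec^{(1)}(\evec^{(n)})^\top$ has rank one (the two $f$-unit-circulants differ only in their top-right entry) and, more generally, that for distinct $e,f\in\{-1,0,1\}$ the matrix $\Zmat_e\Amat-\Amat\Zmat_f$ is supported on the first row and last column of $\Amat$ whenever $\Amat$ is Toeplitz, hence has rank at most $2$ --- this is the entry of \Cref{table:ch2-displacing_matrices} for the pair $(\Zmat_1,\Zmat_{-1})$, verified by writing out the entries of the two shifted copies of $\Amat$.

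First I would dispatch the easy families. Circulant matrices are polynomials in $\Zmat_1$ and skew-circulant matrices are polynomials in $\Zmat_{-1}$, so they commute with $\Zmat_1$ and $\Zmat_{-1}$ respectively; consequently $\triangleopdown_{\Zmat_1,\Zmat_{-1}}(\Cmat)=\Cmat(\Zmat_1-\Zmat_{-1})$ and $\triangleopdown_{\Zmat_1,\Zmat_{-1}}(\Smat)=(\Zmat_1-\Zmat_{-1})\Smat$ both have rank at most one, giving $r\ge 1$. For an invertible Toeplitz matrix $\Tmat$, sandwiching gives $\Tmat\,\triangleopdown_{\Zmat_1,\Zmat_{-1}}(\Tmat^{-1})\,\Tmat=\Tmat\Zmat_1-\Zmat_{-1}\Tmat=-\triangleopdown_{\Zmat_{-1},\Zmat_1}(\Tmat)$, which has rank at most $2$ by the Toeplitz bound applied to the reversed operator pair; since multiplication by the invertible $\Tmat$ preserves rank, $\triangleopdown_{\Zmat_1,\Zmat_{-1}}(\Tmat^{-1})$ has rank at most $2$ as well, so $r\ge 2$ covers Toeplitz matrices and their inverses.

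Next I would handle the products. The key tool is the standard subadditivity of displacement rank under products, which follows from the identity $\Amat\Mmat\Nmat-\Mmat\Nmat\Cmat=\big(\triangleopdown_{\Amat,\Bmat}(\Mmat)\big)\Nmat+\Mmat\big(\triangleopdown_{\Bmat,\Cmat}(\Nmat)\big)$ and yields $\rank\,\triangleopdown_{\Amat,\Cmat}(\Mmat\Nmat)\le\rank\,\triangleopdown_{\Amat,\Bmat}(\Mmat)+\rank\,\triangleopdown_{\Bmat,\Cmat}(\Nmat)$. Iterating this along a chain of operator matrices $\Bmat^{(0)}=\Zmat_1,\Bmat^{(1)},\dots,\Bmat^{(t)}=\Zmat_{-1}$ chosen from $\{\Zmat_1,\Zmat_0,\Zmat_{-1}\}$ with consecutive terms distinct (such a walk from $\Zmat_1$ to $\Zmat_{-1}$ exists for every $t\ge 1$: alternate $\Zmat_1,\Zmat_{-1}$ and insert a single $\Zmat_0$ to fix the parity if needed), each factor $\Amat^{(i)}$ --- a Toeplitz matrix or the inverse of one --- contributes $\rank\,\triangleopdown_{\Bmat^{(i-1)},\Bmat^{(i)}}(\Amat^{(i)})\le 2$ by the two bounds above, so $\rank\,\triangleopdown_{\Zmat_1,\Zmat_{-1}}(\Amat^{(1)}\cdots\Amat^{(t)})\le 2t$ and $r\ge 2t$ suffices. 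Linear combinations then follow from subadditivity of rank under sums, $\rank\,\triangleopdown_{\Zmat_1,\Zmat_{-1}}(\sum_i\Mmat_i)\le\sum_i\rank\,\triangleopdown_{\Zmat_1,\Zmat_{-1}}(\Mmat_i)$, giving $r\ge 2pt$ for a sum of $p$ length-$t$ products; and the last item is immediate since $\triangleopdown_{\Zmat_1,\Zmat_{-1}}(\Mmat)$ is an $n\times n$ matrix, hence of rank at most $n$, for every $\Mmat$.

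The main obstacle is the product case: one must make sure that Toeplitz matrices and their inverses keep displacement rank at most $2$ for \emph{every} consecutive operator pair occurring in the chosen chain, not only for $(\Zmat_1,\Zmat_{-1})$, and that a chain from $\Zmat_1$ to $\Zmat_{-1}$ of the required length always exists. Both points are settled directly --- the first-row/last-column support observation holds for any pair of distinct $\Zmat_e,\Zmat_f$, and the chain is built by hand --- but they are the only step that requires care; everything else is a short commutation or conjugation argument combined with \Cref{theorem:ch2-toeplitz_like}.
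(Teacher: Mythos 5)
The paper states this theorem as a citation to \citet{pan2001structured} and \citet{sindhwani2015structured} without reproducing a proof, so there is no in-paper argument to compare against; your blind reconstruction must be judged on its own. It is correct, and it is in fact the standard argument used in those references: reduce every clause to a bound on $\rank\,\triangleopdown_{\Zmat_1,\Zmat_{-1}}$, then invoke \Cref{theorem:ch2-toeplitz_like}. The commutation trick for circulants and skew-circulants, the interior-vanishing/first-row-last-column observation for Toeplitz matrices under any pair $(\Zmat_e,\Zmat_f)$, the conjugation argument for inverses, the telescoping identity $\Amat\Mmat\Nmat-\Mmat\Nmat\Cmat=(\triangleopdown_{\Amat,\Bmat}\Mmat)\Nmat+\Mmat(\triangleopdown_{\Bmat,\Cmat}\Nmat)$ for products, and the rank subadditivity for sums are all the expected moves, and you combine them correctly. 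Two minor remarks: the chain from $\Zmat_1$ to $\Zmat_{-1}$ does not actually need to alternate or avoid repeated operators --- your own interior-vanishing observation already shows $\rank\,\triangleopdown_{\Zmat_e,\Zmat_f}(\Tmat)\le 2$ for a Toeplitz $\Tmat$ even when $e=f$, so the lazy chain $\Zmat_1,\dots,\Zmat_1,\Zmat_{-1}$ works with no parity bookkeeping; and it is worth saying explicitly that the decomposition in \Cref{theorem:ch2-toeplitz_like} uses that $\Zmat_1$ and $\Zmat_{-1}$ share no eigenvalue, so the Sylvester operator is a bijection and the reconstruction from any rank-$r$ factorization of the displacement is well defined.
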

\noindent

In the same line of work, \citet{thomas2018learning} have proposed neural network layers directly form the Krylov decomposition presented in~\Cref{theorem:ch2-krylov_decomposition} which encompasses an even larger family of structured matrices including Toeplitz-like, Vandermonde-like, Cauchy-like ones.
Despite being elegant and general, we found that the LDR framework suffers from several limits which are inherent to its generality and makes it difficult to use in the context of deep neural networks.
% First, the training procedure for learning LDR matrices is highly involved and implies many complex mathematical objects such as Krylov matrices.
As acknowledged by the authors, the number of parameters required to represent a given structured matrix (a Toeplitz matrix) in practice is unnecessarily high (higher than required in theory) making the training very hard.

More recently, another type of generalization of structured linear maps has been proposed by~\citet{dao2019learning,dao2020kaleidoscope}.
They introduced a family of matrices called \emph{kaleidoscope matrices} (K-matrices) which are the product of sparse matrices with specific predefined sparsity patterns.
They showed that this type of matrices can capture any sparse matrix with near-optimal space (parameter) and time (arithmetic operation) complexity.
The authors claim that their structured linear maps can capture more common structures with a few numbers of parameters than the displacement operators presented above.
More precisely, their representation is based on products of a particular building block known as a butterfly matrix introduced by~\citet{parker1995random}.
Butterfly matrices have been extensively used in numerical linear algebra~\cite{parker1995random,li2015butterfly} and machine learning~\cite{mathieu2014fast,jing2017tunable,munkhoeva2018quadrature,dao2019learning,choromanski2019unifying}.

%%%%%%%%%%%%%%%%%%%%%%%%%%%%%%%%%%%%%%%%%%%%%%%%%%%%%%%%%%%%%%%%%%%%%%%%%%%%%%%%
\subsection{Discussion}
%%%%%%%%%%%%%%%%%%%%%%%%%%%%%%%%%%%%%%%%%%%%%%%%%%%%%%%%%%%%%%%%%%%%%%%%%%%%%%%%

In this section, we have shown current methods and techniques for designing compact neural networks with structured matrices. 
Our contributions on \emph{Deep Diagonal Circulant Neural Networks} are a direct follow-up to the work of~\citet{cheng2015exploration,sindhwani2015structured,moczulski2016acdc,thomas2018learning} focusing on compact neural networks with \emph{structured matrices}.
More precisely, we extend the work of \citet{moczulski2016acdc} by training \emph{fully structured networks} (\ie, networks with structured layers only) hence demonstrating that diagonal-circulant layers are able to model complex relations between inputs and outputs.
Although, this diagonal-circulant layers fit in the low displacement rank framework, we demonstrate much better performances in practice.
Indeed, thanks to a solid theoretical analysis and thorough experiments, we were able to train deep (up to 40 layers) circulant neural networks, and apply, for the first time, this structured architecture in the context of large-scale video classification.
This contrasts with previous experiments in which only one or a few dense layers were replaced inside a large redundant network such as VGG~\cite{simonyan2014very}.

% \pagebreak

%%%%%%%%%%%%%%%%%%%%%%%%%%%%%%%%%%%%%%%%%%%%%%%%%%%%%%%%%%%%%%%%%%%%%%%%%%%%%%%%
\section{Related Work on Lipschitz Regularization}
\label{section:ch3-related_work_on_lipschitz_regularization}
%%%%%%%%%%%%%%%%%%%%%%%%%%%%%%%%%%%%%%%%%%%%%%%%%%%%%%%%%%%%%%%%%%%%%%%%%%%%%%%%

%%%%%%%%%%%%%%%%%%%%%%%%%%%%%%%%%%%%%%%%%%%%%%%%%%%%%%%%%%%%%%%%%%%%%%%%%%%%%%%%
\subsection{The Global Lipschitz Constant of Neural Networks}
\label{subsection:ch3-the_global_lipschitz_constant_of_neural_networks}
%%%%%%%%%%%%%%%%%%%%%%%%%%%%%%%%%%%%%%%%%%%%%%%%%%%%%%%%%%%%%%%%%%%%%%%%%%%%%%%%

\noindent
The regularization of the Lipschitz constant of neural networks has seen a growing interest in the last few years.
Indeed, numerous results have shown that neural networks with a low Lipschitz constant exhibit better generalization~\cite{bartlett2017spectrally} and higher robustness to adversarial attacks~\cite{szegedy2013intriguing,tsuzuku2018lipschitz, farnia2018generalizable}.

The Lipschitz constant, defined in~\Cref{definition:ch2-lipschitz_constant}, is a measure of the stability of the network.
If the Lipschitz constant is high, the network will tend to be more sensitive to input perturbations, meaning, if the input changes by $\epsilon$, the output changes by at most $k\epsilon$.
The Lipschitz constant of a function can also be expressed using the differential operator as follows:
\begin{theorem}[Rademacher's Theorem] \label{theorem:ch3-lipschitz_differential_op}
  If $f: \Rbb^n \rightarrow \Rbb^m$ is a Lipschitz continuous function, then $f$ is differentiable almost everywhere.
  Moreover, if $f$ is Lipschitz continuous, then
  \begin{align}
    \lip{f} = \sup_{\xvec \in \Rbb^n} \norm{\mathrm{D}_\xvec f(\xvec)}_2
  \end{align}
  where $\mathrm{D}_\xvec$ is the differential operator of $f$ at $\xvec$.
\end{theorem}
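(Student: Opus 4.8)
The plan is to establish the two assertions separately: first Rademacher's differentiability theorem, then the variational formula $\lip{f} = \sup_{\xvec} \norm{\mathrm{D}_\xvec f(\xvec)}_2$. For the differentiability statement I would reduce to the scalar case $m=1$, since $f$ is differentiable at a point iff each of its coordinate functions is, and each coordinate is again Lipschitz. So fix a scalar Lipschitz $g:\Rbb^n\to\Rbb$. Step one: for any fixed unit vector $\vvec$, the map $t\mapsto g(\xvec+t\vvec)$ is Lipschitz on $\Rbb$, hence differentiable for a.e.\ $t$ by the classical one-dimensional Lebesgue differentiation theorem; applying Fubini then shows the directional derivative $D_\vvec g(\xvec)$ exists for a.e.\ $\xvec$. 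Step two: for a.e.\ $\xvec$ the map $\vvec\mapsto D_\vvec g(\xvec)$ is linear and equals $\langle\nabla g(\xvec),\vvec\rangle$, where $\nabla g=(D_{\evec^{(1)}}g,\dots,D_{\evec^{(n)}}g)$ is the a.e.-defined gradient. This is the standard mollification argument: for smooth compactly supported $\phi$, the Lipschitz bound plus dominated convergence give $\int D_\vvec g\,\phi = -\int g\,D_\vvec\phi = -\sum_i \vvec_i\int g\,\partial_i\phi = \int\langle\nabla g,\vvec\rangle\phi$, so the integrands agree a.e.; running over a countable dense set of directions produces a single full-measure set on which linearity holds along all of them.

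The crucial step three is upgrading this to full (Fr\'echet) differentiability a.e. Take a countable dense set $\{\vvec_k\}$ of the unit sphere and let $A$ be the full-measure set of $\xvec$ at which $\nabla g(\xvec)$ exists and $D_{\vvec_k}g(\xvec)=\langle\nabla g(\xvec),\vvec_k\rangle$ for every $k$, and moreover each quotient $Q(\xvec,\vvec_k,t)=\tfrac1t\big(g(\xvec+t\vvec_k)-g(\xvec)\big)-\langle\nabla g(\xvec),\vvec_k\rangle$ tends to $0$ as $t\to0$. For $\xvec\in A$ one observes that $\vvec\mapsto Q(\xvec,\vvec,t)$ is Lipschitz in $\vvec$, with a constant controlled by $\lip{g}$ and $\norm{\nabla g(\xvec)}$ and independent of $t$; density of $\{\vvec_k\}$ then forces $Q(\xvec,\vvec,t)\to0$ uniformly over the whole sphere, which is precisely Fr\'echet differentiability at $\xvec$ with derivative $\nabla g(\xvec)$.

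For the variational formula, interpret the supremum as being taken over points of differentiability (equivalently as an essential supremum), which is legitimate after the first part. The inequality $\sup_\xvec\norm{\mathrm{D}_\xvec f(\xvec)}_2\le\lip{f}$ is immediate: at any such $\xvec$ and any unit $\uvec$, $\norm{\mathrm{D}_\xvec f(\xvec)\uvec}_2=\lim_{t\to0}\norm{f(\xvec+t\uvec)-f(\xvec)}_2/|t|\le\lip{f}$, and taking the supremum over $\uvec$ bounds the operator norm. For the reverse inequality, fix $\xvec\ne\yvec$; the map $t\mapsto f(\xvec+t(\yvec-\xvec))$ is Lipschitz on $[0,1]$, hence absolutely continuous, and by Rademacher together with the coincidence of classical and weak derivatives its derivative is $\mathrm{D}_{\xvec+t(\yvec-\xvec)}f\,(\yvec-\xvec)$ for a.e.\ $t$, so $f(\yvec)-f(\xvec)=\int_0^1 \mathrm{D}_{\xvec+t(\yvec-\xvec)}f\,(\yvec-\xvec)\,\diff t$ and $\norm{f(\yvec)-f(\xvec)}_2\le\big(\sup_\zvec\norm{\mathrm{D}_\zvec f(\zvec)}_2\big)\norm{\yvec-\xvec}_2$. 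A small technical wrinkle: the segment from $\xvec$ to $\yvec$ might lie entirely inside the null set of non-differentiability, so I would first prove the integral representation for a.e.\ pair $(\xvec,\yvec)$ using Fubini on the weak-derivative identity, then extend to all pairs by continuity of $f$; taking the supremum over $\xvec\ne\yvec$ yields $\lip{f}\le\sup_\zvec\norm{\mathrm{D}_\zvec f(\zvec)}_2$.

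The main obstacle is step three of the Rademacher argument: promoting a.e.\ existence and linearity of directional derivatives along a countable dense set of directions to genuine Fr\'echet differentiability a.e. The Lipschitz hypothesis is exactly what makes this go through, via the uniform-in-$t$ equicontinuity of the difference quotients in the direction variable; without it the individual directional limits carry no information about the joint limit. Everything else — the one-dimensional Lebesgue theorem, Fubini, and the mollification/integration-by-parts computation — is routine.
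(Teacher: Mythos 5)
The paper cites Rademacher's theorem as a classical result from geometric measure theory without providing a proof, so there is no in-paper argument to compare yours against. Your sketch is, in fact, the standard textbook proof (Evans and Gariepy, \emph{Measure Theory and Fine Properties of Functions}, Section~3.1.2): reduction to scalar coordinates, Fubini applied to the one-dimensional Lebesgue differentiation theorem to get a.e.\ existence of directional derivatives, the mollification identity $\int (D_{\vvec}g)\,\phi = -\int g\,(D_{\vvec}\phi)$ to identify $D_{\vvec}g$ with $\langle\nabla g,\vvec\rangle$, and the upgrade from directional to Fr\'echet differentiability using the fact that the difference quotients are Lipschitz in the direction variable uniformly in $t$ (with constant at most $\lip{g}+\norm{\nabla g(\xvec)}_2 \le 2\lip{g}$), combined with convergence along a countable dense subset of the sphere. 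The variational formula is also handled correctly: the inequality $\sup_{\xvec}\norm{\mathrm{D}_{\xvec}f(\xvec)}_2 \le \lip{f}$ is immediate at differentiability points, and the reverse direction follows from the fundamental theorem of calculus along segments, with the caveat you rightly flag — that the chain-rule identity holds only for a.e.\ segment — resolved by Fubini and continuity of $f$, after which the supremum, the supremum over differentiability points, and the essential supremum all coincide. The proof is complete.
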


\citet{tsuzuku2018lipschitz} have studied the relationship between the robustness and the Lipschitz constant and the margin of neural networks. 
By the definition of the Lipschitz constant, we have the following:
\begin{equation}
  \norm{N_\Omega(\xvec) - N_\Omega(\xvec + \adv)}_2 \leq \lip{N_\Omega} \norm{\adv}_2
\end{equation}
Recall the margin operator $\margin: \Rbb^k \times [k] \rightarrow \Rbb$ from~\Cref{subsection:ch2-recent_results_on_the_theory_of_neural_networks} defined as:
\begin{equation}
  \margin(\vvec, j) \triangleq \vvec_j - \max_{i \neq j} \vvec_i
\end{equation}
Then, we have the following proposition which characterizes the robustness of a neural network with respect to its margin and Lipschitz constant.
\begin{proposition}[\citet{tsuzuku2018lipschitz}]
  \begin{equation} \label{equation:ch3-margin_guarded_area}
    \margin \big( N_\Omega(\xvec), y \big) \geq \sqrt{2} \lip{N_\Omega} \norm{\adv}_2 \quad \Longrightarrow \quad \margin \big( N_\Omega(\xvec + \adv), y \big) \geq 0
  \end{equation}
  \removespace
\end{proposition}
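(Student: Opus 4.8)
The plan is to track how the logit vector moves under the perturbation $\adv$ and to show that the coordinate $y$, which is on top at $\xvec$ by a margin of at least $\sqrt{2}\,\lip{N_\Omega}\norm{\adv}_2$, stays on top at $\xvec + \adv$. Write $\vvec \triangleq N_\Omega(\xvec)$ and $\vvec' \triangleq N_\Omega(\xvec + \adv)$. By \Cref{definition:ch2-lipschitz_constant} we have $\norm{\vvec - \vvec'}_2 \leq \lip{N_\Omega}\norm{\adv}_2$. Since $\margin(N_\Omega(\xvec+\adv), y) = \vvec'_y - \max_{i \neq y}\vvec'_i$, it suffices to prove $\vvec'_y \geq \vvec'_i$ for every index $i \neq y$.

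First I would fix an arbitrary $i \neq y$ and decompose, using the triangle inequality,
\begin{equation*}
  \vvec'_y - \vvec'_i = (\vvec_y - \vvec_i) + (\vvec'_y - \vvec_y) - (\vvec'_i - \vvec_i) \geq (\vvec_y - \vvec_i) - \big( |\vvec_y - \vvec'_y| + |\vvec_i - \vvec'_i| \big) \enspace.
\end{equation*}
The key step is to control the last parenthesis. Only the two coordinates $y$ and $i$ of the displacement vector $\vvec - \vvec'$ enter it, so by Cauchy--Schwarz in $\Rbb^2$ (that is, $|a| + |b| \leq \sqrt{2}\,\sqrt{a^2 + b^2}$) together with the fact that $\norm{\vvec - \vvec'}_2 \geq \sqrt{(\vvec_y - \vvec'_y)^2 + (\vvec_i - \vvec'_i)^2}$, we get $|\vvec_y - \vvec'_y| + |\vvec_i - \vvec'_i| \leq \sqrt{2}\,\norm{\vvec - \vvec'}_2 \leq \sqrt{2}\,\lip{N_\Omega}\norm{\adv}_2$. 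This is exactly where the constant $\sqrt{2}$ in the hypothesis comes from, and spotting this two-coordinate estimate is really the only step that is not completely routine.

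To finish, I would observe that $\vvec_y - \vvec_i \geq \vvec_y - \max_{j \neq y}\vvec_j = \margin(N_\Omega(\xvec), y)$, so combining this with the two estimates above yields $\vvec'_y - \vvec'_i \geq \margin(N_\Omega(\xvec), y) - \sqrt{2}\,\lip{N_\Omega}\norm{\adv}_2 \geq 0$, the last inequality being precisely the assumption of the proposition. Since $i \neq y$ was arbitrary, $\max_{i \neq y}\vvec'_i \leq \vvec'_y$, hence $\margin(N_\Omega(\xvec + \adv), y) \geq 0$, which is the claim. I do not expect any obstacle beyond the Cauchy--Schwarz observation; everything else is the triangle inequality and unwinding the definition of the margin operator.
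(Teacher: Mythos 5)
The paper cites this proposition from Tsuzuku et al.\ without reproducing a proof, so there is nothing in the text itself to compare against. Your argument is correct, and it is the standard one: the decomposition of $\vvec'_y - \vvec'_i$ plus the Cauchy--Schwarz bound $|a|+|b| \leq \sqrt{2}\sqrt{a^2+b^2}$ on the two relevant coordinates is precisely the way the factor $\sqrt{2}$ is obtained in the original reference (it amounts to proving that the margin operator $\vvec \mapsto \margin(\vvec, y)$ is $\sqrt{2}$-Lipschitz in $\ell_2$ and then chaining with the network's Lipschitz constant). Every step checks out, including the observation that only the two coordinates $y$ and $i$ enter the estimate so that the full $\ell_2$ norm dominates the restricted two-coordinate norm.
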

\noindent
If the inequality on the right-hand side of \Cref{equation:ch3-margin_guarded_area} is verified then the adversarial margin is positive, \ie, the network correctly predicts the label. 
From this proposition, we can conclude that for a given neural network with specific margins, a lower Lipschitz constant allows for an increase in robustness. 
Note that the margin is already maximized in a multi-class setting with the cross-entropy loss as stated in~\citet{hein2017formal}.
A multitude of work have tried to reduce the Lipschitz constant in order to improve adversarial robustness.
However, \citet{scaman2018lipschitz} have shown that computing the exact Lipschitz constant of a neural network is NP-hard.
The following theorem shows that, even for shallow neural networks, exact Lipschitz computation is not achievable in polynomial time:
\begin{theorem}[\citet{scaman2018lipschitz}] \label{theorem:ch3-lipschitz_computation}
  Let us define the problem associated with the exact computation of the Lipschitz constant of a $2$-layer neural network with $\relu$ activation:
  \begin{itemize}%[topsep=0pt,noitemsep]
    \item[] \textbf{Input:} Two matrices $\Wmat^{(1)} \in \Rbb^{l \times n}$ and $\Wmat^{(2)} \in \Rbb^{m \times l}$, and a constant $c \geq 0$.
    \item[] \textbf{Question:} Let $N = \Wmat^{(2)} \circ \rho \circ \Wmat^{(1)}$ where $\rho$ is the $\relu$ activation function. \emph{Is the Lipschitz constant $\lip{N} \leq c$ ?}
  \end{itemize}
  Then, assuming that $\textbf{P} \neq \textbf{NP}$, the problem above is NP-hard. 
\end{theorem}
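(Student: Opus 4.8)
The plan is to give a polynomial‑time reduction from \textsc{Max-Cut}, which is NP‑complete: given a graph $H$ on $n$ vertices and an integer $k$, decide whether $H$ has a cut of size at least $k$. The engine of the reduction is the identity $\mathrm{MaxCut}(H)=\tfrac14\max_{\sigma\in\{-1,1\}^n}\sigma^\top \Lmat_H\sigma$, where $\Lmat_H$ is the (positive semidefinite) graph Laplacian and $\sigma$ records the bipartition; I would encode this Boolean quadratic maximum into the squared norm of the Jacobian of a two‑layer $\relu$ network with $m=1$ output and $l=2n$ hidden units, so that hardness holds even for single‑output networks.

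The first ingredient is a \emph{piecewise‑linear Lipschitz principle}: for a continuous piecewise‑linear $f:\Rbb^n\to\Rbb$, the constant $\lip{f}$ (with respect to $\ell_2$) equals the largest gradient norm among the affine pieces of $f$. The upper bound follows from $f(\yvec)-f(\xvec)=\int_0^1 \mathrm{D}f\big(\xvec+t(\yvec-\xvec)\big)(\yvec-\xvec)\,\diff t$, valid because $f$ is absolutely continuous along the segment and $\mathrm{D}f$ exists for all but finitely many $t$; the lower bound follows by taking $\xvec,\yvec$ inside a single full‑dimensional region. This is exactly \Cref{theorem:ch3-lipschitz_differential_op} made explicit in the piecewise‑linear case.

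For the construction, let $B$ be a matrix square root of the positive‑definite matrix $\Lmat_H+\Imat_n$, so $BB^\top=\Lmat_H+\Imat_n$ and $B$ is invertible. Put $\Wmat^{(1)}=\begin{psmallmatrix}B\\-B\end{psmallmatrix}\in\Rbb^{2n\times n}$, $\Wmat^{(2)}=\onevec{2n}^\top\in\Rbb^{1\times 2n}$, and $N=\Wmat^{(2)}\circ\rho\circ\Wmat^{(1)}$ with $\rho$ the $\relu$. Since $B$ is invertible, $B\xvec$ sweeps every open orthant as $\xvec$ ranges over $\Rbb^n$, so the only full‑dimensional activation patterns are the $2^n$ patterns in which, for every $i$, exactly one of the hidden units $i$ and $n+i$ is active; on the region indexed by $\sigma\in\{-1,1\}^n$ one gets $N(\xvec)=\sigma^\top B\xvec$, hence the gradient there is $B^\top\sigma$. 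The Lipschitz principle then gives $\lip{N}^2=\max_{\sigma\in\{-1,1\}^n}\lVert B^\top\sigma\rVert_2^2=\max_\sigma\sigma^\top(\Lmat_H+\Imat_n)\sigma=n+4\,\mathrm{MaxCut}(H)$, using $\lVert\sigma\rVert_2^2=n$. Consequently ``$\lip{N}\le\sqrt{n+4(k-1)}$'' holds iff $\mathrm{MaxCut}(H)\le k-1$, so a single call to a procedure deciding ``$\lip{N}\le c$'' with this value of $c$, followed by a negation, decides whether $H$ has a cut of size $\ge k$; assuming $\mathbf{P}\neq\mathbf{NP}$ this is the claimed hardness.

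The step I expect to be delicate is the \emph{realizability} issue inside the construction: the claim that $\lip{N}^2$ equals the Boolean maximum — rather than a continuous relaxation of it — relies on \emph{every} relevant activation pattern being attained by an actual input, which is precisely why $\Wmat^{(1)}$ is built from an invertible matrix, so that its $n$ oppositely oriented hyperplane pairs cut out all $2^n$ orthant‑type chambers; a rank‑deficient choice (for instance using $\Lmat_H$ itself) would only control a maximum over a proper subset of patterns. A minor, routine point is that $B$ is irrational, so the formal reduction uses a rational approximation of polynomial bit‑length: since $\sigma^\top\Lmat_H\sigma$ is always a multiple of $4$, an approximation fine enough to perturb $\lip{N}^2$ by less than $\tfrac12$ lets one use the threshold $c=\sqrt{n+4(k-1)+\tfrac12}$ instead, with the same consequence.
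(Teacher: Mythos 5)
Your proof is correct and follows the same high-level blueprint as \citet{scaman2018lipschitz} — express the Lipschitz constant of a two-layer ReLU network as the maximum of a quadratic form over a Boolean hypercube via the piecewise-linear gradient principle, then reduce from a known NP-hard Boolean quadratic optimization — but the concrete construction is genuinely different. Scaman and Virmaux reduce from maximizing $\sigma^\top H\sigma$ over $\sigma\in\{0,1\}^n$ with $H$ positive semi-definite (citing Murty and Kabadi) and use $N(\xvec)=\onevec{n}^\top\rho(G\xvec)$ with $G$ a square root of $H$; the one-sidedness of the ReLU then yields $\{0,1\}^n$ as the set of realizable activation patterns once $G$ is invertible. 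You instead reduce from \textsc{Max-Cut} in the symmetric $\{-1,1\}^n$ encoding, obtained by stacking $\begin{psmallmatrix}B\\-B\end{psmallmatrix}$ so that $\onevec{2n}^\top\rho(\cdot)$ becomes $\|B\xvec\|_1$, and you shift $\Lmat_H\mapsto\Lmat_H+\Imat_n$ to make the square root $B$ invertible — the analogue of their invertibility requirement on $G$. Your route buys a reduction from a textbook NP-complete problem, a more explicit treatment of the realizability pitfall (a rank-deficient first layer only controls a maximum over a proper subset of sign patterns, which is why, as you note, $\Lmat_H$ alone would fail since $\onevec{n}$ is in its kernel), and an explicit handling of the rational approximation of the irrational matrix square root, a bit-complexity detail usually elided. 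One point worth keeping in mind: both constructions, yours included, are strictly Turing reductions — the $\leq$-threshold query for $\lip{N}$ decides the negation of the $\geq$-threshold combinatorial question — which is the standard and appropriate sense in which ``NP-hard'' is used here.
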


\noindent
To overcome this difficulty, researchers have relied on devising a tight upper bound of the Lipschitz constant.
For example, \citet{scaman2018lipschitz} have shown that the Lipschitz constant of a neural network $N$ can be explicitly formulated using \Cref{theorem:ch3-lipschitz_differential_op} and the chain rule:
\begin{equation} \label{equation:ch3-decomposition_jacobian_lipschitz}
  \lip{\nn} = \sup_{x \in \Rbb^n} \norm{\Wmat^{(p)} \diag(\rho'_\depth(\theta_\depth)) \dots \Wmat^{(2)} \diag(\rho'_1(\theta_1)) \Wmat^{(1)}}_2,
\end{equation}
where $\theta_i = \layer^{\act_i}_{\Wmat^{(i)}, \bvec^{(i)}} \circ \cdots \circ \layer^{\act_1}_{\Wmat^{(1)}, \bvec^{(1)}}(\xvec)$ is the intermediate output after $i$ layers and $\rho'_i$ is the derivative of $\rho_i$.
The Lipschitz of the neural network $N$ can then be upper bounded as follows:
\begin{align}
  \lip{\nn} &\leq \max_{\forall i,\ \sigma_i \in [0, 1]^{w^{(i+1)}}} \norm{\Wmat^{(\depth)} \diag(\sigma_{\depth-1}) \dots \diag(\sigma_1) \Wmat^{(1)}}_2 \notag \\
  &\leq \max_{\forall i,\ \sigma_i \in [0, 1]^{w^{(i+1)}}} \norm{ \pmb{\Sigma}^{(\depth)} \Vmat^{(\depth)\top} \diag(\sigma_{\depth-1}) \dots \diag(\sigma_1) \Umat^{(1)} \pmb{\Sigma}^{(1)}}_2 \notag \\
  &\leq \prod_{i=1}^{\depth-1} \max_{\sigma_i \in [0, 1]^{w^{(i+1)}}} \norm{\widetilde{\pmb{\Sigma}}^{(i+1)} \Vmat^{(i+1)\top} \diag(\sigma_{i+1}) \Umat^{(i)} \widetilde{\pmb{\Sigma}}^{(i)}}_2 
\end{align}
where $\widetilde{\pmb{\Sigma}}^{(i)} = \pmb{\Sigma}^{(i)}$ if $i \in \{1, \depth\}$ and $\widetilde{\pmb{\Sigma}}^{(i)} = {\pmb{\Sigma}^{(i)}}^{1/2}$ otherwise.
The first inequality is due to the fact that the derivatives of the activation functions are bounded, \ie, $\rho_i(\xvec) \in [0, 1]^{w^{(i+1)}}$, the second inequality is obtained by decomposing each weight matrix $\Wmat^{(i)}$ with the \emph{Singular Value Decomposition} such that $\Wmat^{(i)} = \Umat^{(i)} \pmb{\Sigma}^{(i)} \Vmat^{(i)\top}$; and finally, the last inequality is due to the submultiplicativity of the operator norm.
Although accurate, this bound is still computationally expensive to compute due to the singular value decomposition and the optimization for each layer. 
In the same line of research, recent work~\cite{fazlyab2019safety,fazlyab2019efficient,latorre2020lipschitz} has proposed a tight bound on the Lipschitz constant of the full network with the use of semi-definite programming.
More precisely, \citet{fazlyab2019efficient} have demonstrated the following result:
\begin{theorem}[Lipschitz bounds \citet{fazlyab2019efficient}] \label{theorem:ch3-lipschite_semidefinite_programming}
  Consider a neural network $N: \Rbb^n \rightarrow \Rbb^m$ such that $N(\xvec) = \Wmat^{(2)} \rho(\Wmat^{(1)} \xvec + \bvec^{(1)}) + \bvec^{(2)}$.
  Suppose the activation function $\rho$ is \emph{slope-restricted} in the sector $[\alpha,\beta]$, \ie,
  \begin{equation}
    \alpha \leq \frac{\rho(y) - \rho(x)}{y-x} \leq \beta \quad \forall x,y \in \Rbb. 
  \end{equation}
  Define the set $\mathcal{T}_{n}$ as the following:
  \begin{equation*}
    \mathcal{T}_n = \{\Tmat \in \Sbb^n \mid \Tmat = \sum_{i=1}^{n} \lambda_{ii} \evec^{(i)} \evec^{(i)\top} + \sum_{1 \leq i<j \leq n} \lambda_{ij}(\evec^{(i)} - \evec^{(j)})(\evec^{(i)}-\evec^{(j)})^\top, \lambda_{ij} \geq 0 \}.
  \end{equation*}
  where $\Sbb^n$ is the set of all symmetric matrices of size $n \times n$.
  Suppose there exists a constant $c>0$ such that the matrix inequality
  \begin{align}
    \Mmat(c,\Tmat) \triangleq
      \leftmatrix
      -2\alpha \beta \Wmat^{(1)\top} \Tmat \Wmat^{(1)} - c \Imat_n & (\alpha+\beta) \Wmat^{(1)\top} \Tmat  \\
      (\alpha+\beta) \Tmat \Wmat^{(1)} & -2\Tmat+\Wmat^{(2)\top} \Wmat^{(2)}
      \rightmatrix
      \leq 0,
  \end{align}
  holds for some $\Tmat \in \mathcal{T}_{n}$. Then $\norm{N(\xvec)-N(\yvec)}_2 \leq \sqrt{c} \norm{\xvec-\yvec}_2$ for all  $\xvec,\yvec \in \Rbb^n$.
\end{theorem}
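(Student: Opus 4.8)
The plan is to run the standard dissipativity / S\nobreakdash-procedure argument: turn the coordinate-wise sector bound on $\rho$ into an incremental quadratic constraint, and then combine it with the hypothesised inequality $\Mmat(c,\Tmat)\preceq 0$ by evaluating that matrix inequality on one carefully chosen vector. Fix $\xvec,\yvec\in\Rbb^n$ and set $\svec=\Wmat^{(1)}\xvec+\bvec^{(1)}$, $\tvec=\Wmat^{(1)}\yvec+\bvec^{(1)}$, $\uvec=\xvec-\yvec$, and $\vvec=\rho(\svec)-\rho(\tvec)$. Since the two copies of the network share the same biases, $\svec-\tvec=\Wmat^{(1)}\uvec$ and $N(\xvec)-N(\yvec)=\Wmat^{(2)}\vvec$, so the target inequality is equivalent to $\norm{\Wmat^{(2)}\vvec}_2^2\le c\,\norm{\uvec}_2^2$.

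\textbf{Step 1 (incremental quadratic constraint).} I would first establish that for every $\Tmat\in\mathcal{T}_{n}$,
\begin{equation*}
  \leftmatrix \svec-\tvec \\ \vvec \rightmatrix^{\!\top}
  \leftmatrix -2\alpha\beta\,\Tmat & (\alpha+\beta)\,\Tmat \\ (\alpha+\beta)\,\Tmat & -2\,\Tmat \rightmatrix
  \leftmatrix \svec-\tvec \\ \vvec \rightmatrix \;\ge\; 0 .
\end{equation*}
The left-hand side is linear in $\Tmat$, and $\mathcal{T}_{n}$ is the cone generated with nonnegative coefficients by the rank-one matrices $\evec^{(i)}\evec^{(i)\top}$ and $(\evec^{(i)}-\evec^{(j)})(\evec^{(i)}-\evec^{(j)})^{\top}$; hence it suffices to verify the inequality on each generator. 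For $\Tmat=\evec^{(i)}\evec^{(i)\top}$ it collapses to the scalar statement $\big(\rho(\svec_i)-\rho(\tvec_i)-\alpha(\svec_i-\tvec_i)\big)\big(\beta(\svec_i-\tvec_i)-\rho(\svec_i)+\rho(\tvec_i)\big)\ge 0$, which is exactly the sector bound $\alpha\le(\rho(y)-\rho(x))/(y-x)\le\beta$ cleared of denominators (and trivial when $\svec_i=\tvec_i$). For $\Tmat=(\evec^{(i)}-\evec^{(j)})(\evec^{(i)}-\evec^{(j)})^{\top}$ it collapses to the analogous scalar inequality with $\svec_i-\tvec_i$ and $\rho(\svec_i)-\rho(\tvec_i)$ replaced by the second differences $(\svec_i-\tvec_i)-(\svec_j-\tvec_j)$ and $(\rho(\svec_i)-\rho(\tvec_i))-(\rho(\svec_j)-\rho(\tvec_j))$, which must then be controlled via the slope restriction on $\rho$.

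\textbf{Step 2 (S\nobreakdash-procedure).} With that constraint in hand, I would evaluate the hypothesis $\Mmat(c,\Tmat)\preceq 0$ on the single vector $z$ obtained by stacking $\uvec$ on top of $\vvec$, giving $z^{\top}\Mmat(c,\Tmat)z\le 0$. Expanding the block structure of $\Mmat(c,\Tmat)$ and substituting $\Wmat^{(1)}\uvec=\svec-\tvec$ yields the identity
\begin{equation*}
  z^{\top}\Mmat(c,\Tmat)z \;=\; Q_{\Tmat}(\svec-\tvec,\vvec)\;-\;c\,\norm{\uvec}_2^2\;+\;\norm{\Wmat^{(2)}\vvec}_2^2 ,
\end{equation*}
where $Q_{\Tmat}$ is the quadratic form appearing in Step 1. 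Since $Q_{\Tmat}(\svec-\tvec,\vvec)\ge 0$ and $z^{\top}\Mmat(c,\Tmat)z\le 0$, we obtain $\norm{\Wmat^{(2)}\vvec}_2^2\le c\,\norm{\uvec}_2^2-Q_{\Tmat}(\svec-\tvec,\vvec)\le c\,\norm{\uvec}_2^2$. Recalling $\Wmat^{(2)}\vvec=N(\xvec)-N(\yvec)$ and $\uvec=\xvec-\yvec$ and taking square roots gives $\norm{N(\xvec)-N(\yvec)}_2\le\sqrt{c}\,\norm{\xvec-\yvec}_2$, which is the claim.

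\textbf{Expected obstacle.} Step 2 and the reduction of Step 1 to generators are essentially bookkeeping (one congruence and one expansion). The substantive point is the cross-term case of Step 1: showing that the coordinate-wise sector bound on $\rho$ remains strong enough after passing to the second differences $(\svec_i-\tvec_i)-(\svec_j-\tvec_j)$, and, relatedly, identifying $\mathcal{T}_{n}$ as exactly the multiplier cone for which this holds (enlarging it would break Step 1, shrinking it would weaken the resulting bound). This is where I would concentrate almost all of the effort.
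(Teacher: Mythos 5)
The paper does not prove this theorem: it is quoted in the related-work chapter as a result of \citet{fazlyab2019efficient}, with no in-paper argument to compare yours against. I therefore assess the proposal on its own terms.

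Your template is the standard one for this type of result (incremental quadratic constraint on the nonlinearity followed by the S-procedure), and Step~2 is indeed pure bookkeeping: setting $z=(\uvec^\top,\vvec^\top)^\top$ and expanding the block structure gives $z^\top\Mmat(c,\Tmat)z = Q_{\Tmat}(\svec-\tvec,\vvec) - c\norm{\uvec}_2^2 + \norm{\Wmat^{(2)}\vvec}_2^2$, where $Q_{\Tmat}(\delta,\epsilon)=-2\alpha\beta\,\delta^\top\Tmat\delta+2(\alpha+\beta)\delta^\top\Tmat\epsilon-2\epsilon^\top\Tmat\epsilon$, so $\Mmat(c,\Tmat)\leq 0$ together with $Q_{\Tmat}\ge 0$ gives the claim. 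Reducing $Q_{\Tmat}\ge 0$ to the generators of the cone is also correct, since $Q_{\Tmat}$ is linear in $\Tmat$.

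The gap is exactly where you flag it, but it is fatal rather than merely effortful: Step~1 \emph{fails} for the off-diagonal generators. For $\Tmat=(\evec^{(i)}-\evec^{(j)})(\evec^{(i)}-\evec^{(j)})^\top$ the inequality $Q_{\Tmat}\ge 0$ collapses to $2(q-\alpha p)(\beta p - q)\ge 0$ with $p=(\svec_i-\tvec_i)-(\svec_j-\tvec_j)$ and $q=(\rho(\svec_i)-\rho(\tvec_i))-(\rho(\svec_j)-\rho(\tvec_j))$; this would require the \emph{second} differences to satisfy the same sector bound as the first, which slope-restriction of $\rho$ does not deliver. A concrete two-dimensional counterexample: take $\rho=\relu$, so $\alpha=0$ and $\beta=1$; $\svec=(1,-\tfrac12)$ and $\tvec=(\tfrac12,-1)$. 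Then $\svec-\tvec=(\tfrac12,\tfrac12)$ and $\rho(\svec)-\rho(\tvec)=(\tfrac12,0)$, hence $p=0$ and $q=\tfrac12$, and $Q_{\Tmat}=2(q-\alpha p)(\beta p-q)=-\tfrac12<0$.

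This is not a flaw in your strategy so much as in the statement as cited: the coupling generators of $\mathcal{T}_n$ are not valid multipliers under slope-restriction alone, and this gap in the ``network'' version of LipSDP is a known issue. The sound variant restricts the multiplier to the diagonal sub-cone, i.e., takes $\lambda_{ij}=0$ for $i<j$; with that restriction your Step~1 reduces to the per-coordinate inequality $(\vvec_i-\alpha(\svec_i-\tvec_i))(\beta(\svec_i-\tvec_i)-\vvec_i)\ge 0$, which is immediate from the sector bound, and the rest of your argument closes cleanly. So the fix is not to work harder on the cross-term case but to drop it from $\mathcal{T}_n$ altogether.
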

\noindent
From \Cref{theorem:ch3-lipschite_semidefinite_programming}, the constant $c$ is an upper bound on the Lipschitz constant of the network.
The authors proposed to find the tightest bound by solving the following optimization problem (Semidefinite Program):
\begin{align}
  \textrm{minimize} \quad c \quad \text{ subject to} \quad \Mmat(c,\Tmat) \leq 0 \quad \text{and} \quad \Tmat \in \mathcal{T}_{n},
\end{align}
where the decision variables are $(c,\Tmat) \in \Rbb_+ \times \mathcal{T}_n$.
Note that $\Mmat(c,\Tmat)$ is linear in $c$ and $\Tmat$ and the set $\mathcal{T}_n$ is convex.
Although, these works on devising a global bound on the Lipschitz constant of a neural network are theoretically interesting, they lack scalability
They can only be computed on small networks and cannot be used during the training of large neural networks for regularization purposes.

%%%%%%%%%%%%%%%%%%%%%%%%%%%%%%%%%%%%%%%%%%%%%%%%%%%%%%%%%%%%%%%%%%%%%%%%%%%%%%%%
\subsection{Lipschitz Constant of Individual Layers}
\label{subsection:ch3-lipschitz_constant_of_individual_layers}
%%%%%%%%%%%%%%%%%%%%%%%%%%%%%%%%%%%%%%%%%%%%%%%%%%%%%%%%%%%%%%%%%%%%%%%%%%%%%%%%

\noindent
% In order to constrain the Lipschitz constant of neural networks, 
Instead of regularizing the ERM using the global Lipschitz constant, researchers have devised techniques to reduce the Lipschitz constant of \emph{individual layers} instead. 
The global Lipschitz of a neural network can easily be upper bounded by the product of the spectral norm of each weight matrix as follows:
\begin{proposition}[\citet{scaman2018lipschitz}] \label{proposition:ch3-naive_upper_bound_lipschitz}
  Let $N$ be a neural network of $\depth$ layers with 1-Lipschitz activation functions (\eg ReLU,
  Leaky ReLU, Tanh, Sigmoid, etc.), then, the Lipschitz constant of the neural network can be upper bounded as follows:
  \begin{equation} \label{equation:ch3-naive_upper_bound_lipschitz}
    \lip{N} \leq \prod_{i=1}^\depth \norm{\Wmat^{(i)}}_2 \enspace,
  \end{equation}
  where $\Wmat^{(i)}$ are the weights matrices of the neural network.
\end{proposition}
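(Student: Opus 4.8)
The plan is to exploit the fact that a neural network is, by \Cref{definition:ch2-neural_networks}, a composition $\nn_\weights = \layer^{\act_\depth}_{\Wmat^{(\depth)}, \bvec^{(\depth)}} \circ \cdots \circ \layer^{\act_1}_{\Wmat^{(1)}, \bvec^{(1)}}$, so that the global Lipschitz constant is controlled by the product of the Lipschitz constants of the individual layers, and then to bound each layer's Lipschitz constant separately. The two ingredients I would establish first are: (i) for any two Lipschitz continuous maps $f, g$ one has $\lip{f \circ g} \leq \lip{f}\,\lip{g}$, which follows immediately from \Cref{definition:ch2-lipschitz_constant} by inserting the intermediate point, $\norm{f(g(\xvec)) - f(g(\yvec))}_2 \leq \lip{f}\norm{g(\xvec) - g(\yvec)}_2 \leq \lip{f}\,\lip{g}\,\norm{\xvec - \yvec}_2$; and (ii) that each layer $\layer^{\act_i}_{\Wmat^{(i)}, \bvec^{(i)}} : \xvec \mapsto \act_i(\Wmat^{(i)}\xvec + \bvec^{(i)})$ satisfies $\lip{\layer^{\act_i}_{\Wmat^{(i)}, \bvec^{(i)}}} \leq \norm{\Wmat^{(i)}}_2$.

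For ingredient (ii), I would decompose the layer further as the composition of the activation $\act_i$ and the affine map $\xvec \mapsto \Wmat^{(i)}\xvec + \bvec^{(i)}$. The affine map has Lipschitz constant exactly $\norm{\Wmat^{(i)}}_2$: the bias cancels in the difference $\norm{(\Wmat^{(i)}\xvec + \bvec^{(i)}) - (\Wmat^{(i)}\yvec + \bvec^{(i)})}_2 = \norm{\Wmat^{(i)}(\xvec - \yvec)}_2$, and the supremum of $\norm{\Wmat^{(i)}(\xvec - \yvec)}_2 / \norm{\xvec - \yvec}_2$ is the spectral norm by \Cref{defintion:ch2-spectral_norm}. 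By hypothesis each $\act_i$ is $1$-Lipschitz (and $\act_\depth$ is the identity, which is also $1$-Lipschitz), so applying (i) to this two-term composition gives $\lip{\layer^{\act_i}_{\Wmat^{(i)}, \bvec^{(i)}}} \leq 1 \cdot \norm{\Wmat^{(i)}}_2 = \norm{\Wmat^{(i)}}_2$.

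Finally I would combine these by a straightforward induction on the depth $\depth$ (or simply iterate (i) across the $\depth$-fold composition), yielding
\begin{equation*}
  \lip{\nn} \;\leq\; \prod_{i=1}^{\depth} \lip{\layer^{\act_i}_{\Wmat^{(i)}, \bvec^{(i)}}} \;\leq\; \prod_{i=1}^{\depth} \norm{\Wmat^{(i)}}_2 \enspace,
\end{equation*}
which is the claim. There is no real obstacle here; the only point requiring a moment of care is verifying that the bias vectors genuinely do not contribute to the Lipschitz constant, and that the submultiplicativity inequality is only an inequality (equality can fail, e.g. when the activation saturates), so the bound is in general not tight — a remark worth making since later chapters aim to improve it for the specific case of convolutional (doubly-block Toeplitz) layers.
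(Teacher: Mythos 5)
Your proof is correct and is exactly the argument the paper has in mind: the paper states this proposition as a cited result of Scaman and Virmaux without a written-out proof, but the adjacent remark (identifying the layer-wise Lipschitz constant with the spectral norm, with the bias cancelling) together with submultiplicativity of Lipschitz constants under composition is precisely the route you take. One small refinement you made that is actually more careful than the paper's own remark: the paper asserts $\lip{\layer_{\Wmat,\bvec}^\rho} = \norm{\Wmat}_2$ for any $1$-Lipschitz activation, which holds for (Leaky-)ReLU but is only an inequality $\lip{\layer_{\Wmat,\bvec}^\rho} \leq \norm{\Wmat}_2$ in general (e.g.\ for a strictly contracting sigmoid), and the inequality is all that is needed for the proposition.
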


\begin{remark}
  The Lipschitz constant of a layer $\layer_{\Wmat, \bvec}^\rho$ (with a 1-Lipschitz activation function) is equal to the spectral norm of the matrix $\Wmat$ (largest singular value).
  Let $\layer_{\Wmat, \bvec}^\rho: \Rbb^n \rightarrow \Rbb^m$ such that $\layer_{\Wmat, \bvec}^\rho = \rho(\Wmat \xvec + \bvec)$ then by definition of the Lipschitz constant (see \Cref{definition:ch2-lipschitz_constant}) and of the operator norm, we have:
  \begin{equation}
    \lip{\layer_{\Wmat, \bvec}^\rho} = \sup_{\substack{\xvec \in \Rbb^n \\ \xvec \neq 0}} \frac{\norm{\Wmat \xvec}_2}{\norm{\xvec}_2} = \norm{\Wmat}_2
  \end{equation}
  \removespace
\end{remark}

The trivial bound given by the product of layer-wise Lipschitz constants in \Cref{equation:ch3-naive_upper_bound_lipschitz} is known to be loose and pessimistic.
Furthermore, we can show that reducing the Lipschitz constant of each layer independently does not imply that the global Lipschitz constant of the network will be reduced.

\begin{proposition} \label{proposition:ch3-limit_bound_lipschitz}
  Let $N$ be a neural network, then decreasing the Lipschitz constant of one or more layers does not imply reducing the Lipschitz constant of the network, \ie, $\lip{N}$.
\end{proposition}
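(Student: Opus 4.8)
The plan is to prove this non-implication by exhibiting an explicit counterexample: a network $N$ together with a modification of a single weight matrix whose spectral norm is strictly smaller, yet for which $\lip{N}$ strictly increases. Recall from the Remark above that the Lipschitz constant of layer $i$ equals $\norm{\Wmat^{(i)}}_2$, so ``decreasing the Lipschitz constant of a layer'' means replacing $\Wmat^{(i)}$ by some $\widetilde{\Wmat}^{(i)}$ with $\norm{\widetilde{\Wmat}^{(i)}}_2 < \norm{\Wmat^{(i)}}_2$ while leaving all other layers unchanged. The mechanism I would exploit is that the global Lipschitz constant is governed by the \emph{alignment} of the singular subspaces of consecutive layers (through the matrix products appearing in~\Cref{equation:ch3-decomposition_jacobian_lipschitz}), and not by the individual spectral norms: shrinking one matrix while re-aligning its dominant direction with the next layer can increase the overall product.

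Concretely, I would take a depth-$2$ network on $\Rbb^2$ with $\relu$ on the first layer and the identity on the second, and set
\[
  \Wmat^{(1)} = \leftmatrix 1 & 0 \\ 0 & 0 \rightmatrix, \qquad
  \Wmat^{(2)} = \leftmatrix 0 & 0 \\ 0 & 2 \rightmatrix,
\]
with arbitrary biases $\bvec^{(1)}, \bvec^{(2)}$, so that $\norm{\Wmat^{(1)}}_2 = 1$ and $\norm{\Wmat^{(2)}}_2 = 2$. The observation is structural: $\Wmat^{(1)}$ sends every input into a line in the $\evec^{(1)}$-direction plus a constant, and $\Wmat^{(2)}$ annihilates $\evec^{(1)}$. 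Chasing this through the $\relu$ shows that $N$ is a \emph{constant} function: the only coordinate that varies after $\Wmat^{(1)}$ is the first one, which $\Wmat^{(2)}$ kills, while the second coordinate entering $\Wmat^{(2)}$ depends only on $\bvec^{(1)}$. Hence $\lip{N} = 0$.

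Next I would replace the first layer by $\widetilde{\Wmat}^{(1)} = \leftmatrix 0 & 0 \\ 0 & \tfrac12 \rightmatrix$, so $\norm{\widetilde{\Wmat}^{(1)}}_2 = \tfrac12 < 1$: the Lipschitz constant of the first layer has strictly decreased, every other layer being untouched. Choosing the first-layer bias $\bvec^{(1)} = (M,M)^\top$ with $M > 0$ large, all $\relu$ units sit in their linear regime on an open neighbourhood of the origin, on which the modified network $\widetilde{N}$ agrees with an affine map whose linear part is $\Wmat^{(2)} \widetilde{\Wmat}^{(1)} = \leftmatrix 0 & 0 \\ 0 & 1 \rightmatrix$, of spectral norm $1$. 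Therefore $\lip{\widetilde{N}} \geq 1 > 0 = \lip{N}$, which is precisely the claimed phenomenon.

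The routine parts are pure linear algebra; the two points I would take care over are (i) justifying that the ``before'' network is genuinely $0$-Lipschitz despite the nonlinearity --- here the cleanest route is the direct observation that $N$ is literally constant for \emph{any} biases, rather than invoking Rademacher's theorem over every activation pattern --- and (ii) guaranteeing that the ``after'' network has a full-dimensional region on which all $\relu$ units are active, which is exactly what forces $\lip{\widetilde{N}}$ to be at least the spectral norm of $\Wmat^{(2)}\widetilde{\Wmat}^{(1)}$. I expect (ii) to be the only genuine subtlety. Finally, rescaling $\Wmat^{(2)}$ and $\widetilde{\Wmat}^{(1)}$ by a free parameter makes the gap between the two global constants arbitrarily large, which also explains why the layer-wise bound of~\Cref{equation:ch3-naive_upper_bound_lipschitz} cannot be tightened by controlling individual layers alone.
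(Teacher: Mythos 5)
Your proposal is correct, and it takes a genuinely different route from the paper's counterexample. The paper compares two non-degenerate $2$-layer networks $N_1, N_2$ with explicit $2\times 2$ weight matrices, then invokes Rademacher's theorem and the chain rule to compute the exact Lipschitz constants $\lip{N_1} = \tfrac{1+\sqrt 5}{2}$ and $\lip{N_2} = \sqrt{2}$, observing that $N_1$'s second layer has strictly smaller spectral norm than $N_2$'s ($\sqrt 2$ vs.\ $2$) while $\lip{N_1} > \lip{N_2}$. Your construction is more degenerate and more elementary: the ``before'' network collapses to a constant, so $\lip{N}=0$ is immediate without any optimization over activation patterns; and the ``after'' bound $\lip{\widetilde N}\ge 1$ follows by exhibiting a full-dimensional region on which $\widetilde N$ is affine with linear part of spectral norm~$1$. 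You also modify only \emph{one} weight matrix while keeping the other literally fixed, whereas the paper actually replaces both $\Amat^{(1)}$ and $\Amat^{(2)}$ (the first-layer matrices are different even though their spectral norms coincide); in that sense yours is the cleaner instantiation of the statement. What the paper's route buys is a less pathological example, one where both networks are genuinely nontrivial and the gap arises from the alignment of two full-rank maps rather than from one map annihilating the other's range. Both approaches establish the proposition; yours requires checking that some bias vector pushes all $\relu$ units into their linear regime near the origin, which you correctly identify as the only nontrivial step.
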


% \begin{proposition} \label{proposition:ch3-limit_bound_lipschitz}
%   Let $N: \Rbb^n \rightarrow \Rbb^n$ be a neural network such that $N(\xvec) = \Wmat^{(2)} \rho(\Wmat^{(1)} \xvec)$ with $\Wmat^{(1)}, \Wmat^{(2)} \in \Rbb^{n \times n}$.
%   Then, decreasing the Lipschitz constant of each layer does not imply that the Lipschitz constant of the network will be lower. 
% \end{proposition}

% \begin{proposition} \label{proposition:ch3-limit_bound_lipschitz}
%   Let $N_1(\xvec) = \Amat^{(2)} \rho(\Amat^{(1)} \xvec)$ and $N_2(\xvec) = \Bmat^{(2)} \rho(\Bmat^{(1)} \xvec)$ where $\rho$ is the $\relu$ activation function, then $\norm{\Amat^{(1)}}_2 \leq \norm{\Bmat^{(1)}}_2$ and $\norm{\Amat^{(2)}}_2 \leq \norm{\Bmat^{(1)}}_2$, does not imply that $\lip{N_1} \leq \lip{N_2}$.
% \end{proposition}

\begin{proof}[\Cref{proposition:ch3-limit_bound_lipschitz}]
  Let us prove this claim with a counter-example.
  Let $N_1(\xvec) = \Amat^{(2)} \rho(\Amat^{(1)} \xvec)$ and $N_2(\xvec) = \Bmat^{(2)} \rho(\Bmat^{(1)} \xvec)$ where $\rho$ is the $\relu$ activation function.
  Let
  \begin{align*}
    \Amat^{(1)} &= \leftmatrix 
      \phantom{+}0 & -1 \\ -1 & \phantom{+}0
    \rightmatrix \quad
    \Amat^{(2)}  = \leftmatrix
      -1 & -1 \\ -1 & \phantom{+}0
    \rightmatrix \\
    \Bmat^{(1)} &= \leftmatrix
      \phantom{+}0 & \phantom{+}0 \\ \phantom{+}0 & -1
    \rightmatrix \quad
    \Bmat^{(2)} = \leftmatrix
      -1 & -1 \\ -1 & -1
    \rightmatrix
  \end{align*}
  then: \vspace{-0.5cm}
  \begin{equation*}
    \norm{\Amat^{(1)}}_2 = 1,\ \norm{\Amat^{(2)}}_2 = \sqrt{2}
    \quad \text{and} \quad
    \norm{\Bmat^{(1)}}_2 = 1,\ \norm{\Bmat^{(2)}}_2 = 2
  \end{equation*}
  From \Cref{theorem:ch3-lipschitz_differential_op} and the chain rule, the Lipschitz constant of the networks $N_1$ and $N_2$ can be expressed as follows:
  \begin{align*}
    \lip{N_1} &= \sup_{\xvec \in [0, 1]^2} \norm{\Amat^{(2)} \diag\left(\xvec\right) \Amat^{(1)}}_2 \\
    \lip{N_2} &= \sup_{\xvec \in [0, 1]^2} \norm{\Bmat^{(2)} \diag\left(\xvec\right) \Bmat^{(1)}}_2
  \end{align*}
  It is easy to verify that:
  \begin{equation*}
    \lip{N_1} = \frac{1 + \sqrt{5}}{2} \approx 1.618 \quad \text{and} \quad \lip{N_2} = \sqrt{2} \approx 1.414
  \end{equation*}
  which concludes the proof.
\end{proof}
\noindent
While we cannot have a guarantee that the global Lipschitz will be reduced, we could still have an idea of the value of the global Lipschitz with the upper bound presented in~\Cref{equation:ch3-naive_upper_bound_lipschitz}.

\citet{huster2018limitations} have demonstrated several limitations on the expressive power of neural networks where the product of layer-wise Lipschitz constants is constrained.
In the same vein, \citet{couellan2019coupling} empirically showed that Lipschitz Regularization offers a trade-off between adversarial robustness and expressivity of the network.
However, the bound in \Cref{equation:ch3-naive_upper_bound_lipschitz} appears in multiple generalization bound~\cite{neyshabur2017,bartlett2017spectrally,golowich2018} and adversarial generalization~\cite{farnia2018generalizable} (see \Cref{chapter:ch2-background}) which could suggest that reducing the bound would improve the generalization capabilities of neural networks and its robustness.

Based on this theoretical insight, researchers have developed several techniques to constrain the Lipschitz constant of each layer in order to improve the generalization and robustness of neural networks.
A technique to enforce 1-Lipschitz layers is to impose or promote an orthogonality constrain of the weight matrices.
A square orthogonal matrix $\Mmat$ is a matrix whose columns and rows are orthogonal unit vectors and all eigenvalues are equal to 1.
\citet{cisse2017parseval} and more recently \citet{wang2020orthogonal,huang2020controllable} have proposed to minimize the following term:
\begin{equation} \label{equation:ch3-orthogonality_constraint}
  \frac{\beta}{2} \norm{\Wmat^\top \Wmat - \Imat}_2  \enspace, 
\end{equation}
to promote the orthogonality constraint, in addition to the usual loss function:
In the above equation, the hyper-parameter $\beta$ controls the constraint.
A higher $\beta$ would lead to a better orthogonality constraint and therefore, a Lipschitz constant ``almost'' equal to 1 for all the layers.

On the other hand, \citet{anil2019sorting} proposed to enforce the orthogonality of weight matrices by directly optimizing on the Stiefel
manifold (\ie, the manifold of orthogonal matrices, see~\citet{absil2009optimization}).
To perform this optimization, they made use of an iterative algorithm first introduced by~\citet{bjorck1971iterative}.
For a given matrix $\Wmat = \Wmat^{(0)}$, the algorithm finds the closest orthonormal matrix by computing the following term:
\begin{equation}
  \Wmat^{(k+1)} = \Wmat^{(k)} \left( \Imat + \frac{1}{2} \Vmat^{(k)} + \cdots + (-1)^r {-\frac{1}{2} \choose r}  \left(\Vmat^{(k)}\right)^r \right)
\end{equation}
where $\Vmat = \Imat - \Wmat^{(k)\top} \Wmat^{(k)}$.
Although this algorithm works well on dense matrices, it can be difficult to apply it to convolutions. 
\citet{li2019preventing} built upon this idea and proposed an algorithm to enforce the orthogonality of convolution layers.
They used the orthogonal projection proposed by \citet{kautsky1994matrix} and \citet{xiao2018dynamical} to build convolutional neural networks with orthogonal convolutions.

All techniques that impose an orthogonality constraint on the weights matrices successfully reduce the Lipschitz constant of the layers of the networks.
Moreover, when the Lipschitz constants of all the layers are low, we could have an idea of the value of the global Lipschitz with the upper bound of~\Cref{equation:ch3-naive_upper_bound_lipschitz} (\ie, if Lipschitz constant of all the layers are equal to 1, then, the network will have a Lipschitz constant below 1).
However, enforcing the orthogonality constraint, either by regularizing with the term of~\Cref{equation:ch3-orthogonality_constraint} or by optimizing on the Stiefel manifold, is the costly operation which make it difficult to scale on large neural networks.

\begin{algorithm}[tb]
  \caption{Power method for producing the largest singular value, $\sigma_1$, of a non-square matrix, $\Wmat$ \cite{gouk2018regularisation,golub2000eigenvalue}}
  \begin{algorithmic}[1]
    \Require{affine function $f(\xvec) = \Wmat \xvec + \bvec$, number of iteration $N$}
    \Ensure{approximation of the Lipschitz constant $\lip{f}$}
    \State Randomly initialise $\xvec$
    \For{$i = 1$ \textbf{to} $N$}
      \State $\xvec \gets \Wmat^\top \Wmat \xvec / \norm{\xvec}_2$
    \EndFor
    \State \textbf{return} $\norm{\Wmat \xvec}_2 / \norm{\xvec}_2$
  \end{algorithmic}
  \label{algorithm:ch3-power_method}
\end{algorithm}

\begin{algorithm}[tb]
  \caption{Convolutional power method \cite{farnia2018generalizable}}
  \begin{algorithmic}[1]
    \Require{2d-convolution function $f: \Rbb^{n \times n} \rightarrow \Rbb^{m \times m}$ with kernel $k$, 2d-convolution-transpose function $g: \Rbb^{n \times n} \rightarrow \Rbb^{m \times m}$ with kernel $k$ number of iteration $N$}
    \Ensure{approximation of the Lipschitz constant $\lip{f}$}
    \State Initialize $\xvec$ with a random vector matching the shape of the convolution input
    \For{$i = 1$ \textbf{to} $N$}
      \State $\xvec \gets f(\xvec) / \norm{f(\xvec)}_2 $
      \State $\xvec \gets g(\xvec) / \norm{g(\xvec)}_2$
    \EndFor
    \State \textbf{return} $\norm{f(\xvec)}_2 / \norm{\xvec}_2$
  \end{algorithmic}
  \label{algorithm:ch3-power_method_generic}
\end{algorithm}

Another technique, called \emph{Spectral Normalization}, consists in normalizing each weight matrix by its largest singular value, thus imposing each layer to be 1-Lipschitz.
As with the orthogonality constraint, this technique leads the network to have a global Lipschitz constant at most 1.
\citet{yoshida2017spectral} were the first to propose this method to improve the generalization of neural networks followed by~\cite{miyato2018spectral,gouk2018regularisation,farnia2018generalizable} for improving generalization and robustness against adversarial attacks.
In order to perform spectral normalization, they divided the values of each weight matrix by an approximation of its largest singular value.
The approximation of the largest singular was computed using the power method~\cite{golub2000eigenvalue}.

% used the power method~\cite{golub2000eigenvalue} to compute an approximation of the largest singular value of each weight matrix and divided all the values of the weight matrix by this 

The power method is an iterative eigenvalue algorithm (also known as the Von Mises iteration \cite{mises1929praktische}).
Given a matrix $\Wmat$ and a random vector $\bvec^{(0)}$, the eigenvector associated with the largest eigenvalue of the matrix $\Wmat$ can be computed with the following recurrence relation:
\begin{equation}
  \bvec^{(k+1)} = \frac{\Wmat \bvec^{(k)}}{\norm{\bvec^{(k)}}_2}  
\end{equation}
Then, the largest eigenvalue (when we talk about ``largest eigenvalue'' we mean in absolute value) can be obtained with the \emph{Rayleigh quotient}:
\begin{equation}
  \sigma_1\left( \Wmat \right) = \frac{\bvec^{(k)\top} \Wmat \bvec^{(k)}}{\bvec^{(k)\top} \bvec^{(k)}}
\end{equation}
With a sufficient number of iterations, the algorithm provably converges to the largest eigenvalue of the matrix.
To find the largest singular value, we can leverage the relation between eigenvalues and singular values:
\begin{equation}
  \sigma \left( \Wmat \right) = \sqrt{ \lambda \left( \Wmat^\top \Wmat \right) }
\end{equation}
The rate of convergence of the algorithm depends on the ratio between the second-largest eigenvalue and the largest eigenvalue.
Indeed, a ratio close to one can lead to slow convergence.
The pseudocode of the power method is given in \Cref{algorithm:ch3-power_method}.
Altough, \Cref{algorithm:ch3-power_method} needs explicit matrix for computing the largest singular value, \citet{farnia2018generalizable,ryu2019plug} extended the power method to convolution layers where the matrix $\Wmat$ is not explicitly constructed.
The pseudocode of their method is presented in \Cref{algorithm:ch3-power_method_generic}. 

In the context of deep learning and spectral normalization, the largest singular value needs to be computed for each layer of the network at each step of the training. 
Given that current state-of-the-art architecture have between 50 and 100 layers \cite{he2016deep,tan2019efficientnet}, using the power method \emph{until convergence} is prohibitive.
In~\Cref{chapter:ch5-lipschitz_bound}, we propose a new regularization scheme for reducing the Lipschitz constant of individual layers.
We will shown in~\Cref{subsection:ch5-comparison_of_lipbound_with_other_state-of-the-art_approaches} that our approach is more efficient that the power method even with a small number of iterations.

%%%%%%%%%%%%%%%%%%%%%%%%%%%%%%%%%%%%%%%%%%%%%%%%%%%%%%%%%%%%%%%%%%%%%%%%%%%%%%%%
\subsection{Singular Values of Convolutional Layers}
\label{subsection:ch3-singular_values_of_convolutional_layers}
%%%%%%%%%%%%%%%%%%%%%%%%%%%%%%%%%%%%%%%%%%%%%%%%%%%%%%%%%%%%%%%%%%%%%%%%%%%%%%%%

The power method is not the only technique available for approximating the largest singular value (Lipschitz constant) of a convolution layer.
Several works have devised bounds or approximations on the largest singular value of convolution layers by exploiting the \emph{structure} of the convolution operation \cite{sedghi2018singular,bibi2019deep,singla2019bounding,jia2017improving}.

% \citet{sedghi2018singular} have observed that a doubly-block Toeplitz matrix can be approximated by a 
%
% \citet{sedghi2018singular} have exploited the properties 

% \citet{gray2006toeplitz} have observed that band-Toeplitz matrices can be `approximated' by band-circulant matrices.
% This `approximation' is formalized by a mathematical concept called \emph{}j

To approximate the singular values of a convolution layer, \citet{sedghi2018singular} have exploited the properties of doubly-block circulant matrices (\ie, a circulant block matrix where each block is also a circulant matrix).
Indeed, a doubly-block circulant matrix is the matrix representation of a convolution with circulant padding.
In their work, \citet{sedghi2018singular} assume that the properties of doubly-block circulant matrices are `close' to the properties of a doubly-block Toeplitz matrix.

To compute the singular values of doubly-block circulant matrices, \citet{sedghi2018singular} have demonstrated the following result:
\begin{theorem}[Theorem 5 from \citet{sedghi2018singular}] \label{theorem:ch3-singular_values_doubly_block_circulant}
  Let $\Amat$ be a doubly-block circulant matrix such that:
  \begin{equation*}
    \Amat = \leftmatrix
      \Cmatsf^{(0)}   & \Cmatsf^{(n-1)} & \Cmatsf^{(n-2)} & \cdots        & \cdots          & \Cmatsf^{(1)}   \\
      \Cmatsf^{(1)}   & \Cmatsf^{(0)}   & \Cmatsf^{(n-1)} & \ddots        &                 & \vdots          \\
      \Cmatsf^{(2)}   & \Cmatsf^{(1)}   & \ddots          & \ddots        & \ddots          & \vdots          \\
      \vdots          & \ddots          & \ddots          & \ddots        & \Cmatsf^{(n-1)} & \Cmatsf^{(n-2)} \\
      \vdots          &                 & \ddots          & \Cmatsf^{(1)} & \Cmatsf^{(0)}   & \Cmatsf^{(n-1)} \\
      \Cmatsf^{(n-1)} & \cdots          & \cdots          & \Cmatsf^{(2)} & \Cmatsf^{(1)}   & \Cmatsf^{(0)}
    \rightmatrix
  \end{equation*}
  where $\Cmatsf^{(i)} = \circulant({\cvec_i}),\ \forall i \in \Iset_n^+$.
  Let $\Kmat = \leftmat \cvec_0, \cvec_1, \cdots, \cvec_{n-1} \rightmat^\top$ then, the singular values of the doubly-block circulant matrix $\Amat$ are the modulus of the entries of $\Umat_n^\top \Kmat \Umat_n$.
\end{theorem}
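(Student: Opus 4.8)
The plan is to derive the statement directly from the block diagonalization of doubly-block circulant matrices, \Cref{equation:ch2-diagonalization_doubly_block_circulant_matrix}, which reads $\Amat = \frac{1}{n^2}(\Umat_n \otimes \Umat_n)^* \boldsymbol{\Lambda} (\Umat_n \otimes \Umat_n)$ with $\boldsymbol{\Lambda} = \diag\left((\Umat_n \mathsf{\psi}^{(0)}, \dots, \Umat_n \mathsf{\psi}^{(n-1)})\right)$. First I would record the normalization: since $\Umat_n \Umat_n^* = n \Imat_n$, the matrix $\tfrac{1}{\sqrt{n}}\Umat_n$ is unitary, hence $\Qmat \triangleq \tfrac{1}{n}(\Umat_n \otimes \Umat_n) = \big(\tfrac{1}{\sqrt n}\Umat_n\big) \otimes \big(\tfrac{1}{\sqrt n}\Umat_n\big)$ is unitary, and the prefactor $\tfrac{1}{n^2}$ splits exactly between the two copies, so $\Amat = \Qmat^* \boldsymbol{\Lambda} \Qmat$. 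Because the singular values of a matrix are unchanged under left and right multiplication by unitaries (indeed $\Amat^*\Amat = \Qmat^* \boldsymbol{\Lambda}^*\boldsymbol{\Lambda}\,\Qmat$ is unitarily similar to the diagonal matrix $\boldsymbol{\Lambda}^*\boldsymbol{\Lambda}$), $\Amat$ and $\boldsymbol{\Lambda}$ have the same singular values, and since $\boldsymbol{\Lambda}$ is diagonal those singular values are exactly the moduli of its diagonal entries. It then suffices to show that the diagonal entries of $\boldsymbol{\Lambda}$ are the entries of $\Umat_n^\top \Kmat \Umat_n$.

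Next I would unwind $\boldsymbol{\Lambda}$. Recall from the block diagonalization (\Cref{equation:ch2-block_diagonalization_block_circulant} and the accompanying relation) that the auxiliary blocks satisfy $(\mathsf{\Psi}^{(0)}, \dots, \mathsf{\Psi}^{(n-1)})^\top = (\Umat_n \otimes \Imat_n)(\Cmatsf^{(0)}, \dots, \Cmatsf^{(n-1)})^\top$, i.e. $\mathsf{\Psi}^{(j)} = \sum_{i \in \Iset^+_n} (\Umat_n)_{ji}\, \Cmatsf^{(i)}$. Since circulant matrices are closed under linear combination and $\circulant(\cdot)$ is linear in its generating vector, each $\mathsf{\Psi}^{(j)}$ is circulant with characteristic vector $\mathsf{\psi}^{(j)} = \sum_{i \in \Iset^+_n} (\Umat_n)_{ji}\, \cvec_i$; stacking the $\mathsf{\psi}^{(j)}$ as rows shows the matrix with rows $\mathsf{\psi}^{(j)\top}$ equals $\Umat_n \Kmat$, where $\Kmat = (\cvec_0, \dots, \cvec_{n-1})^\top$ has the $\cvec_i$ as its rows. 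Using that $\Umat_n$ is symmetric (its entries $z_n^{jk}$ are invariant under exchanging $j$ and $k$), the $j$-th diagonal block of $\boldsymbol{\Lambda}$ is $\Umat_n \mathsf{\psi}^{(j)}$, whose $k$-th entry is $\sum_{l}(\Umat_n)_{kl}(\mathsf{\psi}^{(j)})_l = \sum_{l,m}(\Umat_n)_{jm}\,(\Kmat)_{ml}\,(\Umat_n)_{lk} = (\Umat_n \Kmat \Umat_n)_{jk}$.

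Combining the two paragraphs: $\boldsymbol{\Lambda}$ collects, in the natural order, the entries of $\Umat_n \Kmat \Umat_n = \Umat_n^\top \Kmat \Umat_n$, and the singular values of $\Amat$ are the moduli of these entries, which is the claim. I expect the only delicate part to be the index bookkeeping of the second paragraph: one must keep straight which copy of the Fourier matrix in $\Umat_n \otimes \Imat_n$ versus $\Imat \otimes \Umat_n$ acts on the block index as opposed to the intra-block index, and track the normalization constants, so that the two successive applications of $\Umat_n$ genuinely assemble into the product $\Umat_n \Kmat \Umat_n$ and not, say, a conjugated variant $\Umat_n \Kmat \Umat_n^*$.
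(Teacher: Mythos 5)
Your proof is correct and follows exactly the route the paper indicates: it starts from the doubly-block circulant diagonalization $\Amat = \tfrac{1}{n^2}(\Umat_n \otimes \Umat_n)^* \boldsymbol{\Lambda}(\Umat_n \otimes \Umat_n)$, observes that $\tfrac{1}{n}(\Umat_n\otimes\Umat_n)$ is unitary so the singular values of $\Amat$ are the moduli of the diagonal of $\boldsymbol{\Lambda}$, and then correctly traces the two successive Fourier contractions (one from $\Umat_n\otimes\Imat_n$, one from $\Imat\otimes\Umat_n$) to assemble the diagonal entries into $(\Umat_n\Kmat\Umat_n)_{jk}$, with $\Umat_n=\Umat_n^\top$ closing the gap to the stated form. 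Your index bookkeeping and the circulant-linearity step ($\mathsf{\psi}^{(j)} = \sum_i (\Umat_n)_{ji}\cvec_i$) both check out, so there is nothing to flag.
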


\noindent
To prove \Cref{theorem:ch3-singular_values_doubly_block_circulant}, \citet{sedghi2018singular} used the diagonalization of doubly-block circulant matrices (see~\Cref{chapter:ch2-background}, \Cref{equation:ch2-diagonalization_doubly_block_circulant_matrix}).
The main advantage of this approach is that the singular values of a doubly-block circulant matrix can be computed with the Fast Fourier Transform algorithm (see \Cref{section:ch2-a_primer_on_circulant_and_toeplitz_matrices}) which offers a reduced complexity compared to classical approaches for computing the singular values of a matrix.
However, this approach exhibits several limitations.
First, this method results in a loose approximation of the maximal singular value of a convolution layer which does not use the circulant padding, which is often the case in practical settings.
Also, the complexity of their algorithm is dependent on the size of the input which can be high for large datasets.
Finally, for multi-channel convolution, their method requires the computation of the spectral norm of $n^2$ matrices each of size $\cin \times \cout$ as stated in the following theorem:

\begin{theorem}[Theorem 6 from \citet{sedghi2018singular}]
  Let $\Mmat$ be the matrix encoding the linear transform computed by a multi-channel convolution layer.
  Let $\Kmat \in \Rbb^{\cin \times \cout \times n \times n}$ such that $(\Kmat)_{i,j}$ for all $i,j \in [n]^2$ be constructed as in \Cref{theorem:ch3-singular_values_doubly_block_circulant}, 
  Let $\widetilde{\Kmat}_{i,j} = \Umat_n^\top \leftmat \Kmat \rightmat_{i,j} \Umat_n $ and define the following operator matrix 
  \begin{equation}
    \Pmat(i,j) = \leftmatrix 
    \leftmat (\widetilde{\Kmat})_{(0,0)} \rightmat_{i, j} & \cdots & \leftmat (\widetilde{\Kmat})_{(0, \cout-1)} \rightmat_{i, j} \\
    \vdots & & \vdots \\
    \leftmat (\widetilde{\Kmat})_{(\cin-1, 0)} \rightmat_{i, j} & \cdots & \leftmat (\widetilde{\Kmat})_{(\cin-1, \cout-1)} \rightmat_{i, j}
    \rightmatrix
  \end{equation}
  Then
  \begin{equation}
    \sigma(\Mmat) = \bigcup_{i, j = 0}^{n-1} \sigma \left(  \Pmat(i,j) \right).
  \end{equation}
  \removespace
\end{theorem}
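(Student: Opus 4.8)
The plan is to lift the single-channel result \Cref{theorem:ch3-singular_values_doubly_block_circulant} to the multi-channel setting by exploiting the fact that \emph{all} the doubly-block circulant blocks making up $\Mmat$ are diagonalized in the \emph{same} basis, namely the two-dimensional Fourier basis $\Umat_n \otimes \Umat_n$. After grouping its rows by output channel and its columns by input channel, $\Mmat$ is a $\cout \times \cin$ array of blocks $\Mmat_{b,a}$, where $\Mmat_{b,a}$ is the doubly-block circulant matrix associated with the kernel slice from input channel $a$ to output channel $b$, which is exactly the object to which \Cref{theorem:ch3-singular_values_doubly_block_circulant} applies. By \Cref{equation:ch2-diagonalization_doubly_block_circulant_matrix}, each such block factors as $\Mmat_{b,a} = \frac{1}{n^2}(\Umat_n\otimes\Umat_n)^{*}\,\Dmat_{b,a}\,(\Umat_n\otimes\Umat_n)$ with $\Dmat_{b,a}$ diagonal; when the $n^2$ diagonal entries of $\Dmat_{b,a}$ are indexed by a spatial Fourier pair $(i,j)$, the $(i,j)$ entry equals the $(i,j)$ entry of $\widetilde{\Kmat}_{a,b} = \Umat_n^{\top}(\Kmat)_{a,b}\Umat_n$, that is, of the matrix appearing in the statement.

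Next I would factor the whole matrix as $\Mmat = \frac{1}{n^2}\big(\Imat_{\cout}\otimes(\Umat_n\otimes\Umat_n)\big)^{*}\,\widetilde{\Mmat}\,\big(\Imat_{\cin}\otimes(\Umat_n\otimes\Umat_n)\big)$, where $\widetilde{\Mmat}$ is the $\cout\times\cin$ block matrix whose $(b,a)$ block is the diagonal matrix $\Dmat_{b,a}$. The heart of the argument is a reindexing step: reading the diagonal of $\Dmat_{b,a}$ at spatial Fourier position $(i,j)$ produces precisely the $(a,b)$ entry of $\Pmat(i,j)$. Consequently there are permutation matrices, independent of $\Kmat$, reordering the composite row index $(b,(i,j))$ into $((i,j),b)$ and likewise on the column side, which conjugate $\widetilde{\Mmat}$ into the block-diagonal matrix $\bdiag\big(\Pmat(0,0),\dots,\Pmat(n-1,n-1)\big)$ built of $n^2$ blocks of size $\cin\times\cout$ --- or of the transposes $\Pmat(i,j)^{\top}$, depending on whether one indexes channel blocks by (output, input) or (input, output); since a transpose does not change singular values this distinction is immaterial for the conclusion.

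Finally I would invoke two standard facts. First, singular values are invariant under left and right multiplication by unitary matrices: here $\Imat_{\cout}\otimes(\Umat_n\otimes\Umat_n)$ and $\Imat_{\cin}\otimes(\Umat_n\otimes\Umat_n)$, once the scalar $\frac{1}{n^2}$ is absorbed (recall $\Umat_n\Umat_n^{*}=n\Imat_n$, so $\frac{1}{\sqrt{n}}\Umat_n$ and hence $\frac{1}{n}\,\Umat_n\otimes\Umat_n$ are unitary), together with the permutation matrices, are all unitary. Second, the singular values of a block-diagonal matrix are the union, counted with multiplicity, of the singular values of its blocks. Chaining the two equalities yields $\sigma(\Mmat)=\bigcup_{i,j=0}^{n-1}\sigma(\Pmat(i,j))$.

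The main obstacle is combinatorial rather than analytic: one must pin down the Kronecker-product ordering, the placement of $\Umat_n$ versus $\Umat_n^{\top}$, and the exact permutation so that the reshuffled diagonal blocks match the matrices $\Pmat(i,j)$ of the statement up to transposition, and one must check that the $\frac{1}{n^2}$ normalization cancels correctly against the non-unitary $\Umat_n$. Beyond this, every ingredient is immediate --- \Cref{theorem:ch3-singular_values_doubly_block_circulant} for the per-block diagonalization, unitary invariance of singular values, and the behaviour of singular values under block-diagonal concatenation.
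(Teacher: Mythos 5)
The paper does not actually prove this theorem: it is quoted as a known result from \citet{sedghi2018singular} in the related-work chapter, so there is no in-paper proof to compare against. Your reconstruction is nonetheless correct and is essentially the argument of the original Sedghi et al.\ paper: the single-channel diagonalization $\Amat = \frac{1}{n^2}(\Umat_n\otimes\Umat_n)^{*}\boldsymbol{\Lambda}(\Umat_n\otimes\Umat_n)$ from \Cref{equation:ch2-diagonalization_doubly_block_circulant_matrix} holds simultaneously for every channel block because the eigenbasis does not depend on the kernel, so $\big(\Imat_{\cout}\otimes\frac{1}{n}(\Umat_n\otimes\Umat_n)\big)^{*}\,\Mmat\,\big(\Imat_{\cin}\otimes\frac{1}{n}(\Umat_n\otimes\Umat_n)\big)$ is a $\cout\times\cin$ array of diagonal blocks, and a fixed perfect-shuffle permutation (independent of $\Kmat$) turns this into $\bdiag\big(\Pmat(0,0)^{\top},\dots,\Pmat(n-1,n-1)^{\top}\big)$; unitary invariance of singular values, the fact that $\frac{1}{\sqrt{n}}\Umat_n$ is unitary, and the block-diagonal splitting then give the stated union. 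You correctly flag the two delicate points: that $\Umat_n=\Umat_n^{\top}$ (so the $\Umat_n^{\top}$ in the statement coincides with the $\Umat_n$ appearing in the diagonalization) and that the ordering of channel indices only introduces a transpose of $\Pmat(i,j)$, which leaves the singular values unchanged. No gap.
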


% , their method requires the computation of the spectral norm of n
% 2 matrices (each matrix of
% size cout × cin) for every convolution layer making it impractical to use during training.

In the same vein, \citet{singla2019bounding} have used the properties of convolutions to devise several bounds on the singular values of convolution layers.
Recall from \Cref{subsubsection:ch2-relation_with_the_convolution_operator} that a convolution kernel is a 4 dimensional tensor of size $\cout \times \cin \times k_1 \times k_2$.
\citet{singla2019bounding} have demonstrated that the largest singular value of a convolution layer $\layer_\Kmat$ parameterized by a kernel $\Kmat$ can be upper-bounded as follows:

\begin{theorem}[Reformulation of Theorem 1 from \citet{singla2019bounding}]
  Let $\Kmat \in \Rbb^{\cout \times \cin \times k_1 \times k_2}$ be the kernel of a convolution layer $\layer_\Kmat$, then,
  \begin{equation}
    \lip{\layer_\Kmat} \leq \min \left\{ \sqrt{k_1 k_2} \norm{\Rmat}_2, \sqrt{k_2 k_2} \norm{\Smat}_2 \right\}
  \end{equation}
  where $\Rmat$ and $\Smat$ are matrices of size $k_1 \cout \times k_2 \cin$ and $k_2 \cout \times k_1 \cin$ defined as follows:
  \begin{align}
    \Rmat &= \leftmatrix
      % \Kmat_{0,0,:,:}       & \Kmat_{0,1,:,:}         & \cdots & \Kmat_{0,c_{in}-1,:,:} \\
      % \Kmat_{1,0,:,:}       & \Kmat_{1,1,:,:}         & \cdots & \Kmat_{1,c_{in}-1,:,:} \\
      % \vdots                & \vdots                  & \ddots & \vdots                 \\
      % \Kmat_{\cout-1,0,:,:} & \Kmat_{\cout-1,1,:,:} & \cdots & \Kmat_{\cout-1,\cin-1,:,:}
      (\Kmat)_{0,0}       & \cdots & (\Kmat)_{0,\cin-1} \\
      (\Kmat)_{1,0}       & \cdots & (\Kmat)_{1,\cin-1} \\
      \vdots              & \ddots & \vdots             \\
      (\Kmat)_{\cout-1,0} & \cdots & (\Kmat)_{\cout-1,\cin-1}
    \rightmatrix \\[0.5cm]
    \Smat &= \leftmatrix
      (\Kmat)_{0,0}^\top       & \cdots & (\Kmat)_{0,\cin-1}^\top \\
      (\Kmat)_{1,0}^\top       & \cdots & (\Kmat)_{1,\cin-1}^\top \\
      \vdots                   & \ddots & \vdots                  \\
      (\Kmat)_{\cout-1,0}^\top & \cdots & (\Kmat)_{\cin-1,\cout-1}^\top
    \rightmatrix
  \end{align}
  \removespace
\end{theorem}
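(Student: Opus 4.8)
\emph{Proof plan.} The plan is to reduce the statement to a uniform bound on the matrix-valued \emph{symbol} of the convolution and then bound that symbol by an elementary Kronecker factorisation. Recall from \Cref{subsubsection:ch2-relation_with_the_convolution_operator} that (following \citet{jain1989fundamentals}) the layer $\layer_\Kmat$ acts on the vectorised input as multiplication by a matrix $\Mmat$, which in the multi-channel case is a $\cout\times\cin$ array of blocks, the $(a,b)$ block being the doubly-block Toeplitz matrix generated by the $k_1\times k_2$ kernel slice $(\Kmat)_{a,b}$; for a $1$-Lipschitz activation this gives $\lip{\layer_\Kmat}\le\norm{\Mmat}_2$ (cf. the remark following \Cref{proposition:ch3-naive_upper_bound_lipschitz}). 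To $\Mmat$ I would associate the trigonometric polynomial
\begin{equation*}
  P(\omega_1,\omega_2) \triangleq \sum_{i=0}^{k_1-1}\sum_{j=0}^{k_2-1} (\Kmat)_{:,:,i,j}\, e^{\ci(i\omega_1+j\omega_2)} \in \Cbb^{\cout\times\cin},
\end{equation*}
i.e. the block generating function of \Cref{equation:ch2-toeplitz_operator} applied over the two spatial axes (up to the sign/ordering convention fixed by the padding, which is irrelevant below). The first step is to invoke the block-Toeplitz version of \citet{gray2006toeplitz} — the mechanism used throughout \Cref{chapter:ch5-lipschitz_bound}, and consistent with the doubly-block circulant diagonalisation of \Cref{equation:ch2-diagonalization_doubly_block_circulant_matrix} — to obtain
\begin{equation*}
  \lip{\layer_\Kmat}\le\norm{\Mmat}_2 \le \sup_{(\omega_1,\omega_2)\in[0,2\pi]^2} \norm{P(\omega_1,\omega_2)}_2 .
\end{equation*}

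Next I would factor $P$ so that the spatial and the channel dimensions separate. Let $e_1(\omega_1)\in\Cbb^{k_1}$ and $e_2(\omega_2)\in\Cbb^{k_2}$ have entries $e_1(\omega_1)_i = e^{\ci i\omega_1}$ and $e_2(\omega_2)_j = e^{\ci j\omega_2}$, and let $\Rmat$ be the block matrix of kernel slices of the statement, whose $a$-th block row is $\big[\,(\Kmat)_{a,0}\ \cdots\ (\Kmat)_{a,\cin-1}\,\big]$. A direct entrywise check, using $e_1(\omega_1)^\top (\Kmat)_{a,b}\, e_2(\omega_2) = \sum_{i,j}(\Kmat)_{a,b,i,j}e^{\ci(i\omega_1+j\omega_2)}$, shows
\begin{equation*}
  P(\omega_1,\omega_2) = \big(\Imat_{\cout}\otimes e_1(\omega_1)^\top\big)\,\Rmat\,\big(\Imat_{\cin}\otimes e_2(\omega_2)\big).
\end{equation*}
Since a Kronecker product of an identity with a vector has operator norm equal to the norm of that vector, and $\norm{e_1(\omega_1)}_2=\sqrt{k_1}$, $\norm{e_2(\omega_2)}_2=\sqrt{k_2}$ (each entry has modulus $1$), submultiplicativity of $\norm{\cdot}_2$ yields $\norm{P(\omega_1,\omega_2)}_2\le\sqrt{k_1}\,\norm{\Rmat}_2\,\sqrt{k_2}$ uniformly in $(\omega_1,\omega_2)$, hence $\lip{\layer_\Kmat}\le\sqrt{k_1 k_2}\,\norm{\Rmat}_2$.

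The $\Smat$-bound follows by rerunning the argument with the two spatial axes swapped: from $e_1(\omega_1)^\top (\Kmat)_{a,b}\, e_2(\omega_2) = e_2(\omega_2)^\top (\Kmat)_{a,b}^\top\, e_1(\omega_1)$ one gets $P(\omega_1,\omega_2) = \big(\Imat_{\cout}\otimes e_2(\omega_2)^\top\big)\,\Smat\,\big(\Imat_{\cin}\otimes e_1(\omega_1)\big)$, so the same estimate gives $\norm{P(\omega_1,\omega_2)}_2\le\sqrt{k_1 k_2}\,\norm{\Smat}_2$; taking the minimum of the two bounds proves the claim. I expect the only real obstacle to be the first step, namely justifying $\norm{\Mmat}_2\le\sup_{\omega}\norm{P(\omega)}_2$ for the precise finite, possibly zero-padded, multi-channel operator at hand: with circular padding this is an exact identity (the eigenvalues are samples of $P$ on a discrete grid, by \Cref{equation:ch2-diagonalization_doubly_block_circulant_matrix}), while for ordinary convolution it is the classical one-sided Toeplitz estimate, and the care needed is in the block/channel bookkeeping and in matching the symbol to the chosen padding. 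Everything after that — the Kronecker factorisation and the two elementary norm computations — is routine.
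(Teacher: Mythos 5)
The thesis never proves this statement: it appears in the Related Work chapter as a direct restatement of Theorem~1 of \citet{singla2019bounding}, and the surrounding text only remarks that the original authors built on the Sedghi et al.\ doubly-block circulant diagonalisation. So there is no in-paper proof to compare against; I can only assess your argument on its own merits and against what the original authors did.

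Your proof is correct, and it is in fact the same argument as in \citet{singla2019bounding}, re-expressed in the symbol language used in \Cref{chapter:ch5-lipschitz_bound}. The two key steps check out. First, the reduction $\norm{\Mmat}_2 \le \sup_{\omega} \sigma_1(P(\omega))$ for the block-of-doubly-block-Toeplitz operator is exactly the matrix-valued (and here rectangular, $\cout\times\cin$) form of \Cref{theorem:block_teoplitz_sup_singular}; for the circular padding the original authors assume, this is even an equality up to grid sampling, via \Cref{equation:ch2-diagonalization_doubly_block_circulant_matrix} and \Cref{theorem:ch3-singular_values_doubly_block_circulant}. You are right that this is the only non-routine step, and your discussion of it is accurate. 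Second, the Kronecker factorisation $P(\omega_1,\omega_2) = (\Imat_{\cout}\otimes e_1(\omega_1)^\top)\,\Rmat\,(\Imat_{\cin}\otimes e_2(\omega_2))$ is a correct entrywise identity given the block layout of $\Rmat$ stated in the theorem, and $\norm{\Imat_m\otimes v^\top}_2 = \norm{v}_2$ together with submultiplicativity gives the $\sqrt{k_1 k_2}\norm{\Rmat}_2$ bound uniformly in $\omega$; the $\Smat$ bound then follows by transposing the inner kernel slices as you describe. Two cosmetic points worth flagging: the factor $\sqrt{k_2 k_2}$ in the stated display is a typo for $\sqrt{k_1 k_2}$ (your proof correctly produces $\sqrt{k_1 k_2}$ for both terms), and the last block in the displayed $\Smat$ should read $(\Kmat)_{\cout-1,\cin-1}^\top$ rather than $(\Kmat)_{\cin-1,\cout-1}^\top$.
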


In order to prove this result, \citet{singla2019bounding} built upon the work of \citet{sedghi2018singular} and have also only considered circulant convolutions (performed by doubly-block circulant matrices).
Instead of proposing a method to compute \emph{all} singular values of the equivalent doubly-block circulant matrix, their method is an upper-bound on the largest singular value of the Jacobian of the convolution. 
Because this method is independent of the input dimension, the computational complexity is substantially reduced compared to the approach of \citet{sedghi2018singular}, however, the reduction in computational complexity is at the expense of accuracy as we will show in~\Cref{chapter:ch5-lipschitz_bound}.

\subsection{Discussion}
%%%%%%%%%%%%%%%%%%%%%%%%%%%%%%%%%%%%%%%%%%%%%%%%%%%%%%%%%%%%%%%%%%%%%%%%%%%%%%%%

We have presented state-of-the-art methods for regularizing the Lipschitz constant of neural networks with the aim to improve their robustness against adversarial attacks.
The power method~\cite{golub2000eigenvalue} is a popular technique for approximating the maximal singular value of a matrix.
% (Google used it in their PageRank algorithm \cite{page1999pagerank} which was the build block of their search engine and Twitter uses it to show users recommendations of whom to follow \cite{gupta2013wtf}).
Recent works in deep learning use this method in a wide variety of settings, for example, robustness \cite{farnia2018generalizable,tsuzuku2018lipschitz}, generalization~\cite{yoshida2017spectral,gouk2018regularisation} or to stabilize the training of Generative Adversarial Networks (GANs) \cite{miyato2018spectral}.
Despite a number of interesting results, using the power method is expensive and results in prohibitive training times. 
Other approaches to regularize the Lipschitz constant of neural networks have been proposed by~\citet{sedghi2018singular} and ~\citet{singla2019bounding}.
These methods exploit the properties of circulant matrices to approximate the maximal singular value of a convolution layer.
Although interesting, theses method results in a loose approximation of the maximal singular value.
Our work is positioned at the intersection between these works, we will introduce a new approach for regularizing the Lipschitz constant of neural networks, that is more efficient than the power method and more accurate than methods relying on the structure of convolutions.

\chapter{Diagonal and Circulant Matrices for Compact Neural Networks}
\label{chapter:ch4-diagonal_circulant_neural_network}
%%%%%%%%%%%%%%%%%%%%%%%%%%%%%%%%%%%%%%%%%%%%%%%%%%%%%%%%%%%%%%%%%%%%%%%%%%%%%%%
\localtoc
% \vspace{-1cm}

% \begin{abstract}
% In this chapter, we study deep diagonal-circulant neural networks, which are deep neural networks in which weight matrices are the product of diagonal and circulant ones.
% Besides making a theoretical analysis of their expressivity, we introduce principled techniques for training these models: we devise an initialization scheme and propose a smart use of nonlinearity functions in order to train deep diagonal-circulant networks. 
% Furthermore, we show that these networks outperform recently introduced deep networks with other types of structured layers. We conduct a thorough experimental study to compare the performance of deep diagonal-circulant networks with state-of-the-art models based on structured matrices and with dense models. We show that our models achieve better accuracy than other structured approaches while requiring 2x fewer weights than the next best approach. Finally, we train compact and accurate deep diagonal-circulant networks on a real world video classification dataset with over 3.8 million training examples. 
% \end{abstract}

%%%%%%%%%%%%%%%%%%%%%%%%%%%%%%%%%%%%%%%%%%%%%%%%%%%%%%%%%%%%%%%%%%%%%%%%%%%%%%%%
\section{Introduction}
\label{chapter:ch4-introduction}
%%%%%%%%%%%%%%%%%%%%%%%%%%%%%%%%%%%%%%%%%%%%%%%%%%%%%%%%%%%%%%%%%%%%%%%%%%%%%%%%

% In recent years, designing compact and accurate neural networks with a small number of trainable parameters has been an active research topic.
% It is motivated by practical applications in embedded systems (to reduce memory footprint \cite{sainath2015convolutional}), federated and distributed learning (to reduce communication \cite{konecny2016federated}), etc.
% derivative-free optimization in reinforcement learning (to simplify the computation of the approximated gradient \cite{choromanski2018structured}), etc.
% Besides a number of practical applications, it is also an important research question whether or not models really need to be this large or if smaller networks can achieve similar accuracy~\cite{ba2014deep}.

As seen in the previous chapters, structured matrices are at the very core of most of the work on compact networks.
% In these models, dense weight matrices are replaced by matrices with a prescribed structure (\eg, low rank matrices, Toeplitz matrices, circulant matrices, LDR, etc.).
Despite substantial efforts  (\eg, \citet{cheng2015exploration,moczulski2016acdc}), the performance of compact models is still far from achieving an acceptable accuracy motivating their use in real-world scenarios.
This raises several questions about the effectiveness of such models and about our ability to train them.
In particular two main questions call for investigation:
% \begin{enumerate}[parsep=0pt,topsep=0pt]
% \begin{enumerate}[itemsep=0pt,leftmargin=0pt]
\begin{enumerate}
    \item \emph{What is the expressive power of structured layers compared to dense layers?}
    % \item \emph{What is the expressive power of diagonal-circulant neural networks compared to dense ones?}
    \item \emph{How to efficiently train deep neural networks with a large number of structured layers?}
    % \item \emph{How to efficiently train diagonal-circulant neural networks with a large number of layers?}
    % \item \emph{How to efficiently train diagonal-circulant neural networks with a large depth?}
\end{enumerate}
We aim at answering these questions by studying deep diagonal-circulant neural networks (\aka DCNNs), which are deep neural networks in which weight matrices are the product of diagonal and circulant ones.
% The idea of using diagonal and circulant matrices together comes from a series of results in linear algebra by~\citet{muller1998algorithmic} and~\citet{huhtanen2015factoring}.
% The most recent result from~\cite{huhtanen2015factoring} demonstrates that any matrix can be decomposed into the product of $2n-1$ alternating diagonal and circulant matrices.

To answer the first question, we propose an analysis of the expressivity of DCNNs by extending the results obtained by~\citet{huhtanen2015factoring} which states that any matrix can be decomposed into the product of $2n-1$ alternating diagonal and circulant matrices.
We introduce a new bound on the number of diagonal-circulant products required to approximate a matrix that depends on its rank.
Building on this result, we demonstrate that a DCNN with bounded width and small depth can approximate any dense neural networks with ReLU activations. 

To answer the second question, we first describe a theoretically sound initialization procedure for DCNN which allows the signal to propagate through the network without vanishing or exploding.
Furthermore, we provide a number of empirical insights to explain the behavior of DCNNs and show the impact on the number of nonlinearities in the network on the convergence rate and the accuracy of the network. 
By combining all these insights, we are able (for the first time) to train large and deep DCNNs and demonstrate the good performance of these networks on a large-scale application (the \yt video classification problem) and obtain very competitive accuracy. 

The chapter is organized as follows:
\Cref{section:ch4-diagonal_and_circulant_matrices_for_matrix_decomposition} introduces our new result extending the one from \citet{huhtanen2015factoring}.
\Cref{section:ch4-analysis_of_diagonal_circulant_neural_networks} proposes a theoretical analysis on the expressivity of DCNNs.
\Cref{section:ch4-how_to_train_deep_diagonal_circulant_neural_networks} describes two efficient techniques for training deep diagonal-circulant neural networks.
\Cref{section:ch4-experiments} presents extensive experiments to compare the performance of deep diagonal-circulant neural networks in different settings with respect to other state-of-the-art approaches.
Finally, \Cref{section:ch4-concluding_remarks} provides concluding remarks.

%%%%%%%%%%%%%%%%%%%%%%%%%%%%%%%%%%%%%%%%%%%%%%%%%%%%%%%%%%%%%%%%%%%%%%%%%%%%%%%
% \section{A Primer on the Expressivity of Circulant Matrices and a New Result}
% \label{section:ch4-a_primer_on_circulant_matrices_and_a_new_result}
\section{Diagonal and Circulant Matrices for Matrix Decomposition}
\label{section:ch4-diagonal_and_circulant_matrices_for_matrix_decomposition}
%%%%%%%%%%%%%%%%%%%%%%%%%%%%%%%%%%%%%%%%%%%%%%%%%%%%%%%%%%%%%%%%%%%%%%%%%%%%%%%

As seen in the Background (\Cref{chapter:ch2-background}), circulant matrices exhibit several interesting properties from the perspective of numerical computations.
Most importantly, any $n \times n$ circulant matrix $\Cmat$ can be represented using only $n$ coefficients instead of the $n^2$ coefficients required to represent classical unstructured matrices.
In addition, the matrix-vector product is simplified from $\bigO(n^2)$ to $\bigO(n \log n)$ using the  convolution theorem.
As we will show in this chapter, circulant matrices combined with diagonal matrices can have a strong expressive power.
% So far, we know that a single circulant matrix can be used to represent a variety of important linear transforms such as random projections~\cite{hinrichs2011johnson}. 
% However, when they are combined with diagonal matrices,
They can also be used as building blocks to represent any linear transform~\cite{schmid2000decomposing,huhtanen2015factoring} with an arbitrary precision.

We are interested in the relation between the product of diagonal and circulant matrices and the expressivity of low-rank matrices.
\citet{huhtanen2015factoring} were able to bound the number of factors that is required to approximate any matrix $\Amat$ with arbitrary precision.
We recall this result in \Cref{theorem:ch4-huhtanen} as it is the starting point of our theoretical analysis.

\begin{theorem}[Reformulation from \citet{huhtanen2015factoring}] \label{theorem:ch4-huhtanen}
  For every matrix $\Mmat \in \Cbb^{n \times n}$, there exists a sequence of matrices $\Amat^{(1)} \ldots \Amat^{(2n-1)}$ where $\Amat^{(i)}$ is a circulant matrix if $i$ is odd, and a diagonal matrix otherwise, such that for any $\epsilon > 0$, we have
  \begin{equation}
    \norm{\Amat^{(1)} \ldots \Amat^{(2n-1)} - \Mmat}_\fro < \epsilon \enspace.
  \end{equation}
  \removespace
\end{theorem}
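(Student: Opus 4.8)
The plan is to prove the statement in the standard reading: for every $\Mmat \in \Cbb^{n\times n}$ and every $\epsilon > 0$ there is an alternating product $\Amat^{(1)}\cdots\Amat^{(2n-1)}$ — circulant at the odd positions, diagonal at the even ones — within Frobenius distance $\epsilon$ of $\Mmat$. Since this is purely an approximation claim, it suffices to show that the set
\[
  \Sset_n \triangleq \bigl\{\, \Cmat^{(1)}\Dmat^{(1)}\Cmat^{(2)}\Dmat^{(2)}\cdots\Dmat^{(n-1)}\Cmat^{(n)} \ :\ \Cmat^{(i)} \text{ circulant},\ \Dmat^{(i)} \text{ diagonal} \,\bigr\}
\]
is dense in $\Cbb^{n\times n}$. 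A convenient first move is to conjugate by the Fourier matrix $\Umat_n$: by \Cref{theorem:ch2-diagonalization_circulant_matrix}, $\Umat_n$ diagonalizes every circulant matrix and, symmetrically, sends every diagonal matrix to a circulant one. Hence $\Umat_n \Sset_n \Umat_n^{-1}$ is exactly the set of analogous alternating products of $2n-1$ factors that \emph{begin and end with a diagonal factor}, so density of $\Sset_n$ is equivalent to density of that set, and one is free to use whichever of the two patterns is more convenient in the computations below.

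The core of the argument is a dominant-map / Jacobian-rank computation. Parametrize each circulant by its first column in $\Cbb^n$ and each diagonal by its diagonal in $\Cbb^n$, so that $\Sset_n$ is the image of a polynomial map $\Phi$ whose domain has $n^2 + n(n-1) = 2n^2 - n \ge n^2$ complex coordinates. It then suffices to produce one point at which the differential $\mathrm{D}\Phi$ is onto $\Cbb^{n\times n}$: the image of $\Phi$ is constructible (Chevalley), so once it contains a nonempty Euclidean-open set it is automatically dense, hence $\Sset_n$ is dense. Now, differentiating the product with respect to the parameters of the $k$-th factor at a base point $(\Amat^{(1)},\dots,\Amat^{(2n-1)})$ yields the ``sandwich'' increments $\bigl(\Amat^{(1)}\cdots\Amat^{(k-1)}\bigr)\,\Delta\,\bigl(\Amat^{(k+1)}\cdots\Amat^{(2n-1)}\bigr)$, where $\Delta$ runs over all circulant matrices when $k$ is odd and over all diagonal matrices when $k$ is even. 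Everything thus reduces to choosing the base point so that the span of these $2n-1$ sandwiched subspaces exhausts $\Cbb^{n\times n}$ — and it is here that the count $2n-1$ genuinely matters, since each new alternating factor can enlarge this span by at most $n$ dimensions.

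The delicate point, and the main obstacle, is exactly this choice of base point together with the spanning verification. Obvious choices degenerate: taking all circulant factors equal and all diagonal factors equal to $\Imat_n$ makes the circulant factors commute, so the ``shoulders'' of every sandwich collapse to powers of a single circulant and the combined span stays far from full. The way through, following Huhtanen and Perämäki, is a staggered, deliberately non-generic base point analysed by induction on $n$: one peels the two outermost circulant factors off, conjugates $\Mmat$ by them, and reduces the spanning statement for $2n-1$ factors to the one for $2n-3$ factors, the case $n=1$ being trivial; equivalently, one can verify surjectivity of $\mathrm{D}\Phi$ directly by bookkeeping in the Fourier picture, tracking which of the $n$ frequency coordinates each factor is able to act on. Two routine loose ends should also be recorded: the quantifier order in the statement is to be understood as ``for every $\epsilon$, there is a factorization'', which is precisely what density of $\Sset_n$ delivers; and the base-point factors may be taken invertible, so the peeling step stays inside $\mathrm{GL}_n(\Cbb)$ and raises no degeneracy issues.
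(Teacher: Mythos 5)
The paper does not prove this statement; it is cited directly from \citet{huhtanen2015factoring} and used as a black box, so there is no proof of record to compare your sketch against — only the question of whether the sketch stands on its own.

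Your scaffolding is sound: reducing the claim to density of the set $\Sset_n$, using the Fourier conjugation $X \mapsto \Umat_n X \Umat_n^{-1}$ to swap the roles of circulant and diagonal factors (correct, since this conjugation maps each class to the other), parametrizing by first columns and diagonals, computing the differential as a sum of sandwich maps, and invoking Chevalley constructibility to upgrade ``image contains a Euclidean-open set'' to ``image is Euclidean-dense'' are all legitimate steps, and you correctly flag that the quantifiers in the statement must be read as ``for every $\epsilon$ there exists a factorization'' (as they appear in \Cref{theorem:ch3-huhtanen}). However, the one step that carries the entire mathematical content — exhibiting a base point at which $\mathrm{D}\Phi$ surjects onto $\Cbb^{n\times n}$ — is not carried out. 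You rightly call it ``the delicate point'' and then defer to Huhtanen and Per\"am\"aki's staggered construction and induction, which is circular: it appeals to the proof of the very theorem you set out to prove. To close the argument you would need to either exhibit the base point and verify the spanning directly, or set up the induction on $n$ in full, specifying how the peeled outer conjugations enlarge the span coming from $2n-3$ factors. A secondary caution: the remark that ``each factor can enlarge the span by at most $n$'' is a correct upper bound but does not explain why $2n-1$ factors suffice, since there are also $2n-2$ scalar-transfer redundancies between adjacent factors; the parameter count $(2n-1)n - (2n-2) \geq n^2$ for $n \geq 2$ is necessary but nowhere near sufficient, which is precisely why the non-generic base-point choice is the heart of the matter.
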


Unfortunately, this theorem is of little use to understand the expressive power of diagonal-circulant matrices when they are used in deep neural networks for two reasons:
\begin{enumerate}%[parsep=0pt,itemsep=0pt,topsep=0pt]
  \item the bound only depends on the dimension of the matrix $\Mmat$, not on the matrix itself;
  \item the theorem does not provide any insights regarding the expressive power of $m$ diagonal-circulant factors when $m$ is much lower than $2n - 1$ as it is the case in most practical scenarios we consider in this chapter. 
\end{enumerate}

In the following theorem, we improve the result of~\citet{huhtanen2015factoring} by expressing the number of factors required to approximate $\Mmat$, \emph{as a function of the rank of $\Mmat$}.
This is useful when one deals with low-rank matrices, which is common in machine learning problems. 
Note that in this chapter, our results hold for complex matrices.
This is due to the fact that they are based on \Cref{theorem:ch4-huhtanen} which holds for complex matrices.

\begin{maintheorem}[Rank-based diagonal-circulant decomposition] \label{theorem:ch4-rank-decomposition}
  Let $\Mmat \in \Cbb^{n \times n}$ be a matrix of rank at most $k$.
  Assume that $n$ can be divided by $k$.
  There exists a sequence of $4k+1$ matrices $\Amat^{(1)} \ldots \Amat^{(4k+1)}$, where $\Amat^{(i)} \in \Cbb^{n \times n}$ is a circulant matrix if $i$ is odd, and a diagonal matrix otherwise, such that for any $\epsilon > 0$, we have
  \begin{equation}
    \norm{\Amat^{(1)} \ldots \Amat^{(4k+1)} - \Mmat}_{\fro} < \epsilon \enspace.
  \end{equation}
  \removespace
\end{maintheorem}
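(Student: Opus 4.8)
The plan is to reduce the rank-$k$ statement to \Cref{theorem:ch4-huhtanen} applied at the much smaller dimension $k$, paying only a bounded number of extra factors. The atomic fact I would start from is that \emph{every rank-one matrix $\uvec\vvec^\top\in\Cbb^{n\times n}$ is a product $\Cmat_1\Dmat\Cmat_2$ of a circulant, a rank-one diagonal, and a circulant}: with $\Dmat=\evec^{(1)}\evec^{(1)\top}$ one gets $\Cmat_1\Dmat\Cmat_2=(\Cmat_1\evec^{(1)})(\Cmat_2^\top\evec^{(1)})^\top$, and a column (resp. row) of a circulant can be made equal to any prescribed vector by choosing its generating sequence. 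This $\Cmat\Dmat\Cmat$ block is the base case of everything below.

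First I would factor $\Mmat$ through a $k$-dimensional bottleneck. Since $\operatorname{rank}\Mmat\le k$, write $\Mmat=\Fmat\Gmat$ with $\Fmat\in\Cbb^{n\times k}$ and $\Gmat\in\Cbb^{k\times n}$, and hence $\Mmat=\hat\Fmat\,\hat\Gmat$, where $\hat\Fmat\in\Cbb^{n\times n}$ is supported on its first $k$ columns (equal to $\Fmat$ there) and $\hat\Gmat\in\Cbb^{n\times n}$ is supported on its first $k$ rows. Since a matrix supported on its first $k$ rows is the transpose of one supported on its first $k$ columns, and transposition preserves the circulant/diagonal alternation, the whole theorem reduces to the following.

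\textbf{Key Lemma.} A matrix supported on its first $k$ columns can, for every $\epsilon>0$, be approximated within $\epsilon$ in $\norm{\,\cdot\,}_\fro$ by a product $\Amat^{(1)}\cdots\Amat^{(2k+1)}$ where $\Amat^{(i)}$ is circulant for odd $i$ and diagonal for even $i$. Granting this, $\hat\Fmat\approx\Amat^{(1)}\cdots\Amat^{(2k+1)}$ and $\hat\Gmat\approx\Bmat^{(1)}\cdots\Bmat^{(2k+1)}$ both begin and end with a circulant (as $2k+1$ is odd), so $\Mmat\approx\Amat^{(1)}\cdots\Amat^{(2k+1)}\Bmat^{(1)}\cdots\Bmat^{(2k+1)}$; the two adjacent circulants $\Amat^{(2k+1)}\Bmat^{(1)}$ merge into one, leaving $(2k+1)+(2k+1)-1=4k+1$ factors, which is the claim. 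Propagating the two $\epsilon$-approximations through the product is a routine continuity estimate, exactly as in \Cref{theorem:ch4-huhtanen}.

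For the Key Lemma I would induct on $k$, imitating the reduction of \citet{huhtanen2015factoring} but using the rank structure to cap the number of steps. This is where the hypothesis $k\mid n$ is used: writing $n=km$ and reshaping $\Cbb^n\cong\Cbb^k\otimes\Cbb^m$ into $k$ consecutive blocks of size $m$, a $k\times k$ circulant lifts to a genuine $n\times n$ circulant via $\Amat\mapsto\Amat\otimes\Imat_m$ (these are precisely the circulants whose generating sequence is supported on multiples of $m$), and a $k\times k$ diagonal lifts to an $n\times n$ diagonal. This lets me (i) transport the dimension-$k$ conclusion of \Cref{theorem:ch4-huhtanen} up to dimension $n$ at no cost in factors, and (ii) at each inductive step multiply by one lifted circulant and one diagonal in order to move one still-active column into the "dead'' coordinate block, reducing a problem with $j$ active columns to one with $j-1$; the case $j=1$ is a rank-one matrix handled by the atomic $\Cmat\Dmat\Cmat$ block.

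\textbf{Main obstacle.} The technical heart — the step I expect to be delicate — is exactly this inductive peeling: showing that a single circulant–diagonal pair suffices to retire one column while keeping the others prescribed, and then doing the bookkeeping so that the count is precisely $2k+1$ (hence $4k+1$) rather than something weaker like $3k$ or the dimension-dependent $2n-1$. This is where the genuine improvement over \citet{huhtanen2015factoring} lives, and where the divisibility $k\mid n$ is essential; without it one only recovers a coarser, dimension-dependent bound.
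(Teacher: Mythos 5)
Your outer scaffolding — factor through a rank-$k$ bottleneck $\Mmat=\hat\Fmat\hat\Gmat$, decompose each factor into $2k+1$ alternating circulant/diagonal matrices, and merge the two central circulants to land on $4k+1$ — is sound and parallels the paper's own strategy, which starts from the SVD $\Mmat=\Umat\mathbf{\Sigma}\Vmat^*$ and uses $\Omat\mathbf{\Sigma}\Omat^*$ as the free middle diagonal instead of merging. The rank-one observation $\uvec\vvec^\top=\Cmat_1\Dmat\Cmat_2$ is also correct. But the Key Lemma carries the entire content of the theorem, and the inductive peeling you sketch for it does not go through. If $Y$ has nonzero columns precisely in positions $\{1,\dots,j-1\}$, then for any invertible circulant $\Cmat$ and invertible diagonal $\Dmat$ the matrix $\Cmat\Dmat Y$ has the \emph{same} set of zero columns, since $\Cmat\Dmat Y e_i=0 \iff Y e_i=0$; left-multiplication therefore cannot change the active-column count at all. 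Multiplying on the other side, requiring $Y\Dmat\Cmat$ to vanish on columns $>j$ pins almost every entry of $\Cmat$'s generating sequence to zero, because a circulant cyclically spreads each column of $Y$. Nor does the Kronecker lift rescue the step: $\Amat\otimes\Imat_m$ only realizes circulants supported on multiples of $m=n/k$, and products of such lifts remain in the Kronecker subalgebra, which is far too rigid to reach an arbitrary matrix with $k$ active columns.

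What the paper does at exactly this point is a direct construction, not an induction. It writes $\Umat=\Wmat\Rmat\Omat$ with $\Omat$ the diagonal projection onto the first $k$ coordinates, $\Rmat$ the circulant whose generating sequence is $1$ at multiples of $k$ (note: of $k$, not of $m$) and $0$ elsewhere, and $\Wmat=\sum_{i=1}^{k}\Dmat^{(i)}\Zmat_{1,n}^{\,i-1}$ a $k$-banded cyclic matrix with $k$ free diagonals. The identity $\Wmat\Rmat\Omat=\Umat$ is then a linear system in the $nk$ diagonal entries which the special $\Rmat$ renders block diagonal with $k\times k$ blocks equal to powers of $\Zmat_{1,k}$, hence invertible — this is precisely where $k\mid n$ is used. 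The banded $\Wmat$ is then unfolded into alternating circulant/diagonal factors via the machinery of~\citet{huhtanen2015factoring}, and the count follows. An explicit $\Wmat\Rmat\Omat$-type factorization, or something playing the same structural role, is the missing ingredient in your proposal.
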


\begin{proof}[\Cref{theorem:ch4-rank-decomposition}]
Let $\Umat \mathbf{\Sigma} \Vmat^*$ be the SVD decomposition of $\Mmat$ where $\Umat,\Vmat$ and $\mathbf{\Sigma}$ are $n \times n$ matrices.
Because $\Mmat$ is of rank $k$, the last $n-k$ columns of $\Umat$ and $\Vmat$ are null.
In the following, we will first decompose $\Umat$ into a product of matrices $\Wmat\Rmat\Omat$, where $\Rmat$ and $\Omat$ are respectively circulant and diagonal matrices, and $\Wmat$ is a matrix which will be further decomposed into a product of diagonal and circulant matrices.
Then, we will apply the same decomposition technique to $\Vmat$.
Ultimately, we will get a product of $4k+1$ matrices alternatively diagonal and circulant.

Let $\Rmat = \circulant(r_1\ldots r_n)$. Let $\Omat$ be an $n \times n$ diagonal matrix where $\leftmat \Omat \rightmat_{i,i} = 1$ if $i \le k$ and $0$ otherwise.
The $k$ first columns of the product $\Rmat\Omat$ will be equal to that of $\Rmat$, and the $n-k$ last columns of $\Rmat\Omat$ will be zeros. For example, if $k=2$, we have: 
\begin{equation}
  \Rmat\Omat = \leftmatrix
  r_{1} & r_{n} & 0 & \cdots & 0\\
  r_{2} & r_{1}\\
  r_{3} & r_{2} & \vdots &  & \vdots\\
  \vdots & \vdots\\
  r_{n} & r_{n-1} & 0 & \cdots & 0
  \rightmatrix
\end{equation}

\noindent
Let us define $k$ diagonal matrices $\Dmat^{(i)} = \diagonal(d_1^{(i)} \ldots d_n^{(i)})$ for $i \in [k]$.
For now, the values of $d_{j}^{(i)}$ are unknown, but we will show how to compute them.
Let $\Zmat_{1, k}$ and $\Zmat_{1, n}$ be $1$-unit-circulant matrix respectively of size $k \times k$ and $n \times n$ as defined in~\Cref{section:ch2-a_primer_on_circulant_and_toeplitz_matrices} and let $\Wmat = \sum_{i=1}^{k} \Dmat^{(i)} \Zmat_{1,n}^{i-1}$.
% Let $\Wmat = \sum_{i=1}^{k} \Dmat^{(i)} \Zmat_1^{i-1}$ where $\Zmat_1$ is a $1$-unit-circulant matrix defined in~\Cref{section:ch2-a_primer_on_circulant_and_toeplitz_matrices}.
Note that the $n-k$ last columns of the product $\Wmat\Rmat\Omat$ will be zeros.
For example, with $k=2$, we have: 

\begin{equation}
  \Wmat = \leftmatrix
  d_{1}^{(1)} &  &  &  & d_{1}^{(2)} \\
  d_{2}^{(2)} & d_{2}^{(1)} \\
   & d_{3}^{(2)} & \ddots \\
   &  & \ddots & \ddots \\
   &  &  & d_{n}^{(2)} & d_{n}^{(1)}
  \rightmatrix
\end{equation}

\begin{equation}
  \Wmat\Rmat\Omat = \leftmatrix
  r_{1} d_{1}^{(1)} + r_{n}   d_{1}^{(2)} & r_{n}   d_{1}^{(1)} + r_{n-1} d_{1}^{(2)} & 0 & \cdots & 0 \\
  r_{2} d_{2}^{(1)} + r_{1}   d_{2}^{(2)} & r_{1}   d_{2}^{(1)} + r_{n}   d_{2}^{(2)} & 0 & \cdots & 0 \\
  \vdots & \vdots & \vdots &  & \vdots \\
  r_{n} d_{n}^{(1)} + r_{n-1} d_{n}^{(2)} & r_{n-1} d_{n}^{(1)} + r_{n-2} d_{n}^{(2)} & 0 & \cdots & 0
  \rightmatrix
\end{equation}

\noindent
We want to find the values of $d_{j}^{(i)}$ such that $\Wmat \Rmat \Omat = \Umat$.
We can formulate this as a linear equation system.
In case $k=2$, we get:

\begin{equation}
  \leftmatrix
  r_{n} & r_{1}\\
  r_{n-1} & r_{n}\\
   &  & r_{1} & r_{2}\\
   &  & r_{n} & r_{1}\\
   &  &  &  & r_{2} & r_{3}\\
   &  &  &  & r_{1} & r_{2}\\
   &  &  &  &  &  & \ddots\\
   &  &  &  &  &  &  & \ddots
  \rightmatrix \times \leftmatrix
  d_{1}^{(2)} \\
  d_{1}^{(1)} \\
  d_{2}^{(2)} \\
  d_{2}^{(1)} \\
  d_{3}^{(2)} \\
  d_{3}^{(1)} \\
  \vdots\\
  \vdots
  \rightmatrix = \leftmatrix
  (\Umat)_{0,0} \\
  (\Umat)_{0,1} \\
  (\Umat)_{1,0} \\
  (\Umat)_{1,1} \\
  \\
  \\
  \vdots\\
  \\
  \rightmatrix
\end{equation}

\noindent
The $i^{th}$ block of this block-diagonal matrix is a Toeplitz matrix induced by a contiguous subsequence of length $k+1$ of $(r_1,\ldots r_n,r_1 \ldots r_n)$.
Set $r_{j}=1$ for all $j\in\{k,2k,3k,\ldots n\}$ and set $r_{j}=0$ for all other values of $j$.
Then it is easy to see that each block is equal to $\Zmat^\alpha_{1,k}$ for some $\alpha$.
% is a permutation of the identity matrix.
Note that the matrices $\Zmat^\alpha_{1,k}$ are invertible.
% Thus, all blocks are invertible.
This entails that the block diagonal matrix above is also invertible.
So by solving this set of linear equations, we could find $d_1^{(1)} \ldots d_n^{(k)}$ such that $\Wmat\Rmat\Omat=\Umat$.
We can apply the same idea to factorize $\Vmat = \Wmat' \Rmat \Omat$ for some matrix $\Wmat'$.
Finally, we get 
\begin{equation}
  \Amat = \Umat \mathbf{\Sigma} \Vmat^* = \Wmat\Rmat\Omat \mathbf{\Sigma} \Omat^* \Rmat^* \Wmat^{'*}
\end{equation}

\noindent
Note that the matrix $\Wmat\Rmat$ can be decomposed as follows:
\begin{equation}
  \Wmat\Rmat = \left( \sum_{i=1}^{k} \Dmat^{(i)} \Zmat_{1,n}^{(i-1)} \right) \Rmat
\end{equation}
where the last matrix $\Zmat_{1,n}^{(k-1)} \Rmat$ is a circulant matrix because both matrices are circulant (see \Cref{theorem:ch2-diagonalization_circulant_matrix}).
The same reasoning can be applied with the matrix $\Rmat^*\Wmat'^*$.
Therefore, by construction, the matrices $\Wmat\Rmat$ and $\Rmat^*\Wmat'^*$ can both be factorized by $2k$ circulant and diagonal matrices.
Also note that $\Omat \mathbf{\Sigma} \Omat^*$ is a diagonal matrix, because $\Omat$ and $\mathbf{\Sigma}$ are diagonal matrices.
Overall, $\Amat$ can be represented with a product of $4k+1$ matrices, alternatively diagonal and circulant.
\end{proof}

A direct consequence of \Cref{theorem:ch4-rank-decomposition}, is that if the number of diagonal-circulant factors is set to a value $K$, we can represent all linear transforms $\Mmat$ whose rank is $\frac{K - 1}{4}$.
Compared to \citet{huhtanen2015factoring}, this result shows that structured matrices with fewer than $2n$ diagonal-circulant matrices (as it is the case in practice) can still represent a large class of matrices.

In the following section, we will analyze the expressivity of neural networks based on diagonal and circulant matrices.
In order to characterize the expressivity, we will decompose the matrices of a dense neural network with diagonal and circulant matrices based on \Cref{theorem:ch4-rank-decomposition}.

% As we will show in the following section, this result will be useful to analyze the expressivity of neural networks based on diagonal and circulant matrices.

%%%%%%%%%%%%%%%%%%%%%%%%%%%%%%%%%%%%%%%%%%%%%%%%%%%%%%%%%%%%%%%%%%%%%%%%%%%%%%%
 \section{Analysis of Diagonal Circulant Neural Networks}
\label{section:ch4-analysis_of_diagonal_circulant_neural_networks}
%%%%%%%%%%%%%%%%%%%%%%%%%%%%%%%%%%%%%%%%%%%%%%%%%%%%%%%%%%%%%%%%%%%%%%%%%%%%%%%

\citet{zhao2017theoretical} have shown that circulant networks with 2 layers and unbounded width are universal approximators.
However, results on unbounded networks offer weak guarantees and two important questions have remained open until now: 
\begin{enumerate}
  \item \emph{Can we approximate any function with a bounded-width diagonal-circulant network?}
  \item \emph{What function can we approximate with a diagonal-circulant neural network that has a bounded width and a small depth?}
\end{enumerate}
We answer these two questions in this section.
% First, we formally define \emph{dense} and \emph{diagonal-circulant neural networks} based on the \Cref{definition:ch2-neural_networks} of neural networks presented in the Background (\Cref{chapter:ch2-background}).
First, we present two lemmas that establish a link between the matrix decomposition presented in \Cref{theorem:ch4-rank-decomposition} and DCNNs and allow us to present our answer to the first question (\Cref{corollary:ch4-universal_approximation}).
Then, we analyze the expressive power of small depth diagonal-circulant neural networks by comparing them to dense neural networks. 
As in the previous section, we still work in the complex domain.
% In the following, we present our formal definitions of dense and diagonal-circulant neural networks.
Therefore, we need to extend the definition of neural networks to the complex domain.
First, let us introduce an extension of the \relu function.

\begin{definition}[Complex ReLU function \citet{trabelsi2018deep}] \label{definition:relu_function}
  Let us define the complex \relu function $\rho: \Cbb^n \rightarrow \Cbb^n$ by: $\rho(\zvec) = \max\left(0, \mathfrak{R}(\zvec)\right) + \ci \max\left(0, \mathfrak{I}(\zvec) \right)$
\end{definition}

\begin{definition}[Dense Neural Network]
  Given a depth $\depth \in \Nbb$,
  let us define $\Omega = \left\{ \left( \Wmat^{(i)}, \bvec^{(i)} \right) \right\}_{i \in [\depth]}$ a set of weights matrices and bias vectors 
  such that $\Wmat^{(i)} \in \Cbb^{n \times n}$ and $\bvec^{(i)} \in \Cbb^n$. 
  Let $\Xset \subset \Cbb^n$ and $\Yset \subset \Cbb^n$ be the input space and output space respectively. 
  A dense neural network is a function $\nn_\Omega: \Xset \rightarrow \Yset$ such that
  \begin{equation}
    \nn^\act_\Omega (\xvec) \triangleq \layer^\act_{\Wmat^{(\depth)}, \bvec^{(\depth)}} \circ \cdots \circ \layer^\act_{\Wmat^{(1)}, \bvec^{(1)}}(\xvec)
  \end{equation}
  where $\rho$ is the complex \relu function, $\layer^{\act}_{\Wmat^{(i)},\bvec^{(i)}}: \Cbb^n \rightarrow \Cbb^n$ is a layer parameterized by the weight matrix $\Wmat^{(i)}$ and the bias vector $\bvec^{(i)}$, which can be expressed as follows: 
  \begin{equation}
    \layer^{\act}_{\Wmat^{(i)},\bvec^{(i)}} (\xvec) \triangleq \act \left(\Wmat^{(i)}\xvec + \bvec^{(i)}\right) \enspace,
  \end{equation}
  \removespace
\end{definition}

\begin{definition}[Diagonal-Circulant Neural Network]
  Given a depth $\depth \in \Nbb$,
  let us define $\Pi = \left\{ \left( \Dmat^{(i)}, \Cmat^{(i)}, \bvec^{(i)} \right) \right\}_{i \in [\depth]}$ a set of weight matrices and bias vectors 
  such that $\Dmat^{(i)} \in \Cbb^{n \times n}$ is diagonal, $\Cmat^{(i)} \in \Cbb^{n \times n}$ is circulant and $\bvec^{(i)} \in \Cbb^n$. 
  Let $\Xset \subset \Cbb^n$ and $\Yset \subset \Cbb^n$ be the input space and output space respectively. 
  Let us denote the product of $\Dmat^{(i)}$ and $\Cmat^{(i)}$ by $\Dmat\Cmat^{(i)}$.
  A diagonal-circulant neural network is a function $\nn_\Pi: \Xset \rightarrow \Yset$ such that
  \begin{equation}
    \nn^\act_\Pi (\xvec) \triangleq \layer^\act_{\Dmat\Cmat^{(\depth)}, \bvec^{(\depth)}} \circ \cdots \circ \layer^\act_{\Dmat\Cmat^{(1)}, \bvec^{(1)}}(\xvec)
  \end{equation}
  where $\layer^{\act}_{\Dmat\Cmat^{(i)},\bvec^{(i)}}: \Cbb^n \rightarrow \Cbb^n$ is a layer parameterized by the weight matrix $\Dmat\Cmat^{(i)}$, the bias vector $\bvec^{(i)}$ and can be expressed as follows: 
  \begin{equation}
    \layer^{\act}_{\Dmat\Cmat^{(i)},\bvec^{(i)}} (\xvec) \triangleq \act \left(\Dmat\Cmat^{(i)}\xvec + \bvec^{(i)}\right) \enspace,
  \end{equation}
  where $\rho$ is the complex \relu function.
\end{definition}

Diagonal-circulant neural networks are compact due to the layer being parameterized by diagonal and circulant matrices.
Indeed, diagonal and circulant matrices of size  $n \times n$ can be represented with only $n$ values.
Therefore, the layer $\layer^{\act}_{\Dmat\Cmat^{(i)},\bvec^{(i)}}$ is parameterized by $3n$ complex values.

Diagonal-circulant neural networks can have more parameters than a dense neural networks but their depth need to be scaled accordingly.
Let $\depth_1$ and $\depth_2$ be the depth of a dense neural network and a diagonal-circulant neural network respectively, then $\depth_2$ needs to be higher than $\depth_1 \frac{n+1}{3}$ to have more parameters than the dense network.

%%%%%%%%%%%%%%%%%%%%%%%%%%%%%%%%%%%%%%%%%%%%%%%%%%%%%%%%%%%%%%%%%%%%%%%%%%%%%%%%
\subsection{From Matrix Decomposition to Neural Networks}
\label{subsection:ch4-from_matrix_decomposition_to_neural_networks}
%%%%%%%%%%%%%%%%%%%%%%%%%%%%%%%%%%%%%%%%%%%%%%%%%%%%%%%%%%%%%%%%%%%%%%%%%%%%%%%%

The purpose of this section is to extend the matrix decomposition presented in \Cref{theorem:ch4-rank-decomposition} to neural networks (\Cref{lemma:ch4-product_of_mat_to_DNN}) and show that bounded-width diagonal-circulant neural networks can approximate any dense neural network (\Cref{lemma:ch4-dcnn_approx_neural_network}).

\begin{lemma} \label{lemma:ch4-product_of_mat_to_DNN}
  Let $\Wmat^{(1)} \ldots \Wmat^{(\depth)} \in \Cbb^{n\times n}$, $\bvec \in \Cbb^n$ and let $\Xset \subset \Cbb^n$ be a bounded set.
  There exists $\cvec^{(1)} \ldots \cvec^{(\depth)} \in \Cbb^n$ such that for all $\xvec \in \Xset$ we have 
  \begin{equation}
    \rho\left(\Wmat^{(\depth)} \ldots \Wmat^{(1)} \xvec + \bvec \right) = \layer^\rho_{\Wmat^{(\depth)},\cvec^{(\depth)}} \circ \ldots \circ \layer^\rho_{\Wmat^{(1)},\cvec^{(1)}}(\xvec)
  \end{equation}
  where $\layer^\rho_{\Wmat^{(i)},\cvec^{(i)}} = \rho( \Wmat^{(i)} \xvec + \cvec^{(i)} )$ and $\rho$ is the complex \relu function.
\end{lemma}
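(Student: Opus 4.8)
The plan is to exploit the fact that the complex \relu function $\rho$ acts as the identity on any vector all of whose real and imaginary parts are nonnegative: writing $\rho(\zvec)=\max(0,\mathfrak{R}(\zvec))+\ci\,\max(0,\mathfrak{I}(\zvec))$ shows that if $\mathfrak{R}(\zvec)\geq 0$ and $\mathfrak{I}(\zvec)\geq 0$ coordinatewise, then $\rho(\zvec)=\zvec$. So I would add large enough constant shifts inside each hidden layer — and cancel them off again in the next layer's bias — to force every intermediate ReLU into this linear regime; then the composition computes exactly $\Wmat^{(\depth)}\cdots\Wmat^{(1)}\xvec$ before the final nonlinearity is applied.

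Concretely, I would introduce the partial products $\Pmat^{(0)}=\Imat$ and $\Pmat^{(i)}=\Wmat^{(i)}\cdots\Wmat^{(1)}$ for $i\in[\depth]$. Since $\Xset$ is bounded and each $\Pmat^{(i)}$ is a fixed matrix, the set $\{\Pmat^{(i)}\xvec:\xvec\in\Xset\}$ is bounded, so there is a finite constant $M_i\geq 0$ bounding the modulus of the real and imaginary part of every coordinate of $\Pmat^{(i)}\xvec$ over $\xvec\in\Xset$. I would set $\svec^{(0)}=\zerovec{n}$, let $\svec^{(i)}\in\Cbb^n$ (for $1\leq i\leq\depth-1$) be the vector all of whose entries equal $(M_i+1)(1+\ci)$, and define the biases
\begin{equation*}
  \cvec^{(i)}=\svec^{(i)}-\Wmat^{(i)}\svec^{(i-1)}\quad(1\leq i\leq\depth-1),\qquad \cvec^{(\depth)}=\bvec-\Wmat^{(\depth)}\svec^{(\depth-1)}.
\end{equation*}

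I would then argue by a straightforward induction on $i$ that, for every $\xvec\in\Xset$, the intermediate output $\zvec^{(i)}:=\layer^\rho_{\Wmat^{(i)},\cvec^{(i)}}\circ\cdots\circ\layer^\rho_{\Wmat^{(1)},\cvec^{(1)}}(\xvec)$ equals $\Pmat^{(i)}\xvec+\svec^{(i)}$ for $i<\depth$: the pre-activation at layer $i$ is $\Wmat^{(i)}(\Pmat^{(i-1)}\xvec+\svec^{(i-1)})+\cvec^{(i)}=\Pmat^{(i)}\xvec+\svec^{(i)}$, which by the choice of $\svec^{(i)}$ has all real and imaginary parts at least $1$, so $\rho$ leaves it unchanged. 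At the last layer the pre-activation is $\Wmat^{(\depth)}(\Pmat^{(\depth-1)}\xvec+\svec^{(\depth-1)})+\cvec^{(\depth)}=\Pmat^{(\depth)}\xvec+\bvec$, giving $\zvec^{(\depth)}=\rho(\Wmat^{(\depth)}\cdots\Wmat^{(1)}\xvec+\bvec)$, as claimed; the base case $i=0$ is $\zvec^{(0)}=\xvec=\Pmat^{(0)}\xvec+\svec^{(0)}$, and the degenerate case $\depth=1$ reduces to $\cvec^{(1)}=\bvec$.

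The only point requiring real care is the complex nonlinearity: the shift must dominate the real \emph{and} the imaginary parts of every coordinate simultaneously, which is exactly why I take it proportional to $1+\ci$ rather than to a real constant, and exactly where boundedness of $\Xset$ (hence of each $\Pmat^{(i)}\Xset$) enters. Everything else is bookkeeping with the telescoping cancellation of the shifts, so I do not expect any serious obstacle beyond stating the induction cleanly.
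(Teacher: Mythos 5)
Your proposal is correct and follows essentially the same strategy as the paper: choose the biases so that every intermediate pre-activation is pushed into the region where the complex ReLU acts as the identity, and telescope those shifts through the layers so that the final bias recovers $\bvec$. The paper takes a single global shift $\Xi\onevec{n}$ bounding all partial products at once, whereas you allow a layer-dependent shift $\svec^{(i)}$; both exploit boundedness of $\Xset$ in the same way. One point in your favor: the bias recursion you write, $\cvec^{(i)}=\svec^{(i)}-\Wmat^{(i)}\svec^{(i-1)}$, is the correct one, while the paper's displayed recursion $\cvec^{(j+1)}=\onevec{n}\Xi-\Wmat^{(j+1)}\cvec^{(j)}$ subtracts $\Wmat^{(j+1)}$ applied to the previous \emph{bias} rather than to the previous \emph{shift} $\Xi\onevec{n}$, which does not telescope past $j=2$; this appears to be a typo in the paper, and your formulation is what makes the induction go through.
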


\begin{proof}[\Cref{lemma:ch4-product_of_mat_to_DNN}]
  Let $\pmb{\mathcal{W}}(j) = \prod_{k=1}^{j} \Wmat^{(k)}$ and let us define the following set:
  \begin{equation}
    \mathcal{S} = \left\{ \left(\pmb{\mathcal{W}}(j) \xvec \right)_{t} \mid \ \xvec \in \Xset, t \in [n], j \in [\depth] \right\}
  \end{equation}
  and let $\Xi = \max\left\{ |\mathfrak{R}(v)|: v \in \Sset \right\} + \ci \max\left\{ |\mathfrak{I}(v)|:v \in \Sset \right\}$.
  Intuitively, the real and imaginary parts of $\Xi$ are the largest any activation in the network can have.
  Define $\psi_{\Wmat^{(i)}, \cvec^{(i)}}(\xvec) = \Wmat^{(i)} \xvec + \cvec^{(i)}$. Let $\cvec^{(1)} = \Xi \onevec{n}$.
  Clearly, for all $\xvec \in \Xset$ we have $\psi_{\Wmat^{(1)}, \cvec^{(1)}}(\xvec) \ge 0$, so $\act(\psi_{\Wmat^{(1)}, \cvec^{(1)}}(\xvec)) = \psi_{\Wmat^{(1)}, \cvec^{(1)}}(\xvec)$ where $\act$ is the complex \relu function.
  \noindent
  More generally, for all $j < \depth-1$ define $\cvec^{(j+1)} = \onevec{n} \Xi - \Wmat^{(j+1)} \cvec^{(j)}$.
  It is easy to see that for all $j < \depth$ we have 
  \begin{equation}
    \psi_{\Wmat^{(j)}, \cvec^{(j)}}\circ \ldots \circ \psi_{\Wmat^{(1)}, \cvec^{(1)}}(\xvec) = \pmb{\mathcal{W}}(j) \xvec + \onevec{n} \Xi \enspace.
  \end{equation}
  This guarantees that for all $j < \depth$,
  \begin{equation}
    \psi_{\Wmat^{(j)}, \cvec^{(j)}} \circ \ldots \circ \psi_{\Wmat^{(1)}, \cvec^{(1)}}(\xvec) = \act \circ \psi_{\Wmat^{(j)}, \cvec^{(j)}} \circ \ldots \circ \act \circ \psi_{\Wmat^{(1)}, \cvec^{(1)}}(\xvec) \enspace.
  \end{equation}
  Finally, define $\cvec^{(\depth)} = \bvec - \Wmat^{(\depth)} \cvec^{(\depth-1)}$.
  We have,
  \begin{equation}
    \act \circ \psi_{\Wmat^{(\depth)}, \cvec^{(\depth)}} \circ \ldots \circ \act \circ \psi_{\Wmat^{(1)}, \cvec^{(1)}}(\xvec) = \act \big( \pmb{\mathcal{W}}(\depth) \xvec + \bvec \big) \enspace,
  \end{equation}
  which concludes the proof.
\end{proof}
  
The following lemma is our first result on the expressivity of diagonal-circulant neural networks.
It states that a diagonal-circulant neural network with bounded width and depth can approximate any dense neural network.
To prove this result, we use the matrix decomposition from \Cref{theorem:ch4-huhtanen} and \Cref{lemma:ch4-product_of_mat_to_DNN} to decompose the dense matrices of the layers of a dense network and unfold it.

\begin{lemma} \label{lemma:ch4-dcnn_approx_neural_network}
  Let $\nn_\Omega$ be a dense neural network of width $n$ and depth $\depth$,
  and let $\Xset \subset \Cbb^n$ be a bounded set.
  There exists a diagonal-circulant neural network $\nn_\Pi$ of width $n$ and of depth $(2n-1)\depth$ such that for any $\epsilon > 0$, we have
  \begin{equation}
    \norm{\nn_\Omega(\xvec) - \nn_\Pi(\xvec)}_2 < \epsilon, \quad \forall \xvec \in \Xset \enspace.
  \end{equation}
  \removespace
\end{lemma}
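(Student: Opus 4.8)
\textbf{Proof proposal for \Cref{lemma:ch4-dcnn_approx_neural_network}.}
The plan is to combine the unconditional factorization of \Cref{theorem:ch4-huhtanen} (which turns any weight matrix into a product of $2n-1$ alternating circulant and diagonal matrices) with the bias-shifting device of \Cref{lemma:ch4-product_of_mat_to_DNN} (which lets a composition of linear layers realize a single $\act$ applied to their matrix product), and to control the one source of error -- the factorization accuracy -- through the Lipschitz continuity of the layers. First I would fix $\epsilon > 0$ and observe that every layer $\xvec \mapsto \act(\Wmat \xvec + \bvec)$ is globally Lipschitz: the complex \relu is $1$-Lipschitz in $\ell_2$ and the affine part has constant $\norm{\Wmat}_2$. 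Since $\Xset$ is bounded, all activations of $\nn_\weights$ on $\Xset$ lie in a fixed ball. A routine propagation-of-error recursion ($e_{i+1} \le (\norm{\Wmat^{(i)}}_2 + \delta)\, e_i + \delta R$, with $R$ a uniform bound on the inputs to layer $i$ over $\Xset$) then yields a constant $C > 0$, depending only on $\Xset$ and on the $\Wmat^{(i)}, \bvec^{(i)}$, such that any network $\nn'$ obtained by replacing each $\Wmat^{(i)}$ by some $\Wmat'^{(i)}$ with $\norm{\Wmat'^{(i)} - \Wmat^{(i)}}_\fro < \delta$ satisfies $\norm{\nn'(\xvec) - \nn_\weights(\xvec)}_2 \le C\delta$ for all $\xvec \in \Xset$. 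I then set $\delta := \epsilon/(2C)$.

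Next I would factor each weight matrix. By \Cref{theorem:ch4-huhtanen}, for every $i \in [\depth]$ there are matrices $\Amat^{(i,1)}, \dots, \Amat^{(i,2n-1)}$ with $\Amat^{(i,j)}$ circulant for $j$ odd and diagonal for $j$ even, such that $\norm{\Amat^{(i,1)} \cdots \Amat^{(i,2n-1)} - \Wmat^{(i)}}_\fro < \delta$; put $\Wmat'^{(i)} := \Amat^{(i,1)} \cdots \Amat^{(i,2n-1)}$. The previous paragraph gives $\norm{\nn'(\xvec) - \nn_\weights(\xvec)}_2 < \epsilon$ on $\Xset$, where $\nn'$ has layers $\xvec \mapsto \act(\Wmat'^{(i)}\xvec + \bvec^{(i)})$. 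It remains to exhibit a DCNN $\nn_\Pi$ with $\nn_\Pi = \nn'$ on $\Xset$. The domain of the $i$-th layer of $\nn'$ is the image of $\Xset$ under the first $i-1$ layers of $\nn'$, which is again bounded because each layer is Lipschitz; call it $\Xset_i$. On $\Xset_i$, I apply \Cref{lemma:ch4-product_of_mat_to_DNN} with its matrices $\Wmat^{(k)}$ taken to be $\Amat^{(i,2n-k)}$ for $k = 1, \dots, 2n-1$ (so that $\Wmat^{(2n-1)} \cdots \Wmat^{(1)} = \Amat^{(i,1)} \cdots \Amat^{(i,2n-1)}$), obtaining bias vectors $\cvec^{(i,1)}, \dots, \cvec^{(i,2n-1)}$ with
\begin{equation*}
  \act\!\left(\Amat^{(i,1)} \cdots \Amat^{(i,2n-1)} \xvec + \bvec^{(i)}\right)
  = \layer^{\act}_{\Amat^{(i,1)},\, \cvec^{(i,2n-1)}} \circ \cdots \circ \layer^{\act}_{\Amat^{(i,2n-1)},\, \cvec^{(i,1)}}(\xvec)
\end{equation*}
for all $\xvec \in \Xset_i$. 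Each factor $\Amat^{(i,j)}$ is diagonal or circulant, hence of the form $\Dmat\Cmat$ -- take $\Dmat = \Imat_n$ when $\Amat^{(i,j)}$ is circulant and $\Cmat = \Imat_n$ when it is diagonal, using that $\Imat_n = \circulant(1,0,\dots,0)$ is both diagonal and circulant -- so every $\layer^{\act}_{\Amat^{(i,j)},\,\cdot}$ is a legitimate diagonal-circulant layer. Concatenating these decompositions for $i = 1, \dots, \depth$ produces a diagonal-circulant network $\nn_\Pi$ of width $n$ and depth $(2n-1)\depth$ that agrees with $\nn'$ on $\Xset$, so $\norm{\nn_\weights(\xvec) - \nn_\Pi(\xvec)}_2 < \epsilon$ for all $\xvec \in \Xset$, as required.

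The hard part is not any single step -- \Cref{theorem:ch4-huhtanen} supplies the circulant/diagonal factors, \Cref{lemma:ch4-product_of_mat_to_DNN} supplies the biases that neutralize the interposed \relu{}s, and the packaging into $\Dmat\Cmat$-layers is trivial -- but rather the careful bookkeeping needed to make these interface cleanly: one must choose a single $\delta$ that, after being propagated through the composition via the Lipschitz estimate, still leaves room below $\epsilon$, and one must verify that every domain $\Xset_i$ on which \Cref{lemma:ch4-product_of_mat_to_DNN} is invoked is genuinely bounded. Both rest on the elementary fact that affine-plus-\relu{} maps are globally Lipschitz, which is also what keeps the intermediate activations of $\nn_\weights$, $\nn'$ and $\nn_\Pi$ inside common balls throughout the argument.
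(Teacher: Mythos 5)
Your proof is correct and follows essentially the same route as the paper's: factor each weight matrix via \Cref{theorem:ch4-huhtanen}, use \Cref{lemma:ch4-product_of_mat_to_DNN} to absorb the interposed ReLUs into the biases, and conclude by a boundedness/continuity argument that lets the factorization error be pushed below $\epsilon$. The only cosmetic difference is that you make the error propagation quantitative via an explicit Lipschitz constant $C$ and a chosen $\delta = \epsilon/(2C)$, whereas the paper simply lets $\epsilon' \to 0$ and appeals to continuity of the composition on the bounded set $\Xset$.
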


\begin{proof}[\Cref{lemma:ch4-dcnn_approx_neural_network}]
  Let us assume $\nn_\Omega = \layer_{\Wmat^{(\depth)}, \bvec^{(\depth)}} \circ \ldots \circ \layer_{\Wmat^{(1)}, \bvec^{(1)}}$.
  By \Cref{theorem:ch4-huhtanen}, for any $\epsilon' > 0$, any matrix $\Wmat^{(i)}$, there exists a sequence of $2n-1$ diagonal, $\{\Dmat^{(i, j)}\}_{i \in [\depth], j \in [2n-1]}$, and circulant matrices, $\{\Cmat^{(i, j)}\}_{i, \in [\depth], j \in [2n-1]}$, such that for all $i \in [\depth]$,
  \begin{equation}
    \norm{\prod_{j=1}^{2n-1} \Dmat^{(i,2n-j)} \Cmat^{(i,2n-j)} - \Wmat^{(i)}}_\fro < \epsilon' \enspace.
  \end{equation}
  For simplicity, let us denote the product of the two matrices $\Dmat^{(i,j)} \Cmat^{(i,j)}$ by $\Dmat\Cmat^{(i,j)}$.
  By \Cref{lemma:ch4-product_of_mat_to_DNN}, we know that there exists a sequence of bias vectors $\left\{ \cvec^{(i,j)} \right\}_{i \in [\depth], j \in [2n-1]}$ such that for all $i\in[\depth]$, 
  \begin{equation}
    \layer^\act_{\Dmat\Cmat^{(i, 2n-1)}, \cvec^{(i, 2n-1)}} \circ \ldots \circ \layer^\act_{\Dmat\Cmat^{(i, 1)}, \cvec^{(i, 1)}}(\xvec) = \act \left(\Dmat\Cmat^{(i,2n-1)} \ldots \Dmat\Cmat^{(i,1)} \xvec + \bvec^{(i)} \right).
  \end{equation}
  Now if $\epsilon'$ tends to zero,
  \begin{equation}
    \norm{ \layer^\act_{\Dmat\Cmat^{(i,2n-1)},\cvec^{(i,2n-1)}} \circ \ldots \circ \layer^\act_{\Dmat\Cmat^{(i,1)}, \cvec^{(i,1)}} - \act \left(\Wmat^{(i)} \xvec + \bvec^{(i)} \right)}_2
  \end{equation}
  will also tend to zero for any $\xvec \in \Xset$, because the \relu function is continuous and $\Xset$ is bounded.
  Let $\nn_\Pi = \layer^\act_{\Dmat\Cmat^{(\depth,2n-1)},\cvec^{(\depth,2n-1)}} \circ \ldots \circ \layer^\act_{\Dmat\Cmat^{(1,1)}, \cvec^{(1,1)}}$, because all functions are continuous, for all $\xvec \in \Xset$, $\norm{\nn_\Omega(\xvec) - \nn_\Pi(\xvec)}_2$ tends to zero as $\epsilon'$ tends to zero which concludes the proof.
\end{proof}

% With the help of \Cref{lemma:ch4-dcnn_approx_neural_network}, we can now state the universal approximation corollary:

% We can now show that bounded-width diagonal-circulant neural networks can approximate any dense neural network

Now that we know that diagonal-circulant neural networks can approximate any dense neural networks with arbitrary precision, we can extend is result to any function, thus demonstrating that they are universal approximators.
First, let us present universal approximation results for neural networks.
\citet{cybenko1989approximation,hornik1989multilayer} have shown that neural networks with a single hidden layer and sigmoid activation can approximate any function if the hidden layer is allowed to be arbitrary large. 
However, arbitrary large neural networks lack practical applications.

% However, wide and shallow networks tend to be difficult to train due to having difficulties in capturing higher-level features. 
% Indeed, it has been empirically observed that deep neural networks tend to achieve greater performance than large and shallow neural networks.
More recently, the universal approximation results have been extended to bounded width neural networks with arbitrary depth~\cite{lu2017expressive,hanin2017universal}.
More formally, we have the following result for neural networks with $\relu$ activations:

\begin{theorem}[Universal Approximation Theorem for Neural Network \citet{hanin2017universal}]
  \label{theorem:ch3-universal_approximation_theorem_for_neural_network}
  For any continuous function $f:[0,1]^{n} \rightarrow \Rbb_+$ of bounded supremum norm, for any $\epsilon>0$, there exists a neural network $\nn_\weights$ parameterized by $\weights$ with an input layer of width $n$, an output layer of width $1$, hidden layers of width $n+3$ and ReLU activations such that 
  \begin{equation}
    \forall x \in [0,1]^n, \quad \left| f(\xvec) - \nn_\weights(\xvec) \right| < \epsilon \enspace.
  \end{equation}
  \removespace
\end{theorem}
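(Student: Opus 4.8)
The plan is to reduce the statement to the classical single-hidden-layer universal approximation theorem recalled just above (\citet{cybenko1989approximation,hornik1989multilayer}) and then to ``deepen'' the resulting shallow network into a narrow one of width $n+3$ by a register/accumulator construction, in the same spirit as the unfolding argument used in the proof of \Cref{lemma:ch4-dcnn_approx_neural_network}. So the first step is: by uniform continuity of $f$ on the compact cube $[0,1]^n$ (equivalently, by triangulating the cube and interpolating, then smoothing by ReLU ridge functions), there is a shallow ReLU network
\[
  g(\xvec) = c_0 + \sum_{i=1}^{N} a_i\, \rho\!\left(\langle \wvec^{(i)}, \xvec \rangle + b_i\right)
  \quad\text{with}\quad \sup_{\xvec \in [0,1]^n} \left| f(\xvec) - g(\xvec) \right| < \frac{\epsilon}{2}.
\]
Since $f \ge 0$ and the cube is compact, $g$ is bounded there, so one can fix a constant $M>0$ with $g(\xvec)+M \ge 1 > 0$ for all $\xvec \in [0,1]^n$; in fact $M$ can be chosen so that \emph{every} partial sum $c_0 + \sum_{i\le k} a_i \rho(\langle \wvec^{(i)},\xvec\rangle + b_i)$, shifted by $M$, is nonnegative on the cube, since each is a fixed bounded function there.

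The second step is the narrow simulation. I would build a network each of whose hidden layers carries a block of $n$ neurons holding the current value of $\xvec$ — this is possible because $\xvec \in [0,1]^n$ stays coordinatewise nonnegative, so $\rho$ acts as the identity on it and a layer can simply copy these coordinates forward — together with one ``accumulator'' neuron $s$ and two auxiliary ``scratch'' neurons; that is width $n+1+2 = n+3$. The layers are grouped into $N$ consecutive blocks of constant length. Block $i$ uses a scratch neuron to form $t_i := \rho(\langle \wvec^{(i)}, \xvec\rangle + b_i)$ and then updates the accumulator by $s \leftarrow \rho\!\left(s + a_i t_i\right)$, where $a_i t_i$ has fixed sign (that of $a_i$) and the offset-by-$M$ bookkeeping guarantees $s$ never goes negative, so $\rho$ is again the identity on it; when convenient a signed addition can be realised through $z \mapsto \rho(z)-\rho(-z)$. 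After block $N$ the accumulator holds $g(\xvec)+M$ exactly, and the final linear output layer (width $1$, identity activation, as in \Cref{definition:ch2-neural_networks}) returns $s - M = g(\xvec)$, hence $|f(\xvec)-N_\weights(\xvec)| < \epsilon/2 < \epsilon$. Choosing the bias vectors so that all intermediate pre-activations stay nonnegative is exactly the constant-shift trick already carried out in \Cref{lemma:ch4-product_of_mat_to_DNN}.

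The main obstacle is the bookkeeping forced by the nonnegativity constraint of ReLU under a tight width budget: one must \emph{simultaneously} (i) propagate the $n$ input coordinates unchanged, (ii) maintain a running partial sum that a priori could be negative, and (iii) compute each fresh ridge term $t_i$, all with only three neurons beyond the input width. This is what dictates the offset-by-$M$ register for the accumulator and the $\rho(z)-\rho(-z)$ decomposition for signed updates, and one has to verify that a single $M$ serves for all partial sums (it does, by compactness and finiteness of $N$). Everything else — the initial shallow approximation, and the routine continuity/compactness estimates controlling how the approximation errors propagate through the blocks — mirrors arguments already made earlier in the chapter and presents no real difficulty.
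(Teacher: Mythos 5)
This theorem is cited from \citet{hanin2017universal}; the paper does not prove it, invoking it as a black-box external result only to derive \Cref{corollary:ch4-universal_approximation}. So there is no in-paper proof to compare your sketch against.

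Your reconstruction is nevertheless plausible and captures the key mechanism of narrow-width universal approximation: (i) approximate $f$ by a shallow ReLU net $g(\xvec) = c_0 + \sum_i a_i \rho(\langle \wvec^{(i)},\xvec\rangle + b_i)$ uniformly on the cube, then (ii) simulate $g$ by a width-$(n+3)$ register machine that copies $\xvec$ forward (ReLU is the identity on the nonnegative cube coordinates), keeps an accumulator offset by a single constant $M$ chosen large enough that every one of the finitely many partial sums stays nonnegative (such an $M$ exists by compactness), and uses a scratch neuron per block to produce $t_i$ before folding $a_i t_i$ into the accumulator. You have correctly identified the main obstacle — the ReLU nonnegativity constraint under the tight width budget — and the constant-shift device, which you rightly tie to \Cref{lemma:ch4-product_of_mat_to_DNN}, is the standard cure. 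Two small remarks: the $\rho(z)-\rho(-z)$ decomposition you offer as an option is not actually needed once $M$ is in place, since the direct update $s \mapsto \rho(s + a_i t_i)$ already has nonnegative pre-activation by the choice of $M$; and the appeal to \citet{cybenko1989approximation,hornik1989multilayer} for step (i) should be to a version of single-hidden-layer universality that covers ReLU (their original statements assume sigmoidal activations; the ReLU case follows from later generalizations or directly by piecewise-linear interpolation). For the record, \citet{hanin2017universal} organizes the argument around a max--min representation of a piecewise-affine interpolant on a triangulation of $[0,1]^n$ rather than a shallow-net sum, but the narrow register-style simulation under the width constraint is of the same flavor as what you propose.
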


From \Cref{lemma:ch4-product_of_mat_to_DNN,lemma:ch4-dcnn_approx_neural_network} and \Cref{theorem:ch3-universal_approximation_theorem_for_neural_network} by~\citet{hanin2017universal} which states that dense neural networks are universal approximators, we can prove that bounded-width diagonal-circulant neural networks are also universal approximators.

% , and as a corollary, that they are universal approximators.

% Indeed, it is well known that neural networks are universal approximators \cite{cybenko1989approximation}.
% However, the result of \citet{cybenko1989approximation} only consider infinite width and sigmoid activation.
% Therefore, we use a more recent result on the universal approximation of neural networks with bounded width and \relu activation demonstrated by \citet{hanin2017universal}.

\begin{maincorollary}
  \label{corollary:ch4-universal_approximation}
  % Bounded width DCNNs are universal approximators in the following sense: 
  Diagonal circulant neural networks with bounded width are universal approximators in the following sense:
  for any continuous function $f:[0,1]^n \rightarrow \Rbb_+$ of bounded supremum norm, for any $\epsilon > 0$, there exists a complex-valued diagonal-circulant neural network $\nn_\Pi$ of width $n+3$ such that $\forall \xvec \in [0,1]^{n+3}$, $\left| f(\xvec_{1} \ldots \xvec_{n}) - \left( \nn_\Pi \left( \xvec \right) \right)_0 \right| < \epsilon$ where $|\cdot|$ refer to the complex modulus. 
  %	, where $\left(\ \cdot\ \right)_{i}$ represents the $i^{th}$ component of a vector.
\end{maincorollary}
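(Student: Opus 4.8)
The plan is to chain the three preceding results: Hanin's universal approximation theorem (\Cref{theorem:ch3-universal_approximation_theorem_for_neural_network}) to obtain a dense \relu network close to $f$, and then \Cref{lemma:ch4-dcnn_approx_neural_network} (whose engine is \Cref{lemma:ch4-product_of_mat_to_DNN} together with \Cref{theorem:ch4-huhtanen}) to replace that dense network by a diagonal-circulant one of the same, bounded, width. A triangle inequality finishes the argument.

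First I would apply \Cref{theorem:ch3-universal_approximation_theorem_for_neural_network} with accuracy $\epsilon/2$ to get a real-valued \relu network with input width $n$, output width $1$ and hidden layers of width $n+3$ whose scalar output is within $\epsilon/2$ of $f$ uniformly on $[0,1]^n$. To bring this object into the framework of this section I would recast it as a \emph{square} width-$(n+3)$ complex dense network $\nn_\Omega$: fold the input embedding $[0,1]^n\hookrightarrow\Cbb^{n+3}$ into the first weight matrix by appending three zero columns, and fold the scalar read-out into the last weight matrix by appending zero rows, arranging the informative coordinate to sit at index $0$. On the bounded set $\Xset=[0,1]^{n+3}$ all pre-activations are real, so the complex \relu of \Cref{definition:relu_function} acts exactly as the real \relu (its imaginary branch always outputs $0$), and hence $\big(\nn_\Omega(\xvec)\big)_0$ equals the output of Hanin's network evaluated at $(\xvec_1,\dots,\xvec_n)$. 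The only point needing a remark is the terminal \relu present in this section's definition of a dense network: since $f\ge 0$, clipping the last layer at $0$ can only decrease the error, so $\big|f(\xvec_1\dots\xvec_n)-\big(\nn_\Omega(\xvec)\big)_0\big|<\epsilon/2$ still holds on $\Xset$.

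Next I would invoke \Cref{lemma:ch4-dcnn_approx_neural_network} on $\nn_\Omega$ with the bounded set $\Xset=[0,1]^{n+3}$: there is a diagonal-circulant network $\nn_\Pi$ of width $n+3$ (and depth $(2(n+3)-1)$ times the depth of $\nn_\Omega$) with $\norm{\nn_\Omega(\xvec)-\nn_\Pi(\xvec)}_2<\epsilon/2$ for every $\xvec\in\Xset$; in particular the $0$-th coordinates satisfy $\big|\big(\nn_\Omega(\xvec)\big)_0-\big(\nn_\Pi(\xvec)\big)_0\big|\le\norm{\nn_\Omega(\xvec)-\nn_\Pi(\xvec)}_2<\epsilon/2$. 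A triangle inequality then yields $\big|f(\xvec_1\dots\xvec_n)-\big(\nn_\Pi(\xvec)\big)_0\big|<\epsilon$ for all $\xvec\in[0,1]^{n+3}$, which is exactly the claim (with $|\cdot|$ the complex modulus). The main obstacle is not any sharp estimate but the bookkeeping in the reduction step: reconciling the three distinct layer widths in Hanin's statement ($n$, $n+3$, $1$) with the constant-width square-matrix model required by \Cref{lemma:ch4-dcnn_approx_neural_network} (which inherits squareness from \Cref{theorem:ch4-huhtanen}), and checking carefully that embedding a real \relu network into the complex model of \Cref{definition:relu_function} is lossless on real inputs and that the extra terminal nonlinearity is harmless because $f$ is non-negative. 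Once these are pinned down the corollary follows immediately.
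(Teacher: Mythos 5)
Your proposal is correct and takes essentially the same route as the paper: invoke Hanin's theorem (\Cref{theorem:ch3-universal_approximation_theorem_for_neural_network}), pad the resulting network into a square width-$(n+3)$ dense model, then apply \Cref{lemma:ch4-dcnn_approx_neural_network}. You are in fact somewhat more careful than the paper's own proof, which does not explicitly split $\epsilon$ between the two approximation steps, spell out the harmlessness of the terminal complex \relu (justified here precisely because $f\ge0$), or note that the padded network behaves like the real-valued one on real inputs.
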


\begin{proof}[\Cref{corollary:ch4-universal_approximation}]
  From \Cref{theorem:ch3-universal_approximation_theorem_for_neural_network}, we know that there exists a dense neural network $\nn_\Omega$ with an input layer of width $n$, an output layer of width $1$, hidden layers of width $n+3$ and ReLU activations such that $\forall \xvec \in [0,1]^n, \left| f(\xvec) - \nn_\Omega (\xvec) \right| < \epsilon$.
From $\nn_\Omega$, we can easily build a dense neural networks $\widetilde{\nn}_\Omega$ of width exactly $n+3$, such that $\forall \xvec \in [0,1]^{n+3}$, $\left| f(\xvec_{1} \ldots \xvec_{n}) - \left(\widetilde{\nn}_\Omega (\xvec) \right)_{0}\right| < \epsilon$.
Thanks to \Cref{lemma:ch4-dcnn_approx_neural_network}, this last network can be approximated arbitrarily well by a diagonal-circulant neural network of width $n+3$.
Note that the matrices in the diagonal-circulant neural network are complex, even though we are approximating a real-valued function.
\end{proof}

The previous result shows that diagonal-circulant neural networks are universal approximators of real-valued functions.
However the depth needed is in $\bigO(n)$ where $n$ is the width of the network (size of the input).
The depth needed to reach universal approximation is not small, in our experiments, $n$ can be over 3000.
Nonetheless, \citet{cheng2015exploration} have provided empirical evidence that diagonal-circulant neural networks with small depth can offer good performance.
In the following subsection, we study the theoretical expressivity of diagonal-circulant neural networks with bounded-width and small depth.
This study allows us to better understand why DCNNs show good empirical performances with limited depth.

% $(2n+5)\depth$ is not a small depth (in our experiments, $n$ can be over 300~000), and a number of work provided empirical evidences that diagonal-circulant neural networks with small depth can offer good performances \cite{cheng2015exploration}. 

% This is a first result, however $(2n+5)\depth$ is not a small depth (in our experiments, $n$ can be over 300~000), and a number of work provided empirical evidences that DCNN with small depth can offer good performances \cite{cheng2015exploration}.

%%%%%%%%%%%%%%%%%%%%%%%%%%%%%%%%%%%%%%%%%%%%%%%%%%%%%%%%%%%%%%%%%%%%%%%%%%%%%%%%
\subsection{The Expressive Power of Diagonal-Circulant Neural Networks}
\label{subsection:ch4-on_the_expressive_power_of_diagonal-circulant_neural_networks}
%%%%%%%%%%%%%%%%%%%%%%%%%%%%%%%%%%%%%%%%%%%%%%%%%%%%%%%%%%%%%%%%%%%%%%%%%%%%%%%%

% To improve the result presented in the previous section, we study the approximation properties of diagonal-circulant neural networks with small depth. 
% To improve our result, we introduce our main theorem which studies the approximation properties of these small depth networks.
% We define a measure of expressivity of DCNNs by comparing the depth needed to approximate dense neural networks where the rank of each layer is small.

% We need to define the \emph{total rank} of a dense neural networks.
% This concept will be used to evaluate the expressiveness of diagonal-circulant neural networks with respect to dense neural networks. 

% \todotext{introduire la sous partie}
In this subsection, we study the expressive power of diagonal-circulant neural networks with small depth.
To assess the expressivity of DCNNs, we compare the depth needed to approximate dense neural networks with low \emph{total rank} which we define as the sum of ranks of each weights matrix.
With the concept of total rank, we present in the following,  our result on the expressive power of DCNNs with respect to the total rank of dense neural networks.

% This concept will be useful to characterize the expressiveness of DCNNs.
% The following theorem presents our result on the expressive power of DCNNs with respect to the total rank of dense neural networks. 

\begin{definition}[Total Rank]
  The total rank $\kappa(\nn_{\Omega})$ of the neural network $\nn_{\Omega}$ corresponds to the sum of the ranks of the matrices $\Wmat^{(1)} \ldots \Wmat^{(\depth)}$ as follows
  \begin{align}
    \kappa(\nn_{\Omega}) \triangleq \sum_{i \in [\depth]} \rank(\Wmat^{(i)}) \enspace.
  \end{align}
  \removespace
\end{definition}

\begin{maintheorem}[Rank-based expressive power of DCNNs] \label{theorem:ch4-low_rank_nn}
  Let $\nn_\Omega$ be a dense neural network of width $n$, depth $\depth$ and a total rank $K$, and assume $n$ is a power of $2$.
  Let $\Xset \subset \Cbb^n$ be a bounded set.
  Then, for any $\epsilon > 0$, there exists a diagonal-circulant neural network $\nn_\Pi$ of width $n$ such that $\norm{\nn_\Omega(\xvec) - \nn_\Pi(\xvec)}_2 < \epsilon$ for all $\xvec \in \Xset$ and the depth of $\nn_\Pi$ is bounded by $5K$.
\end{maintheorem}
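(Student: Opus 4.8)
The plan is to approximate $\nn_\Omega$ one dense layer at a time, replacing each layer by a short stack of diagonal-circulant layers whose length is controlled by the \emph{rank} of that layer's weight matrix, and then to invoke continuity to absorb the accumulated error, exactly as in the proof of \Cref{lemma:ch4-dcnn_approx_neural_network}.

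First I would write $\nn_\Omega = \layer^\rho_{\Wmat^{(\depth)},\bvec^{(\depth)}} \circ \cdots \circ \layer^\rho_{\Wmat^{(1)},\bvec^{(1)}}$ with $r_i := \rank(\Wmat^{(i)})$ and $\sum_i r_i = K$; we may assume each $r_i \ge 1$ (a zero weight matrix makes its layer, and everything downstream, constant, a degenerate case). Since $n$ is a power of two, every power of two $\le n$ divides $n$. So I would write $r_i$ in binary, $r_i = 2^{a_{i,1}} + \cdots + 2^{a_{i,s_i}}$, and truncate the SVD $\Wmat^{(i)} = \Umat_i \pmb{\Sigma}_i \Vmat_i^*$ blockwise to get $\Wmat^{(i)} = \sum_{t=1}^{s_i} \Bmat^{(i,t)}$ with $\rank \Bmat^{(i,t)} = 2^{a_{i,t}}$, arranging the order of the singular triples so that block $t$ occupies a contiguous index set $S_{i,t}$ and the $\Bmat^{(i,t)}$ act on pairwise disjoint coordinate ranges at input and output. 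Applying \Cref{theorem:ch4-rank-decomposition} to each $\Bmat^{(i,t)}$ (its rank divides $n$) and splicing the resulting products along the disjoint supports $S_{i,t}$ should yield, for any $\epsilon' > 0$, a product of at most $4 r_i + s_i \le 5 r_i$ matrices, alternately circulant and diagonal, within Frobenius distance $\epsilon'$ of $\Wmat^{(i)}$; here I use $s_i \le \lfloor \log_2 r_i \rfloor + 1 \le r_i$.

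Next I would turn these products into network layers: for each $i$, with $\Wmat^{(i)} \approx \Amat^{(i,1)} \cdots \Amat^{(i,m_i)}$ and $m_i \le 5 r_i$, \Cref{lemma:ch4-product_of_mat_to_DNN} supplies bias vectors $\cvec^{(i,j)}$ so that $\rho(\Amat^{(i,1)} \cdots \Amat^{(i,m_i)} \xvec + \bvec^{(i)})$ agrees on the relevant bounded set with $\layer^\rho_{\Amat^{(i,1)}, \cvec^{(i,1)}} \circ \cdots \circ \layer^\rho_{\Amat^{(i,m_i)}, \cvec^{(i,m_i)}}(\xvec)$. Concatenating these blocks over $i = 1, \ldots, \depth$ defines a diagonal-circulant network $\nn_\Pi$ of width $n$ and depth $\sum_i m_i \le 5 \sum_i r_i = 5K$. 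To finish, since $\Xset$ is bounded, the complex \relu is continuous and $1$-Lipschitz, and the iterated images of $\Xset$ through the reconstructed blocks stay in a fixed bounded set, an $\epsilon'/\depth$-style argument identical to the one in \Cref{lemma:ch4-dcnn_approx_neural_network} forces $\norm{\nn_\Omega(\xvec) - \nn_\Pi(\xvec)}_2 < \epsilon$ for all $\xvec \in \Xset$ once $\epsilon'$ is small enough.

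The main obstacle is the blockwise splicing in the second step: \Cref{theorem:ch4-rank-decomposition} is stated for a single matrix whose rank divides $n$, and I must check that the constructions for the individual power-of-two-rank pieces $\Bmat^{(i,t)}$ genuinely merge into one alternating circulant/diagonal product of length $4 r_i + s_i$, rather than paying $4\tilde r_i + 1 \approx 8 r_i$ by naively rounding $r_i$ up to the next power of two $\tilde r_i$. Concretely this means re-reading the proof of \Cref{theorem:ch4-rank-decomposition} and verifying that the diagonal ``selector'' matrix $\Omat$ and the banded matrix $\Wmat$ attached to disjoint index blocks add without inflating the number of factors (the block-diagonal Toeplitz linear system there decouples across disjoint blocks), so that the $\Umat$-side and $\Vmat$-side each cost $2 r_i$ factors and the middle diagonal $\Omat\pmb{\Sigma}\Omat^*$ costs $s_i$; everything else — the unfolding via \Cref{lemma:ch4-product_of_mat_to_DNN} and the propagation of the approximation error — is routine given the results already established.
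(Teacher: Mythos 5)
Your outer skeleton — approximate each dense layer by a short stack of diagonal-circulant layers whose length is governed by that layer's rank, then absorb the propagated error via \Cref{lemma:ch4-product_of_mat_to_DNN} and the continuity argument of \Cref{lemma:ch4-dcnn_approx_neural_network}, and count $\sum_i m_i \le \depth + 4\sum_i r_i \le 5K$ — is exactly the paper's route. Where you deviate is in trying to handle ranks that do not divide $n$ via a binary decomposition $r_i = \sum_t 2^{a_{i,t}}$ and a ``splicing'' of the resulting products. The paper does not do this: its proof simply inserts the additional hypothesis ``assume that $\forall i$, $n$ can be divided by $k^{(i)}$'' and applies \Cref{theorem:ch4-rank-decomposition} to each $\Wmat^{(i)}$ as-is. (You are right that this is an extra assumption; $n$ being a power of $2$ forces only that divisors of $n$ are powers of $2$, not that the ranks are.)

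Your attempted patch, however, has a concrete flaw. Partitioning the singular triples of $\Wmat^{(i)} = \Umat_i \pmb{\Sigma}_i \Vmat_i^*$ into blocks $\Bmat^{(i,t)} = \sum_{j \in S_{i,t}} \sigma_j u_j v_j^*$ gives matrices with mutually \emph{orthogonal} column and row spaces, not ones that ``act on pairwise disjoint coordinate ranges at input and output'': the vectors $u_j, v_j$ are arbitrary unit vectors in $\Cbb^n$, so the nonzero coordinates of the different $\Bmat^{(i,t)}$ overlap freely. Consequently the key step — merging the $s_i$ alternating products produced by applying \Cref{theorem:ch4-rank-decomposition} to each $\Bmat^{(i,t)}$ into a single alternating product representing the \emph{sum} $\sum_t \Bmat^{(i,t)}$ — has no basis: a sum of products of diagonal and circulant matrices is not, in general, again such a product, and nothing in the structure of \Cref{theorem:ch4-rank-decomposition} (where $\Omat$, $\Rmat$, and the banded $\Wmat$ are all tied to a single contiguous leading block of size $k$) makes the constructions for disjoint index sets $S_{i,t}$ compose multiplicatively. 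You flag the splicing as the main obstacle yourself, but the ``disjoint supports'' premise on which the splicing rests is already false, so this is not merely an unchecked detail — it is a step that does not go through. If you want to avoid inflating the factor count, the clean fix is to do what the paper's proof (implicitly) does: take the divisibility $k^{(i)} \mid n$ as a standing hypothesis of the theorem, or round each $k^{(i)}$ up to the nearest power of two at the cost of a larger constant than $5$.
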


\begin{proof}[\Cref{theorem:ch4-low_rank_nn}]
  Let $\nn_\Omega$ be a dense neural networks parameterized by $\Omega = \{( \Wmat^{(i)}, \bvec^{(i)} \}_{i \in [\depth]}$ of width $n$, depth $\depth$.
  Let $k^{(1)} \ldots k^{(\depth)}$ be the ranks of matrices $\Wmat^{(1)} \ldots \Wmat^{(\depth)}$, which are $n \times n$ matrices. Assume that $\forall i$, $n$ can be divided by $k^{i}$.
  % For all $i$, there exists $\tilde{k}^{(i)} \in \{ k^{(i)} \ldots 2 k^{(i)} \}$ such that $\tilde{k}^{(i)}$ is a power of $2$.
  % Due to the fact that $n$ is also a power of $2$, $\tilde{k}^{(i)}$ divides $n$.
  By \Cref{theorem:ch4-rank-decomposition}, for any $\epsilon > 0$, any matrix $\Wmat^{(i)}$ of rank $k^{(i)}$, there exists a sequence of diagonal matrices $\{\Dmat^{(i, j)}\}_{i \in [\depth], j \in [4k^{i}+1]}$ and circulant matrices, $\{\Cmat^{(i, j)}\}_{i, \in [\depth], j \in [4k^{i}+1]}$, such that for all $i \in [\depth]$,
  \begin{equation}
    \norm{\prod_{j=1}^{4k^{i}+1} \Dmat^{(i,4k^{i}+2-j)} \Cmat^{(i,4k^{i}+2-j)} - \Wmat^{(i)}}_\fro < \epsilon' \enspace.
  \end{equation}
  Using the exact same technique as in \Cref{lemma:ch4-dcnn_approx_neural_network}, we can build a diagonal-circulant neural network $\nn_\Pi$, such that
  \begin{equation}
    \norm{ \nn_\Omega (\xvec) - \nn_\Pi (\xvec)}_2 < \epsilon, \quad \forall \xvec \in \Xset,
  \end{equation}
  for which the total number of layers is bounded as follows:
  \begin{equation}
    \sum_{i \in [\depth]} \left( 4 k^{(i)} + 1 \right) \le \depth + 4 \sum_{i \in [\depth]} k^{(i)} \le \depth + 4 \kappa(\nn_\Omega) \le 5 \kappa(\nn_\Omega) \enspace. 
  \end{equation}
  where $\kappa(\nn_\Omega)$ is the total rank of the dense neural network $\nn_\Omega$.
\end{proof}

\noindent
Remark that in the theorem, we require that $n$ is a power of $2$.
We conjecture that the result still holds even without this condition.
This result refines \Cref{lemma:ch4-dcnn_approx_neural_network} and answers our second question: does DCNN of bounded width and small depth can approximate a dense neural network of low total rank?
Note that the converse is not true because an $n \times n$ circulant matrix can be of full rank, therefore approximating a DCNN of depth $1$ can require a dense network of total rank equal to $n$.

Finally, what if we choose to use diagonal-circulant networks with a small depth to approximate a dense neural network whose matrices are not of lower rank? 
To answer this question, we present three results. First, we characterize the negative impact of replacing matrices by their low rank approximation.
Then, we extend this result to neural networks and bound the error between a dense neural network with full total rank and one with low total rank.
Finally, \Cref{corollary:relu_to_circ} presents our result which bounds the error between a dense neural network with full total rank and a diagonal-circulant neural network.

\begin{lemma} \label{lemma:ch4-bound_one_layer}
  Let $\Wmat \in \Cbb^{n \times n}$ with singular values $\sigma_1 \ldots \sigma_n$, and let $\bvec, \xvec, \yvec \in \Cbb^n$.
  Let $\widetilde{\Wmat}$ be the matrix obtained by an SVD approximation of rank $k$ of matrix $\Wmat$.
  Then we have:
  \begin{equation}
    \norm{ \act \big( \Wmat\xvec + \bvec \big) - \act \big( \widetilde{\Wmat}\yvec + \bvec \big)}_2 \le \sigma_{1} \norm{\xvec - \yvec}_2 + \sigma_{k+1} \norm{\xvec}_2 
  \end{equation}
  \removespace
\end{lemma}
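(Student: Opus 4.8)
The plan is to prove the bound by splitting the difference into two pieces via the triangle inequality, using the $1$-Lipschitz property of the complex \relu function $\act$ together with standard facts about singular values and the SVD truncation. First I would recall that $\act$ is $1$-Lipschitz with respect to $\norm{\cdot}_2$ on $\Cbb^n$: indeed $\act(\zvec) = \max(0,\mathfrak{R}(\zvec)) + \ci\max(0,\mathfrak{I}(\zvec))$ acts coordinatewise, and since $t \mapsto \max(0,t)$ is $1$-Lipschitz on $\Rbb$, we get $\norm{\act(\zvec) - \act(\zvec')}_2 \le \norm{\zvec - \zvec'}_2$ for all $\zvec,\zvec' \in \Cbb^n$.

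The main computation is then
\begin{align}
  \norm{ \act(\Wmat\xvec + \bvec) - \act(\widetilde{\Wmat}\yvec + \bvec)}_2
    &\le \norm{ \Wmat\xvec - \widetilde{\Wmat}\yvec }_2 \notag \\
    &\le \norm{ \Wmat\xvec - \widetilde{\Wmat}\xvec }_2 + \norm{ \widetilde{\Wmat}\xvec - \widetilde{\Wmat}\yvec }_2 \notag \\
    &= \norm{ (\Wmat - \widetilde{\Wmat})\xvec }_2 + \norm{ \widetilde{\Wmat}(\xvec - \yvec) }_2 \notag \\
    &\le \norm{ \Wmat - \widetilde{\Wmat} }_2 \norm{\xvec}_2 + \norm{ \widetilde{\Wmat} }_2 \norm{\xvec - \yvec}_2 .
\end{align}
The first inequality is the $1$-Lipschitz property applied with the common additive $\bvec$ cancelling, the second is the triangle inequality after inserting $\pm\,\widetilde{\Wmat}\xvec$, and the last is submultiplicativity of the spectral norm. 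It then remains to identify the two operator norms: by the Eckart--Young theorem, the best rank-$k$ approximation $\widetilde{\Wmat}$ (obtained by truncating the SVD) satisfies $\norm{\Wmat - \widetilde{\Wmat}}_2 = \sigma_{k+1}$, and $\widetilde{\Wmat}$ has singular values $\sigma_1 \ge \dots \ge \sigma_k$, so $\norm{\widetilde{\Wmat}}_2 = \sigma_1$. Substituting gives the claimed bound $\sigma_1 \norm{\xvec - \yvec}_2 + \sigma_{k+1}\norm{\xvec}_2$.

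There is no serious obstacle here; the only point requiring a little care is making sure the $1$-Lipschitz claim for the complex \relu is justified explicitly (it follows coordinatewise, separating real and imaginary parts, since $|\max(0,a)-\max(0,a')|^2 + |\max(0,b)-\max(0,b')|^2 \le |a-a'|^2 + |b-b'|^2$), and noting that the constant additive bias $\bvec$ inside both \relu arguments does not affect the Lipschitz estimate because it is the same on both sides. One should also observe the harmless convention that if $k \ge n$ then $\sigma_{k+1} = 0$ and $\widetilde{\Wmat} = \Wmat$, so the statement degenerates correctly.
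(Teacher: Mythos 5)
Your proof is correct and takes essentially the same approach as the paper: both split $\Wmat\xvec - \widetilde{\Wmat}\yvec$ as $(\Wmat - \widetilde{\Wmat})\xvec + \widetilde{\Wmat}(\xvec - \yvec)$, bound the two operator norms by $\sigma_{k+1}$ and $\sigma_1$ via the truncated SVD, and invoke the $1$-Lipschitz property of the complex ReLU. The only cosmetic difference is the order (you apply the Lipschitz bound before the triangle-inequality split, the paper after), and your remarks on justifying the coordinatewise Lipschitz claim and the $k\ge n$ convention are sensible but inessential additions.
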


\begin{proof}[\Cref{lemma:ch4-bound_one_layer}]
  Let us denote $\sigma_j$ be the $j^{th}$ singular value of $\Wmat$ and recall that $\sigma_1(\Wmat) = \norm{\Wmat}_2$ by the definition of the spectral norm.
  Furthermore, we have $\sigma_1(\Wmat) = \sigma_1(\widetilde{\Wmat})$ because the greatest singular values are equal for both $\Wmat$ and $\widetilde{\Wmat}$.
  % Recall that $\sigma_1(\Wmat) = \norm{\Wmat}_{2} = \norm{\widetilde{\Wmat}}_{2}$, because $\sigma_1(\Wmat)$ is the greatest singular value of both $\Wmat$ and $\widetilde{\Wmat}$.
  Also, note that $\norm{\Wmat - \widetilde{\Wmat}}_{2} = \sigma_{k+1}$.
  First, let us bound the formula without ReLUs:

  \begin{align}
    \norm{\big(\Wmat\xvec+\bvec\big) - \big(\widetilde{\Wmat} \yvec + \bvec \big)}_2 &= \norm{\Wmat\xvec - \widetilde{\Wmat} \xvec - \widetilde{\Wmat} \big(\yvec - \xvec \big)}_2 \\
     &\le \norm{\big(\Wmat - \widetilde{\Wmat}\big)\xvec}_2 + \norm{\widetilde{\Wmat}}_{2} \norm{\xvec - \yvec}_2 \\
     &\le \norm{\xvec}_2 \sigma_{k+1} + \sigma_1 \norm{\xvec - \yvec}_2 
  \end{align}
  Finally, it is easy to see that for any pair of vectors $\xvec,\yvec \in \Cbb^n$, we have
  \begin{equation}
    \norm{ \act(\xvec) - \act(\yvec)}_2 \le \norm{\xvec - \yvec}_2 \enspace,
  \end{equation}
  because the complex \relu function is $1$-Lipschitz.
  This concludes the proof.
\end{proof}

The lemma above bound the error between a linear transform and its equivalent low rank approximation.
In the following, we extend this result to neural networks.

\begin{proposition} \label{proposition:ch4-approximation_network_dense_to_low_rank}
  Let $\nn_\Omega: \Cbb^n \rightarrow \Cbb^n$ be a dense neural network, with \relu activation, parameterized by $\Omega = \left\{ \big(\Wmat^{(i)},\bvec^{(i)} \big) \right\}_{i \in [\depth]}$ with $\Wmat^{(i)} \in \Cbb^{n \times n}, \bvec^{(i)} \in \Cbb^n$ for all $i \in [\depth]$ and $\nn_\Omega = \layer_{\Wmat^{(\depth)},\bvec^{(\depth)}} \circ \ldots \circ \layer_{\Wmat^{(1)},\bvec^{(1)}}$ of depth $\depth$ and width $n$.
  Let $\widetilde{\Omega} = \left\{ \big( \widetilde{\Wmat}^{(i)},\bvec^{(i)} \big) \right\}_{i \in [\depth]}$ where $\widetilde{\Wmat}^{(i)}$ is the matrix obtained by the SVD approximation of rank $k$ of matrix $\Wmat^{(i)}$. 
  Define the network $\nn_{\widetilde{\Omega}}$ and let $\sigma_{j}^{(i)}$ be the $j^{th}$ singular value of $\Wmat^{(i)}$ and denote $\sigma_j^{(\max)} = \max_i \sigma_j^{(i)}$, the largest $j^{th}$ singular value across layers.
  Then, for any $\xvec \in \Cbb^n$, we have:
  \begin{itemize}
    \item if $\sigma_{1}^{(\max)} = 1$:
      \begin{equation}
	\norm{ \nn_\Omega(\xvec) - \nn_{\widetilde{\Omega}}(\xvec)}_2 \le \depth \left(  R \sigma_{k+1}^{(\max)} \right) \enspace.
      \end{equation}
    \item if $\sigma_{1}^{(\max)} \neq 1$:
      \begin{equation}
	\norm{ \nn_\Omega(\xvec) - \nn_{\widetilde{\Omega}}(\xvec)}_2 \le \frac{\left( \big(\sigma_{1}^{(\max)}\big)^\depth - 1 \right) R \sigma_{k+1}^{(\max)}}{\sigma_{1}^{(\max)} - 1}
      \end{equation}
  \end{itemize}
  where $R$ is an upper bound on the norm of the output of any layer in $\nn_\Omega$.
\end{proposition}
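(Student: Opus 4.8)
The plan is to reduce the statement to the one–layer estimate already established in \Cref{lemma:ch4-bound_one_layer} and then propagate the per-layer discrepancy through the network by a simple linear recursion. For a fixed input $\xvec \in \Cbb^n$, I would set $\xvec^{(0)} = \yvec^{(0)} = \xvec$ and, for $i \in [\depth]$, let $\xvec^{(i)} = \layer^\act_{\Wmat^{(i)},\bvec^{(i)}}(\xvec^{(i-1)})$ be the activation after the $i$-th layer of $\nn_\weights$ and $\yvec^{(i)} = \layer^\act_{\widetilde{\Wmat}^{(i)},\bvec^{(i)}}(\yvec^{(i-1)})$ the corresponding activation of $\nn_{\widetilde{\weights}}$. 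Writing $\delta_i = \norm{\xvec^{(i)} - \yvec^{(i)}}_2$, we have $\delta_0 = 0$ and $\delta_\depth = \norm{\nn_\weights(\xvec) - \nn_{\widetilde{\weights}}(\xvec)}_2$, which is exactly the quantity to bound.

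First I would apply \Cref{lemma:ch4-bound_one_layer} at layer $i$ with $\Wmat = \Wmat^{(i)}$, $\widetilde{\Wmat} = \widetilde{\Wmat}^{(i)}$, input $\xvec^{(i-1)}$ and $\yvec^{(i-1)}$, obtaining $\delta_i \le \sigma_1^{(i)} \delta_{i-1} + \sigma_{k+1}^{(i)} \norm{\xvec^{(i-1)}}_2$. Since $\xvec^{(i-1)}$ is the output of a layer of $\nn_\weights$ (or the input, which we also take bounded by $R$), we have $\norm{\xvec^{(i-1)}}_2 \le R$; using $\delta_{i-1} \ge 0$ to replace $\sigma_1^{(i)}$ by $\sigma_1^{(\max)}$ and replacing $\sigma_{k+1}^{(i)}$ by $\sigma_{k+1}^{(\max)}$ gives the clean recurrence $\delta_i \le \sigma_1^{(\max)} \delta_{i-1} + R \sigma_{k+1}^{(\max)}$.

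Unrolling this recurrence from $\delta_0 = 0$ yields $\delta_\depth \le R \sigma_{k+1}^{(\max)} \sum_{j=0}^{\depth-1} \big(\sigma_1^{(\max)}\big)^j$, and the two cases of the statement follow from evaluating the geometric sum: when $\sigma_1^{(\max)} = 1$ it equals $\depth$, giving the first bound; when $\sigma_1^{(\max)} \neq 1$ it equals $\big((\sigma_1^{(\max)})^\depth - 1\big)/(\sigma_1^{(\max)} - 1)$, giving the second.

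I do not expect a genuine obstacle here beyond bookkeeping. The two points that need a moment of care are: (i) the validity of \Cref{lemma:ch4-bound_one_layer} at every layer, which rests only on the complex \relu being $1$-Lipschitz (this also covers the identity activation of the final layer); and (ii) the implicit reading that $R$ bounds the norm of the network input as well as of each hidden activation. If one wishes to avoid assuming the input is bounded by $R$, a minor variant keeps $\norm{\xvec}_2$ as a separate term in the first step of the recursion and absorbs it afterwards, at the cost of a slightly less symmetric final expression.
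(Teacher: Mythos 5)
Your proof is correct and takes essentially the same route as the paper: introduce the layerwise activations $\xvec^{(i)}$ and $\yvec^{(i)}$ of the two networks, apply \Cref{lemma:ch4-bound_one_layer} at each layer to obtain the recurrence $\delta_i \le \sigma_1^{(\max)}\delta_{i-1} + R\,\sigma_{k+1}^{(\max)}$, and unroll the geometric sum to produce the two cases. Your side remark about whether $R$ should also cover $\norm{\xvec^{(0)}}_2$ is a fair observation -- the paper's proof implicitly assumes this when applying the lemma at $i=1$ -- but it does not change the argument.
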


\begin{proof}[\Cref{proposition:ch4-approximation_network_dense_to_low_rank}]
  Let $\xvec^{(0)} \in \Cbb^n$ and $\yvec^{(0)} = \xvec^{(0)}$.
  For all $i \in [\depth]$, define $\xvec^{(i)} = \act \left(\Wmat^{(i)} \xvec^{(i-1)} + \bvec^{(i)} \right)$ and $\yvec^{(i)} = \act \left( \widetilde{\Wmat}^{(i)} \yvec^{(i-1)} + \bvec^{(i)} \right)$.
  We aim to upper bound the difference in norm of $\xvec^{(i)}$ and $\yvec^{(i)}$.
  First, let us consider the linear transform within $\xvec^{(i)}$ and $\yvec^{(i)}$:
  The difference in norm between $\xvec^{(i)}$ and $\yvec^{(i)}$ can be upper bounded as follows:
  \begin{align}
    \norm{ \xvec^{(i)} - \yvec^{(i)}}_2 &\le \sigma_{1}^{(i)} \norm{ \xvec^{(i-1)} - \yvec^{(i-1)}}_2 + \sigma_{k+1}^{(i)} \norm{ \xvec^{(i-1)}}_2 \\
    &\leq \sigma_1^{(\max)} \norm{ \xvec^{(i-1)} - \yvec^{(i-1)}}_2 + \sigma_{k+1}^{(\max)} R
  \end{align}
  where the first inequality stems from \Cref{lemma:ch4-bound_one_layer} and the second by setting $\sigma_j^{(\max)} = \max_i \sigma_j^{(i)}$ where $\sigma_j^{(\max)}$ is the largest $j^{th}$ singular value across layers and $R = \max_i \norm{\xvec^{(i)}}_2$.
  From there, we need to consider two cases:
  \begin{itemize}
    \item If $\sigma_1^{(\max)} = 1$: we have a recurrence relation of the form $a_n = a_{n-1} + s$ with $a_0 = 0$ which can unfold as follows: $a_n = ns$.
      We can apply this formula to bound our error as follows:
      \begin{equation}
	\norm{ \xvec^{(\depth)} - \yvec^{(\depth)}}_2 \le \depth \left(  R \sigma_{k+1}^{(\max)} \right) \enspace.
      \end{equation}
    \item If $\sigma_1^{(\max)} \neq 1$: we have a recurrence relation of the form $a_n = r a_{n-1} + s$ with $a_0= 0$ which can unfold as follows: $a_n = \frac{s \left(r^n - 1\right)}{r-1}$.
      We can apply this formula to bound our error as follows:
      \begin{equation}
	\norm{ \xvec^{(\depth)} - \yvec^{(\depth)} }_2 \le \frac{ \left( \big(\sigma_1^{(\max)}\big)^\depth - 1 \right) R \sigma_{k+1}^{(\max)}}{\sigma_1^{(\max)}-1} \enspace,
      \end{equation}
  \end{itemize}
  which concludes the proof.
\end{proof}

\noindent
\Cref{proposition:ch4-approximation_network_dense_to_low_rank} bounds the error between a dense neural network and a neural network whose matrices are the low rank approximations of the first ones.
Two cases are presented. If the largest singular value across the network is equal to 1, then the error is polynomial with the depth of the network.
In the case where the largest singular value across the network is different from 1, the error is exponential with respect to the depth of the network.

Now, we can easily extend \Cref{proposition:ch4-approximation_network_dense_to_low_rank} to diagonal-circulant neural networks. 
By \Cref{theorem:ch4-low_rank_nn}, we can replace the layers with low rank approximation by the product of diagonal and circulant matrices leading to a diagonal-circulant neural network with a higher depth.
% Because we can approximate perfectly a neural network whose matrices are the low rank approximation with a diagonal-circulant neural network, 

\begin{maincorollary} \label{corollary:relu_to_circ}
  Let $\nn_\Omega$ be a dense neural network of depth $\depth$ and width $n$ and parameterized by $\Omega = \left\{(\Wmat^{(i)},\bvec^{(i)} ) \right\}_{i \in [\depth]}$.
  Let $\sigma_1^{(i)}$ be the largest singular value of $\Wmat^{(i)}$.
  Let $\Xset \subset \Cbb^n$ be a bounded set.
  Let $k$ be an integer dividing $n$.
  There  exists a diagonal-circulant neural network $\nn_\Pi$ of width $n$ and of depth $m = (4k+1)\depth$, parameterized by $\Pi = \left\{(\Dmat^{(i)}\Cmat^{(i)},\cvec^{(i)})\right\}_{i \in [m]}$ such that, for any $\xvec \in \Cbb^n$, we have:
  \begin{itemize}
    \item if $\sigma_{1}^{(\max)} = 1$:
      \begin{equation}
	\norm{ \nn_\Omega(\xvec) - \nn_{\Pi}(\xvec)}_2 \le \depth \left(  R \sigma_{k+1}^{(\max)} \right) \enspace.
      \end{equation}
    \item if $\sigma_{1}^{(\max)} \neq 1$:
      \begin{equation}
	\norm{ \nn_\Omega(\xvec) - \nn_{\Pi}(\xvec)}_2 \le \frac{\left( \big(\sigma_{1}^{(\max)}\big)^\depth - 1 \right) R \sigma_{k+1}^{(\max)}}{\sigma_{1}^{(\max)} - 1} \enspace,
      \end{equation}
  \end{itemize}
  where $R$ is an upper bound on the norm of the output of any layer in $\nn_\Omega$.
\end{maincorollary}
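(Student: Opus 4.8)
## Proof proposal for Corollary (relu\_to\_circ)

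The plan is to chain together two results already established in this section: Proposition~\ref{proposition:ch4-approximation_network_dense_to_low_rank}, which bounds the error between a dense network $\nn_\Omega$ and the network $\nn_{\widetilde{\Omega}}$ obtained by replacing each weight matrix $\Wmat^{(i)}$ with its rank-$k$ SVD truncation $\widetilde{\Wmat}^{(i)}$; and Theorem~\ref{theorem:ch4-rank-decomposition} together with the unfolding technique of Lemma~\ref{lemma:ch4-dcnn_approx_neural_network}, which lets us realize each low-rank layer (exactly, up to arbitrary precision) as a product of $4k+1$ alternating diagonal and circulant matrices absorbed into a depth-$(4k+1)$ diagonal-circulant subnetwork with appropriately chosen bias vectors.

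Concretely, first I would invoke Proposition~\ref{proposition:ch4-approximation_network_dense_to_low_rank} with the given $k$ (which divides $n$) to obtain the truncated network $\nn_{\widetilde{\Omega}}$ satisfying exactly the two displayed bounds in the statement — the polynomial bound $\depth(R\,\sigma_{k+1}^{(\max)})$ when $\sigma_1^{(\max)}=1$ and the geometric-series bound otherwise. Note that these bounds do not depend on any further approximation, so they will be the final bounds in the corollary; the remaining work is purely to show $\nn_{\widetilde{\Omega}}$ can be matched arbitrarily well by a diagonal-circulant network of the claimed depth. Second, I would apply Theorem~\ref{theorem:ch4-rank-decomposition} to each $\widetilde{\Wmat}^{(i)}$ (which has rank at most $k$, and $n$ is divisible by $k$): for every $\epsilon''>0$ there is a sequence of $4k+1$ matrices $\Dmat\Cmat^{(i,1)},\ldots,\Dmat\Cmat^{(i,4k+1)}$, alternately circulant and diagonal, whose product is $\epsilon''$-close in Frobenius norm to $\widetilde{\Wmat}^{(i)}$. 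Third, exactly as in the proof of Lemma~\ref{lemma:ch4-dcnn_approx_neural_network}, I would use Lemma~\ref{lemma:ch4-product_of_mat_to_DNN} to choose bias vectors $\cvec^{(i,j)}$ so that the composition of the $4k+1$ layers $\layer^\act_{\Dmat\Cmat^{(i,j)},\cvec^{(i,j)}}$ equals $\act\big(\Dmat\Cmat^{(i,4k+1)}\cdots\Dmat\Cmat^{(i,1)}\xvec+\bvec^{(i)}\big)$ on the bounded set $\Xset$; stacking all $\depth$ blocks gives $\nn_\Pi$ of total depth $m=(4k+1)\depth$. Finally, since the complex \relu is $1$-Lipschitz and continuous and $\Xset$ is bounded, letting $\epsilon''\to 0$ makes $\sup_{\xvec\in\Xset}\norm{\nn_{\widetilde{\Omega}}(\xvec)-\nn_\Pi(\xvec)}_2\to 0$, so $\nn_\Pi$ inherits the same error bounds with respect to $\nn_\Omega$ up to an arbitrarily small additive slack, which we absorb.

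I expect no serious obstacle here, since every ingredient is already in place — the corollary is essentially a bookkeeping composition of Proposition~\ref{proposition:ch4-approximation_network_dense_to_low_rank} and Theorem~\ref{theorem:ch4-low_rank_nn}/Theorem~\ref{theorem:ch4-rank-decomposition}. The one point requiring mild care is the order of the two approximations: the low-rank truncation error ($\sigma_{k+1}^{(\max)}$ terms) is \emph{irreducible} and fixed once $k$ is chosen, whereas the diagonal-circulant factorization error is $\epsilon''$ and can be driven to zero independently; I would make sure the statement is read as "$\nn_\Pi$ achieves the stated bound in the limit" (or, cleanly, "for any $\epsilon>0$ there is $\nn_\Pi$ within the stated bound $+\,\epsilon$"), matching the phrasing used in Lemma~\ref{lemma:ch4-dcnn_approx_neural_network} and Corollary~\ref{corollary:ch4-universal_approximation}. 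A secondary detail is the divisibility hypothesis: Theorem~\ref{theorem:ch4-rank-decomposition} requires $n$ divisible by the rank, which here is $k$ (dividing $n$ by assumption), and each $\widetilde{\Wmat}^{(i)}$ has rank $\le k$, so I would note that a rank-$r$ matrix with $r<k$ and $r\mid k$... actually one simply pads: a rank-$\le k$ matrix is in particular handled by the construction for rank exactly $k$ (the decomposition in Theorem~\ref{theorem:ch4-rank-decomposition} works verbatim when the $\mathbf{\Sigma}$ has some extra zero singular values), so no additional divisibility condition on the individual ranks is needed.
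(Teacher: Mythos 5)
Your proposal matches the paper's proof almost verbatim: both first invoke Proposition~\ref{proposition:ch4-approximation_network_dense_to_low_rank} to bound $\norm{\nn_\Omega(\xvec)-\nn_{\widetilde{\Omega}}(\xvec)}_2$, then replace each rank-$k$ truncation $\widetilde{\Wmat}^{(i)}$ by a product of $4k+1$ diagonal-circulant factors via \Cref{theorem:ch4-rank-decomposition}/\Cref{theorem:ch4-low_rank_nn} and absorb the composition into a depth-$(4k+1)\depth$ DCNN using the bias construction of \Cref{lemma:ch4-product_of_mat_to_DNN}. Your extra remarks on the $\epsilon''$-slack and the rank-padding for the divisibility hypothesis are mild refinements the paper glosses over, but the argument is the same.
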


\begin{proof}[\Cref{corollary:relu_to_circ}]
  Let $\nn_{\widetilde{\Omega}}$ be a dense neural network of depth $\depth$ and width $n$ and let $\widetilde{\Omega} = \left\{ \big(\widetilde{\Wmat}^{(i)}, \bvec^{(i)} \big) \right\}_{i \in [\depth]}$ such that $\widetilde{\Wmat}^{(i)}$ is the matrix obtained by the SVD approximation of rank $k$ of matrix $\Wmat^{(i)}$.
  With \Cref{proposition:ch4-approximation_network_dense_to_low_rank}, we have an error bound on $\norm{ \nn_\Omega (\xvec) - \nn_{\widetilde{\Omega}} (\xvec)}_2$.
  Now each matrix $\widetilde{\Wmat}^{(i)}$ can be replaced by a product of $4k+1$ diagonal-circulant matrices.
  By \Cref{theorem:ch4-low_rank_nn}, this product yields a diagonal-circulant neural network of depth $m = (4k+1)\depth$, strictly equivalent to $\nn_{\widetilde{\Omega}}$ on $\Xset$.
  This concludes the proof.
\end{proof}

\begin{figure}[htb]
    \begin{center}
      \tikzset{%
  >={Latex[width=2mm,length=2mm]},
            base/.style = {rectangle, draw=black, text centered, font=\sffamily},
           other/.style = {base, fill=none,  minimum width=1.7cm, minimum height=0.7cm},
         ellipse/.style = {base}
}
\begin{tikzpicture}[every node/.style={fill=white, font=\sffamily}, align=center, scale=1.1]

    \draw (0,0) circle (2.5cm);
    \draw (0,0) circle (2.0cm);
    \draw (0,0) circle (1.5cm);
    \draw (0,0) circle (1.0cm);
    \draw (0,0) circle (0.5cm);
    
    \draw[ellipse, rotate=30, fill=gray, opacity=0.5] (0.1, -1.1) ellipse (2.0cm and 0.9cm);
    \draw[ellipse, rotate=30, fill=gray, opacity=0.5] (0.0, -0.8) ellipse (1.0cm and 0.45cm);

    \node[other, draw=none] at (0.20, 0.20) {$\mathcal{C}_{1}$};
    \node[other, draw=none] at (0.55, 0.55) {$\iddots$};
    \node[other, draw=none] at (0.90, 0.90) {$\mathcal{C}_{9}$};
    \node[other, draw=none] at (1.25, 1.25) {$\iddots$};
    \node[other, draw=none] at (1.60, 1.60) {$\mathcal{C}_{18}$};
    
    \node[other, draw=none] at (0.4, -0.6) {$\mathcal{R}_{1}$};
    \node[other, draw=none] at (1.0, -1.4) {$\mathcal{R}_{2}$};

\end{tikzpicture}
    \end{center}
    \caption{Illustration of the expressivity of diagonal-circulant neural networks.}
    \label{figure:ch4-circfig}
\end{figure}
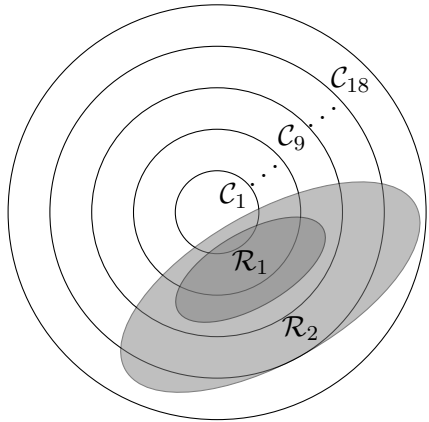

% For the sake of clarity, we highlight the significance of these results with the two following properties.
We highlight the significance of these results with the two following properties.
\begin{properties}
  Let $\Rset_{K}$ be the set of all functions $\nn_\Omega:\Cbb^n \rightarrow \Cbb^n$ for all $n$, representable by a dense neural network with complex \relu activation of total rank at most $K$ and let $\Cset_\depth$ be the set of all functions $\nn_\Pi:\Cbb^n \rightarrow \Cbb^n$ for all $n$, representable by deep diagonal-circulant networks of depth at most $\depth$, then:
  \begin{align}
    \label{property:eq1} \forall K,\exists \depth \, &\quad \Rset_{K} \subsetneq \Cset_{\depth} \\
    \label{property:eq2} \forall \depth,\nexists K\, &\quad \Cset_{\depth} \subseteq \Rset_{K}
  \end{align}
  \removespace
\end{properties}

\noindent
We illustrate the meaning of these properties using \Cref{figure:ch4-circfig}.
As we can see, the set $\mathcal{R}_K$ of all the functions representable by a dense neural network of total rank $K$ is strictly included in the set $\mathcal C_{9K}$ of all diagonal-circulant neural networks of depth $9K$ (as by \Cref{theorem:ch4-low_rank_nn}).
These properties are interesting for many reasons. 
First, \Cref{property:eq2} shows that diagonal-circulant networks are \emph{strictly more expressive} than networks with low total rank. 
Second and most importantly, in standard deep neural networks, it is known that the most of the singular values are close to zero. This phenomenon is called \emph{rank collapse} and it has been first observed by~\citet{saxe2013exact} and confirmed later by~\citet{sedghi2018singular,arora2019implicit}.
\Cref{property:eq1} shows that these networks can efficiently be approximated by diagonal-circulant networks.
Finally, several publications have shown that neural networks can be trained explicitly to have low-rank weight matrices \cite{chong18eccv, goyal2019compression}.
This opens the possibility of learning compact and accurate diagonal-circulant networks.

%%%%%%%%%%%%%%%%%%%%%%%%%%%%%%%%%%%%%%%%%%%%%%%%%%%%%%%%%%%%%%%%%%%%%%%%%%%%%%%
\section{How to Train Deep Diagonal Circulant Neural Networks?}
\label{section:ch4-how_to_train_deep_diagonal_circulant_neural_networks}
%%%%%%%%%%%%%%%%%%%%%%%%%%%%%%%%%%%%%%%%%%%%%%%%%%%%%%%%%%%%%%%%%%%%%%%%%%%%%%%

Training diagonal-circulant neural networks has revealed to be a challenging problem.
Indeed, as discussed earlier, the expressivity of diagonal-circulant neural networks scale with depth.
% However, training \emph{deep} neural networks has always been a challenging problem \cite{he2016deep}.
In the following, we devise two techniques to facilitate the training of deep diagonal-circulant neural networks.
First, we propose an initialization procedure which guarantees the signal is propagated across the network without vanishing nor exploding.
Secondly, we study the behavior of DCNNs with different nonlinearity functions and determine the best parameters for different settings. 
Note that we choose to train diagonal-circulant neural networks with real matrices instead of complex ones.
Indeed, the complex version of diagonal-circulant neural networks would have twice the number of parameters.

%%%%%%%%%%%%%%%%%%%%%%%%%%%%%%%%%%%%%%%%%%%%%%%%%%%%%%%%%%%%%%%%%%%%%%%%%%%%%%%%
\subsection{Initialization Scheme of Diagonal-Circulant Neural Networks}
\label{subsection:ch4-initialization_scheme_of_diagonal-circulant_neural_networks}
%%%%%%%%%%%%%%%%%%%%%%%%%%%%%%%%%%%%%%%%%%%%%%%%%%%%%%%%%%%%%%%%%%%%%%%%%%%%%%%%

In order to facilitate the training of deep diagonal-circulant neural networks, we extend the Xavier initialization \cite{glorot2010understanding} which is an initialization scheme proposed for dense and convolutional neural networks.
First, for each circulant matrix $\Cmat = \circulant (\cvec)$ with $\cvec \in \Rbb^n$, each $\cvec_i$ is randomly drawn from $\mathcal{N} \left(0,\alpha^2\right)$, with $\alpha = \sqrt{\frac{2}{n}}$.
Next, for each diagonal matrix $\Dmat = \diagonal (\dvec)$ with $\dvec \in \Rbb^n$, each $\dvec_{i}$ is drawn randomly and uniformly from $\{-1,1\}$ for all $i$.
Finally, all biases in the network are randomly drawn from $\mathcal{N}\left(0,\alpha'^{2}\right)$, for some small value of $\alpha'$.

\begin{lemma} \label{lemma:ch4-covariance}
  Let $\cvec, \dvec, \bvec$ be random variables in $\Rbb^n$ such that $\cvec \sim \mathcal{N}(0, \Imat_n\alpha^2)$, $\bvec \sim \mathcal{N}(0, \Imat_n\alpha'^2)$ and $\dvec_i \sim \{-1, 1\}, \forall i$ uniformly.
  Define $\Cmat = \circulant(\cvec)$ and $\Dmat = \diagonal(\dvec)$
  Define $\yvec = \Dmat \Cmat \uvec + \bvec$ for some vector $\uvec$ in $\Rbb^n$.
  Then, for all $i$, the probability density function (\pdf) of $\yvec_{i}$ is symmetric.
  Also:
  \begin{itemize}
    \item Assume $\uvec_0, \ldots, \uvec_{n-1}$ are fixed. Then, we have for $i \neq i'$:
      \begin{equation}
	\Cov(\yvec_i, \yvec_{i'}) = 0 \quad \text{and} \quad
	\Var(\yvec_i) = \alpha'^2 + \sum_{j} \uvec_{j}^{2} \alpha^{2}
      \end{equation}
    \item Let $\xvec$ be random variables in $\Rbb^n$ such that the \pdf of $\xvec_i$ is symmetric for all $i$, and let $\uvec_i = \rho(\xvec_i)$.
      We have for $i \neq i':$
      \begin{equation}
	\Cov(\yvec_i, \yvec_{i'}) = 0 \quad \text{and} \quad
	\Var(\yvec_i) = \alpha'^2 + \frac{1}{2} \sum_{j} \Var(\xvec_i) \alpha^2
      \end{equation}
  \end{itemize}
\end{lemma}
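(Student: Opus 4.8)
The plan is to write the $i$-th coordinate of $\yvec$ explicitly and then exploit independence and sign symmetry. Using $(\Cmat)_{j,k} = \cvec_{(k-j) \bmod n}$, one gets $\yvec_i = \dvec_i \sum_{k \in \Iset^+_n} \cvec_{(k-i) \bmod n}\, \uvec_k + \bvec_i$. The only place the circulant structure is used is the observation that $k \mapsto (k-i) \bmod n$ is a bijection of $\Iset^+_n$, so the scalars $\cvec_{(k-i) \bmod n}$ occurring in this sum are precisely $\cvec_0, \dots, \cvec_{n-1}$, each exactly once; hence they are i.i.d.\ $\mathcal{N}(0,\alpha^2)$ and, together with $\bvec_i$, independent of $\dvec_i$.

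For the symmetry statement I would note that the joint law of $(\dvec_i, \bvec_i, \cvec)$ is invariant under the map $(\dvec_i, \bvec_i) \mapsto (-\dvec_i, -\bvec_i)$, which follows from the mutual independence of $\dvec_i, \bvec_i, \cvec$ together with the individual symmetry of $\dvec_i$ and of $\bvec_i$. Since $\yvec_i$ is a fixed measurable function of $(\dvec_i, \bvec_i, \cvec)$ (and, in the second part, of $(\dvec_i, \bvec_i, \cvec, \xvec)$ with $\xvec$ independent of the rest), this invariance gives $\yvec_i \stackrel{d}{=} -\yvec_i$, i.e.\ the \pdf of $\yvec_i$ is symmetric. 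For the moments in the fixed-$\uvec$ case, $\Ebb[\yvec_i] = 0$ because $\Ebb[\dvec_i] = \Ebb[\bvec_i] = 0$ and $\dvec_i$ is independent of $\cvec$. Expanding $\yvec_i^2$, using $\dvec_i^2 = 1$, $\Ebb[\cvec_h \cvec_{h'}] = \alpha^2 \delta_{h,h'}$, and the fact that the term linear in $\bvec_i$ has zero mean, one is left with $\Var(\yvec_i) = \alpha'^2 + \alpha^2 \sum_j \uvec_j^2$. For $i \neq i'$ the product $\yvec_i \yvec_{i'}$ expands into four terms: the term carrying $\dvec_i \dvec_{i'}$ has zero mean because $\Ebb[\dvec_i \dvec_{i'}] = 0$, the two mixed terms vanish in expectation because $\Ebb[\dvec_i] = \Ebb[\bvec_i] = 0$, and the $\bvec_i \bvec_{i'}$ term vanishes because $\Ebb[\bvec_i \bvec_{i'}] = 0$; hence $\Cov(\yvec_i, \yvec_{i'}) = 0$.

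For the last part I would condition on $\xvec$ and apply the fixed-$\uvec$ result with $\uvec_k = \rho(\xvec_k)$, then take the expectation over $\xvec$; this directly yields $\Cov(\yvec_i, \yvec_{i'}) = 0$ and $\Var(\yvec_i) = \alpha'^2 + \alpha^2 \sum_j \Ebb[\rho(\xvec_j)^2]$. It then remains to show $\Ebb[\rho(\xvec_j)^2] = \tfrac12 \Var(\xvec_j)$: since $\rho = \max(0,\cdot)$ satisfies $\max(0,t)^2 + \max(0,-t)^2 = t^2$ and $\xvec_j \stackrel{d}{=} -\xvec_j$, we obtain $\Ebb[\rho(\xvec_j)^2] = \tfrac12 \Ebb[\xvec_j^2]$, and $\Ebb[\xvec_j^2] = \Var(\xvec_j)$ since symmetry forces $\Ebb[\xvec_j] = 0$. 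I expect the only care required to be in the bookkeeping of the modular indices and in checking that the conditioning on $\xvec$ is legitimate, i.e.\ that $\xvec$ is independent of $(\cvec, \dvec, \bvec)$, which is the implicit standing assumption of the lemma.
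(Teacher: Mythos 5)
Your proof is correct and, at its core, follows the same route as the paper: expand $\yvec_i$ coordinatewise, use the independence of $\dvec$, $\cvec$, $\bvec$ together with the sign symmetry of $\dvec$ to get the symmetric law and zero means, then read off the second moments. One structural difference worth noting: you carry the diagonal sign $\dvec_i$ outside the sum, $\yvec_i = \dvec_i\sum_k \cvec_{(k-i)\bmod n}\uvec_k + \bvec_i$, which is what $\yvec = \Dmat\Cmat\uvec + \bvec$ literally gives, and then $\Cov(\yvec_i,\yvec_{i'}) = \Ebb[\dvec_i\dvec_{i'}]\Ebb[S_iS_{i'}] = 0$ follows immediately from the independence of $\dvec_i$ and $\dvec_{i'}$. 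The paper instead writes $\yvec_i = \sum_j \cvec_{j-i}\uvec_j\dvec_j + \bvec_i$ with a per-summand sign $\dvec_j$ (which is what the product $\Cmat\Dmat\uvec + \bvec$ would produce); there the cross terms with $j\neq j'$ vanish via $\Ebb[\dvec_j\dvec_{j'}]=0$, and for the surviving $j=j'$ terms one must additionally invoke the independence of $\cvec_{j-i}$ and $\cvec_{j-i'}$ when $i\neq i'$. Both bookkeepings yield the stated moments, and your version is actually more faithful to the lemma as written; you also make explicit the tower-property conditioning on $\xvec$ in the second part (which the paper does implicitly) and the ReLU identity $\rho(t)^2+\rho(-t)^2=t^2$ that underlies the factor $\tfrac12$.
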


\begin{proof}[\Cref{lemma:ch4-covariance}]
  By an abuse of notation, we write $\cvec_0 = \cvec_n, \cvec_{-1} = \cvec_{n-1}$ and so on.
  First, note that: $\yvec_i = \sum_{j=0}^{n-1} \cvec_{j-i} \uvec_j \dvec_j + \bvec_i$.
  Observe that the term $\cvec_{j-i}\uvec_j\dvec_j$ has symmetric \pdf because of $\dvec_j$.
  Thus, $\yvec_i$ has a symmetric \pdf.
  Now let us compute the covariance.
  \begin{align}
    \Cov(\yvec_i,\yvec_{i'}) &= \left( \sum_{j,j'=0}^{n-1} \Cov\left(\cvec_{j-i}\uvec_j\dvec_j,\cvec_{j'-i'}\uvec_{j'}\dvec_{j'}\right) \right) + \Cov(\bvec_i, \bvec_{i'}) \\
      &= \sum_{j,j'=0}^{n-1} \Ebb\left[\cvec_{j-i}\uvec_j\dvec_j\cvec_{j'-i'}\uvec_{j'}\dvec_{j'}\right] - \Ebb\left[\cvec_{j-i}\uvec_j\dvec_j\right] \Ebb\left[ \cvec_{j'-i'}\uvec_{j'}\dvec_{j'} \right]
  \end{align}
  Observe that $\Ebb\left[\cvec_{j-i}\uvec_j\dvec_j\right] = \Ebb\left[\cvec_{j-i}\uvec_j\right] \Ebb\left[\dvec_j\right] = 0$ because $\dvec_j$ is independent from $\cvec_{j-i}\uvec_j$.
  Also, observe that if $j\neq j'$ then $\Ebb\left[\dvec_j\dvec_{j'}\right]=0$ and
  \begin{equation}
    \Ebb\left[\cvec_{j-i} \uvec_j \dvec_j \cvec_{j'-i'} \uvec_{j'} \dvec_{j'} \right] = \Ebb\left[ \dvec_j \dvec_{j'} \right] \Ebb\left[ \cvec_{j-i} \uvec_j \cvec_{j'-i'} \uvec_{j'} \right] = 0 \enspace.
  \end{equation}
  Therefore, the only non null terms are those for which $j = j'$.
  We get:
  \begin{align}
    \Cov(\yvec_i,\yvec_{i'}) & =\sum_{j=0}^{n-1} \Ebb\left[\cvec_{j-i}\uvec_j\dvec_j\cvec_{j-i'}\uvec_j\dvec_j\right] \\
     & =\sum_{j=0}^{n-1} \Ebb\left[\cvec_{j-i}\cvec_{j-i'}\uvec_j^{2}\right]
 \end{align}
  Assume $\uvec$ is a fixed vector.
  Then, $\Var(\yvec_i) = \sum_{j=0}^{n-1} \uvec_j^{2} \alpha^2$ and $\Cov(\yvec_i,\yvec_{i'}) = 0$ for $i\neq i'$ because $\cvec_{j-i}$ is independent from $\cvec_{j-i'}$.
  Now assume that $\uvec_j = \rho(\xvec_j)$ where $\xvec_j$ is a random variables in $\Rbb^n$.
  Clearly, $\uvec_j^2$ is independent from $\cvec_{j-i}$ and $\cvec_{j-i'}$, thus, we have:
  \begin{align}
    \Cov(\yvec_i,\yvec_{i'}) &= \sum_{j=0}^{n-1} \Ebb \left[ \cvec_{j-i}\cvec_{j-i'}\right] \Ebb\left[ \uvec_j^2 \right] \enspace.
  \end{align}
  For $i \neq i'$, then $\cvec_{j-i}$ and $\cvec_{j-i'}$ are independent, we have $\Ebb\left[\cvec_{j-i}c_{j-i'}\right] = \Ebb\left[\cvec_{j-i}\right] \Ebb\left[\cvec_{j-i'}\right] = 0$ and $\Cov(\yvec_i,\yvec_{i'}) = 0$ if $i \neq i'$.
  Let us compute the variance.
  We get $\Var(\yvec_i) = \sum_{j=0}^{n-1} \Var(\cvec_{j-i}) \Ebb\left[\uvec_j^2\right]$.
  Because the \pdf of $\xvec_j$ is symmetric, $\Ebb\left[\xvec_j^2\right] = 2\Ebb\left[\uvec_j^2\right]$ and $\Ebb\left[\xvec_j\right] = 0$.
  Thus, 
  \begin{equation}
    \Var(\yvec_i) = \frac{1}{2} \sum_{j=0}^{n-1} \Var(\cvec_{j-i}) \Ebb\left[\xvec_j^2\right] = \frac{1}{2} \sum_{j=0}^{n-1} \Var(\cvec_{j-i}) \Var(\xvec_j)
  \end{equation}
  which concludes the proof.
\end{proof}

Now we can state our result on the initialization of diagonal-circulant neural networks.
The following proposition states that the covariance matrix at the output of any layer in a diagonal-circulant neural network is constant.
Moreover, note that the result of this proposition is independent of the depth of the network.

\begin{proposition}[Initialization of Diagonal-Circulant Neural Networks] \label{proposition:ch4-initialization_dcnn}
  Let $\nn_\Pi$ be a diagonal-circulant neural network of depth $\depth$ initialized according to our procedure, with $\alpha'=0$.
  Assume that all layers $1$ to $\depth-1$ have \relu activation functions, and that the last layer has the identity activation function.
  Then, for any $\xvec \in \Rbb^n$, the covariance matrix of $\nn_\Pi(\xvec)$ is $\frac{2}{n} \Imat_n \norm{\xvec}_2^2$.
\end{proposition}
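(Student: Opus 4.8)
The plan is to proceed layer by layer, using \Cref{lemma:ch4-covariance} as the induction step. The key observation is that \Cref{lemma:ch4-covariance} tells us exactly how the covariance structure propagates through one diagonal-circulant layer: if the input to a layer has symmetric and uncorrelated coordinates, then the output also does, and the variance transforms in a controlled way. So I would set up an induction on the layer index, tracking the claim that after $i$ layers the activation vector has a diagonal covariance matrix of the form $v_i \Imat_n$ for some scalar $v_i$, and that each coordinate has a symmetric p.d.f.

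First I would handle the base case. For the first layer, the input $\xvec \in \Rbb^n$ is a fixed (deterministic) vector, so the first bullet of \Cref{lemma:ch4-covariance} applies directly with $\uvec = \xvec$: with $\alpha' = 0$ we get $\Cov(\yvec_i, \yvec_{i'}) = 0$ for $i \neq i'$ and $\Var(\yvec_i) = \sum_j \xvec_j^2 \alpha^2 = \alpha^2 \norm{\xvec}_2^2$, and the p.d.f.\ of each $\yvec_i$ is symmetric. Since $\alpha^2 = 2/n$, the covariance matrix of the pre-activation of layer $1$ is $\frac{2}{n} \norm{\xvec}_2^2 \Imat_n$. Then I would observe that applying the complex/real \relu coordinatewise preserves the property of having a symmetric p.d.f.\ per coordinate (this is immediate since $\rho$ is fixed and deterministic), though it does change the variance — this is why the second bullet of the lemma is stated in terms of $\Var(\xvec_i)$ rather than $\Ebb[\uvec_i^2]$.

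For the induction step, suppose after $i-1$ layers the activation vector $\xvec^{(i-1)}$ has coordinates with symmetric p.d.f.\ and covariance $v_{i-1} \Imat_n$. The input to layer $i$ is $\uvec = \rho(\xvec^{(i-1)})$ (for $i \geq 2$, since layers $1$ through $\depth-1$ have \relu activation). Applying the second bullet of \Cref{lemma:ch4-covariance} with these $\xvec_j = \xvec_j^{(i-1)}$ and $\alpha' = 0$ gives $\Cov(\yvec^{(i)}_i, \yvec^{(i)}_{i'}) = 0$ for $i \neq i'$ and $\Var(\yvec^{(i)}_i) = \tfrac{1}{2} \sum_j \Var(\xvec^{(i-1)}_j)\, \alpha^2 = \tfrac{1}{2} \cdot n v_{i-1} \cdot \tfrac{2}{n} = v_{i-1}$. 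So the pre-activation variance is \emph{preserved} across every hidden-to-hidden transition: $v_i = v_{i-1}$ for $2 \le i \le \depth$. Combined with the base case $v_1 = \tfrac{2}{n}\norm{\xvec}_2^2$, we conclude $v_\depth = \tfrac{2}{n}\norm{\xvec}_2^2$, and since the last layer has the identity activation, its output equals this pre-activation, giving covariance matrix $\frac{2}{n} \Imat_n \norm{\xvec}_2^2$ as claimed.

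\textbf{Main obstacle.} The routine calculations are all absorbed into \Cref{lemma:ch4-covariance}, so the only real subtlety is bookkeeping: making sure the hypotheses of each bullet of the lemma are met at each step — in particular, that the coordinates of $\xvec^{(i-1)}$ genuinely have symmetric p.d.f.s (needed to invoke the identity $\Ebb[\xvec_j^2] = 2\Ebb[\rho(\xvec_j)^2]$ inside the lemma's proof) and that the diagonal/circulant/bias draws are independent across layers. The symmetry propagation is the crux: it holds because each layer multiplies by a fresh diagonal sign matrix $\Dmat^{(i)}$, which symmetrizes the output regardless of the input distribution, and \relu applied to a symmetric variable does not destroy symmetry of the \emph{next} layer's output (even though $\rho(\xvec_j)$ itself is not symmetric) — exactly the point the lemma is built to handle. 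I would state this symmetry-preservation explicitly as part of the induction hypothesis so the chain of lemma applications is clearly justified.
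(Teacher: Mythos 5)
Your proof is correct and follows essentially the same route as the paper's: both apply \Cref{lemma:ch4-covariance} once with the first (deterministic-input) bullet for layer $1$ and then recursively with the second bullet for each subsequent layer, concluding that the per-coordinate variance is preserved. The only difference is level of detail — you spell out the arithmetic ($v_i = \tfrac{1}{2}\,n\,v_{i-1}\,\alpha^2 = v_{i-1}$) and make explicit that the symmetry of each coordinate's density must be carried along as part of the induction hypothesis, whereas the paper simply writes ``by a recursive application of \Cref{lemma:ch4-covariance}.''
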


\begin{proof}[\Cref{proposition:ch4-initialization_dcnn}]
  Let $\nn_\Pi \triangleq \layer_{\Dmat^{(\depth)}, \Cmat^{(\depth)}} \circ \ldots \circ \layer_{\Dmat^{(1)}, \Cmat^{(1)}}$ be a $\depth$ layer diagonal-circulant neural network.
  All matrices are initialized as described in the statement of the proposition.
  Let $\yvec = \Dmat^{(1)} \Cmat^{(1)} \xvec$.
  \Cref{lemma:ch4-covariance} shows that $\Cov(\yvec_{i}, \yvec_{i'}) = 0$ for $i \neq i'$ and $\Var(\yvec_{i}) = \frac{2}{n}\norm{x}_{2}^{2}$.
  For any $j \le \depth$, define 
  \begin{equation}
    \zvec^{(j)} = \layer_{\Dmat^{(j)}, \Cmat^{(j)}} \circ \ldots \circ \layer_{\Dmat^{(1)}, \Cmat^{(1)}}(\xvec) \enspace.
  \end{equation}
  By a recursive application of \Cref{lemma:ch4-covariance}, we get
  \begin{equation}
    \Cov(\zvec_i^{(j)}, \zvec_{i'}^{(j)}) = 0 \quad \text{and} \quad  \Var(\zvec_i^{(j)}) = \frac{2}{n} \norm{\xvec}_2^2
  \end{equation}
   which concludes the proof.
\end{proof}

The effect of \Cref{proposition:ch4-initialization_dcnn} is that the signal and the gradient will not vanish during the training, facilitating the convergence.
The fact that the result of \Cref{proposition:ch4-initialization_dcnn} is independent of the depth of the network allows us to train very deep diagonal-circulant neural networks.

%%%%%%%%%%%%%%%%%%%%%%%%%%%%%%%%%%%%%%%%%%%%%%%%%%%%%%%%%%%%%%%%%%%%%%%%%%%%%%%%
\subsection{Analysis of the Use of Nonlinearities}
\label{subsection:ch4-analysis_on_the_use_of_nonlinearities}
%%%%%%%%%%%%%%%%%%%%%%%%%%%%%%%%%%%%%%%%%%%%%%%%%%%%%%%%%%%%%%%%%%%%%%%%%%%%%%%%

\begin{figure}[htb]
  \centering
  \includegraphics[width=\scalefigure\textwidth]{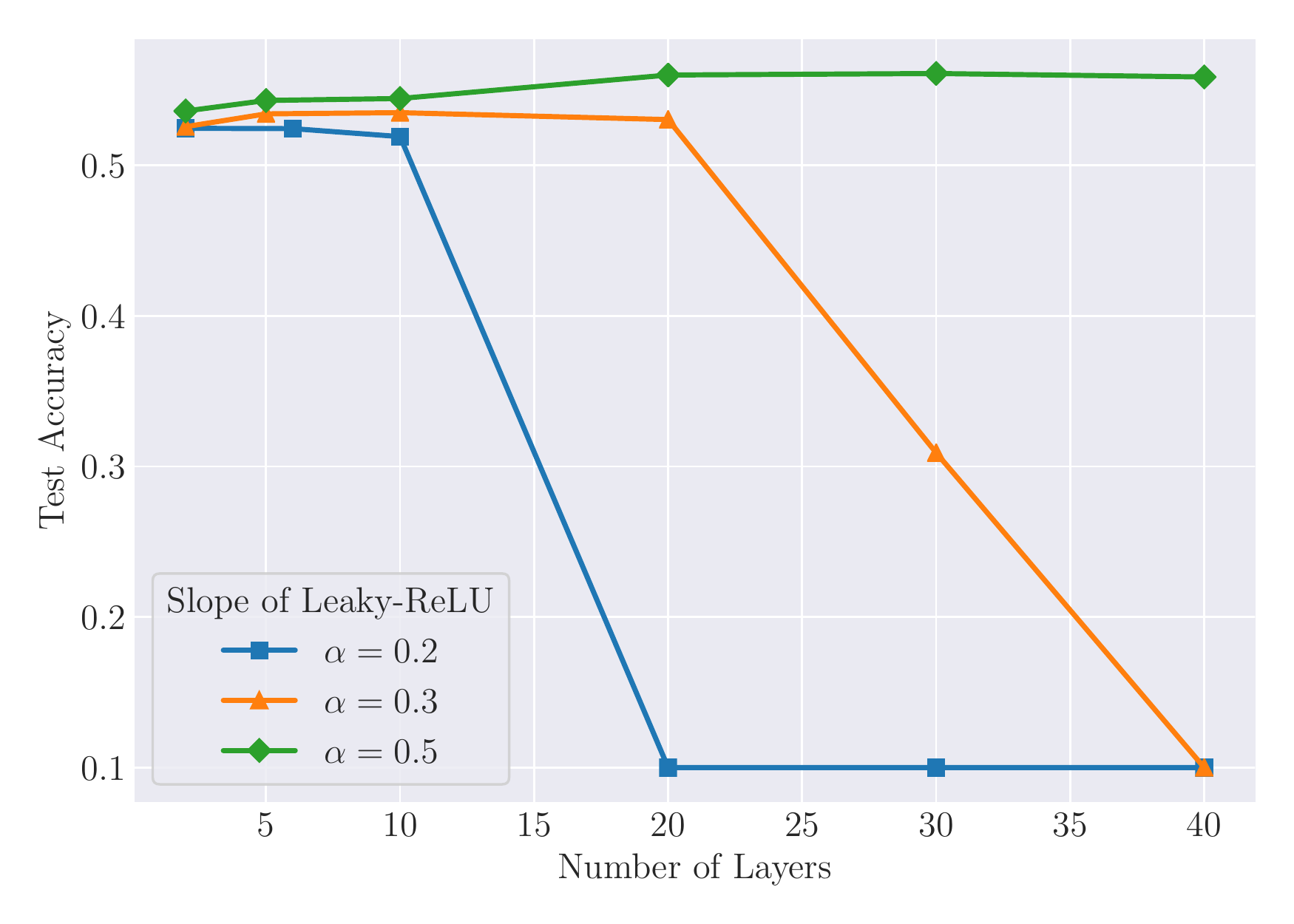}
  \caption{Impact of increasing the slope of a Leaky-ReLU in DCNNs.}
  \label{figure:ch4-cifar10_leaky_relu}
\end{figure}

We empirically found that the ReLU activations had an impact on the training of deep diagonal-circulant neural networks on CIFAR10 dataset.
Indeed, the deeper the network, the more nonlinear it is, which makes convergence difficult.
In an experiment, we replace the ReLU activations with Leaky-ReLU activations (\cf \Cref{subsection:ch2-preliminaries_on_neural_networks}) and vary the slope of the Leaky-ReLU (a higher slope means an activation function that is closer to a linear function).
The results of this experiment are presented in~\Cref{figure:ch4-cifar10_leaky_relu}.
In this experiment, we try different slopes for the Leaky-ReLU activation and train diagonal-circulant neural networks with different depth.
We can observe that a higher slope (making the network more linear) facilitates convergence, allowing us to train deeper networks.
This is an interesting result, since  we can use this technique to adjust the number of parameters in the network (increasing depth), without facing training difficulties.
We hence rely on this setting in the experimental section.

\section{Experiments}
\label{section:ch4-experiments}
%%%%%%%%%%%%%%%%%%%%%%%%%%%%%%%%%%%%%%%%%%%%%%%%%%%%%%%%%%%%%%%%%%%%%%%%%%%%%%%

This experimental section aims at answering the following questions:
\begin{enumerate}
    \itshape
    \item[Q1] How do DCNNs compare to other approaches such as ACDC, LDR or other structured approaches?
    \item[Q2] How do DCNNs compare to other compression based techniques?
    \item[Q3] How do DCNNs perform in the context of large-scale real-world machine learning applications?  
\end{enumerate}

%%%%%%%%%%%%%%%%%%%%%%%%%%%%%%%%%%%%%%%%%%%%%%%%%%%%%%%%%%%%%%%%%%%%%%%%%%%%%%%
\subsection{Comparison with Other Structured Approaches (Q1)}
\label{subsection:ch4-comparison_with_other_structured_approches}
%%%%%%%%%%%%%%%%%%%%%%%%%%%%%%%%%%%%%%%%%%%%%%%%%%%%%%%%%%%%%%%%%%%%%%%%%%%%%%%

% \begin{figure}
%    \centering
%    \begin{subfigure}[b]{0.75\textwidth}
%        \centering
%        \includegraphics[width=\textwidth]{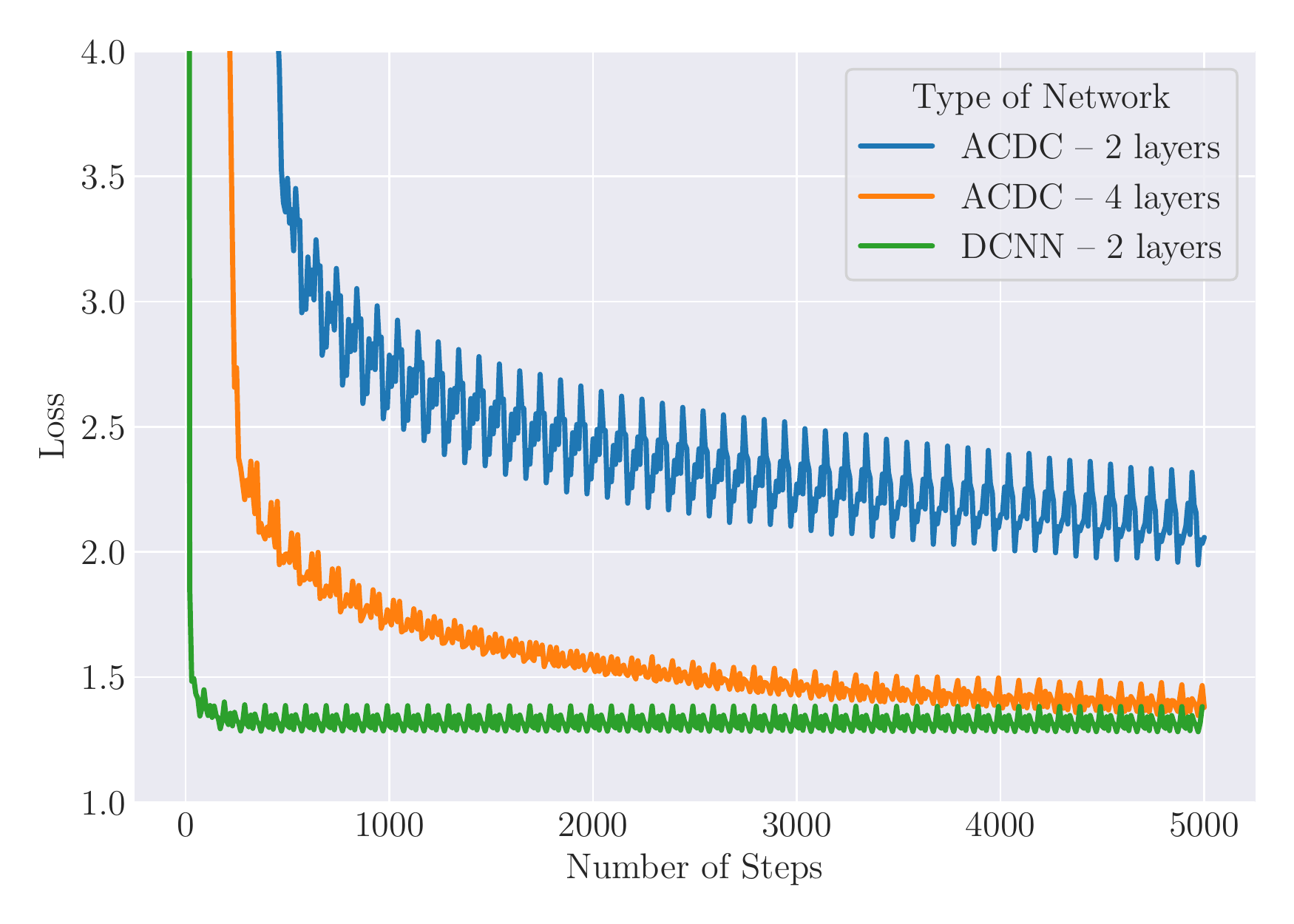}
%        \caption{Evolution of the training loss on a regression task with synthetic data.}
%        \label{figure:ch4-acdc_regression}
%    \end{subfigure}
%    \\[2cm]
%    \begin{subfigure}[b]{0.75\textwidth}
%        \centering
%        \includegraphics[width=\textwidth]{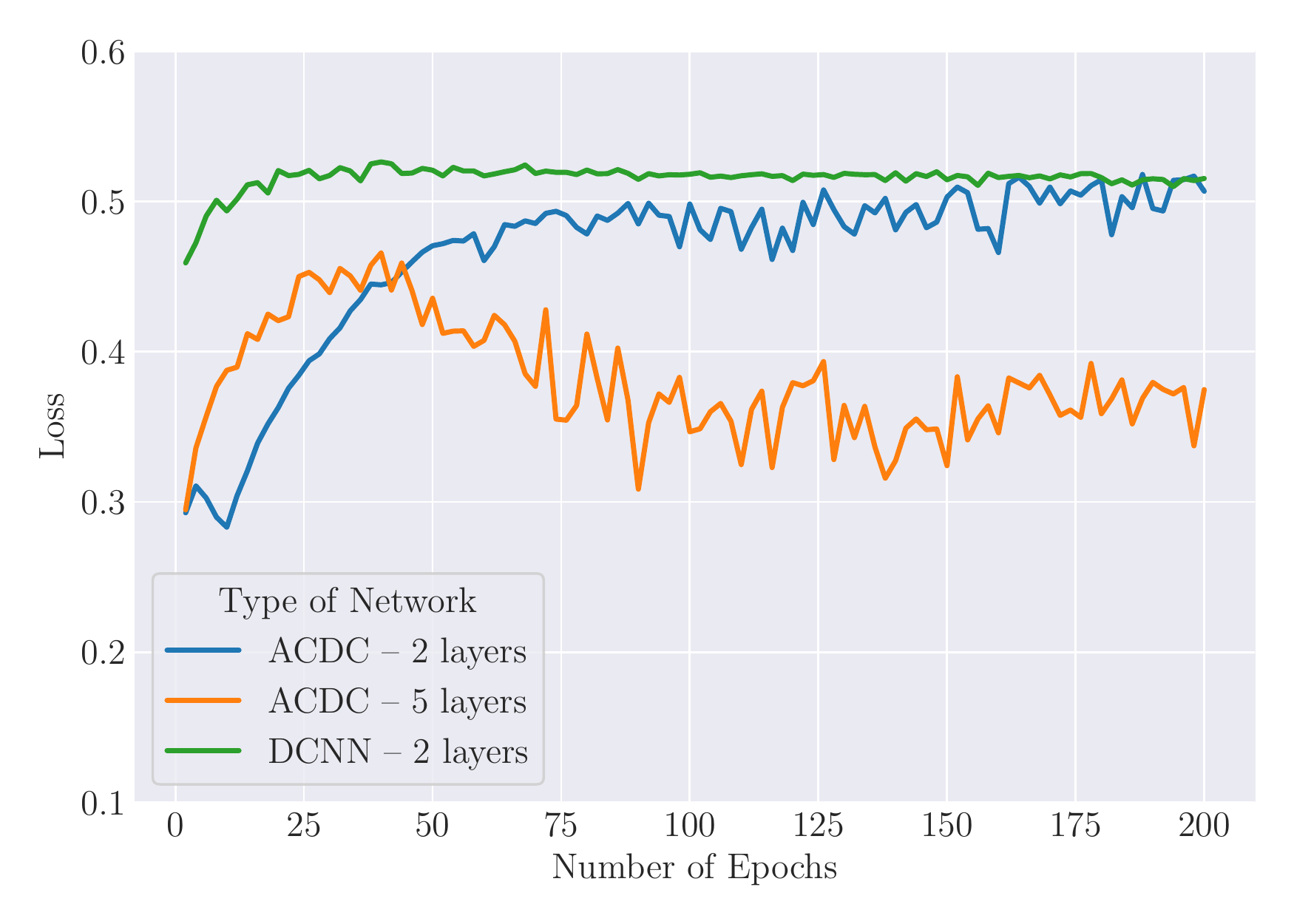}
%        \caption{Test accuracy on the CIFAR-10 dataset.~\\ \phantom{.}}
%        \label{figure:ch4-acdc_cifar10}
%    \end{subfigure}
%    \caption{Comparison of DCNNs and ACDC networks on two different tasks.}
% \end{figure}

\begin{figure}[ht]
   \centering
   \includegraphics[width=\scalefigure\textwidth]{}
   \caption{Comparison of the training loss of DCNNs and ACDC networks on a regression task with synthetic data.}
   \label{figure:ch4-acdc_regression}
\end{figure}

\begin{figure}[ht]
   \centering
   \includegraphics[width=\scalefigure\textwidth]{}
   \caption{Comparison of the training loss of DCNNs and ACDC networks on a CIFAR-10 dataset.}
   \label{figure:ch4-acdc_cifar10}
\end{figure}

%%%%%%%%%%%%%%%%%%%%%%%%%%%%%%%%%%%%%%%%%%%%%%%%%%%%%%%%%%%%%%%%%%%%%%%%%%%%%%%
\subsubsection{Comparison with ACDC}
%%%%%%%%%%%%%%%%%%%%%%%%%%%%%%%%%%%%%%%%%%%%%%%%%%%%%%%%%%%%%%%%%%%%%%%%%%%%%%%

In \Cref{chapter:ch3-related_work}, we have discussed the differences between the ACDC framework and our approach from a theoretical perspective.
In this section, we conduct experiments to compare the performance of DCNNs with neural networks based on ACDC layers. 
We first reproduce the experimental setting from \citet{moczulski2016acdc}, and compare both approaches using only linear networks (\ie, networks without any nonlinear activation).
The synthetic dataset has been created in order to reproduce the experiment on the regression linear problem proposed by~\citet{moczulski2016acdc}.
We draw $\Xmat$ and $\Wmat$ from a uniform distribution between [-1, +1] and $\epsilon$ from a normal distribution with mean 0 and variance $0.01$.
The relationship between $\Xmat$ and $\Ymat$ is defined by $\Ymat = \Xmat\Wmat + \epsilon$. 
The results are presented in \Cref{figure:ch4-acdc_regression}.
In this simple setting, while both architectures demonstrate good performance, we can observe that DCNNs offer a better convergence rate.
In \Cref{figure:ch4-acdc_cifar10}, we compare neural networks with ReLU activations on CIFAR-10. 

We found that networks which are based only on ACDC layers are difficult to train and offer poor accuracy on CIFAR-10 (we have tried different initialization schemes including the one from the original paper, and the one we introduce in this chapter).
\citet{moczulski2016acdc} managed to train a large VGG network  however these networks are generally highly redundant and the contribution of the structured layer is difficult to quantify. 
We also observe that adding a single dense layer improves the convergence rate of ACDC in the linear case, which explains the good results of \citet{moczulski2016acdc}.
However, it is difficult to characterize the true contribution of the ACDC layers when the network has a large number of expressive layers.
In contrast, deep DCNNs can be trained and offer good performance without additional dense layers (these results are in line with our experiments on the \yt dataset).
% We can conclude that DCNNs are able to model complex relations at a low cost. 

% \begin{figure}
%    \centering
%    \input{figures/main/ch4-diagonal_circulant/cifar10_type.tex}
%    \caption{Network size vs. Accuracy compared on Dense networks, DCNNs (our approach), DTNNs (our approach), neural networks based on Toeplitz matrices and neural networks based on Low Rank-based matrices. DCNNs outperforms alternatives structured approaches.}
%    \label{figure:ch4-cifar10_type}
% \end{figure}

\begin{figure}
   \centering
   \includegraphics[width=\scalefigure\textwidth]{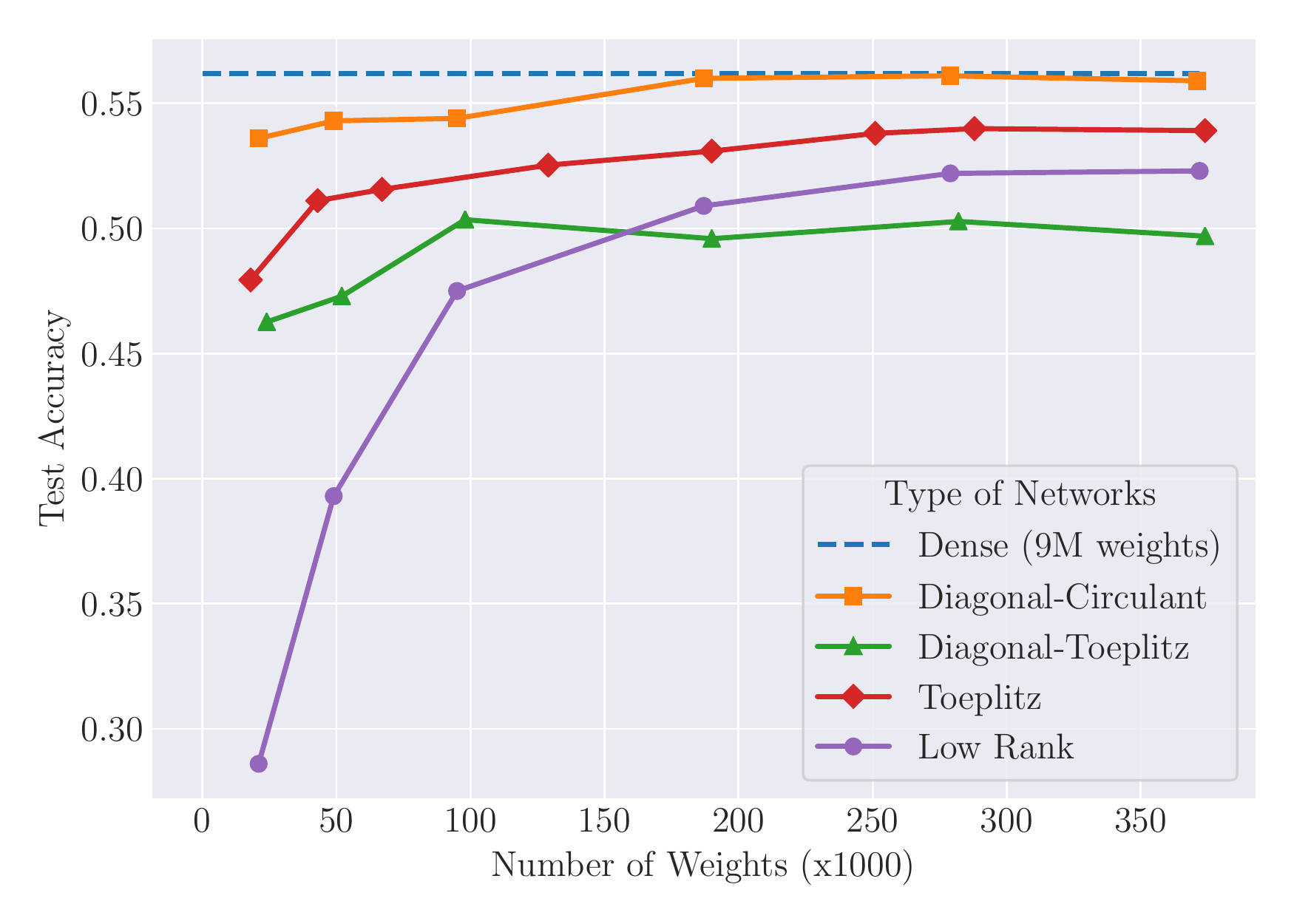}
   \caption{Network size \vs Accuracy compared on Dense networks, DCNNs, DTNNs, neural networks based on Toeplitz matrices and neural networks based on Low Rank-based matrices.} 
   \label{figure:ch4-cifar10_type}
\end{figure}

\begin{figure}
   \centering
   \includegraphics[width=\scalefigure\textwidth]{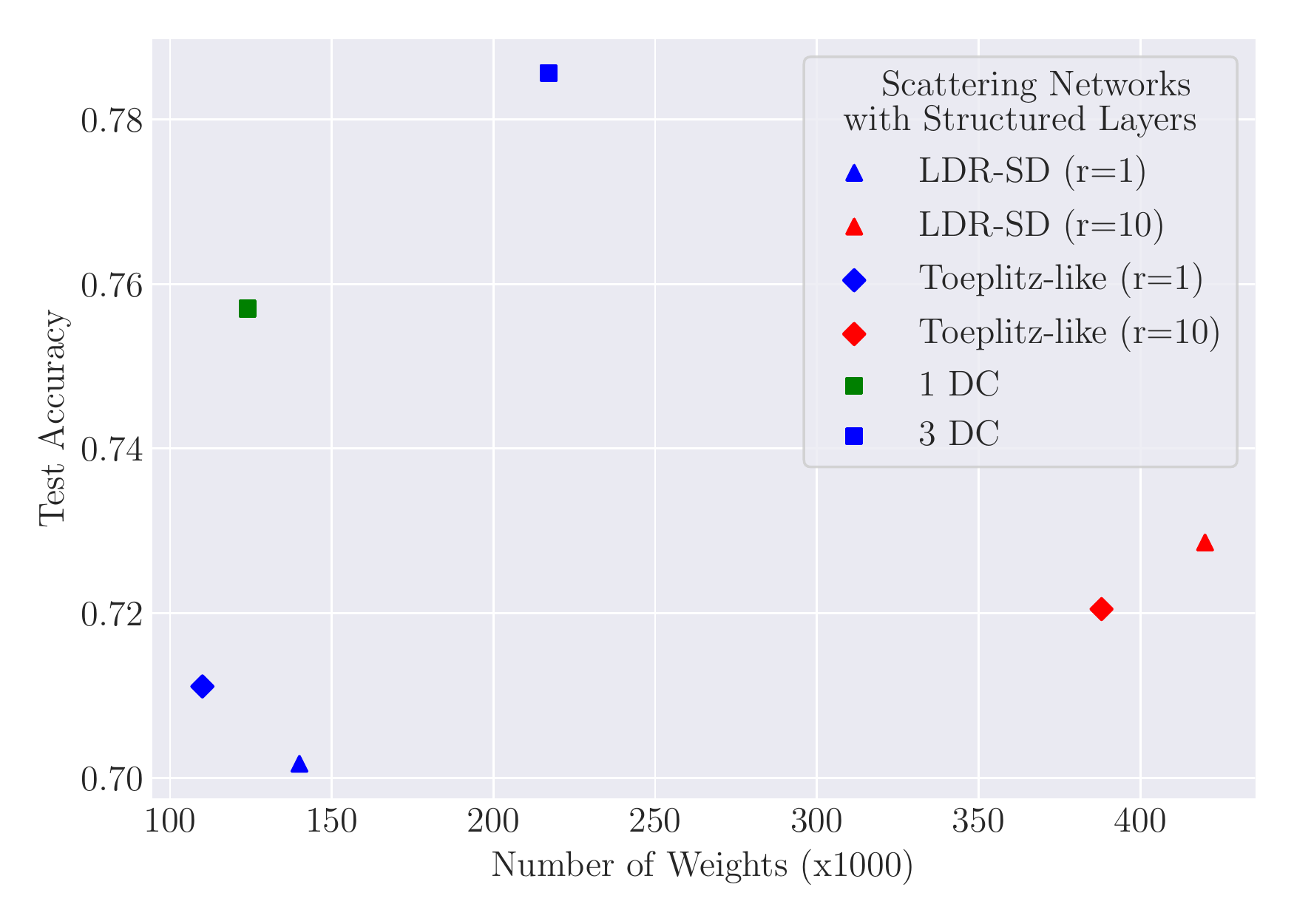}
   \caption{Accuracy of different structured architecture given the number of trainable parameters.}
   \label{figure:ch4-cifar10_with_channels_xp}
\end{figure}

%%%%%%%%%%%%%%%%%%%%%%%%%%%%%%%%%%%%%%%%%%%%%%%%%%%%%%%%%%%%%%%%%%%%%%%%%%%%%%%
\subsubsection{Comparison with Dense Networks, Toeplitz Networks and Low Rank Networks}
%%%%%%%%%%%%%%%%%%%%%%%%%%%%%%%%%%%%%%%%%%%%%%%%%%%%%%%%%%%%%%%%%%%%%%%%%%%%%%%

We now compare DCNNs with other state-of-the-art structured networks by measuring the accuracy on a flattened version of the CIFAR-10 dataset.
Our baseline is a dense feed-forward network with a fixed number of weights (9 million weights).
We compare with DCNNs and with DTNNs (see below), Toeplitz networks, and Low-Rank networks~\cite{yu2017compressing}.
We first consider Toeplitz networks which are stacked Toeplitz matrices interleaved with ReLU activations since Toeplitz matrices are closely related to circulant matrices.
However, Toeplitz networks have a different structure than DCNNs (they do not include diagonal matrices), therefore, we also experiment using DTNNs, a variant of DCNNs where all the circulant matrices have been replaced by Toeplitz matrices.
Finally we conduct experiments using networks based on low-rank matrices as they are also closely related to our work.
For each approach, we report the accuracy of several networks with a varying depth ranging from 1 to 40 (DCNNs, Toeplitz networks) and from 1 to 30 (from DTNNs).
For low-rank networks, we used a fixed depth network and increased the rank of each matrix from 7 to 40.
We also tried to increase the depth of low rank matrices, but we found that deep low-rank networks are difficult to train so we do not report the results here.
We compare all the networks based on the number of weights from 21K (0.2\% of the dense network) to 370K weights (4\% of the dense network) and we report the results in \Cref{figure:ch4-cifar10_type}. 
First we can see that the size of the networks correlates positively with their accuracy which demonstrated successful training in all cases.
We can also see that the DCNNs achieves the maximum accuracy of 56\% with 20 layers ($\sim$ 200K weights) which is as good as the dense networks with only 2\% of the number of weights.
Other approaches also offer good trade-offs but they are not able to reach the accuracy of a dense network.

\begin{table}[htb]
  \centering
  \begin{tabular}{lcc}
    \toprule
    \textbf{Architectures} & \textbf{\#Parameters} & \textbf{Accuracy}  \\
    \midrule
    \textit{Dense} & \textit{9.4M}	 & \textit{0.562} \\
    \textbf{\textit{DCNN $(5\ layers)$}} & \textbf{49K}	& \textbf{0.543} \\
    \textbf{\textit{DCNN $(2\ layers)$}} & \textbf{21K} & \textbf{0.536} \\
    LDR--TD	$(r = 2)$	         & 64K	& 0.511 \\
    LDR--TD	$(r = 3)$	         & 70K	& 0.473 \\
    Toeplitz-like $(r=2)$	         & 46K	& 0.483 \\
    Toeplitz-like $(r =3)$	         & 52K  & 0.496 \\
    \bottomrule
    \end{tabular}
    \caption{Comparison of LDR and Diagonal-Circulant neural networks on a flattened version of CIFAR-10.} 
    \label{table:ch4-xp_ldr}
\end{table}

%%%%%%%%%%%%%%%%%%%%%%%%%%%%%%%%%%%%%%%%%%%%%%%%%%%%%%%%%%%%%%%%%%%%%%%%%%%%%%%
\subsubsection{Comparison with LDR networks}
%%%%%%%%%%%%%%%%%%%%%%%%%%%%%%%%%%%%%%%%%%%%%%%%%%%%%%%%%%%%%%%%%%%%%%%%%%%%%%%

We now compare DCNNs with the LDR framework using the network configuration proposed by~\citet{thomas2018learning}: a single LDR structured layer followed by a dense layer.
In the LDR framework, we can change the size of a network by adjusting the rank of the residual matrix, effectively capturing matrices with a structure that is close to a known structure but not exactly (in the LDR framework, Toeplitz matrices can be encoded with a residual matrix with rank=2, so a matrix that can be encoded with a residual of rank=3 can be seen as Toeplitz-like.).
The results are presented in \Cref{table:ch4-xp_ldr} and demonstrate that DCNNs outperform all LDR networks both in terms of size and accuracy.

%%%%%%%%%%%%%%%%%%%%%%%%%%%%%%%%%%%%%%%%%%%%%%%%%%%%%%%%%%%%%%%%%%%%%%%%%%%%%%%
\subsection{Comparison with Other Compression Based Approaches (Q2)}
\label{subsection:ch4-comparison_with_other_compression_based_approaches}
%%%%%%%%%%%%%%%%%%%%%%%%%%%%%%%%%%%%%%%%%%%%%%%%%%%%%%%%%%%%%%%%%%%%%%%%%%%%%%%

We provide a comparison with other compression based approaches such as HashNet \cite{chen2015compressing}, Dark Knowledge \cite{hinton2015distilling}. 
\Cref{table:ch4-mnist} shows the test error of DCNN against other known compression techniques on the MNIST datasets.
We can observe that DCNN outperforms HashNet \cite{chen2015compressing} and Dark Knowledge \cite{hinton2015distilling} with fewer number of parameters.

%%%%%%%%%%%%%%%%%%%%%%%%%%%%%%%%%%%%%%%%%%%%%%%%%%%%%%%%%%%%%%%%%%%%%%%%%%%%%%%
\subsection{Large-scale Video Classification on the \yt Dataset (Q3)}
\label{subsection:ch4-large_scale_video_classification}
%%%%%%%%%%%%%%%%%%%%%%%%%%%%%%%%%%%%%%%%%%%%%%%%%%%%%%%%%%%%%%%%%%%%%%%%%%%%%%%

\begin{table}[ht]
  \centering
    \begin{tabular}{lcrc}
    \toprule
    \multicolumn{1}{c}{\textbf{Architecture}} & \multicolumn{1}{c}{\textbf{\#Params}} & \textbf{Error (\%)} \\
    \midrule
    LeNet \cite{lecun1998gradient}             & 4.2M         & 0.61          \\
    HashNet \cite{chen2015compressing}         & 46K          & 2.79          \\
    Dark Knowledge \cite{hinton2015distilling} & 46K          & 6.32          \\
    \textbf{DCNN}                              & \textbf{25K} & \textbf{1.74} \\
    \bottomrule
    \end{tabular}%
    \caption{Comparison with compression based approaches.}
  \label{table:ch4-mnist}%
\end{table}%

To understand the performance of deep DCNNs on large-scale applications, we conducted experiments on the \yt video classification with 3.8 training examples introduced by~\citet{abu2016youtube}.
This section provides a summary of the results obtained on the \yt dataset, the full experimental analysis is reported in Appendix~\ref{appendix:ap2-diagonal_circulant_neural_networks_for_video_classification}.
Notice that we favor this experiment over ImageNet applications because modern image classification architectures involve a large number of convolution layers, and compressing convolution layers is out of our scope. 
Also, as mentioned earlier, testing the performance of DCNN architectures mixed with a large number of expressive layers makes little sense.
The \yt includes two datasets describing 8 million labeled videos.
Both datasets contain audio and video features for each video.
In the first dataset (\emph{aggregated}) all audio and video features have been aggregated every 300 frames.
The second dataset (\emph{full}) contains the descriptors for all the frames.
To compare the models we use the GAP score (Global Average Precision) proposed by~\citet{abu2016youtube}.
On the simpler \emph{aggregated} dataset we compared off-the-shelf DCNNs with a dense baseline with 5.7 million weights.
On the full dataset, we designed three new compact architectures based on the state-of-the-art architecture introduced by~\citet{abu2016youtube}. 

\paragraph{Experiments on the aggregated dataset with DCNNs}
We compared DCNNs with a dense baseline with 5.7 million weights.
The goal of this experiment is to discover a good trade-off between depth and model accuracy.
To compare the models we use the GAP score (Global Average Precision) following the experimental protocol proposed by~\citet{abu2016youtube}, to compare our experiments. 
\Cref{table:youtube_agg_xp} shows the results of our experiments on the \emph{aggregated} \yt dataset in terms of the number of weights and GAP score.
These results suggest that we can compress the baseline at the cost of a little decrease of GAP score.
% The 32 layers DCNN is 46 times smaller than the original model in terms of number of parameters while having a close performance. 

\begin{table}
  \centering
  \begin{tabular}{lccc}
    \toprule
    \textbf{Architecture} & \textbf{\#Weights} & \textbf{GAP@20} \\
    \midrule
    Baseline    & 5.7M  & 0.773          \\
    4 DC        & 25K   & 0.599          \\
    32 DC       & 122K  & 0.685          \\
    4 DC + 1 FC & 4.46M & \textbf{0.747} \\
    \bottomrule
  \end{tabular}
  \caption{GAP score on the \yt dataset with DCNNs.}
  \label{table:youtube_agg_xp}
\end{table}

\begin{table}
  \centering
  \begin{tabular}{lccc}
  \toprule
  \textbf{Architecture} & \textbf{\#Weights} & \textbf{GAP@20} \\
  \midrule
  \textit{original} & \textit{45M} & \textit{0.846} \\
  DBoF with DC   & 36M (\textit{80}) & 0.838 \\
  FC with DC    & 41M (\textit{91}) & \textbf{0.845} \\
  MoE with DC   & 12M (\textit{\textbf{26}}) & 0.805 \\
  \bottomrule
  \end{tabular}
  \caption{GAP score on the \yt dataset with different layers represented with diagonal-circulant decomposition.}
  \label{table:youtube_full_xp}
\end{table}

\paragraph{Experiments with DCNNs Deep Bag-of-Frames Architecture:}
The Deep Bag-of-Frames architecture can be decomposed into three blocks of layers, as illustrated in \Cref{figure:ch4-archi_youtube}.
The first block of layers, composed of the Deep Bag-of-Frames embedding (DBoF), is meant to model an embedding of these frames in order to make a simple representation of each video.
A second block of fully connected layers (FC) reduces the dimensionality of the output of the embedding and merges the resulting output with a concatenation operation.
Finally, the classification block uses a combination of Mixtures-of-Experts (MoE)~\cite{jordan1993hierarchical,abu2016youtube} and Context Gating~\cite{miech2017learnable} to calculate the final class probabilities.
\Cref{table:youtube_full_xp} shows the results in terms of the number of weights, size of the model (MB) and GAP on the full dataset, replacing the DBoF block reduces the size of the network without impacting the accuracy.
We obtain the best compression ratio by replacing the MoE block with DCNNs (26\%) of the size of the original dataset with a GAP score of 0.805 (95\% of the score obtained with the original architecture).
We conclude that DCNN are both theoretically sound and of practical interest in real, large-scale applications.

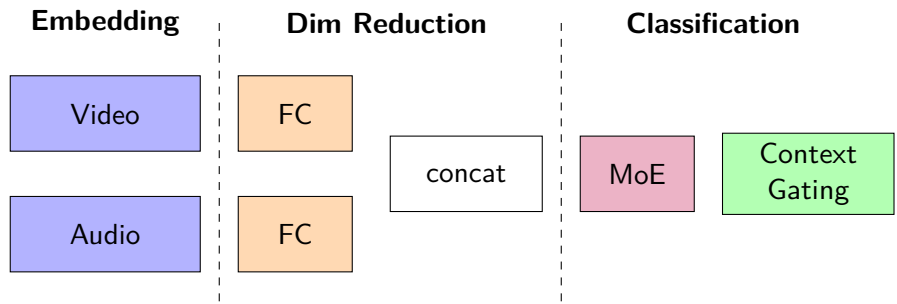
\begin{figure}[htb]
  \centering
  \tikzset{%
  >={Latex[width=2mm,length=2mm]},
  % Specifications for style of nodes:
            base/.style = {rectangle, draw=black, text centered, font=\sffamily},
             box/.style = {base, rounded corners, text depth=3cm, minimum height=4cm, minimum width=3cm},
     transparent/.style = {rectangle, draw=black},
       circulant/.style = {base, fill=yellow!30},
       embedding/.style = {base, fill=blue!30, minimum width=2.5cm, minimum height=1cm},
           other/.style = {base, fill=white!30,  minimum width=2cm, minimum height=1cm},
              fc/.style = {base, fill=orange!30, minimum width=1.5cm, minimum height=1cm},
          gating/.style = {base, fill=green!30, minimum width=2cm, text width=2cm, minimum height=1cm},
             moe/.style = {base, fill=purple!30, minimum width=1.5cm, minimum height=1cm},
}

\begin{tikzpicture}[every node/.style={fill=white, font=\sffamily}, align=center]

  \draw (0.0, +2.)  node [other, draw=none] {\textbf{Embedding}};
  \draw (+3.7, +2.)  node [other, draw=none] {\textbf{Dim Reduction}};
  \draw (+8.0, +2.)  node [other, draw=none] {\textbf{Classification}};

  \draw (0, +0.8)  node [embedding] {Video};
  \draw (0, -0.8)  node [embedding] {Audio};

  \draw (+2.5, +0.8)  node (fc) [fc] {FC};
  \draw (+2.5, -0.8)  node (fc) [fc] {FC};

  \draw (+4.75, 0)  node (fc) [other] {concat};
  \draw (+7.0, 0)  node (moe) [moe] {MoE};
  \draw (+9.25, 0)  node (gating2) [gating] {Context Gating};
 
  \draw (+1.5, +2) [dashed] -- (+1.5, -1.7);
  \draw (+6, +2) [dashed] -- (+6, -1.7);
  
\end{tikzpicture}
  \caption{Diagram of the state-of-the-art neural network architecture, initially proposed by~\citet{abu2016youtube} and later improved by~\citet{miech2017learnable}.}
  \label{figure:ch4-archi_youtube}
\end{figure}

\paragraph{Architectures \& Hyper-Parameters:} 
For the first set of our experiments (\emph{experiments on CIFAR-10}), we train all networks for 200 epochs, a batch size of 200, Leaky ReLU activation with a different slope.
We minimize the Cross Entropy Loss with Adam optimizer and use a piecewise constant learning rate of $5 \times 10^{-5}$, $2.5\times10^{-5}$, $5\times10^{-6}$ and $1\times10^{-6}$ after respectively 40K, 60K and 80K steps.
For the \yt dataset experiments, we built a neural network based on state-of-the-art architecture initially proposed by~\citet{abu2016youtube} and later improved by~\citet{miech2017learnable}.
Remark that no convolution layer is involved in this application since the input vectors are embeddings of video frames processed using state-of-the-art convolutional neural networks trained on ImageNet.
We trained our models with the CrossEntropy loss and used Adam optimizer with a 0.0002 learning rate and a 0.8 exponential decay every 4 million examples.
All fully connected layers are composed of 512 units.
DBoF, NetVLAD and NetFV are respectively 8192, 64 and 64 of cluster size for video frames and 4096, 32, 32 for audio frames.
We used 4 mixtures for the MoE Layer.
We used all the available 300 frames for the DBoF embedding.
In order to stabilize and accelerate the training, we used batch normalization before each nonlinear activation and gradient clipping.

%%%%%%%%%%%%%%%%%%%%%%%%%%%%%%%%%%%%%%%%%%%%%%%%%%%%%%%%%%%%%%%%%%%%%%%%%%%%%%%
\subsection{Exploiting Image Features}
%%%%%%%%%%%%%%%%%%%%%%%%%%%%%%%%%%%%%%%%%%%%%%%%%%%%%%%%%%%%%%%%%%%%%%%%%%%%%%%

\begin{table}[htb]
  \centering
  \begin{tabular}{lcc}
    \toprule
    \textbf{Architectures} & \textbf{\#Parameters} & \textbf{Accuracy}  \\
    \midrule
    \textbf{DC $(1\ layers)$} & \textbf{124K} & \textbf{0.757} \\
    \textbf{DC $(3\ layers)$} & \textbf{217K} & \textbf{0.785} \\
    % \textbf{Ensemble x5 DC $(3\ layers)$} &  \textbf{1.08M} & \textbf{0.811} \\
    LDR-SD $(r=1)$ & 140K & 0.701 \\
    LDR-SD $(r=10)$ & 420K & 0.728 \\
    Toeplitz-like $(r=1)$ & 110K & 0.711 \\
    Toeplitz-like $(r=10)$ & 388K & 0.720 \\
    \bottomrule
    \end{tabular}
    \caption{Accuracy of scattering models followed by LDR or DC layer on CIFAR-10 dataset.}
    \label{table:ch4-xp_ldr_scattering}
\end{table}

Dense layers and DCNNs are not designed to capture task-specific features such as the translation invariance inherently useful in image classification.
We can further improve the accuracy of such general-purpose architectures on image classification without dramatically increasing the number of trained parameters by stacking them on top of fixed (ie non-trained) transforms such as the scattering transform \cite{mallat2010recursive}.
In this section we compare the accuracy of various structured networks, enhanced with the scattering transform, on an image classification task, and run comparative experiments on CIFAR-10. 

Our test architecture consists of 2 depth scattering on the RGB images followed by a batch norm and LDR or DC layer.
To vary the number of parameters of Scattering+LDR architecture, we increase the rank of the matrix (stacking several LDR matrices quickly exhausted the memory).
The \Cref{figure:ch4-cifar10_with_channels_xp} and \Cref{table:ch4-xp_ldr_scattering} show the accuracy of these architectures given the number of trainable parameters.

First, we can see that the DCNN architecture very much benefits from the scattering transform and is able to reach a competitive accuracy over 78\%.
We can also see that scattering followed by a DC layer systematically outperforms scattering + LDR or scattering + Toeplitz-like with fewer parameters.

%%%%%%%%%%%%%%%%%%%%%%%%%%%%%%%%%%%%%%%%%%%%%%%%%%%%%%%%%%%%%%%%%%%%%%%%%%%%%%%%
\section{Concluding Remarks}
\label{section:ch4-concluding_remarks}
%%%%%%%%%%%%%%%%%%%%%%%%%%%%%%%%%%%%%%%%%%%%%%%%%%%%%%%%%%%%%%%%%%%%%%%%%%%%%%%%

This chapter dealt with the training and understanding of diagonal-circulant neural networks.
To the best of our knowledge, training such networks with a large number of layers had not been done before.
We also endowed this kind of architecture with theoretical guarantees, hence enriching and refining previous theoretical work from the literature.
More importantly, we showed that DCNNs outperform their competing structured alternatives, including the very recent general approach based on LDR networks.
Our results suggest that stacking diagonal-circulant layers with nonlinearities improves the convergence rate and the final accuracy of the network.
Formally proving these statements constitutes the future directions of this work.
We would like to generalize the good results of DCNNs to convolutional neural networks.
We also believe that circulant matrices deserve particular attention in deep learning because of their strong ties with convolutions: a circulant matrix operator is equivalent to the convolution operator with circular padding.
This fact makes any contribution to the area of circulant matrices particularly relevant to the field of deep learning with impacts beyond the problem of designing compact models.

  %%%%%%%%%%%%%%%%%%%%%%%%%%%%%%%%%%%%%%%%%%%%%%%%%%%%%%%%%%%%%%%%%%%%%%%%%%%%%%%
\chapter{Bound on the Lipschitz Constant of Convolution Layers}
\label{chapter:ch5-lipschitz_bound}
%%%%%%%%%%%%%%%%%%%%%%%%%%%%%%%%%%%%%%%%%%%%%%%%%%%%%%%%%%%%%%%%%%%%%%%%%%%%%%%
\localtoc

%%%%%%%%%%%%%%%%%%%%%%%%%%%%%%%%%%%%%%%%%%%%%%%%%%%%%%%%%%%%%%%%%%%%%%%%%%%%%%%
\section{Introduction}
\label{section:ch5-introduction}

In this chapter we introduce a new upper bound on the largest singular value of convolution layers that is both tight and easy to compute.
Instead of using the power method to iteratively approximate this value, we study the properties of \emph{doubly-block Toeplitz matrices} and its links with Fourier analysis. 
% on Toeplitz matrix theory and its links with Fourier analysis.
% In order to devise a bound on the Lipschitz constant of a convolution layer as used by the Deep Learning community, we study the properties of \emph{doubly-block Toeplitz matrices}. 
Our work is based on the result of~\citet{gray2006toeplitz} which states that an upper bound on the singular value of Toeplitz matrices can be computed from the inverse Fourier transform of the characteristic sequence of these matrices.
We first extend this result to doubly-block Toeplitz matrices (\ie, block Toeplitz matrices where each block is Toeplitz) and then to convolutional operators, which can be represented as stacked sequences of doubly-block Toeplitz matrices.
From our analysis immediately follows an algorithm for bounding the Lipschitz constant of a convolution layer, and by extension the Lipschitz constant of the whole network.
We theoretically study the approximation of this algorithm and show experimentally that it is more efficient and accurate than competing approaches.

Finally, we illustrate our approach on adversarial robustness.
Recent work has shown that empirical methods such as adversarial training (AT) offer poor generalization~\cite{schmidt2018adversarially}, and can be improved by applying Lipschitz regularization~\cite{farnia2018generalizable}.
To illustrate the benefit of our new method, we train a large Wide ResNet with Lipschitz regularization and show that it offers a significant improvement over adversarial training alone, and over other methods for Lipschitz regularization.
In summary, we make the three following contributions:
\begin{enumerate}
  \item We devise an upper bound on the singular values of the operator matrix of convolution layers by leveraging Toeplitz matrix theory and its links with Fourier analysis.
  \item We propose an efficient algorithm to compute this upper bound which enables its use in the context of Convolutional Neural Networks.
  \item We use our method to regularize the Lipschitz constant of neural networks for adversarial robustness and show that it offers a significant improvement over AT alone.
\end{enumerate}

%%%%%%%%%%%%%%%%%%%%%%%%%%%%%%%%%%%%%%%%%%%%%%%%%%%%%%%%%%%%%%%%%%%%%%%%%%%%%%%
\section{Results on the Spectrum of Matrices from the Toeplitz Family}
\label{section:ch5-results_on_the_spectrum_of_matrices_from_the_toeplitz_family}
%%%%%%%%%%%%%%%%%%%%%%%%%%%%%%%%%%%%%%%%%%%%%%%%%%%%%%%%%%%%%%%%%%%%%%%%%%%%%%%

%%%%%%%%%%%%%%%%%%%%%%%%%%%%%%%%%%%%%%%%%%%%%%%%%%%%%%%%%%%%%%%%%%%%%%%%%%%%%%%
\subsection{Upper-Bounds on the Largest Singular Value of Toeplitz and Block Toeplitz Matrices}
\label{subsection:ch5-upper_bounds_on_the_largest_singular_value_of_toeplitz_and_block_toeplitz_matrices}
%%%%%%%%%%%%%%%%%%%%%%%%%%%%%%%%%%%%%%%%%%%%%%%%%%%%%%%%%%%%%%%%%%%%%%%%%%%%%%%

Doubly-block Toeplitz matrices inherit the properties of Toeplitz and block Toeplitz matrices.
Recall that for Toeplitz and block Toeplitz matrices, there exist no closed-form expression to compute their eigenvalues.
However, we can represent Toeplitz and block Toeplitz matrices with a 2$\pi$-periodic function which can describe very precisely the spectrum of the matrices. 
% Let $\{a_h\}_{h \in \Iset_n}$ be the characteristic sequence of a Toeplitz matrix $\Amat = \leftmat a_{j-i} \rightmat_{i,j \in \Iset_n} \in \Rbb^{n \times n}$ and let $\{\Bmatsf^{(h)}\}_{h \in \Iset_n}$ be the characteristic sequence of $m \times m$ blocks of a block Toeplitz matrix $\Bmat = \leftmat \Bmatsf^{(j-i)} \rightmat_{i,j \in \Iset_n} \in \Rbb^{nm \times nm}$ with $\Iset_n = \left\{ -n+1, \cdots, n-1 \right\}$.
Let $\{a_h\}_{h \in \Iset_n}$ be the characteristic sequence of a Toeplitz matrix $\Amat \in \Rbb^{n \times n}$ and let $\{\Bmatsf^{(h)}\}_{h \in \Iset_n}$ be the characteristic sequence of $m \times m$ blocks of a block Toeplitz matrix $\Bmat \in \Rbb^{nm \times nm}$ such that $\Amat = (a_{j-i})_{i,j\in \Iset_n}$ and $\Bmat = (\Bmatsf^{(j-i)})_{i,j \in \Iset_n}$ with $\Iset_n = \left\{ -n+1, \cdots, n-1 \right\}$.
% The matrices are therefore denoted $\Amat = \leftmat a_{j-i} \rightmat_{i,j \in \Iset_n}$ and $\Bmat = ( \Bmatsf^{(j-i)} )_{i,j \in \Iset_n}$ 
Building on the results presented in the Background (\Cref{chapter:ch2-background}), we can define two trigonometric polynomials $f: \Rbb \rightarrow \Cbb$ and $F: \Rbb \rightarrow \Cbb^{m \times m}$ as follows:
\begin{equation}
  f(\omega) \triangleq \sum_{h \in \Iset_n} a_h e^{\ci h \omega} \quad \quad F(\omega) \triangleq \sum_{h \in \Iset_n} \Bmatsf^{(h)} e^{\ci h \omega} \enspace.
\end{equation}
$f(\omega)$  and $F(\omega)$ are the \emph{inverse Fourier transforms} of the sequences $\{a_h\}_{h \in \Iset_n}$ and $\{\Bmatsf^{(h)}\}_{h \in \Iset_n}$ respectively.
From there, inspired by the work done by~\citet{grenander1958toeplitz}, we recall from \Cref{subsection:ch2-a_fourier_representation_of_toeplitz_matrices} the operator $\Tmat$ mapping integrable functions to Toeplitz matrices:
\begin{equation} \label{equation:expression_toeplitz_matrix}
  \Tmat(f) \triangleq\leftmat\frac{1}{2\pi} \int_{0}^{2\pi} e^{-\ci(i-j)\omega} f(\omega) \,\diff \omega \rightmat_{i,j \in \Iset^+_n} \enspace,
\end{equation}
with this operator, we have $\Tmat(f) = \Amat$ and $\Tmat(F) = \Bmat$.

Now, we can state two known theorems which upper bound the largest singular value of Toeplitz and block Toeplitz matrices with respect to their generating functions.
\begin{theorem}[Bound on the singular values of Toeplitz matrices] \label{theorem:teoplitz_sup_singular}
  Let $f: \Rbb \rightarrow \Cbb$, be a continuous and $2\pi$-periodic function, then $\Tmat(f) \in \Rbb^{n \times n}$ is a Toeplitz matrix generated by the function $f$
  We can bound the largest singular value of the Toeplitz matrix $\Tmat(f)$ as follows:
  \begin{align}
    \sigma_1 \left( \Tmat(f) \right) \leq \sup_{\omega \in [0, 2\pi]} |f(\omega)|.
  \end{align}
  \removespace
\end{theorem}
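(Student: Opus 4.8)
The plan is to identify $\sigma_1(\Tmat(f))$ with an operator norm and then dominate it by the supremum of $|f|$ using the integral representation in~\Cref{equation:expression_toeplitz_matrix}. First I would recall that $\sigma_1(\Tmat(f)) = \norm{\Tmat(f)}_2 = \sup_{\norm{\uvec}_2 = \norm{\vvec}_2 = 1} |\vvec^* \Tmat(f) \uvec|$, where the supremum runs over unit vectors $\uvec, \vvec \in \Cbb^n$. Plugging in the entrywise formula $\left(\Tmat(f)\right)_{i,j} = \frac{1}{2\pi}\int_0^{2\pi} e^{-\ci(i-j)\omega} f(\omega)\,\diff\omega$ and expanding the quadratic form, I would interchange the finite sum over $i,j$ with the integral to obtain
\begin{equation}
  \vvec^* \Tmat(f) \uvec = \frac{1}{2\pi} \int_0^{2\pi} \overline{\left( \sum_{i \in \Iset^+_n} \vvec_i e^{\ci i \omega} \right)} \left( \sum_{j \in \Iset^+_n} \uvec_j e^{\ci j \omega} \right) f(\omega) \,\diff\omega \enspace.
\end{equation}

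Next I would introduce the trigonometric polynomials $p_\uvec(\omega) = \sum_{j} \uvec_j e^{\ci j \omega}$ and $p_\vvec(\omega) = \sum_{i} \vvec_i e^{\ci i \omega}$, so that the expression becomes $\frac{1}{2\pi}\int_0^{2\pi} \overline{p_\vvec(\omega)}\, p_\uvec(\omega)\, f(\omega)\,\diff\omega$. Taking absolute values and bounding $|f(\omega)| \leq \sup_{\omega}|f(\omega)|$ pointwise, then applying Cauchy–Schwarz to $\overline{p_\vvec}$ and $p_\uvec$ in $L^2([0,2\pi], \frac{1}{2\pi}\diff\omega)$, gives
\begin{equation}
  |\vvec^* \Tmat(f) \uvec| \leq \left( \sup_{\omega \in [0,2\pi]} |f(\omega)| \right) \left( \frac{1}{2\pi}\int_0^{2\pi} |p_\uvec(\omega)|^2 \diff\omega \right)^{1/2} \left( \frac{1}{2\pi}\int_0^{2\pi} |p_\vvec(\omega)|^2 \diff\omega \right)^{1/2} \enspace.
\end{equation}
Then I would invoke the orthonormality of $\{e^{\ci j \omega}\}$ with respect to the normalized measure, which yields $\frac{1}{2\pi}\int_0^{2\pi}|p_\uvec(\omega)|^2\diff\omega = \norm{\uvec}_2^2 = 1$ and similarly for $\vvec$. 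Taking the supremum over all unit $\uvec, \vvec$ finishes the argument.

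The main obstacle is mostly bookkeeping rather than conceptual: one must be careful that the continuity and $2\pi$-periodicity of $f$ guarantee $\sup_\omega |f(\omega)|$ is finite (indeed attained) so that the pointwise bound is legitimate, and that the interchange of the finite sum with the integral is trivially justified since everything is finite-dimensional. A secondary point worth stating cleanly is the Parseval identity $\frac{1}{2\pi}\int_0^{2\pi} \big| \sum_j \uvec_j e^{\ci j \omega}\big|^2 \diff\omega = \sum_j |\uvec_j|^2$, which follows from expanding the square and using $\frac{1}{2\pi}\int_0^{2\pi} e^{\ci(j-j')\omega}\diff\omega = \mathds{1}_{[j = j']}$. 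I would present these as short lemmas or inline remarks. Note also this theorem is a known result (attributed in the text to the tradition of Grenander–Szeg\H{o} and Gray), so the proof is expected to be a clean classical argument; the novelty in the chapter lies in the extension to doubly-block Toeplitz matrices and convolution operators, not in this base case.
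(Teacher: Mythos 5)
Your proof is correct and is exactly the classical quadratic-form argument: write $\sigma_1(\Tmat(f)) = \sup_{\norm{\uvec}_2=\norm{\vvec}_2=1}|\vvec^*\Tmat(f)\uvec|$, convert the double sum into $\frac{1}{2\pi}\int_0^{2\pi}\overline{p_\vvec(\omega)}\,p_\uvec(\omega)\,f(\omega)\,\diff\omega$, bound $|f|$ pointwise by its supremum, and finish with Cauchy--Schwarz plus Parseval. The paper itself supplies no written proof for this statement --- it only remarks that the result is a direct application of Lemma~4.1 of~\citet{gray2006toeplitz} --- and the argument underlying that lemma is precisely the one you reconstructed, so your proposal matches the intended (cited) approach.
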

\noindent
\Cref{theorem:teoplitz_sup_singular} is a direct application of Lemma 4.1 in~\citet{gray2006toeplitz} for real Toeplitz matrices. 

\begin{theorem}[Bound on the singular values of Block Toeplitz matrices ~\citet{gutierrez2012block}] \label{theorem:block_teoplitz_sup_singular}
  Let $F: \Rbb \rightarrow \Cbb^{m \times m}$ be a continuous and $2 \pi$-periodic matrix-valued function, then, $\Tmat(F) \in \Rbb^{mn \times mn}$ is a block Toeplitz matrix generated by the function $F$.
  We can bound the largest singular value of the block Toeplitz matrix $\Tmat(F)$ as follows:
  \begin{align}
    \sigma_1 \left( \Tmat(F) \right) \leq \sup_{\omega \in [0, 2\pi]} \sigma_1 \left( F\left( \omega \right) \right) .
  \end{align}
  \removespace
\end{theorem}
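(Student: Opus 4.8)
\textbf{Proof strategy for Theorem~\ref{theorem:block_teoplitz_sup_singular}.}
The plan is to mimic the classical Toeplitz argument (Lemma~4.1 in~\citet{gray2006toeplitz}, restated here as \Cref{theorem:teoplitz_sup_singular}) but carried out in the block setting, using a Rayleigh-quotient characterization of $\sigma_1$ combined with the Fourier-series structure of $F$. Concretely, I would use the fact that $\sigma_1(\Tmat(F)) = \sup_{\xvec \neq 0} \norm{\Tmat(F) \xvec}_2 / \norm{\xvec}_2$, or equivalently, since $\Tmat(F)^* \Tmat(F)$ is Hermitian positive semidefinite, that $\sigma_1(\Tmat(F))^2 = \sup_{\norm{\xvec}_2 = 1} \xvec^* \Tmat(F)^* \Tmat(F) \xvec$. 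The key is to rewrite this quadratic form as an integral over $[0,2\pi]$ of a quadratic form involving $F(\omega)$ against a trigonometric-polynomial-valued vector built from $\xvec$.

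First I would set up notation: a unit vector $\xvec \in \Cbb^{mn}$ is split into $n$ blocks $\xvec^{(0)}, \dots, \xvec^{(n-1)} \in \Cbb^m$, and I associate to it the $\Cbb^m$-valued trigonometric polynomial $\pvec_{\xvec}(\omega) = \sum_{j \in \Iset^+_n} \xvec^{(j)} e^{\ci j \omega}$. Using the integral representation \Cref{equation:expression_toeplitz_matrix} of $\Tmat(F)$, the block entries are $(\Tmat(F))_{i,j} = \frac{1}{2\pi}\int_0^{2\pi} e^{-\ci(i-j)\omega} F(\omega)\, \diff\omega$, so expanding $\xvec^* \Tmat(F)^* \Tmat(F) \xvec$ (or, more directly, expressing $\Tmat(F)$ via the isometry that sends block vectors to their Fourier polynomials) should yield an expression of the form $\frac{1}{2\pi}\int_0^{2\pi} \pvec_{\xvec}(\omega)^* G(\omega) \pvec_{\xvec}(\omega)\, \diff\omega$ where $G(\omega)$ is controlled by $F(\omega)^* F(\omega)$. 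Then bound pointwise: $\pvec_{\xvec}(\omega)^* F(\omega)^* F(\omega) \pvec_{\xvec}(\omega) \leq \sigma_1(F(\omega))^2 \norm{\pvec_{\xvec}(\omega)}_2^2 \leq \left(\sup_{\omega}\sigma_1(F(\omega))\right)^2 \norm{\pvec_{\xvec}(\omega)}_2^2$, and finally use Parseval/orthogonality of $\{e^{\ci j \omega}\}$ to get $\frac{1}{2\pi}\int_0^{2\pi} \norm{\pvec_{\xvec}(\omega)}_2^2\, \diff\omega = \sum_j \norm{\xvec^{(j)}}_2^2 = \norm{\xvec}_2^2 = 1$. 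Taking the supremum over $\xvec$ gives the claimed bound.

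The main obstacle I anticipate is bookkeeping the block structure correctly when expanding the quadratic form — in the scalar case the cross terms collapse cleanly because of orthogonality of exponentials, but with matrix-valued $F$ one must be careful that $F(\omega)$ sits "between" the two copies of $\pvec_{\xvec}$ rather than getting tangled with the exponential factors, and that the finite truncation $j \in \Iset^+_n$ (as opposed to an infinite Fourier series) does not introduce boundary terms. One clean way around this is to observe that $\Tmat(F)$ acting on a block vector corresponds, under the (scaled) isometric embedding $\xvec \mapsto \pvec_{\xvec}$ into $L^2([0,2\pi], \Cbb^m)$, to the compression of the multiplication operator $M_F: \pvec \mapsto F(\cdot)\pvec(\cdot)$ to the subspace of polynomials of degree $< n$; then $\sigma_1(\Tmat(F)) \leq \norm{M_F}_{\mathrm{op}} = \sup_\omega \sigma_1(F(\omega))$ follows because compression never increases the operator norm. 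I would present the argument in this operator-theoretic form if the direct computation gets unwieldy, and otherwise carry out the explicit integral manipulation, citing~\citet{gutierrez2012block} for the precise block version.
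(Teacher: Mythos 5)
The paper does not prove this theorem — it states it with attribution to \citet{gutierrez2012block} and uses it as a black box in the proof of \Cref{theorem:doubly_block_teoplitz_sup_singular} — so there is no in-paper argument to compare against. That said, your proposal is mathematically sound, and the operator-theoretic formulation you converge on at the end (embed $\Cbb^{mn}$ isometrically into $L^2([0,2\pi],\Cbb^m)$ via $\xvec \mapsto \pvec_{\xvec}$, identify $\Tmat(F)$ as the compression of the multiplication operator $M_F$ to the degree-$<n$ polynomial subspace, and invoke $\norm{M_F}_{\mathrm{op}} = \sup_\omega \sigma_1(F(\omega))$ together with norm-contractivity of compressions) is the standard clean route. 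One small caution about your first sketch: expanding $\xvec^* \Tmat(F)^* \Tmat(F) \xvec$ directly does \emph{not} give an equality with $\frac{1}{2\pi}\int_0^{2\pi}\pvec_{\xvec}(\omega)^* F(\omega)^*F(\omega)\pvec_{\xvec}(\omega)\,\diff\omega$, because $(\Tmat(F)\xvec)^{(i)}$ is the $i$-th Fourier coefficient of $F(\cdot)\pvec_{\xvec}(\cdot)$ only for $i \in \Iset^+_n$, and the remaining Fourier modes of $F\pvec_{\xvec}$ are discarded. So there \emph{is} a truncation effect — it just points in the favorable direction, giving $\norm{\Tmat(F)\xvec}_2^2 \le \frac{1}{2\pi}\int_0^{2\pi}\norm{F(\omega)\pvec_{\xvec}(\omega)}_2^2\,\diff\omega$ by Bessel, which is precisely what the compression-contracts-norm viewpoint encodes. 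If you carry out the explicit computation rather than appealing to compressions, make that inequality (rather than an equality) the central step.
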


%%%%%%%%%%%%%%%%%%%%%%%%%%%%%%%%%%%%%%%%%%%%%%%%%%%%%%%%%%%%%%%%%%%%%%%%%%%%%%%%
\subsection{Upper-Bound on the Largest Singular Value of Doubly-Block Toeplitz Matrices}
\label{subsection:ch5-bound_on_the_singular_value_of_doubly-block_toeplitz_matrices}
%%%%%%%%%%%%%%%%%%%%%%%%%%%%%%%%%%%%%%%%%%%%%%%%%%%%%%%%%%%%%%%%%%%%%%%%%%%%%%%%

We extend the reasoning from Toeplitz and block Toeplitz matrices to doubly-block Toeplitz matrices (\ie block Toeplitz matrices where each block is also a Toeplitz matrix).
A doubly-block Toeplitz matrix can be generated by a function $f: \Rbb^2 \rightarrow \Cbb$ using the 2-dimensional inverse Fourier transform.
For this purpose, we define an operator $\Dmat$ which maps a function $f: \Rbb^2 \rightarrow \Cbb$ to a doubly-block Toeplitz matrix of size $nm \times nm$.
For the sake of clarity, the dependence of $\Dmat(f)$ on $m$ and $n$ is omitted.
Let $\Dmat(f) \triangleq \leftmat \Dmat_{i,j}(f)\rightmat_{i,j \in \Iset^+_n}$ where $\Dmat_{i,j}(f)$ is a $m \times m$ matrix defined as:
\begin{equation} \label{equation:doubly_block_toeplitz_operator}
  \Dmat_{i,j}(f) \triangleq \leftmat \frac{1}{4\pi^{2}} \int_{0}^{2\pi} \int_{0}^{2\pi} e^{- \ci \left((i-j)\omega_{1}+(k-l)\omega_{2}\right)}f(\omega_{1},\omega_{2}) \,\diff \omega_{1} \,\diff \omega_{2} \rightmat_{k,l \in \Iset^+_m} \enspace.
\end{equation}

We are now able to combine \Cref{theorem:teoplitz_sup_singular,theorem:block_teoplitz_sup_singular} to bound the largest singular value of doubly-block Toeplitz matrices with respect to their generating functions. 
Note that in the following, we only consider generating functions as trigonometric polynomials with real coefficients therefore the matrices generated by $\Dmat(f)$ are real.
% We can now combine \Cref{theorem:teoplitz_sup_singular,theorem:block_teoplitz_sup_singular} to bound the largest singular value of a doubly-block Toeplitz Matrix. 

\begin{maintheorem}[Bound on the largest singular value of a Doubly-Block Toeplitz Matrix] \label{theorem:doubly_block_teoplitz_sup_singular}
  Let $f: \Rbb^2 \rightarrow \Cbb$ be a multivariate trigonometric polynomial of the form:
  \begin{equation}\label{equation:muli_variate_poly_on_M}
    f(\omega_1, \omega_2) \triangleq \sum_{h_1 \in \Iset_n} \sum_{h_2 \in \Iset_m} d_{h_1, h_2} e^{\ci (h_1 \omega_1 + h_2 \omega_2)}.
  \end{equation}
  Then, $\Dmat(f) \in \Rbb^{nm \times nm}$ is a doubly-block Toeplitz matrix where $d_{h_{1},h_{2}}$ is the ${h_2}^\textrm{th}$ scalar of the ${h_1}^\textrm{th}$ block of the matrix.
  We can bound the largest singular value of the matrix $\Dmat(f)$ as follows:
  \begin{align}
    \sigma_{1} \left( \Dmat(f) \right) &\leq \sup_{\omega_1, \omega_2 \in [0, 2\pi]^2}|f(\omega_1,\omega_2)|
  \end{align}
  \removespace
\end{maintheorem}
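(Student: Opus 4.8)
The plan is to reduce the doubly-block Toeplitz case to the block Toeplitz case already established in \Cref{theorem:block_teoplitz_sup_singular}, by viewing $\Dmat(f)$ as a block Toeplitz matrix whose $m\times m$ blocks are themselves Toeplitz matrices, and whose matrix-valued generating function is a \emph{Toeplitz-matrix-valued} function of the first variable $\omega_1$.

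First I would unpack the definition in \Cref{equation:doubly_block_toeplitz_operator}. The block indexed by $(i,j)$ depends only on $i-j$, so $\Dmat(f)$ is genuinely block Toeplitz with blocks $\Bmatsf^{(i-j)}$, where $\Bmatsf^{(h_1)}$ is the $m\times m$ Toeplitz matrix with characteristic sequence $\{d_{h_1,h_2}\}_{h_2 \in \Iset_m}$. In the notation of \Cref{subsection:ch2-a_fourier_representation_of_toeplitz_matrices}, $\Bmatsf^{(h_1)} = \Tmat_m(g_{h_1})$ where $g_{h_1}(\omega_2) = \sum_{h_2 \in \Iset_m} d_{h_1,h_2} e^{\ci h_2 \omega_2}$. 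Hence the matrix-valued generating function of the block Toeplitz matrix $\Dmat(f)$ is
\begin{equation}
  F(\omega_1) = \sum_{h_1 \in \Iset_n} \Bmatsf^{(h_1)} e^{\ci h_1 \omega_1} = \sum_{h_1 \in \Iset_n} \Tmat_m(g_{h_1}) e^{\ci h_1 \omega_1} = \Tmat_m\left( f(\omega_1, \cdot) \right),
\end{equation}
the last equality by linearity of $\Tmat_m$ in its argument, since $\sum_{h_1} g_{h_1}(\omega_2) e^{\ci h_1 \omega_1} = f(\omega_1,\omega_2)$. I should check that $F$ is continuous and $2\pi$-periodic in $\omega_1$, which is immediate as $f$ is a trigonometric polynomial, so \Cref{theorem:block_teoplitz_sup_singular} applies and gives $\sigma_1(\Dmat(f)) \leq \sup_{\omega_1 \in [0,2\pi]} \sigma_1(F(\omega_1))$.

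Next I would bound $\sigma_1(F(\omega_1))$ for each fixed $\omega_1$. Since $F(\omega_1) = \Tmat_m(g_{\omega_1})$ with $g_{\omega_1}(\omega_2) = f(\omega_1,\omega_2)$ a continuous $2\pi$-periodic function of $\omega_2$, \Cref{theorem:teoplitz_sup_singular} yields $\sigma_1(F(\omega_1)) \leq \sup_{\omega_2 \in [0,2\pi]} |f(\omega_1,\omega_2)|$. Combining the two inequalities gives
\begin{equation}
  \sigma_1(\Dmat(f)) \leq \sup_{\omega_1 \in [0,2\pi]} \sup_{\omega_2 \in [0,2\pi]} |f(\omega_1,\omega_2)| = \sup_{\omega_1,\omega_2 \in [0,2\pi]^2} |f(\omega_1,\omega_2)|,
\end{equation}
which is the claim. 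The one genuine subtlety — the main obstacle — is making sure the generating-function formalism is consistent across the two ``nesting levels'': namely that the integral representation \Cref{equation:doubly_block_toeplitz_operator} for $\Dmat_{i,j}(f)$ really does coincide with $\Tmat_m$ applied to the $\omega_1$-Fourier coefficient of $f$, and that invoking \Cref{theorem:block_teoplitz_sup_singular} is legitimate when its blocks are not arbitrary but themselves Toeplitz (it is, since the theorem makes no structural assumption on the blocks). I would verify the Fourier bookkeeping by computing $\frac{1}{2\pi}\int_0^{2\pi} e^{-\ci(i-j)\omega_1} F(\omega_1)\,\diff\omega_1$ directly from the double-integral definition via Fubini and confirming it equals $\Dmat_{i,j}(f)$; this is the routine-but-load-bearing step. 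Everything else is a direct chaining of the two cited theorems.
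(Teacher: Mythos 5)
Your proposal is correct and follows essentially the same route as the paper's proof: both view $\Dmat(f)$ as a block Toeplitz matrix whose matrix-valued symbol $F(\omega_1)$ is itself a Toeplitz matrix generated by $f(\omega_1,\cdot)$, then chain \Cref{theorem:block_teoplitz_sup_singular} and \Cref{theorem:teoplitz_sup_singular} in that order. The "Fourier bookkeeping" check you flag is exactly the step the paper carries out explicitly when it expands $f(\omega_1,\omega_2) = \sum_{h_2}\bigl[F(\omega_1)\bigr]_{h_2} e^{\ci h_2\omega_2}$ and verifies it reduces to the double sum defining $f$.
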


% Let $\Dmat(f) \in \Rbb^{nm \times nm}$ be a doubly-block Toeplitz matrix generated by the function $f$, then:
% where the function $f: \Rbb^2 \rightarrow \Cbb$, is a multivariate trigonometric polynomial of the form:
%
% where $d_{h_{1},h_{2}}$ is the ${h_2}^\textrm{th}$ scalar of the ${h_1}^\textrm{th}$ block of the doubly-Toeplitz matrix $\Dmat(f)$.

\begin{proof}[\Cref{theorem:doubly_block_teoplitz_sup_singular}]
  By definition, a doubly-block Toeplitz matrix is a block matrix where each block is a Toeplitz matrix.
  Let $\Amat$ be a $mn \times mn$ doubly-block Toeplitz matrices determined by the sequence of blocks $\{\Amatsf^{(-n+1)}, \dots, \Amatsf^{(n-1)} \}$ of size $m \times m$ where the blocks $\Amatsf$ are Toeplitz matrices such that $\Amatsf^{(i)}$ is determined by the sequence of scalars $\{d_{i, -m+1}, \dots, d_{i, m-1} \}$. 
  Therefore the matrix $\Amat$ can be expressed with the operator $\Tmat$ with the matrix-valued generating function $F: \Rbb \rightarrow \Cbb^{n \times n}$ such that:
  \begin{equation}
    F(\omega_1) = \sum_{h_1 \in \Iset_n} \Amatsf^{(h_1)} e^{\ci h_1\omega_1}
  \end{equation}
  From \Cref{theorem:block_teoplitz_sup_singular} we have:
  \begin{equation} \label{equation:th_bound_block_toeplitz}
    \sigma_1\left(\Tmat(F) \right) \leq \sup_{\omega_1 \in [0,2\pi] } \sigma_{1}\left( F(\omega_1) \right)
  \end{equation}

  \noindent
  Note that because the function $F$ is a linear combination of the Toeplitz matrices $\Amatsf$ and that Toeplitz matrices are closed under addition and scalar product, $F(\omega_1)$ is also a Toeplitz matrix of size $m \times m$. 
  Therefore, we can define a function $f: \Rbb^2 \rightarrow \Cbb$ such that: 
  \begin{align}
    f(\omega_1,\omega_2) &= \sum_{h_2 \in \Iset_m} \left[ F(\omega_1) \right]_{h_2} e^{\ci h_2 \omega_2} \\
    f(\omega_1,\omega_2) &= \sum_{h_2 \in \Iset_m} \left[ \sum_{h_1 \in \Iset_n} \Amatsf^{(h_1)} e^{\ci h_1\omega_1} \right]_{h_2} e^{\ci h_2 \omega_2} \\
    f(\omega_1,\omega_2) &= \sum_{h_2 \in \Iset_m} \left[ \sum_{h_1 \in \Iset_n} \Amatsf^{(h_1)} \right]_{h_2} e^{\ci \left( h_1\omega_1 + h_2 \omega_2 \right)} \\
    f(\omega_1,\omega_2) &= \sum_{h_1 \in \Iset_n} \sum_{h_2 \in \Iset_m} d_{h_1, h_2} e^{\ci \left( h_1\omega_1 + h_2 \omega_2\right)} \enspace,
  \end{align}

  % The function $F$ is a linear combination of the Toeplitz matrices $\Amatsf$
  % Because 

  % We can then express a doubly-block Toeplitz matrix with the operator $\Tmat(F)$ where the matrix-valued generating function $F$ has Toeplitz coefficients.
  % Let us define a matrix-valued trigonometric polynomial $F: \Rbb \rightarrow \Cbb^{n \times n}$ of the form:

  % We can define a function $f:\Rbb^{2} \rightarrow \Cbb$ of the form 
  % \begin{equation}
  %   f(\omega_1,\omega_2) = \sum_{h_1 \in \Iset_n} \sum_{h_2 \in \Iset_m} d_{h_1, h_2} e^{\ci \left( h_1\omega_1 + h_2 \omega_2\right)} \enspace,
  % \end{equation}
  % and such that $f(\omega_1,\ \cdot\ )$ is the generating function of $F(\omega_1)$.

  \noindent
  From \Cref{theorem:teoplitz_sup_singular}, we can write:
  \begin{align}
      \sigma_{1}\left( F(\omega_1) \right) &\leq \sup_{\omega_2 \in [0,2\pi]} \left| f(\omega_1, \omega_2) \right| \\
      \Rightarrow \sup_{\omega_1 \in [0,2\pi]} \sigma_{1}\left( F(\omega_1) \right) &\leq  \sup_{\omega_1, \omega_2 \in [0,2\pi]^2} \left| f(\omega_1, \omega_2) \right| \\
      \Rightarrow \sigma_1\left(\Tmat(F) \right) &\leq \sup_{\omega_1, \omega_2 \in [0,2\pi]^2} \left| f(\omega_1, \omega_2) \right|
  \end{align}
  Because the function $f(\omega_1,\ \cdot\ )$ is the generating function of $F(\omega_1)$ it is easy to show that the function $f$ is also the generating function of the matrix $\Tmat(F)$. Therefore, $\Tmat(F) = \Dmat(f)$ which concludes the proof. 
\end{proof}

%%%%%%%%%%%%%%%%%%%%%%%%%%%%%%%%%%%%%%%%%%%%%%%%%%%%%%%%%%%%%%%%%%%%%%%%%%%%%%%
\section{Extending the Bound to Convolutional Layers}
\label{section:ch5-bound_on_the_singular_values_of_convolutional_layers}
%%%%%%%%%%%%%%%%%%%%%%%%%%%%%%%%%%%%%%%%%%%%%%%%%%%%%%%%%%%%%%%%%%%%%%%%%%%%%%%

From now on, without loss of generality, we will assume that $n=m$ to simplify notations.
A discrete convolution operation with a 2-dimensional kernel applied on a 2-dimensional signal (\eg, an image) is equivalent to a matrix multiplication with a doubly-block Toeplitz matrix~\cite{jain1989fundamentals}.
In practice, the input signal often has 3 or more dimensions called \emph{channels} (for example, RGB images have 3 channels, one for each color).
% We denote the channels of the input signal as $\cin$.
If we denote $\cin$, the number of channels of the input signal, then, the input signal is a tensor of size $\cin \times n \times n$.
Moreover, when we perform multiple convolutions on the same signal the output signal will have multiple channels denoted $\cout$. 
Therefore, the kernel is defined as a 4-dimensional tensor of size: $\cout \times \cin \times s \times s$. 
The operation performed by a 4-dimensional kernel on a 3-dimensional signal can be formulated as the concatenation (horizontally and vertically) of doubly-block Toeplitz matrices.
Hereafter, we bound the singular value of multiple vertically stacked doubly-block Toeplitz matrices which corresponds to the operation performed by a 3-dimensional kernel with $\cout = 1$ on a 3-dimensional signal.

% Furthermore, we perform multiple convolutions of the same signal which corresponds to the number of channels the output will have after the operation.
% We denote the channels of the output signal $\cout$.

\begin{theorem}[Bound on the largest singular value of stacked Doubly-block Toeplitz matrices] \label{theorem:bound_sv_stacked_dbt} 
  Consider doubly-block Toeplitz matrices $\Dmat(f_1), \dots, \Dmat(f_{\cin})$ where each $f_i: \Rbb^2 \rightarrow \Cbb$ is a multivariate polynomial of the same form as \Cref{equation:muli_variate_poly_on_M}.
  Construct a matrix $\Mmat$ with $\cin\times n^2$ rows and $n^2$ columns, as follows:
  \begin{equation}
    \Mmat \triangleq \leftmat \Dmat^\top(f_1), \dots, \Dmat^\top(f_{\cin}) \rightmat^\top .
  \end{equation}
  Then, we can bound the largest singular value of the matrix $\Mmat$ as follows:
  \begin{align} \label{equation:bound_asymptotic_equiv}
    \sigma_1 \left( \Mmat \right) &\leq \sup_{\omega_1, \omega_2 \in \left[0, 2\pi\right]^2} \sqrt{ \sum_{i=1}^{\cin} \left|f_i\right (\omega_1, \omega_2)|^2} \enspace.
  \end{align}
\end{theorem}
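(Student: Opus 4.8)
The plan is to reduce the bound on the stacked matrix $\Mmat$ to the single doubly-block Toeplitz bound of \Cref{theorem:doubly_block_teoplitz_sup_singular} by a clever choice of generating function. First I would observe that $\sigma_1(\Mmat)^2 = \sigma_1(\Mmat^\top \Mmat) = \lambda_1\left(\sum_{i=1}^{\cin} \Dmat^\top(f_i) \Dmat(f_i)\right)$, since $\Mmat^\top \Mmat = \sum_i \Dmat^\top(f_i)\Dmat(f_i)$ by block structure. So it suffices to bound the largest eigenvalue of this sum of products of doubly-block Toeplitz matrices.

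The key step is to identify this sum with a single doubly-block Toeplitz matrix generated by an explicit function, then apply \Cref{theorem:doubly_block_teoplitz_sup_singular}. I would note that $\Dmat(f_i)^\top = \Dmat(\bar{f_i})$ where $\bar{f_i}(\omega_1,\omega_2) = \sum d_{h_1,h_2}^{(i)} e^{-\ci(h_1\omega_1 + h_2\omega_2)} = \overline{f_i(\omega_1,\omega_2)}$ (using that the transpose of a (doubly-block) Toeplitz matrix reverses the index sequence, which corresponds to conjugating the generating trigonometric polynomial since the coefficients are real). The product $\Dmat(\bar{f_i})\Dmat(f_i)$ is not exactly doubly-block Toeplitz — products of Toeplitz matrices are only \emph{asymptotically} Toeplitz — but the classical Toeplitz theory (in the spirit of \citet{gray2006toeplitz}, and the asymptotic-equivalence results cited in \Cref{subsection:ch2-a_fourier_representation_of_toeplitz_matrices}) gives $\lambda_1\left(\sum_i \Dmat(\bar f_i)\Dmat(f_i)\right) \le \sup_{\omega_1,\omega_2} \sum_i |f_i(\omega_1,\omega_2)|^2$. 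Concretely I would either (a) invoke the operator-norm submultiplicativity together with the Toeplitz bound $\sigma_1(\Dmat(f_i)) \le \sup|f_i|$ to get a crude bound and then sharpen via the $\Mmat^\top\Mmat$ trick, or (b) more cleanly, write $\Mmat$ itself as a block Toeplitz matrix whose $m\times m$... actually whose blocks are the stacked Toeplitz blocks, and apply \Cref{theorem:block_teoplitz_sup_singular} with the matrix-valued generating function $\omega_1 \mapsto \leftmat F_1(\omega_1)^\top, \dots, F_{\cin}(\omega_1)^\top \rightmat^\top$ where $F_i$ generates $\Dmat(f_i)$ viewed as block Toeplitz; then bound $\sigma_1$ of that tall matrix-valued function by $\sup_{\omega_2}\sqrt{\sum_i |f_i(\omega_1,\omega_2)|^2}$ using \Cref{theorem:teoplitz_sup_singular} applied coordinatewise together with the identity $\sigma_1\left(\leftmat A_1^\top \dots A_{\cin}^\top\rightmat^\top\right)^2 = \lambda_1\left(\sum_i A_i^\top A_i\right)$. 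Route (b) stays entirely within the exact (non-asymptotic) framework already established and is the one I would pursue.

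The main obstacle will be handling the nesting of the two Fourier variables correctly when $\Mmat$ is simultaneously (i) a stack over the channel index, (ii) block Toeplitz in $\omega_1$, and (iii) Toeplitz-within-block in $\omega_2$. I need to make sure the supremum over $(\omega_1,\omega_2)$ can be taken jointly rather than iteratively losing tightness, and that the $\sqrt{\sum_i |f_i|^2}$ quantity emerges as the largest singular value of the $\cin \times 1$ "vector" $\leftmat f_1(\omega_1,\omega_2), \dots, f_{\cin}(\omega_1,\omega_2)\rightmat^\top$ — this is just $\sqrt{\sum_i |f_i|^2}$, so the matching is clean once the reduction is set up. I would close by invoking \Cref{theorem:doubly_block_teoplitz_sup_singular} on the scalar generating function of $\Mmat^\top\Mmat$ (whose value at $(\omega_1,\omega_2)$ is $\sum_i |f_i(\omega_1,\omega_2)|^2$) and taking the square root, yielding \Cref{equation:bound_asymptotic_equiv}.
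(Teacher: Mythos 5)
Your opening reduction, $\sigma_1(\Mmat)^2 = \lambda_1\bigl(\sum_{i}\Dmat^\top(f_i)\Dmat(f_i)\bigr)$, matches the paper's, and your instinct that the answer should come from "classical Toeplitz theory" is right. But there is a genuine gap at the crux of the argument, and both of your proposed routes run into it.

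The paper's proof does not apply \Cref{theorem:doubly_block_teoplitz_sup_singular} to $\Mmat^\top\Mmat$ directly — it cannot, for the reason you yourself state: $\Dmat^\top(f_i)\Dmat(f_i)$ is not a doubly-block Toeplitz matrix, so $\Mmat^\top\Mmat$ has no generating function to plug in. Your closing sentence ("I would close by invoking \Cref{theorem:doubly_block_teoplitz_sup_singular} on the scalar generating function of $\Mmat^\top\Mmat$") therefore contradicts your own earlier observation. What the paper actually does is compare $\Mmat^\top\Mmat$ to the genuine doubly-block Toeplitz matrix $\Dmat\!\left(\sum_i |f_i|^2\right)$ and prove the operator inequality
\begin{equation*}
  \Dmat\!\left(\textstyle\sum_{i} |f_i|^2\right) \;-\; \sum_{i}\Dmat^\top(f_i)\Dmat(f_i) \;\succeq\; 0 ,
\end{equation*}
from which the eigenvalue bound follows by \Cref{theorem:diff_positive_semidefinite_matrices}. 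Establishing this positive semi-definiteness is the hard part, and it requires the extended Widom identity (\Cref{lemma:ch5-widom_idenity}), which expresses $\Dmat(|f|^2)-\Dmat^\top(f)\Dmat(f)$ as a sum of Gram-type products $\Gmat^{\alpha_p\top}(\cdot)\Gmat^{\alpha_p}(\cdot)$ and their reflections — manifestly PSD. Nothing in your proposal supplies this step or a substitute for it.

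Your route (a) only yields $\sigma_1(\Mmat)^2 \le \sum_i \lambda_1(\Dmat^\top(f_i)\Dmat(f_i)) \le \sum_i (\sup|f_i|)^2$, and $\sum_i \sup|f_i|^2 \ge \sup \sum_i |f_i|^2$ can be a strictly weaker bound; there is no ``sharpening'' available without the Widom comparison. Route (b), viewing $\Mmat$ as block Toeplitz with tall $\cin n \times n$ blocks and invoking \Cref{theorem:block_teoplitz_sup_singular}, is a legitimate first step, but it reduces you to bounding $\sigma_1$ of a vertical stack of ordinary Toeplitz matrices $\leftmat F_1(\omega_1)^\top,\dots,F_{\cin}(\omega_1)^\top\rightmat^\top$ by $\sup_{\omega_2}\sqrt{\sum_i|f_i(\omega_1,\omega_2)|^2}$ — which is exactly the one-dimensional analogue of the theorem you are trying to prove, and applying \Cref{theorem:teoplitz_sup_singular} ``coordinatewise'' again only gives the weaker sum-of-sups bound. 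In short: the proposal correctly frames the problem but is missing the central PSD-comparison lemma (the generalized Widom identity) that makes the sup-of-sums bound attainable.
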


\noindent
In order to prove \Cref{theorem:bound_sv_stacked_dbt}, we will need the following lemmas:

\begin{lemma}[\citet{gutierrez2012block}] \label{theorem:properties_block_toeplitz}
  Let $f:\Rbb^2 \rightarrow \Cbb$ and $g:\Rbb^2 \rightarrow \Cbb$ be two continuous and $2\pi$-periodic functions. Let $\Dmat(f)$ and $\Dmat(g)$ be doubly-block Toeplitz matrices generated by the functions $f$ and $g$ respectively.
  Then:
  \begin{itemize}
      \item $\Dmat^\top(f) = \Dmat(f^*)$
      \item $\Dmat(f) + \Dmat(g) = \Dmat(f + g)$
  \end{itemize}
  \removespace
\end{lemma}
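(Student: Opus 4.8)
The plan is to prove both identities by direct computation from the integral definition of the operator $\Dmat$ in \Cref{equation:doubly_block_toeplitz_operator}, working block by block and then entry by entry inside each block. The additivity identity is essentially immediate from linearity of the integral; the transpose identity requires a short change-of-variables (or, equivalently, a coefficient-matching) argument, together with the standing assumption of this section that the generating functions under consideration are trigonometric polynomials with real coefficients.

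\emph{Additivity.} Fix $i,j \in \Iset^+_n$ and $k,l \in \Iset^+_m$. By \Cref{equation:doubly_block_toeplitz_operator}, the $(k,l)$ entry of the block $\Dmat_{i,j}(\,\cdot\,)$ is $\frac{1}{4\pi^{2}}\int_{0}^{2\pi}\int_{0}^{2\pi} e^{-\ci((i-j)\omega_1+(k-l)\omega_2)}(\,\cdot\,)(\omega_1,\omega_2)\,\diff\omega_1\,\diff\omega_2$, which is linear in its functional argument, since the kernel $e^{-\ci((i-j)\omega_1+(k-l)\omega_2)}$ does not depend on it and integration is linear. Hence $\Dmat_{i,j}(f+g) = \Dmat_{i,j}(f) + \Dmat_{i,j}(g)$ for every $(i,j)$, that is, $\Dmat(f+g) = \Dmat(f) + \Dmat(g)$ (the sum being well defined because $\Dmat(f)$ and $\Dmat(g)$ have the same size $nm\times nm$, as the dependence on $m,n$ is fixed).

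\emph{Transpose.} I would first record the entrywise formula obtained by inserting the expansion \Cref{equation:muli_variate_poly_on_M} into \Cref{equation:doubly_block_toeplitz_operator} and using orthogonality of the complex exponentials: $[\Dmat_{i,j}(f)]_{k,l} = d_{i-j,\,k-l}$. Transposing a block matrix swaps the outer block indices and transposes each block, so the entry of $\Dmat^\top(f)$ at block position $(i,j)$ and inner position $(k,l)$ equals $[\Dmat_{j,i}(f)]_{l,k} = d_{j-i,\,l-k}$. On the other hand, $f^*(\omega_1,\omega_2) = \overline{f(\omega_1,\omega_2)} = \sum_{h_1 \in \Iset_n}\sum_{h_2 \in \Iset_m}\overline{d_{h_1,h_2}}\,e^{\ci((-h_1)\omega_1 + (-h_2)\omega_2)}$, so after reindexing the coefficient of $f^*$ at position $(h_1,h_2)$ is $\overline{d_{-h_1,-h_2}}$, whence $[\Dmat_{i,j}(f^*)]_{k,l} = \overline{d_{j-i,\,l-k}}$. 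Since the $d_{h_1,h_2}$ are real, $\overline{d_{j-i,\,l-k}} = d_{j-i,\,l-k}$, so the two matrices agree entrywise and $\Dmat^\top(f) = \Dmat(f^*)$. Equivalently, one may avoid the coefficient expansion entirely: apply the substitution $(\omega_1,\omega_2)\mapsto(-\omega_1,-\omega_2)$, which preserves $\int_{0}^{2\pi}\int_{0}^{2\pi}$ by $2\pi$-periodicity, to the integral for $[\Dmat_{j,i}(f)]_{l,k}$; this rewrites it as the integral defining $[\Dmat_{i,j}(g)]_{k,l}$ with $g(\omega_1,\omega_2) = f(-\omega_1,-\omega_2)$, and $g = f^*$ for real-coefficient trigonometric polynomials.

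The only genuinely delicate points are bookkeeping: keeping straight that transposing a block matrix acts on both index levels, tracking the sign swaps $i-j \leftrightarrow j-i$ and $k-l \leftrightarrow l-k$, and explicitly invoking the real-coefficient hypothesis so that $f^*$ coincides with $f(-\,\cdot\,,-\,\cdot\,)$ (for a general complex-valued $f$ the clean statement would involve the conjugate transpose $\Dmat^*(f)$ rather than $\Dmat^\top(f)$, but the two coincide here since all matrices are real). No convergence subtleties arise: $f$ and $g$ are finite trigonometric polynomials and every integral is over a bounded domain.
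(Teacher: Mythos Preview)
Your proof is correct. The paper does not actually provide its own proof of this lemma; it cites the result from \citet{gutierrez2012block} and uses it as a black box. Your direct computation from the integral definition \Cref{equation:doubly_block_toeplitz_operator} is exactly the right elementary argument: linearity of the integral gives additivity immediately, and the coefficient-matching (or equivalently the change of variables $(\omega_1,\omega_2)\mapsto(-\omega_1,-\omega_2)$) together with the real-coefficient hypothesis handles the transpose identity cleanly. You were also right to flag that for general complex coefficients the natural statement would involve $\Dmat^*(f)$ rather than $\Dmat^\top(f)$, and that the two coincide here because the paper restricts to real-coefficient trigonometric polynomials.
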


\begin{lemma}[\citet{serra1994preconditioning}] \label{theorem:block_toeplitz_hermitian}
  If the doubly-block Toeplitz matrix $\Dmat(f)$ is generated by a function $f: \Rbb^2 \rightarrow \Rbb$, then the matrix $\Dmat(f)$ is Hermitian. 
\end{lemma}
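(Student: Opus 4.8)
The plan is to work directly with the Fourier coefficients that define $\Dmat(f)$ and to use the characterization that a square matrix is Hermitian precisely when it equals its own conjugate transpose entrywise. First I would unpack \Cref{equation:doubly_block_toeplitz_operator}: the entry of $\Dmat(f)$ located at the $(k,l)$ position of the $(i,j)$ block is
\begin{equation}
  \left[\Dmat_{i,j}(f)\right]_{k,l} = \frac{1}{4\pi^2}\int_0^{2\pi}\!\!\int_0^{2\pi} e^{-\ci\left((i-j)\omega_1 + (k-l)\omega_2\right)} f(\omega_1,\omega_2)\,\diff\omega_1\,\diff\omega_2 = d_{i-j,\,k-l},
\end{equation}
where $d_{h_1,h_2}$ denotes the $(h_1,h_2)$ Fourier coefficient of $f$, exactly as in \Cref{equation:muli_variate_poly_on_M}. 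Thus $\Dmat(f)$ is completely determined by the sequence $\{d_{h_1,h_2}\}_{h_1,h_2}$, and checking that it is Hermitian amounts to a statement about how this sequence behaves under conjugation and index reversal.

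Next I would translate the Hermitian condition into a condition on the coefficients. Indexing the rows of $\Dmat(f)$ by pairs $(i,k)$ (block index $i$, within-block index $k$) and the columns by pairs $(j,l)$, the $((i,k),(j,l))$ entry is $d_{i-j,\,k-l}$ while the $((j,l),(i,k))$ entry is $d_{j-i,\,l-k} = d_{-(i-j),\,-(k-l)}$. Consequently the conjugate transpose $\Dmat(f)^*$ is again doubly-block Toeplitz, with coefficient sequence $\{\overline{d_{-h_1,-h_2}}\}_{h_1,h_2}$, and $\Dmat(f)$ is Hermitian if and only if $d_{h_1,h_2} = \overline{d_{-h_1,-h_2}}$ for every $(h_1,h_2)$ in $\Iset_n \times \Iset_m$. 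Equivalently one may invoke the transpose identity $\Dmat^\top(f) = \Dmat(f^*)$ from \Cref{theorem:properties_block_toeplitz}, but the coefficient-level formulation keeps the argument self-contained.

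Finally I would establish this conjugate-symmetry from the reality of $f$. Conjugating the integral expression for $d_{-h_1,-h_2}$ and using $\overline{e^{\ci t}} = e^{-\ci t}$ gives
\begin{equation}
  \overline{d_{-h_1,-h_2}} = \frac{1}{4\pi^2}\int_0^{2\pi}\!\!\int_0^{2\pi} e^{-\ci\left(h_1\omega_1 + h_2\omega_2\right)}\,\overline{f(\omega_1,\omega_2)}\,\diff\omega_1\,\diff\omega_2,
\end{equation}
and since $f$ takes values in $\Rbb$ we have $\overline{f(\omega_1,\omega_2)} = f(\omega_1,\omega_2)$, so the right-hand side is exactly $d_{h_1,h_2}$. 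Combining the three steps yields $\Dmat(f)^* = \Dmat(f)$. The only delicate point in the whole argument is bookkeeping: one must consistently pair the block index difference with $\omega_1$ and the within-block index difference with $\omega_2$ when reading entries off \Cref{equation:doubly_block_toeplitz_operator}; once that correspondence is fixed, the computation is routine.
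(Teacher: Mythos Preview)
Your argument is correct. The paper does not actually supply a proof of this lemma; it is quoted from \citet{serra1994preconditioning} as a known fact and used as a black box in the proof of \Cref{theorem:bound_sv_stacked_dbt}. What you have written is the standard direct verification: reading off the entries of $\Dmat(f)$ as two-dimensional Fourier coefficients $d_{h_1,h_2}$, the Hermitian condition becomes the conjugate symmetry $d_{h_1,h_2}=\overline{d_{-h_1,-h_2}}$, and this follows immediately from the reality of $f$ by conjugating under the integral sign. Your remark that one could alternatively appeal to the identity $\Dmat^\top(f)=\Dmat(f^*)$ from \Cref{theorem:properties_block_toeplitz} is also apt, since for real $f$ one has $f^*=f$ and for real-entried matrices the transpose coincides with the conjugate transpose; either route is equally short.
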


\begin{lemma}[\citet{serra1994preconditioning}] \label{theorem:block_toeplitz_positive_definite}
  If the doubly-block Toeplitz matrix $\Dmat(f)$ is generated by a non-negative function $f$ not identically zero, then the matrix $\Dmat(f)$ is positive definite. 
\end{lemma}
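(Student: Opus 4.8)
The plan is to reduce the statement to a positivity claim about a quadratic form, and then to extract an explicit integral representation of that form from the defining formula \Cref{equation:doubly_block_toeplitz_operator} for the entries of $\Dmat(f)$. By \Cref{theorem:block_toeplitz_hermitian}, since $f$ takes real values the matrix $\Dmat(f)$ is Hermitian, so it suffices to show that $\xvec^{*}\Dmat(f)\xvec > 0$ for every nonzero $\xvec \in \Cbb^{nm}$, where the components of $\xvec$ are indexed by pairs $(j,l)$ with $j \in \Iset^+_n$ and $l \in \Iset^+_m$, matching the block/entry indexing of $\Dmat(f)$.

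The main computation is as follows. Writing the $((i,k),(j,l))$ entry of $\Dmat(f)$ as $\frac{1}{4\pi^{2}}\int_{0}^{2\pi}\int_{0}^{2\pi} e^{-\ci((i-j)\omega_1+(k-l)\omega_2)} f(\omega_1,\omega_2)\,\diff\omega_1\,\diff\omega_2$, I would expand $\xvec^{*}\Dmat(f)\xvec$ as a finite double sum over the index pairs, interchange the finite sums with the integral, and factor $e^{-\ci((i-j)\omega_1+(k-l)\omega_2)} = e^{-\ci(i\omega_1+k\omega_2)}\, e^{\ci(j\omega_1+l\omega_2)}$. The double sum then separates into the product of a trigonometric polynomial and its complex conjugate: setting $p_{\xvec}(\omega_1,\omega_2) \triangleq \sum_{j \in \Iset^+_n,\, l \in \Iset^+_m} \xvec_{(j,l)}\, e^{\ci(j\omega_1+l\omega_2)}$, and noting that $\overline{\xvec_{(i,k)}}\, e^{-\ci(i\omega_1+k\omega_2)}$ is the conjugate of $\xvec_{(i,k)}\, e^{\ci(i\omega_1+k\omega_2)}$, one obtains
\[
  \xvec^{*}\Dmat(f)\xvec \;=\; \frac{1}{4\pi^{2}}\int_{0}^{2\pi}\!\!\int_{0}^{2\pi} f(\omega_1,\omega_2)\,\bigl|p_{\xvec}(\omega_1,\omega_2)\bigr|^{2}\,\diff\omega_1\,\diff\omega_2 .
\]

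To conclude, I would argue strict positivity of this integral for $\xvec \neq 0$. Non-negativity of $f$ together with $|p_{\xvec}|^{2}\geq 0$ already shows the integrand is non-negative (hence $\Dmat(f) \geq 0$). For the strict inequality, observe that the integrand is strictly positive on a set of positive Lebesgue measure: on one side, since $f$ is continuous, non-negative and not identically zero, the set $\{(\omega_1,\omega_2): f(\omega_1,\omega_2) > 0\}$ is nonempty and open, so of positive measure; on the other side, the exponentials $\{e^{\ci(j\omega_1+l\omega_2)}\}_{j\in\Iset^+_n,\, l\in\Iset^+_m}$ are linearly independent, so $\xvec \neq 0$ forces $p_{\xvec}$ to be a nonzero trigonometric polynomial, and the zero set of a nonzero trigonometric polynomial in $[0,2\pi]^{2}$ has Lebesgue measure zero. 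Intersecting these two sets yields a set of positive measure on which $f\,|p_{\xvec}|^{2} > 0$, so the integral is strictly positive, proving $\Dmat(f) > 0$.

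The computation is essentially bookkeeping, and I expect the only slightly delicate point to be the measure-theoretic claim that the zero set of the nonzero polynomial $p_{\xvec}$ is negligible; this can be settled either by invoking the standard fact that the zero set of a nonzero real-analytic function on an open subset of $\Rbb^{2}$ has measure zero, or, more elementarily, by induction on the number of variables combined with Fubini's theorem and the fact that a nonzero univariate trigonometric polynomial has only finitely many zeros in $[0,2\pi]$. The remaining steps — interchanging finite sums with the integral and identifying the conjugate factor $\overline{p_{\xvec}}$ — are routine.
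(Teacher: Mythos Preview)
Your argument is correct and is the standard proof of this classical fact. Note, however, that the paper does not actually prove this lemma: it is stated as a known result cited from \citet{serra1994preconditioning} and used as a black box in the proof of \Cref{theorem:bound_sv_stacked_dbt}. Your derivation of the integral representation $\xvec^{*}\Dmat(f)\xvec = \frac{1}{4\pi^{2}}\int f\,|p_{\xvec}|^{2}$ and the subsequent measure-theoretic argument (open positivity set of $f$ intersected with the complement of the measure-zero zero set of $p_{\xvec}$) is exactly the standard approach that underlies the cited reference, so there is nothing to compare beyond noting that you have supplied the proof the paper omits.
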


\begin{lemma}[\citet{zhang2011matrix}] \label{theorem:diff_positive_semidefinite_matrices}
Let $\Amat$ and $\Bmat$ be Hermitian positive semi-definite matrices. If $\Amat - \Bmat$ is positive semi-definite, then:
  \begin{equation}
      \lambda_1 \left( \Bmat \right) \leq \lambda_1 \left( \Amat \right)
  \end{equation}
  \removespace
\end{lemma}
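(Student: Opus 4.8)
The plan is to obtain this eigenvalue monotonicity statement directly from the Rayleigh--Ritz variational characterization of the largest eigenvalue of a Hermitian matrix. First I would recall that, by the spectral theorem, any Hermitian matrix $\Mmat \in \Cbb^{n \times n}$ admits an orthonormal eigenbasis with real eigenvalues, and consequently
\begin{equation}
  \lambda_1(\Mmat) = \sup_{\substack{\xvec \in \Cbb^n \\ \xvec \neq \zerovec{n}}} \frac{\xvec^* \Mmat \xvec}{\xvec^* \xvec} \enspace.
\end{equation}
Since $\Amat$ and $\Bmat$ are Hermitian by hypothesis, this identity applies to both of them, so the whole proof reduces to comparing the two Rayleigh quotients.

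The second step uses the assumption that $\Amat - \Bmat$ is positive semi-definite, which by definition means $\xvec^*(\Amat - \Bmat)\xvec \geq 0$, that is, $\xvec^* \Bmat \xvec \leq \xvec^* \Amat \xvec$ for every $\xvec \in \Cbb^n$. Dividing by $\xvec^* \xvec > 0$ for $\xvec \neq \zerovec{n}$ and taking the supremum over all nonzero $\xvec$ on each side then gives
\begin{equation}
  \lambda_1(\Bmat) = \sup_{\xvec \neq \zerovec{n}} \frac{\xvec^* \Bmat \xvec}{\xvec^* \xvec} \leq \sup_{\xvec \neq \zerovec{n}} \frac{\xvec^* \Amat \xvec}{\xvec^* \xvec} = \lambda_1(\Amat) \enspace,
\end{equation}
which is exactly the claim. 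Note that the positive semi-definiteness of $\Amat$ and $\Bmat$ themselves is not actually used in this argument; it only guarantees in addition that both $\lambda_1(\Amat)$ and $\lambda_1(\Bmat)$ are nonnegative, which matches how the lemma is invoked afterwards (comparing largest eigenvalues of PSD matrices built from generating functions).

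There is no genuine obstacle here: the whole argument is two lines once the variational formula is in place, and the only point meriting a word of justification is that formula itself, i.e., the classical Courant--Fischer fact for Hermitian matrices. If one wished to avoid invoking it explicitly, an equivalent route would be Weyl's inequality applied to the decomposition $\Amat = \Bmat + (\Amat - \Bmat)$, yielding $\lambda_1(\Amat) \geq \lambda_1(\Bmat) + \lambda_n(\Amat - \Bmat) \geq \lambda_1(\Bmat)$ since $\lambda_n(\Amat - \Bmat) \geq 0$ by the positive semi-definiteness of the difference. I would nonetheless keep the Rayleigh-quotient proof as the primary one, as it is elementary, self-contained, and makes transparent why no assumption beyond ``$\Amat - \Bmat \succeq 0$ and both matrices Hermitian'' is needed.
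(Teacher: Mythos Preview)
Your proof is correct. The Rayleigh--Ritz argument is the standard route, and your alternative via Weyl's inequality is equally valid.

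Note, however, that the paper does not actually prove this lemma: it is stated with a citation to \citet{zhang2011matrix} and used as a black-box tool in the proof of \Cref{theorem:bound_sv_stacked_dbt}. So there is no ``paper's own proof'' to compare against here; your write-up simply supplies a self-contained justification for a cited fact that the thesis takes for granted.
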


% Before proving \Cref{theorem:bound_sv_stacked_dbt}, 
We need now to extend the well known Widom identity \cite{widom1976asymptotic} which expresses the relation between Toeplitz and Hankel matrices to doubly-block Toeplitz and Hankel matrices.
Let us first generalize the doubly-block Toeplitz operator presented in~\Cref{subsection:ch5-bound_on_the_singular_value_of_doubly-block_toeplitz_matrices}.

Given a function $f:\Rbb^2 \rightarrow \Cbb$, let $\Gmat^{\alpha_p} (f)$ be a matrix such that $\Gmat^{\alpha_p} (f) = \leftmat \Gmat^{\alpha_p}_{i,j}(f) \rightmat_{i,j \in \Iset^+_n}$ where $\Gmat^{\alpha_p}_{i,j}$ is defined as:
\begin{equation}
  \Gmat^{\alpha_p}_{i,j}(f) =\leftmat \frac{1}{4\pi^{2}} \int_{0}^{2\pi} \int_{0}^{2\pi} e^{-\ci \alpha_p(i, j, k, l, \omega_1, \omega_2)} f(\omega_{1},\omega_{2}) \,\diff \omega_{1} \,\diff \omega_{2}
  \rightmat_{k,l \in \Iset^+_n} \enspace.
\end{equation}
Note that as with the operator $\Dmat(f)$ we only consider generating functions as trigonometric polynomials with real coefficients therefore the matrices generated by $\Gmat(f)$ are real. 
And as with the operator $\Dmat(f)$, the matrices generated by the operator $\Gmat^{\alpha_p}$ are of size $n^2 \times n^2$. 

\noindent
We will use the following $\alpha$ functions:
\begin{itemize}
    \item[] $\alpha_0(i, j, k, l, \omega_1, \omega_2) = (-j-i-1)\omega_1 + (k-l)\omega_2$
    \item[] $\alpha_1(i, j, k, l, \omega_1, \omega_2) = (i-j)\omega_1 + (-l-k-1)\omega_2$
    \item[] $\alpha_2(i, j, k, l, \omega_1, \omega_2) = (-j-i-1)\omega_1 + (-l-k-1)\omega_2$
    \item[] $\alpha_3(i, j, k, l, \omega_1, \omega_2) = (-j-i+n)\omega_1 + (-l-k-1)\omega_2$
\end{itemize}

\noindent
We now present the generalization of the Widom identity for Doubly-Block Toeplitz matrices below:
\begin{lemma}[Extension of Widom Identity to doubly-block operators] \label{lemma:ch5-widom_idenity}
  Let $f:\Rbb^2 \rightarrow \Cbb$ and $g:\Rbb^2 \rightarrow \Cbb$ be two continuous and $2\pi$-periodic functions. 
  Let $fg$ be the product of the functions $f$ and $g$ such that $(fg)(\omega_1, \omega_2) = f(\omega_1, \omega_2) g(\omega_1, \omega_2)$.
  We can decompose the Doubly-Block Toeplitz matrix $\Dmat(fg)$ as follows:
  \begin{equation}
      \Dmat(fg) = \Dmat(f)\Dmat(g) + \sum_{p=0}^3 \Gmat^{\alpha_p \top}(f^*) \Gmat^{\alpha_p}(g) + \Jmat_{n^2} \left( \sum_{p=0}^3 \Gmat^{\alpha_p \top}(f) \Gmat^{\alpha_p }(g^*) \right) \Jmat_{n^2}.
  \end{equation}
  where $\Jmat_{n^2}$ is the reflection of the identity matrix of size $n^2 \times n^2$.
\end{lemma}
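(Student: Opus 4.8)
The plan is to generalize the classical Widom identity \cite{widom1976asymptotic} --- which writes $\Tmat(fg) = \Tmat(f)\Tmat(g)$ plus two Hankel-type corrections --- to the two-level, doubly-block setting. Since the surrounding development only uses generating functions that are bivariate trigonometric polynomials (as in \Cref{equation:muli_variate_poly_on_M}), I would first reduce to that case: then every Fourier series below is a finite sum, no convergence issue arises, and the general continuous $2\pi$-periodic case follows afterwards by density together with the additivity of $\Dmat$ and $\Gmat^{\alpha_p}$ (an analogue of \Cref{theorem:properties_block_toeplitz}). The proof is then essentially combinatorial, carried out entry by entry via the integral representation \Cref{equation:doubly_block_toeplitz_operator}, which identifies the $(i,j)$-block, $(k,l)$-scalar entry of $\Dmat(h)$ with the Fourier coefficient of $h$ at the frequency pair $(j-i,\,l-k)$.

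First I would compute both sides entrywise. On one hand, $[\Dmat(fg)]_{(i,j),(k,l)}$ is the Fourier coefficient of the product $fg$ at $(j-i,l-k)$, which by the Cauchy product is a double sum, over all of $\mathbb{Z}^2$, of a coefficient of $f$ times a coefficient of $g$. On the other hand $[\Dmat(f)\Dmat(g)]_{(i,j),(k,l)}$ is the same Cauchy product but with the intermediate block index and the intermediate within-block index both restricted to $\Iset^+_n = \{0,\dots,n-1\}$; after the index shift that recenters the running indices, this is precisely the restriction of the same bilinear sum to a finite box $R_{i,k}\subset\mathbb{Z}^2$ depending on $i$ and $k$. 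Hence the defect $\Dmat(fg)-\Dmat(f)\Dmat(g)$ is the sum of exactly the same terms over $\mathbb{Z}^2\setminus R_{i,k}$.

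Next I would decompose $\mathbb{Z}^2\setminus R_{i,k}$ by inclusion--exclusion in the two index directions: an overflow of the block index (the $\omega_1$-direction), an overflow of the within-block index (the $\omega_2$-direction), minus the simultaneous overflow, and inside each of these a ``low'' and a ``high'' tail. Reflecting the running index in every overflowing direction turns each tail into a \emph{finite} matrix product, and comparing the resulting index patterns with the definition of $\Gmat^{\alpha_p}$ matches it to one of the four operators: each $\alpha_p$ encodes a choice of reflected versus non-reflected (Hankel versus Toeplitz) pattern in each of the two directions, with $\alpha_3$ additionally absorbing the length-$n$ shift that couples the block-level and scalar-level truncations. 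The four ``low'' tails assemble into $\sum_{p=0}^3 \Gmat^{\alpha_p\top}(f^*)\Gmat^{\alpha_p}(g)$, using the transpose rule $\Gmat^{\top}(\cdot)=\Gmat(\cdot^*)$, while the four ``high'' tails are the ``low'' tails of the index reflected in \emph{both} directions, which contributes the $\Jmat_{n^2}$-conjugated sum --- here $\Jmat_{n^2}=\Jmat_n\otimes\Jmat_n$ realizes the simultaneous reflection of the two levels on the flattened $n^2$-dimensional space, $(i,k)\mapsto in+k$. An alternative route is to iterate the operator-valued (block) Widom identity at the block level and the ordinary Widom identity inside each block; it reaches the same place but produces the eight correction products in a more tangled order.

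The main obstacle is the bookkeeping. I expect the delicate points to be: getting the signs and the ranges of the reflected ``Hankel'' indices right so that each of the eight boundary contributions equals a $\Gmat^{\alpha_p\top}$ times the matching $\Gmat^{\alpha_p}$ \emph{exactly} (not merely up to lower-order terms), pinning down why the $+n$ shift in $\alpha_3$ --- rather than a $-1$ shift --- is the one forced by the block truncation, and verifying the factorization $\Jmat_{n^2}=\Jmat_n\otimes\Jmat_n$ on the flattened indices. Once these are checked on pure monomials $e^{\ci(h_1\omega_1+h_2\omega_2)}$, bilinearity of $\Dmat$ and of each $\Gmat^{\alpha_p}$ extends the identity to all bivariate trigonometric polynomials, and continuity then gives it for arbitrary continuous $2\pi$-periodic $f$ and $g$, completing the proof.
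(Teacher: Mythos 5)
Your proposal follows essentially the same route as the paper's: both work entrywise with 2D Fourier coefficients, write $[\Dmat(fg)]_{i,j}$ as a full Cauchy-product convolution over the intermediate index pair, recognize $\Dmat(f)\Dmat(g)$ as the restriction of that convolution to the box $\Iset^+_n \times \Iset^+_n$, and then reparameterize each of the eight remaining sub-regions of the index range back into the box so that it becomes one of the eight $\Gmat^{\alpha_p \top}(\cdot)\Gmat^{\alpha_p}(\cdot)$ products, the four obtained by flipping low to high in both directions giving the $\Jmat_{n^2}$-conjugated sum. One refinement to your sketch: the four $\alpha_p$ do not realize all four Toeplitz/Hankel patterns in the two directions (the Toeplitz--Toeplitz pattern is $\Dmat$ itself), so $\alpha_3$ is not a pure two-direction reflection but the $+n$-shifted operator needed for the mixed high--low corner, which is exactly the delicate point you already flag.
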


\noindent
The proof of this Lemma is delayed to Appendix~\ref{appendix:ap1-proof_of_the_generalization_of_widom_identity}.
Now we have all the elements to prove \Cref{theorem:bound_sv_stacked_dbt} which bounds the largest singular value of vertically stacked doubly-block Toeplitz matrices with their generating functions. 

\begingroup
\addtolength{\jot}{1.5em}

\begin{proof}[\Cref{theorem:bound_sv_stacked_dbt}]

Consider doubly-block Toeplitz matrices $\Dmat(f_1), \dots, \Dmat(f_{\cin})$ where each $f_i: \Rbb^2 \rightarrow \Cbb$ is a multivariate polynomial of the same form as \Cref{equation:muli_variate_poly_on_M}.
Construct a matrix $\Mmat$ with $\cin\times n^2$ rows and $n^2$ columns, such that:
\begin{equation}
  \Mmat \triangleq \leftmat \Dmat^\top(f_1), \dots, \Dmat^\top(f_{\cin}) \rightmat^\top .
\end{equation}

\noindent
First, let us observe the following equality which relates the largest singular value of the matrix $\Mmat$ and the largest eigenvalue of the sum of the doubly-block Toeplitz matrices composing $\Mmat$:
\begin{equation}
    \sigma_1^2 \left( \Mmat \right) = \lambda_1 \left( \Mmat^{\top} \Mmat \right) = \lambda_1 \left( \sum_{i=1}^{\cin} \Dmat^{\top} \left(f_i \right) \Dmat (f_i) \right).
\end{equation}

\noindent
Secondly, let us bound the largest eigenvalue of the sum of doubly-block Toeplitz generated by $|f_i|^2$:
% \begin{align}
%   \lambda_1 \left( \sum_{i=1}^{\cin} \Dmat \left(|f_i|^2 \right) \right) \quad &\stackrel{\text{by \Cref{theorem:properties_block_toeplitz}}}{=} \quad \lambda_1 \left( \Dmat \left( \sum_{i=1}^{\cin} |f_i|^2 \right) \right) \\ 
%     \quad &\stackrel{\text{by \Cref{theorem:block_toeplitz_hermitian}}}{=} \quad \sigma_1 \left( \Dmat \left( \sum_{i=1}^{\cin} |f_i|^2 \right) \right) \\
%     \quad &\stackrel{\text{by \Cref{theorem:doubly_block_teoplitz_sup_singular}}}{\leq} \quad \sup_{\omega_1, \omega_2 \in [0, 2\pi]^2} \sum_{i=1}^{\cin} |f_i(\omega_1, \omega_2)|^2.
% \end{align}
\begin{align}
  \lambda_1 \left( \sum_{i=1}^{\cin} \Dmat \left(|f_i|^2 \right) \right) \quad &= \quad \lambda_1 \left( \Dmat \left( \sum_{i=1}^{\cin} |f_i|^2 \right) \right) \\ 
  \quad &= \quad \sigma_1 \left( \Dmat \left( \sum_{i=1}^{\cin} |f_i|^2 \right) \right) \\
  \quad &\leq \quad \sup_{\omega_1, \omega_2 \in [0, 2\pi]^2} \sum_{i=1}^{\cin} |f_i(\omega_1, \omega_2)|^2.
\end{align}
where the first equality is due to \Cref{theorem:properties_block_toeplitz}, the second equality is due to \Cref{theorem:block_toeplitz_hermitian} and the last inequality is due to \Cref{theorem:doubly_block_teoplitz_sup_singular}.
To finalize the proof, we need to demonstrate that the following inequality holds:
\begin{equation} \label{equation:ch5-eq2}
    \lambda_1 \left( \sum_{i=1}^{\cin} \Dmat^{\top} \left(f_i \right) \Dmat (f_i) \right) \leq \lambda_1 \left( \Dmat \left( \sum_{i=1}^{\cin} |f_i|^2 \right) \right). 
\end{equation}

\noindent
In order to prove the inequality above, let us study the positive definiteness of the following matrix:
% From the positive definiteness of the following matrix:
\begin{equation}
  \Dmat \left( \sum_{i=1}^{\cin} |f_i|^2 \right) - \sum_{i=1}^{\cin} \Dmat^{\top} \left(f_i \right) \Dmat (f_i),
  \label{equation:ch5-eq3}
\end{equation}

\noindent
One can observe that the term $\Dmat \left( \sum_{i=1}^{\cin} |f_i|^2 \right)$ of \Cref{equation:ch5-eq3} is a real symmetric positive definite matrix by \Cref{theorem:block_toeplitz_positive_definite,theorem:block_toeplitz_hermitian}. 
Furthermore, the term $\sum_{i=1}^{\cin} \Dmat^{\top} \left(f_i \right) \Dmat (f_i)$ of \Cref{equation:ch5-eq3} is a sum of positive semi-definite matrices.
Therefore, if the subtraction of the two is positive semi-definite, one could apply \Cref{theorem:diff_positive_semidefinite_matrices} to prove the \Cref{equation:ch5-eq2}. 
We know from \Cref{lemma:ch5-widom_idenity} that 
\begin{equation}
  \Dmat(fg) - \Dmat(f)\Dmat(g) = \sum_{p=0}^3 \Gmat^{\alpha_p \top}(f^*) \Gmat^{\alpha_p}(g) + \Jmat \left( \sum_{p=0}^3 \Gmat^{\alpha_p \top}(f) \Gmat^{\alpha_p}(g^*) \right) \Jmat.
\end{equation}
By choosing $f = f^*$, $g = f$ and with the use of \Cref{theorem:properties_block_toeplitz}, we obtain:
\begin{align} \label{equation:widom_identity_block_topelitz}
  \Dmat(f^* f) - \Dmat(f^*)\Dmat(f)
  &= \Dmat(|f|^2) - \Dmat^{\top}(f)\Dmat(f) \\
  &= \sum_{p=0}^3 \Gmat^{\alpha_p \top}(f)\Gmat^{\alpha_p}(f) + \Jmat \left( \sum_{p=0}^3 \Gmat^{\alpha_p \top}(f^*)\Gmat^{\alpha_p} (f^*) \right) \Jmat .
\end{align}

\noindent
From \Cref{equation:widom_identity_block_topelitz}, we can see that the matrix $\Dmat(|f|^2) - \Dmat^{\top}(f)\Dmat(f)$
is positive semi-definite because it can be decomposed into a sum of positive semi-definite matrices and because positive semi-definiteness is closed under addition, we have:
\begin{align}
    \sum_{i=1}^{\cin} \leftmat \Dmat \left( |f_i|^2 \right) - \Dmat^{\top} \left(f_i \right) \Dmat (f_i) \rightmat &\geq 0
\end{align}

\noindent
By re-arranging and with the use \Cref{theorem:properties_block_toeplitz}, we obtain:
\begin{align}
   \sum_{i=1}^{\cin} \Dmat \left( |f_i|^2 \right) - \sum_{i=1}^{\cin} \leftmat \Dmat^{\top} \left(f_i \right) \Dmat (f_i) \rightmat &\geq 0 \\
    \Dmat \left( \sum_{i=1}^{\cin} |f_i|^2 \right) - \sum_{i=1}^{\cin} \leftmat \Dmat^{\top} \left(f_i \right) \Dmat (f_i) \rightmat &\geq 0
\end{align}

\noindent
We can conclude that the \Cref{equation:ch5-eq2} is true and therefore by \Cref{theorem:diff_positive_semidefinite_matrices} we have:
\begin{align}
  \lambda_1 \left( \sum_{i=1}^{\cin} \Dmat^{\top} \left(f_i \right) \Dmat (f_i) \right) &\leq \lambda_1 \left( \Dmat \left( \sum_{i=1}^{\cin} |f_i|^2 \right) \right) \\
  \sigma_1^2 \left( \Mmat \right) &\leq \sup_{\omega_1, \omega_2 \in [0, 2\pi]^2} \sum_{i=1}^{\cin} |f_i(\omega_1, \omega_2)|^2 \\
  \sigma_1 \left( \Mmat \right) &\leq \sup_{\omega_1, \omega_2 \in [0, 2\pi]^2} \sqrt{ \sum_{i=1}^{\cin} |f_i(\omega_1, \omega_2)|^2 }
\end{align}
which concludes the proof. 
\end{proof}

\endgroup

To have a bound on the full convolution operation, we extend \Cref{theorem:bound_sv_stacked_dbt} to take into account the number of output channels.
The matrix of a full convolution operation is a block matrix where each block is a doubly-block Toeplitz matrix.
Below, we present our main result:

\begin{maintheorem}[Bound on the largest singular value on the discrete convolution operation] \label{theorem:ch5-bound_max_sv_convolution} 
  Let us define doubly-block Toeplitz matrices $\Dmat(f_{1, 1}), \dots, \Dmat(f_{\cin, \cout})$ where each $f_{i,j}: \Rbb^2 \rightarrow \Cbb$ is a multivariate polynomial of the same form as \Cref{equation:muli_variate_poly_on_M}.
  Construct a matrix $\Mmat$ with $\cin\times n^2$ rows and $\cout\times n^2$ columns.
  We can bound the largest singular value of the matrix $\Mmat$ as follows: 
  \begin{equation} \label{equation:lipbound_sv}
     \sigma_1 \left( \Mmat \right) \leq \sqrt{ \sum_{i=1}^{\cout} \sup_{\omega_1, \omega_2 \in [0, 2\pi]^2} \sum_{j = 1}^{\cin} \left|f_{ij}(\omega_1, \omega_2) \right|^2 } .
  \end{equation} 
\end{maintheorem}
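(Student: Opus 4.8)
\textbf{Proof proposal for Theorem~\ref{theorem:ch5-bound_max_sv_convolution}.}

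The plan is to reduce the full multi-channel convolution to the single-output-channel case already handled by Theorem~\ref{theorem:bound_sv_stacked_dbt}, and then combine the per-output-channel bounds. First I would set up notation: the matrix $\Mmat$ of the full convolution operation is an $\cout \times \cin$ block matrix whose $(i,j)$-block is the doubly-block Toeplitz matrix $\Dmat(f_{ij})$ of size $n^2 \times n^2$. For each fixed output channel $i \in [\cout]$, let $\Mmat_i$ denote the vertical stack $\leftmat \Dmat^\top(f_{i1}), \dots, \Dmat^\top(f_{i\cin}) \rightmat^\top$, which is exactly the object of Theorem~\ref{theorem:bound_sv_stacked_dbt}. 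Then $\Mmat$ is, up to a reordering of rows, the block-diagonal-free concatenation: reading $\Mmat$ as acting on an input of $\cin$ channels and producing $\cout$ channels, we have $\Mmat^\top \Mmat = \sum_{i=1}^{\cout} \Mmat_i^\top \Mmat_i$ since the output channels are ``independent'' rows. Hence
\begin{equation}
  \sigma_1^2(\Mmat) = \lambda_1\left( \Mmat^\top \Mmat \right) = \lambda_1\left( \sum_{i=1}^{\cout} \Mmat_i^\top \Mmat_i \right).
\end{equation}

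Next I would bound $\lambda_1$ of this sum of positive semi-definite matrices. The crude route — subadditivity of $\lambda_1$ over PSD matrices, i.e. $\lambda_1(\sum_i \Mmat_i^\top \Mmat_i) \leq \sum_i \lambda_1(\Mmat_i^\top \Mmat_i) = \sum_i \sigma_1^2(\Mmat_i)$ — gives directly
\begin{equation}
  \sigma_1^2(\Mmat) \leq \sum_{i=1}^{\cout} \sigma_1^2(\Mmat_i) \leq \sum_{i=1}^{\cout} \sup_{\omega_1,\omega_2 \in [0,2\pi]^2} \sum_{j=1}^{\cin} |f_{ij}(\omega_1,\omega_2)|^2,
\end{equation}
where the second inequality is Theorem~\ref{theorem:bound_sv_stacked_dbt} applied to each output channel. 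Taking square roots yields precisely \Cref{equation:lipbound_sv}. The only fact I need beyond Theorem~\ref{theorem:bound_sv_stacked_dbt} is that $\lambda_1$ is subadditive on the cone of PSD matrices, which follows from the variational characterization $\lambda_1(\Amat) = \sup_{\norm{\xvec}_2 = 1} \xvec^* \Amat \xvec$ and is the PSD analogue of Lemma~\ref{theorem:diff_positive_semidefinite_matrices} already invoked in the excerpt.

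The step I expect to require the most care is the identity $\Mmat^\top \Mmat = \sum_{i=1}^{\cout} \Mmat_i^\top \Mmat_i$, i.e. making precise the claim that the full convolution matrix, as a linear map from $\Rbb^{\cin \times n \times n}$ to $\Rbb^{\cout \times n \times n}$, decomposes so that its Gram matrix splits as a sum over output channels. This is really a bookkeeping statement about how the doubly-block Toeplitz blocks are arranged: each output channel contributes a horizontal concatenation $\leftmat \Dmat(f_{i1}), \dots, \Dmat(f_{i\cin}) \rightmat$ acting on the full input, and stacking these vertically over $i$ gives $\Mmat$; then $\Mmat^\top \Mmat = \sum_i \leftmat \Dmat(f_{i1}), \dots, \Dmat(f_{i\cin}) \rightmat^\top \leftmat \Dmat(f_{i1}), \dots, \Dmat(f_{i\cin}) \rightmat = \sum_i \Mmat_i^\top \Mmat_i$. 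I would state this carefully (perhaps recalling the block structure of the multi-channel convolution matrix from \Cref{subsubsection:ch2-relation_with_the_convolution_operator}) and then the rest is immediate. Everything else — the reduction to Theorem~\ref{theorem:bound_sv_stacked_dbt} and the subadditivity of $\lambda_1$ — is routine.
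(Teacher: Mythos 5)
Your proof plan mirrors the paper's exactly: decompose $\Mmat$ into per-output-channel stacks, bound each by Theorem~\ref{theorem:bound_sv_stacked_dbt}, and combine via subadditivity of $\lambda_1$ on positive semi-definite matrices --- the paper isolates that last step as Lemma~\ref{theorem:ch2-bound_concatenation_matrices} on horizontal concatenations, and its proof is the same subadditivity argument you give. There is no difference of substance in the reduction.

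The one step that needs repair is the Gram identity you yourself flagged as delicate. As written, $\Mmat^\top\Mmat = \sum_{i=1}^{\cout}\Mmat_i^\top\Mmat_i$ cannot hold: with $\Mmat\in\Rbb^{\cout n^2\times\cin n^2}$ (rows indexed by output channels) and $\Mmat_i\in\Rbb^{\cin n^2\times n^2}$, the left-hand side is $\cin n^2\times\cin n^2$ while each summand on the right is only $n^2\times n^2$. For the vertical-stack-of-row-blocks orientation you chose, the correct decomposition is $\Mmat^\top\Mmat = \sum_i \Hmat_i^\top\Hmat_i$ with $\Hmat_i = \leftmat\Dmat(f_{i1}),\ldots,\Dmat(f_{i\cin})\rightmat$; moreover $\Hmat_i^\top$ stacks the \emph{transposed} blocks $\Dmat(f_{ij})^\top=\Dmat(f_{ij}^*)$ rather than $\Dmat(f_{ij})$, so it is not your $\Mmat_i$, and you would additionally need $|f_{ij}^*|=|f_{ij}|$ to land on the stated bound via Theorem~\ref{theorem:bound_sv_stacked_dbt}. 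The cleaner route --- and the one the paper takes --- is to arrange $\Mmat = \leftmat\Mmat_1,\ldots,\Mmat_{\cout}\rightmat$ as a \emph{horizontal} concatenation of your vertical stacks, compute $\Mmat\Mmat^\top = \sum_i \Mmat_i\Mmat_i^\top$, and apply subadditivity there; Theorem~\ref{theorem:bound_sv_stacked_dbt} then applies directly to each $\Mmat_i$ with no conjugation detour, and the bound follows immediately.
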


First, in order to prove \Cref{theorem:ch5-bound_max_sv_convolution}, we will need the following lemma which bounds the singular values of a matrix constructed from the concatenation of multiple matrices.
\begin{lemma}[Bound on the singular values of concatenation of matrices] \label{theorem:ch2-bound_concatenation_matrices}
  Let us define matrices $\Amat^{(1)}, \dots, \Amat^{(p)}$ with $\Amat^{(i)} \in \Rbb^{n \times n}$. Let us construct the matrix $\Mmat \in \Rbb^{n \times pn}$ as follows:
  \begin{equation}
    \Mmat \triangleq \leftmat \Amat^{(1)}, \dots, \Amat^{(p)} \rightmat
  \end{equation}
  where $\leftmat\ \cdot\ \rightmat$ define the concatenation operation. Then, we can bound the singular values of the matrix $\Mmat$ as follows:
  \begin{equation}
    \sigma_1(\Mmat) \leq \sqrt{\sum_{i=1}^p \sigma_1(\Amat^{(i)})^2}
  \end{equation}
\end{lemma}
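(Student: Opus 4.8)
The plan is to reduce the statement to a one-line estimate on the operator norm of a row block matrix. First I would use the standard identity $\sigma_1(\Mmat) = \norm{\Mmat}_2 = \norm{\Mmat^\top}_2 = \sup_{\xvec \in \Rbb^n,\ \norm{\xvec}_2 = 1} \norm{\Mmat^\top \xvec}_2$. Since $\Mmat = \leftmat \Amat^{(1)}, \dots, \Amat^{(p)} \rightmat$ is the horizontal concatenation of the blocks, its transpose $\Mmat^\top$ is the vertical stacking of the $\Amat^{(i)\top}$, so for every $\xvec \in \Rbb^n$ the vector $\Mmat^\top \xvec$ is exactly the concatenation of the vectors $\Amat^{(i)\top}\xvec$. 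Hence $\norm{\Mmat^\top \xvec}_2^2 = \sum_{i=1}^p \norm{\Amat^{(i)\top}\xvec}_2^2$.

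Next I would bound each term using the definition of the spectral norm: $\norm{\Amat^{(i)\top}\xvec}_2 \le \norm{\Amat^{(i)\top}}_2 \norm{\xvec}_2 = \sigma_1(\Amat^{(i)}) \norm{\xvec}_2$, using that a matrix and its transpose share the same largest singular value. Summing over $i$ gives $\norm{\Mmat^\top \xvec}_2^2 \le \left( \sum_{i=1}^p \sigma_1(\Amat^{(i)})^2 \right) \norm{\xvec}_2^2$; taking the supremum over unit vectors $\xvec$ and then a square root yields $\sigma_1(\Mmat) \le \sqrt{\sum_{i=1}^p \sigma_1(\Amat^{(i)})^2}$, as claimed. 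An equivalent route, which I would mention as a remark, is to write $\Mmat\Mmat^\top = \sum_{i=1}^p \Amat^{(i)}\Amat^{(i)\top}$ and apply Weyl's subadditivity inequality for the largest eigenvalue of a sum of positive semi-definite matrices, since $\sigma_1(\Mmat)^2 = \lambda_1(\Mmat\Mmat^\top)$ and $\lambda_1(\Amat^{(i)}\Amat^{(i)\top}) = \sigma_1(\Amat^{(i)})^2$.

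There is essentially no serious obstacle here: the lemma is a direct consequence of the sub-multiplicativity of the spectral norm together with the Pythagorean decomposition of $\norm{\Mmat^\top\xvec}_2^2$ along the block structure. The only point requiring a word of care is the bookkeeping of dimensions — ensuring one transposes so that the concatenation becomes a vertical stack acting on $\xvec \in \Rbb^n$ rather than on the larger space $\Rbb^{pn}$ — after which the argument is immediate. This lemma will then be combined with \Cref{theorem:bound_sv_stacked_dbt}, applied one column block at a time (i.e. once per output channel), to obtain \Cref{theorem:ch5-bound_max_sv_convolution}.
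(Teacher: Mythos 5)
Your proof is correct and is essentially the same argument as the paper's: the paper passes to the Gram matrix $\Mmat\Mmat^\top = \sum_{i} \Amat^{(i)}\Amat^{(i)\top}$ and applies subadditivity of $\lambda_1$ for positive semi-definite matrices, which is exactly what your primary route recovers in variational form by taking the supremum over unit vectors and using the Pythagorean split of $\norm{\Mmat^\top\xvec}_2^2$. You even state the paper's Gram-matrix route explicitly as your ``equivalent remark,'' so the two write-ups coincide in substance.
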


\begingroup
\allowdisplaybreaks
\addtolength{\jot}{1.5em}

\begin{proof}[\Cref{theorem:ch2-bound_concatenation_matrices}]
  \begin{align}
    \sigma_1\left(\Mmat\right)^2 &= \lambda_1\left(\Mmat \Mmat^\top\right) \\
    &= \lambda_1\left( \sum_{i=1}^p\Amat^{(i)} \Amat^{(i)\top}  \right) \\
    &\leq \sum_{i=1}^p \lambda_1\left( \Amat^{(i)} \Amat^{(i)\top}  \right) \\
    &\leq \sum_{i=1}^p \sigma_1\left( \Amat^{(i)} \right)^2 \\
    \Leftrightarrow \sigma_1\left(\Mmat\right) &\leq \sqrt{\sum_{i=1}^p \sigma_1(\Amat^{(i)})^2}
  \end{align}
  which concludes the proof.
\end{proof}

\noindent
Now, the proof of \Cref{theorem:ch5-bound_max_sv_convolution} is a combination of \Cref{theorem:ch2-bound_concatenation_matrices} and \Cref{theorem:bound_sv_stacked_dbt}.

\begin{proof}[\Cref{theorem:ch5-bound_max_sv_convolution}]
  Let us define the matrix $\Mmat^{(i)}$ as follows:
\begin{equation}
  \Mmat^{(i)} = \leftmat \Dmat(f_{1, i})^\top, \dots, \Dmat(f_{\cin, i})^\top \rightmat^\top .
\end{equation}
We can express the matrix $\Mmat$ as the concatenation of multiple $\Mmat^{(i)}$ matrices:
\begin{equation}
  \Mmat = \leftmat \Mmat^{(1)}, \dots, \Mmat^{(\cout)} \rightmat
\end{equation}
Then, we can bound the singular values of the matrix $\Mmat$ as follows:
% \begin{align}
%   \sigma_1\left(\Mmat\right) &\stackrel{\text{by \Cref{theorem:ch2-bound_concatenation_matrices}}}{\leq} \sqrt{\sum_{i=1}^{\cout} \sigma_1(\Mmat^{(i)})^2} \\
%   \sigma_1\left(\Mmat\right) &\stackrel{\text{by \Cref{theorem:bound_sv_stacked_dbt}}}{\leq} \sqrt{\sum_{j=1}^{\cout} \sup_{\omega_1, \omega_2 \in [0, 2\pi]^2} \sum_{i=1}^{\cin} |f_{i,j}(\omega_1, \omega_2)|^2 }
% \end{align}
\begin{align}
  \sigma_1\left(\Mmat\right) &\leq \sqrt{\sum_{i=1}^{\cout} \sigma_1(\Mmat^{(i)})^2} \\
  \sigma_1\left(\Mmat\right) &\leq \sqrt{\sum_{j=1}^{\cout} \sup_{\omega_1, \omega_2 \in [0, 2\pi]^2} \sum_{i=1}^{\cin} |f_{i,j}(\omega_1, \omega_2)|^2 }
\end{align}
where the first inequality is due to \Cref{theorem:ch2-bound_concatenation_matrices} and the second one is due to \Cref{theorem:bound_sv_stacked_dbt}.
This concludes the proof.
\end{proof}

\endgroup

\Cref{theorem:ch5-bound_max_sv_convolution} depends on the convolution matrix $\Mmat$, however, we can easily formulate the bound with the values of a 4-dimensional kernel.
Let us define a kernel $\Kmat \in \Rbb^{\cout \times \cin \times s \times s}$, a padding $p \in \Nbb$ and $d = \lfloor s / 2 \rfloor$ the degree of the trigonometric polynomial, then:
\begin{equation}
  f_{ij}(\omega_1, \omega_2) = \sum_{h_1 = -d}^d \sum_{h_2 = -d}^d k_{i, j, h_1,h_2} e^{\ci (h_1 \omega_1 + h_2 \omega_2)}.
\end{equation}
where $k_{i, j, h_1,h_2} = \leftmat \Kmat \rightmat_{i, j, a, b}$ with $a =  s - p - 1 + h_1$ and $b =  s - p - 1 + h_2$.

In the rest of the chapter, we will refer to the bound in \Cref{theorem:ch5-bound_max_sv_convolution} applied to a kernel as $\lipbound$ and we denote $\lipbound(\Kmat)$ the Lipschitz upper bound of the convolution performed by the kernel $\Kmat$.

%%%%%%%%%%%%%%%%%%%%%%%%%%%%%%%%%%%%%%%%%%%%%%%%%%%%%%%%%%%%%%%%%%%%%%%%%%%%%%%
\section{Computation and Performance Analysis of LipBound}
\label{section:ch5-computation_and_performance_analysis_of_lipbound}
%%%%%%%%%%%%%%%%%%%%%%%%%%%%%%%%%%%%%%%%%%%%%%%%%%%%%%%%%%%%%%%%%%%%%%%%%%%%%%%

This section aims at analyzing the bound introduced in \Cref{theorem:ch5-bound_max_sv_convolution}.
First, we present an algorithm to efficiently compute the bound, we analyze its tightness by comparing it against the true largest singular value.
Finally, we compare the efficiency and the accuracy of our bound against the state-of-the-art methods.

%%%%%%%%%%%%%%%%%%%%%%%%%%%%%%%%%%%%%%%%%%%%%%%%%%%%%%%%%%%%%%%%%%%%%%%%%%%%%%%%
\subsection{The Maximum Modulus of a Trigonometric Polynomial}
\label{subsection:ch5-computing_the_maximum_modulus_of_a_trigonometric_polynomial}
%%%%%%%%%%%%%%%%%%%%%%%%%%%%%%%%%%%%%%%%%%%%%%%%%%%%%%%%%%%%%%%%%%%%%%%%%%%%%%%%

In order to compute $\lipbound$ from \Cref{theorem:ch5-bound_max_sv_convolution}, we have to compute the maximum modulus of several trigonometric polynomials.
However, finding the maximum modulus of a trigonometric polynomial has been known to be NP-Hard~\cite{pfister2018bounding}, and in practice they exhibit low convexity (see \Cref{figure:contour_plot_trigonometric_polynomials}).
We found that for 2-dimensional kernels, a simple grid search algorithm such as PolyGrid (see \Cref{algorithm:ch5-polygrid}), works better than more sophisticated approximation algorithms (\eg ~\citet{green1999calculating,de2009finding}).
This is because the complexity of the computation depends on the degree of the polynomial which is equal to $\lfloor s / 2 \rfloor$ where $s$ is the size of the kernel and is usually small in most practical settings (\eg $s=3$).
Furthermore, the grid search algorithm can be parallelized effectively on CPUs or GPUs and runs within less time than alternatives with lower asymptotic complexity. 

\begin{figure}[htb]
  \centering
  \begin{subfigure}[b]{.49\textwidth}
    \centering
    \includegraphics[scale=0.35]{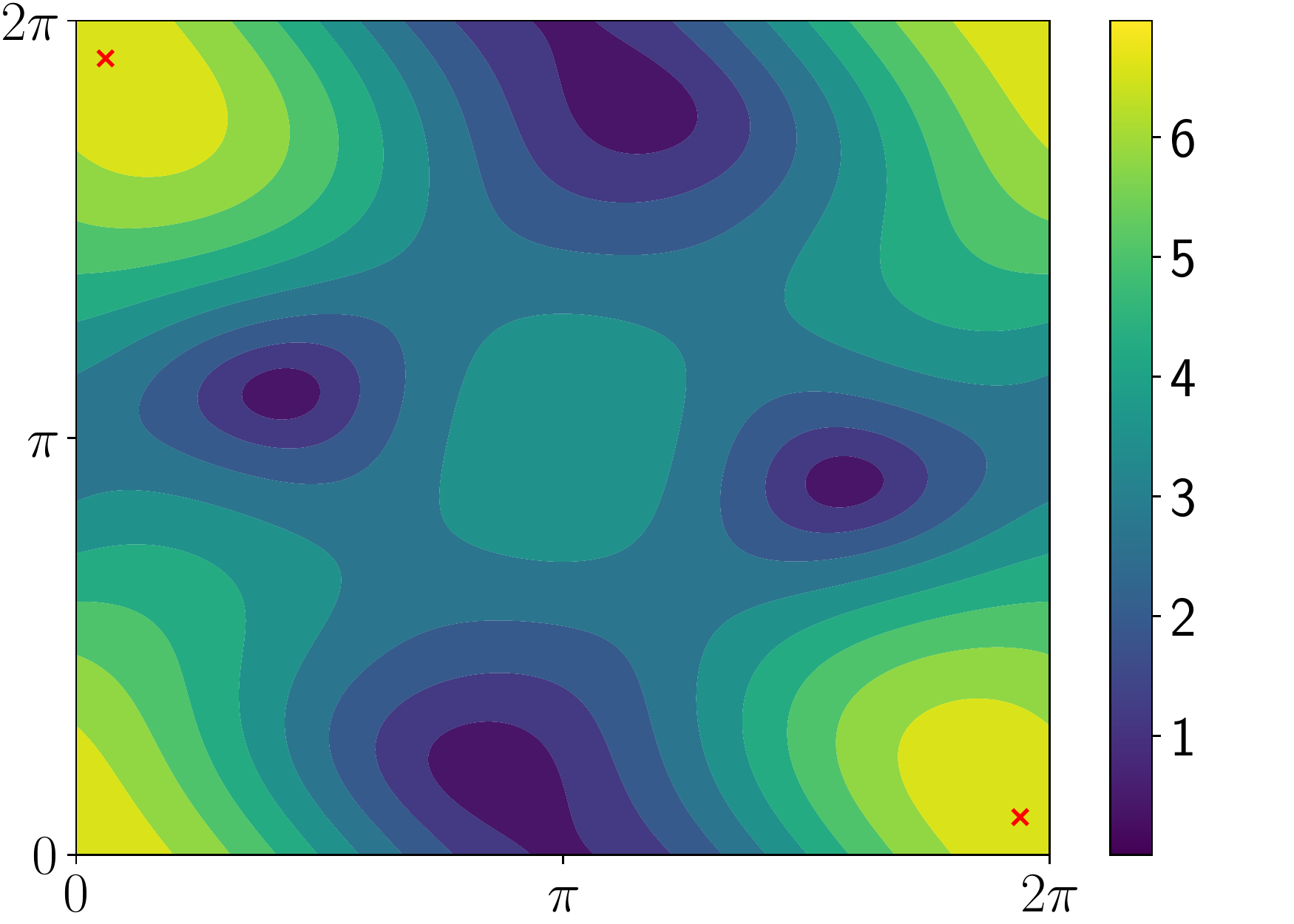}
    \caption{kernel $1\times3\times3$}
  \end{subfigure}
  \hfill
  \begin{subfigure}[b]{.49\textwidth}
    \centering
    \includegraphics[scale=0.35]{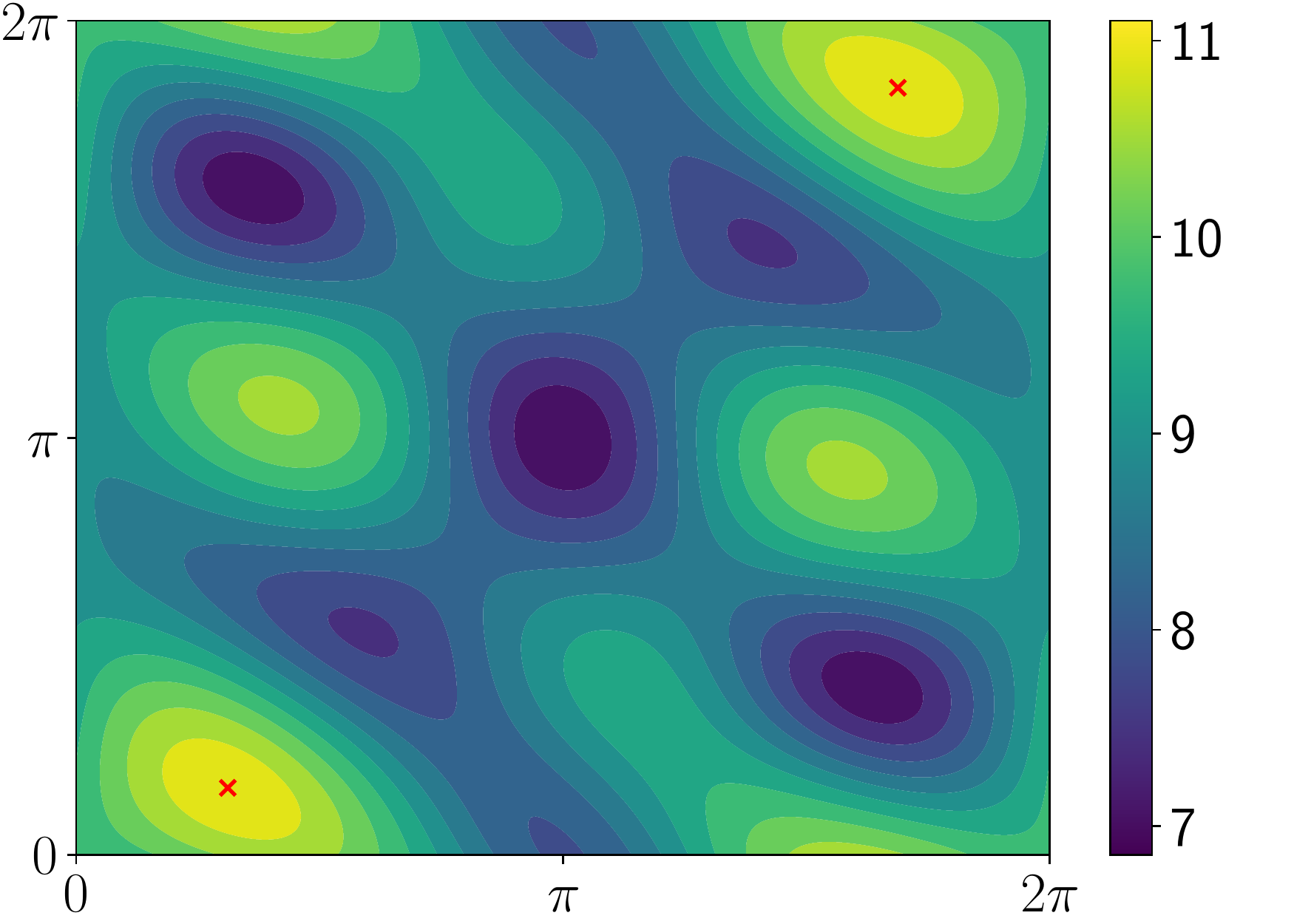}
    \caption{kernel $9\times3\times3$}
  \end{subfigure}
  \par\bigskip
  \begin{subfigure}[b]{.49\textwidth}
    \centering
    \includegraphics[scale=0.35]{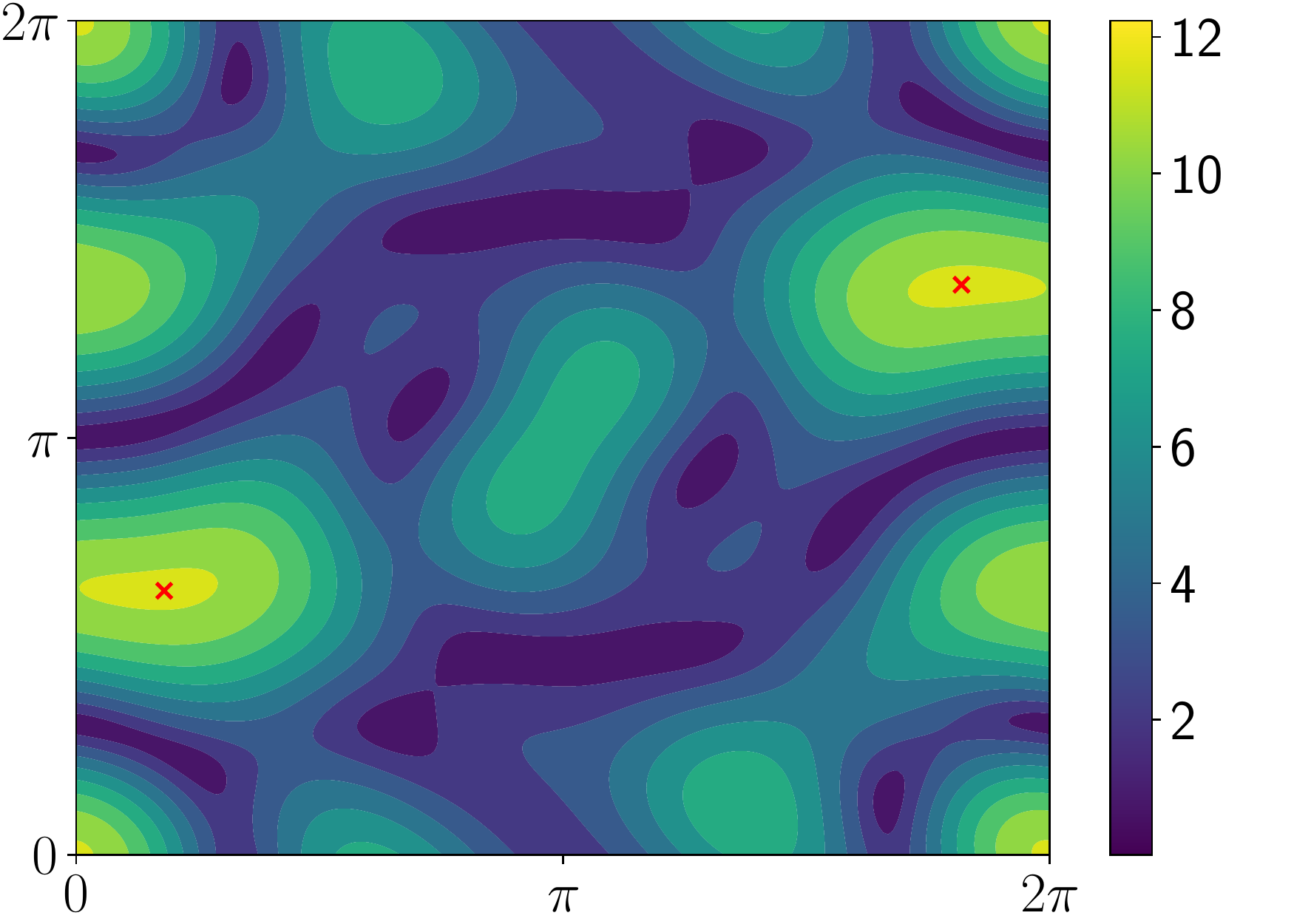}
    \caption{kernel $1\times5\times5$}
  \end{subfigure}
  \begin{subfigure}[b]{.49\textwidth}
    \centering
    \includegraphics[scale=0.35]{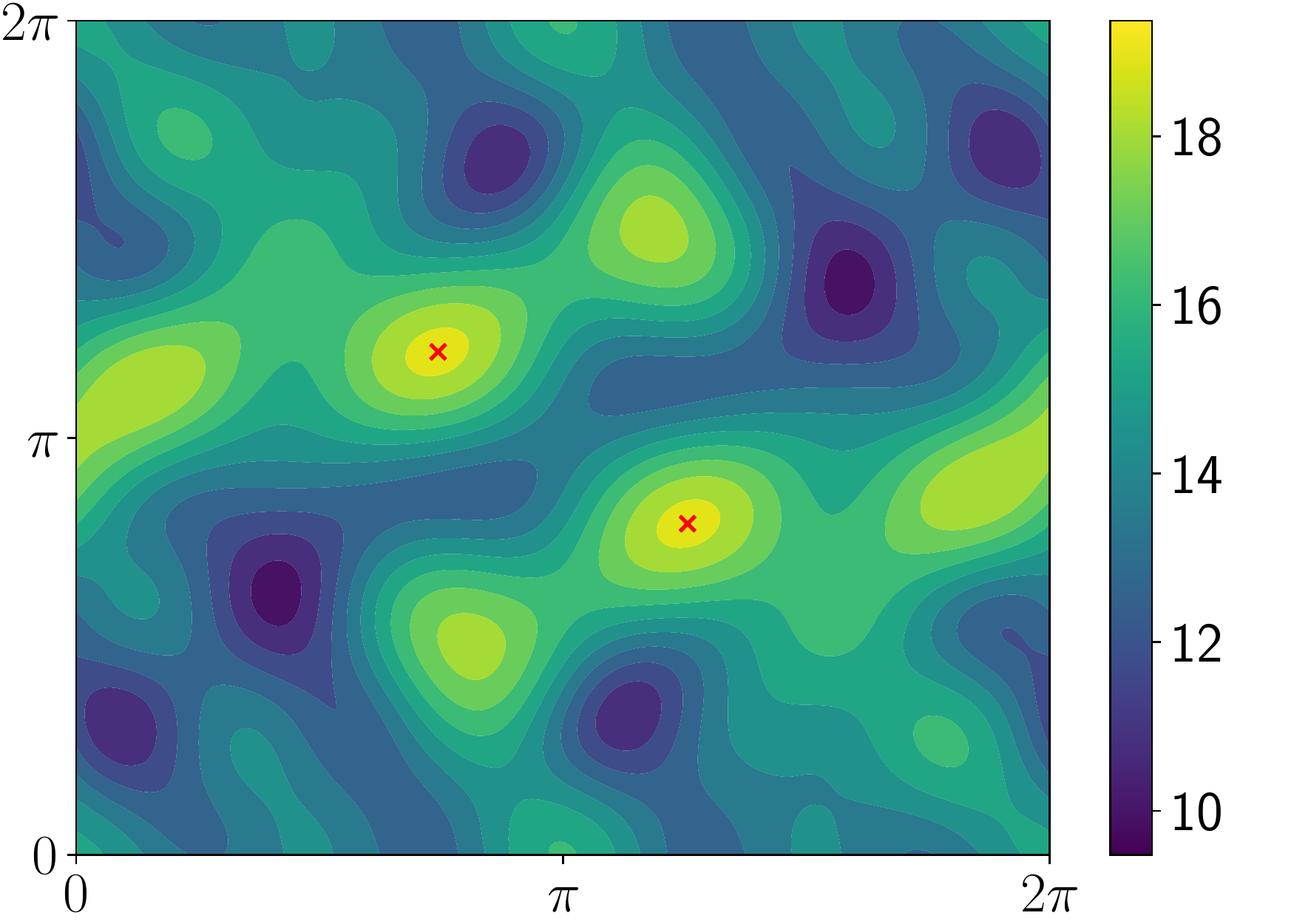}
    \caption{kernel $9\times5\times5$}
  \end{subfigure}
  \caption{Contour plots of multivariate trigonometric polynomials where the values of the coefficient are the values of a random convolutional kernel. The red dots in the figures represent the maximum modulus of the trigonometric polynomials.}
  \label{figure:contour_plot_trigonometric_polynomials}
\end{figure}%

\begin{algorithm}[htb]
  \begin{algorithmic}[1]
    \Procedure{PolyGrid}{$f, S$}\Comment{polynomial $f$, number of samples $S$}
      \State $\sigma \gets 0$, $\omega_1 \gets 0$, $\epsilon \gets \frac{2\pi}{S}$
      \For{$i=0$ \textbf{to} $S-1$}
        \State $\omega_2 \gets 0$
	\For{$j=0$ \textbf{to} $S-1$}
	  \State $\omega_2 \gets \omega_2 + \epsilon$
	  \State $\sigma \gets \max( \sigma, |f(\omega_1, \omega_2)|)$
	\EndFor
	\State $\omega_1 \gets \omega_1 + \epsilon$
      \EndFor
      \State \textbf{return} $\sigma$ \Comment{approximated maximum modulus of $f$}
    \EndProcedure
  \end{algorithmic}
  \caption{PolyGrid Algorithm}
  \label{algorithm:ch5-polygrid}
\end{algorithm}

To fix the number of samples $S$ in the grid search, we rely on the work of~\cite{pfister2018bounding}, who has analyzed the quality of the approximation depending on $S$.
Following this work we first define $\Theta_S$, the set of $S$ equidistant sampling points as follows:
\begin{equation}
  \Theta_S \triangleq \left\{ \omega \mid \omega = k \cdot \frac{2\pi}{S} \mbox{ with }  k = 0, \dots, S-1 \right\}.
\end{equation}
Then, for a trigonometric polynomial $f: [0, 2\pi]^2 \rightarrow \Cbb$, we have:
\begin{equation}
  \max_{\omega_1, \omega_2 \in [0,2\pi]^2} \left| f(\omega_1, \omega_2) \right| \leq (1 - \alpha)^{-1} \max_{\omega_1', \omega_2' \in \Theta_S^2} \left| f(\omega_1', \omega_2') \right|,
\end{equation}
where $d$ is the degree of the polynomial and $\alpha = 2d / S$.
For a $3\times3$ kernel which gives a trigonometric polynomial of degree 1, we use $S = 10$ which gives $\alpha = 0.2$.
Using this result, we can now compute $\lipbound$ for a convolution operator with $\cout$ output channels as per \Cref{theorem:bound_sv_stacked_dbt}.
 
% The code to for computing $\lipbound$ with NumPy~\cite{numpy} and PyTorch~\cite{paszke2019pytorch} is publicly available.\footnote{\url{https://github.com/MILES-PSL/upper_bound_lipschitz_convolutional_layers}}.

%%%%%%%%%%%%%%%%%%%%%%%%%%%%%%%%%%%%%%%%%%%%%%%%%%%%%%%%%%%%%%%%%%%%%%%%%%%%%%%%
\subsection{Analysis of the Tightness of the Bound}
\label{subsection:ch2-analysis_of_the_tightness_of_the_bound}
%%%%%%%%%%%%%%%%%%%%%%%%%%%%%%%%%%%%%%%%%%%%%%%%%%%%%%%%%%%%%%%%%%%%%%%%%%%%%%%%

In this section, we study the tightness of the bound with respect to the dimensions of the doubly-block Toeplitz matrices.
For each $n \in \Nbb$, we define the matrix  $\Mmat^{(n)}$ of size $kn^2 \times n^2$ as follows:
\begin{equation}
  \Mmat^{(n)} \triangleq \textstyle \leftmat \Dmat^{(n)\top}(f_1), \dots, \Dmat^{(n)\top}(f_k) \textstyle \rightmat^\top
\end{equation}
where the matrices $\Dmat^{(n)}(f_i)$ are of size $n^2 \times n^2$. 
To analyze the tightness of the bound, we define the function $\Gamma$, which computes the difference between $\lipbound$ and the largest singular value of the function $\Mmat^{(n)}$:
\begin{equation} \label{equation:function_convergence}
  \Gamma(n) = \lipbound(\Kmat_{\Mmat^{(n)}}) - \sigma_1(\Mmat^{(n)})
\end{equation}
where $\Kmat_{\Mmat^{(n)}}$ is the convolution kernel associated with the matrix $\Mmat^{(n)}$.

\begin{figure}[ht]
  \centering
  \includegraphics[width=\scalefigure\textwidth]{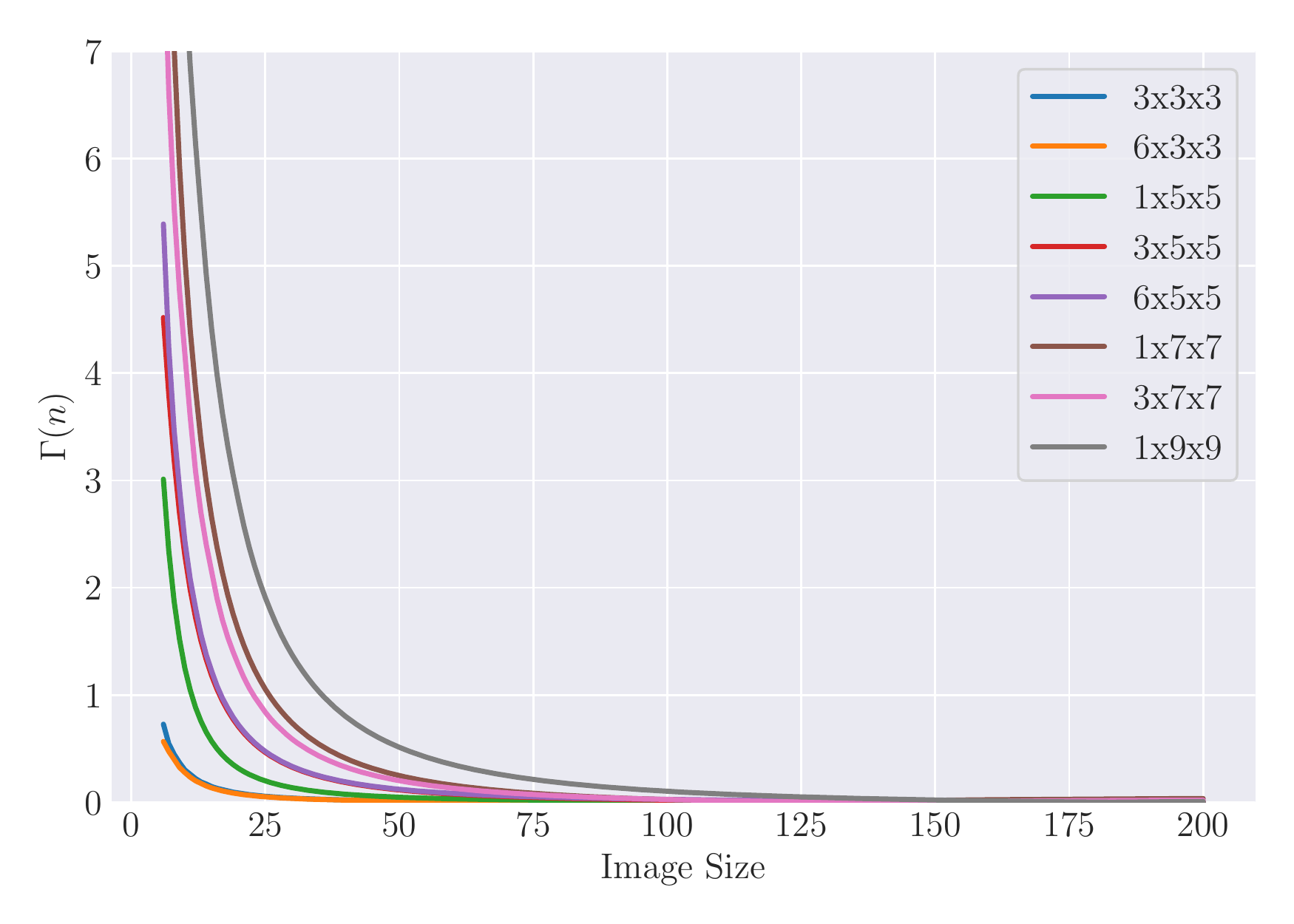}
  \caption{Representation of the function $\Gamma(n)$ defined for different kernel size.}
  \label{figure:convergence_bound}
\end{figure}

To compute a very close approximation of the exact largest singular value of $\Mmat^{(n)}$ for a specific $n$, we use the Implicitly Restarted Arnoldi Method (IRAM) ~\cite{lehoucq1996deflation} available in SciPy.
The results of this experiment are presented in \Cref{figure:convergence_bound}.
We observe that the difference between the bound and the actual value (approximation gap) quickly decreases as the input size increases.
For an input size of $50$, the approximation gap is as low as $0.012$ using a standard $6\times3\times3$ convolution kernel.
For a larger input size such as ImageNet ($224$), the gap is lower than $4.10^{-4}$.
Therefore $\lipbound$ gives an almost exact value of the largest singular value of the operator matrix for most realistic settings.

%%%%%%%%%%%%%%%%%%%%%%%%%%%%%%%%%%%%%%%%%%%%%%%%%%%%%%%%%%%%%%%%%%%%%%%%%%%%%%%%
\subsection{Comparison of LipBound with State-of-the-Art Approaches}
\label{subsection:ch5-comparison_of_lipbound_with_other_state-of-the-art_approaches}
%%%%%%%%%%%%%%%%%%%%%%%%%%%%%%%%%%%%%%%%%%%%%%%%%%%%%%%%%%%%%%%%%%%%%%%%%%%%%%%%

\begin{table}[ht]
  \centering
  \sisetup{%
    table-align-uncertainty=true,
    separate-uncertainty=true,
    detect-weight=true,
    detect-inline-weight=math
  }
  {\small
  \begin{tabular}%{lrccccrcccc}
    {
      lr
      S[table-format=1.3]@{\,\( \pm \)\,}S[table-format=1.3]
      S[table-format=4.2]@{\,\( \pm \)\,}S[table-format=3.2]
      r
      S[table-format=1.3]@{\,\( \pm \)\,}S[table-format=1.3]
      S[table-format=4.2]@{\,\( \pm \)\,}S[table-format=3.2]
    }
    \toprule
      &   & \multicolumn{4}{c}{\textbf{1x3x3}} &   & \multicolumn{4}{c}{\textbf{32x3x3}} \\
    \cmidrule{3-6} \cmidrule{8-11}
    &   & \multicolumn{2}{c}{\textbf{Ratio}} & \multicolumn{2}{c}{\textbf{Time (ms)}}
    &   & \multicolumn{2}{c}{\textbf{Ratio}} & \multicolumn{2}{c}{\textbf{Time (ms)}} \\
    \midrule
    \citeauthor{sedghi2018singular} &   & 0.431 & 0.042 & 1088 & 251  & & 0.666 & 0.123 & 1729 & 399 \\
    \citeauthor{singla2019bounding} &   & 1.293 & 0.126 & 1.90 & 0.48 & & 1.441 & 0.188 & 1.90 & 0.46 \\
    \citeauthor{farnia2018generalizable} &   & 0.973 & 0.006 & 4.30 & 0.64 & & 0.972 & 0.004 & 4.93 & 0.67 \\
    \midrule
    \midrule
    LipBound &  & 0.992 & 0.012 & 0.49 & 0.05 & & 0.984 & 0.021 & 0.63 & 0.46 \\
    \bottomrule
  \end{tabular}%
  }
  \caption{Comparison of the accuracy of approximation methods for computing an approximation of the largest singular value of a convolution layer.}
  \label{table:ch5-compare_bounds}%
\end{table}

\begin{table}[h]
  \centering
  \sisetup{%
    table-number-alignment=center,
    table-align-uncertainty=true,
    separate-uncertainty=true,
    detect-weight=true,
    detect-inline-weight=math
  }
  \begin{tabular}
    {
      lr
      S[table-format=4.2,table-number-alignment=right]@{\,\( \pm \)\,}S[table-format=2.2,table-number-alignment=left]
      r
      S[table-format=6.2,table-number-alignment=right]@{\,\( \pm \)\,}S[table-format=3.2,table-number-alignment=left]
      r
      S[table-format=1.2]
    }
    \toprule
    \textbf{Network} & & \multicolumn{2}{c}{\textbf{LipBound (ms)}} & & \multicolumn{2}{c}{\textbf{Power Method (ms)}} & & \textbf{Ratio} \\
    \midrule
    AlexNet & & 4.75 & 1.10 & & 38.75 & 2.52 & & 8.14 \\
    \midrule
    ResNet 18 & & 29.88 & 1.73 & & 148.35 & 14.92 & & 4.96 \\
    ResNet 34 & & 54.73 & 3.62 & & 266.85 & 25.35 & & 4.87 \\
    ResNet 50 & & 60.77 & 4.62 & & 467.61 & 36.52 & & 7.69 \\
    ResNet 101 & & 102.72 & 11.53 & & 817.06 & 102.87 & & 7.95 \\
    ResNet 152 & & 158.80 & 20.84 & & 1373.57 & 164.37 & & 8.64 \\
    \midrule
    DenseNet 121 & & 125.55 & 14.59 & &  937.35 &  11.52 & & 7.46 \\
    DenseNet 161 & & 176.11 & 19.13 & & 1292.61 &  30.50 & & 7.33 \\
    DenseNet 169 & & 188.03 & 19.74 & & 1372.62 &  21.16 & & 7.29 \\
    DenseNet 201 & & 281.13 & 23.41 & & 1930.19 & 170.79 & & 6.86 \\
    \midrule
    VGG 11 & & 13.73 & 1.19 & &  81.78 & 4.45 & & 5.95 \\
    VGG 13 & & 14.96 & 1.99 & & 102.04 & 4.20 & & 6.82 \\
    VGG 16 & & 21.92 & 1.94 & & 132.29 & 5.99 & & 6.03 \\
    VGG 19 & & 29.05 & 0.66 & & 162.28 & 4.87 & & 5.58 \\
    \midrule
    WideResNet 50-2 & & 113.28 & 45.44 & & 468.74 & 6.54 & & 4.13 \\
    \midrule
    SqueezeNet 1-0 & & 18.44 & 5.93 & & 222.40 & 25.49 & & 12.05 \\
    SqueezeNet 1-1 & & 18.26 & 6.65 & & 209.80 &  3.59 & & 11.48 \\
    \bottomrule
  \end{tabular}%
  \caption{Efficiency of LipBound computation \vs the Power Method with 10 iterations on full networks.}
  \label{table:ch5-efficiency_lipbound_full_model}%
\end{table}%

In this section we compare our PolyGrid algorithm with the values obtained using alternative approaches.
We consider the 3 alternative techniques by~\citet{sedghi2018singular,singla2019bounding,farnia2018generalizable} which have been described in \Cref{chapter:ch3-related_work}, \Cref{section:ch3-related_work_on_lipschitz_regularization}.

To compare the different approaches, we extracted 20 kernels from a trained model.
For each kernel we construct the corresponding doubly-block Toeplitz matrix and compute its largest singular value.
Then, we compute the ratio between the approximation obtained with the considered approach and the approximated singular value obtained by IRAM, and average the ratios over the 20 kernels.
Thus good approximations result in approximation ratios that are close to 1.
The results of this experiment are presented in \Cref{table:ch5-compare_bounds}.
The comparison has been made on a Tesla V100 GPU.
The time was computed with the PyTorch CUDA profiler and we ``warmed'' up the GPU before starting the timer for caching purposes. 

The method introduced by~\citet{sedghi2018singular} and presented in~\Cref{subsection:ch3-singular_values_of_convolutional_layers} computes the largest singular value of convolution layers based on doubly-block circulant matrices.
Doubly-block circulant matrices perform a convolution with a ``wrapping around'' operation which do not correspond to the more general setting.
We can see in \Cref{table:ch5-compare_bounds} that the values differ by an important margin.
This technique is also computationally expensive as it requires computing the SVD of $n^2$ small matrices where $n$ is the size of inputs.
\citet{singla2019bounding} have shown that the singular value of the reshape kernel is a bound on the largest singular value of the convolution layer.
Their approach is very efficient but the approximation is loose and overestimate the real value.
As said previously, the power method provides a good approximation at the expense of the efficiency.
We also compare our approach to the power method with 10 iterations from ~\citet{farnia2018generalizable} (see~\Cref{algorithm:ch3-power_method_generic}).
The results show that our proposed technique: PolyGrid algorithm can get the best of both worlds.
It achieves a near perfect accuracy while being very efficient to compute.

The results of \Cref{table:ch5-compare_bounds} shows the performance for the computation for only one convolution layer.
However, during the training Lipbound or the power method need to be computed for every layer of the network and the computation time is dependent on the architecture of the network, for example, the size of the activations or the size of the kernels.
In ~\Cref{table:ch5-efficiency_lipbound_full_model}, we compare our approach method against the power method on the full architecture, \ie, the time needed for the computation on all the layers of the networks.
We measure on the following convolutional architectures: AlexNet \cite{krizhevsky2012imagenet}, ResNet \cite{he2016deep}, DenseNet \cite{huang2017densely}, VGG \cite{simonyan2014very}, WideResNet \cite{zagoruyko2016wide}, SqueezeNet \cite{iandola2016squeezenet}.
\Cref{table:ch5-efficiency_lipbound_full_model} shows that our approach is systematically faster than the power method by a factor up to 12 when considering all the layers of the networks.
This demonstrates the scalability of our method.

% for full networks Lipbound is systematically faster than the power method by a factor up to 12.

% We also compare in~\Cref{table:ch5-efficiency_lipbound_full_model} our method against the power method of~\citet{farnia2018generalizable} on the following convolutional architectures: 

%%%%%%%%%%%%%%%%%%%%%%%%%%%%%%%%%%%%%%%%%%%%%%%%%%%%%%%%%%%%%%%%%%%%%%%%%%%%%%%%
\section{Lipschitz Regularization for Adversarial Robustness}
\label{section:ch5-lipschitz_regularization_for_adversarial_robustness}
%%%%%%%%%%%%%%%%%%%%%%%%%%%%%%%%%%%%%%%%%%%%%%%%%%%%%%%%%%%%%%%%%%%%%%%%%%%%%%%%

One promising application of Lipschitz regularization is in the area of adversarial robustness.
Empirical techniques to improve robustness against adversarial examples such as Adversarial Training only impact the training data,  and often show poor generalization capabilities~\cite{schmidt2018adversarially}.
\citet{farnia2018generalizable} have shown that the adversarial generalization error depends on the Lipschitz constant of the network, which suggests that the adversarial test error can be improved by applying Lipschitz regularization in addition to adversarial training.

In this section, we illustrate the usefulness of LipBound by training a Wide ResNet~\citep{zagoruyko2016wide} with Lipschitz regularization and adversarial training.
Our regularization scheme is inspired by the one used by \citet{yoshida2017spectral} but instead of using the power method, we use our \textbf{PolyGrid} algorithm presented in~\Cref{subsection:ch5-computing_the_maximum_modulus_of_a_trigonometric_polynomial} which efficiently computes an upper bound on the largest singular value of convolution layers.

We introduce the \textbf{AT+LipReg} loss to combine Adversarial Training and our Lipschitz regularization scheme in which layers with a large Lipschitz constant are penalized.
We consider a neural network $N_\Omega : \Xset \rightarrow \Yset$ with $\depth$ layers $\phi^{(1)}_{\Wmat^{(1)}, \bvec^{(1)}}, \dots, \phi^{(\depth)}_{\Wmat^{(\depth)}, \bvec^{(\depth)}}$ where $\Wmat^{(1)}, \dots, \Wmat^{(\depth)}$ are the weight matrices and $\Omega$ is the union of all the parameters as defined in~\Cref{definition:ch2-neural_networks}.
Given a distribution $\Dset$ over $\Xset \times \Yset$, we can train the parameters of the network by minimizing the AT+LipReg loss as follows:
\begin{equation} \label{equation:ch5-obj_function}
  \min_{\Omega} \Ebb_{\xvec,y \sim \Dset} \left[ L(N_\Omega(\xvec + \adv^{\text{adv}}_\Omega(\xvec)), y) + r(\Omega) \right]
\end{equation}
where $L$ is the cross-entropy loss function, $\adv^{\text{adv}}_\Omega(\xvec)$ is an adversarial perturbation following the loss maximization strategy presented in~\Cref{subsubsection:ch2-adversarial_attacks} and the regularization function $r$ is defined as follows:
\begin{equation}
  r(\Omega) =  C_1 \underbrace{\sum_{(\Wmat, \bvec) \in \Omega} \left( \norm{\Wmat}_\fro + \norm{\bvec}_2 \right)}_{
  \text{$\ell_2$ regularization}} + C_2 \underbrace{\vphantom{\sum_{(\Wmat, \bvec) \in \Omega}} \sum_{i=1}^{\depth-1} \log\left(\lipbound\left(\Kmat_{\Wmat^{(i)}}\right)\right)}_{\text{Lipschitz regularization}} 
\end{equation}
where $C_1$, $C_2$ are two user-defined hyper-parameters.
Note that regularizing the sum of logs is equivalent to regularizing the product of all the $\lipbound$ which is an upper bound on the global Lipschitz constant.
In practice, we also include the upper bound on the Lipschitz of the batch normalization since we can compute it very efficiently (see Appendix C.4.1 of ~\citet{tsuzuku2018lipschitz}) but we omit the last fully connected layer.

In this section, we compare the robustness of Adversarial Training~\cite{goodfellow2014explaining, madry2018towards} against the combination of Adversarial Training and Lipschitz regularization.
To regularize the Lipschitz constant of the network, we use the objective function defined in Equation~\ref{equation:ch5-obj_function}.
We train Lipschitz regularized neural networks with LipBound (see~\Cref{theorem:ch5-bound_max_sv_convolution}) implemented with PolyGrid (see~\Cref{algorithm:ch5-polygrid}) (AT+LipBound) with $S = 10$ or with the specific power method for convolutions introduced by~\citet{farnia2018generalizable} with 10 iterations (AT+PM).

Table~\ref{table:ch5-cifar_robustness} shows the gain in robustness against strong adversarial attacks across different datasets.
We can observe that both AT+LipBound and AT+PM offer a better defense against adversarial attacks and that AT+LipBound offers a further improvement over the Power Method.
\Cref{figure:ch5-attacks_pgd,figure:ch5-attacks_cw} show the Accuracy under attack with different numbers of iterations of the PGD algorithm.
Table~\ref{table:ch5-results_imagenet_dataset} presents our results on the ImageNet Dataset.
First, we can observe that the AT+LipReg trained networks offer a better generalization than with standalone Adversarial Training.
Secondly, we can observe the gain in robustness against strong adversarial attacks.
Network trained with Lipschitz regularization and Adversarial Training offer a consistent increase in robustness across $\ell_\infty$ and $\ell_2$ attacks with different $\epsilon$ value.
We can also note that increasing the regularization leads to an increase in generalization and robustness.

\afterpage{
\begin{figure}[p!]
   \centering
   \begin{subfigure}[b]{\textwidth}
     \centering
     \includegraphics[width=\scalefigure\textwidth]{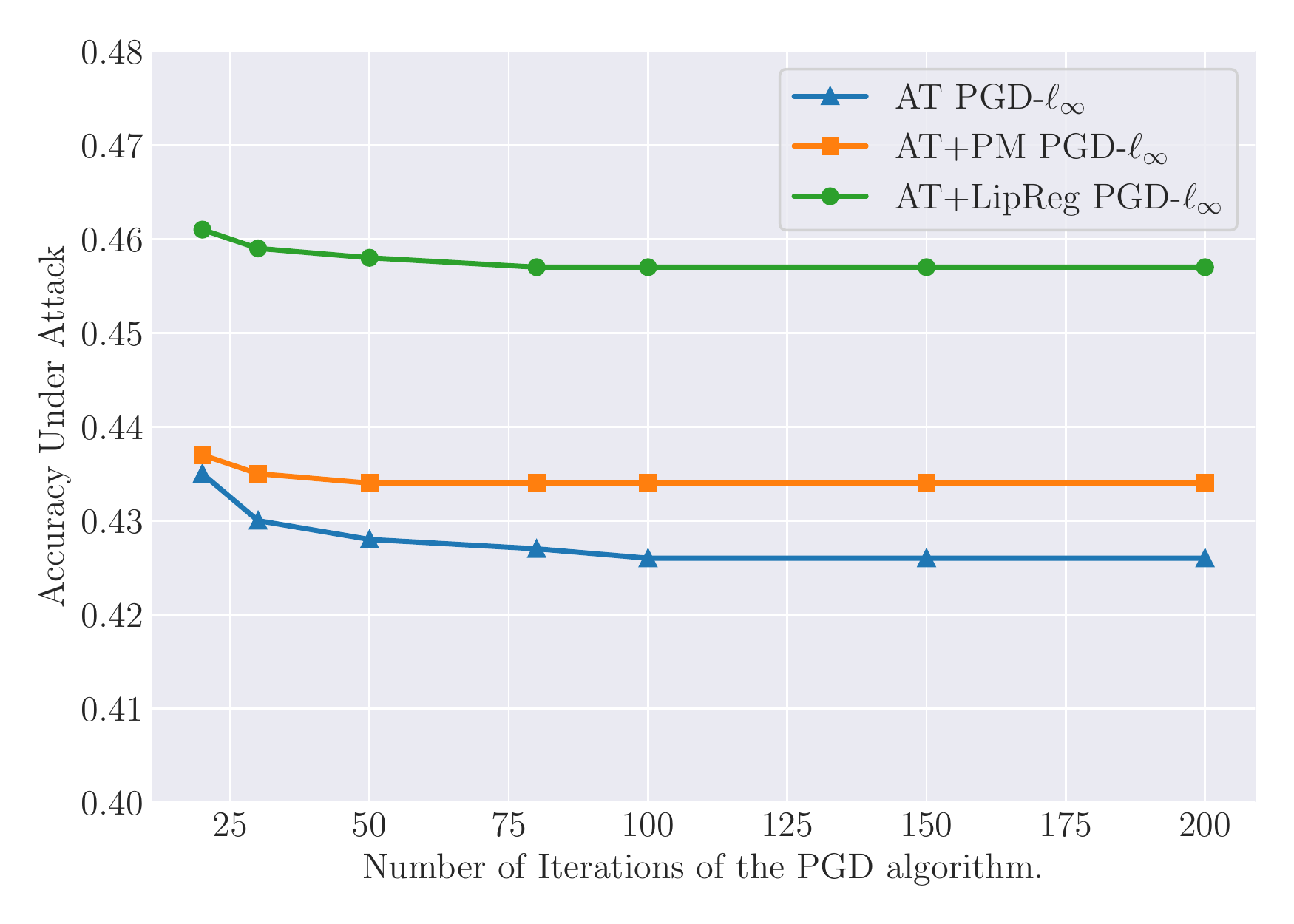}
     \caption{Robustness against $\ell_\infty$ attacks for different classifiers trained with Adversarial Training given the number of iterations.}
     \label{figure:ch5-attacks_pgd}
   \end{subfigure}
   ~\\[1cm]
   \begin{subfigure}[b]{\textwidth}
     \centering
     \includegraphics[width=\scalefigure\textwidth]{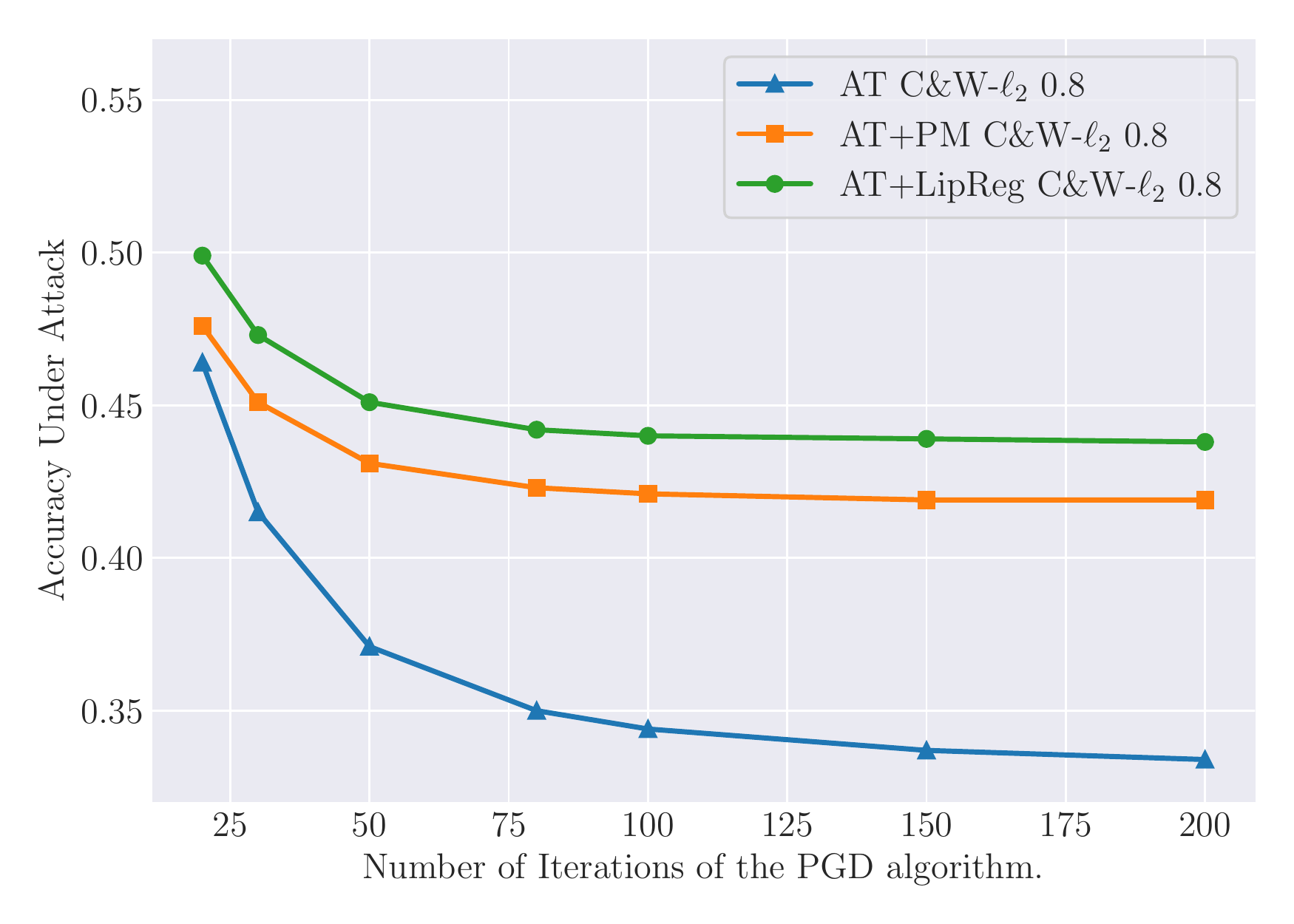}
     \caption{Robustness against $\ell_\infty$ attacks for different classifiers trained with Adversarial Training given the number of iterations.}
     \label{figure:ch5-attacks_cw}
   \end{subfigure}
   ~\\[1cm]
   \caption{Accuracy under attack on CIFAR10 test set with $\ell_\infty$ and $\ell_2$ attacks for several classifiers trained with Adversarial Training given the number of iterations.}
\end{figure}
\clearpage
}

\afterpage{
\begin{figure}[p!]
   \centering
   \begin{subfigure}[b]{\textwidth}
     \centering
     \includegraphics[width=\scalefigure\textwidth]{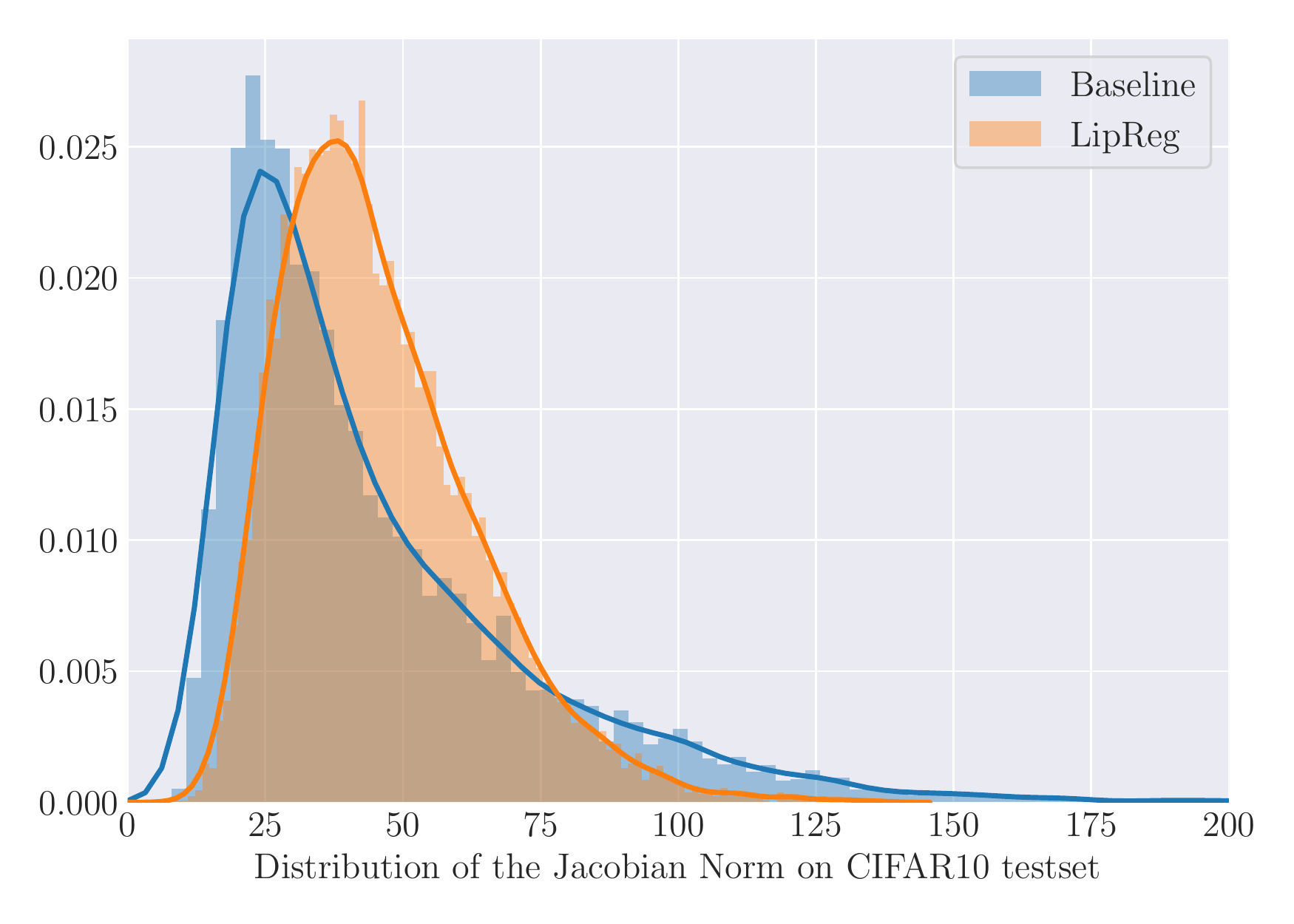}
     \caption{Comparison of the distribution of the norm of the Jacobian of the baseline model against the model trained with Lipschitz regularization.}
     \label{figure:ch5-jacobian_distribution_v1}
   \end{subfigure}
   ~\\[1cm]
   \begin{subfigure}[b]{\textwidth}
      \centering
      \includegraphics[width=\scalefigure\textwidth]{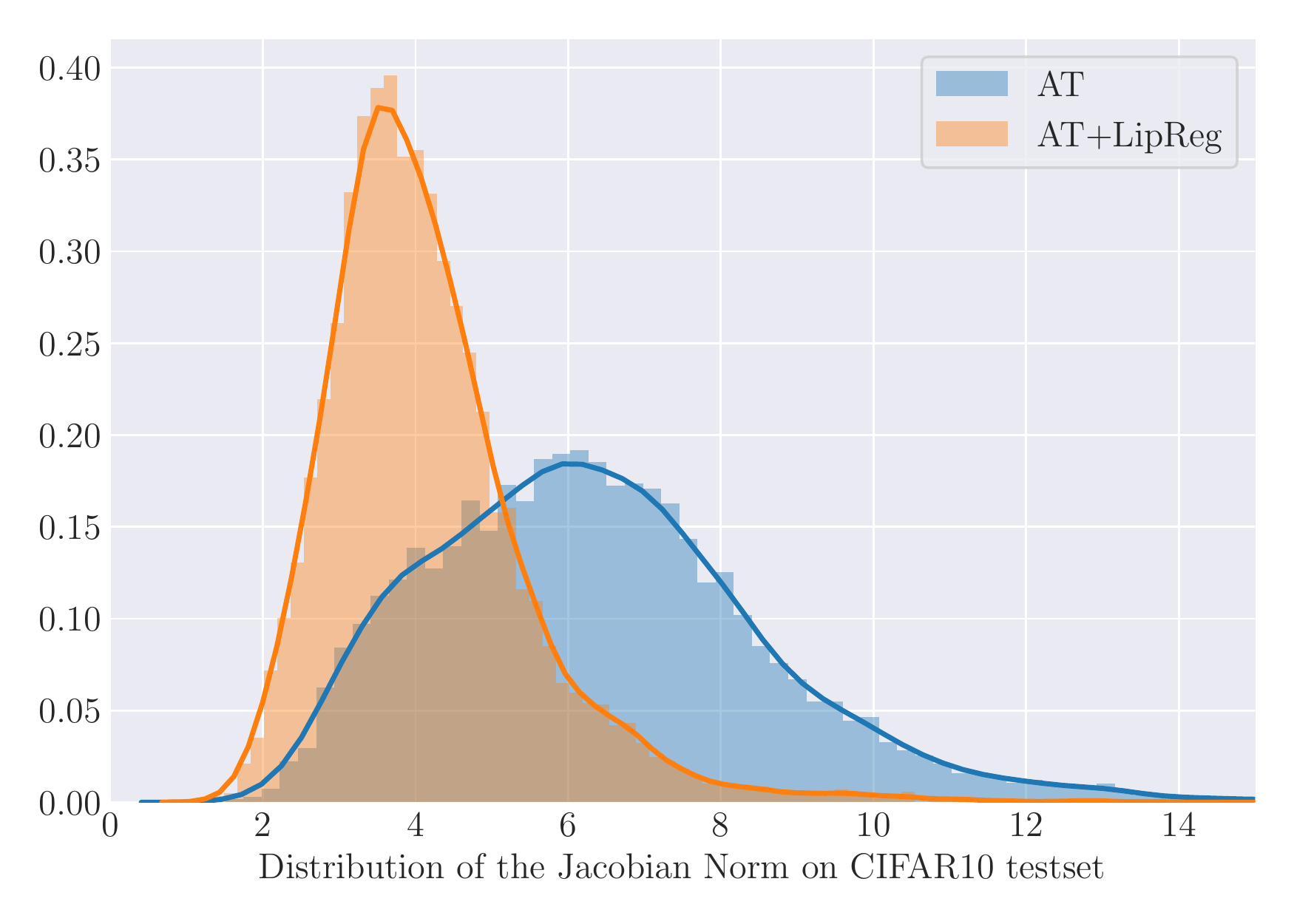}
      \caption{Comparison of the distribution of the norm of the Jacobian of the model trained with Adversarial training against the model trained with Adversarial training and Lipschitz regularization.}
      \label{figure:ch5-jacobian_distribution_v2}
   \end{subfigure}
   ~\\[1cm]
   \caption{Distribution of the norm of the Jacobian matrix with respect to the CIFAR10 test set from a Wide ResNet trained with different schemes.} 
\end{figure}
\clearpage
}

Finally, we also conducted an experiment to study the impact of the regularization on the gradients of the whole network by measuring the distributions of the norm of the Jacobian matrix with respect to the inputs from the test set.
The results of this experiment are presented in~\Cref{figure:ch5-jacobian_distribution_v1} and show more concentrated gradients with Lipschitz regularization.
Indeed, we can observe that while the median is higher, the regularization decreases the number of points with very high Lipschitz constant.
% Given that the Lipschitz constant is minimized at the same time as the cross-entropy loss, the model increases the Lipschitz of some inputs in order to 
Although Lipschitz regularization is not a Jacobian regularization, we can observe a clear shift in the distribution.
This suggests that our method does not only work layer-wise, but also at the level of the entire network.
A second experiment, using Adversarial Training, presented in~\Cref{figure:ch5-jacobian_distribution_v2} demonstrates that the effect is even stronger when the two techniques are combined together.
% This corroborates the work by~\citet{farnia2018generalizable}.
It also demonstrates that Lipschitz regularization and Adversarial Training (or other Jacobian regularization techniques) are complementary.
Hence they offer an increased robustness to adversarial attacks as demonstrated above.

\begin{table}[t]
  \sisetup{%
    table-align-uncertainty=true,
    separate-uncertainty=true,
    detect-weight=true,
    detect-inline-weight=math
  }
  \centering
  \begin{subfigure}[b]{\textwidth}
    \begin{tabular}
      {
	l
        S[table-format=1.3]@{\,\( \pm \)\,}S[table-format=1.3]
        S[table-format=1.3]@{\,\( \pm \)\,}S[table-format=1.3]
        S[table-format=1.3]@{\,\( \pm \)\,}S[table-format=1.3]
        S[table-format=1.3]@{\,\( \pm \)\,}S[table-format=1.3]
      }
      \toprule
      \textbf{Model} & \multicolumn{2}{c}{\textbf{Accuracy}} & \multicolumn{2}{c}{\textbf{PGD-$\ell_\infty$}}
      & \multicolumn{2}{c}{\textbf{C\&W-$\ell_2$ 0.6}} & \multicolumn{2}{c}{\textbf{C\&W-$\ell_2$ 0.8}} \\
      \midrule
      \textbf{Baseline} & \textbf{0.953} & 0.001 & 0.000 & 0.000 & 0.002 & 0.000 & 0.000 & 0.000 \\
      \textbf{AT}       & 0.864 & 0.001 & 0.426 & 0.000 & 0.477 & 0.000 & 0.334 & 0.000 \\
      \textbf{AT+PM}    & 0.788 & 0.010 & 0.434 & 0.007 & 0.521 & 0.005 & 0.419 & 0.003 \\
      \textbf{AT+LipReg} & 0.808 & 0.022 & \textbf{0.457} & 0.002 & \textbf{0.547} & 0.022 & \textbf{0.438} & 0.020 \\
      \bottomrule
    \end{tabular}%
    \caption{Results on CIFAR10 dataset}
    \label{subfigure:ch5-results_cifar10_data}
  \end{subfigure}
  \par\bigskip
  \begin{subfigure}[b]{\textwidth}
    \begin{tabular}
      {
	l
        S[table-format=1.3]@{\,\( \pm \)\,}S[table-format=1.3]
        S[table-format=1.3]@{\,\( \pm \)\,}S[table-format=1.3]
        S[table-format=1.3]@{\,\( \pm \)\,}S[table-format=1.3]
        S[table-format=1.3]@{\,\( \pm \)\,}S[table-format=1.3]
      }
      \toprule
      \textbf{Model} & \multicolumn{2}{c}{\textbf{Accuracy}} & \multicolumn{2}{c}{\textbf{PGD-$\ell_\infty$}}
      & \multicolumn{2}{c}{\textbf{C\&W-$\ell_2$ 0.6}} & \multicolumn{2}{c}{\textbf{C\&W-$\ell_2$ 0.8}} \\
      \midrule
      \textbf{Baseline} & \textbf{0.792} & 0.000  & 0.000 & 0.000  & 0.001 & 0.000  & 0.000 & 0.000 \\
      \textbf{AT} & 0.591 & 0.000  & 0.199 & 0.000  & 0.263 & 0.000  & 0.183 & 0.000 \\
      \textbf{AT+LipReg} & 0.552 & 0.019  & \textbf{0.215} & 0.004  & \textbf{0.294} & 0.010  & \textbf{0.226} & 0.008  \\
      \bottomrule
    \end{tabular}%
    \caption{Results on CIFAR100 dataset}
    \label{subfigure:ch5-results_cifar100_data}
  \end{subfigure}
  \caption{Accuracy under $\ell_2$ and $\ell_\infty$ attacks of different training schemes on CIFAR10/100 datasets.} 
% We compare vanilla Adversarial Training with the combination of Lipschitz regularization and Adversarial Training. We also compare the effectiveness of the power method by~\citet{farnia2018generalizable} and $\lipbound$. We fix the parameter $\lambda_2$ from \Cref{equation:ch5-obj_function}) to $0.008$ for AT+PM and AT+LipReg. It has been chosen from a grid search among 10 values. The attacks are computed with 200 iterations. }
  \label{table:ch5-cifar_robustness}%
\end{table}%

\begin{table}[t]
  \centering
  \tabcolsep=1.9mm
  {\small
  \begin{tabular}{
    lc
    c
    cc
    c
    ccc
  }
    \toprule
    \multicolumn{1}{c}{\multirow{2}[4]{*}{\textbf{Model}}} & \multicolumn{1}{c}{\multirow{2}[4]{*}{\textbf{Natural}}} &  
    & \multicolumn{2}{c}{\textbf{PGD}-$\ell_\infty$} &   & \multicolumn{3}{c}{\textbf{C\&W}-$\ell_2$} \\
    \cmidrule{4-5}\cmidrule{7-9} 
    &  &  & \multicolumn{1}{c}{0.02} & \multicolumn{1}{c}{0.031} &   & \multicolumn{1}{c}{1.00} & \multicolumn{1}{c}{2.00} & \multicolumn{1}{c}{3.00} \\
    \midrule
    \textbf{Baseline} \cite{he2016deep} & \textbf{0.782} & & 0.000 & 0.000 & & 0.000 & 0.000 & 0.000 \\
    \textbf{AT} & 0.509 &   & 0.251 & 0.118 &   & 0.307 & 0.168 & 0.099 \\
    \textbf{AT+LipReg} ($C_2 = 0.0006$) & \textbf{0.515} &   & \textbf{0.255} & \textbf{0.121} &   & \textbf{0.316} & \textbf{0.177} & \textbf{0.105} \\
    \textbf{AT+LipReg} ($C_2 = 0.0010$) & \textbf{0.519} &   & \textbf{0.259} & \textbf{0.123} &   & \textbf{0.338} & \textbf{0.204} & \textbf{0.129} \\
    \bottomrule
  \end{tabular}%
  }
  % \caption{This table shows the accuracy and accuracy under $\ell_2$ and $\ell_\infty$ attack of ImageNet dataset. We compare Adversarial Training with the combination of Lipschitz regularization and Adversarial Training \cite{madry2018towards}. }
  \caption{Natural accuracy and accuracy under $\ell_2$ and $\ell_\infty$ attacks of different training schemes on the ImageNet dataset.} 
  \label{table:ch5-results_imagenet_dataset}
\end{table}%

\paragraph{Experimental Settings CIFAR10/100 Dataset.}
For all our experiments, we use the Wide ResNet architecture introduced by~\citet{zagoruyko2016wide} to train our classifiers.
We use Wide ResNet networks with 28 layers and a width factor of 10.
We train our networks for 200 epochs with a batch size of $200$.
We use Stochastic Gradient Descent with a momentum of $0.9$, an initial learning rate of $0.1$ with exponential decay of 0.1 (MultiStepLR gamma = 0.1) after the epochs $60$, $120$ and $160$.
For Adversarial Training ~\cite{madry2018towards}, we use Projected Gradient Descent with an $\epsilon = 8/255 (\approx 0.031)$, a step size of $\epsilon/5 (\approx 0.0062)$ and 10 iterations, we use a random initialization but run the attack only once.
To evaluate the robustness of our classifiers, we rigorously followed the experimental protocol proposed by~\citet{carlini2019evaluating,tramer2020adaptive}.
More precisely, as an $\ell_\infty$ attack, we use PGD with the same parameters ($\epsilon = 8/255$, a step size of $\epsilon/5$) but we increase the number of iterations up to 200 with 10 restarts.
For each image, we select the perturbation that maximizes the loss among all the iterations and the 10 restarts.
As $\ell_2$ attacks, we use a bounded version of the~\citet{carlini2017towards} attack.
We choose $0.6$ and $0.8$ as bounds for the $\ell_2$ perturbation.
Note that the $\ell_2$ ball with a radius of $0.8$ has approximately the same volume as the $\ell_\infty$ ball with a radius of $0.031$ for the dimensionality of CIFAR10/100.

\paragraph{Experimental Settings for ImageNet Dataset.}
For all our experiments, we use the ResNet-101 architecture \cite{he2016deep}.
We have used Stochastic Gradient Descent with a momentum of $0.9$, a weight decay of $0.0001$, label smoothing of $0.1$, an initial learning rate of $0.1$ with exponential decay of $0.1$ (MultiStepLR gamma = $0.1$) after the epochs $30$ and $60$.
We have used Exponential Moving Average over the weights with a decay of $0.999$.
We have trained our networks for 80 epochs with a batch size of $4096$.
For Adversarial Training, we have used PGD with 5 iterations, $\epsilon = 8/255 (\approx 0.031)$ and a step size of $\epsilon/5 (\approx 0.0062)$.
To evaluate the robustness of our classifiers on ImageNet Dataset, we have used an $\ell_\infty$ and an $\ell_2$ attacks.
More precisely, as an $\ell_\infty$ attack, we use PGD with an epsilon of 0.02 and 0.031, a step size of $\epsilon/5$) with a number of iterations to 30 with 5 restarts.
For each image, we select the perturbation that maximizes the loss among all the iterations and the 10 restarts.
As $\ell_2$ attacks, we use a bounded version of the~\citet{carlini2017towards} attack.
We have used $1$, $2$ and $3$ as bounds for the $\ell_2$ perturbation.

%%%%%%%%%%%%%%%%%%%%%%%%%%%%%%%%%%%%%%%%%%%%%%%%%%%%%%%%%%%%%%%%%%%%%%%%%%%%%%%%
\section{Concluding Remarks}
\label{section:ch5-concluding_remarks}
%%%%%%%%%%%%%%%%%%%%%%%%%%%%%%%%%%%%%%%%%%%%%%%%%%%%%%%%%%%%%%%%%%%%%%%%%%%%%%%%

In this chapter, we introduced a new bound on the Lipschitz constant of convolution layers that is both accurate and efficient to compute.
We used this bound to regularize the Lipschitz constant of neural networks and demonstrated its computational efficiency in training large neural networks with a regularized Lipschitz constant.
As an illustrative example, we combined our bound with adversarial training, and showed that this increases the robustness of the trained networks to  adversarial attacks.
The scope of our results goes beyond this application and can be used in a wide variety of settings, for example, to stabilize the training of Generative Adversarial Networks (GANs) and invertible networks, or to improve generalization capabilities of classifiers.

  %%%%%%%%%%%%%%%%%%%%%%%%%%%%%%%%%%%%%%%%%%%%%%%%%%%%%%%%%%%%%%%%%%%%%%%%%%%%%%%%
\chapter{Conclusion}
\label{chapter:ch6-conclusion}
%%%%%%%%%%%%%%%%%%%%%%%%%%%%%%%%%%%%%%%%%%%%%%%%%%%%%%%%%%%%%%%%%%%%%%%%%%%%%%%%
\localtoc

%%%%%%%%%%%%%%%%%%%%%%%%%%%%%%%%%%%%%%%%%%%%%%%%%%%%%%%%%%%%%%%%%%%%%%%%%%%%%%%%
\section{Summary of the Contributions}
\label{section:ch6-summary_of_the_contributions}
%%%%%%%%%%%%%%%%%%%%%%%%%%%%%%%%%%%%%%%%%%%%%%%%%%%%%%%%%%%%%%%%%%%%%%%%%%%%%%%%

State-of-the-art in a variety of domains, deep neural networks exhibit important limitations.
Indeed, current neural networks tend to be very large in terms of their number of parameters which make them difficult to train and to deploy in real-world applications.
Furthermore, they exhibit instability to small perturbations of their inputs which lead to adversarial attacks. 

In this thesis, we have used structured matrices from the Toeplitz family to make contributions to the field of deep learning.
Our contributions are twofold.
First, we studied deep diagonal-circulant neural networks, which are deep neural networks in which weight matrices are the product of diagonal and circulant ones.
Using diagonal and circulant matrices instead of dense ones allows for an important reduction in the number of parameters which make them more efficient and cost-effective.
In addition to being more compact than fully connected neural networks, diagonal-circulant neural networks have a high expressivity that makes them useful for numerous use cases.
In order to characterize the expressive power of diagonal-circulant neural networks, we build upon the work of~\citet{huhtanen2015factoring} which states that any matrix can be decomposed into a product of alternating diagonal and circulant matrices.
Based on this result, we have successfully demonstrated that neural networks with diagonal and circulant matrices are \emph{universal approximators} and characterized their expressive power with respect to their depth.
We also demonstrated the effectiveness of this class of compact neural networks to video classification with a real-world dataset.

% Secondly, we study the properties of doubly-block Toeplitz matrices which are the matrix equivalent of the convolution operation. 
% Using the properties of this type of structured matrix allows us to develop a new regularization scheme of convolutional neural networks that improve their robustness against adversarial attacks.
% This regularization scheme reduces a bound of the Lipschitz constant of the neural networks thus making it less sensitive to small perturbations of its input.

% From the properties of doubly-block Toeplitz matrices and a Fourier representation introduced by~\citet{grenander1958toeplitz}, we demonstrated an upper-bound on the singular values of convolution layers.
% In order to use this upper-bound in a large-scale setting, we introduce the PolyGrid algorithm (see \Cref{algorithm:ch5-polygrid}) which efficiently and accurately computes an approximation of this upper-bound.
% Finally, we demonstrated that employing this bound as a regularizer improved the generalization and the robustness of neural networks trained with the adversarial training scheme.
%
% ---

Secondly, we studied the properties of doubly-block Toeplitz matrices which are equivalent to the convolution operation. 
Using the properties of this type of structured matrix and a Fourier representation introduced by~\citet{grenander1958toeplitz}, we devised an upper-bound on the singular values of convolution layers leading to a new regularization scheme that improves the robustness of neural networks against adversarial attacks.
In order to use this upper-bound in a large-scale setting, we introduced the PolyGrid algorithm (see \Cref{algorithm:ch5-polygrid}) which efficiently and accurately computes an approximation of this upper-bound.
% Finally, we demonstrated that employing this bound as a regularizer reduces the sensitivity of the network to small perturbations, thus, improving the generalization and the robustness of the network.

% This regularization scheme reduces a bound of the Lipschitz constant of the neural networks thus making it less sensitive to small perturbations of its input.

% Secondly, we proposed a contribution based on the observation that the convolution operation of convolution layers can be interpreted as a matrix-multiplication where the matrix is the concatenation of multiple doubly-block Toeplitz matrices.
%
% a new regularization scheme of convolutional neural networks that improve their robustness against adversarial attacks.
%
% This contribution is based 

%%%%%%%%%%%%%%%%%%%%%%%%%%%%%%%%%%%%%%%%%%%%%%%%%%%%%%%%%%%%%%%%%%%%%%%%%%%%%%%%
\section{Perspectives and Future Works}
\label{section:ch6-perspectives_and_future_works}
%%%%%%%%%%%%%%%%%%%%%%%%%%%%%%%%%%%%%%%%%%%%%%%%%%%%%%%%%%%%%%%%%%%%%%%%%%%%%%%%

\subsection{Designing Compact Transformers for Natural Language Processing}

In order to improve upon our work on compact neural networks, one idea follows naturally.
The race towards larger convolutional neural networks seemed to have slowed down following the work of \citet{tan2019efficientnet} which devised compact state-of-the-art neural networks for image recognition.
However, other types of architecture, \eg, \emph{Transformers} which rely heavily on dense matrices, have seen their number of parameters exploding in recent years.
The latest model which was designed by \citet{fedus2021switch} has 1 trillion parameters, 5.7 times than the second largest, proposed by \citet{brown2020language}, which had 175 billion parameters.

In~\Cref{chapter:ch4-diagonal_circulant_neural_network} and Appendix~\ref{appendix:ap2-diagonal_circulant_neural_networks_for_video_classification}, we have used the diagonal-circulant decomposition for compressing embedded layers in the context of video classification.
This decomposition could also be used to compress attention layers of Transformers networks~\cite{vaswani2017attention} where the attention layer is described as follows:
\begin{equation}
  \text{Attention}(\Qmat, \Kmat, \Vmat) = \text{softmax} \left( \frac{\Qmat \Kmat^\top}{\sqrt{d_k}} \right) \Vmat \enspace,
\end{equation}
where $\Qmat, \Kmat$ and $\Vmat$ are dense matrices.
Taking this layer as a building block leads to large neural networks as demonstrated by the GPT-3 architecture with 96 attention layers and 175 billion parameters.
Although, the diagonal-circulant decomposition could successfully reduce the number of parameters of attention layers, it may have limited impact on \emph{multi-head attention layers} which are a concatenation of small attention layers due to the reduced dimension of each matrix.
% , the total number of parameters is similar to that of single-head attention with full dimensionality.
% \begin{align}
%   \text{MultiHead}(\Qmat, \Kmat, \Vmat) &= \text{concat}(\Hmat^{(1)}, \dots, \Hmat^{(h)}) \Wmat_O \\
%   \text{where } \Hmat^{(i)} &= \text{Attention}(\Qmat \Wmat^{(i)}_Q, \Kmat \Wmat^{(i)}_K, \Vmat \Wmat^{(i)}_V) 
% \end{align}
% where $\Wmat^{(i)}_Q$, $\Wmat^{(i)}_K$ and $\Wmat^{(i)}_V$ are projection matrices.
% Due to the reduced dimension of each matrix of the multi-head attention layer, the total number of parameters is similar to that of single-head attention with full dimensionality.

% ---------------
%
% \todotext{update here}
%
% The regularization of the Lipschitz constant of neural networks has been a growing interest in the training of neural networks.
% Indeed, Lipschitz regularization goes beyond robustness of neural networks and can be used in a wide variety of settings, for example, to stabilize the training of Generative Adversarial Networks and invertible networks, or to improve generalization capabilities of classifiers.
%
%
% Currently, 
%
% It is important to note that the result achieved by combining adversarial training and Lipschitz regularization allow an improvement of 
%
% and devising techniques to better control the expressiveness of neural network is  

\subsection{Regularization on the Condition Number of Convolution Layers}

In \Cref{chapter:ch5-lipschitz_bound}, we have proposed an upper-bound on the largest singular value of convolution layers which allow us to regularize the Lipschitz constant of the network thus improving the robustness.
% Indeed, a Lipschitz regularization during training combined with a local smoothing technique such as adversarial training allow an important gain in robustness. 
However, an important reduction of the Lipschitz constant seems to prevent the network from learning correctly. 
Indeed, as demonstrated by the work of \cite{zhang2019theoretically}, accuracy and robustness are actually at odds, meaning that improving the robustness (\ie in our case, reducing the expressivity) hurts the training and the natural accuracy of the network. 
In our experiments, the phenomenon called \emph{rank collapse} \cite{saxe2014exact} where the rank of the weights matrices tend to decrease during training combined with a strong Lipschitz regularization would prevent convergence.
% as the expressivity is concentrated in the singular vector associated with the largest singular values.
An interesting solution would be to regularize the largest singular value and promoting the smallest in order to enforce orthogonality.
% regularize the condition number of the weight matrices (ratio between the highest and lowest singular values).
The following bound on the condition number of general matrices \citet{guggenheimer1995simple} could be studied:
\begin{equation}
  \kappa(\Wmat) \le \frac{2}{\abs{\det(\Wmat)}} \left(\frac{\norm{\Wmat}_\fro}{\sqrt{r}}\right)^r
\end{equation}
where $r$ is the rank of $\Wmat$.
As such, using the bound as a regularizer will enforce the orthogonality, just like a layer normalization.
Therefore, we could design some heuristic regularizers to encourage a smaller $\left(\frac{\norm{\Wmat}_\fro}{\sqrt{r}}\right)^r$ and larger $\abs{\det(\Wmat)}$ separately, as given by the following objective function: 
\begin{align} 
  \min_{\Omega} \Ebb_{\xvec, y \sim \Dset} \left[ L(N_\Omega(\xvec), y)  + C_1 \sum_{i=1}^\depth \norm{\Wmat^{(i)}}_\fro + C_2 \sum_{i=1}^{\depth} \log \abs{\det(\Wmat^{(i)})} \right]
\end{align}
where the determinant of doubly-block Toeplitz matrices under some assumption could be expressed with the Szeg\"{o} Theorem \cite{szego1915grenzwertsatz} and can be approximated with the help of \emph{Random Matrix Theory} \cite{basor2017asymptotics}.

% \subsection{Stronger Provably Adversarial Robustness with Noise Injection and Lipschitz Regularization}
%
% xxx
%
% In Appendix xxx, we have shown that injecting random noise from the Exponential family both during training and inference phases provably defend against adversarial attacks.
% In our experimental  
%

\subsection{Going Beyond the Lipschitz Constant} 

Finally, in order to better understand the behavior of neural networks and the transformation they perform, it would be interesting to go beyond the Lipschitz constant and consider their full spectrum.
Indeed, the spectrum of a linear map is a set that contains the eigenvalues and can be seen as a description of the properties and behavior of the operator.
For example, for a linear operator $\Lmat: \Xset \rightarrow \Xset$, the spectrum gives precise information on the solvability of the following linear equation
\begin{equation}
  \lambda \xvec - \Lmat \xvec = \yvec
\end{equation}
It is natural to ask if we could define a spectrum that equivalently gives information on the following nonlinear equation
\begin{equation}
  \lambda \xvec - \Fmat(\xvec) = \yvec \enspace.
\end{equation}

In this line of research, \citet{kachurovskii1969regular} have defined a spectrum for nonlinear continuous Lipschitz operators which share important properties with the spectrum of linear operators.
More precisely, let $\Fmat: \Xset \rightarrow \Xset$ be a nonlinear continuous Lipschitz map, the \emph{Kachurovskij spectrum} of $\Fmat$ is given by
\begin{equation}
  \sigma(\Fmat) \triangleq \left\{ \lambda \in \Cbb \mid \lambda \Imat - \Fmat \text{ is not a lipeomorphism} \right\} 
\end{equation} 
where a nonlinear Lipschitz continuous operator is a \emph{lipeomorphism} if its inverse is also nonlinear Lipschitz continuous.
We can also define the complement of the spectrum, \ie, the \emph{Kachurovskij resolvent set} as follows: 
\begin{equation}
  \mu(\Fmat) \triangleq \Cbb \setminus \sigma(\Fmat) 
\end{equation}
The resolvent set can be seen as the set of complex numbers for which the operator is \emph{well behaved}.
The Kachurovskij spectrum is a compact subset of the complex plane but may be empty.
Kachurovskij have also shown that the emptiness of this spectrum can be prevented if we restrict ourselves to nonlinear continuous Lipschitz operators that admit a Fréchet-derivative $\Fmat'(\xvec_0)$ at some point $\xvec_0 \in \Xset$.
In this case, the Kachurovskij spectrum share all the properties of a linear operator which are: closed, compact, bounded and non-empty. 

Neural networks with differentiable nonlinearities are differentiable nonlinear Lipschitz continuous functions, therefore the study of the Kachurovskij spectrum could give important insight on their stability, invertibility and robustness to adversarial examples.

%%%%%%%%%%%%%%%%%%%%%%%%%%%%%%%%%%%%%%%%%%%%%%%%%%%%%%%%%%%%%%%%%%%%%%%%%%%%%%%%
\section{Discussion}
\label{section:ch6-discussion}
%%%%%%%%%%%%%%%%%%%%%%%%%%%%%%%%%%%%%%%%%%%%%%%%%%%%%%%%%%%%%%%%%%%%%%%%%%%%%%%%

% The global regularization of neural network during training combined with a local smoothing technique such as adversarial training allow an important gain in robustness. 
% However, our contribution also highlight some important difficulties in training neural networks.

Although our contributions offer concrete techniques for building compact and reliable neural networks, they also highlight some important difficulties in them.
First, if we discard techniques such as pruning or quantization for building compact neural networks due to the necessity of training a large neural network prior to compression, designing parameters-efficient neural networks that are compact \emph{by design} requires rethinking the whole architecture.
For computer vision tasks, the convolution operation is a compact and powerful transform, however, we still haven't found such equivalent transforms for other use cases.
Although the multi-head attention layer is more efficient than the attention layer for NLP tasks, using this type of transform in a neural network is still very parameter-hungry as demonstrated by the recent state-of-the-art for language models \cite{brown2020language}.

Secondly, defense techniques against adversarial attacks have shown great improvements in the last few years.
However, with current state-of-the-art techniques, it is still difficult to reach an accuracy higher than 60\% on CIFAR10 (which is considered a small dataset) and the accuracy decreases further on datasets with a larger dimensionality.
Consequently, building robust neural networks still remain very much an open question.
We believe that further breakthroughs in this area will come as a by-product on research on \emph{understanding neural networks}. 
Accordingly, we hope that our contribution to the understanding of diagonal-circulant and convolution neural networks is a small step in this direction.

% while new techniques successfully improve the robustness of neural networks, it has been shown that accuracà and robustness are actually at odds \cite{zhang2019theoretically}, 

  Appendix
  \renewcommand\appendixpagename{\usekomafont{disposition}Appendices}
  \begin{appendices}
    \renewcommand\chaptername{Appendix}
    %%%%%%%%%%%%%%%%%%%%%%%%%%%%%%%%%%%%%%%%%%%%%%%%%%%%%%%%%%%%%%%%%%%%%%%%%%%%%%%
\chapter{Generalization of Widom Identity}
\label{appendix:ap1-proof_of_the_generalization_of_widom_identity}
%%%%%%%%%%%%%%%%%%%%%%%%%%%%%%%%%%%%%%%%%%%%%%%%%%%%%%%%%%%%%%%%%%%%%%%%%%%%%%%
% \localtoc

This appendix aims at proving a generalization of Widom Identity for doubly-block Toeplitz operators.
The Widom identity, which states the relation between Toeplitz and Hankel operators, was introduced by Harold Widom in a \citeyear{widom1976asymptotic} seminal paper \cite{widom1976asymptotic}.
Let us define the semi-infinite Toeplitz and Hankel operators:
\begin{align}
  \Tmat_\infty(f) &\triangleq \leftmat\frac{1}{2\pi} \int_{0}^{2\pi} e^{-\ci(i-j)\omega}f(\omega) \,\diff \omega\rightmat_{i,j \in \{0, \dots, \infty\}} \\
  \Hmat_\infty(f) &\triangleq \leftmat\frac{1}{2\pi} \int_{0}^{2\pi} e^{-\ci(i+j+1)\omega}f(\omega) \,\diff \omega\rightmat_{i,j \in \{0, \dots, \infty\}}
\end{align}
Then, for $f$ and $g$ integrable functions, the Widom identity can be written as follows:
\begin{equation}
  \Tmat_\infty(fg) - \Tmat_\infty(f) \Tmat_\infty(g) = \Hmat_\infty(f) \Hmat_\infty(g^*)
\end{equation}
Note that Widom extends this identity from finite Toeplitz matrices:
\begin{equation} \label{equation:ap1-widom_identity}
  \Tmat_n(fg) - \Tmat_n(f) \Tmat_n(g) = \Hmat_n(f) \Hmat_n(g^*) - \Jmat_n \Hmat_n(f^*) \Hmat_n(g^*) \Jmat_n
\end{equation}
where $\Jmat_n$ is the anti-identity matrix, \ie, the reflexion matrix.

We would like to expend the identity presented in~\Cref{equation:ap1-widom_identity} to finite doubly-block Toeplitz operator.
We will need to generalize the doubly-block Toeplitz operator presented in~\Cref{subsection:ch5-bound_on_the_singular_value_of_doubly-block_toeplitz_matrices}.
Let $\Gmat^{\alpha_p} (f) = \leftmat \Gmat^{\alpha_p}_{i,j}(f) \rightmat_{i,j \in \Iset^+_n}$ where $\Gmat^{\alpha_p}_{i,j}$ is defined as:
\begin{equation}
  \Gmat^{\alpha_p}_{i,j}(f) =\leftmat \frac{1}{4\pi^{2}} \int_{0}^{2\pi} \int_{0}^{2\pi} e^{-\ci \alpha_p(i, j, k, l, \omega_1, \omega_2)}  f(\omega_{1},\omega_{2}) \,\diff \omega_{1} \,\diff \omega_{2})
  \rightmat_{k,l \in \Iset^+_n} \enspace.
\end{equation}
Note that as with the operator $\Dmat(f)$ we only consider generating functions as trigonometric polynomials with real coefficients therefore the matrices generated by $\Gmat(f)$ are real. 
And as with the operator $\Dmat(f)$, the matrices generated by the operator $\Gmat^{\alpha_p}$ are of size $n^2 \times n^2$. 

\noindent
We will use the following $\alpha$ functions:
\begin{itemize}
    \item[] $\alpha_0(i, j, k, l, \omega_1, \omega_2) = (-j-i-1)\omega_1 + (k-l)\omega_2$
    \item[] $\alpha_1(i, j, k, l, \omega_1, \omega_2) = (i-j)\omega_1 + (-l-k-1)\omega_2$
    \item[] $\alpha_2(i, j, k, l, \omega_1, \omega_2) = (-j-i-1)\omega_1 + (-l-k-1)\omega_2$
    \item[] $\alpha_3(i, j, k, l, \omega_1, \omega_2) = (-j-i+n)\omega_1 + (-l-k-1)\omega_2$
\end{itemize}

\noindent
We now present the generalization of the Widom identity for Doubly-Block Toeplitz matrices below:
\begin{lemma}[Generalization of Widom Identity] \label{lemma:ap1-widom_idenity}
  Let $f:\Rbb^2 \rightarrow \Cbb$ and $g:\Rbb^2 \rightarrow \Cbb$ be two continuous and $2\pi$-periodic functions. 
  We can decompose the Doubly-Block Toeplitz matrix $\Dmat(fg)$ as follows:
  \begin{equation}
    \Dmat(fg) = \Dmat(f)\Dmat(g) + \sum_{p=0}^3 \Gmat^{\alpha_p \top}(f^*) \Gmat^{\alpha_p}(g) + \Jmat_{n^2} \left( \sum_{p=0}^3 \Gmat^{\alpha_p \top}(f) \Gmat^{\alpha_p }(g^*) \right) \Jmat_{n^2}.
  \end{equation}
  where $\Jmat$ is the reflection of the identity matrix of size $n^2 \times n^2$.
\end{lemma}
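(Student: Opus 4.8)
\textbf{Proof plan for the generalization of the Widom identity (Lemma~\ref{lemma:ap1-widom_idenity}).}
The plan is to reduce the doubly-block statement to the classical one-level Widom identity~\eqref{equation:ap1-widom_identity} applied twice: once at the outer (block) level and once at the inner (scalar) level. The key observation is that the doubly-block Toeplitz operator $\Dmat$ factors through the block Toeplitz operator $\Tmat$: if $f(\omega_1,\omega_2) = \sum_{h_1 \in \Iset_n} F^{(h_1)}(\omega_2) e^{\ci h_1 \omega_1}$, where each Fourier coefficient $F^{(h_1)}$ is itself a scalar $2\pi$-periodic function of $\omega_2$, then $\Dmat(f) = \Tmat_n(\omega_1 \mapsto \Tmat_n(F(\omega_1,\ \cdot\ )))$, i.e.\ $\Dmat(f)$ is a block Toeplitz matrix whose $(i,j)$-block is $\Tmat_n$ of the $(i-j)$-th outer Fourier coefficient of $f$ viewed as a function of $\omega_2$.

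First I would expand $\Dmat(fg)$ at the outer level. Using the finite Widom identity~\eqref{equation:ap1-widom_identity} at the block level (the coefficients are $m\times m$ matrices, but the identity holds verbatim since it only uses associativity and the Toeplitz/Hankel structure of the index combinatorics), I obtain $\Dmat(fg)$ as $\Dmat(f)\Dmat(g)$ corrected by two outer-Hankel terms, one of them conjugated by the outer reflection. At this stage each block entry is still a product of \emph{scalar} Toeplitz (or Hankel) matrices in the $\omega_2$ variable; the outer-Hankel terms are block-Hankel matrices whose blocks are scalar Toeplitz matrices in $\omega_2$. Next I would apply the scalar finite Widom identity~\eqref{equation:ap1-widom_identity} a second time, now in the $\omega_2$ variable, to each block of the term $\Dmat(f)\Dmat(g)$, replacing $\Tmat_n(\cdot)\Tmat_n(\cdot)$ by $\Tmat_n(\cdot\,\cdot)$ plus inner-Hankel corrections (one conjugated by the inner reflection $\Jmat_n$). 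Combining the outer-level correction terms with the inner-level correction terms, and carefully tracking which index shifts appear, produces exactly the four operators $\Gmat^{\alpha_0},\dots,\Gmat^{\alpha_3}$: the four $\alpha_p$ functions encode the four combinatorial patterns $(\text{Toeplitz}\times\text{Hankel})$, $(\text{Hankel}\times\text{Toeplitz})$, $(\text{Hankel}\times\text{Hankel})$, and the cross-term coming from the interplay of the outer reflection with an inner Hankel. The $\Jmat_{n^2}$ conjugation on the second sum arises because $\Jmat_{n^2} = \Jmat_n \otimes \Jmat_n$ splits into the outer and inner reflections, and the terms requiring conjugation are precisely those that inherit a reflection from \emph{both} applications of~\eqref{equation:ap1-widom_identity}; the terms requiring conjugation on only one level get absorbed into the $\Gmat^{\alpha_p}$ definitions via an appropriate reindexing (this is where the asymmetric-looking shifts such as $-j-i-1$ and $-j-i+n$ in the $\alpha_p$ come from).

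The main obstacle I expect is the bookkeeping: matching the index-shift patterns produced by two nested applications of the finite Widom identity against the four prescribed $\alpha_p$ functions, and verifying that the conjugation by $\Jmat_n$ at the inner level together with the conjugation by $\Jmat_n$ at the outer level combine exactly into a single $\Jmat_{n^2}$ on the claimed subset of terms, while the ``mixed'' terms (reflected at one level only) are exactly the ones whose $\alpha_p$ carries the shifted argument. It is essentially a careful but routine verification once the two-step strategy is set up; to keep it manageable I would first establish it for scalar $f,g$ (so $m=1$ or treating blocks as opaque), isolating the outer combinatorics, then recombine. A secondary technical point is to confirm that $\Gmat^{\alpha_p\top}$ corresponds to the conjugate $\Gmat^{\alpha_p}(f^*)$ vs.\ $\Gmat^{\alpha_p}(f)$ distribution stated in the lemma; this follows from the transpose rule $\Dmat^\top(f) = \Dmat(f^*)$ of Lemma~\ref{theorem:properties_block_toeplitz} together with the analogous transpose behaviour of the Hankel-type operators, which one checks directly from the integral definition by swapping $i\leftrightarrow j$ and $k\leftrightarrow l$.
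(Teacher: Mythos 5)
The paper proves this lemma by a direct computation in Fourier-coefficient space: it expands the $(i,j)$-entry of $\Dmat(fg)$ as a two-dimensional convolution sum $\sum_{k_1,k_2 \in \Iset_{2n}} \hat f_{(k_1-i_1),(k_2-i_2)}\hat g_{(j_1-k_1),(j_2-k_2)}$, splits the index range into nine rectangular regions (three bins per variable), and identifies each of the eight non-principal regions term-by-term with one of the products $\Gmat^{\alpha_p\top}(f^*)\Gmat^{\alpha_p}(g)$ or its $\Jmat_{n^2}$-conjugate. Your proposal is genuinely different: you want to reduce the two-variable statement to two (or more) nested applications of the one-variable finite Widom identity, once at the block level in $\omega_1$ and once at the scalar level in $\omega_2$. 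This is a legitimate alternative strategy and arguably more conceptual, because it explains \emph{where} the eight correction terms come from (the $3\times 3$ binning of the convolution indices is exactly what nested Widom produces). The paper's route buys simplicity and self-containment; yours buys insight, at the cost of a more delicate recombination step.

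That recombination is where I think you underestimate the work, and it is not a purely ``routine'' bookkeeping matter. After the two applications you describe, you obtain only four correction terms, not eight: two from the block-level Widom (these are products of block-Hankel matrices and therefore already of ``operator times operator'' type), and two of the form $\Tmat_n\bigl(\omega_1 \mapsto \Hmat_m(f(\omega_1,\cdot))\Hmat_m(g^*(\omega_1,\cdot))\bigr)$ and its $\Jmat_m$-conjugate. The latter two are \emph{single} block-Toeplitz matrices whose blocks are products of scalar Hankels, not products of two $n^2\times n^2$ operators, so they do not yet match the shape $\Gmat^{\alpha_p\top}(\cdot)\Gmat^{\alpha_p}(\cdot)$ demanded by the lemma. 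To factor them you must apply the block-level Widom identity a \emph{third} time, to the matrix-valued function $\omega_1\mapsto\Hmat_m(f)\Hmat_m(g^*)$ inside $\Tmat_n(\cdot)$; only then do you recover the missing four products (and the fully reflected term $\Jmat_{n^2}(\cdot)\Jmat_{n^2}$ appears because $(\Imat_n\otimes\Jmat_m)(\Jmat_n\otimes\Imat_m)=\Jmat_{n^2}$). Similarly, the mixed partial reflections $\Jmat_n\otimes\Imat_m$ and $\Imat_n\otimes\Jmat_m$ that arise along the way have to be absorbed into the Fourier indices, which is precisely why the $\alpha_p$ carry the asymmetric shifts $-j-i-1$ and $-j-i+n$; you gesture at this, but it is the crux of the verification and needs to be carried out explicitly on at least one representative term before the pattern can be trusted. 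So your plan is sound in spirit, but as written it would stall at the point where you have only four correction terms, two of which are not yet in product form; flagging the third Widom application and one worked-out reindexing would close that gap.
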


\begin{proof}[\Cref{lemma:ap1-widom_idenity}]
Let $(i, j)$ be matrix indexes such $(\ \cdot\ )_{i, j}$ correspond to the value at the $i^\textrm{th}$ row and $j^\textrm{th}$ column, let us define the following notation:
\begin{align*}
    i_1 &= \left\lfloor i/n \right\rfloor \quad \quad &&j_1 = \left\lfloor j/n \right\rfloor \\
    i_2 &= i \mod n \quad \quad &&j_2 = j \mod n
\end{align*}

\noindent
Let us define $\hat{f}$ as the 2 dimensional Fourier transform of the function $f$. We refer to $\hat{f}_{h_1, h_2}$ as the Fourier coefficient indexed by $(h_1, h_2)$ where $h_1$ correspond to the index of the block of the doubly-block Toeplitz and $h_2$ correspond to the index of the value inside the block. More precisely, we have 
\begin{align}
    \leftmat \Dmat(f) \rightmat_{i, j} &= \hat{f}_{(\left\lfloor j/n \right\rfloor - \left\lfloor i/n \right\rfloor), ((j \mod n) - (i \mod n)))} \label{equation:expression_fourier} \\
    \leftmat \Gmat^{\alpha_0}(f) \rightmat_{i, j} &= \hat{f}_{(\left\lfloor j/n \right\rfloor + \left\lfloor i/n \right\rfloor + 1), ((j \mod n) - (i \mod n)))} \\
    \leftmat \Gmat^{\alpha_1}(f) \rightmat_{i, j} &= \hat{f}_{(\left\lfloor j/n \right\rfloor - \left\lfloor i/n \right\rfloor), ((j \mod n) + (i \mod n) + 1))} \\
    \leftmat \Gmat^{\alpha_2}(f) \rightmat_{i, j} &= \hat{f}_{(\left\lfloor j/n \right\rfloor - \left\lfloor i/n \right\rfloor), ((j \mod n) - (i \mod n)))} \\
    \leftmat \Gmat^{\alpha_3}(f) \rightmat_{i, j} &= \hat{f}_{(\left\lfloor j/n \right\rfloor + \left\lfloor i/n \right\rfloor + n), ((j \mod n) + (i \mod n) + 1))}
\end{align}

\noindent
We simplify the notation of the expressions above as follow:
\begin{align}
    \leftmat \Dmat(f) \rightmat_{i, j} &= \hat{f}_{(j_1 - i_1), (j_2 - i_2 )} \\
    \leftmat \Gmat^{\alpha_0}(f) \rightmat_{i, j} &= \hat{f}_{(j_1 + i_1 + 1), (j_2 - i_2 )} \\
    \leftmat \Gmat^{\alpha_1}(f) \rightmat_{i, j} &= \hat{f}_{(j_1 - i_1), (j_2 + i_2 + 1)} \\
    \leftmat \Gmat^{\alpha_2}(f) \rightmat_{i, j} &= \hat{f}_{(j_1 - i_1), (j_2 - i_2 )} \\
    \leftmat \Gmat^{\alpha_3}(f) \rightmat_{i, j} &= \hat{f}_{(j_1 + i_1 + n), (j_2 + i_2 + 1)}
\end{align}

\noindent
The convolution theorem states that the Fourier transform of a product of two functions is the convolution of their Fourier coefficients. Therefore, one can observe that the entry $(i, j)$ of the matrix $\Dmat(f g)$ can be express as follows:
\begin{equation*}
    \leftmat \Dmat(f g) \rightmat_{i, j} = \sum_{k_1 = -2n + 1}^{2n-1} \sum_{k_2 = -2n + 1}^{2n-1} \hat{f}_{(k_1-i_1),(k_2-i_2)} \hat{g}_{(j_1-k_1),(j_2-k_2)}. 
\end{equation*}

\noindent
By splitting the double sums and simplifying, we obtain:
\begin{align} \label{equation:split_double_sum}
  \left( \Dmat(f g) \right)_{i, j} &= 
  \sum_{k_1, k_2 \in P} \left(
    \hat{f}_{(k_1-i_1),(k_2-i_2)} \hat{g}_{(j_1-k_1),(j_2-k_2)} +
    \hat{f}_{(-k_1-i_1-1),(k_2-i_2)} \hat{g}_{(j_1+k_1+1),(j_2-k_2)} \right. \notag \\ &\quad+ \left.
    \hat{f}_{(k_1-i_1),(-k_2-i_2-1)} \hat{g}_{(j_1-k_1),(j_2+k_2+1)} +
    \hat{f}_{(-k_1-i_1-1),(-k_2-i_2-1)} \hat{g}_{(j_1+k_1+1),(j_2+k_2+1)} \right. \notag \\ &\quad+ \left.
    \hat{f}_{(k_1-i_1+n),(-k_2-i_2-1)} \hat{g}_{(j_1-k_1-n),(j_2+k_2+1)} +
    \hat{f}_{(k_1-i_1+n),(k_2-i_2)} \hat{g}_{(j_1-k_1-n),(j_2-k_2)} \right. \notag \\ &\quad+ \left.
    \hat{f}_{(k_1-i_1),(k_2-i_2+n)} \hat{g}_{(j_1-k_1),(j_2-k_2-n)} +
    \hat{f}_{(k_1-i_1+n),(k_2-i_2+n)} \hat{g}_{(j_1-k_1-n),(j_2-k_2-n)} \right. \notag \\ &\quad+ \left.
    \hat{f}_{(-k_1-i_1-1),(k_2-i_2+n)} \hat{g}_{(j_1+k_1+1),(j_2-k_2-n)}  \right)
\end{align}
where $P = \{ (k_1, k_2)\ |\ k_1, k_2 \in \Zbb, 0 \leq k_1 \leq n-1,  0 \leq k_2 \leq n-1 \}$.

\noindent
Furthermore, we can observe the following:
\begin{equation*}
  \leftmat \Dmat(f) \Dmat(g) \rightmat_{i, j} = \sum_{k = 0}^{n^2} \leftmat\Dmat(f)\rightmat_{i, k} \leftmat\Dmat(g)\rightmat_{k, j}  = \sum_{k_1, k_2 \in P} \hat{f}_{(k_1-i_1),(k_2-i_2)} \hat{g}_{(j_1-k_1),(j_2-k_2)}
\end{equation*}

% H1_a_.T @ H1_b
\begin{flalign*}
  \leftmat \Gmat^{\alpha_1 \top}(f^*) \Gmat^{\alpha_1}(g) \rightmat_{i, j} &=  \sum_{k_1, k_2 \in P} \hat{f}^*_{(k_1+i_1+1),(i_2-k_2)} \hat{g}_{(j_1+k_1+1),(j_2-k_2)} \\
  &=  \sum_{k_1, k_2 \in P} \hat{f}_{(-k_1-i_1-1),(k_2-i_2)} \hat{g}_{(j_1+k_1+1),(j_2-k_2)}
\end{flalign*}

% H2_a_.T @ H2_b
\begin{flalign*}
  \leftmat \Gmat^{\alpha_2 \top}(f^*) \Gmat^{\alpha_2}(g) \rightmat_{i, j} &=  \sum_{k_1, k_2 \in P} \hat{f}^*_{(i_1-k_1),(k_2+i_2+1)} \hat{g}_{(j_1-k_1),(j_2+k_2+1)} \\
  &=  \sum_{k_1, k_2 \in P} \hat{f}_{(k_1-i_1),(-k_2-i_2-1)} \hat{g}_{(j_1-k_1),(j_2+k_2+1)}
\end{flalign*}

% H3_a_.T @ H3_b
\begin{flalign*}
  \leftmat \Gmat^{\alpha_3 \top}(f^*) \Gmat^{\alpha_3}(g) \rightmat_{i, j} &=  \sum_{k_1, k_2 \in P} \hat{f}^*_{(k_1+i_1+1),(k_2+i_2+1)} \hat{g}_{(j_1+k_1+1),(k_2+j_2+1)} \\
  &= \sum_{k_1, k_2 \in P} \hat{f}_{(-k_1-i_1-1),(-k_2-i_2-1)} \hat{g}_{(j_1+k_1+1),(k_2+j_2+1)}
\end{flalign*}

% H4_a_.T @ H4_b
\begin{flalign*}
  \leftmat \Gmat^{\alpha_4 \top}(f^*) \Gmat^{\alpha_4}(g) \rightmat_{i, j} &= \sum_{k_1, k_2 \in P} \hat{f}^*_{(i_1-k_1-n),(k_2+i_2+1)} \hat{g}_{(j_1-k_1-n),(j_2+k_2+1)} \\
  &=  \sum_{k_1, k_2 \in P} \hat{f}_{(k_1-i_1+n),(-k_2-i_2-1)} \hat{g}_{(j_1-k_1-n),(j_2+k_2+1)}
\end{flalign*}
\noindent
Let us define the matrix $\Jmat_{n^2}$ of size $n^2 \times n^2$ as the anti-identity matrix. We have the following:

% Y @ H1_a.T @ H1_b_ @ Y
\begin{flalign*}
  \leftmat \Gmat^{\alpha_1 \top}(f) \Gmat^{\alpha_1}(g^*) \rightmat_{i, j} &= \sum_{k_1, k_2 \in P} \hat{f}_{(k_1+i_1+1),(i_2-k_2)} \hat{g}^*_{(j_1+k_1+1),(j_2-k_2)} \\
  &= \sum_{k_1, k_2 \in P} \hat{f}_{(k_1+i_1+1),(i_2-k_2)} \hat{g}_{(-j_1-k_1-1),(k_2-j_2)} \\
  \Leftrightarrow \leftmat \Jmat_{n^2} \Gmat^{\alpha_1 \top}(f) \Gmat^{\alpha_1}(g^*) \Jmat_{n^2} \rightmat_{i, j} &= \sum_{k_1, k_2 \in P} \hat{f}_{(k_1-i_1+n),(k_2-i_2)} \hat{g}_{(j_1-k_1-n),(j_2-k_2)}
\end{flalign*}

% Y @ H2_a.T @ H2_b_ @ Y
\begin{flalign*}
  \leftmat \Gmat^{\alpha_2 \top}(f) \Gmat^{\alpha_2}(g^*) \rightmat_{i, j} &=  \sum_{k_1, k_2 \in P} \hat{f}_{(i_1-k_1),(k_2+i_2+1)} \hat{g}^*_{(j_1-k_1),(j_2+k_2+1)} \\
  &=  \sum_{k_1, k_2 \in P} \hat{f}_{(i_1-k_1),(k_2+i_2+1)} \hat{g}_{(k_1-j_1),(-j_2-k_2-1)} \\
  \Leftrightarrow \leftmat \Jmat^{n^2} \Gmat^{\alpha_2 \top}(f) \Gmat^{\alpha_2}(g^*) \Jmat_{n^2} \rightmat_{i, j} &=  \sum_{k_1, k_2 \in P} \hat{f}_{(k_1-i_1),(k_2-i_2+n)} \hat{g}_{(j_1-k_1),(j_2-k_2-n)}
\end{flalign*}

% Y @ H3_a.T @ H3_b_ @ Y
\begin{flalign*}
  \leftmat \Gmat^{\alpha_3 \top}(f) \Gmat^{\alpha_3}(g^*) \rightmat_{i, j} &=  \sum_{k_1, k_2 \in P}  \hat{f}_{(k_1+i_1+1),(k_2+i_2+1)} \hat{g}^*_{(j_1+k_1+1),(k_2+j_2+1)} \\
  &=  \sum_{k_1, k_2 \in P} \hat{f}_{(k_1+i_1+1),(k_2+i_2+1)} \hat{g}_{(-j_1-k_1-1),(-k_2-j_2-1)} \\
  \Leftrightarrow \leftmat \Jmat_{n^2} \Gmat^{\alpha_3 \top}(f) \Gmat^{\alpha_3}(g^*) \Jmat_{n^2} \rightmat_{i, j} &=  \sum_{k_1, k_2 \in P} \hat{f}_{(k_1-i_1+n),(k_2-i_2+n)} \hat{g}_{(j_1-k_1-n),(-k_2+j_2-n)}
\end{flalign*}

% Y @ H4_a.T @ H4_b_ @ Y
\begin{flalign*}
  \leftmat \Gmat^{\alpha_4 \top}(f) \Gmat^{\alpha_4}(g^*) \rightmat_{i, j} &=  \sum_{k_1, k_2 \in P}  \hat{f}_{(-k_1+i_1-n),(k_2+i_2+1)} \hat{g}^*_{(j_1-k_1-n),(j_2+k_2+1)} \\
  &= \sum_{k_1, k_2 \in P} \hat{f}_{(-k_1+i_1-n),(k_2+i_2+1)} \hat{g}_{(-j_1+k_1+n),(-j_2-k_2-1)} \\
  \Leftrightarrow \leftmat \Jmat_{n^2} \Gmat^{\alpha_4 \top}(f) \Gmat^{\alpha_4}(g^*) \Jmat_{n^2} \rightmat_{i, j} &= \sum_{k_1, k_2 \in P} \hat{f}_{(-k_1-i_1-1),(k_2-i_2+n)} \hat{g}_{(j_1+k_1+1),(j_2-k_2-n)}
\end{flalign*}

\noindent
Now, we can observe from Equation~\ref{equation:split_double_sum} that:
\begin{equation}
  \Dmat(fg) = \Dmat(f)\Dmat(g) + \sum_{p=0}^3 \Gmat^{\alpha_p \top}(f^*) \Gmat^{\alpha_p}(g) + \Jmat_{n^2} \left( \sum_{p=0}^3 \Gmat^{\alpha_p \top}(f) \Gmat^{\alpha_p}(g^*) \right) \Jmat_{n^2}.
\end{equation}
which concludes the proof. 
\end{proof}

    %%%%%%%%%%%%%%%%%%%%%%%%%%%%%%%%%%%%%%%%%%%%%%%%%%%%%%%%%%%%%%%%%%%%%%%%%%%%%%%
\chapter{Diagonal Circulant Neural Networks for Video Classification}
\label{appendix:ap2-diagonal_circulant_neural_networks_for_video_classification}
%%%%%%%%%%%%%%%%%%%%%%%%%%%%%%%%%%%%%%%%%%%%%%%%%%%%%%%%%%%%%%%%%%%%%%%%%%%%%%%
\localtoc

\noindent
\emph{This Appendix reports some additional experiments on video classification with diagonal-circulant neural networks.
These experiments have been done in the context of the \yt\footnote{https://www.kaggle.com/c/youtube8m} video classification challenge.
This work was recognized as one of the 5 original approaches by the Google AI team that organized the workshop \cite{lee20182nd}.}

\vspace{\fill}

%%%%%%%%%%%%%%%%%%%%%%%%%%%%%%%%%%%%%%%%%%%%%%%%%%%%%%%%%%%%%%%%%%%%%%%%%%%%%%%
\section{Introduction}
\label{section:ap2-introduction}

Classification of unlabeled videos streams is one of the challenging tasks for machine learning algorithms.
Research in this field has been stimulated by the recent release of several large annotated video datasets such as \emph{Sports-1M}~\cite{karpathy2014large}, \emph{FCVID}~\cite{FCVID} or the \yt~\cite{abu2016youtube} dataset.

The naive approach to achieve video classification is to perform frame-by-frame image recognition, and to average the results before the classification step.
However, it has been shown by~\citet{abu2016youtube,miech2017learnable} that better results can be obtained by building features across different frames and several deep learning architectures have been designed to learn embeddings for sets of frames.
For example Deep Bag-of-Frames~(DBoF)~\cite{abu2016youtube}, NetVLAD~\cite{arandjelovic2016netvlad} or architectures based on Fisher Vectors~\cite{perronnin2007fisher}. 

The DBoF embedding layer processes videos in two steps.
First, a learned transformation projects all the frames together into a high dimensional space. 
Then, a max (or average) pooling operation aggregates all the embedded frames into a single discriminative vector representation of the video.
The NetVLAD embedding layer is built on a technique called \emph{vector of locally aggregated descriptors} (VLAD)~\cite{jegou2010aggregating}.
This technique that aggregates a large number of local frame descriptors into a compact representation using a codebook of visual words.
In NetVlad, the codebook is directly learned end-to-end during training.
Finally, NetFisherVector (NetFV) is inspired by~\citet{perronnin2007fisher} and uses first and second-order statistics as video descriptors also gathered in a codebook.
The technique can benefit from deep learning by using a deep neural network to learn the codebook \cite{miech2017learnable}.

All the architectures mentioned above can be used to build video features in the sense of features that span across several frames, but they are not designed to exploit the sequential nature of videos and capture motion.
In order to truly learn spatio-temporal features and account for motion in videos, several researchers have looked into recurrent neural networks (\eg LSTM~\cite{yue2015beyond,li2017temporal}) and 3D convolutions~\cite{karpathy2014large} (in space and time).
However, these approaches do not outperform non-sequential models, and the single best model proposed by~\citet{miech2017learnable} (winner of the first \yt competition) is based on NetVLAD. 

The \emph{2nd YouTube-8M Video Understanding Challenge} includes a constraint on the model size and many competitors have been looking into building efficient memory models with high accuracy.
There are two kinds of techniques to reduce the memory required for training and/or inference in neural networks.
The first kind aims at \emph{compressing} an existing neural network into a smaller one, (thus it only impacts the size of the model at inference time).
The second one aims at \emph{constructing models that are compact} by design.

\section{Compact Architecture using Diagonal and Circulant Matrices}
\label{section:ap2-compact_video_classification_architecture_using_diagonal_and_circulant_matrices}
%%%%%%%%%%%%%%%%%%%%%%%%%%%%%%%%%%%%%%%%%%%%%%%%%%%%%%%%%%%%%%%%%%%%%%%%%%%%%%%

Building on the matrix decomposition presented in~\Cref{chapter:ch4-diagonal_circulant_neural_network}, we introduce a compact neural network architecture for video classification where dense matrices have been replaced by products of circulant and diagonal matrices.

%%%%%%%%%%%%%%%%%%%%%%%%%%%%%%%%%%%%%%%%%%%%%%%%%%%%%%%%%%%%%%%%%%%%%%%%%%%%%%%
\subsection{Base Model}
\label{subsection:ap2-base_model}
%%%%%%%%%%%%%%%%%%%%%%%%%%%%%%%%%%%%%%%%%%%%%%%%%%%%%%%%%%%%%%%%%%%%%%%%%%%%%%%

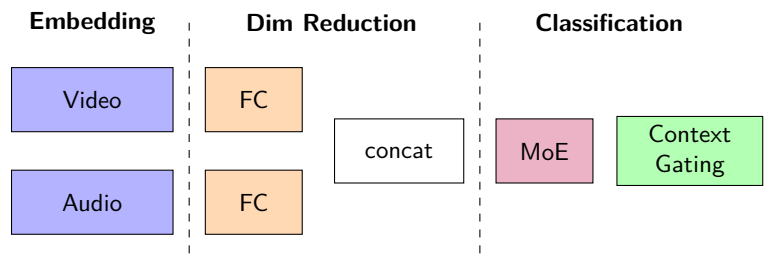
\begin{figure}[ht]
  \centering
  \tikzset{%
  >={Latex[width=2mm,length=2mm]},
  % Specifications for style of nodes:
            base/.style = {rectangle, draw=black, text centered, font=\sffamily},
             box/.style = {base, rounded corners, text depth=3cm, minimum height=4cm, minimum width=3cm},
     transparent/.style = {rectangle, draw=black},
       circulant/.style = {base, fill=yellow!30},
       embedding/.style = {base, fill=blue!30, minimum width=2.5cm, minimum height=1cm},
           other/.style = {base, fill=white!30,  minimum width=2cm, minimum height=1cm},
              fc/.style = {base, fill=orange!30, minimum width=1.5cm, minimum height=1cm},
          gating/.style = {base, fill=green!30, minimum width=2cm, text width=2cm, minimum height=1cm},
             moe/.style = {base, fill=purple!30, minimum width=1.5cm, minimum height=1cm},
}

\begin{tikzpicture}[every node/.style={fill=white, font=\sffamily}, align=center, scale=0.85, every node/.style={scale=0.85}]

  \draw (0.0, +2.)  node [other, draw=none] {\textbf{Embedding}};
  \draw (+3.7, +2.)  node [other, draw=none] {\textbf{Dim Reduction}};
  \draw (+8.0, +2.)  node [other, draw=none] {\textbf{Classification}};

  \draw (0, +0.8)  node [embedding] {Video};
  \draw (0, -0.8)  node [embedding] {Audio};

  \draw (+2.5, +0.8)  node (fc) [fc] {FC};
  \draw (+2.5, -0.8)  node (fc) [fc] {FC};

  \draw (+4.75, 0)  node (fc) [other] {concat};
  \draw (+7.0, 0)  node (moe) [moe] {MoE};
  \draw (+9.25, 0)  node (gating2) [gating] {Context Gating};
 
  \draw (+1.5, +2) [dashed] -- (+1.5, -1.7);
  \draw (+6, +2) [dashed] -- (+6, -1.7);
  
\end{tikzpicture}
  \caption{Diagram of the architecture proposed by~\citet{miech2017learnable} and used for the experiments. The sample goes through an embedding layer and is reduced with a Fully Connected layer. The results are then concatenated and classified with a Mixture-of-Experts and Context Gating layer.}
  \label{figure:ap2-model_baseline}
\end{figure}

We demonstrate the benefit of the decomposition into diagonal and circulant matrices using a base model which has been proposed by~\citet{miech2017learnable}.
This architecture can be decomposed into three blocks of layers, as illustrated in~\Cref{figure:ap2-model_baseline}. 
The first block of layers, composed of the Deep Bag-of-Frames embedding, is meant to process audio and video frames independently.
The DBoF layer computes two embeddings: one for the audio and one for the video.
In the next paragraph, we will only focus on describing the video embedding (The audio embedding is computed in a very similar way).
We represent a video $\Vmat$ as a set of $m$ frames $\{\vvec^{(1)}, \ldots, \vvec^{(m)}\}$ where each frame $\vvec^{(i)} \in \Rbb^k$ is a vector of visual features extracted from the frame image.
In the context of the \yt competition, each $\vvec_i$ is a vector of 1024 visual features extracted using the last fully connected layer of an Inception network trained on ImageNet.
The DBoF layer then embeds a video $\Vmat$ into a vector $\vvec'$ drawn from a $\pvec$ dimensional vector space as follows:
\begin{equation}
  \vvec' = \max \left\{\Wmat \vvec^{(i)} \mid \vvec^{(i)} \in \Vmat \right\}
\end{equation}
where $\Wmat$ is a matrix in $\Rbb^{p \times k}$ (learned) and max is the element-wise maximum operator.
We typically choose $p > k$, (\eg $p = 8192$).
Note that because this formulation is framed in term of sets, it can process videos of different lengths (\ie, a different value of $m$).
A second block of layers reduces the dimensionality of each embedding layer (audio and video), and merges the result into a single vector by using a simple concatenation operation.
We chose to reduce the dimensionality of each embedding layer separately \emph{before} the concatenation operation to avoid the concatenation of two high dimensional vectors.

Finally, the classification block uses a combination of Mixtures-of-Experts (MoE) and Context Gating to calculate the final probabilities.
The Mixtures-of-Experts layer introduced by~\citet{jordan1993hierarchical} and proposed for video classification by~\citet{abu2016youtube} is used to predict each label independently.
It consists of a gating and experts networks which are concurrently learned.
The gating network learns which experts to use for each label and the experts layers learn how to classify each label.
The context gating operation was introduced by~\citet{miech2017learnable} and captures dependencies among features and re-weight probabilities based on the correlation of the labels.
For example, it can capture the correlation of the labels \emph{ski} and \emph{snow} and re-adjust the probabilities accordingly.

%%%%%%%%%%%%%%%%%%%%%%%%%%%%%%%%%%%%%%%%%%%%%%%%%%%%%%%%%%%%%%%%%%%%%%%%%%%%%%%
\subsection{Robust Deep Bag-of-Frames pooling method}
\label{subsection:ap2-robust_deep_bag-of-frames_pooling_method}
%%%%%%%%%%%%%%%%%%%%%%%%%%%%%%%%%%%%%%%%%%%%%%%%%%%%%%%%%%%%%%%%%%%%%%%%%%%%%%%

We propose a technique to extract more performance from the base model with DBoF embedding.
The maximum pooling is sensitive to outliers and noise whereas the average pooling is more robust.
We propose a method which consists in taking several samples of frames, applying the upsampling followed by the maximum pooling to these samples, and then averaging over all samples.
More formally, assume a video contains $m$ frames $\vvec_1, \ldots, \vvec_m \in \Rbb^{1024}$.
We first draw $n$ random samples $\Smat^{(1)} \ldots \Smat^{(n)}$ of size $k$ from the set $\left\{ \vvec^{(1)}, \ldots, \vvec^{(m)} \right\}$.
The output of the robust-DBoF layer is:
\begin{equation}
  \frac{1}{n} \sum_{i=1}^{n} \max \left\{ \Wmat \vvec :\vvec \in S_{i} \right\} 
\end{equation}
\noindent
Depending on $n$ and $k$, this pooling method is a trade-off between the max pooling and the average pooling.
Thus, it is more robust to noise, as will be shown in the experiments section.

%%%%%%%%%%%%%%%%%%%%%%%%%%%%%%%%%%%%%%%%%%%%%%%%%%%%%%%%%%%%%%%%%%%%%%%%%%%%%%%
\subsection{Compact Representation of the Base Model}
\label{subsection:ap2-compact_representation_of_the_base_model}
%%%%%%%%%%%%%%%%%%%%%%%%%%%%%%%%%%%%%%%%%%%%%%%%%%%%%%%%%%%%%%%%%%%%%%%%%%%%%%%

% In order to train this model in a compact form we build upon the work of~\citet{cheng2015exploration} and use a more general framework presented by~\Cref{equation:ap2-general_framework}.
% The fully connected layers are then represented as follows:

In order to build compact layers, we use the diagonal-circulant matrix decomposition.
The layers are represented as follows:
\begin{equation}
  \phi(\xvec) = \rho\left(\left[\prod_{i=1}^{m} \Dmat^{(i)} \Cmat^{(i)} \right] \xvec + \bvec \right)
\end{equation}
where the parameters of each matrix $\Dmat^{(i)}$ and $\Cmat^{(i)}$ are trained using a gradient based optimization algorithm, and $m$ defines the number of factors.
Increasing the value of $m$ increases the number of trainable parameters and therefore the modeling capabilities of the layer.
In our experiments, we chose the number of factors $m$ empirically to achieve the best trade-off between model size and accuracy.

To measure the impact of the size of the model and its accuracy, we represent layers in their compact form independently.
Given that circulant and diagonal matrices are square, we use concatenation and slicing to achieve the desired dimension.
As such, with $m=1$, the weight matrix of size $1024 \times 8192$ of the video embedding is represented by a concatenation of 8 DC matrices and the weight matrix of size $8192 \times 512$ is represented by a single DC matrix with size $8192 \times 8192$ and the resulting output is sliced at the 512 dimension.
We denote layers in their classic form as \emph{``Dense''} and layers represented with circulant and diagonal factors as \emph{``Compact''}.

%%%%%%%%%%%%%%%%%%%%%%%%%%%%%%%%%%%%%%%%%%%%%%%%%%%%%%%%%%%%%%%%%%%%%%%%%%%%%%%
\subsection{Leveraging Architectural Diversity}
\label{subsection:ap2-leveraging_architectural_diversity}
%%%%%%%%%%%%%%%%%%%%%%%%%%%%%%%%%%%%%%%%%%%%%%%%%%%%%%%%%%%%%%%%%%%%%%%%%%%%%%%

In order to benefit from architectural diversity, we also devise a single model architecture that combines different types of embedding layers.
As we can see in~\Cref{figure:ap2-diverstiy_architecture}, video and audio frames are processed by several embedding layers before being reduced by a series of compact fully connected layers.
The output of the compact fully connected layers are then averaged, concatenated and fed into the final classification block.
\Cref{figure:ap2-models} shows the result of different models given the number of parameters.
The use of circulant matrices allow us to fit this model in GPU memory.
For example, the diversity model with a NetVLAD embedding (cluster size of 256) and NetFV embedding (cluster size of 128) has 160 millions parameters (600 Mo) in the compact version and 728M (2.7 Go) in the dense version. 

\begin{figure}[ht]
  \centering
  \tikzset{%
  >={Latex[width=2mm,length=2mm]},
  % Specifications for style of nodes:
            base/.style = {rectangle, draw=black, text centered, font=\sffamily},
             box/.style = {base, rounded corners, text depth=2cm, minimum height=2cm, minimum width=3cm},
     transparent/.style = {rectangle, draw=black},
        fc_small/.style = {base, fill=orange!30, minimum width=0.5cm},
       embedding/.style = {base, fill=blue!30, minimum width=2.5cm},
          concat/.style = {base, fill=white!30, minimum height=1cm},
           other/.style = {base, fill=white!30,  minimum width=1.7cm, minimum height=0.7cm},
              fc/.style = {base, fill=orange!30, minimum width=1.5cm, minimum height=1cm},
          gating/.style = {base, fill=green!30, minimum width=2cm, text width=2cm, minimum height=1cm},
             moe/.style = {base, fill=purple!30, minimum width=1.5cm, minimum height=1cm},
}
\begin{tikzpicture}[every node/.style={fill=white, font=\sffamily}, align=center, scale=0.9, every node/.style={scale=0.9}]

  % Video
  \draw (0,   0) node (box1) [box] {Video};
  \draw (0, +0.4)  node [embedding] {DBoF};
  \draw (0, -0.2)  node [embedding] {NetVLAD};
  \draw (0, -0.8)  node [embedding] {NetFV};
  \draw (+2.25, +0.4)  node (fc) [fc_small] {FC};
  \draw (+2.25, -0.2)  node (fc) [fc_small] {FC};
  \draw (+2.25, -0.8)  node (fc) [fc_small] {FC};
  
  % Audio
  \draw (0, -2.6) node (box1) [box] {Audio};
  \draw (0, +0.4-2.6)  node [embedding] {DBoF};
  \draw (0, -0.2-2.6)  node [embedding] {NetVLAD};
  \draw (0, -0.8-2.6)  node [embedding] {NetFV};
  \draw (+2.25, +0.4-2.6)  node (fc) [fc_small] {FC};
  \draw (+2.25, -0.2-2.6)  node (fc) [fc_small] {FC};
  \draw (+2.25, -0.8-2.6)  node (fc) [fc_small] {FC};
  
  % merge
  \draw (+3.75, -0.2-2.6)  node (concat) [other] {average};
  \draw (+3.75, -0.2-0.0)  node (concat) [other] {average};
  \draw (+5.85, -0.8-0.7)  node (concat) [other] {concat};

  % Classification
  \draw (+7.95, -0.8-0.7)  node (moe) [moe] {MoE};
  \draw (+10.1, -0.8-0.7)  node (gating2) [gating] {Context Gating};
  
  \draw (+0, +1.75)  node [other, draw=none] {\textbf{Embedding}};
  \draw (+4.5, +1.75)  node [other, draw=none] {\textbf{Dim Reduction}};
  \draw (+9.2, +1.75)  node [other, draw=none] {\textbf{Classification}};
  
  \draw (+1.75, +2.0) [dashed] -- (+1.75, -4.0);
  \draw (+6.95, +2.0) [dashed] -- (+6.95, -4.0);
    
\end{tikzpicture}
  \caption{Diagram of architecture with several embeddings devised to leverage the diversity of an Ensemble in a single model.}
  \label{figure:ap2-diverstiy_architecture}
\end{figure}
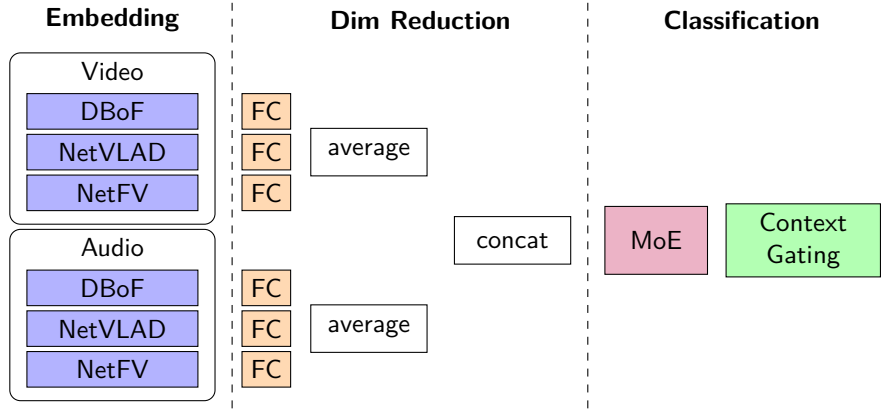

%%%%%%%%%%%%%%%%%%%%%%%%%%%%%%%%%%%%%%%%%%%%%%%%%%%%%%%%%%%%%%%%%%%%%%%%%%%%%%%
\section{Experiments}
\label{section:ap2-experiments}
%%%%%%%%%%%%%%%%%%%%%%%%%%%%%%%%%%%%%%%%%%%%%%%%%%%%%%%%%%%%%%%%%%%%%%%%%%%%%%%

In this section, we first evaluate the pooling technique proposed in~\Cref{subsection:ap2-robust_deep_bag-of-frames_pooling_method}.
Then, we conduct experiments to evaluate the accuracy of our compact models.
In particular, we investigate which layer benefits the most from a circulant representation and show that our approach where both the diagonal and the circulant is learned performs better than the approach from~\citet{cheng2015exploration} for the video classification problem.
Finally, we compare all our models on a two dimensional size \vs accuracy scale in order to evaluate the trade-off between size and accuracy of each one of our models.
All the figures of this section can be found at the end of the chapter.

%%%%%%%%%%%%%%%%%%%%%%%%%%%%%%%%%%%%%%%%%%%%%%%%%%%%%%%%%%%%%%%%%%%%%%%%%%%%%%%
\paragraph{Experimental Setup}
%%%%%%%%%%%%%%%%%%%%%%%%%%%%%%%%%%%%%%%%%%%%%%%%%%%%%%%%%%%%%%%%%%%%%%%%%%%%%%%

All the experiments of this appendix have been done in the context of the \emph{2nd YouTube-8M Video Understanding Challenge} with the \yt dataset.
We trained our models with the full training set and 70\% of the validation set which corresponds to a total of \numprint{4822555} examples.
We used the data augmentation technique proposed by~\citet{skalic2017deep} to virtually double the number of inputs. 
The method consists in splitting the videos into two equal parts.
This approach is motivated by the observation that a human could easily label the video by watching either the beginning or the ending of the video. 
All our experiments are developed with TensorFlow Framework~\cite{tensorflow2015-whitepaper}.
We trained our models with the CrossEntropy loss and used Adam optimizer with a 0.0002 learning rate and a 0.8 exponential decay every 4 million examples.
All fully connected layers are composed of 512 units.
DBoF, NetVLAD and NetFV are respectively 8192, 64 and 64 of cluster size for video frames and 4096, 32, 32 for audio frames.
We used 4 mixtures for the MoE Layer.
We used all the available 150 frames and robust max pooling introduced in~\Cref{subsection:ap2-robust_deep_bag-of-frames_pooling_method} for the DBoF embedding.
In order to stabilize and accelerate the training, we used batch normalization before each nonlinear activation and gradient clipping. 
We used the GAP (Global Average Precision), as used in the \emph{2nd YouTube-8M Video Understanding Challenge}, to compare our experiments.
The GAP metric is defined as follows:
\begin{equation}
  \text{GAP} = \sum_{i=1}^{P}p(i) \Delta r(i)
\end{equation}
where $P$ is the number of final predictions, $p(i)$ the precision, and $r(i)$ the recall.
We limit our evaluation to 20 predictions for each video. 
All experiments have been realized on a cluster of 12 nodes. Each node has 160 POWER8 processor, 128 Go of RAM and 4 Nividia Titan P100.

%%%%%%%%%%%%%%%%%%%%%%%%%%%%%%%%%%%%%%%%%%%%%%%%%%%%%%%%%%%%%%%%%%%%%%%%%%%%%%%
\paragraph{Robust Deep Bag-of-Frames pooling method}
\label{subsection:ap2-robust_deep_bag-of-frames_pooling_method_experiements}
%%%%%%%%%%%%%%%%%%%%%%%%%%%%%%%%%%%%%%%%%%%%%%%%%%%%%%%%%%%%%%%%%%%%%%%%%%%%%%%

We evaluate the performance of our Robust DBoF embedding.
In accordance with the work of~\citet{abu2016youtube}, we find that average pooling performs better than maximum pooling. 
\Cref{figure:ap2-learning_curve_bagging} shows that the proposed robust maximum pooling method outperforms both maximum and average pooling.

%%%%%%%%%%%%%%%%%%%%%%%%%%%%%%%%%%%%%%%%%%%%%%%%%%%%%%%%%%%%%%%%%%%%%%%%%%%%%%%
\paragraph{Impact of Circulant Matrices on Different Layers}
%%%%%%%%%%%%%%%%%%%%%%%%%%%%%%%%%%%%%%%%%%%%%%%%%%%%%%%%%%%%%%%%%%%%%%%%%%%%%%%

\begin{table}[ht]
  \centering
  \begin{tabular}{lccccc}
    \toprule
    \textbf{Model} & \textbf{\#Parameters} & \textbf{CR} & \textbf{GAP@20} & \textbf{Diff.} \\
    \midrule
    Dense Model  & 45M &   -- & \textbf{0.846} & -- \\
    Compact DBoF & 36M & 18.4 & 0.838 & -0.008\\
    Compact FC   & 41M &  9.2 & 0.845 & \textbf{-0.001} \\
    Compact MoE  & 12M & 72.0 & 0.805 & -0.041 \\
   \bottomrule
  \end{tabular}
  \caption{Effect of the compactness of different layers.} 
  \label{table:ap4-circulant_layer}
\end{table}

This series of experiments aims at understanding the effect of compactness over different layers.
\Cref{table:ap4-circulant_layer} shows the result in terms of the number of weights, compression ratio (CR) with respect to the dense model and GAP score.
In these experiments, for speeding-up the training phase, we did not use the audio features and exploited only the video information.
The compact fully connected layer achieves a compression rate of 9.5 while having a very similar performance, whereas the compact DBoF and MoE achieve a higher compression rate at the expense of accuracy. 
\Cref{figure:ap2-learning_curve_layers} shows that the model with a compact FC converges faster than the dense model.
The model with a compact DBoF shows a big variance over the validation GAP which can be associated with a difficulty to train.
The model with a compact MoE is more stable but at the expense of its performance.

Another series of experiments investigates the effect of adding factors of diagonal-circulant layers.
\Cref{table:ap2-factors} shows that there is no gain in accuracy even if the number of weights increases.
It also shows that adding factors has an important effect on the speed of training.
On the basis of this result, \ie given the performance and compression ratio, we can consider that representing the fully connected layer of the base model in a compact fashion can be a good trade-off.

% \begin{figure}[ht]
%   \centering
%   \input{figures/appendix/ap2-training_video_classification/graph_layers}
%   \caption{Validation GAP according to the number of epochs for different compact models.}
%   \label{figure:ap2-learning_curve_layers}
% \end{figure}

% \begin{table}[ht]
%   \centering
%   \begin{tabular}{ccccc}
%   \toprule
%   \multirow{2}{*}{\textbf{\#factors}} & \multirow{2}{*}{\textbf{\#Examples/sec}} & \textbf{\#parameters in FC Layer} & \textbf{Compress. Rate of FC layer (\%)} & \multirow{2}{*}{\textbf{GAP@20}} \\
%   \midrule
%   \midrule
%   1 & \numprint{1052} & \numprint{12288} & 99.8 & 0.861 \\
%   3 & 858 & \numprint{73728} & 98.8 & 0.861 \\
%   6 & 568 & \numprint{147456} & 97.6 & 0.859 \\
%   Dense FC & \numprint{1007} & \numprint{6291456} & - & 0.861 \\
%   \bottomrule
%   \end{tabular}
%   \caption{This table shows the evolution of the number of parameters and the accuracy according to the number of factors. Despite the addition of degrees of freedom for the weight matrix of the fully connected layer, the model does not improve in performance. The column \emph{\#Examples/sec} shows the evolution of images per sec processed during the training of the model with a compact FC according to the number of factors.}
%   \label{table:ap2-factors}
% \end{table}

\begin{table}[ht]
  \centering
  \begin{tabular}{ccccc}
    \toprule
    \multirow{2}{*}{\textbf{\#Factors}} & \multicolumn{2}{c}{\textbf{FC Layer}} & \multirow{2}{*}{\textbf{GAP@20}} \\
    \cmidrule{2-3}
     & \textbf{\#Parameters} & \textbf{CR} \\
    \midrule
    -- & 6.29M & -- & 0.861 \\
    1 & 12K & 99.8 & 0.861 \\
    3 & 73K & 98.8 & 0.861 \\
    6 & 147K & 97.6 & 0.859 \\
    \bottomrule
  \end{tabular}
  \caption{Evolution of the number of parameters and accuracy according to the number of factors.} 
  \label{table:ap2-factors}
\end{table}

%%%%%%%%%%%%%%%%%%%%%%%%%%%%%%%%%%%%%%%%%%%%%%%%%%%%%%%%%%%%%%%%%%%%%%%%%%%%%%%
\paragraph{Comparison with Related Works}
%%%%%%%%%%%%%%%%%%%%%%%%%%%%%%%%%%%%%%%%%%%%%%%%%%%%%%%%%%%%%%%%%%%%%%%%%%%%%%%

Circulant matrices have already been used in neural networks by~\citet{cheng2015exploration}.
They proposed to replace fully connected layers by a circulant and diagonal matrices where the circulant matrix is learned by a gradient based optimization algorithm and the diagonal matrix is random with values in \{-1, 1\}.
We compare our more general framework with their approach.
\Cref{figure:ap2-learning_dc_cd} shows the validation GAP according to the number of epochs of the base model with a compact fully connected layer implemented with both approaches.

% \begin{figure}[ht]
%   \centering
%   \input{figures/appendix/ap2-training_video_classification/graph_dc_cd}
%   \caption{This figure shows the GAP difference between the $CD$ approach proposed in~\cite{cheng2015exploration} and the more generalized $DC$ approach from~\Cref{subsection:ap2-compact_representation_of_the_base_model}. Instead of having $D \in \{-1, +1\}$ fixed, the generalized approach allows $D$ to be learned.}
%   \label{figure:ap2-learning_dc_cd}
% \end{figure}

%%%%%%%%%%%%%%%%%%%%%%%%%%%%%%%%%%%%%%%%%%%%%%%%%%%%%%%%%%%%%%%%%%%%%%%%%%%%%%%
\paragraph{Compact Baseline Model with Different Embeddings}
%%%%%%%%%%%%%%%%%%%%%%%%%%%%%%%%%%%%%%%%%%%%%%%%%%%%%%%%%%%%%%%%%%%%%%%%%%%%%%%

To compare the performance and the compression ratio we can expect, we consider different settings where the compact fully connected layer is used together with different embeddings.
Figures~\ref{figure:ap2-validation_gap_compact_dbof}, \ref{figure:ap2-validation_gap_compact_netvlad}, 
\ref{figure:ap2-validation_gap_compact_netfv} and \Cref{table:ap2-fc_circulant_with_diff_embedding} 
show the performance of the base model with DBoF, NetVLAD and NetFV embeddings with a \emph{Dense} and \emph{Compact} layer.
Notice that we can get a bigger compression rate with NetVLAD and NetFV due to the fact that the output of the embedding is in a higher dimensional space which implies a larger weight matrix for the fully connected layer.
Although the compression rate is higher, it is at the expense of accuracy.

\paragraph{Tradeoff Between Model Size and Accuracy}
%%%%%%%%%%%%%%%%%%%%%%%%%%%%%%%%%%%%%%%%%%%%%%%%%%%%%%%%%%%%%%%%%%%%%%%%%%%%%%%

To conclude our experimental evaluation, we compare all our models in terms of size and accuracy.
The results are presented in~\Cref{figure:ap2-models}. 
As we can see in this figure, the most compact models are obtained with the circulant NetVLAD and NetFV.
We can also see that the complex architectures described in~\Cref{subsection:ap2-leveraging_architectural_diversity} (DBoF + NetVLAD) achieve top performance but at the cost of a large number of weights.
Finally, the best trade-off between size and accuracy is obtained using the DBoF embedding layer and achieves a GAP of 0.861 for only 60 millions weights.

\begin{table}[t]
  \centering
  \begin{tabular}{llcccc}
    \toprule
    \textbf{Embedding} & \textbf{Method} & \textbf{\#Parameters} & \textbf{CR} & \textbf{GAP@20} \\
    \midrule
    \multirow{2}{*}{\textbf{DBoF}} & FC Dense & 65M & -- & 0.861 \\
     & FC Circulant & 59M & 9.56 & 0.861 \\
    \midrule
    \multirow{2}{*}{\textbf{NetVLAD}} & FC Dense & 86M & -- & 0.864 \\
     &FC Circulant & 50M & 41.1 & 0.851 \\
    \midrule
    \multirow{2}{*}{\textbf{NetFisher}} & FC Dense & 122M & -- & 0.860 \\
     & FC Circulant & 51M & 58.1 & 0.848 \\
    \bottomrule
  \end{tabular}
  \caption{Impact of the compression of the fully connected layer of the model architecture with Audio and Video features vector and different types of embeddings.} 
  \label{table:ap2-fc_circulant_with_diff_embedding}
\end{table}

%%%%%%%%%%%%%%%%%%%%%%%%%%%%%%%%%%%%%%%%%%%%%%%%%%%%%%%%%%%%%%%%%%%%%%%%%%%%%%%
\section{Concluding Remarks}
\label{section:ap2-conclusion}
%%%%%%%%%%%%%%%%%%%%%%%%%%%%%%%%%%%%%%%%%%%%%%%%%%%%%%%%%%%%%%%%%%%%%%%%%%%%%%%

In this appendix, we demonstrated that circulant matrices and diagonal matrices can be a great tool to design compact neural network architectures for video classification tasks.
Our experiments demonstrate that the best trade-off between size and accuracy is obtained using circulant DBoF embedding layers.
We investigated a model with multiple embeddings to leverage the performance of an Ensemble but found it ineffective.
The good performance of Ensemble models, \ie, why aggregating different distinct models performs better that incorporating all the diversity in a single architecture is still an open problem.
Our future work will be devoted to address this challenging question and to pursue our effort to devise compact models achieving the same accuracy as larger one, and to study their theoretical properties.

\begin{figure}[p!]
  \centering
  \includegraphics[width=\scalefigure\textwidth]{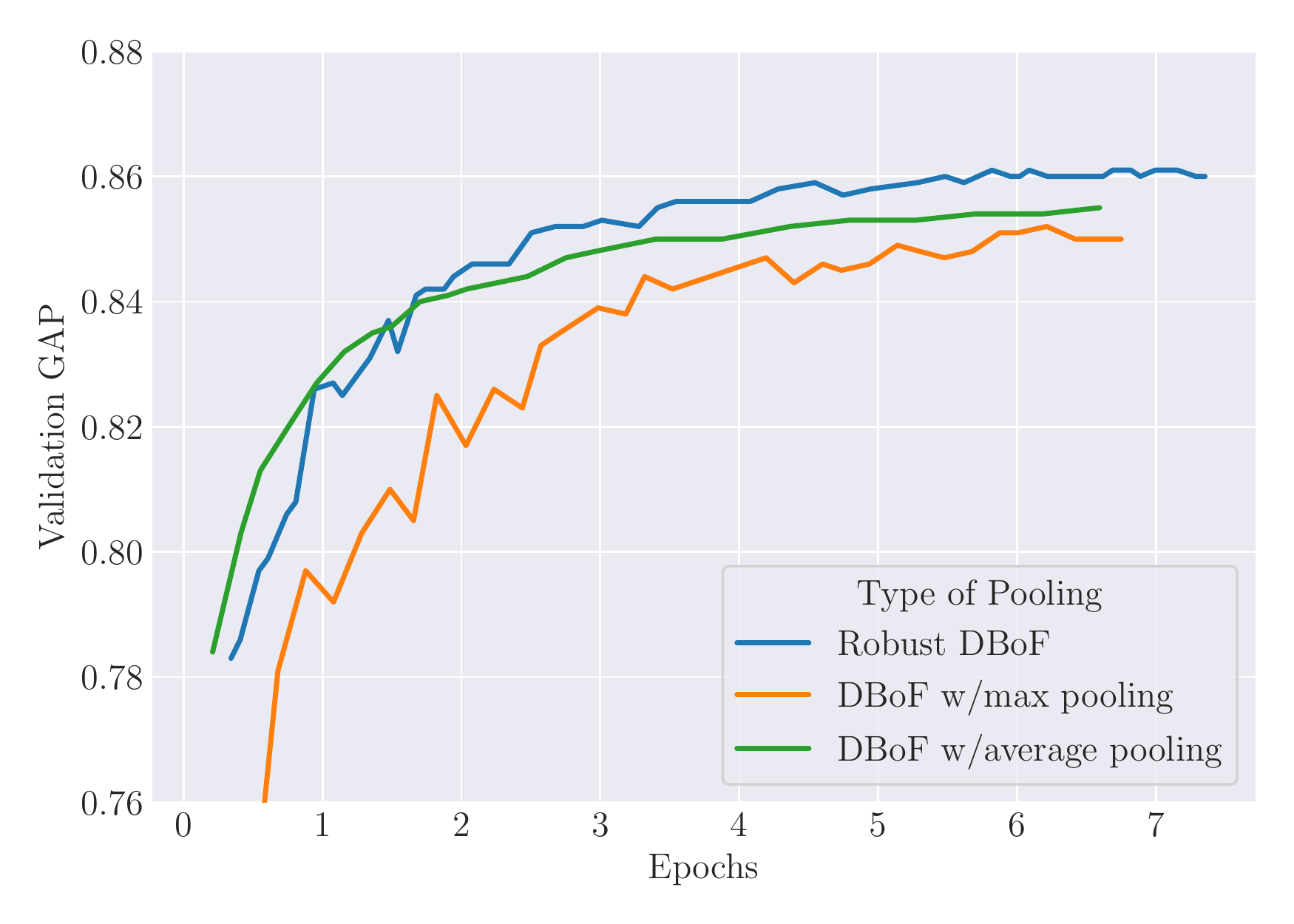}
  \caption{Impact of \emph{robust DBoF} with $n=10$ and $k=15$ on the Deep Bag-of-Frames embedding compared to max and average pooling.}
  \label{figure:ap2-learning_curve_bagging}
\end{figure}

\begin{figure}[p!]
  \centering
  \includegraphics[width=\scalefigure\textwidth]{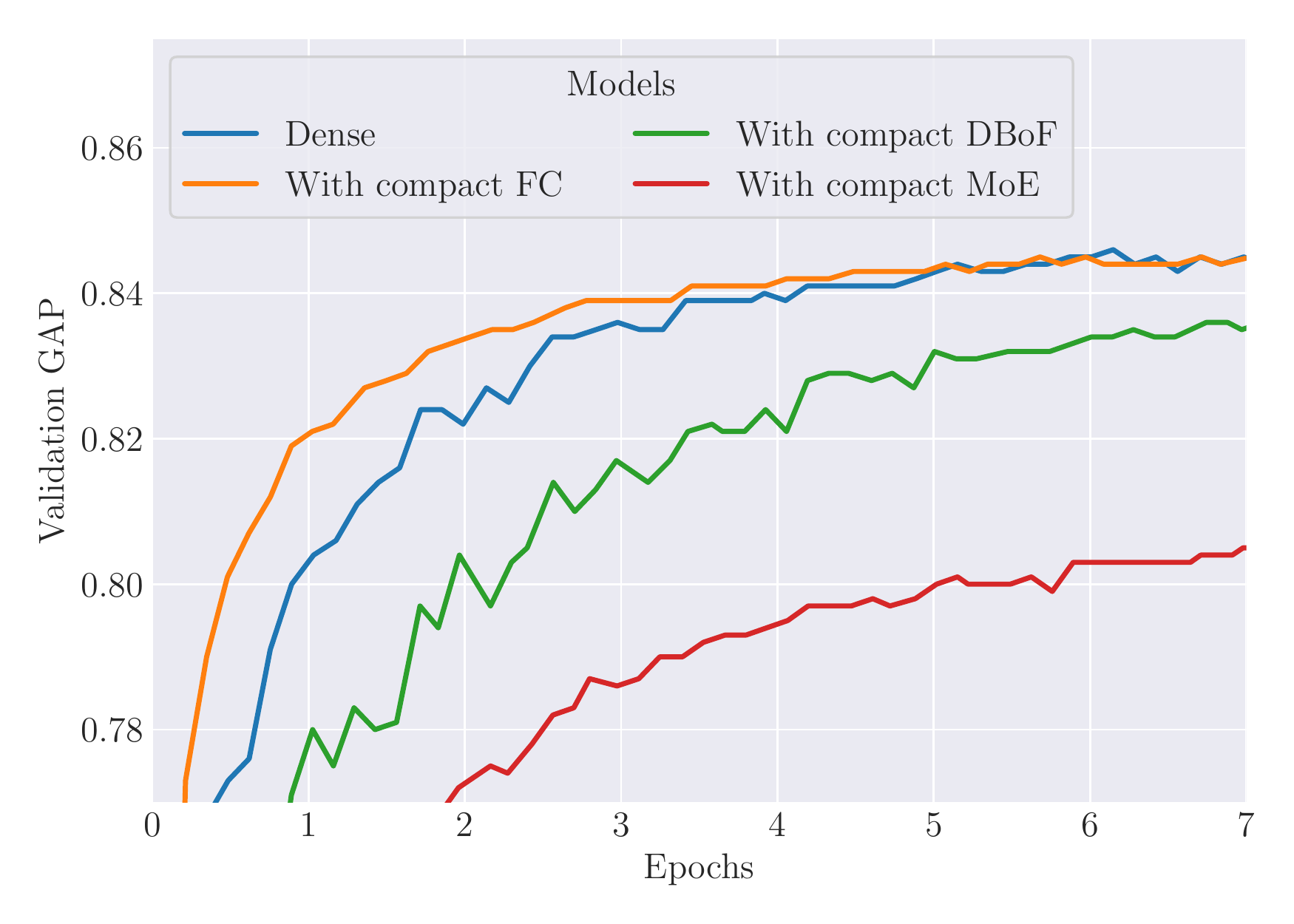}
  \caption{GAP score of models according to the number of epochs for different compact models.}
  \label{figure:ap2-learning_curve_layers}
  \vspace{2cm}
  \includegraphics[width=\scalefigure\textwidth]{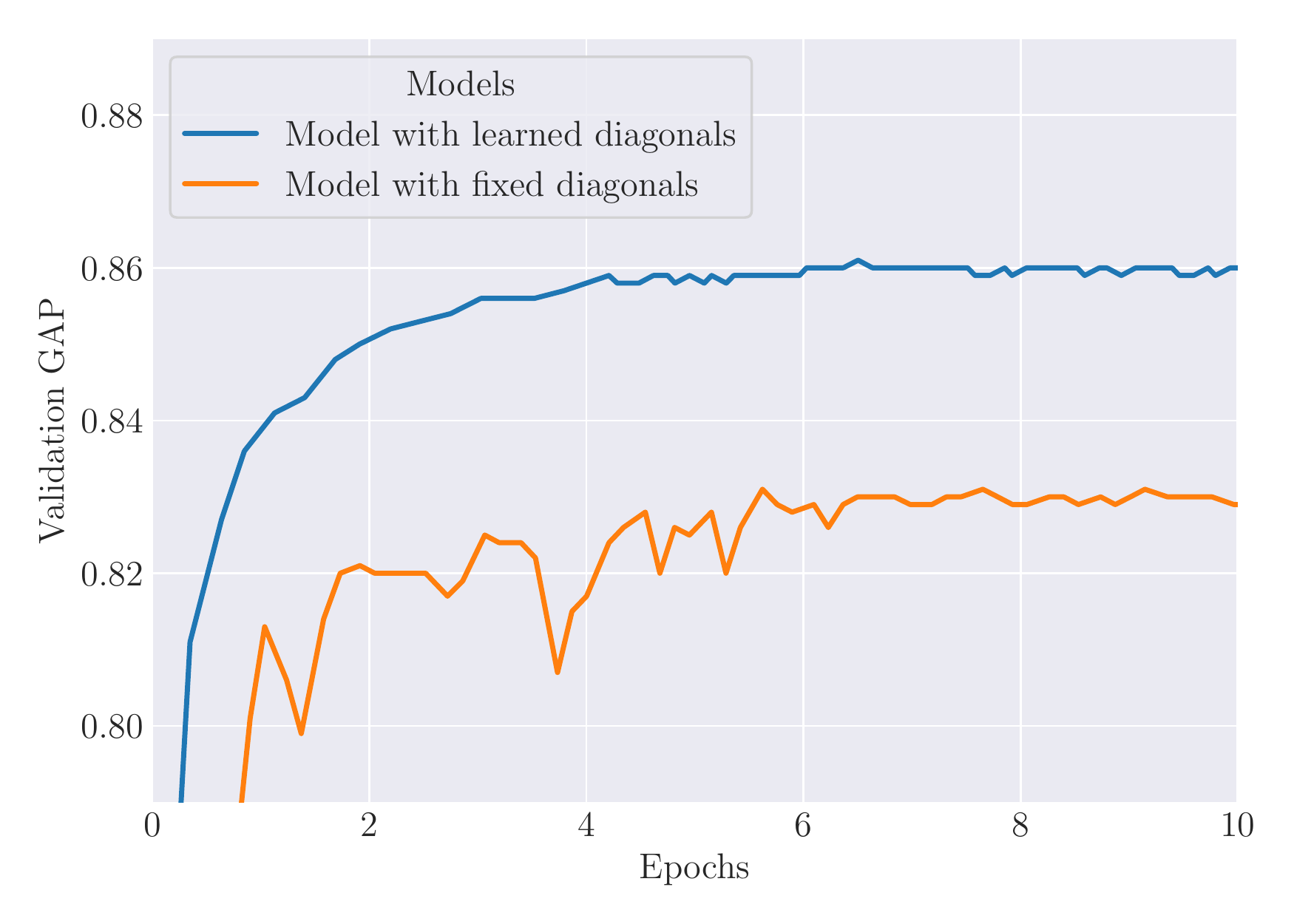}
  \caption{GAP difference between the approach proposed by~\citet{cheng2015exploration} where the diagonals from the decomposition are initialized from the set $\{-1, +1\}$ and kept fixed and our approach where the values of the diagonals are learned.} 
  \label{figure:ap2-learning_dc_cd}
\end{figure}

\begin{figure}[p!]
  \center
  \includegraphics[width=\scalefigure\textwidth]{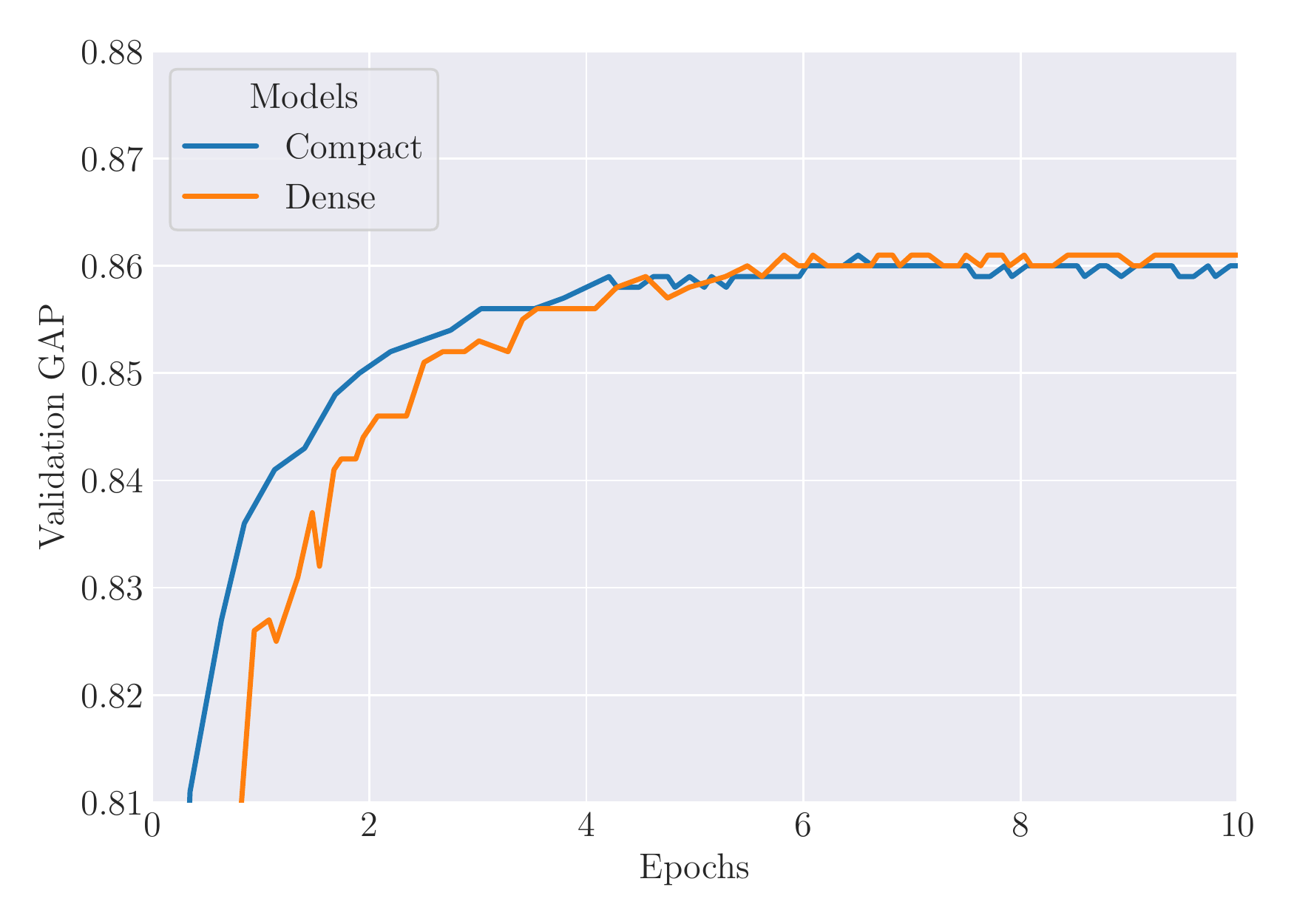}
  \caption{GAP score of models with compact DBoF embedding and dense fully connected layer.}
  \label{figure:ap2-validation_gap_compact_dbof}
  \vspace{2cm}
  \includegraphics[width=\scalefigure\textwidth]{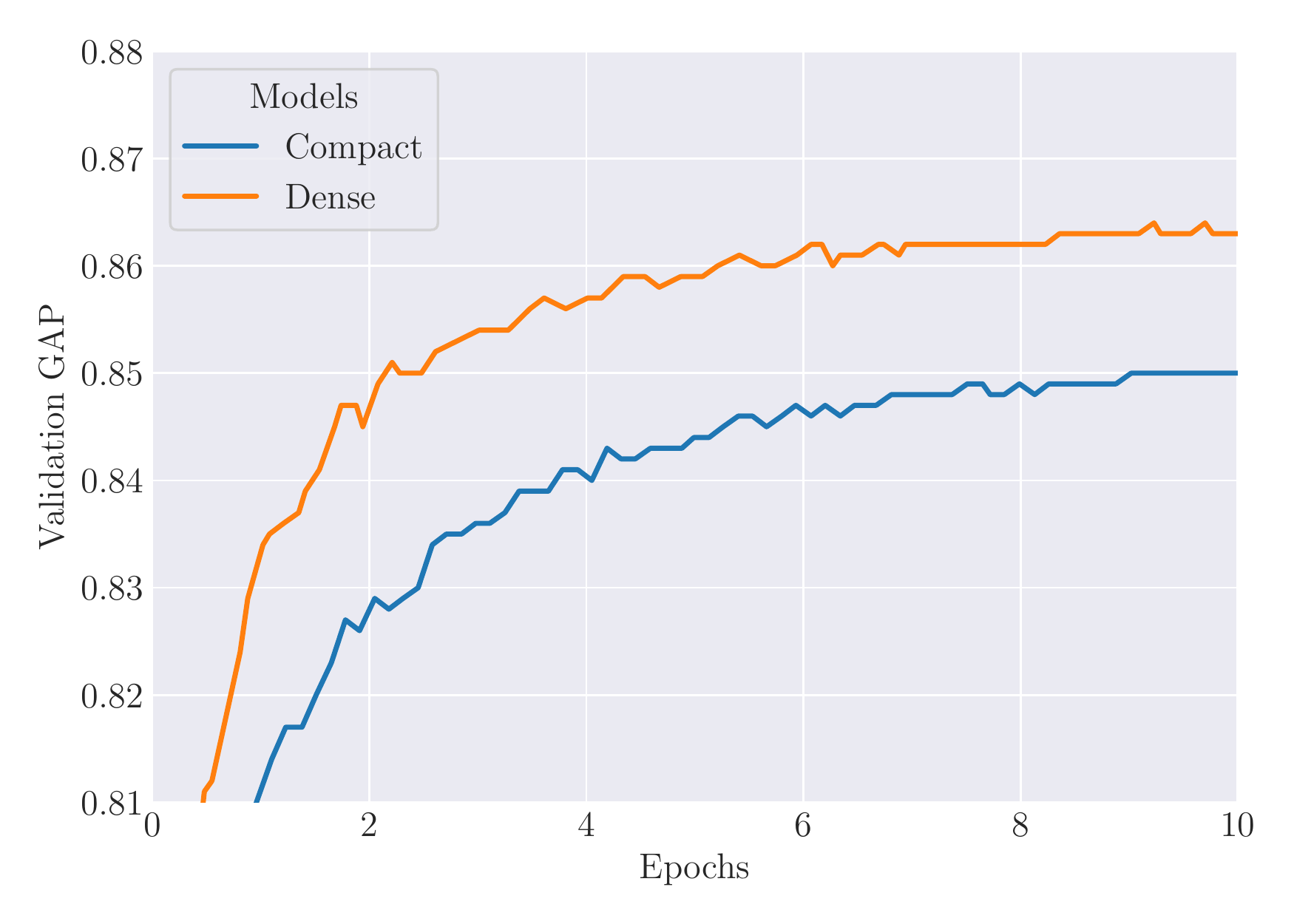}
  \caption{GAP score of models with compact NetVLAD embedding and dense fully connected layer.}
  \label{figure:ap2-validation_gap_compact_netvlad}
\end{figure}

\begin{figure}[p!]
  \center
  \includegraphics[width=\scalefigure\textwidth]{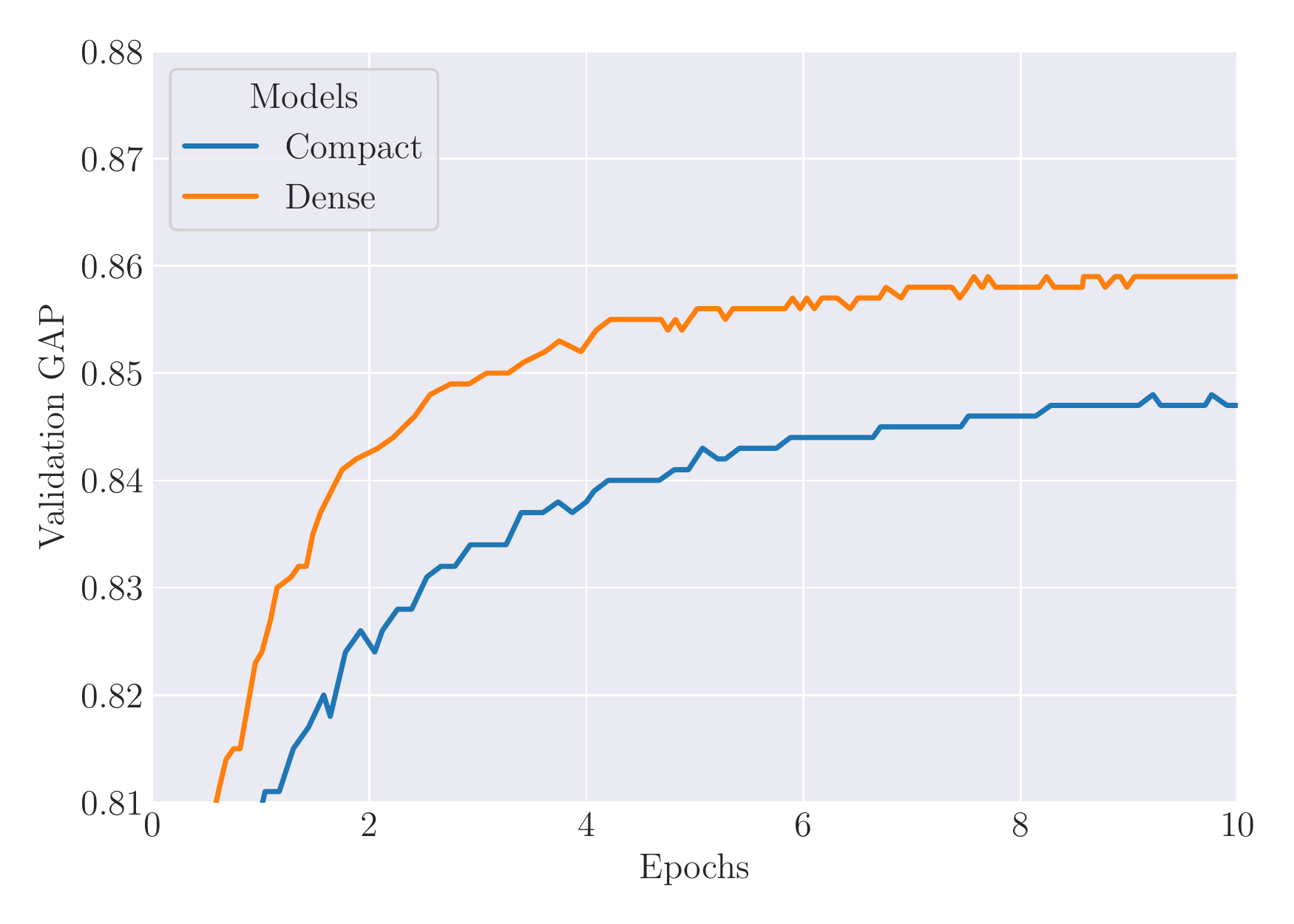}
  \caption{GAP score of models with compact NetFV embedding and dense fully connected layer.}
  \label{figure:ap2-validation_gap_compact_netfv}
  \vspace{2cm}
  \includegraphics[width=\scalefigure\textwidth]{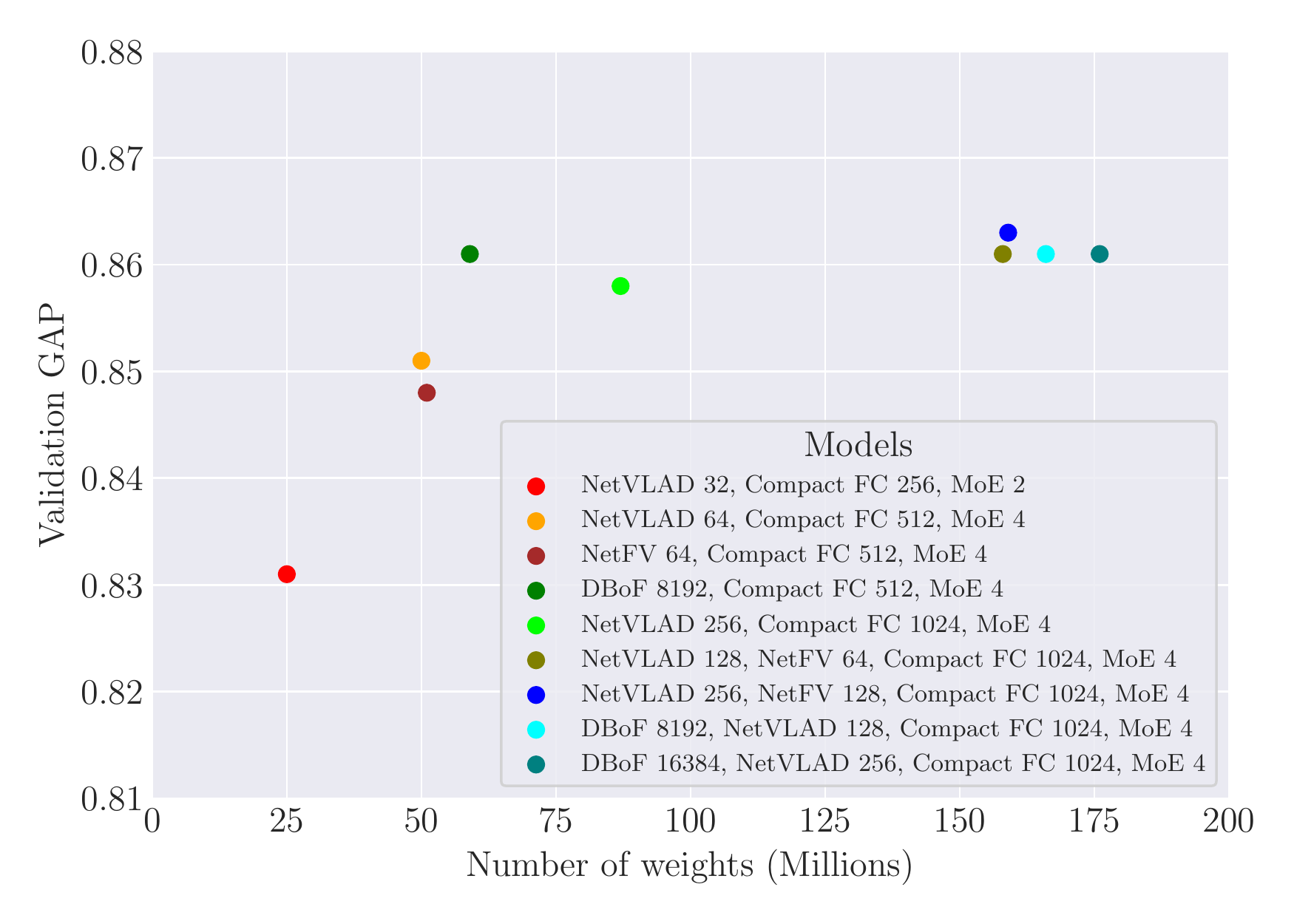}
  \caption{Comparison between different models with compact fully connected layers.}
  \label{figure:ap2-models}
\end{figure}

    %%%%%%%%%%%%%%%%%%%%%%%%%%%%%%%%%%%%%%%%%%%%%%%%%%%%%%%%%%%%%%%%%%%%%%%%%%%%%%%
\chapter{Theoretical Evidence for Adversarial Robustness through Randomization}
\label{appendix:ap3-theoretical_evidence_for_adversarial_robustness_through_randomization}
%%%%%%%%%%%%%%%%%%%%%%%%%%%%%%%%%%%%%%%%%%%%%%%%%%%%%%%%%%%%%%%%%%%%%%%%%%%%%%%
\localtoc

\vspace{\fill}

\noindent
\emph{
This Appendix concerns a collaboration with Rafael Pinot, Laurent Meunier, Hisashi Kashima, Florian Yger, Cédric Gouy-Pailler and Jamal Atif.
This work has been published at the Conference on Neural Information Processing Systems (NeurIPS) 2019.
It investigates the theory of robustness against adversarial attacks. It focuses on the family of randomization techniques that consist in injecting noise in the network at inference time.
All proofs of this appendix can be found in the long version of the paper.\footnote{https://arxiv.org/abs/1902.01148}
For simplification, we left the notation as in the original paper.
}

\vspace{\fill}

% \begin{abstract}
%     This paper investigates the theory of robustness against adversarial attacks. It focuses on the family of randomization techniques that consist in injecting noise in the network at inference time. These techniques have proven effective in many contexts, but lack theoretical arguments. We close this gap by presenting a theoretical analysis of these approaches, hence explaining why they perform well in practice. More precisely, we make two  new contributions. The first one relates the randomization rate to robustness to adversarial attacks. This result applies for the general family of exponential distributions, and thus extends and unifies the previous approaches. The second contribution consists in devising a new upper bound on the adversarial generalization gap of randomized neural networks. We support our theoretical claims with a set of experiments.
% \end{abstract}

%%%%%%%%%%%%%%%%%%%%%%%%%%%%%%%%%%%%%%%%%%%%%%%%%%%%%%%%%%%%%%%%%%%%%%%%%%%%%%%
\section{Introduction}
\label{section:ap3-introduction}
%%%%%%%%%%%%%%%%%%%%%%%%%%%%%%%%%%%%%%%%%%%%%%%%%%%%%%%%%%%%%%%%%%%%%%%%%%%%%%%

Adversarial attacks are some of the most puzzling and burning issues in modern machine learning.
An adversarial attack refers to a small, imperceptible change of an input maliciously designed to fool the result of a machine learning algorithm.
Since the seminal work of~\citet{szegedy2013intriguing} exhibiting this intriguing phenomenon in the context of deep learning, a wealth of results have been published on designing attacks~\cite{goodfellow2014explaining,papernot2016limitations,moosavi2016deepfool,kurakin2016adversarial,carlini2017towards,moosavi2017universal} and defenses~\cite{goodfellow2014explaining,papernot2016distillation,guo2017countering,meng2017magnet,samangouei2018defense,madry2018towards}), or on trying to understand the very nature of this phenomenon~\cite{fawzi2018empirical,simon2018adversarial,fawzi2018adversarial,moosavi2016robustness}.
Most methods remain unsuccessful to defend against powerful adversaries~\cite{carlini2017towards,madry2018towards,athalye2018obfuscated}.
Among the defense strategies, randomization has proven effective in some contexts.
It consists in injecting random noise (both during training and inference phases) inside the network architecture, \ie, at a given layer of the network.
Noise can be drawn either from Gaussian~\cite{xuanqing2018towards,lecuyer2018certified,rakin2018parametric}, Laplace~\cite{lecuyer2018certified}, Uniform~\cite{xie2017mitigating}, or Multinomial~\cite{dhillon2018stochastic} distributions.
Remarkably, most of the considered distributions belong to the Exponential family.
Albeit these significant efforts, several theoretical questions remain unanswered.
Among these, we tackle the following, for which we provide principled and theoretically-founded answers:
\begin{itemize}
    \item[\textbf{Q1:}] To what extent does a noise drawn from the Exponential family preserve robustness (in a sense to be defined) to adversarial attacks?
\end{itemize}

\paragraph{A1:}
We introduce a definition of robustness to adversarial attacks that is suitable to the randomization defense mechanism.
As this mechanism can be  described as a non-deterministic querying process, called probabilistic mapping in the sequel, we propose a formal definition of robustness relying on a metric/divergence between probability measures.
A key question arises then about the appropriate metric/divergence for our context.
This requires tools for comparing divergences \wrt the introduced robustness definition.
Renyi divergence turned out to be a measure of choice, since it satisfies most of the desired properties  (coherence, strength, and computational tractability).
Finally, thanks to the existing links between the Renyi divergence and the Exponential family, we were able to prove  that methods based on noise injection from the Exponential family  ensures robustness to adversarial examples (cf. \Cref{theorem:ap3-netrob}).
\begin{itemize}
    \item[\textbf{Q2:}] Can we guarantee a good accuracy under attack for classifiers defended with this kind of noise? 
\end{itemize}

\paragraph{A2:}
We present an upper bound on the drop of accuracy (under attack) of the methods defended with noise drawn from the Exponential family (\cf \Cref{theorem:ap3-bound}).
Then, we illustrate this result by training different randomized models with Laplace and Gaussian distributions on CIFAR-10 dataset.
These experiments highlight the trade-off between accuracy and robustness that depends on the amount of noise one injects in the network.
Our theoretical and experimental conclusion is that randomized defenses are competitive (with the current state-of-the-art~\cite{madry2018towards}) given the intensity of noise injected in the network.

\paragraph{Outline of the chapter:}
We present in \Cref{section:ap3-relatedwork} the related work on randomized defenses to adversarial examples.
\Cref{section:ap3-definition} introduces the definition of robustness relying on a metric/divergence between probability measures, and discusses the key role of the Renyi divergence.
We state in~\Cref{sec:main_result} our main results on the robustness and accuracy of Exponential family-based defenses.
\Cref{section:ap3-experiment} presents extensive experiments supporting our theoretical findings.
\Cref{section:ap3-conclusion_remarks} provides concluding remarks.

%%%%%%%%%%%%%%%%%%%%%%%%%%%%%%%%%%%%%%%%%%%%%%%%%%%%%%%%%%%%%%%%%%%%%%%%%%%%%%%
\section{Related works}
\label{section:ap3-relatedwork}
%%%%%%%%%%%%%%%%%%%%%%%%%%%%%%%%%%%%%%%%%%%%%%%%%%%%%%%%%%%%%%%%%%%%%%%%%%%%%%%

Injecting noise into algorithms to improve their robustness has been used for ages in detection and signal processing tasks~\cite{zozor1999stochastic,chapeau2004noise,mitaim1998adaptive}.
It has also been extensively studied in several machine learning and optimization fields, \eg robust optimization~\cite{ben2009robust} and data augmentation techniques~\cite{perez2017effectiveness}.
Recently, noise injection techniques have been adopted by the adversarial defense community, especially for neural networks, with very promising results.
Randomization techniques are generally oriented towards one of the following objectives: experimental robustness or provable robustness.

\paragraph{Experimental robustness:}
The first technique explicitly using randomization at inference time as a defense appeared during the 2017 NeurIPS defense challenge~\cite{xie2017mitigating}.
This method uniformly samples over geometric transformations of the image to select a substitute image to feed the network.
Then~\citet{dhillon2018stochastic} proposed to use stochastic activation pruning based on a multinomial distribution for adversarial defense.
Several works~\cite{xuanqing2018towards,rakin2018parametric} propose to inject Gaussian noise directly on the activation of selected layers both at training and inference time.
While these works hypothesize that noise injection makes the network robust to adversarial perturbations, they do not provide any formal justification on the nature of the noise they use or on the loss of accuracy/robustness of the  network.

\paragraph{Provable robustness:}
\citet{lecuyer2018certified} proposed a randomization method by exploiting the link between differential privacy~\cite{dwork2014algorithmic} and adversarial robustness.
Their framework, called ``randomized smoothing'' \footnote{Name introduced by~\citet{cohen2019certified} after the work of~\cite{lecuyer2018certified}.}, inherits some theoretical results from the differential privacy community allowing them to evaluate the level of accuracy under attack of their method.
Initial results by~\citet{lecuyer2018certified} have been refined by~\citet{li2018second}, and by~\citet{cohen2019certified}.
Our work belongs to this line of research.
However, our framework does not treat exactly the same class of defenses.
Notably, we provide theoretical arguments supporting the defense strategy based on randomization techniques relying on the exponential family, and derive a new bound on the adversarial generalization gap, which completes the results obtained so far on certified robustness.
Furthermore, our focus is on the network randomized by noise injection, ``randomized smoothing'' instead uses this network to create a \emph{new} classifier robust to attacks.

Since the initial discovery of adversarial examples, a wealth of non randomized defense approaches have also been proposed, inspired by various machine learning domains such as adversarial training~\cite{goodfellow2014explaining,madry2018towards}, image reconstruction~\cite{meng2017magnet,samangouei2018defense} or robust learning~\cite{goodfellow2014explaining,madry2018towards}.
Even if these methods have their own merits, a thorough evaluation made by~\citet{athalye2018obfuscated} shows that most defenses can be easily broken with known powerful attacks~\cite{madry2018towards,carlini2017towards,chen2018ead}.
Adversarial training, which consists in training a model directly on adversarial examples, came out as the best defense in average.
Defense based on randomization could be overcome by the Expectation Over Transformation technique proposed by~\citet{athalye2017synthesizing} which consists in taking the expectation over the network to craft the perturbation.
In this chapter, to ensure that our results are not biased by obfuscated gradients, we follow the principles provided by~\cite{athalye2018obfuscated,carlini2019evaluating} and evaluate our randomized networks with this technique.
We show that randomized defenses are still competitive given the intensity of noise injected in the network.

%%%%%%%%%%%%%%%%%%%%%%%%%%%%%%%%%%%%%%%%%%%%%%%%%%%%%%%%%%%%%%%%%%%%%%%%%%%%%%%
\section{General definitions of risk and robustness}
\label{section:ap3-definition}
%%%%%%%%%%%%%%%%%%%%%%%%%%%%%%%%%%%%%%%%%%%%%%%%%%%%%%%%%%%%%%%%%%%%%%%%%%%%%%%

%%%%%%%%%%%%%%%%%%%%%%%%%%%%%%%%%%%%%%%%%%%%%%%%%%%%%%%%%%%%%%%%%%%%%%%%%%%%%%%
\subsection{Risk, robustness and probabilistic mappings}
%%%%%%%%%%%%%%%%%%%%%%%%%%%%%%%%%%%%%%%%%%%%%%%%%%%%%%%%%%%%%%%%%%%%%%%%%%%%%%%

Let us consider two spaces $\mathcal{X}$ (with norm $\norm{\ \cdot\ }_{\mathcal{X}}$), and $\mathcal{Y}$.
We consider the classification task that seeks a hypothesis (classifier) $h: \mathcal{X} \rightarrow \mathcal{Y}$ minimizing the risk of $h$ \wrt some ground-truth distribution $\mathcal{D}$ over $\mathcal{X}\times\mathcal{Y}$.
The risk of $h$ \wrt $\mathcal{D}$ is defined as 
\begin{align*}
  \Risk(h) \triangleq \mathbb{E}_{(x,y)\sim \mathcal{D}}\left[ \mathds{1} \left( h(x) \neq y \right)\right].
\end{align*}
Given a classifier $h: \mathcal{X} \rightarrow \mathcal{Y}$, and some input $x \in \mathcal{X}$ with true label $y_{true} \in \mathcal{Y}$, to generate an adversarial example, the adversary seeks a $\tau$ such that $h(x+\tau) \neq y_{true}$, with some budget $\alpha$ over the perturbation (\ie, with $\norm{\tau}_{\mathcal{X}} \leq\alpha$).
$\alpha$ represents the maximum amount of perturbation one can add to $x$ without being spotted (the perturbation remains humanly imperceptible). 
The overall goal of the adversary is to find a perturbation crafting strategy that both maximizes the risk of $h$, and keeps the values of $\norm{\tau}_{\mathcal{X}}$ small.
To measure this risk "under attack" we define the notion of adversarial $\alpha$-radius risk of $h$ \wrt $\mathcal{D}$ as follows
\begin{align}
    \advRisk(h) \triangleq \mathbb{E}_{(x,y) \sim \mathcal{D}} \left[ \sup_{\norm{\tau}_{\mathcal{X}} \leq \alpha} \mathds{1}\left(h(x+\tau) \neq y\right) \right]\enspace.
\end{align}

In practice, the adversary does not have any access to the ground-truth distribution.
The literature proposed several surrogate versions of $\advRisk(h)$ (see~\citet{diochnos2018adversarial} for more details) to overcome this issue.
We focus our analysis on the one used by~\citet{szegedy2013intriguing,fawzi2018adversarial} denoted $\alpha$-radius prediction-change risk of $h$ \wrt $\mathcal{D}_{\mathcal{X}}$ (marginal of $\mathcal{D}$ for $\mathcal{X}$), and defined as   
\begin{align}
    \PCadvRisk(h) \triangleq \mathbb{P}_{x\sim \mathcal{D}_{\mathcal{X}}}\left[\exists \tau \in \B \text{ s.t. } h(x+\tau)\neq h(x) \right]
\end{align}
where for any $\alpha \geq 0$, \quad $\B \triangleq \{\tau \in \mathcal{X} \text{ s.t. } \norm{\tau}_{\mathcal{X}} \leq \alpha\}\enspace.$

As we will inject some noise in our classifier in order to defend against adversarial attacks, we need to introduce the notion of ``probabilistic mapping''. Let $\mathcal{Y}$ be the output space, and $\mathcal{F}_{\mathcal{Y}}$ a $\sigma$-$ algebra$ over $\mathcal{Y}$. Let us also denote $\mathcal{P}(\mathcal{Y})$ the set of probability measures over $(\mathcal{Y},\mathcal{F}_{\mathcal{Y}})$.

\begin{definition}[Probabilistic mapping] Let $\mathcal{X}$ be an arbitrary space, and $(\mathcal{Y},\mathcal{F}_{\mathcal{Y}})$ a measurable space. A \emph{probabilistic mapping} from $\mathcal{X}$ to $\mathcal{Y}$ is a mapping $\probmap: \mathcal{X} \to \mathcal{P}(\mathcal{Y})$.
To obtain a numerical output out of this \emph{probabilistic mapping}, one needs to sample $y$ according to $\probmap(x)$. %$y\sim \probmap(x)$.
\end{definition} 

This definition does not depend on the nature of $\mathcal{Y}$ as long as $(\mathcal{Y},\mathcal{F}_{\mathcal{Y}})$ is measurable.
In that sense, $\mathcal{Y}$ could be either the label space or any intermediate space corresponding to the output of an arbitrary hidden layer of a neural network.
Moreover, any mapping can be considered as a probabilistic mapping, whether it explicitly injects noise (see~\citet{lecuyer2018certified,rakin2018parametric,dhillon2018stochastic}) or not.
In fact, any deterministic mapping can be considered as a probabilistic mapping, since it can be characterized by a Dirac measure.
Accordingly, the definition of a probabilistic mapping is fully general and equally treats networks with or without noise injection.
There exists no definition of robustness against adversarial attacks that comply with the notion of probabilistic mappings.
We settle that by generalizing the notion of prediction-change risk initially introduced by~\citet{diochnos2018adversarial} for deterministic classifiers.
Let $\probmap$ be a probabilistic mapping from $\mathcal{X}$ to $\mathcal{Y}$, and $d_{\mathcal{P}(\mathcal{Y})}$ some metric/divergence on $\mathcal{P}(\mathcal{Y})$.
We define the $(\alpha,\epsilon)$-radius prediction-change risk of $\probmap$ \wrt $\mathcal{D}_{\mathcal{X}}$ and $d_{\mathcal{P}(\mathcal{Y})}$ as 
\begin{equation}
  \PCadvRisk(\probmap,\epsilon) \triangleq  \mathbb{P}_{x\sim \mathcal{D}_{\mathcal{X}}}\left[ \exists \tau \in B(\alpha) \text{ s.t. } d_{\mathcal{P}(\mathcal{Y})}(\probmap(x+\tau),\probmap(x)) > \epsilon \right] \enspace.
\end{equation}

\noindent
These three generalized notions allow us to analyze noise injection defense mechanisms (Theorems~\ref{theorem:ap3-netrob}, and~\ref{theorem:ap3-bound}).
We can also define adversarial robustness (and later adversarial gap) thanks to these notions. 
\begin{definition}[Adversarial robustness] \label{def::GeneralizedRobustness}
  Let $d_{\mathcal{P}(\mathcal{Y})}$ be a metric/divergence on $\mathcal{P}(\mathcal{Y})$.
  The probabilistic mapping $\probmap$ is said to be $d_{\mathcal{P}(\mathcal{Y})}$-$(\alpha, \epsilon, \gamma)$ robust if 
  \begin{equation}
    \PCadvRisk(\probmap,\epsilon) \leq \gamma \enspace.
  \end{equation}
  \removespace
\end{definition}

\noindent
It is difficult in general to show that a classifier is $d_{\mathcal{P}(\mathcal{Y})}$-$(\alpha, \epsilon, \gamma)$ robust.
However, we can  derive some bounds for particular divergences that will ensure robustness up to a certain level (Theorem~\ref{theorem:ap3-netrob}).
It is worth noting that our definition of robustness depends on the considered metric/divergence between probability measures.
Lemma~\ref{th::PropimpliesRobustness} gives some insights on the monotony of the robustness according to the parameters, and the probability metric/divergence at hand.

\begin{lemma} \label{th::PropimpliesRobustness}
  Let $\probmap$ be a probabilistic mapping, and let  $d_{1}$ and $d_{2}$ be two metrics on $\mathcal{P}(\mathcal{Y})$.
  If there exists a non decreasing function $ \phi: \mathbb{R} \to \mathbb{R}$ such that $\forall \mu_1,\mu_2 \in \mathcal{P}(\mathcal{Y})$, $d_{1}(\mu_1,\mu_2) \leq \phi(d_{2}(\mu_1,\mu_2)) $, then the following assertion holds: 
  \begin{equation}
    \probmap \text{ is } d_{2}\text{-}(\alpha, \epsilon, \gamma)\text{-robust} \implies \probmap \text{ is }d_{1}\text{-}(\alpha, \phi(\epsilon), \gamma)\text{-robust}
  \end{equation}
  \removespace
\end{lemma}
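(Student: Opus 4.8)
\textbf{Proof plan for Lemma~\ref{th::PropimpliesRobustness}.}

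The plan is to unfold the definitions of the two robustness notions and exploit the monotonicity of $\phi$ directly. Assume $\probmap$ is $d_2$-$(\alpha,\epsilon,\gamma)$-robust; by Definition~\ref{def::GeneralizedRobustness} this means $\PCadvRisk(\probmap,\epsilon)\leq\gamma$ when the underlying divergence used in $\PCadvRisk$ is $d_2$. Spelling this out,
\begin{equation}
  \mathbb{P}_{x\sim\mathcal{D}_{\mathcal{X}}}\left[\exists \tau\in B(\alpha) \text{ s.t. } d_2(\probmap(x+\tau),\probmap(x)) > \epsilon\right] \leq \gamma.
\end{equation}
Our goal is to establish the analogous inequality with $d_2$ replaced by $d_1$ and $\epsilon$ replaced by $\phi(\epsilon)$, namely $\PCadvRisk(\probmap,\phi(\epsilon))\leq\gamma$ for the $d_1$-based risk.

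The key step is a pointwise implication between the two "bad events''. Fix $x$ and suppose there exists $\tau\in B(\alpha)$ with $d_1(\probmap(x+\tau),\probmap(x)) > \phi(\epsilon)$. By hypothesis $d_1(\mu_1,\mu_2)\leq\phi(d_2(\mu_1,\mu_2))$ for all $\mu_1,\mu_2$, so applying this with $\mu_1=\probmap(x+\tau)$ and $\mu_2=\probmap(x)$ gives $\phi(d_2(\probmap(x+\tau),\probmap(x)))\geq d_1(\probmap(x+\tau),\probmap(x)) > \phi(\epsilon)$. Since $\phi$ is non-decreasing, $\phi(a)>\phi(\epsilon)$ forces $a>\epsilon$ (if we had $a\leq\epsilon$, monotonicity would give $\phi(a)\leq\phi(\epsilon)$, a contradiction). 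Hence $d_2(\probmap(x+\tau),\probmap(x)) > \epsilon$. This shows the event $\{\exists\tau\in B(\alpha): d_1(\probmap(x+\tau),\probmap(x))>\phi(\epsilon)\}$ is contained in the event $\{\exists\tau\in B(\alpha): d_2(\probmap(x+\tau),\probmap(x))>\epsilon\}$.

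Finally, by monotonicity of probability measures, the inclusion of events yields
\begin{equation}
  \PCadvRisk(\probmap,\phi(\epsilon)) \leq \mathbb{P}_{x\sim\mathcal{D}_{\mathcal{X}}}\left[\exists\tau\in B(\alpha) \text{ s.t. } d_2(\probmap(x+\tau),\probmap(x))>\epsilon\right] \leq \gamma,
\end{equation}
which is exactly the statement that $\probmap$ is $d_1$-$(\alpha,\phi(\epsilon),\gamma)$-robust. There is no serious obstacle here; the only subtlety worth being careful about is the direction of the implication $\phi(a)>\phi(\epsilon)\Rightarrow a>\epsilon$, which requires only that $\phi$ be non-decreasing (not strictly increasing) and is handled by the contrapositive argument above. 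One should also note the mild abuse of notation whereby $\PCadvRisk(\probmap,\cdot)$ is implicitly parameterized by the choice of divergence $d_{\mathcal{P}(\mathcal{Y})}$; making this dependence explicit in the write-up avoids any ambiguity.
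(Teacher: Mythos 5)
Your proof is correct, and the argument is the canonical one for such a comparison lemma: unfold the two definitions of prediction-change risk, establish the pointwise event inclusion from $d_1 \le \phi\circ d_2$ together with the contrapositive of monotonicity of $\phi$, and conclude by monotonicity of the probability measure. The paper defers this proof to its extended version, but there is essentially only one route here and you have taken it, with the one genuinely delicate point (that $\phi(a) > \phi(\epsilon) \Rightarrow a > \epsilon$ only needs $\phi$ non-decreasing, not strictly increasing) correctly identified and handled.
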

\noindent
As suggested in Definition~\ref{def::GeneralizedRobustness} and Lemma~\ref{th::PropimpliesRobustness}, any given choice of metric/divergence will instantiate a particular notion of adversarial robustness and it should be carefully selected. 

%%%%%%%%%%%%%%%%%%%%%%%%%%%%%%%%%%%%%%%%%%%%%%%%%%%%%%%%%%%%%%%%%%%%%%%%%%%%%%%%
\subsection{On the choice of the metric/divergence for robustness}
\label{subsec:div}
%%%%%%%%%%%%%%%%%%%%%%%%%%%%%%%%%%%%%%%%%%%%%%%%%%%%%%%%%%%%%%%%%%%%%%%%%%%%%%%%

The aforementioned formulation naturally raises the question of the choice of the metric used to defend against adversarial attacks. 
The main notions that govern the selection of an appropriate metric/divergence are  \emph{coherence}, \emph{strength}, and \emph{computational tractability}.
A metric/divergence is said to be coherent if it naturally fits the task at hand (\eg classification tasks are intrinsically linked to discrete/trivial metrics, conversely to regression tasks).
The strength of a metric/divergence refers to its ability to cover (dominate) a wide class of others in the sense of Lemma~\ref{th::PropimpliesRobustness}. 
In the following, we will focus on both the total variation metric and the Renyi divergence, that we consider as respectively the most coherent with the classification task using probabilistic mappings, and the strongest divergence.
We first discuss how total variation metric is \emph{coherent} with randomized classifiers but suffers from computational issues.
Hopefully, the Renyi divergence provides good guarantees about adversarial robustness, enjoys nice \emph{computational properties}, in particular when considering  Exponential family distributions, and is \emph{strong} enough to dominate a wide range of metrics/divergences including total variation.

Let  $\mu_1$ and $\mu_2$ be two measures in $\mathcal{P}(\mathcal{Y})$, both dominated by a third measure $\nu$.
The trivial distance $ d_{T}(\mu_1,\mu_ \triangleq \mathds{1}\left(\mu_1 \neq \mu_2\right)$ is the simplest distance one can define between $\mu_1$ and $\mu_2$.
In the deterministic case, it is straightforward to compute (since the numerical output of the algorithm characterizes its associated measure), but this is not the case in general.
In fact one might not have access to the true distribution of the mapping, but just to the numerical outputs.
Therefore, one needs to consider more sophisticated metrics/divergences, such as the total variation distance $d_{TV}(\mu_1,\mu_2) \triangleq \sup_{Y \in \mathcal{F}_{\mathcal{Y}}} |\mu_1 (Y) - \mu_2(Y)|$.
The total variation distance is one of the most broadly used probability metrics.
It admits several very simple interpretations, and is a very useful tool in many mathematical fields such as probability theory, Bayesian statistics, coupling or transportation theory.
In transportation theory, it can be rewritten as the solution of the Monge-Kantorovich problem with the cost function $c(y_1,y_2) =\mathds{1}\left(y_1 \neq y_2\right)$: $ \inf\int_{\mathcal{Y}^{2}}\mathds{1}\left(y_1 \neq y_2\right) \,\diff \pi(y_1,y_2)\, ,$ where the infimum is taken over all joint probability measures $\pi$ on $(\mathcal{Y}\times \mathcal{Y}, \mathcal{F}_{\mathcal{Y} } \otimes \mathcal{F}_{\mathcal{Y}})$ with marginals $\mu_1$ and $\mu_2$.
According to this interpretation, it seems quite natural to consider the total variation distance as a relaxation of the trivial distance on $[0,1]$ (see the book of~\citet{villani2008optimal} for details).
In the deterministic case, the total variation and the trivial distance coincides.
In general, the total variation allows a finer analysis of the probabilistic mappings than the trivial distance.
But it suffers from a high computational complexity.
In the following of the chapter we will show how to ensure robustness regarding TV distance.

Finally, denoting by $g_1$ and $g_2$ the respective probability distributions \wrt $\nu$, the Renyi divergence of order $\lambda$~\cite{renyi1961} writes as  
\begin{equation}
  d_{R,\lambda}(\mu_1,\mu_2) \triangleq \frac{1}{\lambda -1}\log \int_{\mathcal{Y}} g_2(y)  \left(\frac{g_1(y)}{g_2(y)}\right)^{\lambda} \,\diff \nu(y).
\end{equation}
The Renyi divergence is a generalized measure defined on the interval $(1,\infty)$, where it equals the Kullback-Leibler divergence when $\lambda \rightarrow 1$ (that will be denoted $d_{KL}$), and the maximum divergence when $\lambda \rightarrow \infty$.
It also has the very special property of being non decreasing \wrt $\lambda$.
This divergence is very common in machine learning, especially in its Kullback-Leibler form as it is widely used as the loss function (cross entropy) of classification algorithms.
It enjoys the desired properties  since it bounds the TV distance, and is tractable.
Furthermore, Proposition~\ref{proposition:ap3-RobustTV} proves that Renyi-robustness implies TV-robustness, making it a suitable surrogate for the trivial distance.

\begin{proposition}[Renyi-robustness implies TV-robustness] \label{proposition:ap3-RobustTV}
  Let $\probmap$ be a probabilistic mapping, then $\forall\lambda\geq1$:
  \begin{equation}
    \probmap \text{ is }  d_{R,\lambda}\text{-}(\alpha, \epsilon, \gamma)\text{-robust} \implies \probmap \text{ is } d_{TV}\text{-}(\alpha, \epsilon', \gamma)\text{-robust}
  \end{equation}
  \begin{equation}
    \textnormal{ with } \epsilon' = \min \left(\frac{3}{2}\left(\sqrt{1 + \frac{4\epsilon}{9}} - 1\right)^{1/2}, \frac{\exp(\epsilon +1) -1}{\exp(\epsilon +1) +1}\right) \enspace.
  \end{equation}
  \removespace
\end{proposition}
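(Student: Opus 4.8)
The plan is to reduce the statement to a pointwise inequality between divergences on $\mathcal{P}(\mathcal{Y})$ and then invoke Lemma~\ref{th::PropimpliesRobustness}. Concretely, what we must show is the following: if $\mu_1, \mu_2 \in \mathcal{P}(\mathcal{Y})$ satisfy $d_{R,\lambda}(\mu_1,\mu_2) \leq \epsilon$ for some $\lambda \geq 1$, then $d_{TV}(\mu_1,\mu_2) \leq \epsilon'$ with $\epsilon'$ as in the statement. Since the Renyi divergence is non-decreasing in $\lambda$, it suffices to treat the smallest admissible order, $\lambda \to 1$, i.e. it is enough to bound $d_{TV}$ in terms of the Kullback--Leibler divergence $d_{KL} = d_{R,1}$, because $d_{R,\lambda} \geq d_{KL}$ for all $\lambda \geq 1$ implies that $d_{R,\lambda}(\mu_1,\mu_2)\le\epsilon$ gives us nothing weaker than $d_{KL}(\mu_1,\mu_2)\le\epsilon$... wait, that is the wrong direction; I should be careful here. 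The correct observation is that a bound of the form $d_{TV} \le \phi(d_{KL})$ with $\phi$ non-decreasing, combined with $d_{KL} \le d_{R,\lambda}$, yields $d_{TV} \le \phi(d_{R,\lambda})$. So first I would establish $d_{KL}(\mu_1,\mu_2) \le d_{R,\lambda}(\mu_1,\mu_2)$ (monotonicity of Renyi in $\lambda$), and then prove the two separate bounds $d_{TV} \le \phi_1(d_{KL})$ and $d_{TV} \le \phi_2(d_{KL})$, and finally take the minimum.

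The two functions $\phi_1,\phi_2$ come from two classical refinements of Pinsker's inequality. The term $\frac{3}{2}\big(\sqrt{1+\tfrac{4\epsilon}{9}}-1\big)^{1/2}$ is recognizable as the bound obtained by inverting the \emph{Bretagnolle--Huber}-type / higher-order Pinsker inequality $d_{KL} \ge \tfrac{9}{4}\big(\tfrac{4}{9}d_{TV}^2 + d_{TV}^4\big)$ — actually more precisely one uses the inequality $d_{KL}(\mu_1,\mu_2) \ge 2 d_{TV}^2 + \tfrac{4}{9} d_{TV}^4$ (a sharpening due to Gilardoni / Topsøe), and then solves the resulting biquadratic in $d_{TV}$: writing $t = d_{TV}^2$ one gets $\tfrac{4}{9}t^2 + 2t - \epsilon \le 0$, hence $t \le \tfrac{9}{4}\big(\sqrt{1+\tfrac{4\epsilon}{9}}-1\big)$, which gives exactly $d_{TV} \le \tfrac{3}{2}\big(\sqrt{1+\tfrac{4\epsilon}{9}}-1\big)^{1/2}$. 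The second term, $\frac{\exp(\epsilon+1)-1}{\exp(\epsilon+1)+1}$, is the inversion of the bound relating total variation to KL through the map $d_{TV} \le \tanh\!\big(\tfrac{1}{2}(d_{KL}+1)\big)$ or an equivalent logistic-type estimate; solving $\log\tfrac{1+\delta}{1-\delta} - 1 \le \epsilon$ for $\delta = d_{TV}$ yields $\delta \le \tfrac{\exp(\epsilon+1)-1}{\exp(\epsilon+1)+1}$. So the plan is: (i) recall monotonicity of Renyi order; (ii) cite/prove the two lower bounds on $d_{KL}$ in terms of $d_{TV}$; (iii) invert each to get $\phi_1,\phi_2$, check each is non-decreasing; (iv) apply Lemma~\ref{th::PropimpliesRobustness} twice and combine, noting that being $d_{TV}$-$(\alpha,\epsilon',\gamma)$-robust for the smaller of two thresholds is the stronger statement, so we may take $\epsilon' = \min(\phi_1(\epsilon),\phi_2(\epsilon))$.

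I would then assemble the argument: fix $\lambda \ge 1$ and assume $\probmap$ is $d_{R,\lambda}$-$(\alpha,\epsilon,\gamma)$-robust. For $\mathcal{D}_{\mathcal{X}}$-almost every $x$ and every $\tau \in B(\alpha)$ for which the bad event does not occur we have $d_{R,\lambda}(\probmap(x+\tau),\probmap(x)) \le \epsilon$, hence by monotonicity $d_{KL}(\probmap(x+\tau),\probmap(x)) \le \epsilon$, hence by the inverted Pinsker-type inequalities $d_{TV}(\probmap(x+\tau),\probmap(x)) \le \min(\phi_1(\epsilon),\phi_2(\epsilon)) = \epsilon'$. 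Taking the contrapositive over the supremum defining $\PCadvRisk$, the set of $x$ witnessing a $d_{TV}$-violation at level $\epsilon'$ is contained in the set witnessing a $d_{R,\lambda}$-violation at level $\epsilon$, whose probability is at most $\gamma$; therefore $\PCadvRisk(\probmap,\epsilon') \le \gamma$, i.e. $\probmap$ is $d_{TV}$-$(\alpha,\epsilon',\gamma)$-robust, which is the claim. The monotonicity of $\phi_1$ and $\phi_2$ is what makes the measure-comparison step in Lemma~\ref{th::PropimpliesRobustness} legitimate, so I would verify it explicitly (both are compositions of increasing functions of $\epsilon \ge 0$).

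The main obstacle is not the robustness bookkeeping — that is a routine contrapositive argument — but pinning down and correctly inverting the \emph{right} sharpened Pinsker inequalities so that the algebra produces exactly the two stated closed forms $\tfrac{3}{2}\big(\sqrt{1+\tfrac{4\epsilon}{9}}-1\big)^{1/2}$ and $\tfrac{\exp(\epsilon+1)-1}{\exp(\epsilon+1)+1}$. In particular one must be careful that these inequalities are stated for the total variation normalized as $\sup_{Y}|\mu_1(Y)-\mu_2(Y)| \in [0,1]$ (not the $[0,2]$ convention), that the biquadratic is solved on the correct branch, and that the logistic bound is applied with the $+1$ shift that accounts for passing through $d_{KL}$ rather than a tighter quantity. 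Once the correct source inequalities are fixed, everything else is elementary; I would present the two inversions as short displayed computations and keep the robustness-transfer step to a couple of lines citing Lemma~\ref{th::PropimpliesRobustness}.
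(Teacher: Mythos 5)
Your approach is correct and, as far as I can tell, is the one the paper itself takes (the proof is deferred to the arXiv version of that work, but the two closed forms of $\epsilon'$ essentially force the same two source inequalities): reduce from $d_{R,\lambda}$ to a KL bound via monotonicity of the R\'enyi divergence in $\lambda$, invert the Tops{\o}e/Gilardoni sharpening $d_{KL} \ge 2\,d_{TV}^2 + \tfrac{4}{9}\,d_{TV}^4$ to get the first branch, invert a logarithmic lower bound to get the second, and finish with Lemma~\ref{th::PropimpliesRobustness} applied to the non-decreasing function $\min(\phi_1,\phi_2)$. One correction on sourcing, since you flag it as your main obstacle: the inequality you invert for the second term, $d_{KL} \ge \log\tfrac{1+\delta}{1-\delta} - 1$ with $\delta = d_{TV}$, does not get its $+1$ shift from ``passing through $d_{KL}$ rather than a tighter quantity''; it is Vajda's lower bound $d_{KL} \ge \log\tfrac{1+\delta}{1-\delta} - \tfrac{2\delta}{1+\delta}$, weakened by the elementary estimate $\tfrac{2\delta}{1+\delta} \le 1$ (valid since $\delta \le 1$), which is exactly where the $+1$ comes from, and your subsequent inversion to $\delta \le \tfrac{e^{\epsilon+1}-1}{e^{\epsilon+1}+1}$ is then exact.
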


\noindent
A crucial property of Renyi-robustness is the \textit{Data processing inequality}.
It is a well-known inequality from information theory which states that \textit{``post-processing cannot increase information''}~\cite{cover2012elements,beaudry2011intuitive}.
In our case, if we consider a Renyi-robust probabilistic mapping, composing it with a deterministic mapping maintains Renyi-robustness with the same level.

\begin{proposition}[Data processing inequality] \label{proposition:ap3-postprocessing}
  Let us consider a probabilistic mapping $\probmap:\mathcal{X}\rightarrow\mathcal{P}(\mathcal{Y})$. Let us also denote $\rho:\mathcal{Y}\rightarrow\mathcal{Y}'$ a deterministic function.
  If $U \sim \probmap(x)$ then the probability measure $M'(x)$ s.t $\rho(U) \sim M'(x)$ defines a probabilistic mapping $M':\mathcal{X}\rightarrow\mathcal{P}(\mathcal{Y}')$.
  For any $\lambda>1$ if $\probmap$ is $d_{R,\lambda}$-$(\alpha,\epsilon,\gamma)$ robust then $M'$ is also is $d_{R,\lambda}$-$(\alpha,\epsilon,\gamma)$ robust.
\end{proposition}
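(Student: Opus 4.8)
The plan is to prove the data processing inequality for Renyi-robustness (Proposition~\ref{proposition:ap3-postprocessing}) by reducing it to the classical data processing inequality for the Renyi divergence itself. The key observation is that our notion of $d_{R,\lambda}$-$(\alpha,\epsilon,\gamma)$ robustness is phrased entirely in terms of the quantity $d_{R,\lambda}(\probmap(x+\tau),\probmap(x))$, so it suffices to control how this divergence behaves when both arguments are pushed forward through the same deterministic map $\rho$.

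First I would set up the pushforward carefully: given $U\sim\probmap(x)$, the law of $\rho(U)$ is the pushforward measure $\rho_{\#}\probmap(x)$, which I denote $M'(x)$; since $\rho$ is measurable this indeed defines a probabilistic mapping $M':\mathcal{X}\rightarrow\mathcal{P}(\mathcal{Y}')$. Next, the core step is to invoke the data processing inequality for the Renyi divergence, namely that for any two measures $\mu_1,\mu_2\in\mathcal{P}(\mathcal{Y})$ and any $\lambda>1$,
\begin{equation}
  d_{R,\lambda}\bigl(\rho_{\#}\mu_1,\rho_{\#}\mu_2\bigr) \leq d_{R,\lambda}(\mu_1,\mu_2).
\end{equation}
This is a standard fact (it follows, for instance, from the joint convexity / monotonicity properties of the Hellinger-type integral defining the Renyi divergence, or from Jensen's inequality applied to the conditional expectation induced by $\rho$); I would cite it rather than reprove it, since it is established in the information-theory literature referenced in the excerpt.

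Then the argument is immediate: fix $x\in\mathcal{X}$ and $\tau\in B(\alpha)$. Applying the inequality above with $\mu_1=\probmap(x+\tau)$ and $\mu_2=\probmap(x)$ gives
\begin{equation}
  d_{R,\lambda}\bigl(M'(x+\tau),M'(x)\bigr) \leq d_{R,\lambda}\bigl(\probmap(x+\tau),\probmap(x)\bigr).
\end{equation}
Hence, for every $x$, the event $\{\exists\tau\in B(\alpha)\text{ s.t. }d_{R,\lambda}(M'(x+\tau),M'(x))>\epsilon\}$ is contained in the event $\{\exists\tau\in B(\alpha)\text{ s.t. }d_{R,\lambda}(\probmap(x+\tau),\probmap(x))>\epsilon\}$. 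Taking probabilities over $x\sim\mathcal{D}_{\mathcal{X}}$ and using monotonicity of $\mathbb{P}$ yields $\PCadvRisk(M',\epsilon)\leq\PCadvRisk(\probmap,\epsilon)\leq\gamma$, where the last inequality is the assumed $d_{R,\lambda}$-$(\alpha,\epsilon,\gamma)$ robustness of $\probmap$. By Definition~\ref{def::GeneralizedRobustness}, $M'$ is $d_{R,\lambda}$-$(\alpha,\epsilon,\gamma)$ robust, which concludes the proof.

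The only genuine obstacle is justifying the data processing inequality for the Renyi divergence at the level of generality needed here (arbitrary measurable spaces, general $\nu$-dominated measures); I expect this to be routine and citable, so the bulk of the proof is really just the two-line set-theoretic containment argument for events. One subtlety worth a sentence in the final write-up is that the pushforward measures $\rho_{\#}\probmap(x)$ and $\rho_{\#}\probmap(x+\tau)$ need a common dominating measure for the Renyi divergence to be defined as written; this is harmless since $\rho_{\#}\nu$ dominates both, so the divergence expression is well posed.
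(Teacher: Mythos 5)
Your proof is correct and follows the approach that the proposition's name and the paper's citations (\citet{cover2012elements,beaudry2011intuitive}) clearly indicate is intended: reduce the pushforward statement to the classical data processing inequality for the R\'enyi divergence, then use the resulting pointwise inequality $d_{R,\lambda}(M'(x+\tau),M'(x)) \leq d_{R,\lambda}(\probmap(x+\tau),\probmap(x))$ to obtain the event containment and conclude $\PCadvRisk(M',\epsilon)\leq\PCadvRisk(\probmap,\epsilon)\leq\gamma$. Your remark about a common dominating measure for the pushforward laws is a reasonable bookkeeping note and does not change the argument.
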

\noindent
Data processing inequality will allow us later to inject some additive noise in any layer of a neural network and to ensure Renyi-robustness.

%%%%%%%%%%%%%%%%%%%%%%%%%%%%%%%%%%%%%%%%%%%%%%%%%%%%%%%%%%%%%%%%%%%%%%%%%%%%%%%
\section{Defense mechanisms based on  Exponential family noise injection}
\label{sec:main_result}
%%%%%%%%%%%%%%%%%%%%%%%%%%%%%%%%%%%%%%%%%%%%%%%%%%%%%%%%%%%%%%%%%%%%%%%%%%%%%%%

%%%%%%%%%%%%%%%%%%%%%%%%%%%%%%%%%%%%%%%%%%%%%%%%%%%%%%%%%%%%%%%%%%%%%%%%%%%%%%%
\subsection{Robustness through Exponential family noise injection}
%%%%%%%%%%%%%%%%%%%%%%%%%%%%%%%%%%%%%%%%%%%%%%%%%%%%%%%%%%%%%%%%%%%%%%%%%%%%%%%

For now, the question of which class of noise to add is treated \textit{ad hoc}.
We choose here to investigate one particular class of noise closely linked to the Renyi divergence, namely Exponential family distributions, and demonstrate their interest.
Let us first recall what the Exponential family is.

\begin{definition}[Exponential family]
  Let $\Theta$ be an open convex set of $\mathbb{R}^{n}$, and $\theta \in \Theta$.
  Let $\nu$ be a measure dominated by $\mu$ (either by the Lebesgue or counting measure), it is said to be part of the \emph{Exponential family} of parameter $\theta$ (denoted $E_{F}(\theta,t,k)$) if it has the following probability density function 
  \begin{equation}
    p_{F}(z,\theta)=\exp\left\{ \langle t(z),\theta \rangle -u(\theta) +k(z) \right\}
  \end{equation}
  where $t(z)$ is a sufficient statistic, $k$ a carrier measure (either for a Lebesgue or a counting measure) and $u(\theta) = \log \int_{z} \exp\left\{ <t(z),\theta> +k(z) \right\} \,\diff z $.
\end{definition}

\noindent
To show the robustness of randomized networks with noise injected from the Exponential family, one needs to define the notion of sensitivity for a given deterministic function:
\begin{definition}[Sensitivity of a function]
  For any $\alpha\geq0$ and for any $\norm{\ \cdot\ }_A$ and $\norm{\ \cdot\ }_B$ two norms, the $\alpha$-sensitivity of $f$ \wrt $\norm{\ \cdot\ }_A$ and $\norm{\ \cdot\ }_B$ is defined as
  \begin{equation}
    \Delta^{A,B}_\alpha(f) \triangleq \sup\limits_{ x,y \in \mathcal{X}, \norm{x-y}_{A} \leq \alpha} \norm{f(x) - f(y)}_B \enspace.
  \end{equation}
  \removespace
\end{definition}

\noindent
Let us consider an  $n$-layer feedforward neural network  $\mathcal{N}(\ \cdot\ ) = \phi^n \circ \cdots \circ \phi^1(\ \cdot\ )$.
For any $i\in\left[n\right]$, we define $\mathcal{N}_{|i}(\ \cdot\ ) = \phi^i\circ \cdots \circ \phi^1(\ \cdot\ )$ the neural network truncated at layer $i$.
Theorem~\ref{theorem:ap3-netrob} shows that, injecting noise drawn from an Exponential family distribution ensures robustness to adversarial example attacks in the sense of Definition~\ref{def::GeneralizedRobustness}.

\begin{theorem}[Exponential family ensures robustness] \label{theorem:ap3-netrob}
  Let us denote $\mathcal{N}_{X}^i(\ \cdot\ ) = \phi^n\circ \cdots \circ\phi^{i+1}(\mathcal{N}_{|i}(\ \cdot\ )+X)$ with $X$ a random variable.
  Let us also consider two arbitrary norms $\norm{\ \cdot\ }_{A}$ and $\norm{\ \cdot\ }_{B}$  respectively on $\mathcal{X}$ and on the output space of $\mathcal{N}_{X}^i$.
  \begin{itemize}
    \item If $X\sim E_{F}(\theta,t,k)$ where $t$ and $k$ have non-decreasing modulus of continuity $\omega_t$ and $\omega_k$.
    Then for any $\alpha \geq 0$, $\mathcal{N}_{X}^i(\ \cdot\ )$ defines a probabilistic mapping that is $d_{R,\lambda}$-$(\alpha,\epsilon)$ robust with $\epsilon = \norm{\theta}_2 \omega^{B,2}_t(\Delta^{A,B}_{\alpha}(\mathcal{N}_{|i})) +\omega_k^{B,1}(\Delta^{A,B}_{\alpha}(\mathcal{N}_{|i})) $ where $\norm{\ \cdot\ }_2$ is the norm corresponding to the scalar product in the definition of the exponential family density function and $\norm{\ \cdot\ }_1$ is the absolute value on $\mathbb{R}$.
    \footnote{The notion of continuity modulus is defined in the arxiv version of this chapter: https://arxiv.org/abs/1902.01148.}
    \item If $X$ is a centered Gaussian random variable with a non degenerated matrix parameter $\Sigma$.
      Then for any $\alpha \geq 0$, $\mathcal{N}_{X}^i(\ \cdot\ )$ defines a probabilistic mapping that is $d_{R,\lambda}$-$(\alpha,\epsilon)$ robust with $ \epsilon = \frac{\lambda \Delta^{A,2}_{\alpha}(\phi)^2 }{2 \sigma_{min}(\Sigma) } $ where $\norm{\ \cdot\ }_2$ is the canonical Euclidean norm on $\mathbb{R}^n$.
  \end{itemize}
  \removespace
\end{theorem}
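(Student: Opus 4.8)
\textbf{Proof plan for Theorem~\ref{theorem:ap3-netrob}.}
The plan is to reduce the robustness statement about the randomized network $\mathcal{N}_{X}^i$ to a pointwise bound on the Renyi divergence between the two noise distributions obtained by shifting the mean of $X$ by the difference of two truncated-network outputs, and then to combine this with the data processing inequality (Proposition~\ref{proposition:ap3-postprocessing}). First I would fix an input $x$ and an adversarial perturbation $\tau$ with $\norm{\tau}_{A} \leq \alpha$, and write $u = \mathcal{N}_{|i}(x)$ and $v = \mathcal{N}_{|i}(x+\tau)$. The layer-$i$ activations of the randomized network are $u+X$ and $v+X$ respectively; since $X \sim E_F(\theta, t, k)$, the law of $u+X$ is $E_F$ with density $p_F(z-u,\theta)$ and similarly for $v$. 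The key quantity is $d_{R,\lambda}(p_F(\cdot - u,\theta), p_F(\cdot - v,\theta))$, which I would bound by $\norm{\theta}_2\, \omega_t^{B,2}(\norm{u-v}_B) + \omega_k^{B,1}(\norm{u-v}_B)$ using the explicit exponential-family form of the densities — this is where the moduli of continuity of $t$ and $k$ enter, controlling how much $t(z-u)-t(z-v)$ and $k(z-u)-k(z-v)$ can differ. Then $\norm{u-v}_B = \norm{\mathcal{N}_{|i}(x) - \mathcal{N}_{|i}(x+\tau)}_B \leq \Delta^{A,B}_\alpha(\mathcal{N}_{|i})$ by definition of sensitivity, and monotonicity of the moduli of continuity finishes the bound at the level of layer $i$.

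Next I would propagate this through the remaining layers $\phi^{i+1},\dots,\phi^n$. Since these are deterministic maps, applying them to the random activations $u+X$ and $v+X$ is exactly the post-processing situation of Proposition~\ref{proposition:ap3-postprocessing}: the Renyi divergence cannot increase, so $d_{R,\lambda}(\mathcal{N}_{X}^i(x+\tau), \mathcal{N}_{X}^i(x)) \leq \norm{\theta}_2 \omega_t^{B,2}(\Delta^{A,B}_\alpha(\mathcal{N}_{|i})) + \omega_k^{B,1}(\Delta^{A,B}_\alpha(\mathcal{N}_{|i})) = \epsilon$. This holds for every $x$ and every admissible $\tau$, which is precisely the $d_{R,\lambda}$-$(\alpha,\epsilon)$ robustness claimed (with $\gamma$ unconstrained / the statement being the uniform-in-$x$ version). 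For the Gaussian case I would proceed analogously but use the closed-form expression for the Renyi divergence between two Gaussians with the same covariance $\Sigma$ and means differing by $\delta = \phi(x+\tau) - \phi(x)$, namely $d_{R,\lambda} = \tfrac{\lambda}{2}\, \delta^\top \Sigma^{-1} \delta$, which is bounded by $\tfrac{\lambda}{2}\, \norm{\delta}_2^2 / \sigma_{\min}(\Sigma) \leq \tfrac{\lambda \Delta^{A,2}_\alpha(\phi)^2}{2\sigma_{\min}(\Sigma)}$; again, post-processing through the later deterministic layers preserves the bound.

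The main obstacle I expect is the first step: getting a clean bound on the Renyi divergence between the shifted exponential-family densities in terms of the moduli of continuity of $t$ and $k$. One has to expand $\log \int g_2 (g_1/g_2)^\lambda\, d\nu$ with $g_1(z) = p_F(z-u,\theta)$, $g_2(z) = p_F(z-v,\theta)$, and the log-normalizers $u(\theta)$ cancel, leaving an expression controlled by $\langle t(z-u) - t(z-v), \theta\rangle$ and $k(z-u) - k(z-v)$; bounding these uniformly (or in expectation under the tilted measure) by $\omega_t$ and $\omega_k$ evaluated at $\norm{u-v}_B$ requires care with the norms involved (the scalar-product norm $\norm{\cdot}_2$ versus $\norm{\cdot}_B$ on the output space) and with showing the resulting integral bound does not depend on $\lambda$ in the non-Gaussian case. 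The Gaussian sub-case is comparatively routine since everything is explicit, so I would treat the general exponential-family bound as the technical heart of the argument.
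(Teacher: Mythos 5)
The paper defers the proof of this theorem to the arXiv version, so there is no in-text proof to compare against; judging the plan on its merits and against the machinery the paper does supply (the Renyi divergence definition, the sensitivity definition, and Proposition~\ref{proposition:ap3-postprocessing}), your approach is the intended one and is essentially correct. Fixing $x,\tau$, reducing to a Renyi bound between the shifted noise laws, bounding that via the moduli of continuity and the sensitivity, and then pushing the bound through $\phi^{i+1},\dots,\phi^n$ by the data processing inequality is exactly the right chain. The uniformity over $x$ also correctly yields the claimed $(\alpha,\epsilon)$-robustness (with $\gamma=0$).

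The one place your plan is still soft is exactly the point you flagged: showing the exponential-family bound does not pick up a $\lambda$-factor. The clean way through it is to avoid integrating $\bigl(p_1/p_2\bigr)^\lambda$ against $p_2$; instead write
\begin{equation}
  \int p_2(z)\left(\frac{p_1(z)}{p_2(z)}\right)^\lambda \diff\nu(z)
  \;=\; \int p_1(z)\left(\frac{p_1(z)}{p_2(z)}\right)^{\lambda-1} \diff\nu(z),
\end{equation}
where, by the exponential-family form, $\log\!\bigl(p_1(z)/p_2(z)\bigr) = \langle t(z-u)-t(z-v),\theta\rangle + k(z-u)-k(z-v)$ is bounded \emph{uniformly in $z$} by $c := \norm{\theta}_2\,\omega_t^{B,2}(\norm{u-v}_B) + \omega_k^{B,1}(\norm{u-v}_B)$ (the log-normalizer $u(\theta)$ cancels since both densities use the same $\theta$). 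Hence the integral is at most $e^{(\lambda-1)c}$, and dividing by $\lambda-1$ gives $d_{R,\lambda}(p_1,p_2)\leq c$, which is exactly the $\lambda$-free bound in the statement. Integrating against $p_2$ directly, as a naive reading of the definition suggests, would only give $\tfrac{\lambda}{\lambda-1}c$, so this rewriting is the small but essential step your plan was missing. The Gaussian case as you lay it out (closed-form Renyi divergence plus the eigenvalue bound $\delta^\top\Sigma^{-1}\delta\leq\norm{\delta}_2^2/\sigma_{\min}(\Sigma)$) is routine and needs no such trick, consistent with your assessment.
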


In simpler words, the previous theorem ensures stability in the neural network when injecting noise \wrt the distribution of the output.
Intuitively, if two inputs are close \wrt $\norm{\ \cdot\ }_{A}$, the output distributions of the network will be close in the sense of Renyi divergence.
It is well known that in the case of deterministic neural networks, the Lipschitz constant becomes bigger as the number of layers increases~\cite{gouk2018regularisation}.
By injecting noise at layer $i$, the notion of robustness only depends on the sensitivity of the first $i$ layers of the network and not the following ones.
In that sense, randomization provides a more precise control on the ``continuity'' of the neural network.
In the next section, we show that thanks to the notion of robustness \wrt probabilistic mappings, one can bound the loss of accuracy of a randomized neural network when it is attacked. 

%%%%%%%%%%%%%%%%%%%%%%%%%%%%%%%%%%%%%%%%%%%%%%%%%%%%%%%%%%%%%%%%%%%%%%%%%%%%%%%%
\subsection{Bound on the generalization gap under attack}
%%%%%%%%%%%%%%%%%%%%%%%%%%%%%%%%%%%%%%%%%%%%%%%%%%%%%%%%%%%%%%%%%%%%%%%%%%%%%%%%

The notions of risk and adversarial risk can easily be generalized to encompass probabilistic mappings.
\begin{definition}[Risks for probabilistic mappings]
  Let $\probmap$ be a probabilistic mapping from $\mathcal{X}$ to $\mathcal{Y}$, the risk and the $\alpha$-radius adversarial risk of $\probmap$ \wrt $\mathcal{D}$ are defined as 
  \begin{align}
    \Risk(\probmap) &\triangleq \mathbb{E}_{(x,y) \sim \mathcal{D}} \left[ \mathbb{E}_{y'\sim \probmap(x)} \left[ \mathds{1} \left( y' \neq y \right)\right]\right] \\
    \advRisk(\probmap) &\triangleq \mathbb{E}_{(x,y)\sim \mathcal{D}}\left[ \sup_{\norm{\tau}_{\mathcal{X}} \leq \alpha}\mathbb{E}_{y'\sim \probmap(x+\tau)} \left[ \mathds{1} \left( y' \neq y \right)\right]\right]\enspace.
  \end{align}
  \removespace
\end{definition}

\noindent
The definition of adversarial risk for a probabilistic mapping can be matched with the concept of Expectation over Transformation (EoT) attacks~\cite{athalye2018obfuscated}.
Indeed, EoT attacks aim at computing the best opponent in expectation for a given random transformation.
In the adversarial risk definition, the adversary chooses the perturbation which has the greatest probability to fool the model, which is a stronger objective than the EoT objective.
Theorem~\ref{theorem:ap3-bound} provides a bound on the gap between the adversarial risk and the regular risk:
\begin{theorem}[Adversarial generalization gap bound in the randomized setting]
  Let $\probmap$ be the probabilistic mapping at hand.
  Let us suppose that  $\probmap$ is $d_{R,\lambda}$-$(\alpha,\epsilon)$ robust for some $\lambda\geq1$ then:
  \begin{equation}
    |\advRisk(\probmap)-\Risk(\probmap)|\leq 1-e^{-\epsilon}\mathbb{E}_x\left[e^{-H(\probmap(x))}\right]
  \end{equation}
  where $H$ is the Shannon entropy $H(p)=-\sum_i p_i \log(p_i)\enspace.$
\label{theorem:ap3-bound}
\end{theorem}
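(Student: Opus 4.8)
\textbf{Proof plan for Theorem~\ref{theorem:ap3-bound}.}
The plan is to control, uniformly in $x$, the gap between the pointwise "under attack" error $\sup_{\norm{\tau}_{\mathcal{X}}\le\alpha}\mathbb{E}_{y'\sim\probmap(x+\tau)}[\mathds{1}(y'\ne y)]$ and the clean error $\mathbb{E}_{y'\sim\probmap(x)}[\mathds{1}(y'\ne y)]$, and then take the expectation over $(x,y)\sim\mathcal{D}$. Fix $x$ and a candidate perturbation $\tau$ with $\norm{\tau}_{\mathcal{X}}\le\alpha$. Write $\mu=\probmap(x)$ and $\mu_\tau=\probmap(x+\tau)$; by hypothesis $d_{R,\lambda}(\mu_\tau,\mu)\le\epsilon$ and $d_{R,\lambda}(\mu,\mu_\tau)\le\epsilon$ (the robustness condition is stated for the pair, and one should check the direction we actually need). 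The key is a change-of-measure inequality: for any event $Y\in\mathcal{F}_\mathcal{Y}$, bound $\mu_\tau(Y)$ in terms of $\mu(Y)$ using the Renyi divergence bound. First I would use the standard inequality relating Renyi divergence of order $\lambda$ to a likelihood-ratio control — concretely, for $\lambda\to\infty$ one gets $\mu_\tau(Y)\le e^{\epsilon}\mu(Y)$, but for finite $\lambda$ one needs a Hölder-type argument, so I would instead pass through the $\lambda\to 1$ (Kullback--Leibler) consequence or directly use that the Renyi divergence is non-decreasing in $\lambda$ to reduce to a convenient order. The cleanest route is to use the bound $\mu_\tau(Y)\le \left(e^\epsilon\,\mu(Y)\right)^{(\lambda-1)/\lambda}$-type estimates, or more simply the fact that $d_{R,\lambda}\le d_{R,\infty}$ lets us only conclude something when $\epsilon$ controls the max-divergence; since the theorem allows general $\lambda\ge 1$, I expect the actual argument uses the $\lambda\to 1$ limit and Pinsker/Bretagnolle--Huber or a direct entropy computation.

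The appearance of the Shannon entropy $H(\probmap(x))$ in the bound signals that the argument is not merely a crude $e^\epsilon$ multiplicative bound but exploits the structure of the clean error as an expectation of an indicator under $\probmap(x)$. So the main step I would carry out is: writing $p = \probmap(x)$ as a distribution over labels with components $p_i$, the clean error at $(x,y)$ is $1-p_y$, and the worst perturbation will try to push mass away from label $y$. Under the Renyi-robustness constraint the adversary cannot move the distribution arbitrarily; the maximal increase of $\sum_{i\ne y}\mu_\tau(i)=1-\mu_\tau(y)$ subject to $d_{R,\lambda}(\mu_\tau,p)\le\epsilon$ is an optimization problem over probability vectors. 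I would solve (or bound) this optimization: the extremal $\mu_\tau$ concentrates the transfer of mass in the way that the divergence constraint is cheapest, and a Lagrangian/KKT analysis on the simplex yields a bound of the form $(1-\mu_\tau(y)) - (1-p_y) \le 1 - e^{-\epsilon}\,e^{-H(p)}$ after recognizing $\sum_i p_i^{1+s}$-type moment sums and identifying the $e^{-H(p)}=\prod_i p_i^{p_i}$ factor via a Jensen/AM--GM step. Taking expectations over $x$ and using the triangle inequality $|\advRisk(\probmap)-\Risk(\probmap)|\le \mathbb{E}_{(x,y)}|\,(\text{attacked err}) - (\text{clean err})\,|$ then gives the stated inequality, with the $\mathbb{E}_x$ landing outside the $e^{-H(\probmap(x))}$ term.

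The main obstacle I anticipate is the extremal optimization over the simplex under the Renyi-divergence constraint: getting exactly the factor $e^{-H(\probmap(x))}$ rather than a looser surrogate requires identifying the right dual variables and using convexity of the Renyi divergence in its first argument, together with a careful limiting argument in $\lambda$ (since the clean bound should hold for every $\lambda\ge1$ for which $\probmap$ is robust, and $d_{R,\lambda}$ is monotone in $\lambda$, the tightest usable statement comes from $\lambda\to1$). A secondary technical point is ensuring the change-of-measure inequality is applied in the correct direction — the adversary perturbs $x$ to $x+\tau$, so one needs a bound on $\probmap(x+\tau)$-probabilities in terms of $\probmap(x)$-probabilities, and the symmetry (or near-symmetry) of the robustness hypothesis in its two arguments should be invoked explicitly. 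Once these two points are settled, the remaining steps — splitting $\advRisk-\Risk$ pointwise, bounding the absolute value, and integrating — are routine and I would not belabor them.
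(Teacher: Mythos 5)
Your overall skeleton --- pointwise control, monotonicity of the R\'enyi divergence in $\lambda$ to reduce to Kullback--Leibler, recognizing $e^{-H(p)} = \prod_i p_i^{p_i}$, a Jensen/AM--GM step, and then taking expectations --- is correct, but the central step you propose should be replaced, and the direction issue you flag has to be resolved explicitly rather than hoped away.

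The Lagrangian/KKT optimization of $q_y$ over the simplex under a R\'enyi constraint does not return the stated constant in closed form: the KKT solution has the shape $q_i = p_i e^{c}$ for $i\ne y$ and $q_y = p_y e^{c'}$, and the resulting lower bound on $q_y$ depends on $(p_y, c, c')$ in a way that does not factor into $e^{-\epsilon}e^{-H(p)}$; one would then have to upper-bound the optimal value by the clean formula anyway, which is circular. The step that actually produces the formula is an elementary inequality that removes the dependence on the true label $y$ \emph{before} any optimization. Writing $p = \probmap(x)$ and $q = \probmap(x+\tau)$ for the label distributions, one has
\begin{equation*}
  p_y - q_y \;\leq\; 1 - \sum_i p_i q_i,\qquad\text{since}\qquad
  \Bigl(1 - \sum_i p_i q_i\Bigr) - (p_y - q_y) = \sum_{i\ne y}p_i(1-q_i) + q_y(1-p_y) \;\geq\; 0.
\end{equation*}
The agreement probability $\sum_i p_i q_i$ is then what the entropy controls, via concavity of $\log$:
\begin{equation*}
  \log\sum_i p_i q_i \;\geq\; \sum_i p_i\log q_i \;=\; -H(p) - d_{KL}(p, q) \;\geq\; -H(p) - \epsilon,
\end{equation*}
so $\sum_i p_iq_i \geq e^{-H(p)-\epsilon}$, uniformly in admissible $\tau$. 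Taking $\sup_\tau$ and then $\mathbb{E}_{(x,y)\sim\mathcal{D}}$ gives $\advRisk(\probmap) - \Risk(\probmap) \leq 1 - e^{-\epsilon}\,\mathbb{E}_x\bigl[e^{-H(\probmap(x))}\bigr]$; the absolute value is free because $\tau=0$ is admissible, so $\advRisk\geq\Risk$ always. No dual variables, no case split, and no ``careful limiting argument in $\lambda$'' beyond the one-line use of $d_{R,1}\leq d_{R,\lambda}$.

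The direction issue you identify is genuine and is the one remaining gap, and monotonicity in $\lambda$ will not repair it. The chain above needs $d_{KL}(\probmap(x),\probmap(x+\tau))\leq\epsilon$ (clean distribution in the \emph{first} slot), whereas the definition of $d_{R,\lambda}$-$(\alpha,\epsilon)$-robustness bounds $d_{R,\lambda}(\probmap(x+\tau),\probmap(x))$, which by $d_{R,1}\leq d_{R,\lambda}$ ($\lambda\geq1$) only gives $d_{KL}(\probmap(x+\tau),\probmap(x))\leq\epsilon$ --- the reverse order. Monotonicity in $\lambda$ never swaps the two arguments. What rescues the argument in context is that the $\epsilon$ produced by Theorem~\ref{theorem:ap3-netrob} is symmetric in $x$ and $x+\tau$ (it is built from the sensitivity $\Delta^{A,B}_\alpha$, a two-sided supremum), so both orders of the divergence are bounded by the same $\epsilon$. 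You should make that symmetry an explicit hypothesis; it is not recoverable from the statement of Theorem~\ref{theorem:ap3-bound} alone.
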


\noindent
This theorem gives a control on the loss of accuracy under attack \wrt the robustness parameter $\epsilon$ and the entropy of the predictor.
It provides a trade-off between the quantity of noise added in the network and the accuracy under attack.
Intuitively, when the noise increases, for any input, the output distribution tends towards the uniform distribution, then, $\epsilon\rightarrow0$ and $H(\probmap(x))\rightarrow \log(K)$, and the risk and the adversarial risk both tends to $\frac{1}{K}$ where $K$ is the number of classes in the classification problem.
On the opposite, if no noise is injected, for any input, the output distribution is a  Dirac distribution, then, if the prediction for the adversarial example is not the same as for the regular one, $\epsilon\rightarrow\infty$ and $H(\probmap(x))\rightarrow 0$.
Hence, the noise needs to be designed both to preserve accuracy and robustness to adversarial attacks.
In the Section~\ref{section:ap3-experiment}, we give an illustration of this bound when $\probmap$ is a neural network with noise injection at input level as presented in Theorem~\ref{theorem:ap3-netrob}.

%%%%%%%%%%%%%%%%%%%%%%%%%%%%%%%%%%%%%%%%%%%%%%%%%%%%%%%%%%%%%%%%%%%%%%%%%%%%%%%
\section{Experiments}
\label{section:ap3-experiment}
%%%%%%%%%%%%%%%%%%%%%%%%%%%%%%%%%%%%%%%%%%%%%%%%%%%%%%%%%%%%%%%%%%%%%%%%%%%%%%%

To illustrate our theoretical findings, we train randomized neural networks with a simple method which consists in injecting a noise drawn from an Exponential family distribution in the image during training and inference.
This section aims to answer \textbf{Q2} stated in the introduction, by tackling the following sub-questions:
\begin{itemize}[leftmargin=12mm]
  \item[\textbf{Q2.1:}] How does the randomization impact the accuracy of the network? And, how does the theoretical trade-off between accuracy and robustness apply in practice? 
  \item[\textbf{Q2.2:}] What is the accuracy under attack of randomized neural networks against powerful iterative attacks? And how does randomized neural networks compare to state-of-the-art defenses given the intensity of the injected noise? 
\end{itemize}

%%%%%%%%%%%%%%%%%%%%%%%%%%%%%%%%%%%%%%%%%%%%%%%%%%%%%%%%%%%%%%%%%%%%%%%%%%%%%%%
\paragraph{Experimental setup}
%%%%%%%%%%%%%%%%%%%%%%%%%%%%%%%%%%%%%%%%%%%%%%%%%%%%%%%%%%%%%%%%%%%%%%%%%%%%%%%

We present our results and analysis on CIFAR-10 \cite{krizhevsky2009learning}.
We used a Wide ResNet architecture \cite{zagoruyko2016wide} which is a variant of the ResNet model proposed by~\citet{he2016deep}.
We use 28 layers with a widen factor of 10.
We train all networks for 200 epochs, a batch size of 400, dropout 0.3 and Leaky Relu activation with a slope on $\mathbb{R}^-$ of 0.1.
We minimize the Cross Entropy Loss with Momentum 0.9 and use a piecewise constant learning rate of 0.1, 0.02, 0.004 and 0.00008 after respectively 7500, 15000 and 20000 steps.
The networks achieve for CIFAR10 and 100 a TOP-1 accuracy of 95.8\% and 79.1\% respectively on test images.
% For ImageNet \cite{deng2009imagenet}, we use an Inception ResNet v2 \cite{szegedy2017inception} which is the sate of the art architecture for this dataset and achieve a TOP-1 accuracy of 80\%.
% For the training of ImageNet, we use the same hyper parameters setting as the original implementation.
% We train the network for 120 epochs with a batch size of 256, dropout 0.8 and Relu as activation function.
% All evaluations were done with a single crop on the non-blacklisted subset of the validation set.

To transform these classical networks to probabilistic mappings, we inject noise drawn from Laplace and Gaussian distributions, each with various standard deviations.
While the noise could theoretically be injected anywhere in the network, we inject the noise on the image for simplicity.
More experiments with noise injected in the first layer of the network are presented in the supplementary material.
To evaluate our models under attack, we use three powerful iterative attacks with different norms: \emph{ElasticNet} attack (EAD)~\cite{chen2018ead} with $\ell_1$ distortion, \emph{Carlini\&Wagner} attack (C\&W)~\cite{carlini2017towards} with $\ell_2$ distortion and \emph{Projected Gradient Descent} attack (PGD)~\cite{madry2018towards} with $\ell_\infty$ distortion.
All standard deviations and attack intensities are in between $-1$ and $1$.
Precise descriptions of our numerical experiments and of the attacks used for evaluation are deferred to the supplementary material.

\paragraph{Attacks against randomized defenses:}
It has been pointed out by~\citet{athalye2017synthesizing,carlini2019evaluating} that in a white box setting, an attacker with a complete knowledge of the system will know the distribution of the noise injected in the network.
As such, to create a stronger adversarial example, the attacker can take the expectation of the loss or the logits of the randomized network during the computation of the attack.
This technique is called Expectation Over Transformation ($\EoT$) and we use a Monte Carlo method with $80$ simulations to approximate the best perturbation for a randomized network. 

\afterpage{
\begin{figure}[H]
  \centering
  \includegraphics[scale=0.55]{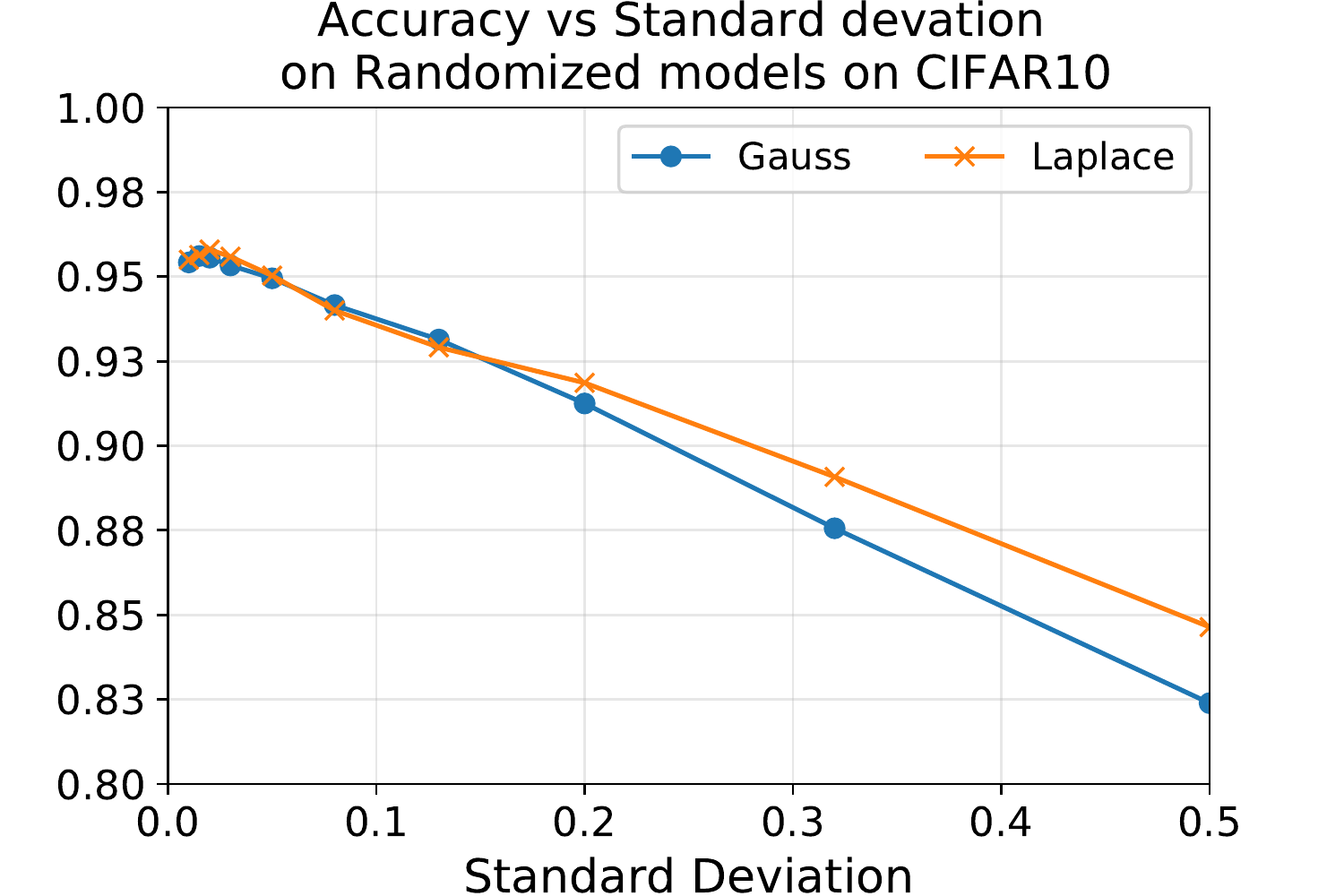}
  \caption{Impact of the standard deviation of the injected noise on accuracy in a randomized model on CIFAR-10 dataset with a Wide ResNet architecture.} 
  \label{figure:ap3-acc_sd_CIFAR10}
  \vspace{0.8cm}
  \includegraphics[scale=0.55]{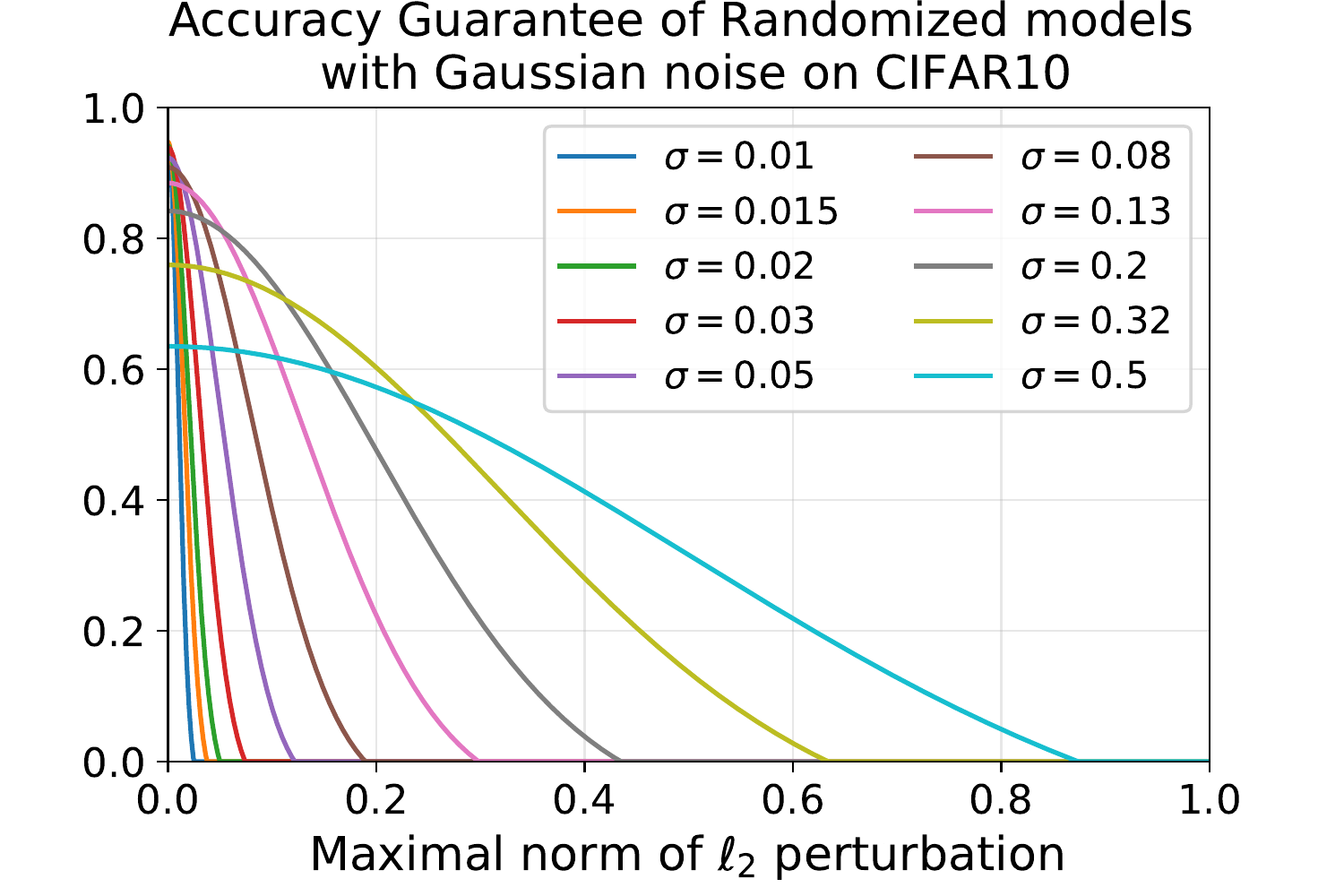}
\caption{Illustration of the guaranteed accuracy of different randomized models with Gaussian noises given the norm of the adversarial perturbation.}
  \label{figure:ap3-gauss_certif_CIFAR10}
  \vspace{0.8cm}
  \includegraphics[scale=0.55]{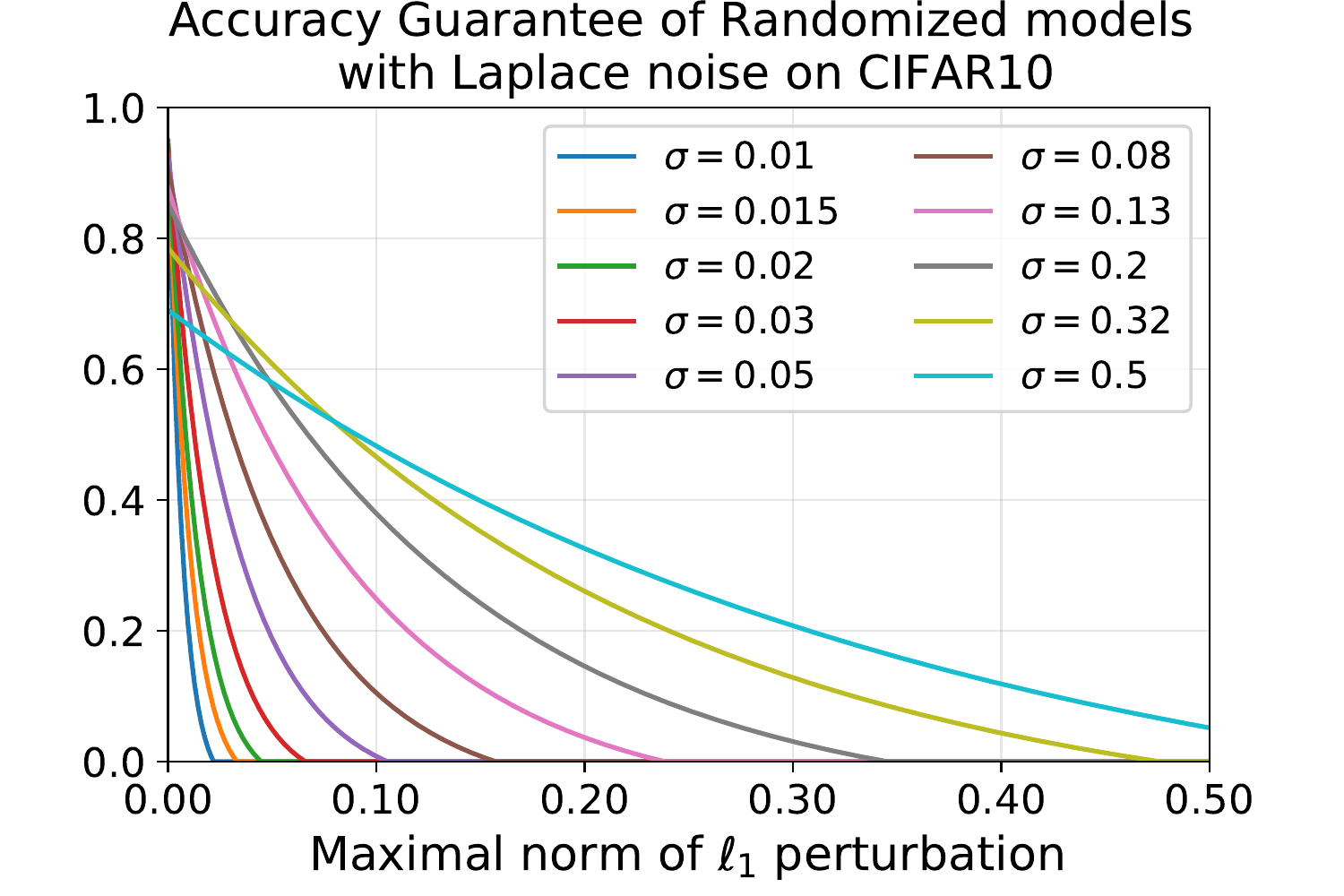}
  \caption{Illustration of the guaranteed accuracy of different randomized models with Laplace noises given the norm of the adversarial perturbation.}
  \label{figure:ap3-laplace_certif_CIFAR10}
  \label{figure:ap3-cifar10_results}
\end{figure}
}

\paragraph{Trade-off between accuracy and intensity of noise (Q2.1):}

When injecting noise as a defense mechanism, regardless of the distribution it is drawn from, we observe (as in Figure~\ref{figure:ap3-acc_sd_CIFAR10}) that the accuracy decreases when the noise intensity grows.
In that sense, noise needs to be calibrated to preserve both accuracy and robustness against adversarial attacks, \ie, it needs to be large enough to preserve robustness and small enough to preserve accuracy.
Figure~\ref{figure:ap3-acc_sd_CIFAR10} shows the loss of accuracy on CIFAR10 from $0.95$ to $0.82$ (respectively $0.95$ to $0.84$) with noise drawn from a Gaussian distribution (respectively Laplace) with a standard deviation from $0.01$ to $0.5$.
Figure~\ref{figure:ap3-gauss_certif_CIFAR10} and \ref{figure:ap3-laplace_certif_CIFAR10} illustrate the theoretical lower bound on accuracy under attack of Theorem~\ref{theorem:ap3-bound} for different distributions and standard deviations.
The term in entropy of Theorem~\ref{theorem:ap3-bound} has been estimated using a Monte Carlo method with $10^4$ simulations.
The trade-off between accuracy and robustness from Theorem~\ref{theorem:ap3-bound} thus appears \wrt the noise intensity.
With small noises, the accuracy is high, but the guaranteed accuracy drops fast \wrt the magnitude of the adversarial perturbation.
Conversely, with bigger noises, the accuracy is lower but decreases slowly \wrt the magnitude of the adversarial perturbation.
These Figures also show that Theorem~\ref{theorem:ap3-bound} gives strong accuracy guarantees against small adversarial perturbations.
Next paragraph shows that in practice, randomized networks achieve much higher accuracy under attack than the theoretical bound, and keep this accuracy against much larger perturbations.

\paragraph{Performance of randomized networks under attacks and comparison to state of the art (Q2.2):}

While Figure~\ref{figure:ap3-gauss_certif_CIFAR10} and \ref{figure:ap3-laplace_certif_CIFAR10} illustrated a theoretical robustness against growing adversarial perturbations, Table~\ref{table:ap3-accuracy_under_attack} illustrates this trade-off experimentally.
It compares the accuracy obtained under attack by a deterministic network with the one obtained by randomized networks with Gaussian and Laplace noises both with low ($0.01$) and high ($0.5$) standard deviations.
Randomized networks with a small noise lead to no loss in accuracy with a small robustness while high noise leads to a higher robustness at the expense of loss of accuracy ($\sim11$ points). 

\begin{table}[t]
  \centering
  \begin{tabular}{lccccc}
    \toprule
    \textbf{Distribution} & \textbf{Sd} & \textbf{Natural} & \textbf{$\ell_1$ EAD 60} & \textbf{$\ell_2$ C\&W 60} & \textbf{$\ell_\infty$ PGD 20} \\
    \midrule
    \multicolumn{1}{c}{--} & -- & 0.958 & 0.035 & 0.034 & 0.384 \\
    \midrule
    \multirow{2}[0]{*}{Normal} & 0.01 & 0.954 & 0.193 & 0.294 & 0.408 \\
	  & 0.50 & 0.824 & 0.448 & 0.523 & 0.587 \\
    \midrule
    \multirow{2}[0]{*}{Laplace} & 0.01 & 0.955 & 0.208 & 0.313 & 0.389 \\
	  & 0.50 & 0.846 & 0.464 & 0.494 & 0.589 \\
    \bottomrule
  \end{tabular}%
  \caption{Accuracy under attack on the CIFAR-10 dataset with a randomized Wide ResNet architecture. We compare the accuracy on natural images and under attack with different noise over 3 iterative attacks (the number of steps is next to the name) made with 80 Monte Carlo simulations to compute EoT attacks. The first line is the baseline, no noise has been injected.}
  \label{table:ap3-accuracy_under_attack}%
\end{table}%

% \begin{table}[t]
%   \label{Results}
%   \centering
%   \begin{tabular}{ccccccc}
%     \toprule
%     & & \multirow{2}[0]{*}{\textbf{Madry et al.}} & \multirow{2}[0]{*}{\textbf{Normal 0.32}} & \multirow{2}[0]{*}{\textbf{Laplace 0.32}} & \multirow{2}[0]{*}{\textbf{Normal 0.5}} & \multirow{2}[0]{*}{\textbf{Laplace 0.5}} \\
%      \textbf{Attack} & \textbf{Steps} & & & \\
%     \midrule
%     --  & -- & 0.873 & 0.876 & 0.891 & 0.824 & 0.846 \\ 
%     $\ell_\infty$ -- PGD & 20 & 0.456 & 0.566 & 0.576 & 0.587 & 0.589 \\
%     $\ell_2$ -- C\&W & 30 & 0.468 & 0.512 & 0.502 & 0.489 & 0.479 \\
%     \bottomrule
%   \end{tabular}
%   \caption{Accuracy under attack of randomized neural network with different distributions and standard deviations versus adversarial training by~\citet{madry2018towards}. The PGD attack has been made with 20 step, an epsilon of 0.06 and a step size of 0.006 (input space between $-1$ and $+1$). The Carlini\&Wagner attack uses 30 steps, 9 binary search steps and a 0.01 learning rate. The first line refers to the baseline without attack.}
%   \label{table:madry_vs_random}
% \end{table}

\begin{table}[ht]
  \centering
  \begin{tabular}{cccccccccc}
    \toprule
    \multirow{2}{*}{\textbf{Attack}} & \multirow{2}{*}{\textbf{Steps}} & & \multirow{2}[0]{*}{\textbf{Madry et al.}} & &    \multicolumn{2}{c}{\textbf{Normal}} &  & \multicolumn{2}{c}{\textbf{Laplace}} \\
    \cmidrule{6-7} \cmidrule{9-10}
    & & & & & \textbf{0.32} & \textbf{0.5} &  & \textbf{0.32} & \textbf{0.5} \\
    \midrule
    --                    & -- & & 0.873 & & 0.876 & 0.824 & & 0.891 & 0.846 \\ 
    $\ell_\infty$ -- PGD  & 20 & & 0.456 & & 0.566 & 0.587 & & 0.576 & 0.589 \\
    $\ell_2$ -- C\&W      & 30 & & 0.468 & & 0.512 & 0.489 & & 0.502 & 0.479 \\
    \bottomrule
  \end{tabular}
  \caption{Accuracy under attack of randomized neural network with different distributions and standard deviations versus adversarial training by~\citet{madry2018towards}. The PGD attack has been made with 20 step, an epsilon of 0.06 and a step size of 0.006 (input space between $-1$ and $+1$). The Carlini\&Wagner attack uses 30 steps, 9 binary search steps and a 0.01 learning rate. The first line refers to the baseline without attack.}
  \label{table:madry_vs_random}
\end{table}

Finally, Table~\ref{table:madry_vs_random} compares the accuracy and the accuracy under attack of randomized networks with Gaussian and Laplace distributions for different standard deviations against adversarial training from~\citet{madry2018towards}.
We observe that the accuracy on natural images of both noise injection methods are similar to the one from~\citet{madry2018towards}.
Moreover, both methods are more robust than adversarial training to PGD and C\&W attacks.
As with all the experiments, to construct an EoT attack,  we use 80 Monte Carlo simulations at every step of PGD and C\&W attacks.
These experiments show that randomized defenses can be competitive given the intensity of noise injected in the network.
Note that these experiments have been led with $\EoT$ of size 80.
For much bigger sizes of $\EoT$ these results would be mitigated.
Nevertheless, the accuracy would never drop under the bounds illustrated in the Figures~\ref{figure:ap3-gauss_certif_CIFAR10} and \ref{figure:ap3-laplace_certif_CIFAR10}, since Theorem~\ref{theorem:ap3-bound} gives a bound that on the worst case attack strategy (including $\EoT$).

%%%%%%%%%%%%%%%%%%%%%%%%%%%%%%%%%%%%%%%%%%%%%%%%%%%%%%%%%%%%%%%%%%%%%%%%%%%%%%%
\section{Concluding Remarks}
\label{section:ap3-conclusion_remarks}
%%%%%%%%%%%%%%%%%%%%%%%%%%%%%%%%%%%%%%%%%%%%%%%%%%%%%%%%%%%%%%%%%%%%%%%%%%%%%%%

This chapter brings new contributions to the field of provable defenses to adversarial attacks.
Principled answers have been provided to key questions on the interest of randomization techniques, and on their loss of accuracy under attack.
The obtained bounds have been illustrated in practice by conducting thorough experiments on baseline datasets such as CIFAR.
We show in particular that a simple method based on injecting noise drawn from the Exponential family is competitive compared to baseline approaches while leading to provable guarantees.
Future work will focus on investigating other noise distributions belonging or not to the Exponential family, combining randomization with more sophisticated defenses and on devising new tight bounds on the adversarial generalization gap.

    %%%%%%%%%%%%%%%%%%%%%%%%%%%%%%%%%%%%%%%%%%%%%%%%%%%%%%%%%%%%%%%%%%%%%%%%%%%%%%%
\chapter{Advocating for Multiple Defense Strategies against Adversarial Examples}
\label{appendix:ap4-advocating_multiple_defense_strategies_against_adversarial_examples}
%%%%%%%%%%%%%%%%%%%%%%%%%%%%%%%%%%%%%%%%%%%%%%%%%%%%%%%%%%%%%%%%%%%%%%%%%%%%%%%
\localtoc

\vspace{\fill}

\emph{
This Appendix concerns a collaboration with Rafael Pinot, Laurent Meunier and Benjamin Negrevergne.
This work has been published in the European Conference on Machine Learning Workshop for CyberSecurity.
It conducts a geometrical analysis of defense mechanisms designed to protect neural networks against.
This work shows that neural networks designed to be robust against one type of adversarial example offers poor robustness against other types of attacks.
}

\vspace{\fill}

%%%%%%%%%%%%%%%%%%%%%%%%%%%%%%%%%%%%%%%%%%%%%%%%%%%%%%%%%%%%%%%%%%%%%%%%%%%%%%%
\section{Introduction}
\label{section:ap4-introduction}
%%%%%%%%%%%%%%%%%%%%%%%%%%%%%%%%%%%%%%%%%%%%%%%%%%%%%%%%%%%%%%%%%%%%%%%%%%%%%%%

% Deep neural networks achieve state-of-the-art performances in a variety of domains such as natural language processing~\cite{radford2019language}, image recognition~\cite{he2016deep} and speech recognition~\cite{hinton2012deep}.
% However, it has been shown that such neural networks are vulnerable to \emph{adversarial examples}, \ie, imperceptible variations of the natural examples, crafted to deliberately mislead the models~\cite{globerson2006nightmare,biggio2013evasion,szegedy2013intriguing}.
% Since their discovery, a variety of algorithms have been developed to generate adversarial examples (\aka attacks), for example FGSM \cite{goodfellow2014explaining}, PGD \cite{madry2018towards} and C\&W \cite{carlini2017towards}, to mention the most popular ones.

We have seen that deep neural networks are vulnerable to \emph{adversarial examples}.
Because it is difficult to characterize the space of visually imperceptible variations of a natural image, existing adversarial attacks use surrogates that can differ from one attack to another.
For example, \citet{goodfellow2014explaining} use the $\linf$ norm to measure the distance between the original image and the adversarial image whereas \citet{carlini2017towards} use the $\ltwo$ norm.
When the input dimension is low, the choice of the norm is of little importance because the $\linf$ and $\ltwo$ balls overlap by a large margin, and the adversarial examples lie in the same space.
An important insight in this chapter is to observe that the overlap between the two balls diminishes exponentially quickly as the dimensionality of the input space increases.
For typical image datasets with large dimensionality, the two balls are mostly disjoint.
As a consequence, the $\linf$ and the $\ltwo$ adversarial examples lie in different areas of the space, and it explains why $\linf$ defense mechanisms perform poorly against $\ltwo$ attacks and vice versa. 

Building on this insight, we advocate for designing models that incorporate defense mechanisms against both $\linf$ and $\ltwo$ attacks and review several ways of mixing existing defense mechanisms.
In particular, we evaluate the performance of \emph{Mixed Adversarial Training} (MAT)~\cite{goodfellow2014explaining} which consists of  augmenting training batches using \emph{both} $\linf$ and $\ltwo$ adversarial examples, and {\em Randomized Adversarial Training} (RAT)~\cite{salman2019provably}, a solution to benefit from the advantages of both $\linf$ adversarial training, and $\ltwo$ randomized defense.

%%%%%%%%%%%%%%%%%%%%%%%%%%%%%%%%%%%%%%%%%%%%%%%%%%%%%%%%%%%%%%%%%%%%%%%%%%%%%%%
\section{No Free Lunch for Adversarial Defenses}
\label{section:ap4-no_free_lunch}
%%%%%%%%%%%%%%%%%%%%%%%%%%%%%%%%%%%%%%%%%%%%%%%%%%%%%%%%%%%%%%%%%%%%%%%%%%%%%%%

In this Section, we show both theoretically and empirically that defenses mechanisms intending to defend against $\linf$ attacks cannot provide suitable defense against $\ltwo$ attacks.
Our reasoning is perfectly general; hence we can similarly demonstrate the reciprocal statement, but we focus on this side for simplicity.

\begin{figure}
   \centering
   \begin{subfigure}[t]{0.32\textwidth}
       \centering
       \includegraphics[scale=0.22]{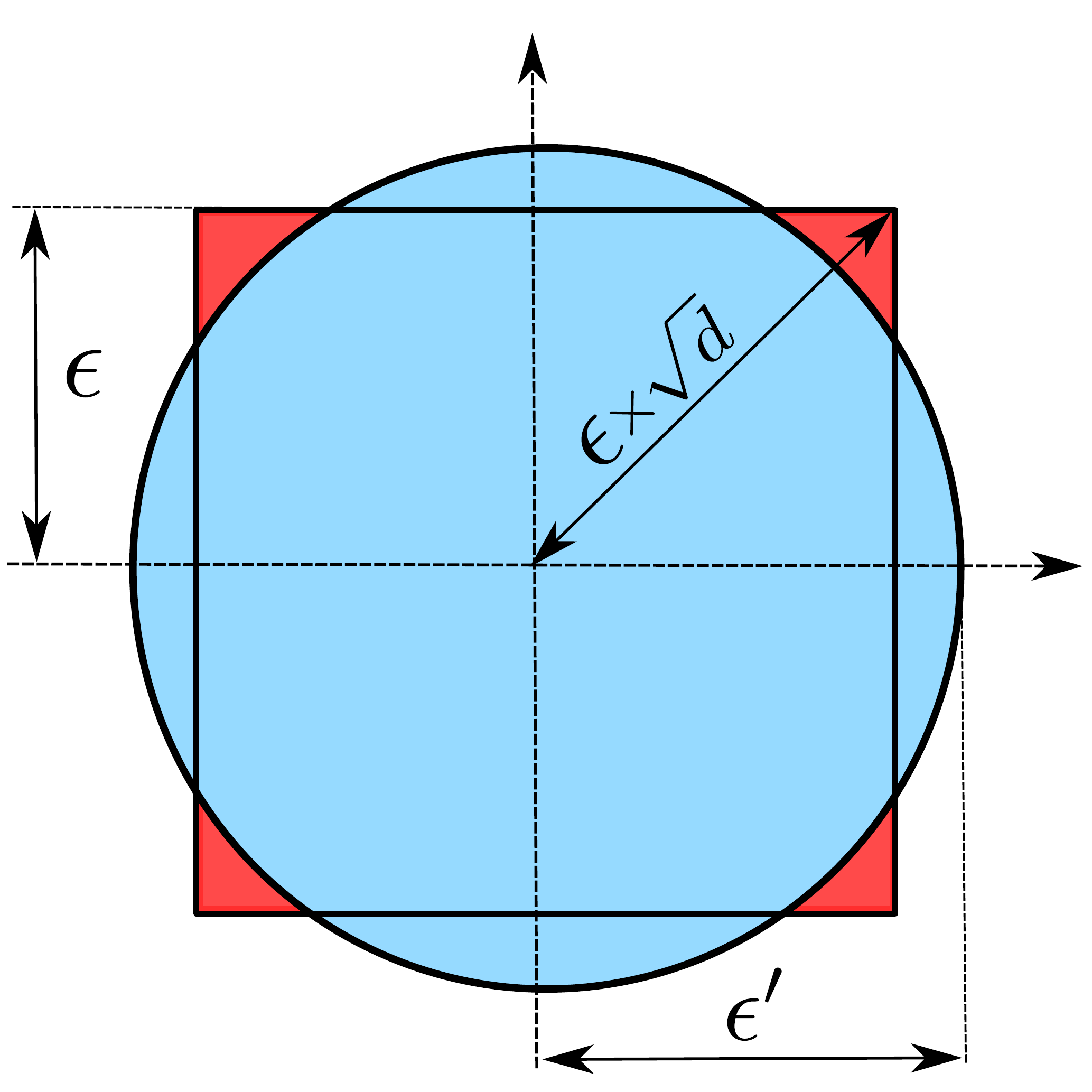}
       \caption{2D representation of the $\linf$ and $\ltwo$ balls of respective radius $\epsilon$ and $\epsilon'$}
       \label{figure:ap4-ball_inclusion_adversarial_training}
   \end{subfigure}
   \hfill
   \begin{subfigure}[t]{0.32\textwidth}
       \centering
       \includegraphics[scale=0.22]{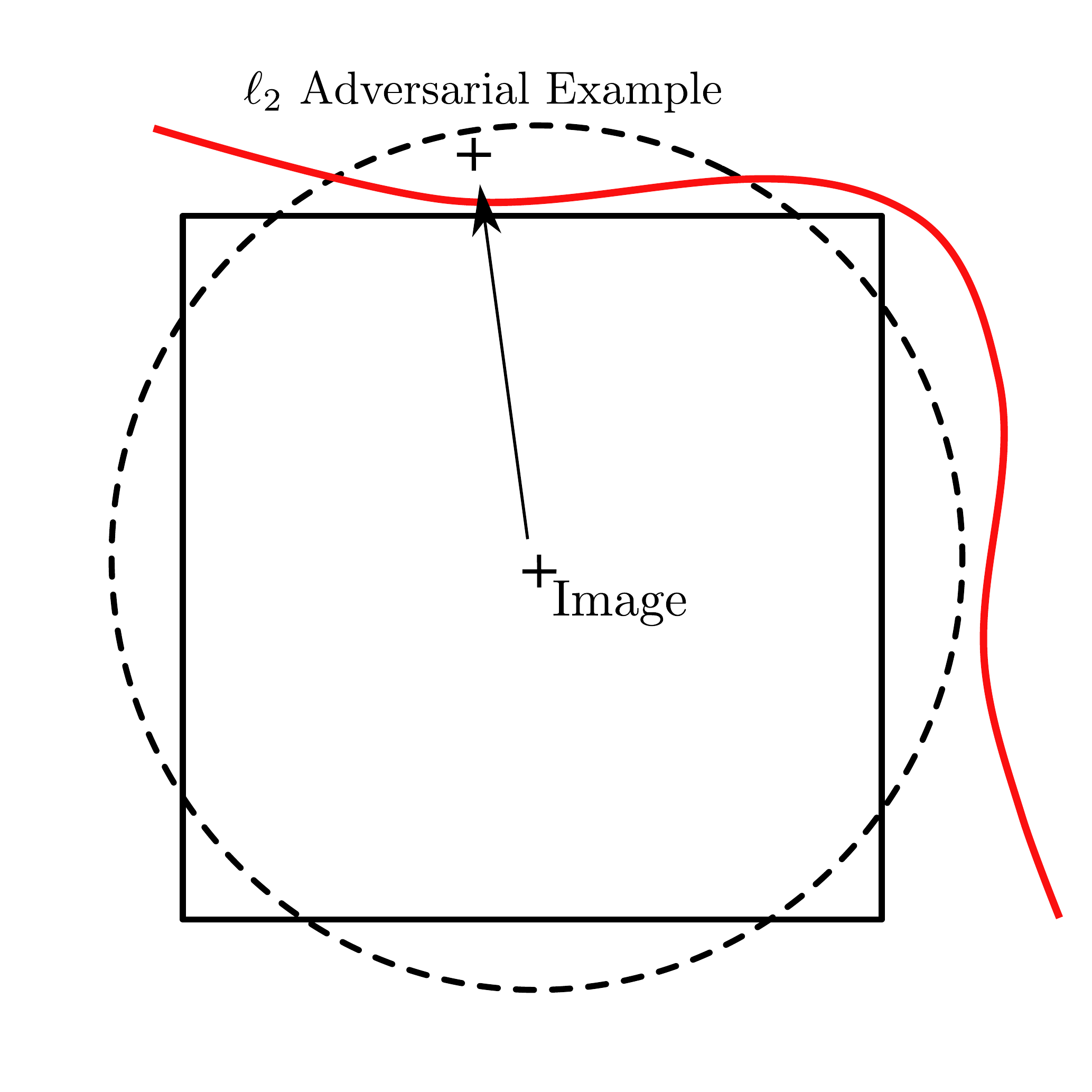}
       \caption{A classifier trained with $\linf$ adversarial perturbations  (materialized by the red line) remains vulnerable to $\ltwo$ attacks.}
       \label{figure:ap4-ball_adversarial_l2}
   \end{subfigure}
   \hfill
   \begin{subfigure}[t]{0.32\textwidth}
       \centering
       \includegraphics[scale=0.22]{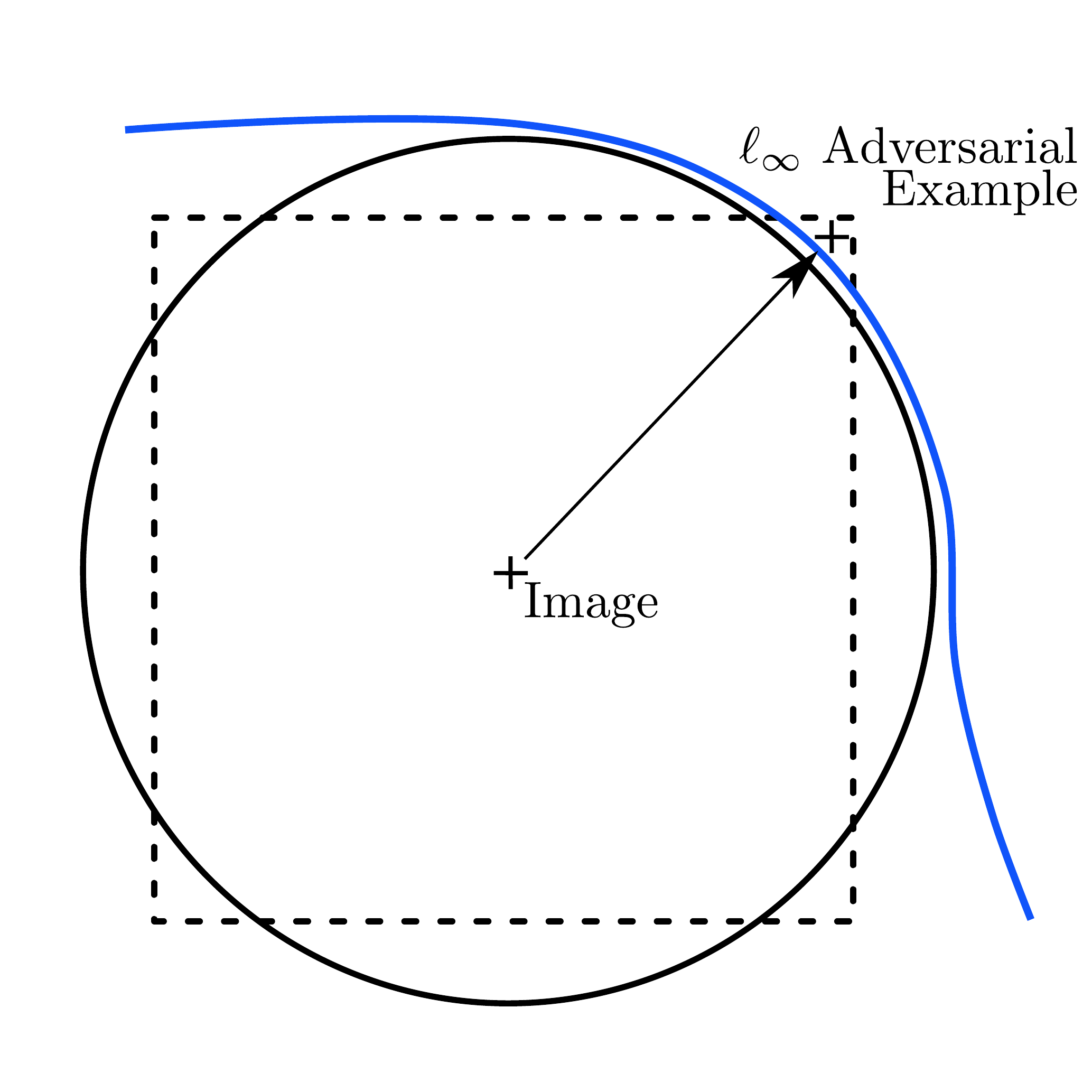}
       \caption{A classifier trained with $\ltwo$ adversarial perturbations (materialized by the blue line) remains vulnerable to $\linf$ attacks.}
       \label{figure:ap4-ball_adversarial_linf}
   \end{subfigure}
   \caption{2-dimensional representation of $\linf$ and $\ltwo$ balls}
   % \caption{Left: 2D representation of the $\linf$ and $\ltwo$ balls of respective radius $\epsilon$ and $\epsilon'$. 
   %  Middle: a classifier trained with $\linf$ adversarial perturbations  (materialized by the red line) remains vulnerable to $\ltwo$ attacks. 
   %  Right: a classifier trained with $\ltwo$ adversarial perturbations (materialized by the blue line) remains vulnerable to $\linf$ attacks.}
\end{figure}

%%%%%%%%%%%%%%%%%%%%%%%%%%%%%%%%%%%%%%%%%%%%%%%%%%%%%%%%%%%%%%%%%%%%%%%%%%%%%%%
\subsection{Theoretical analysis}
\label{subsection:ap4-theoretical_analysis}
%%%%%%%%%%%%%%%%%%%%%%%%%%%%%%%%%%%%%%%%%%%%%%%%%%%%%%%%%%%%%%%%%%%%%%%%%%%%%%%

Let us consider a classifier $f_{\infty}$ that is provably robust against adversarial examples with maximum $\linf$ norm of value $\epsilon_\infty$.
It guarantees that for any input-output pair $(x,y) \sim \Dset$ and for any perturbation $\tau$ such that $\norm{\tau}_\infty \leq \epsilon_\infty$, $f_{\infty}$ is not misled by the perturbation, \ie, $f_{\infty}(x + \tau) = f_{\infty}(x)$.
We now focus our study on the performance of this classifier against adversarial examples bounded with a $\ltwo$ norm of value $\epsilon_2$.
Using Figure~\ref{figure:ap4-ball_inclusion_adversarial_training}, we observe that any $\ltwo$ adversarial example that is also in the $\linf$ ball, will not fool $f_{\infty}$.
Conversely, if it is outside the ball, we have no guarantee.

To characterize the probability that such an  $\ltwo$ perturbation fools an $\linf$ defense mechanism in the general case (\emph{i.e.}, any dimension $d$), we measure the ratio between the volume of the intersection of the $\linf$ ball of radius $\epsilon_\infty$ and the $\ltwo$ ball of radius $\epsilon_2$. As Theorem~\ref{theorem:ap4-nullvolume} shows, this ratio depends on the dimensionality $d$ of the input vector $x$, and  rapidly converges to zero when $d$ increases. 
Therefore a defense mechanism that protects against all $\linf$ bounded adversarial examples is unlikely to be efficient against $\ltwo$ attacks.

\begin{theorem}[Probability of the intersection goes to $0$] ~\\
Let
\begin{equation}
  B_{2,d}(\epsilon) \triangleq \left \{\tau \in \Rbb^d \ |\  \norm{\tau}_2 \leq \epsilon \right \}
\end{equation}
and
\begin{equation}
  B_{\infty,d}(\epsilon') \triangleq \left \{\tau \in \Rbb^d \ |\  \norm{\tau}_\infty \leq \epsilon' \right\}.
\end{equation}
If for all $d$, we select $\epsilon$ and $\epsilon$' such that $\Vol\left(B_{2,d}(\epsilon)\right) = \Vol\left(B_{\infty,d}(\epsilon')\right)$, then
\begin{equation}
  \frac{\Vol\left(B_{2,d}(\epsilon)\bigcap B_{\infty,d}(\epsilon')\right)}{\Vol\left(B_{\infty,d}(\epsilon')\right)} \rightarrow 0 \text{ when } d\rightarrow \infty.
\end{equation}
\label{theorem:ap4-nullvolume}
\end{theorem}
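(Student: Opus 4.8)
\textbf{Proof proposal for Theorem~\ref{theorem:ap4-nullvolume}.}

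The plan is to compute the two volumes explicitly, use the equal-volume constraint to express $\epsilon'$ (the $\ell_\infty$ radius) as a function of $\epsilon$ and $d$, and then bound the volume of the intersection from above by something that, once divided by $\Vol(B_{\infty,d}(\epsilon'))$, goes to zero. The key asymptotic inputs are the classical formulas $\Vol(B_{2,d}(\epsilon)) = \frac{\pi^{d/2}}{\Gamma(d/2 + 1)}\epsilon^d$ and $\Vol(B_{\infty,d}(\epsilon')) = (2\epsilon')^d$. Setting these equal and solving gives $\epsilon' = \frac{\sqrt{\pi}}{2}\,\Gamma(d/2+1)^{-1/d}\,\epsilon$; by Stirling's formula $\Gamma(d/2+1)^{1/d} \sim \sqrt{d/(2e)}$, so $\epsilon' \sim \epsilon\sqrt{\pi e / (2d)}$, i.e. $\epsilon'$ shrinks like $d^{-1/2}$ relative to $\epsilon$.

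First I would crucially observe the inclusion $B_{2,d}(\epsilon) \cap B_{\infty,d}(\epsilon') \subseteq B_{2,d}(\sqrt{d}\,\epsilon')$, since any point with all coordinates bounded by $\epsilon'$ has Euclidean norm at most $\sqrt{d}\,\epsilon'$. Actually the sharper and simpler containment for this argument is the reverse direction: I want an upper bound on the intersection, so I would instead bound it by $B_{\infty,d}(\epsilon')$ intersected with the slightly shrunk $\ell_2$ ball, or more directly estimate $\Vol(B_{2,d}(\epsilon) \cap B_{\infty,d}(\epsilon'))$ using that the intersection is contained in $B_{2,d}(\epsilon)$. Then the ratio is at most $\Vol(B_{2,d}(\epsilon)) / \Vol(B_{\infty,d}(\epsilon')) = 1$ by construction — which is useless. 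So the real work is to show the intersection is a \emph{strictly smaller} fraction. The cleanest route: the intersection is contained in the $\ell_\infty$ cube, but a random point uniform in the cube $B_{\infty,d}(\epsilon')$ has expected squared norm $\frac{d}{3}(\epsilon')^2$, whereas membership in $B_{2,d}(\epsilon)$ requires squared norm $\le \epsilon^2 \approx \frac{2d}{\pi e}(\epsilon')^2 \cdot \frac{\pi e}{2}$... — I would make this precise by a concentration (Hoeffding / Chernoff) argument: writing $\tau = (\tau_1,\dots,\tau_d)$ uniform on the cube of side $2\epsilon'$, the coordinates $\tau_i^2$ are i.i.d.\ bounded random variables with mean $(\epsilon')^2/3$, so $\sum \tau_i^2$ concentrates around $d(\epsilon')^2/3$. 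Plugging $\epsilon^2 \sim \frac{\pi e}{2}\cdot\frac{(\epsilon')^2 d}{1}$... I need to check the constant: $\epsilon/\epsilon' \sim \sqrt{2\pi e/(2d)}\cdot\sqrt{d}$, wait — $\epsilon = \epsilon' \cdot \frac{2}{\sqrt\pi}\Gamma(d/2+1)^{1/d} \sim \epsilon'\cdot\frac{2}{\sqrt\pi}\sqrt{d/(2e)} = \epsilon'\sqrt{2d/(\pi e)}$, so $\epsilon^2 \sim (\epsilon')^2 \cdot \frac{2d}{\pi e}$. Since $\frac{2}{\pi e} \approx 0.234 < \frac13$, the threshold $\epsilon^2$ is \emph{below} the mean $d(\epsilon')^2/3$ of $\|\tau\|_2^2$ on the cube, so by a Chernoff bound $\Pr[\|\tau\|_2^2 \le \epsilon^2]$ decays exponentially in $d$. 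This probability is exactly $\Vol(B_{2,d}(\epsilon)\cap B_{\infty,d}(\epsilon'))/\Vol(B_{\infty,d}(\epsilon'))$, which is what we want.

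The main obstacle, and the step I would be most careful about, is pinning down the constant $\frac{2}{\pi e}$ versus $\frac13$ rigorously — i.e.\ making sure Stirling's approximation is applied with explicit two-sided bounds so that for all sufficiently large $d$ the threshold $\epsilon^2$ genuinely lies strictly below (say by a fixed multiplicative factor) the mean $d(\epsilon')^2/3$, uniformly in $d$. Once that gap is established with a fixed constant $\delta > 0$ such that $\epsilon^2 \le (1-\delta)\,\mathbb{E}\|\tau\|_2^2$, the standard Hoeffding bound for sums of independent bounded variables (each $\tau_i^2 \in [0, (\epsilon')^2]$) yields $\Pr[\sum\tau_i^2 \le \epsilon^2] \le \exp(-c d)$ for some $c>0$, giving the result. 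I would also double-check the small-$d$ regime ($d=1,2$) separately since there the balls can genuinely overlap substantially, but the statement is only asymptotic so this is immaterial.
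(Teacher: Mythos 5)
Your proposal is correct and follows essentially the same route as the paper: express the ratio as the probability that a point drawn uniformly from the cube $B_{\infty,d}(\epsilon')$ falls in the $\ell_2$ ball, observe via Stirling that the threshold $\epsilon^2 \sim \frac{2}{\pi e} d (\epsilon')^2$ sits strictly below the mean $\frac{d}{3}(\epsilon')^2$ of $\sum_i \tau_i^2$, and conclude by Hoeffding that the probability decays exponentially in $d$. The paper normalizes by fixing $\epsilon' = 1$ and working with $r_2(d) = \frac{2}{\sqrt{\pi}}\Gamma(d/2+1)^{1/d}$, which is a cosmetic difference only.
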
 

\begin{proof}[\Cref{theorem:ap4-nullvolume}] 
Without loss of generality, let us fix $\epsilon = 1$. One can show that for all $d$, 
\begin{equation}
    \Vol\left( B_{2,d}\left(\frac{2}{\sqrt{\pi}}\Gamma\left(\frac{d}{2}+1\right)^{1/d}\right)\right) = \Vol\left(B_{\infty,d}\left(1\right)\right)
\end{equation}
where $\Gamma$ is the gamma function. Let us denote 
\begin{equation}
    r_2(d)=\frac{2}{\sqrt{\pi}}\Gamma\left(\frac{d}{2}+1\right)^{1/d}.
\end{equation}

\noindent
If we denote $\Uset$, the uniform distribution on $B_{\infty,d}(1)$, we get:
% by using Hoeffding inequality between Equation~\ref{equation:ap4-hoeffding1} and \ref{equation:ap4-hoeffding2}, we get:
\begin{align}
  & \frac{\Vol(B_{2,d}(r_2(d)) \bigcap B_{\infty,d}(1))}{\Vol(B_{\infty,d}(1))} \\ 
  &= \Pbb_{\xvec \sim \Uset} \left[ x \in B_{2,d}(r_2(d)) \right] = \Pbb_{\xvec \sim \Uset} \left[ \sum_{i=1}^d |x_i|^2 \leq r_2(d)^2 \right] \\
  &= \Pbb_{\xvec \sim \Uset} \left[ \sum_{i=1}^d |\xvec_i|^2  - \Ebb_{\xvec \sim \Uset}\left[\sum_{i=1}^d |\xvec_i|^2 \right] \leq r_2(d)^2 - \Ebb_{\xvec \sim \Uset}\left[\sum_{i=1}^d |\xvec_i|^2 \right] \right] 
  % &\leq \exp{- d^{-1} \left( r_2^2(d) - d \Ebb |x_1|^2 \right)^2} \label{equation:ap4-hoeffding1} \\
  % &\leq \exp{-\left( \frac{2}{\pi e}-\frac23\right)^2d+o(d)} \label{equation:ap4-hoeffding2}.
\end{align}
Note that when $d$ is sufficiently large we get
\begin{equation}
  r_2(d)^2 - \Ebb_{\xvec \sim \Uset}\left[\sum_{i=1}^d |\xvec_i|^2 \right] = r_2(d)^2 - \frac{d}{3} < 0
\end{equation}
Then, with Hoeffding inequality, we finally obtain:
\begin{equation}
  \frac{\Vol(B_{2,d}(r_2(d)) \bigcap B_{\infty,d}(1))}{\Vol(B_{\infty,d}(1))} \leq \exp\left(- \frac{\left(r_2(d)^2 - \frac{d}{3}\right)^2}{d} \right)
\end{equation}
Then, thanks to Stirling's formula
\begin{equation}
    r_2(d) \underset{d \rightarrow \infty}{\sim} \sqrt{\frac{2}{\pi e}} d^{1/2}.
\end{equation}
Then the ratio between the volume of the intersection of the ball and the volume of the ball converges towards $0$ when $d$ goes to $\infty$.
\end{proof}

Theorem~\ref{theorem:ap4-nullvolume} states that, when $d$ is large enough, $\ltwo$ bounded perturbations have a null probability of being also in the $\linf$ ball of the same volume.
As a consequence, for any value of $d$ that is large enough, a defense mechanism that offers full protection against $\linf$ adversarial examples is not guaranteed to offer any protection against $\ltwo$ attacks \footnote{Theorem~\ref{theorem:ap4-nullvolume} can easily be extended to any two balls with different norms. For clarity, we restrict to the case of $\linf$ and $\ltwo$ norms.}.

\begin{table}[ht]
  \centering
  \begin{tabular}{c r r r l}
    \toprule
    \textbf{Dataset\ } & \phantom{....} & \textbf{Dim.} $\mathbf{(d)}$ & \phantom{....} & \textbf{Vol. of the intersection }\\
    \midrule
    -- & & 2\ \ & & \ \ $e^{-0.183}$ \quad ($\approx$ 0.83) \\
    MNIST & & 784\ \  & & \ \ $e^{-7.344}$\\
    CIFAR & & 3072\ \ & &  \ \ $e^{-29.76}$\\
    ImageNet & & 150528\ \ & & \ \ $e^{-1478.71}$\\
    \bottomrule
  \end{tabular}
  \caption{ Bounds of Theorem~\ref{theorem:ap4-nullvolume} on the volume of the intersection of  $\ltwo$ and $\linf$ balls at equal volume for typical image classification datasets. When $d=2$, the bound is $10^{-0.183}\approx 0.83$.}
  \label{table:ap4-datadim}
\end{table}

Note that this result defeats the 2-dimensional intuition: if we consider a 2 dimensional problem setting, the $\linf$ and the $\ltwo$ balls have an important overlap (as illustrated in Figure~\ref{figure:ap4-ball_inclusion_adversarial_training}) and the probability of sampling at the intersection of the two balls is bounded by approximately 83\%.
However, as we increase the dimensionality $d$, this probability quickly becomes negligible, even for very simple image datasets such as MNIST.
An instantiation of the bound for classical image datasets is presented in Table~\ref{table:ap4-datadim}.
The probability of sampling at the intersection of the $\linf$ and $\ltwo$ balls is close to zero for any realistic image setting.
In large dimensions, the volume of the corner of the $\linf$ ball is much bigger than it appears in Figure~\ref{figure:ap4-ball_inclusion_adversarial_training}.

%%%%%%%%%%%%%%%%%%%%%%%%%%%%%%%%%%%%%%%%%%%%%%%%%%%%%%%%%%%%%%%%%%%%%%%%%%%%%%%
\subsection{No Free Lunch in Practice}
\label{subsection:ap4-no_free_lunch_in_practice}
%%%%%%%%%%%%%%%%%%%%%%%%%%%%%%%%%%%%%%%%%%%%%%%%%%%%%%%%%%%%%%%%%%%%%%%%%%%%%%%

Our theoretical analysis shows that if adversarial examples were uniformly distributed in a high-dimensional space, then any mechanism that perfectly defends against $\linf$ adversarial examples has a null probability of protecting against $\ltwo$-bounded adversarial attacks.
Although existing defense mechanisms do not necessarily assume such a distribution of adversarial examples, we demonstrate that whatever distribution they use, it offers no favorable bias with respect to the result of Theorem~\ref{theorem:ap4-nullvolume}.
As we discussed in Chapter~\ref{chapter:ch5-lipschitz_bound}, there are two distinct attack settings: loss maximization (PGD) and perturbation minimization (C\&W).
Our analysis is mainly focusing on loss maximization attacks.
However, these attacks have a very strict geometry\footnote{Due to the projection operator, all PGD attacks saturate the constraint, which makes them all lies in a very small part of the ball.}.
This is why, to present a deeper analysis of the behavior of adversarial attacks and defenses, we also present a set of experiments that use perturbation minimization attacks.

\begin{table}[htbp]
  \centering 
  \begin{tabular}{lccccccc}
    \toprule
    &  & \multicolumn{2}{c}{\textbf{Attack PGD-}$\ltwo$} & & \multicolumn{2}{c}{\textbf{Attack PGD-}$\linf$} \\
  \cmidrule{3-4} \cmidrule{6-7}
  &  & \textbf{Unprotected} & \textbf{AT-}$\linf$ & & \textbf{Unprotected} & \textbf{AT-}$\ltwo$ \\
    \midrule
    \textbf{Average $\ltwo$ norm} &   & 0.830 & 0.830 &   & 1.400 & 1.640 \\
    \textbf{Average $\linf$ norm} &   & 0.075 & 0.200 &   & 0.031 & 0.031 \\
    \bottomrule
  \end{tabular}%
  \caption{Average norms of PGD-$\ltwo$ and PGD-$\linf$ adversarial examples with and without $\linf$ adversarial training on CIFAR-10 ($d=3072$).}
  \label{table:ap4-mean_norm_pgd_attack}
\end{table}%

\paragraph{Adversarial training vs. loss maximization attacks}

To demonstrate that $\linf$ adversarial training is not robust against PGD-$\ltwo$ attacks we measure the evolution of $\ltwo$ norm of adversarial examples generated with PGD-$\linf$ between an unprotected model and a model trained with AT-$\linf$, \ie, AT where adversarial examples are generated with PGD-$\linf$ \footnote{To do so, we use the same experimental setting as in Section~\ref{section:ap4-reviewing_defenses_against_multiple_attacks} with $\epsilon_\infty$ and $\epsilon_2$ such that the volumes of the two balls are equal.}. 
Results are presented in  Table~\ref{table:ap4-mean_norm_pgd_attack}.

The analysis is unambiguous: the average $\linf$ norm of a bounded $\ltwo$ perturbation more than double between an unprotected model and a model trained with AT PGD-$\linf$.
This phenomenon perfectly reflects the illustration of Figure~\ref{figure:ap4-ball_adversarial_linf}.
The attack will generate an adversarial example on the corner of the $\linf$ ball thus increasing the $\linf$ norm while maintaining the same $\ltwo$ norm. 
We can observe the same phenomenon with AT-$\ltwo$ against PGD-$\linf$ attack (see Figure~\ref{figure:ap4-ball_adversarial_l2} and Table \ref{table:ap4-mean_norm_pgd_attack}).
PGD-$\linf$ attack increases the $\ltwo$ norm while maintaining the same $\linf$ perturbation thus generating the perturbation in the upper area. 

As a consequence, we cannot expect adversarial training $\linf$ to offer any guaranteed protection against $\ltwo$ adversarial examples .

\paragraph{Adversarial training vs. perturbation minimization attacks.}
To better capture the behavior of $\ltwo$ adversarial examples, we now study the performances of an $\ltwo$ perturbation minimization attack (C\&W) with and without AT-$\linf$.
It allows us to understand in which area C\&W discovers adversarial examples and the impact of AT-$\linf$.
In high dimensions, the red corners (see Figure~\ref{figure:ap4-ball_inclusion_adversarial_training}) are very far away from the $\ltwo$ ball.
Therefore, we hypothesize that a large proportion of the $\ltwo$ adversarial examples will remain unprotected.
To validate this assumption, we measure the proportion of adversarial examples inside of the $\ltwo$ ball before and after $\linf$ adversarial training.
The results are presented in Figure~\ref{fig:calotte} (left: without adversarial training, right: with adversarial training). 

\begin{figure}[htb]
    \centering
    \begin{tikzpicture}[scale=0.7]
    \begin{groupplot}[group style={
                        group name=myplot,
			group size= 2 by 1,
		        horizontal sep=3cm},
                      grid style=dashed,
		      ymajorgrids=true]
       
    \nextgroupplot[
       stack plots=y,
       area style,
       ytick={0,5000,1000},
       ymin=0,
       ymax=10000,
       xmin=0.3,
       xmax=16.63,
       axis x line*=bottom,
       axis y line*=left,
       xtick={2,14},
       xticklabels={\Large $\epsilon'=\epsilon\phantom{\sqrt{d}}$, \Large $\epsilon'=\epsilon\times\sqrt{d}$},
       xtick style={draw=none}]
        \addplot table [x=eps,y=linf_ball] {figures/appendix/ap4-advocating_for_multiple_defense_strategies/data/ball_l2_base.dat}\closedcycle;
        \addplot table [x=eps,y=callote] {figures/appendix/ap4-advocating_for_multiple_defense_strategies/data/ball_l2_base.dat}\closedcycle;
        \addplot table [x=eps,y=outside] {figures/appendix/ap4-advocating_for_multiple_defense_strategies/data/ball_l2_base.dat}\closedcycle;

    \nextgroupplot[
       stack plots=y,
       area style,
       ytick={0,5000,1000},
       ymin=0,
       ymax=10000,
       xmin=0.3, 
       xmax=16.63,
       axis x line*=bottom,
       axis y line*=left,
       xtick={2,14},
       xticklabels={\Large $\epsilon'=\epsilon\phantom{\sqrt{d}}$, \Large $\epsilon'=\epsilon\times\sqrt{d}$},
       xtick style={draw=none}]
        \addplot table [x=eps,y=linf_ball] {figures/appendix/ap4-advocating_for_multiple_defense_strategies/data/ball_l2_at.dat}\closedcycle;
        \addplot table [x=eps,y=callote] {figures/appendix/ap4-advocating_for_multiple_defense_strategies/data/ball_l2_at.dat}\closedcycle;
        \addplot table [x=eps,y=outside] {figures/appendix/ap4-advocating_for_multiple_defense_strategies/data/ball_l2_at.dat}\closedcycle;

    \end{groupplot}
\end{tikzpicture}
    \caption{Comparison of the number of adversarial examples found by C\&W, inside the $\linf$ ball (lower, blue area), outside the $\linf$ ball but inside the $\ltwo$ ball (middle, red area) and outside the $\ltwo$ ball (upper gray area). $\epsilon$ is set to $0.3$ and $\epsilon'$ varies along the x-axis. Left: without adversarial training, right: with adversarial training. Most adversarial examples have shifted from the $\linf$ ball to the cap of the $\ltwo$ ball, but remain at the same $\ltwo$ distance from the original example.}
    \label{fig:calotte}
\end{figure}
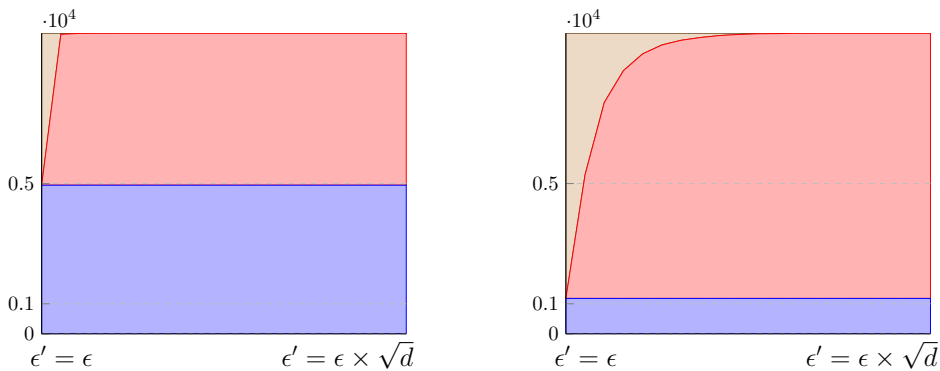

On both charts, the blue area represents the proportion of adversarial examples that are inside the $\linf$ ball.
The red area represents the adversarial examples that are outside the $\linf$ ball but still inside the $\ltwo$ ball (valid $\ltwo$ adversarial examples).
Finally, the brown-beige area represents the adversarial examples that are beyond the $\ltwo$ bound.
The radius $\epsilon'$ of the $\ltwo$ ball varies along the x-axis from $\epsilon'$ to $\epsilon' \sqrt{d}$.
On the left chart (without adversarial training) most $\ltwo$ adversarial examples generated by C\&W are inside both balls.
On the right chart most of the adversarial examples have been shifted out the $\linf$ ball.
This is the expected consequence of $\linf$ adversarial training.
However, these adversarial examples remain in the $\ltwo$ ball, \ie, they are in the cap of the $\ltwo$ ball.
These examples are equally good from the $\ltwo$ perspective.
This means that even after adversarial training, it is still easy to find good $\ltwo$ adversarial examples, making the $\ltwo$ robustness of AT-$\linf$ almost null. 

%%%%%%%%%%%%%%%%%%%%%%%%%%%%%%%%%%%%%%%%%%%%%%%%%%%%%%%%%%%%%%%%%%%%%%%%%%%%%%%%
\section{Reviewing Defenses Against Multiple Attacks}
\label{section:ap4-reviewing_defenses_against_multiple_attacks}
%%%%%%%%%%%%%%%%%%%%%%%%%%%%%%%%%%%%%%%%%%%%%%%%%%%%%%%%%%%%%%%%%%%%%%%%%%%%%%%%

\begin{table}[ht]
  \centering
  \tabcolsep=0.13cm
  {\small
  \begin{tabular}{lccccccccccccccccc}
    \toprule
     & & \multirow{2}{*}{\textbf{Baseline}} & & \multicolumn{2}{c}{\textbf{AT}} & & \multicolumn{2}{c}{\textbf{MAT}} & & \multicolumn{2}{c}{\textbf{NI}} & & \multicolumn{2}{c}{\textbf{RAT}-$\linf$} & & \multicolumn{2}{c}{\textbf{RAT}-$\ltwo$} \\
    \cmidrule{5-6} \cmidrule{8-9} \cmidrule{11-12} \cmidrule{14-15} \cmidrule{17-18}
     & & & & $\linf$ & $\ltwo$ &   & Max & Rand &   & $\mathcal{N}$ & $\mathcal{U}$ &   & $\mathcal{N}$ & $\mathcal{U}$ &   & $\mathcal{N}$ & $\mathcal{U}$ \\
    \midrule
    \textbf{Natural}     &   & 0.94 &   & 0.85 & 0.85 &   & 0.80 & 0.80 &   & 0.79 & 0.87 &   & 0.74 & 0.80 &   & 0.79 & 0.87 \\
    \textbf{PGD-$\linf$} &   & 0.00 &   & 0.43 & 0.37 &   & 0.37 & 0.40 &   & 0.23 & 0.22 &   & 0.35 & 0.40 &   & 0.23 & 0.22 \\
    \textbf{PGD-$\ltwo$} &   & 0.00 &   & 0.37 & 0.52 &   & 0.50 & 0.55 &   & 0.34 & 0.36 &   & 0.43 & 0.39 &   & 0.34 & 0.37 \\
    \bottomrule
  \end{tabular}%
  }
  \caption{Comprehensive list of results consisting of the accuracy of several defense mechanisms against $\ltwo$ and $\linf$ attacks.}
  \label{table:ap4-results}
\end{table}%

Adversarial attacks have been an active topic in the machine learning community since their discovery~\cite{globerson2006nightmare, biggio2013evasion,szegedy2013intriguing}.
Many attacks have been developed.
Most of them solve a loss maximization problem with either $\linf$~\cite{goodfellow2014explaining,kurakin2016adversarial,madry2018towards}, $\ltwo$~\cite{carlini2017towards,kurakin2016adversarial,madry2018towards}, $\lone$~\cite{tramer2019adversarial} or $\lzero$~\cite{papernot2016limitations} surrogate norms.
As we showed, these norms are really different in high dimension.
Hence, defending against one norm-based attack is not sufficient to protect against another one. 
In order to solve this problem, we review several strategies to build defenses against multiple adversarial attacks.
These strategies are based on the idea that both types of defense must be used simultaneously in order for the classifier to be protected against multiple attacks.

%%%%%%%%%%%%%%%%%%%%%%%%%%%%%%%%%%%%%%%%%%%%%%%%%%%%%%%%%%%%%%%%%%%%%%%%%%%%%%%
\subsection{Experimental Setting}
\label{section:ap4-experimental_settings}
%%%%%%%%%%%%%%%%%%%%%%%%%%%%%%%%%%%%%%%%%%%%%%%%%%%%%%%%%%%%%%%%%%%%%%%%%%%%%%%

To compare the robustness provided by the different defense mechanisms, we use strong adversarial attacks and a conservative setting: the attacker has a total knowledge of the parameters of the model (white-box setting) and we only consider untargeted attacks  (a misclassification from one target to any other will be considered as adversarial).
To evaluate defenses based on Noise Injection, we use \emph{Expectation Over Transformation} (EOT), the rigorous experimental protocol  proposed by~\citet{athalye2017synthesizing} and later used by~\citet{athalye2018obfuscated,carlini2019evaluating} to identify flawed defense mechanisms. 

To attack the models, we use state-of-the-art algorithms PGD.
We run PGD with 20 iterations to generate adversarial examples and with 10 iterations when it is used for adversarial training.
The maximum $\linf$ bound is fixed to $0.031$ and the maximum $\ltwo$ bound is fixed to $0.83$.
We chose these values so that the $\linf$ and the $\ltwo$ balls have similar volumes.
Note that $0.83$ is slightly above the values typically used in previous publications in the area, meaning the attacks are stronger, and thus  more difficult to defend against.

All experiments are conducted on CIFAR-10 with the Wide-Resnet 28-10 architecture.
We use the training procedure and the hyper-parameters described in the original paper by~\citet{zagoruyko2016wide}.
Training time varies from 1 day (AT) to 2 days (MAT) on 4 GPUs-V100 servers.

%%%%%%%%%%%%%%%%%%%%%%%%%%%%%%%%%%%%%%%%%%%%%%%%%%%%%%%%%%%%%%%%%%%%%%%%%%%%%%%
\subsection{MAT -- Mixed Adversarial Training}
\label{subsection:ap4-mixed_adversarial_training}
%%%%%%%%%%%%%%%%%%%%%%%%%%%%%%%%%%%%%%%%%%%%%%%%%%%%%%%%%%%%%%%%%%%%%%%%%%%%%%%

Earlier results have shown that AT-$\lp$ improves the robustness against corresponding $\lp$-bounded adversarial examples, and the experiments we present in this section corroborate this observation (See Table~\ref{table:ap4-results}, column: AT).
Building on this, it is natural to examine the efficiency of \emph{Mixed Adversarial Training} (MAT) against mixed $\linf$ and $\ltwo$ attacks.
MAT is a variation of AT that uses both $\linf$-bounded adversarial examples and $\ltwo$-bounded adversarial examples as training examples.
As discussed by~\citet{tramer2019adversarial}, there are several possible strategies to mix the adversarial training examples.
The first strategy (MAT-Rand) consists in randomly selecting one adversarial example among the two most damaging $\linf$ and $\ltwo$, and to use it as a training example:

\paragraph{MAT-Rand}:
\begin{equation} \label{equation:ap4-mat_rand}
  \min_{\Omega} \Ebb_{(\xvec, y) \sim \Dset} \left[\Ebb_{p \sim \Uset({\{2, \infty\})}} \max_{\norm{\adv}_p \leq \epsilon} L \left( N_{\Omega}(\xvec+\adv), y \right) \right].
\end{equation}

An alternative strategy is to systematically train the model with the most damaging adversarial example ($\linf$ or $\ltwo$):
\paragraph{MAT-Max}:
\begin{equation} \label{equation:ap4-mat_max}
  \min_{\Omega} \Ebb_{(\xvec, y) \sim \Dset} \left[ \max_{p \in \{2, \infty\}} \max_{\norm{\adv}_p \leq \epsilon} L \left( N_{\Omega}(\xvec+\adv), y \right) \right].
\end{equation}

The accuracy of MAT-Rand and MAT-Max are reported in~\Cref{table:ap4-results} (Column: MAT).
As expected, we observe that MAT-Rand and MAT-Max offer better robustness both against PGD-$\ltwo$ and PGD-$\linf$ adversarial examples than the original AT does.
More  generally, we can see that AT is a good strategy against loss maximization attacks, and thus it is not surprising that MAT is a good strategy against mixed loss maximization attacks.
However efficient in practice, MAT (for the same reasons as AT) lacks theoretical arguments.
In order to get the best of both worlds, \citet{salman2019provably} proposed to mix adversarial training with randomization.

%%%%%%%%%%%%%%%%%%%%%%%%%%%%%%%%%%%%%%%%%%%%%%%%%%%%%%%%%%%%%%%%%%%%%%%%%%%%%%%
\subsection{RAT -- Randomized Adversarial Training}
\label{subsection:ap4-randomized_adversarial_training}
%%%%%%%%%%%%%%%%%%%%%%%%%%%%%%%%%%%%%%%%%%%%%%%%%%%%%%%%%%%%%%%%%%%%%%%%%%%%%%%

We now examine the performance of Randomized Adversarial Training (RAT) first introduced by~\citet{salman2019provably}.
This technique mixes Adversarial Training with Noise Injection.
The corresponding loss function is defined as follows:
\begin{equation}
  \min_{\Omega} \Ebb_{(\xvec, y) \sim \Dset} \left[ \max_{\norm{\tau}_p \leq \epsilon} L\left( \tilde{N}_{\Omega}(\xvec+\tau), y)  \right) \right].
\end{equation}
where $\tilde{N}_\Omega$ is a randomized neural network with noise injection as described in Appendix~\ref{appendix:ap3-theoretical_evidence_for_adversarial_robustness_through_randomization}, and $\norm{\ \cdot\ }_p$ define which kind of AT is used.
For each setting, we consider two noise distributions, Gaussian and Uniform as we did with NI.
We also consider two different Adversarial training AT-$\linf$ as well as AT-$\ltwo$. 

The results of RAT are reported in Table~\ref{table:ap4-results}~(Columns: RAT-$\linf$ and RAT-$\ltwo$).
We can observe that RAT-$\linf$ offers the best extra robustness with both noises, which is consistent with previous experiments, since AT is generally more effective against $\linf$ attacks whereas NI is more effective against $\ltwo$-attacks.
Overall, RAT-$\linf$ and a noise from uniform distribution offers the best performances but is still weaker than MAT-Rand.
These results are also consistent with the literature, since adversarial training (and its variants) is the best defense against adversarial examples so far.

%%%%%%%%%%%%%%%%%%%%%%%%%%%%%%%%%%%%%%%%%%%%%%%%%%%%%%%%%%%%%%%%%%%%%%%%%%%%%%%
\section{Concluding Remarks}
\label{section:ap4-conclusion}
%%%%%%%%%%%%%%%%%%%%%%%%%%%%%%%%%%%%%%%%%%%%%%%%%%%%%%%%%%%%%%%%%%%%%%%%%%%%%%%

In this chapter, we tackled the problem of protecting neural networks against multiple attacks crafted from different norms.
We demonstrated and gave a geometrical interpretation to explain why most defense mechanisms can only protect against one type of attack.
Then we reviewed existing strategies that mix defense mechanisms in order to build models that are robust against multiple adversarial attacks.
We conduct a rigorous and full comparison of \emph{Randomized Adversarial Training} and \emph{Mixed Adversarial Training} as defenses against multiple attacks.

We could argue that both techniques offer benefits and limitations.
We have observed that MAT offers the best empirical robustness against multiples adversarial attacks but this technique is computationally expensive which hinders its use in large-scale applications.
Randomized techniques have the important advantage of providing theoretical guarantees of robustness and being computationally cheaper.
However, the certificate provided by such defenses is still too small for strong attacks.
Furthermore, certain Randomized defenses also suffer from the curse of dimensionality as recently shown by~\citet{kumar2020curse}. 

Although, randomized defenses based on noise injection seem limited in terms of accuracy under attack and scalability, they could be improved either by Learning the best distribution to use or by leveraging different types of randomization such as discrete randomization first proposed by~\citet{pinot2020randomization}.
We believe that these certified defenses are the best solution to ensure the robustness of classifiers deployed into real-world applications.

    %%%%%%%%%%%%%%%%%%%%%%%%%%%%%%%%%%%%%%%%%%%%%%%%%%%%%%%%%%%%%%%%%%%%%%%%%%%%%%%
\chapter{Résumé de la thèse en Français}
\label{appendix:ap7-resume_de_la_thèse_en_français}
%%%%%%%%%%%%%%%%%%%%%%%%%%%%%%%%%%%%%%%%%%%%%%%%%%%%%%%%%%%%%%%%%%%%%%%%%%%%%%%

\begingroup
\etocsettocstyle{
  \addsec*{Contenus \\ \vspace{-0.5cm}
    \rule{\textwidth}{\tocrulewidth}
    \vspace{-1cm plus0mm minus0mm}
  }
}{
  \noindent\rule{\linewidth}{\tocrulewidth}
}
\localtoc
\endgroup

%%%%%%%%%%%%%%%%%%%%%%%%%%%%%%%%%%%%%%%%%%%%%%%%%%%%%%%%%%%%%%%%%%%%%%%%
\section{Introduction}
\label{section:ap7-introduction}
%%%%%%%%%%%%%%%%%%%%%%%%%%%%%%%%%%%%%%%%%%%%%%%%%%%%%%%%%%%%%%%%%%%%%%%%

%%%%%%%%%%%%%%%%%%%%%%%%%%%%%%%%%%%%%%%%%%%%%%%%%%%%%%%%%%%%%%%%%%%%%%%%%%%%%%%
\subsection{Contexte et Motivation}
\label{subsection:ap7-context_and_motivation}
%%%%%%%%%%%%%%%%%%%%%%%%%%%%%%%%%%%%%%%%%%%%%%%%%%%%%%%%%%%%%%%%%%%%%%%%%%%%%%%

\begin{figure}[t]
  \centering
  \includegraphics[scale=0.2]{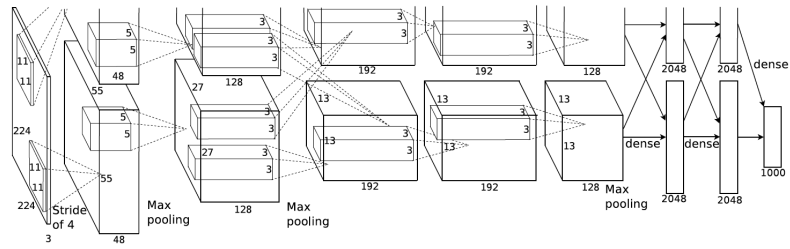}
  % \caption{The neural network architecture (AlexNet) proposed by~\citet{krizhevsky2012imagenet} which won the ImageNet Large-Scale Visual Recognition Challenge in 2012.}
  \caption{L'architecture de réseau de neurones convolutifs (AlexNet), proposée par ~\citet{krizhevsky2012imagenet}, qui a remporté la compétition de reconnaissance d'image ImageNet en 2012.}
  \label{figure:ap7-alexnet_network}
\end{figure}

% One of the most remarkable breakthroughs of Deep Learning happened in 2012 during the ImageNet Large-Scale Visual Recognition Challenge~\cite{russakovsky2015imagenet}.
% The challenge aims at evaluating different algorithms for object detection and image classification.
% In 2012, \citeauthor{krizhevsky2012imagenet} obtained \nth{1} place and beat every other participant by a 10.8\% margin with a neural network architecture called \textbf{AlexNet}.
% The main reasons for this success are twofold.
% First, they used a convolutional neural network (CNN) with more than 60 million parameters which was one of the largest models of the time.
% Secondly, they designed a specific architecture to exploit dual programmable graphics processing units (GPUs) to speed up the arithmetic operations, which enabled them to significantly reduce training time.
% The \Cref{figure:ap7-alexnet_network} shows the AlexNet architecture which consists of five convolution layers with two fully connected layers at the end.
% The figure also shows the distribution of the workload between the two GPUs.

L'une des percées les plus remarquables de l'apprentissage profond s'est produite en 2012 lors de la compétition de reconnaissance d'image ImageNet~\cite{russakovsky2015imagenet}.
Cette compétition vise à évaluer différents algorithmes pour la détection d'objets et la classification d'images.
En 2012, \citeauthor{krizhevsky2012imagenet} ont obtenu la première place et ont battu tous les autres participants avec une marge de plus de 10,8\% grâce à un réseau de neurones appelé AlexNet.
Les raisons principales de ce succès sont doubles.
Premièrement, ils ont utilisé un réseau de neurones convolutif (CNN) avec plus de 60 millions de paramètres, qui était l'un des plus grands modèles de l'époque.
Deuxièmement, ils ont conçu une architecture spécifique pour exploiter deux cartes graphiques en parallèle (GPU) afin d'accélérer les opérations arithmétiques, ce qui leur a permis de réduire considérablement le temps d'apprentissage du réseau.
La figure~\ref{figure:ap7-alexnet_network} montre un schéma de l'architecture d'AlexNet qui se compose de cinq couches convolutives avec deux des couches entièrement connectées à la fin.
La figure montre également la répartition de la charge de travail entre les deux GPU.

\begin{table}[t]
  \selectlanguage{french}
  \centering
  \sisetup{
    table-number-alignment = center,
    table-space-text-pre = \ \ \ ,
  }
  \begin{subfigure}[b]{\textwidth}
    \centering
    \begin{tabular}{
      L{5cm}
      L{3.5cm}
      S[table-format=3.0, table-text-alignment=left]@{\,}
      s[table-unit-alignment=left]
      c
    }
      \toprule
      \textbf{Auteurs} & \textbf{Modèles} & \multicolumn{2}{c}{\textbf{\#Params}} & \textbf{TOP-5 Préc.} \\
      \midrule
      \citet{krizhevsky2012imagenet} & AlexNet             &  61 & \si{M} & 84.7\% \\
      \citet{simonyan2014very}       & VGG                 & 144 & \si{M} & 92.0\% \\
      \citet{he2016deep}             & ResNet-152          &  60 & \si{M} & 93.8\% \\
      \citet{szegedy2017inception}   & Inception-ResNet-v2 &  56 & \si{M} & 95.1\% \\
      \citet{xie2017aggregated}      & ResNeXt-101         &  84 & \si{M} & 95.6\% \\
      \citet{hu2018squeeze}          & SENet               & 146 & \si{M} & 96.2\% \\
      \citet{real2019regularized}    & AmoebaNet-A         & 469 & \si{M} & 96.7\% \\
      \citet{huang2019gpipe}         & AmoebaNet-B         & 556 & \si{M} & 97.0\% \\
      \bottomrule
    \end{tabular}
    \caption{Modèles de reconnaissance d'images}
    \label{table:ap7-networks_parameters_cv}
  \end{subfigure}
  \par\bigskip
  \begin{subfigure}[b]{\textwidth}
    \centering
    \begin{tabular}{
      L{4.5cm}
      L{4.5cm}
      S[table-format=3.0, table-text-alignment=left]@{\,}
      s[table-unit-alignment=left]
    }
      \toprule
      \textbf{Auteurs} & \textbf{Modèles} & \multicolumn{2}{c}{\textbf{\#Params}} \\
      \midrule
      \citet{peters2018deep}         & ELMo            &  94 & \si{M} \\
      \citet{radford2018improving}   & GPT             & 110 & \si{M} \\
      \citet{devlin2019bert}         & BERT            & 340 & \si{M} \\
      \citet{yang2019xlnet}          & XLNet (Large)   & 340 & \si{M} \\
      \citet{liu2019roberta}         & RoBERTa (Large) & 355 & \si{M} \\
      \citet{radford2019language}    & GPT-2           &   1 & \si{B} \\
      \citet{shoeybi2019megatron}    & MegatronLM      &   8 & \si{B} \\
      \citet{raffel2020exploring}    & T5-11B          &  11 & \si{B} \\
      \citet{rosset2020turingnlg}    & T-NLG           &  17 & \si{B} \\
      \citet{brown2020language}      & GPT-3           & 175 & \si{B} \\
      \citet{fedus2021switch}        & Switch Transformers & 1 & \si{T} \\
      \bottomrule
    \end{tabular}
    \caption{Modèles de traitement automatique des langues}
    \label{table:ap7-networks_parameters_nlp}
  \end{subfigure}
  \par\bigskip
  % \caption{Evolution of the number of parameters for Computer Vision and Natural Language Processing models developed in the years after AlexNet.}
  \caption{Évolution du nombre de paramètres des modèles de reconnaissance d'image et de traitement du langage naturel développés dans les années qui ont suivi l'architecture AlexNet.}
  \label{table:ap7-networks_parameters}
\end{table}

% Following this result, many architectures with an increasing number of parameters have been developed.
% This growth in the number of parameters has led to an increase in accuracy, exceeding even human performance, on the ImageNet dataset~\cite{he2015delving}.
% \Cref{table:ap7-networks_parameters} shows a list of the different state-of-the-art architectures along with their size and accuracy.
% As we can see, the accuracy of the models generally improves at the cost of the model size.
% For computer vision models, \citet{tan2019efficientnet} have shown that the relationship between model size and accuracy seems to obey a power law.
% This relationship has also been observed for Natural Language Processing (NLP) neural networks \cite{rosenfeld2020a,kaplan2020scaling} aided by the availability of large-scale datasets such as the Common Crawl dataset~\cite{raffel2020exploring} which constitutes nearly a trillion words.

Après l'introduction d'AlexNet, de nombreuses architectures avec un nombre croissant de paramètres ont été développées.
Cette augmentation du nombre de paramètres a conduit à une augmentation de la précision des modèles, dépassant même les performances humaines, sur l'ensemble de données d'ImageNet~\cite{he2015delving}.
Le Tableau~\ref{table:ap7-networks_parameters} montre une liste des différentes architectures de pointe avec leur taille et leur précision.
Comme on peut le voir, la précision des modèles s'améliore généralement au prix de la taille du modèle.
Pour les modèles de vision par ordinateur, \citet{tan2019efficientnet} ont montré que la relation entre la taille du modèle et la précision semble obéir à une loi de puissance.
Cette relation a également été observée pour les réseaux neuronaux de traitement du langage naturel (NLP) \cite{rosenfeld2020a,kaplan2020scaling} aidés par la disponibilité de larges ensembles de données tels que le Common Crawl~\cite{raffel2020exploring} qui constitue près d'un trillion de mots.

% As a result of their size and improved accuracy, deep neural networks now achieve state-of-the-art performances in a variety of domains such as image recognition~\cite{lecun1998gradient,krizhevsky2012imagenet,he2016deep,tan2019efficientnet}, object detection~\cite{redmon2016you,liu2016ssd,redmon2017yolo9000}, natural language processing~\cite{merity2016pointer,vaswani2017attention,radford2019language,brown2020language}, speech recognition~\cite{hinton2012deep,abdel2014convolutional,yu2016automatic}, health care \cite{faust2018deep} etc.
% Specifically, computer vision and natural language processing models have achieved sufficient performance for being used in real-world applications such as autonomous vehicles~\cite{fagnant2015preparing,sharma2021automating}, translation~\cite{wu2016google}, vocal assistants~\cite{li2017acoustic}, etc.

Grâce à leur taille et à leur précision accrue, les réseaux de neurones profonds atteignent désormais des performances de pointe dans divers domaines tels que la reconnaissance d'images~\cite{lecun1998gradient,krizhevsky2012imagenet,he2016deep,tan2019efficientnet}, la détection d'objets~\cite{redmon2016you, liu2016ssd,redmon2017yolo9000}, le traitement du langage naturel~\cite{merity2016pointer,vaswani2017attention,radford2019language,brown2020language}, speech recognition~\cite{hinton2012deep,abdel2014convolutional,yu2016automatic}, le domaine de la santé \cite{faust2018deep} etc.
Les modèles de vision par ordinateur et de traitement du langage naturel ont atteint des performances suffisantes pour être utilisés dans des applications du monde réel telles que les véhicules autonomes~\cite{fagnant2015preparing,sharma2021automating}, la traduction~\cite{wu2016google}, les assistants vocaux~\cite{li2017acoustic}, etc.

Cependant, la précision des modèles ne devrait pas être la seule préoccupation, lorsqu'ils sont mis en œuvre dans un processus de décision critique, les réseaux de neurones doivent être compacts, efficaces et sécurisés.
Bien que précis, les grands réseaux de neurones n'ont souvent pas ces propriétés.
En effet, l'entraînement de modèles de pointe sur des tâches de reconnaissance d'image ou de traitement du langage naturel nécessite des gigaoctets de mémoire et peut prendre plusieurs mois sur un seul GPU~\cite{krizhevsky2012imagenet,brown2020language}.
Par exemple, le modèle GPT-3 proposé par~\citet{brown2020language}, culmine à 175 milliards de paramètres et l'entraînement durerait 355 ans sur un seul GPU et coûterait \$\numprint{4600000} sur une plateforme de cloud computing \cite{li2020overview}.
Il a également été estimé par \citet{strubell2019energy} que la formation et le développement du modèle Transformer proposé par~\citet{vaswani2017attention} avec l'optimisation des hyperparamètres émettraient environ \numprint{284019} kg de $\mathrm{CO}_2$ alors qu'une vie humaine consomme en moyenne seulement \numprint{5000} kg de $\mathrm{CO}_2$ pendant un an. 
En outre, avec l'essor des smartphones et des objets connectés aux ressources de calcul et de mémoire limitées, les réseaux de neurones doivent également être efficaces pendant la phase d'inférence, c'est-à-dire, la phase d'exécution du modèle.
De plus, avec la préoccupation croissante concernant la confidentialité des données, des méthodes telles que l'``apprentissage collaboratif'' gagnent du terrain.
L'apprentissage collaboratif consiste à entraîner un modèle sur plusieurs appareils décentralisés (par exemple les smartphones) avec des échantillons de données locales.
Cela permet d'éviter l'étape de centralisation de toutes les données des utilisateurs sur un seul serveur, ce qui permet de résoudre le problème de la confidentialité des données.
Ainsi, la construction de réseaux de neurones compacts et efficaces reste un objectif important afin de réduire le temps d'entraînement, de diminuer les coûts et de permettre une R\&D plus rapide.

% In addition to being compact and cost-effective, neural networks also need to be secure.
% Due to a high complexity and expressivity, large neural networks exhibit instability to small perturbations of their inputs.
% Unstable neural networks tend to be vulnerable to \emph{adversarial examples}, \ie, imperceptible variations of natural examples, crafted to deliberately mislead the models~\cite{globerson2006nightmare,biggio2013evasion,szegedy2013intriguing}.
% The \Cref{figure:ap7-adversarial_image_example} gives an example of an adversarial attack on an image.
% The small perturbation (center) is added to the original image (left) leading to an adversarial image (right).
% This behavior can cause serious security problems when neural networks are used for critical decision-making (\eg, judicial decision, self-driving cars, etc.).

En plus d'être compacts et efficients, les réseaux de neurones doivent également être sécurisés.
En raison de leur grande complexité et expressivité, les larges réseaux de neurones sont instables aux petites perturbations.
Ainsi, cette instabilité mène à des vulnérabilités face aux \emph{exemples antagonistes}, c'est-à-dire aux variations imperceptibles des exemples naturels, conçus pour tromper délibérément les modèles~\cite{globerson2006nightmare,biggio2013evasion,szegedy2013intriguing}.
La Figure~\ref{figure:ap7-adversarial_image_example} présente un exemple antagoniste sur une image.
La petite perturbation (au centre) est ajoutée à l'image originale (à gauche), ce qui donne une image contradictoire (à droite).
Ce comportement peut causer de graves problèmes de sécurité lorsque des réseaux neuronaux sont utilisés pour des prises de décisions critiques (par exemple, les décisions judiciaires, les voitures autonomes, etc.).

% This thesis focuses on the problem of training neural networks which are not only accurate but also compact, easy to train, reliable and robust to adversarial examples.

Cette thèse se concentre sur l'entraînement de réseaux de neurones qui sont non seulement précis, mais aussi compacts, efficients, faciles à entraîner, fiables et robustes aux exemples antagonistes.

%%%%%%%%%%%%%%%%%%%%%%%%%%%%%%%%%%%%%%%%%%%%%%%%%%%%%%%%%%%%%%%%%%%%%%%%%%%%%%%
\subsection{Problématiques et Contributions}
\label{subsection:ap7-problem_statement_and_contributions}
%%%%%%%%%%%%%%%%%%%%%%%%%%%%%%%%%%%%%%%%%%%%%%%%%%%%%%%%%%%%%%%%%%%%%%%%%%%%%%%

\begin{figure}[t]
   \centering
   \begin{subfigure}[t]{0.24\textwidth}
       \centering
       \begin{equation*}
	  \leftmatrix
	    a &   &   &   \\
	      & b &   &   \\
	      &   & c &   \\
	      &   &   & d
	  \rightmatrix
       \end{equation*}
       \caption*{diagonal}
   \end{subfigure}
   \hfill
   \begin{subfigure}[t]{0.24\textwidth}
       \centering
       \begin{equation*}
	  \leftmatrix
	    a & b & c & d \\
	    e & a & b & c \\
	    f & e & a & b \\
	    g & f & e & a
	  \rightmatrix
       \end{equation*}
       \caption*{Toeplitz}
   \end{subfigure}
   \hfill
   \begin{subfigure}[t]{0.24\textwidth}
       \centering
       \begin{equation*}
	  \leftmatrix
	    ae & af & ag & ah \\
	    be & bf & bg & bh \\
	    ce & cf & cg & ch \\
	    de & df & dg & dh
	  \rightmatrix
       \end{equation*}
       \caption*{Low Rank}
   \end{subfigure}
   \hfill
   \begin{subfigure}[t]{0.24\textwidth}
       \centering
       \begin{equation*}
	  \leftmatrix
	    a & a^2 & a^3 & a^4 \\
	    b & b^2 & b^3 & b^4 \\
	    c & c^2 & c^3 & c^4 \\
	    d & d^2 & d^3 & d^4
	  \rightmatrix
       \end{equation*}
       \caption*{Vandermonde}
   \end{subfigure}
  \caption{Exemples de matrices structurées.}
  \label{figure:ap7-example_structure_matrices}
\end{figure}

% Neural networks, which find their roots in the work of \citet{mcculloch1943logical,rosenblatt1958perceptron}, can be analytically described as a composition of multi-dimensional linear functions interlaced with nonlinear functions (also called activation functions).
% More formally, a neural network is a function $N_{\Omega} : \Rbb^n \rightarrow \Rbb^m$ parameterized by a set of weights $\Omega$ of the form:
% \begin{equation} \label{equation:ap7-neural_network}
%   N_{\Omega}(\xvec) = \psi^{(\depth)} \circ \rho \circ \psi^{(\depth-1)} \cdots \circ \psi^{(2)} \circ \rho \circ \psi^{(1)} (\xvec) \enspace.
% \end{equation}
% Here, $\depth$ corresponds to the \emph{depth} of the network (\ie, the number of layers) and $\rho$ is a nonlinear function.
% Finally, each $\psi^{(i)}$ is a multi-dimensional linear function $\psi^{(i)}: \xvec \mapsto \Wmat^{(i)} \xvec + \bvec^{(i)}$ parameterized by a weight matrix $\Wmat^{(i)}$ and a bias vector $\bvec^{(i)}$ and $\Omega$ is the union of the parameters of each layer. 

Les réseaux de neurones, qui trouvent leurs racines dans les travaux de \citet{mcculloch1943logical,rosenblatt1958perceptron}, peuvent être décrits analytiquement comme une composition de fonctions linéaires entrelacées avec des fonctions non linéaires (également appelées fonctions d'activation).
Plus formellement, un réseau de neurones est une fonction $N_{\Omega} : \Rbb^n \rightarrow \Rbb^m$ paramétrée par un ensemble de poids $\Omega$ de la forme:
\begin{equation} \label{equation:ap7-neural_network}
  N_{\Omega}(\xvec) = \psi^{(\depth)} \circ \rho \circ \psi^{(\depth-1)} \cdots \circ \psi^{(2)} \circ \rho \circ \psi^{(1)} (\xvec) \enspace.
\end{equation}
Ici, $\depth$ correspond à la \emph{profondeur} du réseau (c'est-à-dire, le nombre de couches) et $\rho$ est une fonction non linéaire.
Enfin, chaque $\psi^{(i)}$ est une fonction linéaire multidimensionnelle $\psi^{(i)} : \xvec \mapsto \Wmat^{(i)} \xvec + \bvec^{(i)}$ paramétrée par une matrice de poids $\Wmat^{(i)}$ et un biais $\bvec^{(i)}$ et $\Omega$ est l'union des paramètres de chaque couche.

% Classical neural networks typically have a large number of parameters to train.
% If they have no restrictions on the weight matrices $\Wmat^{(i)}$, the layers are said to be \emph{fully connected}.
% Typically, fully connected neural networks have a large number of parameters.
% For example, a fully connected neural network with $\depth$ layers and $n$ neurons on each layer ($\Wmat^{(i)} \in \Rbb^{n \times n}$) will have $\bigO\left(pn (n + 1)\right)$ parameters.
% Since the input and output dimensions are generally large (\eg, ImageNet has an input dimension of $224^2 \times 3$ and an output of 1000), simple fully connected neural networks with few layers accumulate over hundreds of millions of parameters.
% Generally, this type of neural network has been shown to perform poorly due to a large search space.
% Moreover, they are computationally expensive, which makes them impractical for a number of use cases (smartphones, IoT devices, etc.).
% To reduce the number of parameters on each layer, researchers have devised specific linear operations that reduce the number of parameters and have better properties for the problem at hand.

% Les réseaux de neurones classiques ont généralement un grand nombre de paramètres à entraîner.
Si les réseaux de neurones n'ont pas de restriction sur les matrices de poids $\Wmat^{(i)}$, on dit que les couches sont \emph{entièrement connectées}.
En règle générale, les réseaux de neurones entièrement connectés ont un grand nombre de paramètres.
Par exemple, un réseau de neurones entièrement connecté avec $\depth$ couches et des $n$ neurones sur chaque couche ($\Wmat^{(i)} \in \Rbb^{n \times n}$) aura $\bigO\left(pn (n + 1)\right)$ paramètres.
Comme les dimensions d'entrée et de sortie sont généralement importantes (par exemple, le jeu de données ImageNet a une dimension d'entrée de $224^2 \times 3$ et une sortie de 1000), les réseaux de neurones entièrement connectés avec peu de couches peuvent facilement accumuler des centaines de millions de paramètres.
Il a été montré que ce type de réseau est peu performant, car l'entraînement n'optimise pas suffisamment bien les paramètres en raison d'un grand espace de recherche.
En outre, l'entraînement est long et complexe, ce qui les rend peu pratiques pour un certain nombre de cas d'usage (smartphones, objets connectés, etc.).
Pour réduire le nombre de paramètres sur chaque couche, de nombreux chercheurs ont mis au point des opérations linéaires spécifiques qui réduisent le nombre de paramètres et ont de nombreuses propriétés intéressantes.

\begin{figure}[t]
  \centering
  \includegraphics[width=\textwidth]{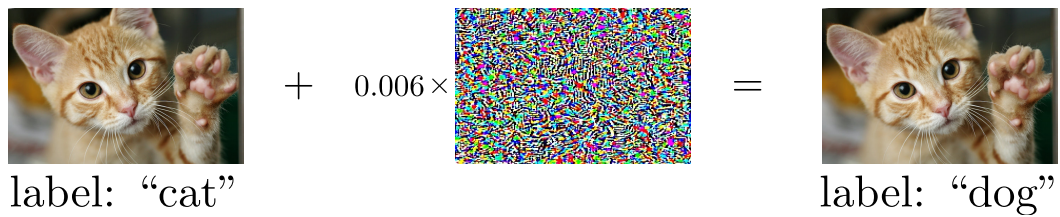}
  % \caption{Example of Adversarial Attack on an image.}
  \caption{Exemple d'exemple antagoniste avec une image.}
  \label{figure:ap7-adversarial_image_example}
\end{figure}

% An example of widely used neural networks with specialized and more compact linear operations are \emph{Convolutional Neural Networks} (CNN)~\cite{lecun1998gradient,krizhevsky2012imagenet,he2016deep,tan2019efficientnet} which achieve state-of-the-art results for computer vision tasks.
% Convolutional neural networks use specific weight matrices which encode the translation invariant property often desirable to process images.
% Whereas a classical linear layer with a dense matrix will have $n \times n$ parameters, a convolution layer only has $k \times k$ parameters where $k \ll n$ is the kernel size and is usually small (\eg, 3 or 5 for classical convolution layers).
% A convolutional neural network is the most common type of \emph{structured} neural network.
% Indeed, the convolution operation can be represented by a structured matrix \ie, a matrix that can be represented with less than $n^2$ parameters.

% Un exemple de réseaux neuronaux largement utilisés avec des opérations linéaires spécialisées et plus compactes est \emph{Convolutional Neural Networks} (CNN)~\cite{lecun1998gradient,krizhevsky2012imagenet,he2016deep,tan2019efficientnet} qui obtient des résultats de pointe pour les tâches de vision par ordinateur.
Les réseaux de neurones convolutifs (CNN), qui utilisent des opérations linéaires spécialisées et plus compactes, sont considérés comme l'état de l'art concernant les tâches de vision par ordinateur~\cite{lecun1998gradient,krizhevsky2012imagenet,he2016deep,tan2019efficientnet}. 
Ces réseaux neuronaux convolutifs utilisent des matrices de poids spécifiques qui permettent une invariance du traitement par translation, ce qui est souhaitable pour traiter des images.
Alors qu'une couche linéaire classique avec une matrice dense a $n \times n$ paramètres, une couche convolutionnelle n'a que $k \times k$ paramètres avec $k$ qui correspond à la taille du noyau et qui est généralement petite (3 ou 5 pour les couches convolutionnelles classiques).
Un réseau neuronal convolutif est le type le plus courant de réseau neuronal \emph{structuré}.
En effet, l'opération de convolution peut être représentée par une matrice structurée, c'est-à-dire, une matrice qui peut être représentée avec moins de $n^2$ de paramètres.

% In addition to offering a more compact representation, the structure of certain matrices can be exploited to obtain better algorithms for the matrix-vector product, thus optimizing memory and computing operations.
% Based on the success of convolutional neural networks, researchers have studied and proposed other types of neural networks based on weight matrices with different structures~\cite{moczulski2016acdc,sindhwani2015structured}.
% \Cref{figure:ap7-example_structure_matrices} shows different types of structured matrices that have been used for deep learning.
% Although convolutional neural networks have been state-of-the-art for computer vision tasks, it remains unclear whether other types of structured neural networks can be beneficial to other types of applications and which type of structure can provide both accuracy and efficient computation.

En plus d'offrir une représentation plus compacte, la structure de certaines matrices peut être exploitée afin d'obtenir de meilleurs algorithmes pour multiplier la matrice avec un vecteur, cela permet d'optimiser la mémoire et de réduire le nombre d'opérations réalisées.
En se basant sur le succès des réseaux de neurones convolutifs, les chercheurs ont étudié et proposé d'autres types de réseaux basés sur des matrices de poids avec différentes structures~\cite{moczulski2016acdc,sindhwani2015structured}.
La Figure~\ref{figure:ap7-example_structure_matrices} montre différents types de matrices structurées qui ont été utilisées pour l'apprentissage profond.
Bien que les réseaux de neurones convolutifs soient état de l'art pour les tâches de vision par ordinateur, il reste à savoir si d'autres types de réseaux structurés pourraient être utiles à d'autres types d'applications et quel type de structure pourraient fournir à la fois précision et efficacité de calcul.

% The contributions of this thesis lie at the intersection of linear algebra, Fourier analysis and deep learning.
% As a result, we build compact and secure neural networks by leveraging the properties of structured matrices from the Toeplitz family.
% Hereafter, we detail our contributions.

Les contributions de cette thèse se situent à l'intersection de l'algèbre linéaire, l'analyse de Fourier et de l'apprentissage profond.
En conséquence, nous construisons des réseaux de neurones compacts et sécurisés en exploitant les propriétés des matrices structurées issues de la famille de Toeplitz.
Ci-après, nous détaillons nos contributions.

%%%%%%%%%%%%%%%%%%%%%%%%%%%%%%%%%%%%%%%%%%%%%%%%%%%%%%%%%%%%%%%%%%%%%%%%%%%%%%%%
\subsubsection{Entraînement de Réseaux de Neurones Compacts}
\label{subsubsection:ap7-training_compact_neural_networks}

Comme première contribution, nous étudions les réseaux de neurones dans lesquels les matrices de poids sont le produit des matrices diagonales et circulantes.
Les matrices circulantes sont un type particulier de matrice de Toeplitz.
Cette nouvelle architecture compacte permet de remplacer les réseaux de neurones entièrement connectés tout en maintenant la performance.
Outre une analyse théorique de leur expressivité, nous introduisons de nouvelles techniques pour l'entraînement de ces modèles : nous concevons une procédure d'initialisation et proposons une utilisation intelligente des fonctions de non-linéarité afin de faciliter leur entraînement.
Nous montrons que ces modèles sont plus précis que les autres approches structurées tout en nécessitant deux fois moins de poids que les meilleures approches.
Enfin, nous entraînons des réseaux de neurones profonds basés sur des matrices diagonales et circulantes sur un ensemble de données de classification vidéo qui contient plus de 3.8 millions d'exemples.

L'analyse expérimentale des réseaux de neurones profonds basée sur des matrices diagonales et circulantes sur l'ensemble de données de classification vidéo a été publiée dans le cadre de l'\textbf{\color{mydarkblue} Atelier sur la reconnaissance de vidéo de la Conférence Européenne de Vision par Ordinateur}.
Ce travail a été réalisé dans le cadre de la compétition \yt organisée par Google.
Ensuite, l'analyse théorique de l'expressivité de ces réseaux a été publiée dans un deuxième article dans le cadre de la \textbf{\color{mydarkblue} 24e Conférence Européenne sur l'Intelligence Artificielle.}

%%%%%%%%%%%%%%%%%%%%%%%%%%%%%%%%%%%%%%%%%%%%%%%%%%%%%%%%%%%%%%%%%%%%%%%%%%%%%%%%
\subsubsection{Entraînement de Réseaux de Neurones Robustes}
\label{subsubsection:ap7-training_robust_neural_networks}

Comme deuxième contribution, nous proposons une procédure pour entraîner des réseaux de neurones robustes en étudiant les propriétés de la structure des convolutions.
Nous concevons une nouvelle borne supérieure des valeurs singulières des couches de convolution, qui est à la fois précise et facile à calculer.
Notre travail est basé sur le résultat de~\citet{gray2006toeplitz} qui indique qu'une borne supérieure des valeurs singulières des matrices de Toeplitz peut être calculée à partir de la transformée de Fourier inverse de la séquence caractéristique de ces matrices.
De notre analyse découle immédiatement un algorithme de régularisation de la constante de Lipschitz d'une couche convolutive, et par extension de la constante de Lipschitz de l'ensemble du réseau.
Enfin, nous utilisons notre approche pour améliorer la robustesse des réseaux de neurones convolutifs.
Des travaux récents ont montré que les méthodes empiriques telles que l'entraînement contradictoire offrent une faible généralisation~\cite{schmidt2018adversarially} et peuvent être améliorées en appliquant une régularisation Lipschitz~\cite{farnia2018generalizable}.
Pour illustrer l'avantage de notre nouvelle méthode, nous entraînons des réseaux de neurones avec la régularisation Lipschitz et montrons qu'elle offre une amélioration significative de robustesse.

Le principal résultat des travaux décrits dans le Chapitre~\ref{chapter:ch5-lipschitz_bound} a été publié dans le cadre de la  \textbf{\color{mydarkblue}35e Conférence AAAI sur l'Intelligence Artificielle}.
D'autres contributions conjointes ont également été publiées sur le thème des réseaux neuronaux robustes.
La première, publiée dans le cadre de la \textbf{\color{mydarkblue} Conférence en Intelligence Artificielle et Neurosciences Computationnelles}, étudie l'efficacité de l'injection de bruit à l'entraînement et à l'inférence dans le réseau pour protéger contre les attaques adverses.
Dans ce travail, nous montrons que le bruit tiré de la famille exponentielle offre une protection garantie contre les attaques adverses. 
La deuxième contribution conjointe, publiée dans le cadre de l'\textbf{\color{mydarkblue} Atelier de Cybersécurité de la Conférence Européenne de l'Apprentissage Automatique}, effectue une analyse géométrique des mécanismes de défense destinés à protéger les réseaux neuronaux contre différents types d'attaques.
Ce travail montre que les réseaux neuronaux conçus pour être robustes contre un type d'attaque adverse offrent peu de protection contre d'autres types d'attaques.

%%%%%%%%%%%%%%%%%%%%%%%%%%%%%%%%%%%%%%%%%%%%%%%%%%%%%%%%%%%%%%%%%%%%%%%%%%%%%%%
\section{Réseaux de Neurones Compacts basés sur les matrices Diagonales et Circulantes}
\label{section:ap7-diagonal_circulant_neural_network}

Ces dernières années, la conception de réseaux neuronaux compacts et performants a été un sujet de recherche actif.
Ce domaine est motivé par des applications pratiques dans les systèmes embarqués (pour réduire l'empreinte mémoire \cite{sainath2015convolutional}), l'apprentissage fédéré et distribué (pour réduire la communication \cite{konecny2016federated}), etc.
Outre un certain nombre d'applications pratiques, la question de savoir si les modèles doivent réellement être aussi larges ou si des réseaux plus petits peuvent atteindre une précision similaire est également une question de recherche importante.

Les matrices structurées sont au cœur même de la plupart des travaux sur les réseaux compacts.
Dans ces modèles, les matrices de poids dense sont remplacées par des matrices ayant une structure précise (par exemple, les matrices de rang faible, les matrices de Toeplitz, les matrices circulantes, LDR, etc.)
Malgré des efforts importants (\citet{cheng2015exploration,moczulski2016acdc}), les performances des modèles compacts sont encore loin d'atteindre une précision acceptable motivant leur utilisation dans des scénarios du monde réel.
Cela soulève plusieurs questions sur l'efficacité de ces modèles et sur notre capacité à les entraîner.
En particulier, deux questions principales appellent à investigation :
\begin{enumerate}[leftmargin=0.5cm]
  \item Quelle est l'expressivité des couches structurées par rapport aux couches denses ?
  \item Comment entraîner efficacement des réseaux neuronaux profonds avec un grand nombre de couches structurées ?
\end{enumerate}
Dans cette thèse, nous nous efforçons de répondre à ces questions en étudiant les réseaux neuronaux basés sur les matrices diagonales et circulantes (\aka DCNN), qui sont des réseaux neuronaux profonds dans lesquels les matrices de poids sont le produit des matrices diagonales et circulantes.
L'idée d'utiliser ensemble des matrices diagonales et circulantes vient d'une série de résultats en algèbre linéaire par~\citet{muller1998algorithmic} et~\citet{huhtanen2015factoring}.

Pour répondre à la première question, nous proposons une analyse de l'expressivité des DCNN en étendant les résultats obtenus par~\citet{huhtanen2015factoring} qui indique que toute matrice peut être décomposée en un produit de $2n-1$ matrices diagonales et circulantes.
Nous introduisons une nouvelle borne sur le nombre de produits requis pour approcher une matrice qui dépend de son rang.
Sur la base de ce résultat, nous démontrons qu'un DCNN avec une largeur limitée et une faible profondeur peut être autant expressif que n'importe quel réseau de neurones dense avec des activations ReLU. 

Pour répondre à la deuxième question, nous décrivons d'abord une procédure d'initialisation pour les DCNN qui permet au signal de se propager à travers le réseau sans disparaître ou exploser.
En outre, nous fournissons un certain nombre d'expériences pour expliquer le comportement des DCNN et montrer l'impact du nombre de non-linéarités dans le réseau sur le taux de convergence et la précision. 
En combinant toutes ces connaissances, nous sommes en mesure de former des DCNN de grande taille et de grande profondeur.
Pour finir, nous démontrons les bonnes performances de ces réseaux dans le contexte de la reconnaissance de vidéo.

%%%%%%%%%%%%%%%%%%%%%%%%%%%%%%%%%%%%%%%%%%%%%%%%%%%%%%%%%%%%%%%%%%%%%%%%%%%%%%%
\section{Constante de Lipschitz des Couches Convolutionnelles}
\label{section:ap7-lipschitz_bound}

Ces dernières années ont vu un intérêt croissant pour la régularisation Lipschitz des réseaux de neurones, dans le but d'améliorer leur généralisation~\cite{bartlett2017spectrally}, leur robustesse aux attaques adverses~\cite{tsuzuku2018lipschitz, farnia2018generalizable}, ou leurs capacités de génération (par exemple pour les GANs : \citet{miyato2018spectral,arjovsky2017wasserstein}).
Malheureusement, le calcul exact de la constante de Lipschitz d'un réseau de neurones est un problème NP-complet~\cite{scaman2018lipschitz} et en pratique, les techniques existantes telles que celles proposées par~\citet{scaman2018lipschitz}, \citet{fazlyab2019efficient} ou~\citet{latorre2020lipschitz} sont difficiles à mettre en œuvre pour les réseaux neuronaux à plus d'une ou deux couches, ce qui entrave leur utilisation dans les applications d'apprentissage profond.

Pour surmonter cette difficulté au lieu de calculer la constante globale, la plupart des travaux se sont concentrés sur le calcul de la constante de Lipschitz des \emph{couches} du réseau.
Le produit des constantes de Lipschitz de chaque couche est une borne supérieure de la constante de Lipschitz de l'ensemble du réseau, et elle peut être utilisée comme substitut pour effectuer une régularisation Lipschitz.
Comme la plupart des fonctions d'activation courantes (telles que la ReLU) ont une constante de Lipschitz égale à un, la principale difficulté consiste à calculer la constante de Lipschitz de l'application linéaire sous-jacente qui est égale à sa plus grande valeur singulière.
Les travaux dans ce domaine de recherche s'appuient principalement sur un célèbre algorithme itératif appelé \emph{méthode de la puissance itérée} \cite{golub2000eigenvalue} utilisée pour approximer la valeur singulière maximale d'une fonction linéaire.
Bien que générique et précise, cette technique est également coûteuse en termes de calcul, ce qui en empêche son utilisation pour l'entraînement de larges réseaux de neurones. 

Dans cette thèse, nous introduisons une nouvelle borne supérieure des valeurs singulières des couches de convolution, qui est à la fois précise et facile à calculer.
Au lieu d'utiliser la méthode de la puissance itérée pour approximer cette valeur, nous nous appuyons sur la théorie des matrices de Toeplitz et ses liens avec l'analyse de Fourier.
Notre travail est basé sur le résultat de~\citet{gray2006toeplitz} qui indique qu'une borne supérieure des valeurs singulières des matrices de Toeplitz peut être calculée à partir de la transformée de Fourier inverse de la séquence caractéristique de ces matrices.
Nous étendons d'abord ce résultat aux matrices de Toeplitz par blocs de Toeplitz (c'est-à-dire une matrice de Toeplitz par blocs où chaque bloc est également Toeplitz) et ensuite aux opérateurs convolutionnels.
De notre analyse découle immédiatement un algorithme de régularisation de la constante de Lipschitz d'une couche convolutive, et par extension de la constante de Lipschitz de l'ensemble du réseau.
Nous étudions théoriquement l'approximation de cet algorithme et montrons expérimentalement qu'il est plus efficace et plus précis que les approches concurrentes.

Enfin, nous illustrons notre approche sur la robustesse aux exemples antagonistes.
Des travaux récents ont montré que les méthodes empiriques, telles que la formation contradictoire (\emph{Adversarial Training} ou AT), offrent une faible généralisation~\cite{schmidt2018adversarially} et peuvent être améliorées en appliquant une régularisation Lipschitz~\cite{farnia2018generalizable}.
Pour illustrer les avantages de notre nouvelle méthode, nous entraînons un large réseau de neurones avec AT et la régularisation Lipschitz et montrons qu'elle offre une amélioration significative par rapport à un entraînement contradictoire seul et par rapport aux autres méthodes de régularisation Lipschitz.
En résumé, nous apportons les trois contributions suivantes :
\begin{enumerate}[leftmargin=0.8cm]
  \item Nous proposons une nouvelle borne supérieure des valeurs singulières des couches convolutionnelles en nous appuyant sur la théorie des matrices de Toeplitz et ses liens avec l'analyse de Fourier.
  \item Nous proposons un algorithme efficace pour calculer cette borne qui permet son utilisation dans le contexte des réseaux neuronaux convolutifs.
  \item Nous utilisons notre méthode pour régulariser la constante de Lipschitz des réseaux de neurones et montrons qu'elle permet un gain significatif de robustesse face aux attaques adverses.
\end{enumerate}

  \end{appendices} 

  \newrefcontext[sorting=nyt]
  \printbibliography[title=References]

\end{document}